\documentclass{article}

\usepackage[utf8]{inputenc} %
\usepackage[T1]{fontenc}    %
\usepackage{url}            %
\usepackage{booktabs}       %
\usepackage{amsfonts}       %
\usepackage{nicefrac}       %
\usepackage{microtype}      %

\usepackage{enumitem}

\usepackage{algorithm}
\usepackage{verbatim}
\usepackage[noend]{algpseudocode}

\usepackage{setspace}

\usepackage{inconsolata}
\usepackage[scaled=.90]{helvet}
\usepackage{xspace}

\usepackage[letterpaper, left=1in, right=1in, top=1in, bottom=1in]{geometry}

\PassOptionsToPackage{hypertexnames=false}{hyperref}  %
\usepackage{parskip}

\usepackage[dvipsnames]{xcolor}
\usepackage[colorlinks=true, linkcolor=blue!70!black, citecolor=blue!70!black,urlcolor=black]{hyperref}
\usepackage{microtype}
\usepackage{hhline}

\usepackage{algorithm}

\usepackage{natbib}
\bibliographystyle{plainnat}
\bibpunct{(}{)}{;}{a}{,}{,}

\usepackage{amsthm}
\usepackage{mathtools}
\usepackage{amsmath}
\usepackage{bbm}
\usepackage{amsfonts}
\usepackage{amssymb}

\usepackage{xpatch}

\theoremstyle{definition}  %

\newtheorem{lemma}{Lemma}[section]

\newtheorem{corollary}{Corollary}[section]
\newtheorem{proposition}{Proposition}[section]

\newtheorem{assumption}{Assumption}

\theoremstyle{plain}

\newtheorem{theorem}{Theorem}[section]
\newtheorem{definition}{Definition}[section]

\xpatchcmd{\proof}{\itshape}{\normalfont\proofnameformat}{}{}
\newcommand{\proofnameformat}{\bfseries}

\usepackage[nameinlink,capitalize]{cleveref}

\newcommand{\pref}[1]{\cref{#1}}
\newcommand{\pfref}[1]{Proof of \pref{#1}}

\crefformat{equation}{#2(#1)#3}
\Crefformat{equation}{#2(#1)#3}

\Crefformat{figure}{#2Figure #1#3}
\Crefformat{assumption}{#2Assumption #1#3}

\usepackage{xparse}

\ExplSyntaxOn
\DeclareDocumentCommand{\XDeclarePairedDelimiter}{mm}
 {
  \__egreg_delimiter_clear_keys: %
  \keys_set:nn { egreg/delimiters } { #2 }
  \use:x %
   {
    \exp_not:n {\NewDocumentCommand{#1}{sO{}m} }
     {
      \exp_not:n { \IfBooleanTF{##1} }
       {
        \exp_not:N \egreg_paired_delimiter_expand:nnnn
         { \exp_not:V \l_egreg_delimiter_left_tl }
         { \exp_not:V \l_egreg_delimiter_right_tl }
         { \exp_not:n { ##3 } }
         { \exp_not:V \l_egreg_delimiter_subscript_tl }
       }
       {
        \exp_not:N \egreg_paired_delimiter_fixed:nnnnn 
         { \exp_not:n { ##2 } }
         { \exp_not:V \l_egreg_delimiter_left_tl }
         { \exp_not:V \l_egreg_delimiter_right_tl }
         { \exp_not:n { ##3 } }
         { \exp_not:V \l_egreg_delimiter_subscript_tl }
       }
     }
   }
 }

\keys_define:nn { egreg/delimiters }
 {
  left      .tl_set:N = \l_egreg_delimiter_left_tl,
  right     .tl_set:N = \l_egreg_delimiter_right_tl,
  subscript .tl_set:N = \l_egreg_delimiter_subscript_tl,
 }

\cs_new_protected:Npn \__egreg_delimiter_clear_keys:
 {
  \keys_set:nn { egreg/delimiters } { left=.,right=.,subscript={} }
 }

\cs_new_protected:Npn \egreg_paired_delimiter_expand:nnnn #1 #2 #3 #4
 {%
  \mathopen{}
  \mathclose\c_group_begin_token
   \left#1
   #3
   \group_insert_after:N \c_group_end_token
   \right#2
   \tl_if_empty:nF {#4} { \c_math_subscript_token {#4} }
 }
\cs_new_protected:Npn \egreg_paired_delimiter_fixed:nnnnn #1 #2 #3 #4 #5
 {
  \mathopen{#1#2}#4\mathclose{#1#3}
  \tl_if_empty:nF {#5} { \c_math_subscript_token {#5} }
 }
\ExplSyntaxOff

\XDeclarePairedDelimiter{\supnorm}{
  left=\lVert,
  right=\rVert,
  subscript=\infty
  } %

\DeclarePairedDelimiter{\abs}{\lvert}{\rvert} %
\DeclarePairedDelimiter{\brk}{[}{]}
\DeclarePairedDelimiter{\crl}{\{}{\}}
\DeclarePairedDelimiter{\prn}{(}{)}
\DeclarePairedDelimiter{\nrm}{\|}{\|}
\DeclarePairedDelimiter{\tri}{\langle}{\rangle}

\DeclarePairedDelimiter{\ceil}{\lceil}{\rceil}
\DeclarePairedDelimiter{\floor}{\lfloor}{\rfloor}

\let\Pr\undefined

\DeclareMathOperator{\En}{\mathbb{E}}

\DeclareMathOperator{\Pr}{Pr}

\DeclareMathOperator*{\argmin}{arg\,min} %
\DeclareMathOperator*{\argmax}{arg\,max}

\newcommand{\mb}[1]{\boldsymbol{#1}}
\newcommand{\wt}[1]{\widetilde{#1}}
\newcommand{\wh}[1]{\widehat{#1}}
\newcommand{\wb}[1]{\widebar{#1}}

\def\ddefloop#1{\ifx\ddefloop#1\else\ddef{#1}\expandafter\ddefloop\fi}
\def\ddef#1{\expandafter\def\csname bb#1\endcsname{\ensuremath{\mathbb{#1}}}}
\ddefloop ABCDEFGHIJKLMNOPQRSTUVWXYZ\ddefloop
\def\ddefloop#1{\ifx\ddefloop#1\else\ddef{#1}\expandafter\ddefloop\fi}
\def\ddef#1{\expandafter\def\csname b#1\endcsname{\ensuremath{\mathbf{#1}}}}
\ddefloop ABCDEFGHIJKLMNOPQRSTUVWXYZ\ddefloop
\def\ddef#1{\expandafter\def\csname sf#1\endcsname{\ensuremath{\mathsf{#1}}}}
\ddefloop ABCDEFGHIJKLMNOPQRSTUVWXYZ\ddefloop
\def\ddef#1{\expandafter\def\csname c#1\endcsname{\ensuremath{\mathcal{#1}}}}
\ddefloop ABCDEFGHIJKLMNOPQRSTUVWXYZ\ddefloop
\def\ddef#1{\expandafter\def\csname h#1\endcsname{\ensuremath{\widehat{#1}}}}
\ddefloop ABCDEFGHIJKLMNOPQRSTUVWXYZ\ddefloop
\def\ddef#1{\expandafter\def\csname hc#1\endcsname{\ensuremath{\widehat{\mathcal{#1}}}}}
\ddefloop ABCDEFGHIJKLMNOPQRSTUVWXYZ\ddefloop
\def\ddef#1{\expandafter\def\csname t#1\endcsname{\ensuremath{\widetilde{#1}}}}
\ddefloop ABCDEFGHIJKLMNOPQRSTUVWXYZ\ddefloop
\def\ddef#1{\expandafter\def\csname tc#1\endcsname{\ensuremath{\widetilde{\mathcal{#1}}}}}
\ddefloop ABCDEFGHIJKLMNOPQRSTUVWXYZ\ddefloop

\newcommand{\ls}{\ell}
\newcommand{\ind}{\mathbbm{1}}    %
\newcommand{\pmo}{\crl*{\pm{}1}}

\newcommand{\veps}{\varepsilon}

\newcommand{\ldef}{\vcentcolon=}
\newcommand{\rdef}{=\vcentcolon}

\makeatletter
\newcommand{\DeclareMathActive}[2]{%
  \expandafter\edef\csname keep@#1@code\endcsname{\mathchar\the\mathcode`#1 }
  \begingroup\lccode`~=`#1\relax
  \lowercase{\endgroup\def~}{#2}%
  \AtBeginDocument{\mathcode`#1="8000 }%
}
\newcommand{\std}[1]{\csname keep@#1@code\endcsname}
\patchcmd{\newmcodes@}{\mathcode`\-\relax}{\std@minuscode\relax}{}{\ddt}
\AtBeginDocument{\edef\std@minuscode{\the\mathcode`-}}
\makeatother
\DeclareMathActive{*}{\ast}
\renewcommand{\ast}{\star} %

\newcommand{\starucblong}{star hull upper confidence bound\xspace}

\newcommand{\optionone}{\textsc{Option I}\xspace}
\newcommand{\optiontwo}{\textsc{Option II}\xspace}
\newcommand{\oracle}{\textsf{Oracle}\xspace}

\newcommand{\squarecb}{\textsf{SquareCB}\xspace}
\newcommand{\falcon}{\textsf{FALCON}\xspace}

\newcommand{\Dis}{\mathrm{Dis}}

\newcommand{\xdist}{\cD}
\newcommand{\rdist}{\bbP_{r}}

\newcommand{\algcomment}[1]{\textcolor{blue!70!black}{\small{\texttt{\textbf{//\hspace{2pt}#1}}}}}

\newcommand{\algcolor}[1]{\textcolor{blue!70!black}{#1}}

\newcommand{\RegExp}{\En\brk{\Reg}}
\newcommand{\Regbar}{\overline{\mathrm{Reg}}_T}
\newcommand{\Reg}{\mathrm{Reg}_T}

\newcommand{\Holder}{H{\"o}lder\xspace}

\newcommand{\trn}{\top}

\newcommand{\approxleq}{\lesssim}

\newcommand{\fhat}{\wh{f}}
\newcommand{\ahat}{\wh{a}}
\newcommand{\fbar}{\bar{f}}

\renewcommand{\ind}[1]{^{{\scriptscriptstyle(#1)}}}

\newcommand{\bigoh}{\cO}
\newcommand{\bigoht}{\wt{\cO}}
\newcommand{\bigom}{\Omega}
\newcommand{\bigomt}{\wt{\Omega}}
\newcommand{\bigthetat}{\wt{\Theta}}

\newcommand{\fstar}{f^{\star}}

\newcommand{\pistar}{\pi^{\star}}

\newcommand{\indic}{\mathbb{I}}

\newcommand{\gstar}{g^{\star}}

\newcommand{\overgeq}[1]{\overset{#1}{\geq{}}}

\newcommand{\overeq}[1]{\overset{#1}{=}}

\renewcommand{\Pr}{\bbP}

\newcommand{\aone}{a\ind{1}}
\newcommand{\atwo}{a\ind{2}}

\newcommand{\poly}{\mathrm{poly}}
\newcommand{\polylog}{\mathrm{polylog}}

\newcommand{\kl}[2]{D_{\mathrm{kl}}(#1\;\|\;#2)}
\newcommand{\Dphi}[2]{D_{\phi}(#1\;\|\;#2)}
\newcommand{\dphi}[2]{d_{\phi}(#1\;\|\;#2)}

\newcommand{\breg}[2]{D_{\cR}(#1\;\|\;#2)}

\newcommand{\Ber}{\mathrm{Ber}}
\newcommand{\dmid}{\;\|\;}

\newcommand{\conv}{\mathrm{conv}}

\newcommand{\comp}{\mathrm{c}}

\newcommand{\Picsc}{\Pi^{\mathrm{csc}}}

\newcommand{\mainalg}{\textsf{AdaCB}\xspace}

\newcommand{\Vf}{\mathbf{V}}
\newcommand{\Qf}{\mathbf{Q}}

\newcommand{\Qbar}{\widebar{\mathbf{Q}}}

\newcommand{\Vbar}{\widebar{\mathbf{V}}}

\newcommand{\Vstar}{\Vf^{\star}}
\newcommand{\Qstar}{\Qf^{\star}}

\newcommand{\Ssafe}{\cS_{\mathrm{safe}}}

\newcommand{\unif}{\mathrm{unif}}

\newcommand{\starhull}{\mathrm{star}}

\newcommand{\igw}{\textsf{IGW}}

\newcommand{\emi}{\psi}

\newcommand{\piunif}{\pi_{\mathrm{unif}}}
\newcommand{\cPhat}{\wh{\cP}}

\newcommand{\supp}{\mathrm{supp}}

\renewcommand{\Ssafe}[1][h]{\cS_{h;\mathrm{safe}}}

\newcommand{\blockalg}{\textsf{RegRL}\xspace}

\newcommand{\betaconf}{\beta_{\mathrm{conf}}}

\newcommand{\Pstar}{P^{\star}}

\newcommand{\Econf}{\cE_{\mathrm{conf}}}

\newcommand{\Ebar}{\widebar{\mathbf{E}}}

\newcommand{\clip}{\texttt{clip}}
\newcommand{\gap}{\Delta}
\newcommand{\gapcheck}{\check{\gap}}
\newcommand{\gapmin}{\gap_{\mathrm{min}}}

\newcommand{\gfilt}{\mathfrak{G}}
\newcommand{\hfilt}{\mathfrak{F}}

\renewcommand{\betaconf}[1][h]{\beta_{#1}}
\renewcommand{\betaconf}[1][h]{\beta_{#1}}
\newcommand{\cFhat}{\wh{\cF}}
\newcommand{\cFbar}{\widebar{\cF}}
\newcommand{\Vtil}{\wt{\Vf}}
\newcommand{\Qtil}{\wt{\Qf}}
\newcommand{\ftil}{\tilde{f}}
\newcommand{\betatil}{\tilde{\beta}}
\newcommand{\OP}{\textsf{OP}}
\newcommand{\ET}{\textsf{ET}}
\newcommand{\cveps}{c_{\veps,\delta;h+1}}

  \newcommand{\mathand}{\quad\text{and}\quad}

\newcommand{\betahat}{\hat{\beta}}

\def\multiset#1#2{\ensuremath{\left(\kern-.3em\left(\genfrac{}{}{0pt}{}{#1}{#2}\right)\kern-.3em\right)}}

\newcommand{\fbayes}{\fstar}
\newcommand{\nact}{A}
\newcommand{\K}{\nact}

\newcommand{\grad}{\nabla}
\newcommand{\diag}{\mathrm{diag}}

\newcommand{\alg}{\textsf{A}}

\newcommand{\dgraph}{d_{\cG}}

\newcommand{\iid}{i.i.d.\xspace}

\newcommand{\Fmax}{F_{\mathrm{max}}}

\renewcommand{\ls}{r}

  \newcommand{\pol}{\mathsf{pol}}
  \renewcommand{\csc}{\mathsf{csc}}
  \newcommand{\val}{\mathsf{val}}
  \newcommand{\PolicyDisL}[1]{\mb{\theta}^{\pol}_{\cD,\pistar}(\Pi,#1)}
  \newcommand{\PolicyDis}[1]{\mb{\theta}^{\pol}(\Pi,#1)}
    \newcommand{\PolicyDisA}{\mb{\theta}^{\pol}}
  
  \newcommand{\CostDis}[1]{\mb{\theta}^{\csc}(\Pi,#1)}
  \newcommand{\CostDisA}{\mb{\theta}^{\csc}
  }
  \newcommand{\ValueDisL}[2]{\mb{\theta}^{\val}_{\cD;\fstar}(\cF,#1,#2)}
    \newcommand{\ValueDisP}[2]{\mb{\theta}^{\val}_{\cD,p;\fstar}(\cF,#1,#2)}

  \newcommand{\ValueDis}[2]{\mb{\theta}^{\val}\prn*{\cF,#1,#2}}
  \newcommand{\ValueDisA}{\mb{\theta}^{\val}}
  \newcommand{\PolicyStarWL}{\underline{\mathfrak{s}}^{\pol}_{\pistar}(\Pi)}
  
  \newcommand{\PolicyStarL}{\mathfrak{s}^{\pol}_{\pistar}(\Pi)}
  
  \newcommand{\PolicyStarShort}{\mathfrak{s}^{\pol}}
  \newcommand{\ValueStarL}[1]{\mathfrak{s}^{\val}_{\fstar}(\cF,#1)}
  
  \newcommand{\ValueStarCL}[1]{\check{\mathfrak{s}}^{\val}_{\fstar}(\cF,#1)}
  
  \newcommand{\ValueStarWL}[2]{\underline{\mathfrak{s}}^{\val}_{\fstar}(\cF,#1,#2)}
  
  \newcommand{\PolicyEluderL}{\mathfrak{e}^{\pol}_{\pistar}(\Pi)}
  
  \newcommand{\ValueEluderL}[1]{\mathfrak{e}^{\val}_{\fstar}(\cF,#1)}
  \newcommand{\ValueEluder}[1]{\mathfrak{e}^{\val}(\cF,#1)}
  \newcommand{\ValueEluderCL}[1]{\check{\mathfrak{e}}^{\val}_{\fstar}(\cF,#1)}
  \newcommand{\ValueEluderC}[1]{\check{\mathfrak{e}}^{\val}(\cF,#1)}
  \newcommand{\ValueEluderWL}[2]{\underline{\mathfrak{e}}^{\val}_{\fstar}(\cF,#1,#2)}

  \newcommand{\btheta}{\mb{\theta}}

  \newcommand{\policydis}{policy disagreement coefficient\xspace}
  \newcommand{\cscdis}{cost-sensitive policy disagreement coefficient\xspace}
  \newcommand{\valuedis}{value function disagreement coefficient\xspace}  
  
  \newcommand{\policystar}{policy star number\xspace}
  \newcommand{\valuestar}{value function star number\xspace}

  \newcommand{\policyeluder}{policy eluder dimension\xspace}
  \newcommand{\valueeluder}{value function eluder dimension\xspace}

  \newcommand{\PolicyMinimax}{\mathfrak{M}^{\pol}(\cF,\veps,\theta)}
  \newcommand{\ValueMinimax}[1][\Delta]{\mathfrak{M}^{\val}(\cF,#1,\veps,\theta)}

\newcommand{\wcR}{\widehat{\mathcal{R}}}
\newcommand{\pReg}{{\mathrm{Reg}}}
\newcommand{\wReg}{\widehat{\mathrm{Reg}}}
\newcommand{\E}{{\mathbb{E}}}
\newcommand{\1}{\mathbb{I}}
\newcommand{\Prob}{\mathbb{P}}
\newcommand{\N}{\mathbb{N}}
\newcommand{\extq}[1]{{\varrho_{#1}}}
\newcommand{\exthq}[1]{{\widehat{\varrho}_{#1}}}
\newcommand{\crate}{c}

\newcommand{\uniformgc}{uniform Glivenko-Cantelli\xspace}
\newcommand{\setname}{candidate action set\xspace}

\newcommand{\cboracle}{\textsf{ConfBound}\xspace}
\newcommand{\dcoracle}{\textsf{ConfBoundDiff}\xspace}
\newcommand{\csoracle}{\textsf{CandidateSet}\xspace}
\newcommand{\cworacle}{\textsf{ConfWidth}\xspace}

\newcommand{\qhat}{\wh{q}}

 \let\underbar\undefined
\makeatletter
\let\save@mathaccent\mathaccent
\newcommand*\if@single[3]{%
  \setbox0\hbox{${\mathaccent"0362{#1}}^H$}%
  \setbox2\hbox{${\mathaccent"0362{\kern0pt#1}}^H$}%
  \ifdim\ht0=\ht2 #3\else #2\fi
  }
\newcommand*\rel@kern[1]{\kern#1\dimexpr\macc@kerna}
\newcommand*\widebar[1]{\@ifnextchar^{{\wide@bar{#1}{0}}}{\wide@bar{#1}{1}}}
\newcommand*\underbar[1]{\@ifnextchar_{{\under@bar{#1}{0}}}{\under@bar{#1}{1}}}
\newcommand*\wide@bar[2]{\if@single{#1}{\wide@bar@{#1}{#2}{1}}{\wide@bar@{#1}{#2}{2}}}
\newcommand*\under@bar[2]{\if@single{#1}{\under@bar@{#1}{#2}{1}}{\under@bar@{#1}{#2}{2}}}
\newcommand*\wide@bar@[3]{%
  \begingroup
  \def\mathaccent##1##2{%
    \let\mathaccent\save@mathaccent
    \if#32 \let\macc@nucleus\first@char \fi
    \setbox\z@\hbox{$\macc@style{\macc@nucleus}_{}$}%
    \setbox\tw@\hbox{$\macc@style{\macc@nucleus}{}_{}$}%
    \dimen@\wd\tw@
    \advance\dimen@-\wd\z@
    \divide\dimen@ 3
    \@tempdima\wd\tw@
    \advance\@tempdima-\scriptspace
    \divide\@tempdima 10
    \advance\dimen@-\@tempdima
    \ifdim\dimen@>\z@ \dimen@0pt\fi
    \rel@kern{0.6}\kern-\dimen@
    \if#31
      \overline{\rel@kern{-0.6}\kern\dimen@\macc@nucleus\rel@kern{0.4}\kern\dimen@}%
      \advance\dimen@0.4\dimexpr\macc@kerna
      \let\final@kern#2%
      \ifdim\dimen@<\z@ \let\final@kern1\fi
      \if\final@kern1 \kern-\dimen@\fi
    \else
      \overline{\rel@kern{-0.6}\kern\dimen@#1}%
    \fi
  }%
  \macc@depth\@ne
  \let\math@bgroup\@empty \let\math@egroup\macc@set@skewchar
  \mathsurround\z@ \frozen@everymath{\mathgroup\macc@group\relax}%
  \macc@set@skewchar\relax
  \let\mathaccentV\macc@nested@a
  \if#31
    \macc@nested@a\relax111{#1}%
  \else
    \def\gobble@till@marker##1\endmarker{}%
    \futurelet\first@char\gobble@till@marker#1\endmarker
    \ifcat\noexpand\first@char A\else
      \def\first@char{}%
    \fi
    \macc@nested@a\relax111{\first@char}%
  \fi
  \endgroup
}
\newcommand*\under@bar@[3]{%
  \begingroup
  \def\mathaccent##1##2{%
    \let\mathaccent\save@mathaccent
    \if#32 \let\macc@nucleus\first@char \fi
    \setbox\z@\hbox{$\macc@style{\macc@nucleus}_{}$}%
    \setbox\tw@\hbox{$\macc@style{\macc@nucleus}{}_{}$}%
    \dimen@\wd\tw@
    \advance\dimen@-\wd\z@
    \divide\dimen@ 3
    \@tempdima\wd\tw@
    \advance\@tempdima-\scriptspace
    \divide\@tempdima 10
    \advance\dimen@-\@tempdima
    \ifdim\dimen@>\z@ \dimen@0pt\fi
    \rel@kern{0.6}\kern-\dimen@
    \if#31
      \underline{\rel@kern{-0.6}\kern\dimen@\macc@nucleus\rel@kern{0.4}\kern\dimen@}%
      \advance\dimen@0.4\dimexpr\macc@kerna
      \let\final@kern#2%
      \ifdim\dimen@<\z@ \let\final@kern1\fi
      \if\final@kern1 \kern-\dimen@\fi
    \else
      \underline{\rel@kern{-0.6}\kern\dimen@#1}%
    \fi
  }%
  \macc@depth\@ne
  \let\math@bgroup\@empty \let\math@egroup\macc@set@skewchar
  \mathsurround\z@ \frozen@everymath{\mathgroup\macc@group\relax}%
  \macc@set@skewchar\relax
  \let\mathaccentV\macc@nested@a
  \if#31
    \macc@nested@a\relax111{#1}%
  \else
    \def\gobble@till@marker##1\endmarker{}%
    \futurelet\first@char\gobble@till@marker#1\endmarker
    \ifcat\noexpand\first@char A\else
      \def\first@char{}%
    \fi
    \macc@nested@a\relax111{\first@char}%
  \fi
  \endgroup
}
\makeatother

\let\oldparagraph\paragraph
\renewcommand{\paragraph}[1]{\oldparagraph{#1.}}

\title{Instance-Dependent Complexity of Contextual Bandits and
  Reinforcement Learning: A Disagreement-Based Perspective}

\author{%
  	Dylan J.\ Foster\\
        {\small\texttt{dylanf@mit.edu}}\\ 
	  \and
Alexander Rakhlin\\
{\small\texttt{rakhlin@mit.edu}}\\
	  \and
David Simchi-Levi\\
{\small\texttt{dslevi@mit.edu}}\\
	  \and
Yunzong Xu\\
{\small\texttt{yxu@mit.edu}}\\
\and
~\\
{\large Massachusetts Institute of Technology}
              }
\date{}

\begin{document}

\maketitle

\begin{abstract}
In the classical multi-armed bandit problem, \emph{instance-dependent}
algorithms attain improved performance on ``easy'' problems with a gap
between the best and second-best arm. Are similar guarantees possible
for contextual bandits? While positive results are known for certain
special cases, there is no general theory characterizing when and how
instance-dependent regret bounds for contextual bandits can be
achieved for rich, general classes of policies. We introduce
a family of complexity measures that are both sufficient and necessary to obtain instance-dependent regret bounds. We then introduce new oracle-efficient algorithms which adapt to the gap whenever possible, while also
attaining the minimax rate in the worst case. Finally, we provide structural
results that tie together a number of complexity measures previously
proposed throughout contextual bandits, reinforcement learning, and active learning and elucidate their role in determining the
optimal instance-dependent regret. In a large-scale empirical
evaluation, we find that our approach typically gives superior results for
challenging exploration problems.

Turning our focus to reinforcement learning with function
approximation, we develop new oracle-efficient algorithms for
reinforcement learning with rich observations that obtain optimal gap-dependent sample complexity.

\end{abstract}

{
\hypersetup{linkcolor=black}
\tableofcontents
}
\addtocontents{toc}{\protect\setcounter{tocdepth}{2}}

\section{Introduction}
\label{sec:intro}
How can we adaptively allocate measurements to exploit problem structure in the presence of rich,
  high-dimensional, and potentially stateful contextual information?
  In this paper, we investigate this question in the \emph{contextual bandit} problem
  and its stateful relative, the problem of \emph{reinforcement
    learning with rich observations}.

The contextual bandit is a fundamental problem in sequential decision
making. At each round, the learner receives a \emph{context}, selects
an \emph{action}, and receives a \emph{reward}; their goal is to
select actions so as to maximize the total long-term reward. This
model has been successfully deployed in news article recommendation
\citep{li2010contextual,agarwal2016making}, where actions represent
articles to display and rewards represent clicks, and healthcare \citep{tewari2017ads,bastani2020online}, where actions
represent treatments to prescribe and rewards represent the patient's
response. Reinforcement learning with rich observations \citep{krishnamurthy2016pac,jiang2017contextual}
is a substantially more challenging generalization in which the learner's actions
influence the evolution of the contexts, and serves as a stylized model for reinforcement learning with
function approximation.

For both settings, our aim is to develop \emph{instance-dependent}
algorithms that adapt to gaps between actions in the underlying reward function to
obtain improved regret. In the classical (non-contextual) multi-armed
bandit problem, this issue has enjoyed
extensive investigation beginning with the work of
\cite{lai1985asymptotically}. Here, it is well-understood that when
the mean reward function admits a constant gap between the best and
second-best action, well-designed algorithms can obtain logarithmic (in $T$, the number of
rounds) regret, which offers significant improvement over the
worst-case minimax rate of $\sqrt{T}$. Subsequent work has developed a
sharp understanding of optimal instance-dependent regret, both
asymptotically and with finite samples
\citep{burnetas1996optimal,garivier2016explore,kaufmann2016complexity,lattimore2018refining,garivier2019explore}. Beyond
the obvious appeal of lower regret, instance-dependent algorithms are
particularly compelling for applications such as clinical trials---where
excessive randomization may be undesirable or unethical---because they
identify and eliminate suboptimal actions more quickly than algorithms
that only aim for worst-case optimality.

We take the first step towards developing a similar theory for
contextual bandits and reinforcement learning with general function
approximation. We focus on the ``realizable'' or ``well-specified''
setting in
which the learner has access to a class of regression functions $\cF$ that
is flexible enough to capture the true reward function or value
function. Our aim is to develop \emph{learning-theoretic}
guarantees for rich, potentially nonparametric function classes that
1) scale only with the statistical capacity of the class, and 2) are
efficient in terms of basic computational primitives for the class.

For contextual bandits, instance-dependent regret bounds are not
well-understood. Positive results are known for simple classes of
functions such as linear classes
\citep{dani2008stochastic,abbasi2011improved,hao2019adaptive} or
nonparametric Lipschitz/\Holder classes \citep{rigollet2010nonparametric,perchet2013multi,hu2020smooth}. On the other hand, for arbitrary finite
function classes, it is known that gap-dependent regret bounds are not
possible in general \citep{foster2020beyond}. One line of work develops
algorithms which attain instance-dependent bounds for general classes under additional structural assumptions or distributional assumptions
\citep{russo2013eluder,bietti2018contextual,foster2018practical}, but
it is not clear whether these assumptions are fundamental (in
particular, they are not required to obtain minimax rates). For reinforcement learning, the situation is
more dire: while instance-dependent rates have been explored in the finite state/action
setting \citep{burnetas1996optimal,tewari2008optimistic,ok2018exploration,simchowitz2019non},
very little is known for the general setting with high
dimensional states and function approximation.

Beyond the basic issue of what instance-dependent rates can be
achieved for general function classes, an important question is
whether they can be achieved efficiently, using practical
algorithms. A recent line of work
\citep{foster2018practical,foster2020beyond,simchi2020bypassing,xu2020upper}
develops algorithms that are efficient in terms calls to an oracle for
(offline/online) \emph{supervised regression}. A secondary goal in
this work is to develop \emph{practical} instance-dependent algorithms
based on this primitive.

~\\
Altogether, our central
questions are:
\begin{enumerate}
\item For contextual bandits and reinforcement learning with rich observations, what properties of the function class
  enable us to adapt to the gap, and what are the fundamental limits?
\item Can we adapt to the gap \emph{efficiently}?
\item More ambitiously, can we get the \emph{best of both worlds}: Adapt to
  the gap and obtain the minimax rate simultaneously?
\end{enumerate}

For contextual bandits, we address each of these issues. We introduce
a family of new complexity measures which are both necessary (in a
certain sense) and sufficient to obtain fast gap-dependent regret bounds. We
introduce new oracle-efficient algorithms which adapt to the gap and
to these complexity measures whenever possible, while also
obtaining the minimax rate. We prove new structural results which---in
conjunction with our lower bounds---tie together a number of
complexity measures previously proposed in contextual bandits,
reinforcement learning, and active learning and provide new insight into their role in determining the
optimal instance-dependent regret. We then extend these complexity measures
to reinforcement learning with function approximation and give new
oracle-efficient algorithms that adapt to them. Overall, our results for RL are
somewhat less complete, but we believe they suggest a number of
exciting new directions for future research.

\subsection{Overview of Results: Contextual Bandits}
\subsubsection{Contextual Bandit Setup}

We consider the following stochastic contextual bandit protocol. At each round $t\in\brk*{T}$, the learner observes a context $x_t\in\cX$,
selects an action $a_t\in\cA$, then observes a reward
$\ls_t(a_t)\in\brk*{0,1}$. %
We assume that contexts are drawn
\iid from a fixed but unknown distribution $\xdist$, and that each reward function $\ls_t:\cA\to\brk*{0,1}$ is drawn independently from a fixed but unknown 
\emph{context-dependent} distribution
$\bbP_{\ls}(\cdot\mid{}x_t)$. We consider finite actions, with
$A\ldef{}\abs*{\cA}$.\footnote{We refer to each pair $(\xdist, \rdist)$ for the
        contextual bandit problem as an \emph{instance}.}

We assume that the learner has access to a class of value functions
$\cF\subset(\cX\times\cA\to\brk*{0,1})$ (e.g., regression trees or
neural networks) that is flexible enough to model the true reward
distribution. In particular, we make the following standard
\emph{realizability} assumption \citep{chu2011contextual,agarwal2012contextual,foster2018practical}.
\begin{assumption}[Realizability]
  \label{ass:realizability}
There exists a function $\fstar\in\cF$ such that
$\fstar(x,a) = \En\brk*{\ls(a)\mid{}x}$.
\end{assumption}
For each regression function $f\in\cF$,
let $\pi_f(x)=\argmax_{a\in\cA}f(x,a)$ denote the induced policy (with
ties broken arbitrarily, but consistently), and
let $\Pi=\crl*{\pi_f\mid{}f\in\cF}$ be the induced policy class. The goal
of the learner is to ensure low \emph{regret} to the optimal policy:
\begin{equation}
  \label{eq:regret}
\Reg=  
\sum_{t=1}^{T}\ls_t(\pistar(x_t))
- \sum_{t=1}^{T}\ls_t(a_t),
\end{equation}
where $\pistar\ldef{}\pi_{\fstar}$. For simplicity, we assume that $\argmax_{a\in\cA}\fstar(x,a)$ is unique
        for all $x$, but our results extend when this is not
        the case.

        \paragraph{Reward gaps and instance-dependent regret bounds}
        Consider the simple case where $\cF$ is finite. For general
        finite classes $\cF$ under \pref{ass:realizability}, the minimax
rate for contextual bandits is $\Theta\prn{\sqrt{\K{}T\log\abs{\cF}}}$
\citep{agarwal2012contextual}. The main question we investigate is to
what extent this rate can be improved when the instance has a
\emph{uniform gap}\footnote{This is sometimes referred to as
  the \emph{Massart noise condition}, which has been widely studied in statistical learning theory in the context of obtaining faster rates for classification.}
  in the sense that for all $x\in\cX$,
\begin{equation}
  \label{eq:gap}
  \fstar(x,\pistar(x))
  -\fstar(x,a)\geq\Delta\quad\forall{}a\neq\pistar(x).
\end{equation}
For multi-armed bandits, the minimax rate is
$\Theta\prn{\sqrt{\K{}T}}$, but instance-dependent algorithms can
achieve a \emph{logarithmic} regret bound of the form
  $\Reg\leq{}\bigoh\prn[\big]{\frac{\K\log{}T}{\Delta}}$
when the gap is $\Delta$, and this is optimal
\citep{garivier2019explore}. Moving to contextual bandits, a natural guess
would be that we can achieve\footnote{We use $\bigoht(\cdot)$ and
  $\bigomt(\cdot)$ to suppress $\polylog(T)$ factors; see
  \pref{sec:notation} for details.}
\begin{equation}
  \RegExp=\bigoht(1)\cdot{}\frac{\K\log\abs{\cF}}{\Delta}.\label{eq:log_regret}
\end{equation}
This is impossible in a fairly strong sense: \cite{foster2020beyond}
show that exist
function classes $\cF$ for which any algorithm must have\footnote{\cite{foster2020beyond}
  prove this lower bound for adversarial contexts. Our
  \pref{thm:disagreement_lb} implies an analogous lower bound for
  stochastic contexts.}
\[\RegExp=\bigom(1)\cdot\frac{\abs{\cF}}{\Delta}.\]
Since $\cF$ is exponentially large for most models, polynomial
dependence on $\abs{\cF}$ is unacceptable. The natural question then,
and the one we address, is what structural properties of $\cF$ allow
for bounds of the form \pref{eq:log_regret} that scale only
logarithmically with the size of the value function
class. We primarily present results on finite classes for simplicity, but our lower bounds
and structural results concern infinite classes, and our algorithms
make no assumption on the structure.

\subsubsection
{Disagreement-Based Guarantees}
We show that variants of the \emph{disagreement coefficient}, a key parameter
in empirical process theory and active learning \citep{alexander1987central,hanneke2015minimax}, play a fundamental
role in determining the optimal gap-dependent regret bounds for
contextual bandits with rich function classes.

Our most basic results concern a parameter we call the \emph{policy
  disagreement coefficient},\footnote{In fact, for binary actions the policy disagreement
  coefficient is the same as the usual disagreement coefficient from
  active learning \citep{hanneke2015minimax}; we adopt the name
  \emph{\policydis} only to distinguish from
  other parameters we introduce.} defined as 
\begin{equation}
  \label{eq:policy_disagreement}
  \PolicyDisL{\veps_0}= \sup_{\veps\geq{}\veps_0}\frac{\bbP_{\cD}\prn*{x:\exists\pi\in\Pi_{\veps}: \pi(x)\neq\pistar(x)}}{\veps},
\end{equation}
where $\Pi_{\veps}\ldef{}\crl*{\pi\in\Pi:
  \bbP_{\cD}\prn{\pi(x)\neq\pistar(x)}\leq{}\veps}$; when $\cD$ and $\pistar$ are
clear from context we abbreviate to $\PolicyDis{\veps_0}$.
This parameter, sometimes called \emph{Alexander's
  capacity function}, dates back to \cite{alexander1987central}, and
was rediscovered and termed the disagreement coefficient in the context of active learning by
\cite{hanneke2007bound,hanneke2011rates}. In empirical process theory
and statistical learning, the disagreement coefficient grants control over the fine-grained behavior of VC
classes \citep{gine2006concentration,raginsky2011lower,zhivotovskiy2016localization},
and primarily determines whether certain logarithmic terms can appear
in excess risk bounds for empirical risk minimization (ERM) and other
algorithms under low-noise conditions. In active learning, the
disagreement coefficient plays a more critical role, as it provides a
sufficient (and weakly necessary) condition under which one can
achieve label complexity logarithmic in the target precision
\citep{hanneke2007bound,hanneke2011rates,raginsky2011lower,hanneke2014theory,hanneke2015minimax}.

Informally, the \policydis
measures how likely we are to encounter a context on which \emph{some}
near-optimal policy disagrees with $\pistar$. Low disagreement
coefficient means that all the near-optimal policies deviate from
$\pistar$ only in a small, shared region of the context space, while
large disagreement coefficient means that the points on which
disagreement occurs are more prevalent throughout the context space
(w.r.t $\cD$), so that
many samples are required to rule out all of these policies.

We introduce a new contextual bandit algorithm, \mainalg, which adapts
to the gap whenever the \policydis is bounded. In particular, we show
the following.
\newtheorem*{thm:informal1}{Theorem \ref*{thm:disagreement_ub} (informal)}
\begin{thm:informal1}
For all instances, \mainalg ensures that
    \begin{equation}
      \label{eq:disagreement_ub_short}
      \RegExp=\bigoht(1)\cdot\min_{\varepsilon>0}\max\left\{\varepsilon\Delta T, \frac{\PolicyDis{\veps}\cdot{}\K\log|\cF|}{\Delta}\right\}
    \end{equation}
    with no prior knowledge of $\Delta$ or $\PolicyDis{\veps}$.
  \end{thm:informal1}
\pref{thm:disagreement_ub} is a best-of-both-worlds guarantee. In the
worst case, we have $\PolicyDis{\veps}\leq{}1/\veps$, so that
\pref{eq:disagreement_ub_short} becomes
$\bigoht\prn{\sqrt{\K{}T\log\abs{\cF}}}$, the minimax rate. However,
if $\PolicyDis{\veps}=\polylog(1/\veps)$, then \pref{eq:disagreement_ub_short}
ensures that \[
  \RegExp = \bigoht(1)\cdot{}\frac{A\log\abs{\cF}}{\Delta},
\]
so that \mainalg enjoys logarithmic regret. We emphasize that while
\pref{thm:disagreement_ub} concerns finite classes, this is only a
stylistic choice: \mainalg places no assumption on the structure of
$\cF$, and the analysis trivially generalizes by replacing
$\log\abs*{\cF}$ with standard learning-theoretic complexity measures
such as the pseudodimension.

While this is certainly encouraging, it is not immediately clear
whether the rate in \pref{eq:disagreement_ub_short} is fundamental. To this end, we prove that dependence on the disagreement
coefficient is qualitatively necessary.

\newtheorem*{thm:informal2}{Theorem \ref*{thm:disagreement_lb} (informal)}
\begin{thm:informal2}
    For any $\K{}\in\bbN$, $\Delta>0$, $\veps>0$, and functional
    $\PolicyDis{\veps}$, there exists a function class $\cF$ with $\K{}$ actions and a
  distribution over realizable instances with uniform gap $\Delta$ such that
  any algorithm has
  \[
    \RegExp=\bigomt(1)\cdot{}\min_{\veps>0}\max\crl*{\veps\Delta{}T, \frac{\PolicyDis{\veps}\cdot{}\K\log\abs{\cF}}{\Delta}}.
  \]
\end{thm:informal2}
\pref{thm:disagreement_lb} shows that the regret bound
\pref{eq:disagreement_ub_short} attained by \mainalg cannot be improved
without further assumptions on $\cF$. However, it leaves the
possibility of more refined complexity measures that are tighter than
$\PolicyDis{\veps}$ for most instances, yet coincide on the
construction that realizes the lower bound in
\pref{thm:disagreement_lb}. To this end, we introduce a second complexity measure, the \emph{\valuedis},
which can exploit the scale-sensitive nature of the value function class
$\cF$ to provide tighter bounds. The \valuedis is defined as
\begin{equation}
  \label{eq:value_disagreement}
  \ValueDisL{\Delta_0}{\veps_0} = \sup_{\Delta>\Delta_0,\veps>\veps_0}\sup_{p:\cX\to\Delta(\cD)}\frac{\Delta^{2}}{\veps^{2}}\bbP_{\cD,p}\prn*{
    \exists{}f\in\cF: \abs*{f(x,a)-\fstar(x,a)}>\Delta,\;\;\nrm*{f-\fstar}_{\cD,p}\leq\veps
    },
  \end{equation}
  where
  $\nrm*{f}_{\cD,p}^{2}\ldef{}\En_{x\sim\cD,a\sim{}p(x)}\brk{f^{2}(x)}$.
  We abbreviate
  $\ValueDis{\Delta_0}{\veps_0}\equiv\ValueDisL{\Delta_0}{\veps_0}$
  when the context is clear. The
  key difference from the \policydis is that rather than using a
  binary property ($\pi(x)\neq{}\pistar(x)$) to measure disagreement,
  we use a more refined scale-sensitive notion: Two functions $f$ and
  $\fstar$ are said to $\Delta$-disagree on $(x,a)$ if
  $\abs*{f(x,a)-\fstar(x,a)}>\Delta$, and the \valuedis simply measures how likely we are to encounter a context for which a value
  function that is $\veps$-close to $\fstar$ in $L_2$ distance
  $\Delta$-disagrees from it (for a worst-case action distribution). This refined view leads to
  tighter guarantees for common function classes. For example, when
  $\cF$ is a linear function class, i.e.
  $\cF = \crl*{(x,a)\mapsto\tri*{w,\phi(x,a)}\mid{}w\in\bbR^{d}}$ for
  a fixed feature map $\phi(x,a)$, the \policydis is only bounded for
  sufficiently regular distributions, whereas the \valuedis is always bounded by $d$.

  We show that \mainalg, with a slightly different parameter configuration, can adapt to \valuedis in a
  best-of-both-worlds fashion.

\newtheorem*{thm:informal3}{Theorem \ref*{thm:value_disagreement_ub} (informal)}
\begin{thm:informal3}
  For all instances, \mainalg ensures that
  \[
    \RegExp=\bigoht(1)\cdot{}\min\crl*{
      \sqrt{\K{}T\log\abs{\cF}}, \frac{\ValueDis{\Delta/2}{\veps_T}\cdot{}A\log\abs{\cF}}{\Delta}
      },
  \]
where $\veps_T\propto\sqrt{\log\abs{\cF}/T}$.
\end{thm:informal3}
We show
(\pref{thm:value_disagreement_lb}) that this dependence on
$\ValueDis{\Delta}{\veps}$ is qualitatively necessary, meaning that
\mainalg is adapts near-optimally without additional assumptions.

Beyond contextual bandits, our scale-sensitive generalization of
the disagreement coefficient is new to both empirical process theory
and active learning to our knowledge, and may be of
independent interest.

\subsubsection{Distribution-Free Guarantees and Structural Results}

  \begin{figure}[tp]
  \centering
  \includegraphics[width=.85\textwidth]{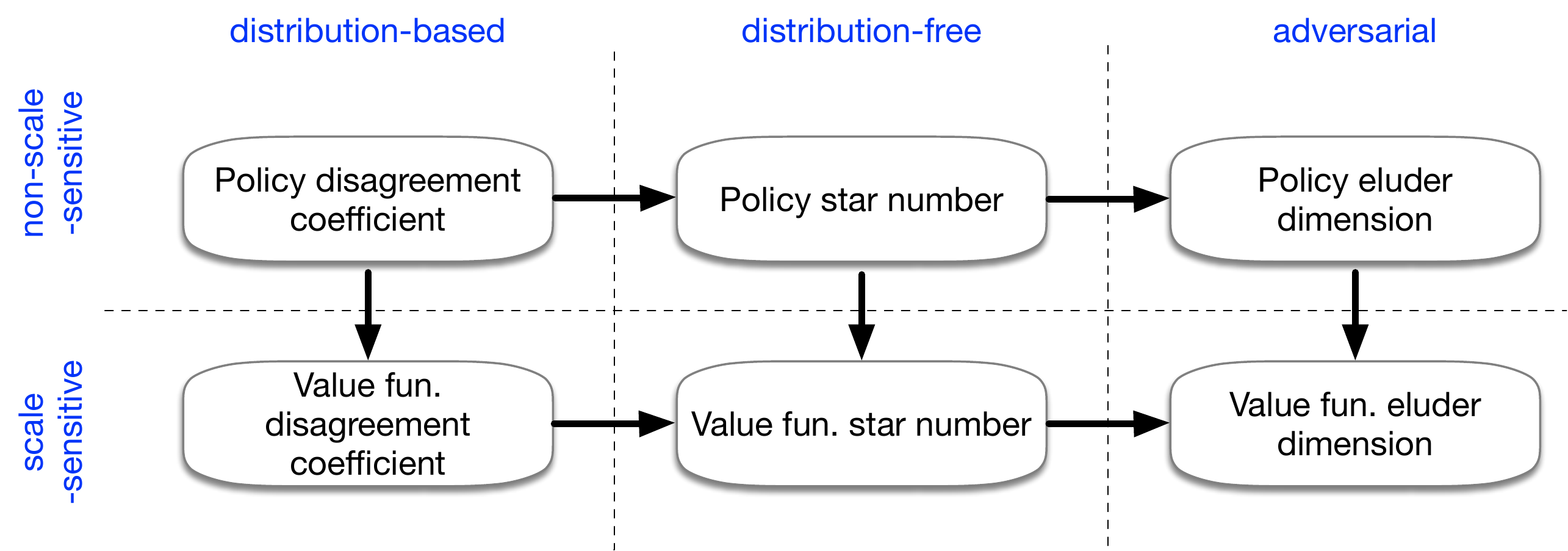}
  \caption{Relationship between complexity measures.}
  \label{fig:disagreement}
\end{figure}

While the distribution-dependent nature of our disagreement-based
upper bounds can lead to tight guarantees for benign distributions,
it is natural to ask: \emph{For what classes $\Pi$ (resp. $\cF$) can we ensure the
policy (resp. value) disagreement coefficient is bounded for any
distribution $\cD$?} \cite{hanneke2015minimax} show that the \policydis
is always bounded by a combinatorial parameter for $\Pi$ called the (policy) \emph{star number}.\footnote{In fact, the
  star number exactly coincides with the worst-case value of the
  disagreement coefficient over all possible distributions and scale
  parameters.} An immediate consequence (via
\pref{thm:disagreement_ub}) is that \mainalg enjoys logarithmic regret
even in the \emph{distribution-free} setting for classes with bounded
\policystar. More interestingly, we show (\pref{thm:star_lb_simple}) that for any class
$\Pi$, bounded \policystar is \emph{necessary} to obtain logarithmic
regret in the worst-case (with
respect to both $\cD$ and the class $\cF$ realizing $\Pi$). Thus, we
have the following characterization.
\newtheorem*{thm:informal4}{Theorem (informal)}
\begin{thm:informal4}
For any policy class $\Pi$, bounded \policystar is necessary and sufficient to obtain logarithmic
regret.
\end{thm:informal4}
Compared to our disagreement-based lower bounds, which rely on
specially designed function classes, this lower bound holds for
\emph{any} policy class.

This characterization motivates us to define a scale-sensitive
analogue of the star number called the \emph{\valuestar}. The
\valuestar is a new combinatorial parameter even within the broader
literature on active learning and empirical process theory, and we
show (\pref{thm:disagreement_to_star}) that it bounds the \valuedis for all choices of the context
distribution $\cD$ and scale parameter $\veps$. We then show
(\pref{thm:ss_star_lb}) that a weak version of the \valuedis is
\emph{necessary} to obtain logarithmic regret for worst-case context
distributions, leading to the following characterization.
\newtheorem*{thm:informal5}{Theorem (informal)}
\begin{thm:informal5}
  For any value function class $\cF$, bounded \valuestar
  is (weakly)
  necessary and sufficient to obtain logarithmic regret.
\end{thm:informal5}

The \valuestar is closely related to---and in particular always upper
bounded by---the (value function) \emph{eluder dimension} of
\cite{russo2013eluder}. The eluder dimension was introduced to prove
regret bounds for the generalized UCB algorithm and Thompson sampling
for contextual bandits with adversarial contexts, and more recently
has been used to analyze algorithms for reinforcement
learning with function approximation
\citep{osband2014model,wen2017efficient,ayoub2020model,wang2020provably}. An immediate consequence of the (disagreement coefficient)$\;\leq\;$(star
number)$\;\leq\;$(eluder dimension) connection is that boundedness of
the eluder dimension suffices to obtain logarithmic regret with
\mainalg. Unlike the star number though, bounded eluder dimension is not
required for the stochastic setting we consider. However, building on
our previous lower bounds, we show (\pref{thm:eluder_lb}) that a weak
version of the eluder dimension is \emph{necessary} to obtain
logarithmic regret under adversarial contexts, and give a tighter
analysis of the generalized UCB algorithm to show that it attains this
rate (this is not a best-of-both-worlds guarantee). This
result places the eluder dimension on more solid footing and shows that
while it is not required for minimax rates, it plays a fundamental
role for instance-dependent rates.

The relationship between all of our complexity measures,
old and new, is summarized in \pref{fig:disagreement}. Beyond
expanding the scope of settings for which logarithmic regret is
achievable, we hope our structural results and lower bounds provide a
new lens through which to understand existing algorithms and
instance-dependent rates, and provide new clarity.

As a disclaimer, we mention that the primary goal of this work is to understand how \emph{contextual information} shapes the optimal
instance-dependent rates for contextual bandits. We believe that this
question is challenging and interesting even in the finite-action
regime (in fact, even when $\K=2$!) and as such, we do not focus on obtaining
optimal dependence on $\K$ in our upper or lower bounds, nor do we
handle infinite actions. Fully understanding the interplay between
contexts and actions is a fascinating open problem, and we hope to
see this addressed in future work.

\subsubsection{Efficiency and Practical Peformance}
\paragraph{Computational efficiency}
Our main algorithm, \mainalg, is \emph{oracle-efficient}. That is, it
accesses the value function class $\cF$ only through a \emph{weighted
  least squares regression oracle} capable of solving problems of the
form
\begin{equation}
  \tag{RO}
  \oracle(\cH) = \argmin_{f\in\cF}\sum_{(w,x,a,y)\in\cH}w\prn*{f(x,a)-y}^{2}
\end{equation}
for a given set $\cH$ %
of examples $(w,x,a,y)$ where $w\in\bbR_{+}$ specifies the example weight. This makes
the algorithm highly practical, as it can be combined with any
out-of-the-box algorithm for supervised regression for the model of
interest. To achieve this guarantee we build on recent work
\citep{foster2020beyond,simchi2020bypassing} which uses a probability
selection scheme we call \emph{inverse gap weighting}
\citep{abe1999associative} to build oracle-efficient algorithms that
attain the minimax rate $\sqrt{\K{}T\log\abs{\cF}}$ using online and
offline oracles, respectively. We build on the analysis of
\cite{simchi2020bypassing}, and our algorithm carefully
combines the inverse gap weighting 
technique with another reduction \citep{krishnamurthy2017active,foster2018practical} which leverages the regression oracle to
compute confidence bounds that can be used to eliminate
actions.%

\paragraph{Empirical performance}
We replicated the large-scale empirical contextual bandit evaluation
setup of
\cite{bietti2018contextual}, which compares a number of state-of-the
art general-purpose contextual bandit algorithms across more than 500
datasets. We found that our new algorithm, \mainalg, typically gives
comparable or superior results to existing baselines, particularly on challenging
datasets with many actions.

\subsection{Overview of Results: Reinforcement Learning}

\subsubsection{Block MDP Setup}\label{sec:rl-setup}

Building on our contextual bandit results, we provide disagreement-based guarantees for episodic reinforcement learning
with function approximation in a model called the \emph{block MDP}
\citep{krishnamurthy2016pac,du2019latent}, which is an important
type of \emph{contextual decision process} \citep{jiang2017contextual}.

The block MDP may be thought of as a generalization of the contextual
bandit problem. Each round of interaction is
replaced by an \emph{episode} of length $H$. While the initial context
$x_1$ (now referred to as a \emph{state}) in each episode is drawn
i.i.d. as in the contextual bandit, the evolution of the subsequent states
$x_2,\ldots,x_H$ is influenced by the learner's actions. Now, without
further assumptions, this is simply a general MDP, and function approximation
provides no benefits in the worst-case. To allow for sample-efficient
learning guarantees, the \emph{block MDP} model assumes there is an
unobserved latent MDP with $S$ states, and that each observed state $x_h$
is drawn from an \emph{emission distribution} for the current latent
state $s_h$. When $H=1$ and $S=1$, this recovers the contextual bandit, and
in general the goal is to use an appropriate value function class $\cF$ to attain
sample complexity guarantees that are polynomial in $S$, but not $\abs*{\cX}$ (which, as in
the contextual bandit, is typically infinite and high-dimensional).

More formally, the block MDP setup we consider is a layered episodic Markov decision
process with horizon $H$, state space $\cX=\cX_1\cup\cdots\cup\cX_H$
(with $\cX_i\cap\cX_j=\emptyset$),
and action space $\cA$ with $\abs*{\cA}=A$. We proceed in $K$ episodes. Within each episode we observe rewards and observations through the
following protocol, beginning with $x_1\sim\mu$.
\begin{itemize}
\item For $h=1,\ldots,H$:
  \begin{itemize}
  \item Choose action $a_h$.
  \item Observe reward $r_h$ and next state $x_{h+1}\sim\Pstar_h(\cdot\mid{}x_h,a_h)$.
  \end{itemize}
\end{itemize}
Note that for this setting we use the subscript $h$ on e.g., $x_h$, to refer
to the layer within a fixed episode, whereas for contextual bandits we
use the subscript $t$ to refer to the round/episode itself. We always
use $h$ for the former setting and $t$ for the latter to distinguish.

As mentioned above, the state space is potentially rich and high-dimensional, and
dependence on $\abs*{\cX}$ is unacceptable. Hence, to enable sample-efficient reinforcement learning guarantees with function
approximation, the block MDP model assumes the existence of a
\emph{latent state space} $\cS=\cS_1\cup\cdots\cup\cS_{H}$, and assumes
that each state $x\in\cX$ can be uniquely attributed to a latent state
$s\in\cS$. More precisely, we assume that for each $h$, $\Pstar_h$
factorizes, so that we can view $x_{h+1}$ as generated by the
process $s_{h+1}\sim\Pstar_{h}(\cdot\mid{}x_h,a_h)$, $x_{h+1}\sim\emi(s_{h+1})$,
where $\emi:\cS\to\Delta(\cX)$ is an (unknown) \emph{emission
  distribution}, and $s_{h+1}$ is the latent state for layer
$h+1$. We make the following standard decodability assumption \citep{krishnamurthy2016pac,jiang2017contextual,du2019latent}.
\begin{assumption}[Decodability]
For all $s\neq{}s'$, $\supp(\emi(s))\cap\supp(\emi(s'))=\emptyset$.
\end{assumption}
This assumption implies that the optimal policy $\pistar$ depends only on the
current context $x_h$. We write the optimal $Q$-function for layer $h$ as $\Qstar_h(x,a)$ and let
$\Vstar_h(x)=\max_{a\in\cA}\Qstar_h(x,a)$ be the optimal value function.

\paragraph{Function approximation and gaps}
As in the contextual bandit setting, take as a given class of 
functions $\cF$ that attempts to model the
optimal value function. We let
$\cF_h\subseteq(\cX\times\cA\to\brk*{0,H})$ be the value function
class for layer $h$ (with $\cF=\cF_1\times\cdots\times\cF_h$), and we make the following optimistic completeness
assumption \citep{jin2020provably,wang2019optimism,wang2020provably}.
\begin{assumption}
  \label{ass:completeness}
  For all $h$ and all functions $V:\cX_{h+1}\to\brk*{0,H}$, we have that
  \[
    (x,a)\mapsto{} \En\brk*{r_h+V(x_{h+1})\mid{}x_h=x,a_h=a} \in\cF_h.
  \]
\end{assumption}
\pref{ass:completeness} implies that $\Qstar_h\in\cF_h$, generalizing
the realizability assumption (\pref{ass:realizability}) but it is
significantly stronger, as it requires that the function class
contains Bellman backups for arbitrary functions.

\subsubsection{An Efficient, Instance-Dependent Algorithm
}
We develop a new instance-dependent algorithm that adapts to the gap in the optimal
value function $\Qstar$ to attain improved sample complexity.  Define $\gap(x,a) = \Vstar_h(x) -
\Qstar_h(x,a)$, and define the worst-case gap as
\[
  \Delta = \min_{s}\inf_{x\in\supp(\emi(s))}\min_{a}\crl*{\gap(x,a)\mid{}\gap(x,a)>0}.
\]
Our main result is an oracle-efficient algorithm, \blockalg, which
attains a tight gap-dependent PAC-RL guarantee whenever an appropriate
generalization of the \valuedis $\ValueDisA$ is bounded.
\newtheorem*{thm:informal6}{Theorem \ref*{thm:block_mdp} (informal)}
\begin{thm:informal6}
  For all instances, \blockalg finds an $\veps$-suboptimal policy
  using
  $\poly(S,A,H,\ValueDisA)\cdot\frac{\log\abs*{\cF}}{\veps\cdot\Delta}$ episodes.
\end{thm:informal6}
This theorem has two key features. First, when
$\ValueDisA=\bigoht(1)$, the scaling of
$\veps$ and $\Delta$ in the term
$\frac{\log\abs*{\cF}}{\veps\cdot\Delta}$ is optimal even for in the
special case of contextual bandits, and improves over the minimax rate,
which scales as $\frac{1}{\veps^{2}}$. Second, and perhaps more
importantly, \blockalg is computationally efficient, and only requires a
regression oracle for the value function class. Previous works require
stronger oracles and typically do not attain optimal dependence on
$\veps$, but are not fully comparable in terms of statistical
assumptions
\citep{krishnamurthy2016pac,jiang2017contextual,dann2018oracle,du2019provably,du2019latent,misra2019kinematic,feng2020provably,agarwal2020flambe};
see \pref{sec:rl} for a detailed comparison. At a conceptual level,
the design and analysis of \blockalg use several new techniques that leverage our disagreement-based perspective, and we hope
that they will find broader use.

\subsection{Additional Notation}
\label{sec:notation}
	We adopt non-asymptotic big-oh notation: For functions
	$f,g:\cX\to\bbR_{+}$, we write $f=\bigoh(g)$ (resp. $f=\bigom(g)$) if there exists some constant
	$C>0$ such that $f(x)\leq{}Cg(x)$ (resp. $f(x)\geq{}Cg(x)$)
        for all $x\in\cX$. We write $f=\bigoht(g)$ if
        $f=\bigoh(g\cdot\mathrm{polylog}(T))$, $f=\bigomt(g)$ if $f=\bigom(g/\polylog(T))$, and
        $f=\bigthetat(g)$ if $f=\bigoht(g)$ and $f=\bigomt(g)$. %
	We use $f\propto g$ as shorthand for $f=\bigthetat(g)$.

	For a vector $x\in\bbR^{d}$, we let $\nrm*{x}_{2}$ denote the euclidean
	norm and $\nrm*{x}_{\infty}$ denote the element-wise $\ell_{\infty}$
	norm.    %
	For an integer $n\in\N$, we let $[n]$ denote the set
        $\{1,\dots,n\}$. For a set or a sequence $S$, we let
        $\unif(S)$ denote the uniform distribution over all the
        elements in $S$ (note that a sequence allows identical
        elements to appear multiple times). For a set $\cX$, we let
        $\Delta(\cX)$ denote the set of all probability distributions
        over $\cX$. Given a policy $\pi:\cX\to\cA$, we occasionally overload notation and write $\pi(x,a)=\indic\crl*{\pi(x)=a}$.

\subsection{Organization}
\pref{sec:cb} contains our main contextual bandit
results. \pref{sec:alg,sec:policy_disagreement,sec:value_disagreement} contain our main algorithm, \mainalg, and 
disagreement-based best-of-both-worlds guarantees and lower bounds. \pref{sec:star} and \pref{sec:eluder} contain structural results
and guarantees for worst-case distributions and adversarial
contexts. \pref{sec:rl} contains disagreement-based guarantees for
reinforcement learning with function approximation. In \pref{sec:oracle} we show in detail how to implement all of our
algorithms for contextual bandits and reinforcement learning using
regression oracles for the value function
class. \pref{sec:experiments} contains experiments with \mainalg, and
we conclude in \pref{sec:discussion} with discussion and open
problems. Proofs are deferred to the appendix.

\section{Contextual Bandits}
\label{sec:cb}
We now introduce our contextual bandit algorithm, \mainalg, and
give regret bounds based on the policy and value function disagreement
coefficients, as well as matching lower bounds. We then show how to relate
these quantities to other structural parameters for the
distribution-free and adversarial settings, and instantiate our bounds
for concrete settings of interest.

\subsection{The Algorithm}\label{sec:alg}

Our main algorithm, \mainalg, is presented in \pref{alg:main}. %
Exploration in \mainalg is based on
a probability selection strategy introduced by \cite{abe1999associative} (see also
\cite{abe2003reinforcement}) and extended to contextual bandits with
general function classes by \cite{foster2020beyond} and
\cite{simchi2020bypassing} for online and offline regression oracles,
respectively. We utilize a general version of the Abe-Long strategy
which we refer to by the more descriptive name ``inverse gap
weighting'' (\igw). The strategy is parameterized by a
learning rate $\gamma$ and a subset $\cA'\subseteq\cA$ of
actions. Given a context $x$ and reward predictor $\fhat\in\cF$, we
define a probability distribution $\igw_{\cA',\gamma}(x;\fhat)\in\Delta(\cA)$ by
\begin{equation}
  \label{eq:igw}
  \prn*{\igw_{\cA',\gamma}(x;\fhat)}_{a} = \begin{cases}
        \frac{1}{|\cA'|+\gamma\left(\fhat(x,\widehat{a})-\fhat(x,a)\right)},&\text{for all }a\in \cA'/\{\widehat{a}\},\\
        1-\sum_{a\in \cA'/\{\widehat{a}\}}p_t(a),&\text{for }a=\widehat{a},\\
        0,&\text{for }a\notin \cA',
        \end{cases}
      \end{equation}
      where $\ahat\ldef{}\argmax_{a\in\cA'}\fhat(x,a)$. Both
      \cite{foster2020beyond} and \cite{simchi2020bypassing} apply
      this strategy with $\cA'=\cA$, and with the learning rate
      $\gamma$ selected either constant or following a fixed
      non-adaptive schedule. Building on this approach, \mainalg follows the same general template as the \falcon algorithm of
\cite{simchi2020bypassing}, but with two key
differences. First, rather than applying the \igw{} scheme to all
actions, we restrict only to actions $a$ which are ``plausible'' in
the sense that
they are induced by a version space $\cF_m$ maintained (implicitly) by
the algorithm. Second, we choose the learning rate $\gamma_m$ in a
data-driven fashion.

\begin{algorithm}[htpb]
\caption{\mainalg (Adaptive Contextual Bandits)}
\label{alg:main}
\textbf{input:}  Function class $\cF$. Number of rounds $T$.
\\
\textbf{initialization:}
\begin{itemize}[leftmargin=*]
\item[-] $M=\lceil\log_2 T\rceil$. \algcomment{Number of epochs.}
\item[-] Define $\tau_m=2^m, t_m=(\tau_m+\tau_{m-1})/2$ and
  $n_m=\tau_m-\tau_{m-1}$ for $m\in[M]$ \algcomment{Epoch schedule.}
\item[] and $\tau_0=0$, $t_0=0$, and $n_0=1/2$.
\item[-] Set  $\delta=1/T$. \algcomment{Failure probability.}
\item[-] $\beta_m=16(M-m+1)\log(2|\cF|T^2/\delta)$ for $m\in\brk*{M}$ \algcomment{Confidence
    radius.}
\item[-] $\mu_m=64\log(4M/\delta)/n_{m-1}$ for $m\in\brk*{M}$ \algcomment{Smoothing parameter.}
\end{itemize}
\textbf{notation:}
  \begin{itemize}[leftmargin=*]
\item[-] $\sum_{t=1}^{0}[\ldots]\ldef0$ and
  $\E_{x\sim\cD_1}[\ldots]\ldef1$. %
  \item[-]For $\cF'\subset\cF$, define
    \begin{align*}
\cA(x;\cF')&=\{a\in\cA:\pi_f(x)=a\text{ for some
             }f\in\cF'\},\quad\text{\algcomment{Candidate action set.}}\\
w(x;\cF')&=\mathbb{I}\{|\cA(x;\cF')|>1\}\cdot\max_{a\in
           \cA(x;\cF')}\sup_{f,f'\in\cF'}
           \left|f(x,a)-f'(x,a)\right|.\quad\text{\algcomment{Confidence width.}}
    \end{align*}
\end{itemize}

\textbf{algorithm:}
\begin{algorithmic}[1]
\For{epoch $m=1,2,\dots,M$}
    \State Compute the predictor $\widehat{f}_m=\arg\min_{f\in\cF}\sum_{t=1}^{\tau_{m-1}}(f(x_t,a_t)-r_t(a_t))^2$.\label{line:erm}
    \State Define \label{line:version_space}
    \[\cF_m=\left\{f\in\cF \;\Big{|}\; \sum_{t=1}^{t_{m-1}}(f(x_t,a_t)-r_t(a_t))^2\le\inf_{f'\in\cF}\sum_{t=1}^{t_{m-1}}(f'(x_t,a_t)-r_t(a_t))^2+\beta_m\right\}.\]
    \State{}Compute the {\it instance-dependent scale
      factor:}\label{line:factor} if $m>1$, %
    \[
    \lambda_m=\begin{cases}
\frac{\E_{x\sim\cD_m}[\mathbb{I}\{|\cA(x;\cF_m)|>1\}]+\mu_m}{\sqrt{\mathbb{E}_{x\sim\cD_{m-1}}[\mathbb{I}\{|\cA(x;\cF_{m-1})|>1\}]+\mu_{m-1}}}, &\text{\algcolor{\optionone:} Policy-based exploration,}\\
    \mathbb{I}\left\{\mathbb{E}_{x\sim\cD_m}\left[w(x;\cF_m)\right]\ge\frac{\sqrt{\K
          T\log(|\cF|/\delta)}}{n_{m-1}}\right\},
    &\text{\algcolor{\optiontwo:} Value-based exploration,
    }
    \end{cases}
    \]
    \Statex{}~~~~
     where $\cD_m=\unif(x_{t_{m-1}+1},\dots,x_{\tau_{m-1}})$; else, $\lambda_1=1$ (\optionone) or $0$ (\optiontwo). %
    \State{}Compute the learning rate:
    \[\gamma_m=\lambda_m\cdot \sqrt{\frac{\K n_{m-1}}{\log(2|\cF|T^2/\delta)}}.\] \label{line:adacb_learning_rate}
    \For{round $t=\tau_{m-1}+1,\cdots,\tau_m$}
        \State{Observe context $x_t\in\cX$}.
        \State Compute the {\it \setname} \label{line:disagreement}\[\cA_t=\cA(x_t;\cF_m).\] %
        \State{Compute $\widehat{f}_m(x_t,a)$ for each action $a\in
          \cA_t$. Let $\widehat{a}_t=\max_{a\in
            \cA_t}\widehat{f}_m(x_t,a)$}. Define
        \begin{equation*}
          ~~~~~~~~~~~~~~~~~~~~~~~~~~~~~~~~~~~~~~~~~~~~
          p_t =
          \igw_{\cA_t,\gamma_m}(x_t;\wh{f}_m).\quad\text{\algcomment{Inverse
              gap weighting; see
              Eq. \pref{eq:igw}.}}
        \end{equation*}
      \label{line:igw}
        \State{Sample $a_t\sim p_t$ and observe reward $r_t(a_t)$.}
    \EndFor
\EndFor
\end{algorithmic}
\end{algorithm}

In more detail, we operate in a doubling epoch schedule. Letting
$\tau_m=2^m$ with $\tau_0=0$, each epoch $m\geq{}1$ consists of rounds $\tau_{m-1}+1,\dots, \tau_{m}$, and there are $M=\lceil\log_2 T\rceil$ epochs in total. 
At the beginning of each epoch $m$, we compute an
estimator $\fhat_m$ for the Bayes regression function $\fstar$ by performing
least-squares regression on data collected so far
(\pref{line:erm}). We also maintain a version space $\cF_m$, which is the set of all plausible predictors that cannot yet be eliminated based on square loss confidence bounds (\pref{line:version_space}). Based on $\cF_m$, we select the learning rate $\gamma_m$ for the
current epoch adaptively by estimating a parameter called the
\emph{instance-dependent scale factor} ($\lambda_m$) which is closely
related to the \policydis ({\optionone}) and the \valuedis
({\optiontwo}). Then, when a context $x_t$ in epoch $m$ arrives,
\mainalg first computes the \emph{\setname}
$\cA_t\ldef{}\cA(x_t;\cF_m)$ (\pref{line:disagreement}), which is the
set of actions that are optimal for some predictor $f\in\cF_m$, and thus could
plausibly be equal to $\pistar(x_t)$. The algorithm then sets
$p_t=\igw_{\cA_t,\gamma_m}(x_t;\fhat_m)$ (\pref{line:igw}), samples
$a_t\sim{}p_t$, and proceeds to the next round.

The adaptive learning rate $\gamma_m$ balances the algorithm's efforts between exploration and exploitation: 
a larger learning rate leads to more aggressive exploitation (following the least-squares predictor $\widehat{f}_m$), 
while a smaller learning rate leads to more conservative exploration over the \setname{}. 
\mainalg's learning rate $\gamma_m$ (\pref{line:adacb_learning_rate}) has two components: 
the instance-dependent scale factor $\lambda_m$, which is adaptively determined by the collected data; 
and a non-adaptive component propositional to $\sqrt{A n_{m-1}/\log|\cF|}$, where $n_{m-1}$ is the length of the epoch $m-1$. 
While the non-adaptive component is the same as the learning rate in \falcon and is sufficient if one only aims to achieve the minimax regret, 
the adaptive factor $\lambda_m$, combined with the action elimination
procedure above, is essential for \mainalg to achieve near-optimal instance-dependent regret. 
We offer two different schemes to select $\lambda_m$: The first adapts
to the \policydis, while the second adapts to the \valuedis.
\begin{itemize}
    \item \optionone (policy-based exploration). This option selects
      $\lambda_m$ as a sample-based approximation to the quantity $$\bbP_{\cD}(|\cA(x,\cF_m)|>1)/\sqrt{\bbP_{\cD}(|\cA(x,\cF_{m-1})|>1)},$$ where $\bbP_{\cD}(|\cA(x,\cF_m)|>1)$ and $\bbP_{\cD}(|\cA(x,\cF_{m-1})|>1)$ are \emph{disagreement probabilities} (i.e., the probability that we encounter a context on which we cannot yet determine the true optimal action) for epoch $m$ and epoch $m-1$, respectively. 
    Intuitively, this configuration asserts that we should adaptively
    discount the learning rate if either 1) the current disagreement
    probability is small, or 2) the disagreement probability is
    decreasing sufficiently quickly across epochs.
    This scheme is natural because if we expect that no exploration is
    required for a large portion future contexts, %
     then we have flexibility to perform more thorough exploration on other contexts where the true optimal action cannot yet be determined.
    {This accelerates \mainalg's exploration of more effective policies.}
    \item \optiontwo (value-based exploration). While the disagreement
      probability used in \optionone is a useful quantity that provides
      information on the hardness of the problem instance, it does not
      fully utilize the value function structure. In particular, it is
      only sensitive to the {occurrence} of disagreement on each
      context, but is not sensitive to the \textit{scale} of disagreement (i.e., how much it would cost if we chose a disagreeing action) on each context. 
    This motivates \optiontwo, which is based on a refined
    \emph{confidence width} $w(x;\cF_m)$ that accounts for both the
    occurrence and the scale of disagreement. Specifically,
    $w(x;\cF_m)$ measures the worst-case cost of exploring a
    sub-optimal action in the \setname{} for $x$, and \optiontwo
    selects $\lambda_m$ as a sample-based approximation to the quantity \[\1\{\bbE_\cD[w(x;\cF_m)]\ge\sqrt{AT\log|\cF|}/n_{m-1}\}.\]
    In other words, we adaptively zero out the learning rate and
    perform uniform exploration if
    $\bbE_\cD[w(x;\cF_m)]$ is smaller than an epoch-varying
    threshold. This is reasonable because if $\bbE_\cD[w(x;\cF_m)]$ is small,
    then the average cost of exploration is small for the
    underlying instance, so we should take advantage of this and
    explore as much as possible.
  \end{itemize}

Finally, since $\cD$ is unknown, to obtain $\lambda_m$ we compute an
empirical approximation to $\bbE_{\cD}[\cdot]$ using sample
splitting. That is, we use separate sample to compute $\cF_m$ and to
approximate $\cD$ to ensure independence; this is reflected in the sample splitting schedule
$\{t_m\}_{m=1}^M$ in \mainalg. The smoothing parameter $\mu_m$ is
designed to correct the approximation error incurred by this procedure.

We make a few additional remarks. First, the learning rate and
confidence width parameters in \pref{alg:main} (and consequently our
main theorems) consider
a general finite class $\cF$. This is only a stylistic choice: \mainalg works
as-is for general function classes, with the dependence on $\log|\cF|$
in these parameters replaced by standard learning-theoretic complexity
measures such as the pseudodimension; see \pref{sec:cb_extensions}. Second, \pref{alg:main} takes $T$ as input. One can straightforwardly extend \pref{alg:main} to work with unknown $T$ using the standard \emph{doubling trick}. 
Finally, we emphasize that \optionone and \optiontwo are designed based on
different techniques and lead to different instance-dependent
guarantees. Designing a single option that simultaneously achieving
the goals of \optionone and \optiontwo is an interesting future direction.

\paragraph{Oracle efficiency}
\mainalg can be implemented efficiently with a weighted least squares
regression oracle \oracle (see \pref{eq:oracle}) as follows.
\begin{itemize}
\item At each epoch $m$, call \oracle to compute the square loss
  empirical risk minimizer $\fhat_m$.
\item For any given context $x$, the \setname $\cA(x;\cF_m)$
  can be computed using either $\bigoht(\K)$ oracle calls when $\cF$ is
  convex or $\bigoht(\K T^2)$ oracle calls for general (in
  particular, finite) classes.
\item For \optiontwo, the function $w(x;\cF_m)$ can be computed
  in a similar fashion to $\cA(x;\cF_m)$ using $\bigoht(\K)$ or
  $\bigoht(\K{}T^2)$ oracle calls in the convex and general case, respectively.
\end{itemize}
Altogether, since $\cA(x;\cF_m)$ and $w(x;\cF_m)$ are computed
for $\bigoh(1)$ different contexts per round amortized, the algorithm
requires $\bigoh(\K T)$ calls to \oracle overall when $\cF$ is convex. The reduction is described in full in \pref{sec:oracle}.

\subsection{Disagreement-Based Guarantees}
\label{sec:policy_disagreement}
We are now ready to state our first main regret guarantee for
\mainalg, which is based on the \policydis
\pref{eq:policy_disagreement}. The theorem also includes a more
general result in terms of an intermediate quantity we call the \cscdis, which we
define by
\begin{equation}
  \label{eq:csc_disagreement}
  \CostDis{\veps_0}= \sup_{\veps\geq{}\veps_0}\frac{\bbP_{\cD}\prn*{x:\exists\pi\in\Picsc_{\veps}: \pi(x)\neq{}\pistar(x)}}{\veps},
\end{equation}
where $\Picsc_{\veps}=\crl*{\pi\in\Pi: R(\pistar) - R(\pi) \leq{}
  \veps}$ for $R(\pi)\ldef{}\En\brk{r(\pi(x))}$.\footnote{The acronym
  CSC refers to \emph{cost-sensitive classification}.} The \cscdis
grants finer control over the cost-sensitive structure of the problem
and---beyond leading to our main gap-based result---leads to instance-dependent guarantees even when
the instance does not have uniform gap.

\begin{theorem}[Instance-dependent regret for \mainalg (policy version)]
  \label{thm:disagreement_ub}
  For any instance with uniform gap $\Delta$, \pref{alg:main} with
\optionone ensures that
\begin{equation}
\En\brk{\Reg}=\bigoht(1)\cdot\min_{\varepsilon>0}\max\left\{\varepsilon\Delta
  T, \frac{\PolicyDis{\veps}{\K\log|\cF|}}{\Delta}\right\}+\bigoht(1).
\label{eq:disagreement_ub}
\end{equation}
More generally, \pref{alg:main} with \optionone ensures that for every instance, without any gap assumption,
\begin{equation}
\En\brk{\Reg}=\bigoht(1)\cdot\min_{\varepsilon>0}\max\left\{\varepsilon
  T, \CostDis{\veps}{\K\log|\cF|}\right\}+\bigoht(1).
\label{eq:disagreement_ub_csc}
\end{equation}
\end{theorem}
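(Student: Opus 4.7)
My plan is to adapt the \falcon analysis (Simchi-Levi--Xu) to accommodate action elimination over the candidate action set $\cA_t$ and the adaptive learning rate arising from \optionone. The heart of the argument is a per-epoch regret decomposition
\[
\En[\mathrm{Reg}_m] \;\lesssim\; n_m \cdot p_m \cdot \frac{\K}{\gamma_m} + \gamma_m \cdot \log|\cF|, \qquad p_m \ldef \Pr_\cD\brk{|\cA(x; \cF_m)| > 1}.
\]
The factor $p_m$ (rather than $1$) multiplying the exploration cost $\K/\gamma_m$ is precisely what \mainalg buys by restricting \igw\ to $\cA_t$: when $|\cA_t|=1$, its unique element is $\pistar(x_t)$ (since the confidence event places $\fstar \in \cF_m$ and hence $\pistar(x_t) \in \cA_t$), so the round incurs zero regret and no exploration cost.

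First, I would set up a high-probability event under which, for every epoch $m$: (a) $\fstar \in \cF_m$; (b) every $f \in \cF_m$ satisfies the in-sample bound $\sum_{s \le t_{m-1}} \En_{a \sim p_s}[(f(x_s,a)-\fstar(x_s,a))^2] \lesssim \beta_m$; and (c) the sample-split estimate of $\Pr_\cD[|\cA(x;\cF_m)|>1]$ used to define $\lambda_m$ matches its population counterpart up to the smoothing $\mu_m$. Parts (a,b) come from standard least-squares and Freedman-type concentration for the version space, while (c) is Bernstein applied through the sample-splitting schedule $\{t_m\}$. Since consecutive \igw\ distributions are comparable up to bounded multiplicative factors, (b) transfers to the out-of-sample statement $\En_{x, a\sim p_t}[(\fhat_m - \fstar)^2] \lesssim \log|\cF|/n_{m-1}$ for rounds $t$ in epoch $m$; summing over rounds yields the second term of the display above, while the first term comes from the standard \igw\ regret inequality (Abe--Long / \falcon-style) restricted to $\cA_t$.

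The step unique to \mainalg is the bound on $p_m$ via the \policydis. Combining (b) with the \igw\ lower bound $p_s(a) \gtrsim 1/\K$ on $a \in \cA_s$ yields $\En_{x \sim \cD} \max_{a \in \cA(x; \cF_m)}(f-\fstar)^2(x,a) \lesssim \K \log|\cF|/n_{m-1}$ for every $f \in \cF_m$. Under uniform gap, $\pi_f(x) \neq \pistar(x)$ forces $\max_{a \in \{\pistar(x), \pi_f(x)\}}(f(x,a)-\fstar(x,a))^2 \ge \Delta^2/4$, so every $f \in \cF_m$ obeys $\Pr_\cD[\pi_f \neq \pistar] \lesssim \veps_m \ldef \K\log|\cF|/(\Delta^2 n_{m-1})$, placing $\{\pi_f : f \in \cF_m\}$ inside $\Pi_{\veps_m}$. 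The definition of the \policydis then gives $p_m \le \PolicyDis{\veps_m}\cdot\veps_m$. Plugging \optionone's choice $\lambda_m \approx p_m/\sqrt{p_{m-1}}$ into $\gamma_m = \lambda_m\sqrt{\K n_{m-1}/\log|\cF|}$ and balancing the two terms of the per-epoch display yields, after summing over the $M = O(\log T)$ epochs, the bound \pref{eq:disagreement_ub}. The general statement \pref{eq:disagreement_ub_csc} follows the same template with the gap-based conversion from $L_2$ error to policy disagreement replaced by the direct Cauchy--Schwarz implication that $f \in \cF_m$ makes $\pi_f$ near-optimal in expected reward, $R(\pistar) - R(\pi_f) \lesssim \sqrt{\K\log|\cF|/n_{m-1}}$; this places $\{\pi_f : f \in \cF_m\}$ inside $\Picsc_{\veps_m'}$, and $\CostDisA$ assumes the role of $\PolicyDisA$.

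The main technical obstacle I anticipate is the in-sample to out-of-sample transfer of the squared-error bound while the learning rate $\gamma_m$ is itself data-dependent through $\lambda_m$. The \igw\ distributions in successive epochs are comparable only up to multiplicative constants, and these constants must be controlled uniformly even though $\lambda_m$ is random. The sample-split construction and the smoothing $\mu_m$ are introduced precisely to decouple the estimation of $\lambda_m$ from the regression targets; verifying that the resulting random perturbation is absorbed into the lower-order $\bigoht(1)$ term, and that the per-epoch balance between $n_m p_m \K/\gamma_m$ and $\gamma_m \log|\cF|$ does not unravel when $\lambda_m$ only approximates the ideal quantity $p_m/\sqrt{p_{m-1}}$, is the most delicate piece.
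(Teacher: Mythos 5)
Your high-level map is right---the per-epoch regret scales with the disagreement probability $q_m = \Pr_\cD[|\cA(x;\cF_m)|>1]$, the sample split decouples the estimation of $\lambda_m$, and the gap converts $L_2$-error to policy error---but the step that actually makes the disagreement coefficient appear is wrong, and the error is not cosmetic.

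Concretely, you assert ``the \igw\ lower bound $p_s(a)\gtrsim 1/\K$ on $a\in\cA_s$,'' and from it derive $\En_x\max_{a\in\cA(x;\cF_m)}(f-\fstar)^2 \lesssim \K\log|\cF|/n_{m-1}$ and hence $\Pr_\cD[\pi_f\neq\pistar]\lesssim \K\log|\cF|/(\Delta^2 n_{m-1})$. That pointwise lower bound is false: the \igw\ formula gives $p_s(a)=1/(|\cA_s|+\gamma_m(\fhat(\ahat)-\fhat(a)))\geq 1/(\K+\gamma_m)$, and $\gamma_m\propto\sqrt{\K n_{m-1}/\log|\cF|}$ is typically $\gg \K$, so non-greedy candidate actions can receive probability as small as $1/\gamma_m$. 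There is a quick sanity check that the claimed conclusion cannot be right: it asserts a version-space localization radius $\veps_m=\K\log|\cF|/(\Delta^2 n_{m-1})$ that is \emph{independent of the disagreement coefficient}. Propagating this through the rest of your argument ($q_m\leq\PolicyDis{\veps}\cdot\veps_m$, then $q_m\K/\gamma_m$ with \optionone's $\lambda_m$) yields total regret $\bigoht\bigl(\sqrt{\PolicyDisA}\cdot\K\log|\cF|/\Delta\bigr)$, i.e.\ $\sqrt{\theta}$ rather than $\theta$. This contradicts the matching lower bound in \pref{thm:disagreement_lb} whenever $\theta$ is large, so the claim is not merely hard to prove---it is false.

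The paper's replacement for your pointwise bound is the \emph{expectation} constraint produced by the implicit optimization problem (\pref{lm:iop-i}): $\En_x\bigl[\1\{|\cA(x;\cF_{m-1})|>1\}/p_{m-1}(\pi(x)\mid x)\bigr]\leq q_{m-1}\K + \gamma_{m-1}\wReg_t(\pi)$. Plugging this (rather than a $1/\K$ bound) into Cauchy--Schwarz gives $\pReg(\pi_f)^2 \lesssim (q_{m-1}\K + \gamma_{m-1}\wReg(\pi_f))\cdot\log|\cF|/n_{m-1}$, which is a self-referential inequality because $\wReg(\pi_f)$ is itself comparable to $\pReg(\pi_f)$ up to the same estimation error. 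Resolving it requires the inductive chain of \pref{lm:regret-estimates} and \pref{lm:policy-convergence}, and then the multi-scale recurrence $\eta_m\lesssim\frac{\eta_{m-1}}{\eta_m}\CostDis{\eta_{m-1}}\frac{\K\log|\cF|}{n_{m-1}}$ solved in \pref{lm:dis1}--\pref{lm:dis2}. Your proposal skips this recurrence precisely because the false $1/\K$ shortcut appeared to eliminate the dependence of $\veps_m$ on $q_{m-1}$; once that shortcut is removed, the recurrence is unavoidable and is the technical heart of the proof. Two smaller remarks: ``consecutive \igw\ distributions are comparable up to bounded multiplicative factors'' is also not true (and not needed---the in-sample-to-population transfer is ordinary martingale concentration, \pref{lm:high-prob-event}), and the gap-based bound \pref{eq:disagreement_ub} is in the paper deduced \emph{from} the CSC bound \pref{eq:disagreement_ub_csc} via $\Picsc_\veps\subseteq\Pi_{\veps/\Delta}$, not proved separately.
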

Let us describe some key features of \pref{thm:disagreement_ub}.
\newcommand{\vepsopt}{\veps_T}
\begin{itemize}
\item Whenever $\PolicyDis{\veps}\leq\polylog(1/\veps)$, we may choose
  $\veps\propto{}1/T$ in \pref{eq:disagreement_ub} so that
  \[\En\brk*{\Reg}=\bigoht(1)\cdot{}\frac{\K\log\abs*{\cF}}{\Delta}.\] For
  example, for the classical multi-armed bandit setup where $\cX$ is a
  singleton, we have
  $\PolicyDis{\veps}=1$, recovering the usual instance-dependent rate
  (up to logarithmic factors). We give some more examples where
  logarithmic regret can be attained in a moment.
  \item More generally, since the function
    $\veps\mapsto\veps\Delta{}T$ is increasing in $\veps$ and
    $\PolicyDis{\veps}$ is decreasing, the best choice for the bound \pref{eq:disagreement_ub} (up to
    constant factors) is the critical radius
    $\vepsopt$ that satisfies the balance
    \begin{equation}
      \vepsopt\Delta{}T \propto \frac{\PolicyDis{\vepsopt}\K\log\abs{\cF}}{\Delta}.
    \end{equation}
    For example, if $\PolicyDis{\veps}\propto\veps^{-\rho}$ for some
    $\rho\in(0,1)$, then choosing
    $\veps_T\propto(\K\log\abs{\cF}(\Delta^{2}T)^{-1})^{\frac{1}{1+\rho}}$,
    leads to
    \[
      \En\brk*{\Reg}= \bigoht(1)\cdot{}\frac{ (A\log\abs*{\cF})^{\frac{1}{1+\rho}}\cdot T^{\frac{\rho}{1+\rho}}}{\Delta^{\frac{1-\rho}{1+\rho}}}.
    \]
    The critical radius also plays an
    important role in the \emph{proof} of \pref{thm:disagreement_ub}.
\item With no assumption on the gap or $\PolicyDisA$, we may always
  take $\CostDis{\veps}\leq{}1/\veps$, so that
  \pref{eq:disagreement_ub_csc} implies the minimax rate $\sqrt{\K{}T\log\abs*{\cF}}$.
\end{itemize}
The general bound \pref{eq:disagreement_ub_csc} can be seen to imply
\pref{eq:disagreement_ub}, since
$\Picsc_{\veps}\subseteq{}\Pi_{\veps/\Delta}$ whenever the gap is
$\Delta$. More generally, the \cscdis can also lead to instance-dependent
regret bounds under other standard assumptions which go beyond the
uniform gap; these are discussed at the end of the section.

\paragraph{Optimality}
We now show that the regret bound attained by \mainalg in
\pref{thm:disagreement_ub} is near-optimal, in the sense that it cannot be
improved beyond log factors without making additional assumptions on the class $\cF$ or
the contextual bandit instance.

Formally, we model a contextual bandit algorithm $\alg$ as a sequence
of mappings $\alg_{t}:
(\cX\times\cA\times{}\brk*{0,1})^{t-1}\times{}\cX\to\Delta(\cA)$, so that
\begin{equation}
  \label{eq:cb_alg}
\alg_t(x_t; (x_1,a_1,\ls_1(a_1)),\ldots,(x_{t-1},a_{t-1},\ls_{t-1}(a_{t-1})))
\end{equation}
is the algorithm's action distribution after observing context $x_t$
at round $t$.

For a given function class $\cF$, we define
\begin{equation}
  \label{eq:policy_minimax}
\PolicyMinimax=
\inf_{\alg}\sup_{(\xdist,\rdist)}\crl*{\En\brk*{\Reg}\mid{} \fstar\in\cF,\; \PolicyDis{\veps}\leq{}\theta}
\end{equation}
to be the \emph{constrained minimax complexity}, which measures the
worst-case performance of any algorithm \pref{eq:cb_alg} across all
instances realizable by $\cF$ for which the \policydis at scale
$\veps$ is at most $\theta$.\footnote{We leave implicit that rewards
  are restricted to the range $\brk*{0,1}$. In fact, for our lower
  bound it suffices to only consider reward distributions $\bbP_r$
  for which $r(a)$ is Bernoulli with mean $\fstar(x,a)$ given $x$ in \pref{eq:policy_minimax}.}

Our main lower bound shows that there exists a function class $\cF$
for which the constrained minimax complexity matches the upper bound \pref{eq:disagreement_ub}.
\begin{theorem}[Tight lower bound for specific function class]
  \label{thm:disagreement_lb}
  Let parameters $\K{},F\in\bbN$ and $\Delta\in(0,1/4)$ be given. For any $\veps\in(0,1)$ and $1\leq{}\theta\leq{}\min\crl{1/\veps,e^{-2}A/F}$,  there exists a
  function class $\cF\subseteq(\cX\to\cA)$ with $\K$ actions and $\abs*{\cF}\leq{}F$ such
  that:
  \begin{itemize}
  \item All $f\in\cF$ have uniform gap $\Delta$.
  \item The constrained minimax complexity is lower bounded by
    \[
\PolicyMinimax=\bigomt(1)\cdot{}\min\crl*{\veps\Delta{}T, \frac{\theta\K{}\log{}F}{\Delta}},
  \]
  where $\bigomt(\cdot)$ hides factors logarithmic in $A$ and $\veps^{-1}$.
  \end{itemize}
\end{theorem}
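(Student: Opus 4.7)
The plan is to prove the lower bound by constructing a family of realizable contextual bandit instances calibrated to the parameters $(\varepsilon,\theta,\Delta,A,F)$ and applying an information-theoretic argument (in the spirit of the classical multi-armed bandit and contextual bandit lower bounds of Auer et al.\ and Agarwal et al.\ 2012) to this family.

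\textbf{Construction.} First I would take the context space to consist of an ``easy'' context $x_0$ of probability $1-\theta\varepsilon$, on which every $f\in\cF$ agrees on a single optimal action, together with $M\asymp\theta$ ``hard'' contexts $x_1,\ldots,x_M$, each of probability $\varepsilon$. Each $f\in\cF$ corresponds to a codeword $c\in[A]^M$ specifying the optimal action at each hard context, and rewards are Bernoulli with mean $1/2\pm\Delta/2$, so the uniform gap is exactly $\Delta$. The code $\cF$ of size $F$ is chosen so that for every $c^*\in\cF$ and every hard context $x_i$, $\cF$ also contains a codeword differing from $c^*$ only at coordinate $i$; under the hypothesis $\theta\leq e^{-2}A/F$, such a code exists via a randomized construction together with a union bound.

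\textbf{Disagreement coefficient.} By construction, any policy $\pi_f\in\Pi_\varepsilon$ must differ from $\pistar$ on at most one hard context (since each has mass $\varepsilon$), and by the ``1-neighbor'' property of the code, every hard context is reached by some $\pi_f\in\Pi_\varepsilon$. Hence the disagreement region has total probability $M\varepsilon=\theta\varepsilon$, yielding $\PolicyDisA(\varepsilon)=\theta$.

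\textbf{Regret lower bound.} Next I would endow $\cF$ with the uniform prior on $f^*$ and apply a Bayesian lower bound. The key reduction is that, conditional on the context arriving in the hard region, the algorithm faces $M=\Theta(\theta)$ coupled $A$-armed bandit subproblems on $x_1,\ldots,x_M$, each with expected sample budget $\varepsilon T$ and gap $\Delta$. A Kullback--Leibler / Bretagnolle--Huber computation per subproblem, exploiting the product structure of the code, then gives per-subproblem regret $\bigom(\min\crl{\Delta\varepsilon T,\, A\log F/\Delta})$; summing over the $M=\Theta(\theta)$ subproblems and using $\theta\geq 1$ yields
\[
\En\brk{\Reg}\;\geq\;\bigomt\prn*{\min\crl*{\theta\Delta\varepsilon T,\;\theta A\log F/\Delta}}\;\geq\;\bigomt\prn*{\min\crl*{\Delta\varepsilon T,\;\theta A\log F/\Delta}},
\]
as claimed.

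\textbf{Main obstacle.} The principal difficulty is carrying out the per-subproblem lower bound so that the logarithmic factor inside is $\log F$ (the joint function-class complexity) rather than the naive $\log(\varepsilon T)$ produced by a direct Lai--Robbins application: one must average over the prior on $\cF$ and carefully track how distinguishing $F$ joint codewords decomposes into $M$ local sub-hypotheses of size $\approx F^{1/M}$ per hard context. The constraint $\theta\leq e^{-2}A/F$ in the theorem is precisely what permits the existence of the rich local code needed for this decomposition.
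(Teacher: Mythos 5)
Your construction is genuinely different from the paper's, and it has a gap that cannot be patched without a structural change. The paper does not use a single block of $M\asymp\theta$ hard contexts; it uses $d\asymp\log F/\log(A\theta)$ \emph{disjoint blocks}, each with $k+1$ contexts ($k=\lfloor\theta\rfloor$), where within each block every $f\in\cF$ deviates from the reference on at most one context, and across blocks these choices are independent. The full class is the product over blocks, giving $\abs{\cF}\le(e^2A\theta)^d\le F$, while the disagreement coefficient is still bounded by $k\le\theta$ because each block contributes at most one ``flip'' to any policy.

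The reason this blocking is essential is exactly the ``main obstacle'' you flagged. Whatever reference $\fstar$ the adversary commits to, the reverse-KL Fano argument (or Bretagnolle--Huber) ultimately lower bounds regret by $\Omega(N/\Delta)$ where $N$ is the number of codewords that are one flip away from $\fstar$ and must be ruled out. In your single-block construction those one-off codewords are all distinct elements of $\cF\setminus\{\fstar\}$, so $N\le F-1$, and the best achievable bound is $\Omega(F/\Delta)$ --- there is no mechanism to introduce a $\log F$ factor, and the decomposition you propose into $M$ local alphabets of size $\approx F^{1/M}$ only makes this explicit: $\sum_i F^{1/\theta}/\Delta = \theta F^{1/\theta}/\Delta$, which is not $\theta A\log F/\Delta$ in any parameter regime. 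In the paper's construction, by contrast, each block contributes $k(A-1)$ local one-off alternatives and the Fano argument is applied per block with the block's own reference distribution; summing across $d$ blocks gives $N\asymp d\theta A\asymp\theta A\log F/\log(A\theta)$ while the \emph{total} codeword count remains $(1+\theta(A-1))^d\le F$. That exponential gap between the effective number of alternatives $N$ and $\abs{\cF}$ is the whole point of the $d$-block product structure, and it is what your flat construction cannot replicate. (Incidentally, the constraint in the theorem statement should read $\theta\le e^{-2}F/A$, as in \pref{thm:value_disagreement_lb} and as required in the proof for $(e^2A\theta)^d\le F$ to have a solution $d\ge1$; your randomized-code argument is calibrated to the misprinted direction.) Your handling of the disagreement coefficient and of the $\veps\Delta T$ side of the minimum is in line with the paper's, but as written the $\theta A\log F/\Delta$ side does not follow.
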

This lower bound has a simple interpretation: The term
$\veps\Delta{}T$ is the regret incurred if we commit to playing a
particular policy $\pi\in\Pi_{\veps}$ for any ``simple'' instance in
which the gap is no larger than $\bigoh(\Delta)$ for
all actions, while the term $\frac{\PolicyDis{\veps}\K\log{}\abs*{\cF}}{\Delta}$ is
the cost of exploration to find such a policy.

The first implication of this lower bound is that without an
assumption such as the disagreement coefficient, logarithmic regret is
impossible even when the gap is constant; this alone is not surprising
since \cite{foster2020beyond} already showed a similar impossibility for
non-stochastic contexts, but \pref{thm:disagreement_lb} strengthens
this result since it holds for stochastic contexts. More
importantly, the lower bound shows that the tradeoff in
\pref{thm:disagreement_ub} is tight as a function of $\Delta, \veps,
\K, \log\abs*{\cF}$, and $\PolicyDisA$, so additional assumptions are
required to attain stronger instance-dependent regret bounds for
specific classes. We explore such assumptions in the sequel.

We
mention one important caveat: Compared to instance-dependent lower
bounds for multi-armed bandits (e.g., \cite{garivier2019explore}), the
quantification for \pref{thm:disagreement_lb} is slightly weaker.
Rather than lower bounding the regret for \emph{any} particular
instance (assuming uniformly good performance in a neighborhood), we
only show existence of a \emph{particular} realizable instance with
gap for which the regret lower bound holds. We suspect that
strengthening the lower bound in this regard will be difficult unless
one is willing to sacrifice dependence on $\log{}F$.

\paragraph{Examples}
\newcommand{\policygap}{\Delta_{\mathrm{pol}}}
The (policy) disagreement coefficient has been studied extensively in active
learning, and many bounds are known for different function classes and
distributions of interest. We refer to \cite{hanneke2014theory} for a
comprehensive survey and summarize some notable examples here
(restricting to the binary/two-action case, which has been the main
focus of active learning literature).
\begin{itemize}
\item When $\cF$ is a $d$-dimensional linear function class,
  $\PolicyDis{\veps}\leq{}\bigoht(d^{1/2}\log(1/\veps))$ whenever
  $\xdist$ is isotropic log-concave \citep{balcan2013active}. More
  generally, $\PolicyDis{\veps}=o(1/\veps)$ as long as $\cD$ admits a
  density \citep{hanneke2014theory}.
\item $\PolicyDis{\veps}=\polylog(1/\veps)$ whenever $\cF$ is smoothly
  parameterized by a subset of euclidean space, subject to certain
  regularity conditions \citep{friedman2009active}. This includes, for
  example, axis-aligned rectangles.
\item When $\Pi$ is a class of depth-limited decision trees, we have
  $\PolicyDis{\veps}=\polylog(1/\veps)$ \citep{balcan2010true}.
\end{itemize}
For an example which leverages the more general parameter $\CostDisA$, \cite{langford2008epoch} give logarithmic regret bounds for finite-class contextual bandits
based on a different notion of gap called the \emph{policy gap}
defined by
$\policygap=R(\pistar)-\max_{\pi\neq{}\pistar}R(\pi)$, where $R(\pi)=\En_{x,r}\brk*{r(\pi(x))}$. It is simple to
see that $\CostDis{\veps}\leq{}\policygap^{-1}$, so that
\pref{thm:disagreement_ub} gives
$\En\brk*{\Reg}\leq{}\bigoht\prn*{\frac{\K\log\abs{\cF}}{\policygap}}$,
which improves upon the gap dependence of their result.

\subsection{Scale-Sensitive Guarantees}
\label{sec:value_disagreement}

We now give instance-dependent regret guarantees based on the
\valuedis, which is defined via
  \begin{equation}
    \label{eq:value_disagreement_full}
      \ValueDis{\Delta_0}{\veps_0} = \sup_{\Delta>\Delta_0,\veps>\veps_0}\sup_{p:\cX\to\Delta(\cD)}\frac{\Delta^{2}}{\veps^{2}}\bbP_{\cD,p}\prn*{
    \exists{}f\in\cF: \abs*{f(x,a)-\fstar(x,a)}>\Delta,\;\nrm*{f-\fstar}_{\cD,p}\leq\veps
    }.
  \end{equation}

Compared to the \policydis, the \valuedis is somewhat easier to bound
directly when the value function class $\cF$ has simple structure. For
example, when $\cF$ is linear, we can bound $\ValueDisA$ in terms of
the dimension for \emph{any} distribution with a simple linear
algebraic calculation.
\begin{proposition}
  \label{prop:value_disagreement_linear}
  Let $\phi(x,a)\in\bbR^{d}$ be a fixed feature map, and let
  $\cF=\crl*{(x,a)\mapsto{}\tri*{w,\phi(x,a)}\mid{}w\in\cW}$, where
  $\cW\subseteq\bbR^{d}$ is any fixed set. Then for all $\xdist$,
  $\Delta$, $\veps$, 
  \[
    \ValueDis{\Delta}{\veps}\leq{}d.
  \]
Furthermore, if
  $\cF=\crl*{(x,a)\mapsto{}\sigma(\tri*{w,\phi(x,a)})\mid{}w\in\cW}$,
  where $\sigma:\bbR\to\bbR$ is any fixed link function with
  $0<c_{l}\leq\sigma'\leq{}c_{u}$ almost surely, we have
    \[
    \ValueDis{\Delta}{\veps}\leq{}\prn*{\frac{c_u}{c_l}}^{2}\cdot{}d.
  \]

\end{proposition}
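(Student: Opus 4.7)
The plan is to prove the linear case directly via a Cauchy--Schwarz plus Markov argument, and then reduce the generalized linear case to the linear one through the mean value theorem.

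For the linear case, fix any $\Delta > 0$, $\veps > 0$, context distribution $\cD$, and action kernel $p:\cX\to\Delta(\cA)$. Parameterize deviations by $\theta = w - w^\star$, so that $f(x,a) - \fstar(x,a) = \langle \theta, \phi(x,a)\rangle$. Let $\Sigma = \E_{x\sim\cD, a\sim p(x)}[\phi(x,a)\phi(x,a)^\top]$ denote the feature second-moment matrix under the sampling distribution. Then $\|f - \fstar\|_{\cD,p}^2 = \theta^\top \Sigma \theta$, so the constraint $\|f-\fstar\|_{\cD,p} \leq \veps$ is equivalent to $\theta^\top \Sigma \theta \leq \veps^2$. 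The key step is the pointwise bound
\[
|\langle \theta, \phi(x,a)\rangle| \;\leq\; \sqrt{\theta^\top \Sigma \theta} \cdot \sqrt{\phi(x,a)^\top \Sigma^{\dagger} \phi(x,a)} \;\leq\; \veps \sqrt{\phi(x,a)^\top \Sigma^{\dagger} \phi(x,a)},
\]
valid whenever $\phi(x,a)$ lies in the range of $\Sigma$ (which holds $\cD\otimes p$-almost surely); here $\Sigma^{\dagger}$ denotes the pseudo-inverse, handling possible rank-deficiency of $\Sigma$. Consequently, the event $\{\exists f \in \cF : |f(x,a)-\fstar(x,a)|>\Delta,\ \|f-\fstar\|_{\cD,p}\leq\veps\}$ is contained in the event $\{\phi(x,a)^\top \Sigma^{\dagger} \phi(x,a) > \Delta^2/\veps^2\}$.

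Applying Markov's inequality and the trace identity gives
\[
\bbP_{\cD,p}\!\left(\phi(x,a)^\top \Sigma^{\dagger} \phi(x,a) > \Delta^2/\veps^2\right) \leq \frac{\veps^2}{\Delta^2}\, \E_{\cD,p}\!\left[\mathrm{tr}\big(\Sigma^{\dagger}\phi(x,a)\phi(x,a)^\top\big)\right] = \frac{\veps^2}{\Delta^2}\, \mathrm{tr}(\Sigma^{\dagger}\Sigma) \leq \frac{\veps^2}{\Delta^2}\cdot d,
\]
since $\mathrm{tr}(\Sigma^{\dagger}\Sigma) = \mathrm{rank}(\Sigma) \leq d$. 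Multiplying by $\Delta^2/\veps^2$ and taking the supremum over $\Delta' > \Delta$, $\veps' > \veps$, and $p$ yields $\ValueDis{\Delta}{\veps}\leq d$.

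For the generalized linear case with $f = \sigma(\langle w, \phi\rangle)$, let $\theta = w - w^\star$. By the mean value theorem applied to $\sigma$, for each $(x,a)$ there is a point $\xi_{x,a}$ with $\sigma'(\xi_{x,a}) \in [c_l, c_u]$ such that $f(x,a) - \fstar(x,a) = \sigma'(\xi_{x,a})\langle \theta, \phi(x,a)\rangle$. This yields the sandwich $c_l|\langle \theta, \phi(x,a)\rangle|\leq |f(x,a)-\fstar(x,a)|\leq c_u|\langle \theta,\phi(x,a)\rangle|$ pointwise, and in particular $c_l^2\, \theta^\top \Sigma \theta \leq \|f-\fstar\|_{\cD,p}^2$. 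Therefore the constraint $|f-\fstar|>\Delta$ implies $|\langle \theta,\phi\rangle|>\Delta/c_u$, and $\|f-\fstar\|_{\cD,p}\leq\veps$ implies $\sqrt{\theta^\top\Sigma\theta}\leq \veps/c_l$. Applying the linear-case bound with the rescaled parameters $\Delta' = \Delta/c_u$ and $\veps' = \veps/c_l$ gives probability at most $(\veps'/\Delta')^2 \cdot d = (c_u/c_l)^2(\veps/\Delta)^2\, d$, which yields $\ValueDis{\Delta}{\veps}\leq (c_u/c_l)^2\, d$ after multiplying by $\Delta^2/\veps^2$.

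The only mildly subtle point is the handling of a possibly singular $\Sigma$, which is why I work with the Moore--Penrose pseudo-inverse and use $\mathrm{tr}(\Sigma^{\dagger}\Sigma) = \mathrm{rank}(\Sigma)\leq d$; everything else is a direct calculation. No prior results from the paper are needed beyond the definitions of $\cF$ and $\ValueDisA$.
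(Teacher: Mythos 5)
Your proof is correct and follows essentially the same route as the paper's: both reduce to the deviation parameter $\theta = w - w^\star$, control $|\langle\theta,\phi(x,a)\rangle|$ via the elliptic-norm duality $\sup_{w:\,w^\top\Sigma w\le\veps^2}|\langle w,\phi\rangle| = \veps\sqrt{\phi^\top\Sigma^\dagger\phi}$ (you phrase it as the Cauchy--Schwarz inequality rather than the equality), then apply Markov and the trace identity $\mathrm{tr}(\Sigma^\dagger\Sigma)\le d$, with the pseudo-inverse and the observation $\phi(x,a)\in\mathrm{Im}(\Sigma)$ a.s. handling degeneracy. The GLM reduction via the mean-value-theorem sandwich and the rescaling $\Delta'=\Delta/c_u$, $\veps'=\veps/c_l$ also matches the paper's argument.
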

More generally---as we show in the next section---the \valuedis is
always bounded by the so-called eluder dimension for $\cF$,
allowing us to leverage existing results for this parameter
\citep{russo2013eluder}. However, the \valuedis can be significantly
tighter because---among other reasons---it can leverage benign distributional
structure. %

We now show that \mainalg can simultaneously attain the minimax regret
bound and adapt to the \valuedis.
\begin{theorem}
  \label{thm:value_disagreement_ub}
  For any instance, \pref{alg:main} with
  \optiontwo ensures that
  \begin{equation}
    \label{eq:value_ub_main}
\RegExp=\bigoht(1)\cdot\min\crl*{
      \sqrt{\K{}T\log\abs{\cF}}, \frac{\ValueDis{\Delta/2}{\veps_T}A\log\abs{\cF}}{\Delta}}+\bigoh(1),
\end{equation}
where $\veps_T\propto\sqrt{{\log(\abs*{\cF}T)}/{T}}$.
\end{theorem}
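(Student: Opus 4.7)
The plan is to extend the epoch-based analysis of \falcon \citep{simchi2020bypassing} to accommodate \mainalg's value-based adaptive scale factor $\lambda_m$ under \optiontwo. First, I would establish, via standard Freedman-based square-loss concentration, the good event that for every epoch $m$ we have $\fstar\in\cF_m$ and every $f\in\cF_m$ satisfies
\[
\En_{(x,a)\sim\cD\otimes p_{\mathrm{hist}}}\brk*{(f(x,a)-\fstar(x,a))^{2}}\leq \veps_{m}^{2}\lesssim\frac{\log(\abs{\cF}/\delta)}{t_{m-1}},
\]
where $p_{\mathrm{hist}}$ denotes the algorithm's historical mixed sampling distribution. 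The sample-splitting schedule $\{t_m\}$ and smoothing parameter $\mu_m$ ensure that the empirical quantity used to define $\lambda_m$ concentrates around $\En_{\cD}[w(x;\cF_m)]$ at rate $1/\sqrt{n_{m-1}}$.

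Next, I would decompose the per-epoch regret according to $\lambda_m\in\{0,1\}$. Since $\fstar\in\cF_m$, the optimal action $\pistar(x)$ always lies in $\cA_t$, so: (i) on Type A epochs ($\lambda_m=1$, $\gamma_m\propto\sqrt{An_{m-1}/\log\abs{\cF}}$) the usual \igw analysis of \falcon yields per-epoch regret $\bigoht(\sqrt{An_m\log\abs{\cF}})$; (ii) on Type B epochs ($\lambda_m=0$, $p_t=\unif(\cA_t)$) the ``confidence width dominates regret'' argument, using only $\fstar\in\cF_m$, gives per-round regret at most $2w(x_t;\cF_m)$ and hence per-epoch regret at most $2n_m\En_{\cD}[w(x;\cF_m)]+\bigoh(\sqrt{n_m\log(1/\delta)})$. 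The minimax $\sqrt{AT\log\abs{\cF}}$ bound then follows immediately: Type A epochs contribute $\bigoht(\sqrt{An_m\log\abs{\cF}})$ each, and Type B epochs are triggered only when $\En_{\cD}[w(x;\cF_m)]<\sqrt{AT\log\abs{\cF}}/n_{m-1}$, so each Type B epoch contributes $\bigoh(\sqrt{AT\log\abs{\cF}})$. Summing over $M=\lceil\log_2 T\rceil$ epochs yields the first term of \pref{eq:value_ub_main}.

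For the instance-dependent bound under gap $\Delta$, the key step is the lemma
\[
\En_{\cD}[w(x;\cF_m)] \lesssim \frac{A\cdot\ValueDis{\Delta/2}{\veps_m}\cdot\veps_{m}^{2}}{\Delta}.
\]
Its proof has three sub-steps. (a) Since $\fstar\in\cF_m$, if $\abs{\cA(x;\cF_m)}>1$ there is $f'\in\cF_m$ and $a'\neq\pistar(x)$ with $\pi_{f'}(x)=a'$; combining $f'(x,a')\geq f'(x,\pistar(x))$ with $\fstar(x,\pistar(x))-\fstar(x,a')\geq\Delta$ forces $\abs{f'-\fstar}\geq\Delta/2$ at either $(x,\pistar(x))$ or $(x,a')$. (b) Both of these actions lie in $\cA_t$, on which $p_t(\cdot\mid{}x)\geq 1/A$; applying the definition of $\ValueDisA$ with action distribution $p=p_t$ and the $\veps_m^{2}$ bound from the good event yields $\bbP_{\cD}(\abs{\cA_t}>1)\lesssim A\cdot\ValueDis{\Delta/2}{\veps_m}\cdot\veps_{m}^{2}/\Delta^{2}$. (c) Since $\abs{\cA_t}>1$ implies the pointwise width $W(x)\ldef\max_a\sup_{f\in\cF_m}\abs{f-\fstar}(x,a)\geq\Delta/2$, the second-moment inequality $\En[W\cdot\indic\{W\geq\Delta/2\}]\leq 2\En[W^{2}]/\Delta$ recovers one factor of $\Delta$, converting $1/\Delta^{2}$ into $1/\Delta$. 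Plugging the width bound into the Type B regret formula (and the analogous scale-sensitive bound into the \igw prediction-error term for Type A, which is triggered only when $\En_{\cD}[w]$ is above threshold) gives per-epoch regret $\bigoht(A\cdot\ValueDis{\Delta/2}{\veps_T}\log\abs{\cF}/\Delta)$; summing over epochs and taking the minimum with the minimax rate completes the proof.

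The main obstacle is sub-step (b): transferring the version-space $L_2$ guarantee, which is naturally stated with respect to the historical sampling distribution, into a bound under the current-epoch distribution $p_t$ without blowing up $\veps_m$. The adaptive trigger in \optiontwo together with the sample-splitting schedule in the definition of $\lambda_m$ are designed precisely to make this inversion go through, losing only an $\bigoh(A)$ factor through the \igw minimum-probability property on $\cA_t$.
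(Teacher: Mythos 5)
Your two-regime decomposition (Type~A / Type~B epochs), the ``width dominates per-round regret'' argument, and the target key lemma bounding $\En_{\cD}[w(x;\cF_m)]$ in terms of $\ValueDisA$ all match the structure of the paper's proof: the minimax half corresponds to \pref{lm:minimax-v}, the width-domination step is \pref{lm:delta-bounds-true}, and your key lemma is the paper's \pref{lm:expected-delta}.

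However, there is a genuine gap in sub-step (b), and it is precisely the part you flag as the ``main obstacle'' and then hand-wave. The version-space $L_2$ guarantee from the concentration event is with respect to the \emph{previous epoch's} sampling distribution $p_{m-1}$, and the \igw{} minimum-probability property on the \setname{} gives only $p_{m-1}(a\mid{}x)\geq\frac{1}{\abs{\cA(x;\cF_{m-1})}+\gamma_{m-1}}$, not $\frac{1}{A}$. The paper's \pref{lm:expected-delta} therefore carries a factor $(A+\gamma_{m-1})$, and $\gamma_{m-1}$ can be as large as $\Theta(\sqrt{An_{m-1}/\log\abs{\cF}})$ --- which would obliterate the $\frac{\theta A\log\abs{\cF}}{\Delta}$ rate. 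Your lemma with a bare $A$ is only valid when $\gamma_{m-1}=0$. The missing ingredient is the induction in Case~1 of the paper's \pref{lm:final-v}: the base case $\lambda_1=0$ holds by fiat in \optiontwo; the inductive step uses $\gamma_{m-1}=0$ to make $(A+\gamma_{m-1})=A$ in \pref{lm:expected-delta}, concludes $w_m$ is below the trigger threshold, invokes the concentration event $\cE_{\mathrm{w}}$ to transfer this to the \emph{empirical} quantity $\widehat{w}_m$, and thereby deduces $\lambda_m=0$, hence $\gamma_m=0$. Without this recursion, the instance-dependent bound does not close. Your ``main obstacle'' paragraph also misdiagnoses the difficulty as a current-vs-historical distribution mismatch, when it is really the $\gamma_{m-1}$-dependent minimum-probability floor.

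Two minor points: (i) your sub-step (c) uses a second-moment bound $\En[W\indic\{W\geq\Delta/2\}]\leq 2\En[W^2]/\Delta$; because $\ValueDisA$ is defined via tail probabilities, upgrading to a second moment costs an extra $\log(1/\Delta)$ factor, whereas the paper's integration-by-parts ($w_m\leq 2\int_{\Delta/2}^1\Pr(W>\omega)\,d\omega$ followed by $\int_{\Delta/2}^1\omega^{-2}\,d\omega\leq 2/\Delta$) avoids this loss. (ii) In the minimax half, your claim that Type~B epochs contribute $\bigoh(\sqrt{AT\log\abs{\cF}})$ relies on the same sample-splitting concentration that the paper encapsulates in the event $\cE_{\mathrm{w}}$ of \pref{lm:width-approx}; this is stated but should be established, since the trigger compares the \emph{empirical} width $\widehat{w}_m$ against the threshold while the regret is controlled by the \emph{population} width $w_m$.
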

This rate improves over the minimax rate asymptotically whenever
$\ValueDis{\Delta/2}{\veps}=o(1/\veps)$, and is logarithmic whenever
$\ValueDis{\Delta/2}{\veps}=\polylog(1/\veps)$.

As with our policy disagreement-based result, we complement \pref{thm:value_disagreement_ub}
with a lower bound. To state the result, we define
\begin{equation}
  \label{eq:value_minimax}
\ValueMinimax=
\inf_{\alg}\sup_{(\xdist,\rdist)}\crl*{\En\brk*{\Reg}\mid{} \fstar\in\cF,\; \ValueDis{\Delta}{\veps}\leq{}\theta},
\end{equation}
which is the value-based analogue of the constrained minimax
complexity \pref{eq:policy_minimax}. Our main lower bound is as follows.
\begin{theorem}
  \label{thm:value_disagreement_lb}
    Let parameters $\K{},F\in\bbN$ and $\Delta\in(0,1/4)$ be given. For any $\veps\in(\Delta,1)$ and $0\leq{}\theta\leq{}\min\crl*{\Delta^{2}/\veps^{2},e^{-2}F/A}$,  there exists a
  function class $\cF:\cX\to\cA$ with $\K$ actions and $\abs*{\cF}\leq{}F$ such
  that:
  \begin{itemize}
  \item All $f\in\cF$ have uniform gap $\Delta$.
  \item The constrained minimax complexity is lower bounded by

    \begin{equation}
        \ValueMinimax[\Delta/2]
        =\bigomt(1)\cdot{}\min\crl*{\frac{\veps^{2}}{\Delta{}}T,
          \frac{\theta\K{}\log{}F}{\Delta}},\label{eq:value_disagreement_lb_full}
      \end{equation}
      where $\bigomt(\cdot)$ hides factors logarithmic in $A$ and $\Delta/\veps$.
  \end{itemize}
\end{theorem}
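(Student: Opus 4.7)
The argument closely follows the proof of Theorem~\ref{thm:disagreement_lb}, with two modifications: the construction of $\cF$ is adapted to control the value disagreement coefficient $\ValueDisA$ (rather than $\PolicyDisA$), and the ``commit'' regret term $\veps^{2}T/\Delta$ (rather than $\veps\Delta T$) reflects the scale-sensitive relationship between $L_2$ error and regret under a uniform gap.

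First, I would construct $\cF$. Partition $\cX$ into $N$ disjoint cells $X_1,\dots,X_N$ of equal mass $q$ under $\cD$, plus a trivial region where all functions agree; choose $q$ and $N$ so that $q \asymp A\veps^{2}/\Delta^{2}$ and $Nq \asymp \theta\veps^{2}/\Delta^{2}$. Each function $f_\alpha \in \cF$ is indexed by an assignment $\alpha \in [A]^{N}$: on $X_i$, set $f_\alpha(x,\alpha_i) = 1/2 + \Delta/2$ and $f_\alpha(x,a) = 1/2 - \Delta/2$ for $a \neq \alpha_i$. This guarantees uniform gap $\Delta$. To satisfy $|\cF|\le F$, restrict to a symmetric subfamily of assignments differing from a fixed baseline on a bounded number of cells so that the total count is at most $F$.

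Next, I would verify that $\ValueDis{\Delta/2}{\veps} \le \theta$ for every $\fstar \in \cF$. Since $f_\alpha(x,a) \in \{1/2 \pm \Delta/2\}$, pointwise disagreement at scale $\Delta/2$ coincides with disagreement of exactly $\Delta$. For $\fstar = f_{\alpha^{\star}}$ and any action distribution $p$, each alternative satisfies
\[
\|f_\alpha - \fstar\|_{\cD,p}^{2} = \Delta^{2} q \sum_{i:\,\alpha_i \neq \alpha_i^{\star}} \bigl(p(\alpha_i \mid x_i) + p(\alpha_i^{\star} \mid x_i)\bigr),
\]
so the budget $\|f_\alpha-\fstar\|_{\cD,p}^{2} \le \veps^{2}$ restricts the cells on which $\alpha$ and $\alpha^{\star}$ can differ. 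Tracing the worst-case $p$ (essentially uniform over actions, possibly reweighted toward disagreement) shows the disagreement probability is $O(Nq)$, yielding $\ValueDis{\Delta/2}{\veps} = O(\Delta^{2}/\veps^{2} \cdot Nq) = O(\theta)$ after calibration of constants.

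For the regret lower bound, I would apply a two-part argument analogous to Theorem~\ref{thm:disagreement_lb}. The exploration lower bound $\theta A \log F/\Delta$ follows from a standard KL-divergence argument (Fano or Le Cam) over the family of instances in $\cF$: distinguishing instances requires sufficiently many plays per active cell, and since each cell is visited $\sim Tq$ times with per-cell MAB-style regret $\Omega(A\log F/\Delta)$, aggregating over active cells yields $\Omega(\theta A\log F/\Delta)$. The commit bound $\veps^{2}T/\Delta$ follows because, under uniform gap $\Delta$, Markov's inequality implies that any policy derived from a value function with $L_2(\cD)$-error at most $\veps$ has per-round regret at most $O(\veps^{2}/\Delta)$; without enough samples to explore, this is the binding constraint. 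The main obstacle is establishing $\ValueDis{\Delta/2}{\veps} \le \theta$ uniformly over all $\fstar \in \cF$ under the worst-case $p$: the supremum over $p$ can concentrate mass on disagreement regions to inflate the probability, but this simultaneously shrinks the $\veps$-ball and prunes admissible alternatives; calibrating $q$, $N$, and the combinatorial structure of $\cF$ to balance this tradeoff requires care. A secondary difficulty is matching constants at the boundary $\theta \asymp F/A$, where the construction degenerates and the lower bound transitions between the exploration and commit regimes.
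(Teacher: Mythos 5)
You take a genuinely different route from the paper, but the sketch contains an apparent inconsistency and an acknowledged gap, so it does not constitute a complete proof.

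The paper derives \pref{thm:value_disagreement_lb} essentially for free from the construction already used for \pref{thm:disagreement_lb}. It instantiates that construction with the per-block bad-context probability set to $\veps_0 \ldef \veps^{2}/\Delta^{2}$, which is admissible precisely because the theorem assumes $\theta\leq\Delta^{2}/\veps^{2}=1/\veps_0$. The lower bound already established in the proof of \pref{thm:disagreement_lb},
$\En\brk*{\Reg}\geq\bigomt(1)\cdot\min\crl*{\veps_0\Delta T,\ \theta \K\log F/\Delta}$,
becomes exactly $\bigomt(1)\cdot\min\crl*{\veps^{2}T/\Delta,\ \theta \K\log F/\Delta}$ under this substitution. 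The only new work is verifying $\ValueDis{\Delta/2}{\veps}\leq\theta$, and the paper does this with no balancing argument over $p$: (i) all value functions in the construction agree on the high-mass contexts $x\ind{i,0}$, so for any $p$, $\bbP_{\cD,p}\prn*{\exists f\in\cF: \abs{f(x,a)-\fstar(x,a)}>\Delta/2}\leq k\veps_0$; (ii) no two functions differ by more than $\Delta$ pointwise, so the supremum over $\Delta'\geq\Delta/2$ contributes only for $\Delta'<\Delta$; hence $\ValueDis{\Delta/2}{\veps}\leq\frac{\Delta^{2}}{\veps^{2}}\cdot k\veps_0 = k\leq\theta$. The point is that the $p$-worst-case tradeoff you worry about never arises, because the probability bound is purely a statement about the $\cD$-mass of the region on which $\cF$ is not a singleton.

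Your alternative construction plausibly admits a similar $p$-free bound via a trivial region, but the parameter calibration you give appears inconsistent: with $q\asymp \K\veps^{2}/\Delta^{2}$ and $Nq\asymp\theta\veps^{2}/\Delta^{2}$ you obtain $N\asymp\theta/\K$ active cells, and a per-cell contribution of $\Omega(\K\log F/\Delta)$ summed over $N$ cells gives only $\Omega(\theta\log F/\Delta)$, missing a factor of $\K$ relative to the target $\theta\K\log F/\Delta$. You also explicitly flag the worst-case-$p$ verification as an obstacle that ``requires care,'' which is exactly the step the paper's route makes trivial. I would encourage you to recognize that no new construction is needed: choosing $\veps_0 = \veps^{2}/\Delta^{2}$ in the \pref{thm:disagreement_lb} construction and verifying the value disagreement coefficient directly is both shorter and eliminates the calibration headaches you identify.
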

As with \pref{thm:disagreement_lb}, the lower bound
\pref{eq:value_disagreement_lb_full} has a simple interpretation: The
term $\frac{\veps^{2}}{\Delta{}}T$ is an upper bound on the regret of any policy
$\pi_f$ for which the predictor $f$ is within $L_2$-radius $\veps$ of
$\fstar$ (under gap $\Delta$), and the term
$\frac{\ValueDis{\Delta/2}{\veps}\K{}\log{}\abs*{\cF}}{\Delta}$ is
the exploration cost to find such a predictor.

The most important implication of \pref{thm:value_disagreement_lb} is as
follows: Suppose that
$\ValueDis{\Delta/2}{\veps}=\polylog(1/\veps)$. Then by taking
$\veps_T\propto({A\log{}\abs*{\cF}}/{T})^{\frac{1}{2}+\rho}$ for
any $\rho>0$, we conclude that for sufficiently
large $T$, any algorithm on the lower bound instance must have
 \[
   \En\brk{\Reg} = \bigomt(1)\cdot
      \frac{\ValueDis{\Delta/2}{\veps_T}\K{}\log{}\abs*{\cF}}{\Delta}.
    \]
    This implies that the instance-dependent term in
        \pref{eq:value_ub_main} is nearly optimal in this regime, in that the
        parameter $\veps_T=\sqrt{{\log\abs*{\cF}}/{T}}$ used by
        the algorithm can at most be increased by a
        sub-polynomial factor. In general, however,
        \pref{eq:value_ub_main} does not exactly match the tradeoff in
        \pref{eq:value_disagreement_lb_full}, but we suspect that
        \mainalg can be improved to close the gap.\footnote{With
          a-priori knowledge of $\ValueDisA$, this is fairly straightforward.}

\subsection{Distribution-Free Guarantees}
\label{sec:star}
The disagreement coefficients introduced in the previous section
depend strongly on the context distribution $\xdist$. On one hand,
this is a desirable feature, since it means we may pay very little to
adapt to the gap $\Delta$ for benign distributions. On the other hand,
in practical applications, we may not have prior knowledge of how favorable
$\cD$ is, or whether we should expect to do any better than the minimax
rate. A natural question then is for what function classes we can
guarantee logarithmic regret for \emph{any} distribution $\cD$. An
important result of \cite{hanneke2015minimax} shows that in the binary
setting, the \policydis is always bounded by a
combinatorial parameter called the (policy) \emph{star number}. We give
distribution-free results based on two multiclass generalizations of
this parameter
\begin{definition}[Policy star number (weak)]
  For any policy $\pistar$ and policy class $\Pi$, let the weak \policystar
  $\PolicyStarWL$ denote the largest number $m$ such that there
  exist contexts $x\ind{1},\ldots,x\ind{m}$ and policies
  $\pi\ind{1},\ldots,\pi\ind{m}$ such that for all $i$,
  \[
    \pi\ind{i}(x\ind{i})\neq{}\pistar(x\ind{i}),\mathand\pi\ind{i}(x\ind{j})=\pistar(x\ind{j})\quad\forall{}j\neq{}i.
  \]
\end{definition}
\begin{definition}[Policy star number (strong)]
\label{def:policy_star_strong}
    For any policy $\pistar$ and policy class $\Pi$, let the strong
    policy star number
  $\PolicyStarL$ denote the largest number $m$ such that there
  exist context-action pairs $(x\ind{1},a\ind{1}),\ldots,(x\ind{m},a\ind{m})$ and policies
  $\pi\ind{1},\ldots,\pi\ind{m}$ such that for all $i$,
  \[
\pi\ind{i}(x\ind{i}) = a\ind{i}\neq{}\pistar(x\ind{i}),\mathand\pi\ind{i}(x\ind{j})=\pistar(x\ind{j})\quad\forall{}j\neq{}i:x\ind{j}\neq{}x\ind{i}.
  \]
\end{definition}
These definitions are closely related: It is simple to see that
\begin{equation}
  \PolicyStarWL \leq{} \PolicyStarL \leq{}
  (\K-1)\cdot\PolicyStarWL,\label{eq:policy_star_weak_strong}
\end{equation}
and that both of these inequalities can be tight in the worst case. To
obtain distribution-free bounds based on the star number, we
recall the following key result of \cite{hanneke2015minimax}.\footnote{Technically, the original theorem in \cite{hanneke2015minimax}
    only holds the binary case, but the multiclass case here follows
    immediately by applying the theorem with the collection of binary
    classifiers $\cH=\crl*{x\mapsto\indic\crl*{\pi(x)\neq{}\pistar(x)}\mid{}\pi\in\Pi}$.}
\begin{theorem}[Star number bounds disagreement coefficient \citep{hanneke2015minimax}]
  \label{thm:disagreement_star}
For all policies $\pistar$,
\begin{equation}
  \label{eq:disagreement_star}
    \sup_{\cD}\sup_{\veps>0}\PolicyDisL{\veps} \leq{} \PolicyStarWL.
  \end{equation}
\end{theorem}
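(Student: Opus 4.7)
The plan is to carry out the reduction suggested in the footnote and then invoke Hanneke and Yang's binary star-number bound as a black box. The key observation is that the policy disagreement coefficient depends on $\Pi$ and $\pistar$ only through the indicator ``$\pi(x)\neq\pistar(x)$''. So define the induced class of binary classifiers
\[
\cH = \crl*{h_\pi : x\mapsto \indic\crl{\pi(x)\neq\pistar(x)}\mid \pi\in\Pi},
\]
and let $h^\star\equiv 0$, which is the element of $\cH$ corresponding to $\pistar$ itself. The first step is to verify three identifications: (i) $\Pi_\veps = \crl*{\pi:\bbP_\cD(\pi(x)\neq\pistar(x))\leq\veps}$ maps bijectively onto the $L_1(\cD)$-ball $\crl*{h\in\cH:\bbP_\cD(h(x)\neq h^\star(x))\leq\veps}$; (ii) the multiclass disagreement region $\crl*{x:\exists\pi\in\Pi_\veps,\pi(x)\neq\pistar(x)}$ equals the binary disagreement region $\crl*{x:\exists h\in\cH\cap B_\veps(h^\star),\, h(x)\neq h^\star(x)}$; and (iii) the weak \policystar $\PolicyStarWL$ equals the classical (binary) star number of $\cH$ at $h^\star$, because Hanneke's definition asks for points $x\ind{i}$ and classifiers $h\ind{i}\in\cH$ with $h\ind{i}(x\ind{i})=1$ and $h\ind{i}(x\ind{j})=0$ for $j\neq i$, which is exactly the definition in the statement once we identify $h\ind{i}$ with $\pi\ind{i}$.

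Once these three identifications are in place, $\PolicyDisL{\veps}$ coincides with the classical disagreement coefficient of the binary class $\cH$ at $h^\star$ with respect to $\cD$, and the desired inequality is immediate from the binary case of the Hanneke--Yang result (Theorem 7.2 of \cite{hanneke2015minimax}, or more precisely the restatement that for any binary hypothesis class and any reference hypothesis, the disagreement coefficient at every scale and under every distribution is bounded by the star number).

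For completeness, I would also sketch why the binary bound holds, since it is the real content. Fix $\veps>0$ and write $\mathrm{Dis}_\veps$ for the binary disagreement region and $m=\PolicyStarWL$. If $\bbP_\cD(\mathrm{Dis}_\veps)>m\veps$, then by a greedy selection argument one constructs a configuration witnessing a star of size $m+1$: iteratively pick $x\ind{i}$ from $\mathrm{Dis}_\veps$ together with a classifier $h\ind{i}\in\cH\cap B_\veps(h^\star)$ with $h\ind{i}(x\ind{i})=1$, then delete from $\mathrm{Dis}_\veps$ the set $\crl{x:h\ind{i}(x)=1}$, which has measure at most $\veps$. Because the total mass is strictly greater than $m\veps$, this procedure survives for $m+1$ rounds; by construction $h\ind{i}(x\ind{j})=0$ for $j<i$, and after a symmetrization/reordering step one obtains an $(m+1)$-star, contradicting the definition of the star number. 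Taking the supremum over $\veps$ and $\cD$ completes the bound.

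The main (minor) obstacle is the measure-theoretic greedy selection: one has to be careful about whether a single measurable function $x\mapsto h_x$ with $h_x(x)=1$ can be chosen, and about whether $h\ind{i}(x\ind{j})=0$ for $j\neq i$ (not merely $j<i$); the latter is handled by the standard observation that in the definition of the star number one may freely relabel indices. The reduction itself, and the identifications (i)--(iii), are straightforward once one unpacks the definitions.
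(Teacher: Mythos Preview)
Your primary argument---reduce to the binary class $\cH=\crl*{x\mapsto\indic\crl*{\pi(x)\neq\pistar(x)}}$, verify that the policy disagreement coefficient and the weak policy star number coincide with their binary counterparts for $\cH$ at $h^\star\equiv 0$, and then cite Hanneke--Yang---is correct and is precisely what the paper does (the paper gives no proof beyond the footnote describing this reduction).

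However, your ``for completeness'' sketch of the binary bound has a real gap. The greedy deletion argument you describe yields, for each $i$, only the one-sided condition $h\ind{i}(x\ind{j})=0$ for $j>i$ (since $x\ind{j}$ is chosen after removing $\crl{x:h\ind{i}(x)=1}$). This is the \emph{eluder} structure, not the star structure, and ``relabeling indices'' cannot upgrade one to the other: a permutation of $\brk*{m+1}$ applied simultaneously to the points and classifiers sends a lower-triangular pattern to a conjugate-permuted lower-triangular pattern, not to a diagonal one. Concretely, if the greedy happens to produce $h\ind{i}(x\ind{j})=1$ for all $j\leq i$ (which is allowed by your construction), no relabeling yields a star of size larger than $1$. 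This is exactly the distinction the paper draws between $\PolicyStarL$ and $\PolicyEluderL$ in \pref{eq:policy_star_eluder} and \pref{prop:eluder_star_separation}: your greedy argument at best bounds the disagreement coefficient by the policy eluder dimension, not the star number. The actual Hanneke--Yang proof uses a different (and more delicate) construction; if you want a self-contained argument you should consult their paper directly rather than rely on the greedy sketch.
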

This result immediately implies that \mainalg enjoys logarithmic
regret for any function class with bounded \policystar.
\begin{corollary}[Distribution-free bound for \mainalg]
  For any function class $\cF$, \mainalg with \optionone has
  \begin{equation}
    \En\brk*{\Reg} =
    \bigoht\prn*{\frac{\PolicyStarWL\cdot{}\K{}\log\abs*{\cF}}{\Delta}}.\label{eq:star_ub_main}
  \end{equation}
\end{corollary}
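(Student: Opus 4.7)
The plan is to simply chain together the instance-dependent regret guarantee in \pref{thm:disagreement_ub} with the star-number bound on the disagreement coefficient in \pref{thm:disagreement_star}. There is really no new content beyond a single substitution, so the main work is bookkeeping to verify that the optimization over $\veps$ collapses correctly once the disagreement coefficient is replaced by a quantity that does not depend on $\veps$.

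First, I would invoke the gap-form of \pref{thm:disagreement_ub}, which (under the uniform-gap assumption $\Delta$) states
\[
\En\brk{\Reg} = \bigoht(1)\cdot\min_{\veps>0}\max\crl*{\veps\Delta T,\;\frac{\PolicyDis{\veps}\cdot A\log\abs{\cF}}{\Delta}} + \bigoht(1).
\]
Next, I would apply \pref{thm:disagreement_star}, which asserts $\PolicyDisL{\veps}\leq\PolicyStarWL$ uniformly in $\cD$ and $\veps>0$. Substituting this pointwise bound inside the $\min$--$\max$ yields
\[
\En\brk{\Reg} = \bigoht(1)\cdot\min_{\veps>0}\max\crl*{\veps\Delta T,\;\frac{\PolicyStarWL\cdot A\log\abs{\cF}}{\Delta}} + \bigoht(1).
\]

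Now the second term no longer depends on $\veps$, so the outer minimum is achieved by choosing $\veps$ small enough that $\veps\Delta T$ is dominated by the star-number term. Concretely, selecting
\[
\veps = \frac{\PolicyStarWL\cdot A\log\abs{\cF}}{\Delta^{2} T}
\]
equates the two arguments of the $\max$ and gives the claimed bound
\[
\En\brk{\Reg} = \bigoht\prn*{\frac{\PolicyStarWL\cdot A\log\abs{\cF}}{\Delta}}.
\]
(If this choice is not in the feasible range $\veps>0$, i.e., if $T$ is so small that $\veps>1$, the bound is trivial from $\reg\leq T$ absorbed into the $\bigoht$.) The only step that requires any care is confirming that \pref{thm:disagreement_star} indeed applies to the multiclass policy class $\Pi$ induced by $\cF$; since the algorithm and the disagreement coefficient in \pref{eq:policy_disagreement} are stated for the multiclass setting while the star-number theorem is originally binary, one should, as the footnote following \pref{thm:disagreement_star} suggests, reduce to binary classifiers $\indic\{\pi(x)\neq\pistar(x)\}$. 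This reduction is routine and is the only ``obstacle,'' and it is already essentially written down in the paper's footnote.
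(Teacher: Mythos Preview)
Your proposal is correct and matches the paper's approach exactly: the paper states the corollary as an immediate consequence of substituting the bound $\PolicyDisL{\veps}\le\PolicyStarWL$ from \pref{thm:disagreement_star} into \pref{thm:disagreement_ub}, which is precisely what you do. Your added bookkeeping on choosing $\veps$ and the remark about the multiclass-to-binary reduction (already handled by the paper's footnote) are fine and do not deviate from the intended argument.
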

One slightly unsatisfying feature of our lower bounds based on the
disagreement coefficient
(\pref{thm:disagreement_lb}/\pref{thm:value_disagreement_lb}) is that they are worst-case in nature, and
rely on an adversarially constructed policy class. Our next theorem shows
that \pref{eq:star_ub_main} is near-optimal for
\emph{any} policy class $\Pi$ (albeit, in the worst case over all
value function classes $\cF$ inducing $\Pi$). This means that if we
take the policy class $\Pi$ as a given rather than the value function
class $\cF$, bounded \policystar is both necessary and sufficient for
logarithmic regret.
\begin{theorem}
  \label{thm:star_lb_simple}
Let a policy class $\Pi$, $\pistar\in\Pi$,  and gap $\Delta\in(0,1/8)$ be given. Then
there exists a value function class $\cF$ such that
\begin{enumerate}
\item $\Pi=\crl*{\pi_f\mid{}f\in\cF}$, and in particular some $\fstar\in\cF$ has
$\pistar=\pi_{\fstar}$.
  \item Each $f\in\cF$ has uniform
  gap $\Delta$.
\item For any algorithm with
  $\En\brk*{\Reg}\leq{}\frac{\Delta{}T}{16\PolicyStarL}$ for all
  instances realizable by $\cF$, there exists an instance with
  $\fstar$ as the Bayes reward function such that
  \begin{equation}
    \label{eq:star_lb}
    \En\brk*{\Reg} =
    \bigom\prn*{\frac{\PolicyStarL}{\Delta}}.
  \end{equation}
\end{enumerate}
\end{theorem}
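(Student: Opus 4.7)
The plan is to build the hard function class $\cF$ using the strong star witnesses of \pref{def:policy_star_strong}, augment it with ``confuser'' value functions that force the algorithm to pull each star action $a^{(i)}$ many times under the null instance, and then derive the regret lower bound by a Bretagnolle--Huber (BH) change-of-measure against each confuser, summed across $m = \PolicyStarL$. Concretely, with $(x^{(i)}, a^{(i)}, \pi^{(i)})$ denoting the strong star witnesses for $i\in[m]$, take $\cD$ to be uniform over the distinct contexts in $\{x^{(1)},\ldots,x^{(m)}\}$ and put three families into $\cF$: (i) $\fstar$ with $\fstar(x,\pistar(x))=\tfrac12$ and $\fstar(x,a)=\tfrac12-\Delta$ elsewhere; (ii) for each $i$, the standard alternative $f^{(i)}$ that agrees with $\fstar$ except at $x^{(i)}$, where it swaps the values at $\pistar(x^{(i)})$ and $a^{(i)}$, inducing $\pi^{(i)}$ with gap $\Delta$; (iii) for each $i$, a confuser $h^{(i)}$ that agrees with $\fstar$ at every pair except $(x^{(i)}, a^{(i)})$, where $h^{(i)}(x^{(i)}, a^{(i)}) = \tfrac12 + \Delta$. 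Crucially, $h^{(i)}$ matches $\fstar$ on $\pistar(x^{(i)})$ while making $a^{(i)}$ strictly best by $\Delta$, so $h^{(i)}$ induces $\pi^{(i)}$ with gap $\Delta$ (extending off the support of $\cD$ to match $\pi^{(i)}$ exactly is harmless). Finally add one gap-$\Delta$ function $f_\pi$ for each remaining $\pi\in\Pi$ so that $\{\pi_f:f\in\cF\}=\Pi$.

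For the lower bound, I would work with Bernoulli rewards and argue by contradiction. Assume $\En^f[\Reg]\leq\Delta T/(16m)$ for every $f\in\cF$ and also $\En^{\fstar}[\Reg] < cm/\Delta$ for a small constant $c$; these two bounds are jointly binding exactly in the regime $T\geq 16m^2/\Delta^2$, where the theorem has content. The key structural observation is that $\fstar$ and $h^{(i)}$ differ only on the single pair $(x^{(i)}, a^{(i)})$, so the KL chain rule gives
\[
D(\Pr^{\fstar}\,\|\,\Pr^{h^{(i)}}) \;\leq\; C\Delta^2\cdot\En^{\fstar}\!\brk{N_{a^{(i)}}^{x^{(i)}}},
\]
where $N_a^x$ counts pulls of $a$ at $x$ and $C$ comes from $\kl{\Ber(\tfrac12-\Delta)}{\Ber(\tfrac12+\Delta)}$. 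Taking $E_i=\{N_{a^{(i)}}^{x^{(i)}}\geq V_{x^{(i)}}/2\}$, a Markov argument under $\fstar$ (using the contradiction hypothesis together with $T\geq 16m^2/\Delta^2$) drives $\Pr^{\fstar}[E_i]\leq 1/8$, while a reverse-Markov argument under $h^{(i)}$ (using that each non-$a^{(i)}$ pull at $x^{(i)}$ costs at least $\Delta$ of regret under $h^{(i)}$) drives $\Pr^{h^{(i)}}[E_i^c]\leq 1/8$. Applying BH then gives $D(\Pr^{\fstar}\,\|\,\Pr^{h^{(i)}})\geq\log 2$ and hence $\En^{\fstar}[N_{a^{(i)}}^{x^{(i)}}]\geq\Omega(1/\Delta^2)$ per $i$. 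Summing,
\[
\En^{\fstar}[\Reg] \;\geq\; \Delta\sum_{i=1}^m \En^{\fstar}\!\brk{N_{a^{(i)}}^{x^{(i)}}} \;\geq\; \Omega(m/\Delta),
\]
which contradicts the assumption and yields \pref{eq:star_lb}.

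The main obstacle will be the design and justification of the confusers $h^{(i)}$; without them, an algorithm could distinguish $\fstar$ from each $f^{(i)}$ purely by observing the mean shift at $(x^{(i)}, \pistar(x^{(i)}))$ from $\tfrac12$ to $\tfrac12-\Delta$, and those pulls cost nothing under $\fstar$, so the null regret could be driven down to $O(\Delta)\ll m/\Delta$. By matching $\fstar$ at $\pistar(x^{(i)})$ while placing the gap at $a^{(i)}$, each $h^{(i)}$ forces the algorithm to pull $a^{(i)}$---the action whose pulls cost $\Delta$ of regret under the null---in order to rule out the $h^{(i)}$ hypothesis. Secondary technicalities to resolve are the randomness of $V_{x^{(i)}}$ (handled by Chernoff concentration, or by absorbing a constant-factor slack into the definition of $E_i$), and the case of repeated contexts in the strong star definition, where two confusers $h^{(i_1)}, h^{(i_2)}$ with $x^{(i_1)}=x^{(i_2)}$ still have disjoint perturbations $a^{(i_1)}\neq a^{(i_2)}$, so the KL chain rule, BH step, and final summation all decouple cleanly across the $m$ star pairs.
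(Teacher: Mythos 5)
Your proposal is correct and lands on essentially the same key idea as the paper: construct ``one-coordinate'' perturbations of $\fstar$ that differ from it only at the single pair $(x^{(i)},a^{(i)})$, so that the KL between instances is governed entirely by pulls of the action $a^{(i)}$ that is $\Delta$-suboptimal under the null. Your ``confusers'' $h^{(i)}$ are precisely (up to an additive shift of $\Delta$) the functions the paper calls $f^{(i)}$ in its construction ($f^{(0)}(x,\pistar(x))=\nicefrac12+\Delta$, $f^{(0)}(x,a)=\nicefrac12$, and $f^{(i)}$ raising only $f^{(i)}(x^{(i)},a^{(i)})$ to $\nicefrac12+2\Delta$); the ``standard alternative'' swap functions you additionally include in $\cF$ are a red herring---they play no role in your own argument, and, as you yourself correctly observe, they would fail to give a useful lower bound since the perturbation they induce at $(x^{(i)},\pistar(x^{(i)}))$ can be detected for free. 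Where you genuinely diverge from the paper is in the information-theoretic machinery: you run a pairwise Bretagnolle--Huber/Le~Cam change of measure against each $h^{(i)}$ plus a contradiction argument, whereas the paper invokes a Fano-with-reverse-KL lemma (\pref{lem:fano}, packaged via \pref{lem:generic_hypothesis_test}) against the whole packing at once. Your route is more elementary, but it forces the extra bookkeeping you flag (the random $V_{x^{(i)}}$, the $T\gtrsim m^2/\Delta^2$ regime assumption, the contradiction framing). The paper's Fano approach is direct---no contradiction, no regime split---and it is the same lemma that in the full version \pref{thm:star_lb} also yields the additive $\log|\cF|$ contribution via the graph dimension construction, which the pairwise argument would not give. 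Modulo those constant-factor details, which you acknowledge, your proof works.
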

This bound scales with the strong variant of the \policydis rather
than the (smaller) weak variant, but does not directly scale with the
number of actions. Hence, the dependence matches the upper bound of
\mainalg in \pref{eq:star_ub_main} whenever the second inequality in
\pref{eq:policy_star_weak_strong} saturates (since $\PolicyStarL$ can
itself scale with the number of actions). We suspect that the lower
bound is tight and that the upper bound can be improved to scale with
$\PolicyStarL$, with no explicit dependence on the number of actions.

Unlike the upper bound \pref{eq:star_ub_main}, the lower bound
\pref{eq:star_lb} does not scale with $\log\abs*{\cF}$. This does not
appear to be possible to resolve without additional assumptions, as
there are classes for which \pref{eq:star_lb} is tight (consider $d$
independent multi-armed bandit problems), as well as classes for which
\pref{eq:star_ub_main} is tight (cf. \pref{thm:disagreement_lb}). Similar issues arise in lower bounds for
active learning \citep{hanneke2015minimax}. However in the full version of \pref{thm:star_lb_simple} (\pref{sec:star_lb}), we are
able to strengthen the lower bound to roughly
$\Omega\prn[\Big]{\frac{\PolicyStarL+\log\abs*{\cF}}{\Delta}}$
for Natarajan classes.

\subsubsection{Scale-Sensitive Guarantees for the Distribution-Free
  Setting}
We now extend our development based on the star number to
give distribution-free upper bounds on the \valuedis. Compared to the
policy-based setting, where we were able to simply appeal to upper
bounds from \cite{hanneke2015minimax}, scale-sensitive analogues of
the star number have not been studied in the literature to our
knowledge. This leads us to introduce the following definition.
\begin{definition}[Value function star number]
  \label{def:value_star}
  Let $\ValueStarCL{\Delta}
  $ be the length of the longest sequence
  of context-action pairs $(x\ind{1},a\ind{1}),\ldots,(x\ind{m},a\ind{m})$ such that for all $i$, there exists $f\ind{i}\in\cF$
  such that
  \[
\abs*{f\ind{i}(x\ind{i},a\ind{i})-\fstar(x\ind{i},a\ind{i})}>\Delta,\quad\text{and}\quad\sum_{j\neq{}i}\prn{f\ind{i}(x\ind{j},a\ind{j})-\fstar(x\ind{j},a\ind{j})}^{2}\leq{}\Delta^{2}.
  \]
  The \valuestar is defined as $\ValueStarL{\Delta_0} \ldef
  \sup_{\Delta>\Delta_0}\ValueStarCL{\Delta}$.
\end{definition}
When the function class $\cF$ is $\crl*{0,1}$-valued, the \valuestar
coincides with the \policystar, i.e. $\ValueStarL{1} =
\PolicyStarShort_{\fstar}(\cF)$. In general though, for a given class
$\cF$, the \policystar for the induced class can be arbitrarily large
compared to the \valuestar.\footnote{Interestingly, this construction
  also shows that in general, the \valuestar for $\cF$ can be
  arbitrarily small compared to the fat-shattering dimension. This is
  somewhat counterintuitive because the star number for a policy class always upper bounds its
  VC dimension.}
\begin{proposition}
  \label{prop:value_policy_star_separation}
  For every $d\in\bbN$, there exists a class $\cF$ and
  $\fstar\in\cF$ such that $\sup_{\Delta}\ValueStarL{\Delta}\leq{}5$
  and $\PolicyStarL\geq{}d$.
\end{proposition}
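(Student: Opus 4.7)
My plan is to exhibit explicitly a class $\cF = \{\fstar\} \cup \{f^{(i)}\}_{i=1}^d$ in which each ``spoiler'' $f^{(i)}$ flips the argmax at a dedicated context $x_i$ yet also disagrees with $\fstar$ \emph{in value} by a constant ($\approx 1/2$) at every other context. This way, any candidate value-star witness sequence must accumulate constant-sized squared deviations off-diagonal, capping its length at an absolute constant, while the induced policies still witness a policy-star sequence of length $d$. Concretely, take $\cX = \{x_1, \dots, x_d\}$, $\cA = \{a_0, a_1\}$, $\fstar(x_i, a_0) = 1$, $\fstar(x_i, a_1) = 0$, and, for a small $\eps \in (0, 1/2)$ chosen later, set $f^{(i)}(x_i, a_0) = 0$, $f^{(i)}(x_i, a_1) = 1$, together with $f^{(i)}(x_j, a_0) = 1/2 + \eps$ and $f^{(i)}(x_j, a_1) = 1/2 - \eps$ for $j \neq i$. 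By construction $\pistar(x_i) = a_0$ for all $i$.

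The policy star lower bound is immediate: with witness pairs $(x_1, a_1), \dots, (x_d, a_1)$ and witness policies $\pi^{(i)} = \pi_{f^{(i)}}$ one checks $\pi^{(i)}(x_i) = a_1 \neq \pistar(x_i)$ and, using $x_j \neq x_i$, $\pi^{(i)}(x_j) = a_0 = \pistar(x_j)$ for every $j \neq i$. This matches \pref{def:policy_star_strong} and yields $\PolicyStarL \geq d$.

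For the value star upper bound, fix any $\Delta > 0$ and any witness sequence $(x^{(k)}, a^{(k)})_{k=1}^m$ with witness functions $g^{(k)} \in \cF$. Since $\fstar$ has zero deviation everywhere, we must have $g^{(k)} = f^{(\ell_k)}$ for some $\ell_k \in [d]$; moreover any repeated pair in the sequence already contributes $> \Delta^2$ to the sum at the repeating indices, so pairs may be taken distinct. The key computation is that $\lvert f^{(\ell)}(x,a) - \fstar(x,a)\rvert$ equals $1$ when $x = x_{\ell}$ and $1/2 - \eps$ otherwise, independent of $a$. If $\Delta < 1/2 - \eps$, every off-$k$ squared deviation is at least $(1/2 - \eps)^2 > \Delta^2$, forcing $m = 1$. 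If $\Delta \geq 1/2 - \eps$ (and necessarily $\Delta < 1$), the large-deviation requirement forces $x^{(k)} = x_{\ell_k}$ for each $k$, while any off-$k$ pair $j$ with $x^{(j)} = x_{\ell_k}$ would contribute $1 > \Delta^2$ to the sum on its own; hence all off-$k$ pairs sit at deviation $1/2 - \eps$, giving $(m-1)(1/2 - \eps)^2 \leq \Delta^2 < 1$ and therefore $m - 1 \leq 1/(1/2 - \eps)^2$. Choosing e.g. $\eps = 1/20$ makes $1/(1/2 - \eps)^2 = 400/81 < 5$, so $m \leq 5$ and $\sup_\Delta \ValueStarL{\Delta} \leq 5$.

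The main subtlety, and the place I would be most careful, is that the witness function $g^{(k)}$ may be chosen independently for each $k$, so the case split has to bound $m$ in the worst case over these per-$k$ choices; beyond that, the only delicate calibration is picking $\eps$ small enough to land the final constant at $5$ rather than a larger number.
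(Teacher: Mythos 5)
Your proof is correct and uses essentially the same construction as the paper's: a reference function $\fstar$ together with $d$ ``spoiler'' functions, each of which flips the argmax at a single dedicated context while deviating from $\fstar$ by a constant fraction of its peak deviation at every other context, so that any value-star witness sequence accumulates off-diagonal squared error too quickly. The only differences are cosmetic: the paper uses values $\crl{0,\Delta/2,\Delta,3\Delta/2}$ (deviations $\Delta$ on the diagonal, $\Delta/2$ off) and so gets a slightly tighter constant, while you fix the scale at $1$ and break the off-diagonal action tie with a small $\eps$; the proof logic, including the elimination of repeated pairs and the case split on $\Delta$, is the same.
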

Generalizing the result of \cite{hanneke2015minimax}, we show that the
\valuestar bounds the \valuedis for all distributions and all scale levels.
\begin{theorem}[Value 
function star number bounds disagreement coefficient]
  \label{thm:disagreement_to_star}
  For any \uniformgc class $\cF$ and $\fstar:\cX\times\cA\to\brk*{0,1}$,
\begin{equation}
  \label{eq:disagreement_to_star}
  \sup_{\cD}\sup_{\veps>0}\ValueDisL{\Delta}{\veps} \leq{} 4
  (\ValueStarL{\Delta})^{2},\quad\forall{}\Delta>0.
\end{equation}
\end{theorem}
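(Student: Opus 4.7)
The proof follows a sampling/probabilistic construction, directly analogous to Hanneke's argument that the policy star number upper-bounds the policy disagreement coefficient (\pref{thm:disagreement_star}). Fix $\cD$, $\veps>0$, and an action-selection kernel $p:\cX\to\Delta(\cA)$; it suffices to bound $(\Delta/\veps)^{2}\bbP_{\cD,p}(D)$ for each $\Delta>0$, where $D := \{(x,a):\exists f\in\cF,\ |f(x,a)-\fstar(x,a)|>\Delta,\ \|f-\fstar\|_{\cD,p}\leq\veps\}$. Write $\mu=\cD\otimes p$ and $P=\mu(D)$, and for each $(x,a)\in D$ select a witness $f_{x,a}\in\cF$ realizing both defining inequalities (measurability addressed below).

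Draw $Z_1,\dots,Z_N\overset{\mathrm{iid}}{\sim}\mu$ with $N$ to be chosen, and consider the random index set
\[
  S = \Bigl\{i\in[N]:\ Z_i\in D \ \mathand\ \textstyle\sum_{j\in[N],\,j\neq i}(f_{Z_i}(Z_j)-\fstar(Z_j))^{2}\leq\Delta^{2}\Bigr\}.
\]
The structural heart of the argument is that $(Z_i)_{i\in S}$, paired with the witnesses $(f_{Z_i})_{i\in S}$, verifies the defining conditions of the value function star (\pref{def:value_star}): for each $i\in S$, the condition $|f_{Z_i}(Z_i)-\fstar(Z_i)|>\Delta$ holds by construction, and since $S\subseteq[N]$ we have $\sum_{j\in S,\,j\neq i}(f_{Z_i}(Z_j)-\fstar(Z_j))^{2}\leq\sum_{j\in[N],\,j\neq i}(\cdots)^{2}\leq\Delta^{2}$. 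Hence $|S|\leq\ValueStarCL{\Delta}\leq\ValueStarL{\Delta}$ almost surely.

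To lower-bound $\En[|S|]$, I condition on $Z_1\in D$: then $f_{Z_1}$ is determined by $Z_1$ and the remaining $Z_j$ are independent $\mu$-draws, so $\En[(f_{Z_1}(Z_j)-\fstar(Z_j))^{2}\mid Z_1]=\|f_{Z_1}-\fstar\|_\mu^{2}\leq\veps^{2}$. Markov's inequality then gives $\Pr[\sum_{j\neq 1}(\cdots)^{2}>\Delta^{2}\mid Z_1]\leq (N-1)\veps^{2}/\Delta^{2}$, and therefore
\[
  \En[|S|]=N\cdot\Pr[1\in S]\geq NP\prn*{1-\frac{(N-1)\veps^{2}}{\Delta^{2}}}.
\]
Choosing $N\approx\Delta^{2}/(2\veps^{2})+1$ yields $\En[|S|]\gtrsim P\Delta^{2}/\veps^{2}$; combining with the almost-sure upper bound produces $(\Delta/\veps)^{2}P\lesssim\ValueStarL{\Delta}$, which in particular implies the stated $4(\ValueStarL{\Delta})^{2}$ bound (and is even somewhat stronger).

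The chief technical obstacle is measurable selection of the witnesses $f_{x,a}$ and measurability of the set $D$ itself; without care, the supremum defining $D$ is over an uncountable class. I would dispatch this in the standard way by passing to a countable $L_2(\mu)$-dense sub-skeleton of $\cF$, which the uniform Glivenko--Cantelli hypothesis on $\cF$ guarantees. With this reduction both $D$ and the map $(x,a)\mapsto f_{x,a}$ are measurable without loss of generality, and the sampling calculation above goes through verbatim.
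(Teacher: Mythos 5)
Your proposal is correct, and it takes a genuinely different (and in fact stronger) route than the paper's proof.

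The paper first reduces to a finitely-supported distribution via the uniform Glivenko--Cantelli property (\pref{lem:finite_support}), then establishes an empirical analogue of the inequality (\pref{lem:star}) via a deterministic combinatorial argument built around ``$\zeta$-star-dependence'' and a new combinatorial lemma (\pref{lem:comb}) about elements that are dependent on many disjoint subsets. That combinatorial lemma produces a $\lfloor(\tau-1)/d\rfloor/(d+1)$ bound, and the extra $(d+1)$ factor is exactly what makes the paper's final bound quadratic in $\ValueStarL{\Delta}$. Your argument instead samples $N\approx\Delta^2/(2\veps^2)$ i.i.d.\ points from $\cD\otimes p$, constructs a random star witness set $S$, and uses Markov's inequality to lower-bound $\En[|S|]$; the almost-sure bound $|S|\leq\ValueStarCL{\Delta}$ (which is valid because your constraint $\sum_{j\in[N],j\neq i}$ dominates the star constraint $\sum_{j\in S,j\neq i}$, and automatically rules out repeated points) then yields the \emph{linear} bound $\ValueDisL{\Delta}{\veps}\leq 4\ValueStarL{\Delta}$. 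The authors explicitly conjecture that this improvement should be possible (see the remark after \pref{thm:disagreement_to_eluder}, where the linear bound is obtained via the larger eluder dimension), so your argument, if polished, resolves that question and renders \pref{lem:comb} unnecessary. Your approach is also the one that is truly analogous to Hanneke's probabilistic proof for the policy star number, as you note. The one caveat: your handling of measurability of $D$ and of the witness selection map $(x,a)\mapsto f_{x,a}$ is hand-wavy---a countable $L_2(\mu)$-dense subclass does not approximate the pointwise threshold $|f(x,a)-\fstar(x,a)|>\Delta$, so the reduction is not automatic. However, this is the same measurability issue the paper itself leaves implicit (Lemma \ref{lem:finite_support} already applies Hoeffding to the indicator of a possibly non-measurable set), and it is addressable by the standard devices (image-admissible Suslin, pointwise separability, or inner/outer measures), so it is not a genuine gap in the mathematical content.
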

Compared to the bound for the policy star number (\pref{thm:disagreement_star}),
\pref{thm:disagreement_to_star} is worse by a quadratic factor when
specialized to discrete function classes. Improving
\pref{eq:disagreement_to_star} to be linear in the star number is an
interesting technical question. The assumption that $\cF$ is
\uniformgc is quite weak and arises for technical reasons: compared to
the \policystar, which always bounds the VC/Natarajan dimension, boundedness of the \valuestar is not sufficient to
ensure that $\cF$ enjoys uniform convergence.

The main takeaway from \pref{thm:disagreement_to_star} is that
\mainalg with \optiontwo guarantees
  \begin{equation}
    \En\brk*{\Reg} =
    \bigoht\prn*{\frac{(\ValueStarL{\Delta/2})^{2}\cdot{}\K{}\log\abs*{\cF}}{\Delta}},\label{eq:value_star_ub_main}
  \end{equation}
for any distribution. Following our development for the \policystar,
we now turn our attention to establishing the necessity of the
\valuestar for gap-dependent regret bounds. Our lower bound depends on
the following ``weak'' variant of the parameter.
\begin{definition}[Value function star number (weak variant)]
  \label{def:value_star_weak}
  For any $\Delta\in(0,1)$ and $\veps\in(0,\Delta/2)$, define $\ValueStarWL{\Delta}{\veps}$ be the length of the largest sequence of points $x\ind{1},\ldots,x\ind{m}$ such that for
  all $i$, there exists $f\ind{i}\in\cF$, such that
  \begin{enumerate}
  \item\label{item:weak_star1}
    $f\ind{i}(x\ind{i},\pi_{f\ind{i}}(x\ind{i}))\geq{}\max_{a\neq{}\pi_{f\ind{i}}(x\ind{i})
    }f\ind{i}(x\ind{i},a)+\Delta$ and $\pi_{f\ind{i}}(x\ind{i})\neq{}\pistar(x\ind{i})$.
    \item\label{item:weak_star2} $\max_{a}\abs*{f\ind{i}(x\ind{i},a)-\fstar(x\ind{i},a)}\leq{}2\Delta$
    \item\label{item:weak_star3} $\sum_{j\neq{}i}\max_{a}\abs*{f\ind{i}(x\ind{j},a)-\fstar(x\ind{j},a)}^{2}<\veps^{2}$.
  \end{enumerate}
\end{definition}
Relative to the basic \valuestar, the key difference above is that we
allow a separate scale parameter to control the sum constraint in
\pref{item:weak_star3} above. This is important to prevent passive
information leakage in our lower bound construction, but we suspect
this condition can be relaxed to more closely match
\pref{def:value_star}. Our main lower bound is as
follows.\footnote{To avoid technical conditions involving the boundary
  of the interval $\brk*{0,1}$, we allow for unit Gaussian rewards
  with means in $\brk*{0,1}$ for this lower bound.}
\begin{theorem}
  \label{thm:ss_star_lb}
Let a function class $\cF$ and $\fstar\in\cF$ with uniform gap
$\Delta$ be given. Let $\veps_T\in(0,\Delta/4)$ be the largest
solution to the equation\footnote{There is always at least one
  solution to \pref{eq:star_fixed_point}, since we can take $\veps_T=0$.}
\begin{equation}
  \label{eq:star_fixed_point}
\veps_T^{2}T\leq{}\ValueStarWL{\Delta/2}{\veps_T}.
\end{equation}
Then there exists a distribution $\cD$ such that for any algorithm with
$\En\brk*{\Reg}\leq{}2^{-6}\frac{\Delta{}T}{\ValueStarWL{\Delta/2}{\veps_T}}$
on all instances realizable by $\cF$, there exists an instance with
$\fstar$ as the Bayes reward function such that
\begin{equation}
  \label{eq:value_star_lb}
\En\brk*{\Reg} = \bigom\prn*{\frac{\ValueStarWL{\Delta/2}{\veps_T}}{\Delta}}.
\end{equation}
\end{theorem}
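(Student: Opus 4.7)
The plan is to follow a classical information-theoretic change-of-measure template (in the spirit of Lai-Robbins and Bretagnolle-Huber) applied to an instance family built from the witnesses to the weak \valuestar. Let $m = \ValueStarWL{\Delta/2}{\veps_T}$, and extract witness points $x^{(1)},\ldots,x^{(m)}$ together with functions $f^{(1)},\ldots,f^{(m)}\in\cF$ and actions $a^{(i)}\ldef\pi_{f^{(i)}}(x^{(i)})\neq\pistar(x^{(i)})$ satisfying conditions \ref{item:weak_star1}--\ref{item:weak_star3} of \pref{def:value_star_weak}. I would take $\cD$ uniform over $\{x^{(1)},\dots,x^{(m)}\}$ and use unit-variance Gaussian reward noise (as permitted by the footnote), so that each $f^{(i)}$ defines an instance realizable by $\cF$. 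A key upfront observation is that since $\veps_T<\Delta/4$ and $\fstar$ has uniform gap $\Delta$, condition \ref{item:weak_star3} forces $\pi_{f^{(i)}}(x^{(j)})=\pistar(x^{(j)})$ for all $j\neq i$, so $f^{(i)}$ and $\fstar$ can only disagree at the \emph{policy} level at the single point $x^{(i)}$.

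I would then argue by contradiction: suppose $\En^\star[\Reg]<c\cdot m/\Delta$ for a small absolute constant $c$. Since the regret is lower bounded by $\Delta\sum_i\En^\star[N_{x^{(i)},\neq\pistar(x^{(i)})}]$, a pigeonhole over $i$ produces a set $S\subseteq[m]$ with $\abs{S}>m/2$ on which $\En^\star[N_{x^{(i)}, a^{(i)}}]\leq c'/\Delta^2$. Fix any $i\in S$ and let $P^\star,P^{(i)}$ denote the joint distributions over the $T$-round interaction under $\fstar$ and $f^{(i)}$ respectively. Using the Gaussian KL decomposition, $\kl{P^\star}{P^{(i)}}=\tfrac12\sum_{x,a}\En^\star[N_{x,a}](\fstar(x,a)-f^{(i)}(x,a))^2$. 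The contribution from contexts $x^{(j)}$ with $j\neq i$ is bounded, using condition \ref{item:weak_star3} and $\En^\star[N_{x^{(j)}}]=T/m$, by $(T/2m)\veps_T^2\leq 1/2$ by the fixed-point equation $\veps_T^2T\leq m$. The contribution from $x^{(i)}$ is bounded using condition \ref{item:weak_star2} together with the pigeonhole bound on $\En^\star[N_{x^{(i)},a^{(i)}}]$; this yields $\kl{P^\star}{P^{(i)}}=O(1)$.

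Given a constant KL, I would apply Bretagnolle-Huber to the event $E_i=\{N_{x^{(i)},a^{(i)}}\geq T/(4m)\}$. Under $\fstar$, Markov's inequality and the pigeonhole bound give $P^\star(E_i)\leq\frac{4m/(c'\Delta^2)}{T}=O(m/(T\Delta^2))$, which is small in the regime $T\gtrsim m^2/\Delta^2$ (where the claim is non-trivial). Under $f^{(i)}$, the hypothesis $\En^{(i)}[\Reg]\leq 2^{-6}T\Delta/m$ gives $\En^{(i)}[N_{x^{(i)},\neq a^{(i)}}]\leq 2^{-6}T/m$, which together with standard binomial concentration of $N_{x^{(i)}}\sim\mathrm{Bin}(T,1/m)$ around $T/m$ shows $P^{(i)}(E_i^c)$ is likewise bounded by a small constant. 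Combining these estimates contradicts $P^\star(E_i)+P^{(i)}(E_i^c)\geq\tfrac12\exp(-\kl{P^\star}{P^{(i)}})=\Omega(1)$, and concludes $\En^\star[\Reg]=\Omega(m/\Delta)$.

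The main obstacle, and the place requiring the most care, is controlling the $x^{(i)}$-contribution to the KL. The naive bound using condition \ref{item:weak_star2} alone gives $\tfrac12\cdot 4\Delta^2\cdot\En^\star[N_{x^{(i)}}]\sim \Delta^2T/m$, which is far too large in the relevant regime $T\gtrsim m^2/\Delta^2$. Resolving this requires exploiting the \emph{slack} in condition \ref{item:weak_star2}: since condition \ref{item:weak_star1} only forces a large gap swap between $a^{(i)}$ and $\pistar(x^{(i)})$, one may choose the witness $f^{(i)}$ so that it agrees with $\fstar$ at $\pistar(x^{(i)})$, localizing the pointwise deviation to the action $a^{(i)}$. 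The KL at $x^{(i)}$ is then controlled by $\En^\star[N_{x^{(i)},a^{(i)}}]$, which is exactly the quantity bounded by the pigeonhole in Step 2. Carrying out this refinement while remaining faithful to the exact statement of \pref{def:value_star_weak} is the technical crux of the argument.
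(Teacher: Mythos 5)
Your proposal takes a genuinely different route from the paper's proof. The paper applies Fano's method with the \emph{reverse} KL divergence (following \cite{raginsky2011lower}): after using \pref{lem:regret_metric} to convert the regret hypothesis into a hypothesis-testing guarantee over the $\frac{4}{m}$-separated policies $\pi^{(1)},\ldots,\pi^{(m)}$, it invokes \pref{lem:fano} to get $\log 2 \leq \frac{1}{m}\sum_i \kl{\bbP^{(0)}}{\bbP^{(i)}}$, bounds the averaged KL by $\frac{\Delta^2}{2m}\En_0[N] + \frac{\veps_T^2 T}{2m}$, and rearranges to lower-bound $\En_0[N]$ directly. Your approach instead applies Bretagnolle--Huber per-index after a pigeonhole step, arguing by contradiction. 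The paper's averaged-Fano argument is cleaner in two respects: it does not require binomial concentration of $N_{x^{(i)}}$ under $P^{(i)}$, and it does not require a case split on whether $T\gtrsim m^2/\Delta^2$---your Markov bound $P^\star(E_i)\lesssim m/(T\Delta^2)$ is only small in that regime, and you would have to handle the other regime (where the two sides of the hypothesis are already in tension) separately. These are resolvable but add real bookkeeping that the Fano route sidesteps.

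The subtlety you flag at the end is, however, well-spotted and is the real crux. The term in $\kl{\bbP^{(0)}}{\bbP^{(i)}}$ coming from the pair $(x^{(i)},\pistar(x^{(i)}))$ is $\tfrac12\En_0[N_{x^{(i)},\pistar(x^{(i)})}]\,(\fstar(x^{(i)},\pistar(x^{(i)}))-f^{(i)}(x^{(i)},\pistar(x^{(i)})))^2$, and \pref{item:weak_star2} of \pref{def:value_star_weak} only bounds the squared difference by $4\Delta^2$, not zero; since $\En_0[N_{x^{(i)},\pistar(x^{(i)})}]$ can be as large as $T/m$, this term is not benign. The paper's own bound $\frac{\Delta^2}{2}\En_0[N_i]$, where $N_i$ counts only rounds with $a_t\neq\pistar(x^{(i)})$, silently discards this contribution---i.e., it implicitly assumes the witness $f^{(i)}$ agrees with $\fstar$ at $(x^{(i)},\pistar(x^{(i)}))$, which is not a consequence of \pref{def:value_star_weak} as stated. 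Your proposed fix ("choose the witness so that it agrees with $\fstar$ at $\pistar(x^{(i)})$") is exactly what the paper needs too, but as you note it is not licensed by the definition: the definition asserts existence of some $f^{(i)}\in\cF$ with properties \ref{item:weak_star1}--\ref{item:weak_star3}, and one cannot freely modify that function to add an agreement constraint and still have it lie in $\cF$. So rather than a gap unique to your proposal, you have correctly identified an implicit assumption that the paper's argument also rests on; a fully rigorous proof via either route would need to either strengthen \pref{def:value_star_weak} to include the agreement condition $f^{(i)}(x^{(i)},\pistar(x^{(i)}))=\fstar(x^{(i)},\pistar(x^{(i)}))$, or show that such a witness can always be substituted without shrinking the parameter.
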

As mentioned before, we suspect that the linear scaling in
\pref{eq:value_star_lb} is correct and that
\pref{eq:value_star_ub_main} can be improved to match. The dependence
on the additional scale parameter $\veps_T$ is more
subtle, and requires further investigation.

\subsection{Adversarial Contexts and the Eluder Dimension}
\label{sec:eluder}
The \emph{eluder dimension} \citep{russo2013eluder} is another
combinatorial parameter which was introduced to analyze the regret for
general function class variants of the UCB algorithm and Thompson
sampling for contextual bandits with adversarial contexts.\footnote{In
  the adversarial context setting we allow the contexts
  $x_1,\ldots,x_T$ to be chosen by an adaptive adversary, but we still
  assume that $r_t\sim{}\rdist(\cdot\mid{}x_t)$ at each round.} We recall the definition
here.\footnote{This definition differs slightly from that of
  \cite{russo2013eluder}, and in fact is always smaller (yet still
  sufficient to analyze UCB and Thompson sampling). The original
  definition allows $\Delta$ in \pref{eq:eluder1} to vary as a
  function of the index $i$.}
\begin{definition}[Value function eluder dimension]
  \label{def:value_eluder}
  Let $\ValueEluderCL{\Delta}
  $ be the length of the longest sequence
  of context-action pairs $(x\ind{1},a\ind{1}),\ldots,(x\ind{m},a\ind{m})$ such that for all $i$, there exists $f\ind{i}\in\cF$
  such that
  \begin{equation}
    \label{eq:eluder1}
\abs*{f\ind{i}(x\ind{i},a\ind{i})-\fstar(x\ind{i},a\ind{i})}>\Delta,\quad\text{and}\quad\sum_{j<i}\prn{f\ind{i}(x\ind{j},a\ind{j})-\fstar(x\ind{j},a\ind{j})}^{2}\leq{}\Delta^{2}.
  \end{equation}
  The \valueeluder is defined as $\ValueEluderL{\Delta_0} =
  \sup_{\Delta>\Delta_0}\ValueEluderC{\Delta}$.
\end{definition}
The only difference between the \valuestar and the \valueeluder is
whether the sum in \pref{eq:eluder1} takes the form
``$\sum_{j\neq{}i}$'' or ``$\sum_{j<i}$''; the latter reflects the
stronger sequential structure present when contexts are
adversarial. It is immediate that
\begin{equation}
  \label{eq:eluder_star}
\ValueStarL{\Delta} \leq{} \ValueEluderL{\Delta}.
\end{equation}
However, the separation between the two parameters can be arbitrarily
large in general.
\begin{proposition}
  \label{prop:eluder_star_separation}
    For every $d\in\bbN$ and $\Delta\in(0,1)$ there exists $\cF$ and
  $\fstar\in\cF$ such that $\sup_{\Delta'}\ValueStarL{\Delta'}\leq{}2$
  and $\ValueEluderL{\Delta/2}\geq{}d$.
  \end{proposition}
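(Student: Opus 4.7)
\textbf{Proof plan for Proposition \ref*{prop:eluder_star_separation}.}

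The strategy is to exploit the single structural difference between the two definitions: the eluder condition asks $\sum_{j<i}(f\ind{i}(x\ind{j},a\ind{j})-\fstar(x\ind{j},a\ind{j}))^{2}\le\Delta^{2}$ (only earlier indices), whereas the star condition asks $\sum_{j\neq i}(f\ind{i}(x\ind{j},a\ind{j})-\fstar(x\ind{j},a\ind{j}))^{2}\le\Delta^{2}$ (both earlier and later indices). Any class whose witnesses are ``monotone in index'' and cannot be made small simultaneously to the right of their disagreement point will therefore have large eluder dimension but tiny star number. The concrete construction is the standard threshold/ramp family.

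\emph{Construction.} Fix $d$ and $\Delta\in(0,1)$. Take $\cA=\{a\}$ (a single action, which we suppress) and $\cX=\{x_{1},\ldots,x_{d}\}$. Let $\fstar\equiv 0$ and, for each $i\in[d]$, let
\[
f\ind{i}(x_{j})=\mathbb{I}\{\,j\ge i\,\}.
\]
Set $\cF=\{\fstar\}\cup\{f\ind{1},\ldots,f\ind{d}\}$.

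\emph{Step 1: eluder dimension is at least $d$.} I will verify the defining property with the natural ordering $x_{1},x_{2},\ldots,x_{d}$ and witnesses $f\ind{i}$ at position $i$. Since $|f\ind{i}(x_{i})-\fstar(x_{i})|=1>\Delta/2$ and $f\ind{i}(x_{j})=0$ for every $j<i$, the sum in \pref{eq:eluder1} is exactly zero. Hence $\ValueEluderC{\Delta/2}\ge d$, which gives $\ValueEluderL{\Delta/2}\ge d$.

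\emph{Step 2: star number is at most $2$ at every scale.} Fix any $\Delta'>0$ and any sequence of distinct contexts $x_{i_{1}},\ldots,x_{i_{m}}$ that realizes a star configuration. Since every $f\in\cF$ takes values in $\{0,1\}$, the pointwise disagreement condition $|f\ind{\ell}(x_{i_{\ell}})-\fstar(x_{i_{\ell}})|>\Delta'$ is only possible when $\Delta'<1$, and it forces the witness to equal $1$ at $x_{i_{\ell}}$; by the structure of $\cF$ this means the witness index $k$ satisfies $k\le i_{\ell}$. Similarly, the constraint $\sum_{j\neq\ell}(f\ind{k}(x_{i_{j}}))^{2}\le(\Delta')^{2}<1$ forces $f\ind{k}(x_{i_{j}})=0$ for every $j\neq\ell$, i.e., $k>i_{j}$ for all $j\neq\ell$. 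Combining, $i_{\ell}\ge k>\max_{j\neq\ell}i_{j}$. Only the position with the largest index in the sequence can satisfy this, so at most one $\ell$ admits a valid witness. In particular no star of size $\ge 2$ exists (so $\sup_{\Delta'}\ValueStarL{\Delta'}\le 1\le 2$). For $\Delta'\ge 1$ no witness exists at all.

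\emph{Where the work lies.} There are essentially no calculations; the whole content is in choosing $\cF$ so that the ``monotone suffix'' structure kills the star condition as soon as two indices are present. The only subtlety I anticipate is making sure the argument in Step 2 is uniform in the scale $\Delta'$ (handled by splitting into $\Delta'<1$ and $\Delta'\ge 1$) and noting that the presence of $\fstar$ in $\cF$ never helps as a witness since $\fstar-\fstar\equiv 0$. If one insists on $|\cA|\ge 2$ the same construction works with a dummy second action on which every $f\in\cF$ agrees with $\fstar$.
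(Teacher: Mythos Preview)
Your proposal is correct and takes essentially the same approach as the paper: both use the threshold/step-function family $f\ind{i}(x_j)\propto\indic\{j\ge i\}$, verify the eluder lower bound via the natural ordering (the prefix sum vanishes), and kill the star number by observing that the $\sum_{j\neq\ell}$ constraint forces the witness to vanish on all other points, which is incompatible with the threshold structure at more than one position. The only cosmetic differences are that the paper scales the steps by $\Delta$ rather than $1$ and carries an extra dummy action; your argument in Step~2 in fact yields $\sup_{\Delta'}\ValueStarL{\Delta'}\le 1$, slightly sharper than the paper's bound of $2$ (and note your reasoning already forces the contexts to be distinct, so that assumption needs no separate justification).
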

While \pref{eq:eluder_star} shows that boundedness of the eluder
dimension is sufficient for \mainalg achieve logarithmic regret for stochastic
contexts (via \pref{eq:value_star_ub_main}),
\pref{prop:eluder_star_separation}, shows that it may lead to rather
pessimistic upper bounds. This is not surprising, since the eluder
dimension was designed to accomodate adversarially chosen
contexts. The next result, which is a small refinement of the analysis
of \cite{russo2013eluder}, shows that bounded eluder dimension
indeed suffices to guarantee logarithmic regret for the adversarial setting; we
defer a precise description of the algorithm to the proof.
\begin{proposition}
  \label{prop:eluder_refined}
For the adversarial context setting, the general function class UCB algorithm---when configured
appropriately---guarantees that   
\begin{equation}
  \label{eq:eluder_ub}
    \En\brk*{\Reg} = \bigoht\prn*{
      \frac{\ValueEluderL{\Delta/2}\cdot{}\log\abs*{\cF}}{\Delta}
      }
    \end{equation}
    for any instance with uniform gap $\Delta$.%
\end{proposition}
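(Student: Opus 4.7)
}

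The plan is to run the standard optimistic algorithm for general function classes: maintain a version space $\cF_t \subseteq \cF$ consisting of functions whose cumulative squared prediction error on $(x_s,a_s,r_s)_{s<t}$ is within a confidence radius $\beta_T = \bigoh(\log(\abs*{\cF}T/\delta))$ of the minimum, and at each round pick $a_t = \argmax_{a} \max_{f \in \cF_t} f(x_t,a)$. By a standard martingale-based concentration argument (e.g., Freedman's inequality applied to the centered squared-loss differences $(f(x_s,a_s)-r_s)^2 - (\fstar(x_s,a_s)-r_s)^2$, which is valid even under adversarial contexts because $r_s$ is drawn from $\rdist(\cdot\mid x_s)$), the good event $\Eopt = \crl*{\fstar \in \cF_t\text{ for all }t}$ holds with probability at least $1-\delta$, and on this event every $f \in \cF_t$ satisfies $\sum_{s<t}(f(x_s,a_s)-\fstar(x_s,a_s))^2 \leq \bigoh(\beta_T)$.

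On $\Eopt$, optimism gives the pointwise regret bound
\[
\fstar(x_t,\pistar(x_t)) - \fstar(x_t,a_t) \leq \bar f_t(x_t,a_t) - \fstar(x_t,a_t) \leq w_t(x_t,a_t),
\]
where $w_t(x,a) \ldef \sup_{f,f' \in \cF_t}\abs*{f(x,a)-f'(x,a)}$ is the confidence width. Under the uniform gap assumption the instantaneous regret is either $0$ or at least $\Delta$, and the preceding inequality shows that whenever it is nonzero we have $w_t(x_t,a_t) \geq \Delta$. Hence $\Reg \leq \sum_t w_t(x_t,a_t)\cdot\ind\crl*{w_t(x_t,a_t) \geq \Delta}$.

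To bound this sum I will use a geometric peeling argument combined with the eluder-dimension pigeonhole lemma (essentially Russo--Van Roy Lemma~3). That lemma states that for any scale $\veps > 0$, the number of rounds $t$ on which $w_t(x_t,a_t) > \veps$ is at most $\bigoh(\beta_T/\veps^{2})\cdot \ValueEluderL{\veps/2}$, because a width of at least $\veps$ forces the existence of some $f \in \cF_t$ with $\abs*{f(x_t,a_t)-\fstar(x_t,a_t)} > \veps/2$ despite small cumulative squared error on past points. Decompose $\crl*{w_t \geq \Delta}$ into dyadic shells $w_t \in [2^{k}\Delta, 2^{k+1}\Delta]$ for $k = 0,1,\ldots, \lceil\log_2(1/\Delta)\rceil$; using monotonicity of $\ValueEluderCL{\cdot}$, the contribution of shell $k$ is at most
\[
2^{k+1}\Delta \cdot \bigoh\prn*{\frac{\beta_T}{(2^{k}\Delta)^{2}}} \cdot \ValueEluderL{\Delta/2} = \bigoh\prn*{\frac{\beta_T \cdot \ValueEluderL{\Delta/2}}{2^{k}\Delta}},
\]
and summing the geometric series over $k \geq 0$ yields the claimed bound $\En\brk*{\Reg} = \bigoht\prn*{\ValueEluderL{\Delta/2}\log\abs*{\cF}/\Delta}$ after folding in the $\delta T$ contribution from the bad event by setting $\delta = 1/T$.

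The main obstacle is the first step: establishing the squared-loss confidence set $\cF_t$ with a radius $\beta_T = \bigoh(\log\abs*{\cF})$ that is valid against adversarial contexts. With i.i.d.\ contexts this is routine, but here I need a martingale concentration (Freedman or a one-sided Bernstein for martingale differences) so that the variance of $(f(x_s,a_s)-r_s)^2 - (\fstar(x_s,a_s)-r_s)^2$ can be absorbed into the cumulative squared error itself. Everything else—the optimism step, the uniform-gap observation, and the eluder peeling—is a refinement of the Russo--Van Roy analysis, with the peeling being the one non-standard ingredient needed to upgrade the $1/\Delta^{2}$ scaling implicit in a naive application of the pigeonhole lemma to the desired $1/\Delta$ scaling.
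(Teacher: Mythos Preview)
Your proposal is correct and follows essentially the same approach as the paper: set up the UCB confidence sets via a martingale (Freedman-type) concentration, use optimism plus the uniform gap to reduce regret to $\sum_t w_t(x_t,a_t)\indic\crl*{w_t>\Delta}$, and then control this sum via the eluder-dimension pigeonhole lemma (Russo--Van~Roy). The only cosmetic difference is in the final summation step---the paper sorts the widths in decreasing order and uses the pigeonhole bound to get $w_{(t)}\lesssim\sqrt{\beta^2\ValueEluderL{\Delta/2}/t}$ and then integrates, whereas you do dyadic peeling over width shells; both devices are standard and yield the same $\bigoht\prn*{\ValueEluderL{\Delta/2}\log\abs*{\cF}/\Delta}$ bound.
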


Paralleling our results for the \valuestar, we show that boundedness
of a weak variant of the \valueeluder is required for logarithmic
regret with adversarial contexts.
\begin{definition}[Value function eluder dimension (weak variant)]
  \label{def:value_eluder_weak}
  For any $\Delta\in(0,1)$ and $\veps\in(0,\Delta/4)$, define $\ValueEluderWL{\Delta}{\veps}$ be the length of the largest sequence of contexts $x\ind{1},\ldots,x\ind{m}$ such that for
  all $i$, there exists $f\ind{i}\in\cF$, such that
  \begin{enumerate}
  \item\label{item:weak_eluder1}
    $f\ind{i}(x\ind{i},\pi_{f\ind{i}}(x\ind{i}))\geq{}\max_{a\neq{}\pi_{f\ind{i}}(x\ind{i})
    }f\ind{i}(x\ind{i},a)+\Delta$ and $\pi_{f\ind{i}}(x\ind{i})\neq{}\pistar(x\ind{i})$.    
    \item\label{item:weak_eluder2} $\max_{a}\abs*{f\ind{i}(x\ind{i},a)-\fstar(x\ind{i},a)}\leq{}2\Delta$
    \item\label{item:weak_eluder3} $\sum_{j<i}\max_{a}\abs*{f\ind{i}(x\ind{j},a)-\fstar(x\ind{j},a)}^{2}<\veps^{2}$.
  \end{enumerate}
\end{definition}
Our main lower bound here shows that---with the same caveats as
\pref{thm:ss_star_lb}---the scaling in \pref{eq:eluder_ub} is near-optimal.\footnote{As with \pref{thm:ss_star_lb}, we allow for unit Gaussian rewards
  with means in $\brk*{0,1}$ for this lower bound.}

\begin{theorem}
  \label{thm:eluder_lb}
Let a function class $\cF$ and $\fstar\in\cF$ with uniform gap
$\Delta$ be given. Let $\veps_T\in(0,\Delta/4)$ be the largest
solution to the equation
\begin{equation}
  \label{eq:eluder_fixed_point}
\veps_T^{2}T\leq{}\ValueEluderWL{\Delta/2}{\veps_T}.
\end{equation}
Then there exists a distribution $\cD$ such that for any algorithm with
$\En\brk*{\Reg}\leq{}2^{-6}\frac{\Delta{}T}{\ValueEluderWL{\Delta/2}{\veps_T}}$
on all instances realizable by $\cF$, there exists an a sequence
$\crl*{x_t}_{t=1}^{T}$ and instance with
$\fstar$ as the Bayes reward function such that
\begin{equation}
  \label{eq:value_eluder_lb}
\En\brk*{\Reg} = \bigom\prn*{\frac{\ValueEluderWL{\Delta/2}{\veps_T}}{\Delta}}.
\end{equation}  
\end{theorem}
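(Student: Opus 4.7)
The plan is to adapt the stochastic-context lower bound of Theorem \ref{thm:ss_star_lb} to the adversarial setting via a Kaufmann-Cappe-style change-of-measure argument that exploits the $\sum_{j<i}$ structure of the weak \valueeluder. Write $m=\ValueEluderWL{\Delta/2}{\veps_T}$ and let $\crl{x\ind{i},f\ind{i}}_{i=1}^m$ denote the witness sequence from Definition \ref{def:value_eluder_weak}. Partition the $T$ rounds into $m$ consecutive blocks of length $N=\lfloor T/m\rfloor$, and have the adversary play $x\ind{i}$ at every round of block $i$. The ordering is essential: condition \ref{item:weak_eluder3} only controls the perturbation of $f\ind{i}$ from $\fstar$ at contexts $x\ind{j}$ with $j<i$---precisely those visited before block $i$---so the divergence $\kl{P_0^{iN}}{P_i^{iN}}$ between the observation laws evaluated at the end of block $i$ is controlled solely by the first $i$ blocks.

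I then consider the $m+1$ reward instances $P_0$ (Bayes reward $\fstar$) and $P_i$ (Bayes reward $f\ind{i}$), all with unit Gaussian noise. For each $i\in\brk{m}$, define the Bernoulli $Z_i$ to be the indicator of the event that the algorithm plays $\pistar(x\ind{i})$ on at least $N/2$ rounds in block $i$. The hypothesis $R_j\leq 2^{-6}\Delta T/m=2^{-6}\Delta N$, applied to $j=0$ and $j=i$, forces the algorithm to play $\pistar(x\ind{i})$ on all but a $2^{-6}$ fraction of block $i$ in expectation under $P_0$ (since $\pistar$ is $\fstar$-optimal with gap $\Delta$) and on only a $2^{-6}$ fraction in expectation under $P_i$ (since $\pistar$ is $\Delta$-suboptimal under $f\ind{i}$ by condition \ref{item:weak_eluder1}). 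Markov's inequality then yields $\Pr_{P_0}[Z_i=1]\geq 1-2^{-5}$ and $\Pr_{P_i}[Z_i=1]\leq 2^{-5}$, and data processing combined with the binary-KL lower bound gives $\kl{P_0^{iN}}{P_i^{iN}}\geq c$ for an absolute constant $c>0$.

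The matching upper bound comes from the Gaussian chain rule applied up to round $iN$:
\begin{equation*}
  \kl{P_0^{iN}}{P_i^{iN}} \;\leq\; \tfrac{1}{2}\En_{P_0}\brk*{\sum_{t=1}^{iN}\bigl(f\ind{i}(x_t,a_t)-\fstar(x_t,a_t)\bigr)^{2}} \;\leq\; \tfrac{1}{2}N\veps_T^{2} + O(\Delta)\cdot R_0^{(i)},
\end{equation*}
where $R_0^{(i)}$ is the regret accumulated in block $i$ under $P_0$. Here the between-block contribution is bounded by $\tfrac{1}{2}N\veps_T^{2}\leq\tfrac{1}{2}$ via condition \ref{item:weak_eluder3} and the horizon constraint $N\veps_T^{2}\leq 1$, while the block-$i$ contribution is controlled via condition \ref{item:weak_eluder2} together with the observation that its dominant piece scales with the expected number of suboptimal-action plays, which is at most $R_0^{(i)}/\Delta$. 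Combining the lower and upper bounds yields $R_0^{(i)}=\Omega(1/\Delta)$, and summing over $i\in\brk{m}$ produces $R_0\geq\sum_i R_0^{(i)}=\Omega(m/\Delta)=\Omega(\ValueEluderWL{\Delta/2}{\veps_T}/\Delta)$, certifying the $\fstar$ instance as the witness of large regret.

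The main obstacle is the within-block KL bookkeeping in the upper bound. Condition \ref{item:weak_eluder2} only provides a uniform bound $\max_a\abs{f\ind{i}(x\ind{i},a)-\fstar(x\ind{i},a)}\leq 2\Delta$, so in principle \emph{every} action---including the most-played $\pistar(x\ind{i})$ under $P_0$---could contribute up to $4\Delta^{2}$ per round to the KL, totaling $O(N\Delta^{2})$ and overwhelming the $O(\Delta)\cdot R_0^{(i)}$ budget needed to close the argument. Paralleling the proof of Theorem \ref{thm:ss_star_lb}, this is resolved by refining the witness choice so that $f\ind{i}(x\ind{i},\pistar(x\ind{i}))$ is sufficiently close to $\fstar(x\ind{i},\pistar(x\ind{i}))$---an implicit structural property that the weak eluder definition admits---making the $\pistar$-plays contribute negligibly to the KL and allowing the numerical constant $2^{-6}$ in the hypothesized regret bound to carry the argument through.
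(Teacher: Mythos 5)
Your proposal tracks the paper's proof closely: both partition the $T$ rounds into $m = \ValueEluderWL{\Delta/2}{\veps_T}$ blocks of length roughly $T/m$, play the witness context $x\ind{i}$ throughout block $i$, show that the regret hypothesis under $P_0=\fstar$ and under $P_i=f\ind{i}$ forces the laws of the data through the end of block $i$ to be distinguishable (the paper routes this through the high-probability Pinsker lemma \pref{lem:pinsker}; you route it through data processing plus a binary-KL lower bound, which is equivalent), and then upper-bound the resulting KL using the conditions of \pref{def:value_eluder_weak}. The pre-block-$i$ contribution is handled identically in both arguments via condition \ref{item:weak_eluder3} together with the fixed-point condition \pref{eq:eluder_fixed_point}, and the in-block-$i$ contribution is charged to the deviation count $N_T\ind{i}$, yielding $\En_0[N_T\ind{i}]=\Omega(\Delta^{-2})$ per block and $\Omega(m/\Delta)$ total regret under $\fstar$.

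The concern you raise in your final paragraph is genuine, and your proposal does not resolve it. The paper's KL bound \pref{eq:eluder_kl_bound} charges zero KL to rounds on which $a_t=\pistar(x\ind{i})$, but condition \ref{item:weak_eluder2} (applied at level $\Delta/2$) only supplies $\max_a\abs{f\ind{i}(x\ind{i},a)-\fstar(x\ind{i},a)}\leq\Delta$ and does not force $f\ind{i}$ and $\fstar$ to agree at $(x\ind{i},\pistar(x\ind{i}))$. There are $\Theta(T/m)$ such rounds per block, each potentially contributing $\Theta(\Delta^2)$ to the KL, which would swamp the $O(\Delta^2\,N_T\ind{i}+1)$ budget the argument needs. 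Your proposed escape---``refine the witness choice so $f\ind{i}(x\ind{i},\pistar(x\ind{i}))$ is close to $\fstar(x\ind{i},\pistar(x\ind{i}))$''---is not available: $\ValueEluderWL{\Delta/2}{\veps_T}$ is a supremum over all witness sequences, so there is no freedom to select nicer $f\ind{i}$ without possibly shrinking $m$ and hence the very quantity you aim to lower-bound. The paper's own proof is silent on this point and appears to make the same implicit attribution; closing the argument rigorously would require either strengthening \pref{def:value_eluder_weak} to additionally constrain $\abs{f\ind{i}(x\ind{i},\pistar(x\ind{i}))-\fstar(x\ind{i},\pistar(x\ind{i}))}$ (e.g.\ by $\bigoh(\veps)$), or reworking the in-block KL accounting in \pref{eq:eluder_kl_bound}.
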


\paragraph{Relating the eluder dimension to the disagreement
  coefficient}
An immediate consequence of \pref{thm:disagreement_to_star} and
\pref{eq:eluder_star} is that we always have
$\ValueDis{\Delta}{\veps}\leq\bigoh(\ValueEluder{\Delta}^2)$. While
this bound scales quadratically, we can show
through a more direct argument that the \valuedis grows at most
linearly with the eluder dimension.
\begin{theorem}[Value 
function eluder dimension bounds disagreement coefficient]
  \label{thm:disagreement_to_eluder}
  For any \uniformgc class $\cF$ and $\fstar:\cX\times\cA\to\brk*{0,1}$,
\begin{equation}
  \label{eq:disagreement_to_eluder}
  \sup_{\cD}\sup_{\veps>0}\ValueDisL{\Delta}{\veps} \leq{} 4
  \ValueEluderL{\Delta},\quad\forall{}\Delta>0.
\end{equation}
\end{theorem}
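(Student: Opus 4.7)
The plan is to fix a distribution $\cD$, an action distribution $p:\cX\to\Delta(\cA)$, and parameters $\Delta>\Delta_0$ and $\veps>0$, and to establish the pointwise inequality $\Delta^{2}q/\veps^{2}\leq 4\,\ValueEluderCL{\Delta}$, where $q$ denotes the probability appearing in the definition of $\ValueDisA$ at these parameters. Taking the supremum over $p$, $\veps$, $\Delta>\Delta_0$, and $\cD$, and invoking $\ValueEluderL{\Delta_0}=\sup_{\Delta>\Delta_0}\ValueEluderCL{\Delta}$, this then yields the stated inequality.

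The reduction proceeds through a large i.i.d.\ sample. I would draw $(x_1,a_1),\ldots,(x_n,a_n)$ i.i.d.\ from $\cD\times p$, and for every index $i$ at which the witness event holds, fix a choice of $f^{(i)}\in\cF$ with $|f^{(i)}(x_i,a_i)-\fstar(x_i,a_i)|>\Delta$ and $\nrm{f^{(i)}-\fstar}_{\cD,p}\leq\veps$. The uniform Glivenko--Cantelli hypothesis on $\cF-\fstar$ in the squared loss guarantees that, for any $\eta>0$, we can take $n$ large enough so that with high probability: (i) the empirical witness frequency lies within $\eta$ of $q$; and (ii) uniformly over every $f\in\cF$ with $\nrm{f-\fstar}_{\cD,p}^{2}\leq\veps^{2}$ (in particular every $f^{(i)}$), the empirical second moment satisfies $\sum_{j=1}^{n}(f(x_j,a_j)-\fstar(x_j,a_j))^{2}\leq(1+\eta)n\veps^{2}$.

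The key step is then a pigeonhole argument in the spirit of \citet{russo2013eluder}. The claim is that any collection of triples $(x_i,a_i,f^{(i)})_{i\in I}$ with $|f^{(i)}(x_i,a_i)-\fstar(x_i,a_i)|>\Delta$ and $\sum_{j<i}(f^{(i)}(x_j,a_j)-\fstar(x_j,a_j))^{2}\leq\beta$ satisfies $|I|\leq(\beta/\Delta^{2}+1)\ValueEluderCL{\Delta}$. This is established by greedily partitioning $I$ into groups $G_1,\ldots,G_K$: process indices in increasing order and place $i$ in any existing group $G_k$ for which the intra-group prefix sum $\sum_{j\in G_k,\,j<i}(f^{(i)}(x_j,a_j)-\fstar(x_j,a_j))^{2}\leq\Delta^{2}$, opening a new group only when no existing one works. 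When the current $K$ groups all fail, summing the failure inequalities over $k$ gives $K\Delta^{2}<\sum_{j<i}(f^{(i)}-\fstar)^{2}\leq\beta$, so at most $\lceil\beta/\Delta^{2}\rceil$ groups are ever opened, and each group is a valid eluder-certifying sequence of length at most $\ValueEluderCL{\Delta}$. Applied to the witness indices with $\beta=(1+\eta)n\veps^{2}$ (from (ii)), this bounds the witness count by $((1+\eta)n\veps^{2}/\Delta^{2}+1)\ValueEluderCL{\Delta}$; combining with (i), dividing by $n$, and sending $n\to\infty$ followed by $\eta\to 0$, we obtain $q\leq\ValueEluderCL{\Delta}\cdot\veps^{2}/\Delta^{2}$, hence $\Delta^{2}q/\veps^{2}\leq\ValueEluderCL{\Delta}\leq 4\,\ValueEluderCL{\Delta}$.

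The main obstacle is the pigeonhole step in the setting where the witness $f^{(i)}$ varies with $i$, since the standard eluder-dimension extraction argument is cleanest for a single fixed $f$. The resolution is that the greedy partition depends only on a uniform-in-$i$ upper bound for $\sum_{j}(f^{(i)}(x_j,a_j)-\fstar(x_j,a_j))^{2}$, and this bound is precisely what the uniform Glivenko--Cantelli property supplies. Minor technicalities (measurable selection of $f^{(i)}$ from the $\veps$-ball at each witness point, and the precise form of the uGC convergence rate determining how large $n$ must be) are standard and do not affect the final bound.
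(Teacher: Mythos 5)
Your proof is correct and follows essentially the same route as the paper's: you use the uniform Glivenko--Cantelli property to pass to an empirical (finite-support) version of the disagreement probability, and then apply the Russo--Van Roy pigeonhole/greedy-partitioning argument (the paper packages this as \pref{lem:finite_support} plus \pref{lem:eluder_indicator_bound}). Your greedy group-opening variant of the pigeonhole even yields the slightly sharper constant $1$ in place of the paper's $4$, which is of course compatible with the stated bound.
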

This result strongly suggests that the quadratic dependence on the \valuestar
in \pref{thm:disagreement_to_star} can be improved.

\subsubsection{The Policy Eluder Dimension}

Previous work which uses the eluder dimension to analyze algorithms
for contextual bandits and reinforcement learning
\citep{russo2013eluder,osband2014model,ayoub2020model,wang2020provably}
only works with the value function-based formulation in 
\pref{def:value_eluder}. In light of our results for the
disagreement coefficient and star number, we propose the following
policy-based variant of the eluder dimension.
\begin{definition}[Policy eluder dimension]
    For any policy $\pistar$ and policy class $\Pi$, let the \policyeluder
  $\PolicyEluderL$ denote the largest number $m$ such that there
  exist context-action pairs $(x\ind{1},a\ind{1}),\ldots,(x\ind{m},a\ind{m})$ and policies
  $\pi\ind{1},\ldots,\pi\ind{m}$ such that for all $i$,
  \[
\pi\ind{i}(x\ind{i})=a\ind{i}\neq{}\pistar(x\ind{i}),\mathand\pi\ind{i}(x\ind{j})=\pistar(x\ind{j})\quad\forall{}j<i
: x\ind{j}\neq{}x\ind{i}
  \]
\end{definition}
Of course, we immediately have that
\begin{equation}
  \label{eq:policy_star_eluder}
\PolicyStarL \leq{} \PolicyEluderL.  
\end{equation}
We are not yet aware of any upper bounds based on the \policyeluder,
but we can show that boundedness of this parameter is indeed necessary for logarithmic
regret in the adversarial context setting (in a worst-case sense).
  \begin{theorem}
    \label{thm:policy_eluder_lb}
    Consider the adversarial context setting. Let a policy class $\Pi$, $\pistar\in\Pi$,  and gap $\Delta\in(0,1/8)$ be given. Then
there exists a value function class $\cF$ such that:
\begin{enumerate}
\item $\Pi=\crl*{\pi_f\mid{}f\in\cF}$, and in particular some $\fstar\in\cF$ has
$\pistar=\pi_{\fstar}$.
  \item Each $f\in\cF$ has uniform gap $\Delta$.
\item For any algorithm with
  $\En\brk*{\Reg}\leq{}\frac{\Delta{}T}{32\PolicyEluderL}$ for all
  instances realizable by $\cF$, there exists a sequence
  $\crl*{x_t}_{t=1}^{T}$ and instance with
  $\fstar$ as the Bayes reward function such that
  \begin{equation}
    \label{eq:eluder_lb}
    \En\brk*{\Reg} =
    \bigom\prn*{\frac{\PolicyEluderL}{\Delta}}.
  \end{equation}
\end{enumerate}
\end{theorem}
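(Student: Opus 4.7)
The strategy mirrors the proof of \pref{thm:star_lb_simple}, but uses an adversarial context sequence tailored to exploit the sequential ``$j<i$'' structure in the definition of the \policyeluder. Let $d=\PolicyEluderL$ and extract a witnessing sequence $(x\ind{1},a\ind{1}),\ldots,(x\ind{d},a\ind{d})$ together with policies $\pi\ind{1},\ldots,\pi\ind{d}\in\Pi$ with $\pi\ind{i}(x\ind{i})=a\ind{i}\neq\pistar(x\ind{i})$ and $\pi\ind{i}(x\ind{j})=\pistar(x\ind{j})$ whenever $j<i$ and $x\ind{j}\neq x\ind{i}$. I would then construct a value function class $\cF\ni\fstar$ (inducing $\pistar$) which, for each $i\in[d]$, contains a function $f\ind{i}$ inducing $\pi\ind{i}$ that differs from $\fstar$ only in the reward of action $a\ind{i}$ at context $x\ind{i}$: set $\fstar(x\ind{i},\pistar(x\ind{i}))=\tfrac{1}{2}+2\Delta$, $\fstar(x\ind{i},a\ind{i})=\tfrac{1}{2}$, $f\ind{i}(x\ind{i},a\ind{i})=\tfrac{1}{2}+3\Delta$, with the remaining values chosen to realize the prescribed policies with uniform gap $\Delta$. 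The eluder agreement condition ensures $\fstar$ and $f\ind{i}$ can be made to agree at $x\ind{j}$ for $j<i$, and any additional functions needed to obtain $\Pi=\{\pi_f\mid f\in\cF\}$ exactly can be appended harmlessly.

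The adversary fixes the context sequence deterministically: present $x\ind{i}$ for $N\ldef T/d$ consecutive rounds, in order $i=1,\ldots,d$. With Bernoulli rewards, the only contribution to $\mathrm{KL}\prn{\Prob_{\fstar}\|\Prob_{f\ind{i}}}$ on the observation sequence up to round $iN$ comes from pulls of $a\ind{i}$ at $x\ind{i}$, each contributing $O(\Delta^2)$. The heart of the argument is a Bretagnolle--Huber change-of-measure: letting $M_i\ldef\En_{\fstar}[\text{pulls of }a\ind{i}\text{ at }x\ind{i}]$, either $M_i=\Omega(1/\Delta^{2})$, or else $\mathrm{KL}\prn{\Prob_{\fstar}\|\Prob_{f\ind{i}}}=O(M_i\Delta^2)=o(1)$, in which case Pinsker's inequality transfers the small $a\ind{i}$-pull count on $\fstar$ to $f\ind{i}$, forcing $\Omega(\Delta N)=\Omega(\Delta T/d)$ regret on the $f\ind{i}$ instance and contradicting the assumed uniform upper bound $\Delta T/(32 d)$. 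Thus $M_i=\Omega(1/\Delta^2)$ for every $i$, and summing the $2\Delta$ regret cost per such pull on $\fstar$ gives $\En\brk*{\Reg_T}=\Omega(d/\Delta)$ on the $\fstar$ instance with this adversarial sequence.

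The main obstacle I anticipate is the careful construction of $\cF$: it must induce the given class $\Pi$ exactly, satisfy uniform gap $\Delta$ for every member, and contain alternative functions $f\ind{i}$ whose agreement with $\fstar$ at the earlier contexts $x\ind{j}$ ($j<i$) is precisely what the eluder condition provides (and not what the star-number condition would). The asymmetric ``only pulling $a\ind{i}$ is informative'' design of the reward differences is crucial, as it yields a KL bound that scales with $M_i$ rather than $N$; this is what ultimately drives the desired $\Omega(d/\Delta)$ conclusion rather than a weaker $\Omega(d/\Delta^{2})$-style bound and distinguishes the adversarial-context lower bound from its stochastic counterpart.
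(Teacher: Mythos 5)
Your proposal is correct and follows essentially the same route as the paper's proof: the same deterministic block context sequence presenting each $x\ind{i}$ for $T/d$ consecutive rounds, the same construction in which $f\ind{i}$ deviates from $\fstar$ only at $(x\ind{i},a\ind{i})$ and agrees with $\fstar$ on earlier witnesses (exploiting the ``$j<i$'' eluder structure), and the same Bretagnolle--Huber/high-probability-Pinsker change-of-measure forcing $\Omega(1/\Delta^2)$ pulls of $a\ind{i}$ per block. Your presentation as an either/or dichotomy and the paper's direct application of \pref{lem:pinsker} are the same argument phrased differently, and your slight variation in the numerical values of $\fstar$ and $f\ind{i}$ is immaterial.
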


\subsection{Discussion}
\subsubsection{Proof Techniques}\label{sec:cb_prooftech}
\paragraph{Policy disagreement-based upper bound
  (\pref{thm:disagreement_ub})}
Our proof of \pref{thm:disagreement_ub} builds on the regret analysis
framework established in \cite{simchi2020bypassing}, which interprets
\igw{} as maintaining a distribution over policies in the
\emph{universal policy space} $\cA^{\cX}$, and shows that the induced
distribution of policies is a solution to an \emph{implicit
  optimization problem} which (when configured appropriately) provides
a sufficient condition for minimax contextual bandit
learning. Following this framework, we also view \mainalg's sequential
\igw{}  procedure as implicitly maintaining a sequence of
distributions over  policies, but with an additional key property: the
support of the implicit distribution over policies is
\textit{adaptively shrinking}. This is enabled by \mainalg's
elimination procedure and is essential to our instance-dependent
analysis. We show that the implicit distribution over policies given
by \mainalg is a solution to a novel \emph{data-driven} implicit
optimization problem (\pref{lm:iop-i}), which, when configured
appropriately by adaptively selecting the learning rate with
\optionone, provides a sufficient condition for optimal policy
disagreement-based instance-dependent contextual bandit learning. Our
proof introduces several new techniques to instance-dependent analysis
of contextual bandits, including using disagreement-based indicators
and disagreement probability to obtain faster policy convergence rates
(\pref{lm:reward-estimates,lm:regret-estimates,lm:policy-convergence}).
We also remark that the selection of the adaptive learning rate is
non-trivial, and we derive the schedule \optionone by carefully balancing key quantities appearing in our analysis.

\paragraph{Value function disagreement-based upper bound (\pref{thm:value_disagreement_ub})} 
The proof of \pref{thm:value_disagreement_ub} %
consists of two steps. %
In the first step, we build on the minimax analysis framework of
\cite{simchi2020bypassing} and show that \mainalg with \optiontwo
always guarantees the minimax rate. A new trick that we use here is to
carefully track the adaptive value of $\lambda_m$ and use it to infer
the exploration cost under the current instance. In the second step,
we establish the $\frac{\ValueDisA\cdot{}A\log|\cF|}{\Delta}$-type
instance-dependent upper bound for regret. The analysis is driven by a
key inequality (\pref{lm:expected-delta}), which provides a sharp
upper bound on $\E_{\cD}[w(x;\cF_m)]$ in terms of the ratio
$\frac{\ValueDisA}{\Delta}$. Beyond giving a means to bound the
(expected) instantaneous regret in terms of $\ValueDisA$ and $\Delta$,
this allows us to adaptively maintain an estimated lower bound for
$\frac{\ValueDisA}{\Delta}$ based on empirical data. We then use an
induction argument to show that the specification of $\gamma_m$ in \optiontwo enables \mainalg to enjoy a near-optimal instance-dependent guarantee.

\paragraph{Lower bounds}
Our lower bounds build on the work of \cite{raginsky2011lower}, which
provides information-theoretic lower bounds for passive and active
learning in terms of the disagreement coefficient. As in this work, we
rely on a specialized application of the Fano method using the
\emph{reverse} KL-divergence, but with some refinements to make the
technique more suited for \emph{regret} lower bounds. For
\pref{thm:disagreement_lb}, we also incorporate improvements to the
method suggested by
\cite{hanneke2014theory} to obtain the
correct dependence on $\log\abs{\cF}$.

\paragraph{Value function star number bounds disagreement coefficient
  (\pref{{thm:disagreement_to_star}})}
The proof of \pref{thm:disagreement_to_star} is somewhat different
from the proof of the analogous policy-based result by
\cite{hanneke2015minimax}. The key step toward proving \pref{{thm:disagreement_to_star}} is
to prove an empirical analogue of the result that holds whenever $\cD$
is uniform over a finite sequence of examples. This result is given in
\pref{lem:star}, and is motivated by a property of the eluder
dimension established in Proposition 3 of \cite{russo2013eluder}, with
their ``$\sum_{j< i}$''-based definition changed to our ``$\sum_{j\ne
  i}$''-based definition. The proof of \pref{lem:star} trickier, however, 
as our ``$\sum_{j\ne i}$''-based definition 
breaks several combinatorial properties utilized in the proof of  \cite{russo2013eluder}.
We address this challenge by proving a new combinatorial lemma (\pref{lem:comb}), 
which is fairly general and may be interesting on its own right.
Nevertheless, our upper bound is quadratic in $\ValueStarL{\Delta}$
rather than linear, and
we hope that this dependence can be improved in future work.

\subsubsection{Related Work}
Gap-dependent regret bounds for contextual bandits have not been
systematically studied at the level of generality we consider
here, and we are not aware of any prior lower bounds beyond the linear
setting. Most prior work has focused on structured function classes
such as linear \citep{dani2008stochastic,abbasi2011improved,hao2019adaptive} and
nonparametric Lipschitz/\Holder classes
\citep{rigollet2010nonparametric,perchet2013multi,hu2020smooth}. 

Our work draws inspiration from \cite{krishnamurthy2017active}, who defined
variants of the disagreement coefficient which depend on
scale-sensitive properties of the class $\cF$ in the context of cost-sensitive multiclass active
learning. Compared to these results, the key difference is that our
\valuedis is defined in terms of the $L_2$ ball for the class $\cF$
rather than the excess risk ball for the induced policy class. This change
is critical to ensure that the \valuedis is bounded by the \valuestar,
and in particular that it is always bounded for linear classes.

Our work also builds on \cite{foster2018practical}, who give
instance-dependent guarantees for
the generalized UCB algorithm and an action elimination variant for
general function classes based on the cost-sensitive multiclass disagreement coefficients
introduced in \cite{krishnamurthy2017active}. We improve upon this
result on several fronts: 1) As mentioned above, our notion of
\valuedis is tighter, and is always bounded by the \valuestar and
\valueeluder 2) we attain optimal dependence on the gap, 3)
our algorithms are guaranteed to attain the minimax rate in the worst
case, and 4) we complement these results with lower bounds.

Lastly, we mention that while there are no prior lower bounds for contextual bandits based on the
eluder dimension, \cite{wen2017efficient} give an eluder-based lower
bound for reinforcement learning with deterministic transitions and
known rewards. This result is closer in spirit to our
disagreement-based lower bounds
(\pref{thm:disagreement_lb,thm:value_disagreement_lb}), is it applies
to a carefully constructed function class rather than holding for all
function classes, and mainly serves to demonstrate the worst-case
tightness of a particular upper bound.

\subsection{Extensions}
\label{sec:cb_extensions}
We conclude this section by presenting some basic extensions of our
contextual bandit results, including extensions of our regret bounds
to handle infinite classes and weaker
noise conditions.

\paragraph{Infinite function classes}
As we have mentioned, \pref{alg:main}, \pref{thm:disagreement_ub} and
\pref{thm:value_disagreement_ub} trivially extend to infinite $\cF$,
with the dependence on $\log|\cF|$ in the algorithm's parameters and
the regret bounds  replaced by standard learning-theoretic complexity
measures such as the pseudodimension, (localized) Rademacher
complexity, or metric entropy. This is because the analysis of
\mainalg (see \pref{app:cb_upper}) does not rely on any complexity assumptions
for $\cF$, except for \pref{lm:high-prob-event}, which uses a standard
uniform martingale concentration bound for the square loss to show
that the empirical risk minimizer $\fhat_m$ has low excess risk at each
epoch. Therefore, to extend our results to infinite $\cF$, one only
needs to replace \pref{lm:high-prob-event} with an analogous uniform
martingale concentration inequality for infinite classes. Such results
have already been established in the literature, see, e.g., \cite{krishnamurthy2017active} and \cite{foster2018practical}.

\paragraph{Alternative noise conditions}
Beyond uniform gap, \mainalg can also adapt to the Tsybakov
noise condition \citep{mammen1999smooth,tsybakov2004optimal,audibert2007fast,rigollet2010nonparametric,hu2020smooth}, as the following
proposition shows.
\begin{proposition}[Regret under the Tsybakov noise condition]
  \label{prop:tsybakov}
  Suppose there exist constants $\alpha
  ,\beta\ge0$ such
  that
  \[
    \bbP_{\cD}\prn*{f^*(x,\pistar(x))-\max_{a\neq{}\pistar(x)}f^*(x,a)\le\gamma}\le\beta\gamma^{\alpha},~~\forall
    \gamma\ge0.
  \]Then \pref{alg:main} with \optionone
  ensures that
\[
\En\brk{\Reg}=\bigoht(1)\cdot\min_{\varepsilon>0}\max\left\{\varepsilon
  T,
  \left(\PolicyDis{\veps}{\K\log|\cF|}\right)^{\frac{1+\alpha}{2+\alpha}}T^{\frac{1}{2+\alpha}}
\right\}+ \bigoht(1).
\]
\end{proposition}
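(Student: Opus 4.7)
The plan is to invoke the cost-sensitive version of the regret guarantee in \pref{thm:disagreement_ub} (namely \pref{eq:disagreement_ub_csc}) as a black box, and to translate the Tsybakov noise condition into an upper bound on $\CostDisA$ in terms of $\PolicyDisA$. This reduction avoids reopening the analysis of \mainalg, since \optionone already adapts to $\CostDis{\veps}$ without any gap assumption.

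The first and main step is to establish the containment $\Picsc_{\veps}\subseteq\Pi_{C_{\alpha,\beta}\,\veps^{\alpha/(\alpha+1)}}$ for a suitable constant $C_{\alpha,\beta}$. This follows from a classical Tsybakov argument: for any policy $\pi$ with $R(\pistar)-R(\pi)\le\veps$ and any threshold $\gamma>0$, I would decompose
\[
\veps \;\geq\; \En_{x\sim\cD}\brk*{(\fstar(x,\pistar(x))-\fstar(x,\pi(x)))\,\indic\crl*{\pi(x)\neq\pistar(x)}}
\]
according to whether the pointwise gap $\fstar(x,\pistar(x))-\max_{a\neq\pistar(x)}\fstar(x,a)$ exceeds $\gamma$. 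Using the noise assumption to bound the small-gap mass by $\beta\gamma^\alpha$ yields $\veps\ge\gamma(\bbP_{\cD}(\pi(x)\neq\pistar(x))-\beta\gamma^\alpha)$, and optimizing over $\gamma\propto\veps^{1/(\alpha+1)}$ delivers the desired bound $\bbP_{\cD}(\pi(x)\neq\pistar(x))\lesssim \veps^{\alpha/(\alpha+1)}$.

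Substituting this containment into the definition of $\CostDisA$ produces $\CostDis{\veps}\lesssim \veps^{-1/(\alpha+1)}\PolicyDis{C_{\alpha,\beta}\,\veps^{\alpha/(\alpha+1)}}$. Plugging into \pref{eq:disagreement_ub_csc} and performing the monomial change of variables $u = C_{\alpha,\beta}\,\veps^{\alpha/(\alpha+1)}$ then rewrites the regret bound, up to constants absorbed into $\bigoht(1)$, as
\[
\bigoht(1)\cdot\min_{u>0}\max\crl*{u^{(\alpha+1)/\alpha}\,T,\;\; u^{-1/\alpha}\,\PolicyDis{u}\,\K\log|\cF|}+\bigoht(1).
\]

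The final step is to verify that this two-term min-max expression is equivalent, up to constant factors, to the expression stated in the proposition. Both are balanced at the same critical value $(\PolicyDis{\cdot}\,\K\log|\cF|)^{(\alpha+1)/(\alpha+2)}T^{1/(\alpha+2)}$; away from the balance point, the two parametrizations give matching upper bounds by choosing the proposition's $\veps$ as an appropriate power of $u$ (concretely, $\veps \asymp u^{(\alpha+1)/\alpha}$). I expect the only substantive obstacle to be the Tsybakov manipulation in the first step, which is standard; the rest is routine but must be executed carefully to track the dependence on $\beta$ and to confirm the equivalence of the two min-max forms in both the balanced and unbalanced regimes.
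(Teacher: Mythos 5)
Your overall route is the natural one given the machinery of the paper: invoke \pref{eq:disagreement_ub_csc} as a black box and translate the Tsybakov condition into control on $\CostDisA$ via the containment $\Picsc_{\veps}\subseteq\Pi_{C_{\alpha,\beta}\veps^{\alpha/(\alpha+1)}}$. The Tsybakov step itself is correct (the split at a threshold $\gamma$, the bound $\bbP_\cD(\pi\neq\pistar)\le\beta\gamma^\alpha+\veps/\gamma$, and the optimization $\gamma\propto\veps^{1/(\alpha+1)}$), and the resulting inequality $\CostDis{\veps}\lesssim\veps^{-1/(\alpha+1)}\PolicyDis{C\veps^{\alpha/(\alpha+1)}}$ is sound, using that $\PolicyDisA(\cdot)$ is non-increasing and the supremum structure of the definitions. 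The proposition does not have a dedicated proof in the appendix, and this is the derivation the $\CostDisA$ machinery was clearly set up to support.

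The one concrete issue is in the last step. Writing $L(\veps)=\PolicyDis{\veps}\K\log|\cF|$, your intermediate bound is $\min_{u}\max\{u^{(\alpha+1)/\alpha}T,\;u^{-1/\alpha}L(u)\}$ and you must show this is $\lesssim\min_{\veps}\max\{\veps T,\;L(\veps)^{(\alpha+1)/(\alpha+2)}T^{1/(\alpha+2)}\}$. The substitution you suggest, $\veps\asymp u^{(\alpha+1)/\alpha}$ (equivalently $u=\veps^{\alpha/(\alpha+1)}$), does make the $\veps T$ terms align, but it fails exactly in the regime where the second term of the proposition dominates: for $u=\veps^{\alpha/(\alpha+1)}$ the term $u^{-1/\alpha}L(u)\ge\veps^{-1/(\alpha+1)}L(\veps^{\alpha/(\alpha+1)})$ need not be $\lesssim L(\veps)^{(\alpha+1)/(\alpha+2)}T^{1/(\alpha+2)}$ when $\veps^{(\alpha+2)/(\alpha+1)}T<L(\veps)$. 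A substitution that does work is $u(\veps)=\max\{\veps,\;(L(\veps)/T)^{\alpha/(\alpha+2)}\}$: when $u=\veps$ both terms are directly $\le\veps T$ (using $\veps\le1$ and $L(\veps)\le T\veps^{(\alpha+2)/\alpha}$), and when $u=(L(\veps)/T)^{\alpha/(\alpha+2)}$ both terms evaluate to exactly $L(\veps)^{(\alpha+1)/(\alpha+2)}T^{1/(\alpha+2)}$ after using $L(u)\le L(\veps)$. You flagged this step as needing care, and it does; the specific monomial change of variables you named is the part that does not survive the unbalanced regime.
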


\paragraph{Tightening the \valuedis}
The supremum over the action distribution $p$ in the definition
\pref{eq:value_disagreement_full} of the
\valuedis is more pessimistic than
what is actually required to analyze
\mainalg. Consider the following action distribution-dependent
definition:
\begin{equation}
  \label{eq:value_disagreement_p}
  \ValueDisP{\Delta_0}{\veps_0} = \sup_{\Delta>\Delta_0,\veps>\veps_0}\frac{\Delta^{2}}{\veps^{2}}\bbP_{\cD,p}\prn*{
    \exists{}f\in\cF: \abs*{f(x,a)-\fstar(x,a)}>\Delta,\;\nrm*{f-\fstar}_{\cD,p}\leq\veps
    }.
\end{equation}
The regret bound in \pref{thm:value_disagreement_ub} can be tightened to
depend on $\sup_{p\in\mathcal{P}}\ValueDisP{\Delta/2}{\veps_T}$, 
where $\cP$ is a set of action
distributions with favorable properties that can lead to tighter bounds. In particular, the proof of \pref{thm:value_disagreement_ub} implies that for any instance with uniform gap $\Delta$, if $\ValueDisP{\Delta/2}{\veps_T}\le\theta$ for all $p$ such that $p(\pi^*(x)|x)\ge A^{-1}$ for all $x$, then \mainalg with \optiontwo ensures that
\[
\E[\Reg]=\bigoht(1)\cdot\min\crl*{
      \sqrt{\K{}T\log\abs{\cF}}, \frac{\theta A\log\abs{\cF}}{\Delta}}+\bigoh(1).
yx\]
 The following result shows that this property leads to
dimension-independent bounds for sparse linear function classes.
\begin{proposition}
  \label{prop:sparse_linear}
  Consider the function class
  $\cF=\crl*{(x,a)\mapsto\tri*{w,\phi(x,a)}\mid{}w\in\bbR^{d},\nrm*{w}_{0}\leq{}s}$,
  where $\nrm*{\phi(x,a)}_{\infty}\leq{}1$. Define
  $\Sigma^{\star}=\En_{\cD}\brk*{\phi(x,\pistar(x))\phi(x,\pistar(x))^{\trn}}$,
  and let
  $\lambda_{\mathrm{re}}=\inf_{w\neq{}0,\nrm*{w}_{0}\leq{}2s}\tri*{w,\Sigma^{\star}w}/\nrm*{w}^{2}_2$
  be the restricted eigenvalue. Then $\forall \Delta,\veps>0$,
  \[
    \ValueDisP{\Delta}{\veps}\leq{}2\alpha^{-1}\lambda_{\mathrm{re}}^{-1}s
  \]
  for all $p$ such that $p(\pistar(x)|x)\geq{}\alpha$ for all $x$.
\end{proposition}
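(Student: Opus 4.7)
The plan is to unwind the definition and reduce everything to an algebraic statement about $2s$-sparse vectors in $\bbR^{d}$. For any $f \in \cF$ with parameter $w$, let $u = w - w^{\star}$, so $u$ is $2s$-sparse and $f(x,a) - \fstar(x,a) = \langle u, \phi(x,a)\rangle$. Then the $L_2(\cD,p)$ norm becomes $\|f-\fstar\|_{\cD,p}^{2} = u^{\trn} \Sigma_p u$, where $\Sigma_p \ldef \En_{x\sim\cD, a\sim p(x)}[\phi(x,a)\phi(x,a)^{\trn}]$. The event defining $\ValueDisP{\Delta}{\veps}$ then becomes: there exists a $2s$-sparse $u$ with $u^{\trn}\Sigma_p u \leq (\veps')^{2}$ and $|\langle u, \phi(x,a)\rangle| > \Delta'$.

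First, I would exploit the assumption $p(\pistar(x)\mid x)\geq \alpha$. Dropping every action except $\pistar(x)$ from the expectation gives $\Sigma_p \succeq \alpha \Sigma^{\star}$ in the positive semidefinite order, so for any $2s$-sparse $u$,
\[
u^{\trn}\Sigma_p u \;\geq\; \alpha\, u^{\trn}\Sigma^{\star} u \;\geq\; \alpha\, \lambda_{\mathrm{re}}\, \|u\|_{2}^{2}
\]
by the definition of the restricted eigenvalue (applied to $2s$-sparse vectors). Consequently, the $\veps'$ constraint yields the deterministic bound $\|u\|_{2} \leq \veps'/\sqrt{\alpha\lambda_{\mathrm{re}}}$.

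Next, combining H\"older's inequality with $\|\phi(x,a)\|_{\infty}\leq 1$ and Cauchy--Schwarz on the support (of size at most $2s$),
\[
|\langle u, \phi(x,a)\rangle| \;\leq\; \|u\|_{1} \;\leq\; \sqrt{2s}\,\|u\|_{2} \;\leq\; \veps'\sqrt{\tfrac{2s}{\alpha\lambda_{\mathrm{re}}}}
\]
holds \emph{pointwise} in $(x,a)$. So if $\Delta' > \veps'\sqrt{2s/(\alpha\lambda_{\mathrm{re}})}$, the disagreement event is empty and the contribution to $\ValueDisP{\Delta}{\veps}$ is zero; otherwise, the trivial bound $\Pr[\cdot]\leq 1$ combined with the inequality $(\Delta'/\veps')^{2} \leq 2s/(\alpha\lambda_{\mathrm{re}})$ yields
\[
\tfrac{(\Delta')^{2}}{(\veps')^{2}}\,\Pr_{\cD,p}\prn*{\exists f \in \cF : |f(x,a)-\fstar(x,a)|>\Delta',\;\|f-\fstar\|_{\cD,p}\leq \veps'} \;\leq\; \tfrac{2s}{\alpha\lambda_{\mathrm{re}}}.
\]
Taking the supremum over $\Delta' > \Delta$ and $\veps' > \veps$ gives the claimed bound. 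There is no real obstacle here: the argument is essentially a deterministic one-line sparse-vector estimate, and the restricted eigenvalue condition is tailored exactly to convert the $L_2(\cD,p)$ radius into an $\ell_2$ radius on $u$. The only step worth double-checking is that $w - w^{\star}$ is $2s$-sparse (both $w,w^{\star}$ are $s$-sparse), which justifies invoking $\lambda_{\mathrm{re}}$ as defined over $2s$-sparse directions.
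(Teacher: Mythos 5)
Your proof is correct and follows essentially the same route as the paper's: pass to the $2s$-sparse difference vector $u=w-w^\star$, use $p(\pistar(x)\mid x)\geq\alpha$ to get $\Sigma_p\succeq\alpha\Sigma^\star$ and hence $\alpha\lambda_{\mathrm{re}}\|u\|_2^2\leq\|f-\fstar\|_{\cD,p}^2$, apply $|\langle u,\phi\rangle|\leq\|u\|_1\leq\sqrt{2s}\|u\|_2$, and observe that the resulting condition is deterministic so the probability is either $0$ or bounded trivially by $1$, yielding the claimed ratio.
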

As a concrete example, if $\phi(x,\pistar(x))\sim\unif(\pmo^{d})$ we
have $\lambda_{\mathrm{re}}=1$, so that the bound is indeed dimension-independent.

\paragraph{Handling multiple optimal actions}
For simplicity, we assume that $\arg\max_{a\in\cA} f^*(x,a)$ is unique for all $x$ in the main body of the paper. When such assumption does not hold, we keep the original definition of $\pi^*(x)$ (which makes $\pi^*(x)$ unique for each $x\in\cX$), while defining 
\[\pi^*_{\rm set}(x)\ldef{}\{a\in\cA\mid f^*(x,a)=\max_{a'\in\cA}f^*(x,a')\},~~\forall x\in\cX.\]
We then make the following modifications to our framework. First, we
modify the uniform gap condition \pref{eq:gap} to require that for all
$x\in\cX$, 
\[
\fstar(x,\pistar(x)) -\fstar(x,a)\geq\Delta\quad\forall{}a\notin\pi^*_{\rm set}(x).
\]
Second, we modify the definition of \policydis to
\begin{equation*}
  \PolicyDisL{\veps_0}= \sup_{\veps\geq{}\veps_0}\frac{\bbP_{\cD}\prn*{x:\exists\pi\in\Pi_{\veps}: \pi(x)\neq\pistar(x)}}{\veps},
\end{equation*}
where $\Pi_{\veps}\ldef{}\crl*{\pi\in\Pi:
  \bbP_{\cD}\prn{\pi(x)\notin\pistar_{\rm set}(x)}\leq{}\veps}$. By doing so, all our guarantees for \mainalg extend to the general setting where $\arg\max_{a\in\cA} f^*(x,a)$ may not be unique for some $x\in\cX$.

\section{Reinforcement Learning}
\label{sec:rl}
We now give disagreement-based guarantees for reinforcement learning
with function approximation in the
block MDP setting (cf. \pref{sec:rl-setup}). Before proceeding, let us introduce some additional notation.

\paragraph{Additional notation}
For any Markov policy $\pi(x)$, let
$\Qf^{\pi}_h(x,a)=\En\brk*{\sum_{h'\geq{}h}^{H}r_{h'}\mid{}x_h=x,a_h=a}$ be
the corresponding Q-function. We likewise define
$\Vf^{\pi}_h(x)=\max_{a\in\cA}\Qf^{\pi}(x,a)$, as well as
$\Vf^{\pi}=\En_{x_1}\brk*{\Vf_1^{\pi}(x_1)}$ and
$\Vstar=\En_{x_1}\brk*{\Vstar_1(x_1)}$. Next, for any function
$V:\cX\to\bbR$ we define the transition operator by
\[
\brk*{\Pstar_h{}V}(x,a)=\En\brk*{V(x_{h+1})\mid{}x_h=x,a_h=a}.
\]
We also define the Bayes reward function as
\[
\fbayes(x,a) = \En\brk*{r_h\mid{}x_h=x,a_h=a}
\]
for each $x\in\cX_h$. Finally, let $\cF=\cF_1\times\cF_2\times\cdots\times\cF_H$ be the full
regression function class, and define $\Fmax=\max_{h}\abs*{\cF_h}$.

\subsection{The Algorithm}\label{sec:blockalg}
Our main reinforcement learning algorithm, \blockalg, is presented in
\pref{alg:blockalg}. The algorithm follows the optimistic least-squares
value iteration framework
\citep{jin2020provably,wang2019optimism,wang2020provably}, with few key changes that allow us to prove guarantees
based on a suitable notion of \valuedis rather than stronger
complexity measures such as the eluder dimension. The most interesting
aspect of the algorithm is a feature we call the \emph{star hull upper confidence
  bound}: Compared to the classical UCB approach, which computes an
optimistic $Q$-function by taking largest predicted reward amongst all value function in an $L_2$ ball
around an empirical risk minimizer, we add an additional step which
first ``lightly convexifies'' this set. This step is based on techniques from the literature on aggregation in least squares
\citep{audibert2008progressive,liang2015learning}, and leads to more
stable predictions.

In more detail, the algorithm proceeds in $K$ iterations.\footnote{We
  use the term ``iteration'' distinctly from the term ``episode'', as
  each iteration consists of multiple episodes.} In each iteration $k$, we
compute an optimistic Q-function $\Qbar\ind{k}$ such that
\begin{equation}
\Qbar_h\ind{k}(x,a) \geq \Qstar_h(x,a)\quad\text{for all $x$, $a$,
  $h$.}\label{eq:optimism}
\end{equation}
We then take the greedy argmax policy defined by
$\pi\ind{k}(x)=\argmax_{a\in\cA}\Qbar_h\ind{k}(x,a)$ for $x\in\cX_h$,
and gather $H$ trajectories as follows: For each $h$, we roll in to layer $h$ with
$\pi\ind{k}$, then choose actions uniformly at random for the rest of
the episode. These
trajectories are used to refine our value function estimates for
subsequent iterations, with the $h$th trajectory used for estimation at
layer $h$. Choosing actions uniformly ensures that the data gathered
from these trajectories is useful regardless of the action
distribution in subsequent iterations.

Let us now elaborate on the upper confidence bound computation. Let
iteration $k$ and layer $h$ be fixed, and suppose we have already
computed $\Qbar_{h+1}\ind{k}$ and
$\Vbar_{h+1}\ind{k}(x)\ldef{}\max_{a\in\cA}\Qbar_{h+1}\ind{k}(x,a)$. The
first step, following the usual optimistic LSVI schema, is to estimate
a value function for layer $h$ by regressing onto the empirical
Bellman backups from the next layer (\pref{line:erm-rl}):
\begin{align}
\fhat_{h}\ind{k}=\argmin_{f\in\cF_h}\sum_{j<k}\prn*{f(x_h\ind{j,h},a_h\ind{j,h})
- \prn*{r_h\ind{j,h} + \Vbar_{h+1}\ind{k}(x_{h+1}\ind{j,h})
}}^{2};\label{eq:erm-rl}
\end{align}
here the $(j,h)$ superscript on
$(x_h\ind{j,h},a_h\ind{j,h},r_h\ind{j,h},x_{h+1}\ind{j,h})$ indicates that the example was collected in
the $h$th trajectory at iteration $j$.
\pref{ass:completeness} ensures that this regression problem is
well-specified. Let $\cZ_h\ind{k}=\crl*{(x_h\ind{j,h},a_h\ind{j,h})}_{j<k}$, and
define
\begin{equation}
\nrm*{f-f'}_{\cZ}^{2} =
\sum_{(x,a)\in\cZ}\prn*{f(x,a)-f'(x,a)}^{2}.\label{eq:znorm}
\end{equation}
At this point, the usual optimistic value function for layer $h$
(cf. \cite{russo2013eluder,foster2018practical} for contextual
bandits and \cite{jin2020provably,wang2019optimism,wang2020provably}
for RL) is defined as
\[
\Qbar_h(x,a) = \sup\crl*{f(x,a) \mid{} f\in\cF_h, \nrm[\big]{f-\fhat_h\ind{k}}_{\cZ_h\ind{k}}\leq{}\beta_h},
\]
where $\beta_h$ is a confidence parameter. As observed in
\cite{jin2020provably,wang2020provably}, however, this UCB function
can be unstable, leading to issues with generalization when we use it
as a target for least squares at layer $h-1$. Our approach to address
this problem is to expand the supremum above to include the \emph{star
  hull} of $\cF_h$ centered at $\fhat_h\ind{k}$. Define the star hull of $\cF_h$ centered at $f\in\cF_h$ by
\begin{equation}
  \starhull(\cF,f) =\bigcup_{f'\in\cF}\conv(\crl{f', f})
  = \crl*{t(f'-f) + f\mid{}f'\in\cF, t\in\brk*{0,1}}.\label{eq:star_hull}
\end{equation}
We define the \emph{\starucblong} (\pref{line:star-ucb}) by
\begin{equation}
\Qbar_h\ind{k}(x,a) = \sup\crl*{f(x,a) \mid{}
  f\in\starhull(\cF_h,\fhat_h\ind{k}),
  \nrm[\big]{f-\fhat_h\ind{k}}_{\cZ_h\ind{k}}\leq{}\beta_h}.\label{eq:star_ucb}
\end{equation}
When $\cF_h$ is convex this coincides with the usual upper confidence bound, but in the
star hull operation convexifies $\cF_h$ along rays emanating from
$\fhat_h\ind{k}$. This small amount of convexification (note that
$\starhull(\cF_h,f)$ is still non-convex if, e.g., $\cF_h$ is a finite
class), ensures that $\Qbar_h\ind{k}(x,a)$ is Lipschitz as a function
of the confidence radius $\beta_h$, which stabilizes the predictions and
facilitates a tight generalization analysis.
\paragraph{Oracle efficiency}
\blockalg is oracle-efficient, and can be implemented using an offline
regression oracle as follows.
\begin{itemize}
\item At each iteration, the empirical risk minimizer in
  \pref{line:erm-rl} can be computed with a single oracle call.
\item For any $(x,a)$ pair, the star hull UCB function in
  \pref{line:star-ucb} can be computed by reduction to a
  regression oracle. In particular, to compute an $\veps$-approximate UCB:
  \begin{itemize}
  \item For convex function classes, $\bigoh(\log(1/\veps))$ calls are required.
  \item For general (in particular, finite) classes,
    $\bigoht(\veps^{-3})$ oracle calls are required. The key idea here
    is that we can reduce ERM over the star hull to ERM over the
    original class.
  \end{itemize}
\end{itemize}
See \pref{sec:oracle} for more details.
\begin{algorithm}[ht]
  \setstretch{1.1}
  \textbf{input:} 
  Value function classes $\cF_1,\ldots,\cF_{H}$. Number of iterations $K$. 

  \textbf{initialization:}
  \begin{itemize}[leftmargin=*]
  \item[-] Let $\delta=1/KH$. \algcomment{Failure probability.}
  \item[-] Let $\beta^2_{H}=400H^{2}\log\prn{\Fmax{}HK\delta^{-1}}$
    \algcomment{Confidence radius.}
  \item[] and
    $\beta^{2}_{h} = \frac{1}{2}\beta_{h+1}^{2}+
    60^4H^{2}A^{2}\btheta^{\val}_{h+1}(\cF_{h+1},\beta_{h+1}K^{-1/2})^{2}\log^2(HKe)\log(2\Fmax{}HK\delta^{-1})
    + 700H^{2}S\log(2eK)$ for all $1\leq{}h\leq{}H-1$.
  \end{itemize}

  \textbf{algorithm:}
  \begin{algorithmic}[1]
    \For{iteration $k=1,\ldots,K$}
    \State{}Set $\Vbar_{H+1}\ind{k}(x)=0$.
    \State{}Define $\cZ_h\ind{k}=\crl*{(x_h\ind{j,h},a_h\ind{j,h})}_{j<k}$.
      \For{$h=H,\ldots,1$}
            \State Set
$\fhat_{h}\ind{k}=\argmin_{f\in\cF_h}\sum_{j<k}\prn*{f(x_h\ind{j,h},a_h\ind{j,h})
- \prn*{r_h\ind{j,h} + \Vbar_{h+1}\ind{k}(x_{h+1}\ind{j,h})
}}^{2}$.\label{line:erm-rl}
\Statex{}~~~~~~~~~\algcomment{Compute optimistic value function via star-hull upper confidence bound.}
            \State Define
            \[
              \Qbar_{h}\ind{k}(x,a)
              =\sup\crl*{f(x,a)\mid{}f\in\starhull(\cF_h,\fhat_h\ind{k}), \nrm[\big]{f-\fhat_h\ind{k}}_{\cZ_h\ind{k}}\leq{}\beta_h}.
            \]\label{line:star-ucb}
            \State{}$\pi\ind{k}(x)\ldef{}\argmax_{a\in\cA}\Qbar_h\ind{k}(x,a)$
            for all $x\in\cX_h$.
            \State{}$\Vbar_h\ind{k}(x)\ldef{}\max_{a\in\cA}\Qbar_h\ind{k}(x,a)$. %
      \EndFor
      \For{$h=1,\ldots,H$}
      \State{}Gather trajectory
      $(x_1\ind{k,h},a_1\ind{k,h},r_1\ind{k,h}), \ldots,
      (x_H\ind{k,h},a_H\ind{k,h},r_H\ind{k,h})$ by rolling in with
      $\pi\ind{k}$\Statex{}~~~~~~~~~for layers $1,\ldots,h-1$ and selecting actions
      uniformly for layers $h,\ldots,H$.
      \EndFor
    \EndFor
    \State \textbf{return} $\pi\ind{k}$ for $k\sim\unif(\brk*{K})$.
  \end{algorithmic}
  \caption{\blockalg}
  \label{alg:blockalg}
\end{algorithm}

\subsection{Main Result}
We now state the main guarantee for \blockalg. Our guarantee depends on the following
``per-state'' gap and worst-case gap:
\begin{align*}
  &\gap(s) =
    \min_{a}\inf_{x\in\supp(\emi(s))}\crl*{\gap(x,a)\mid{}\gap
    (x,a)>0},\\
  &\gapmin=\min_{s}\gap(s),
\end{align*}
where we recall that $\gap(x,a)\ldef \Vstar_h(x) - \Qstar_h(x,a)$.
%
  We adapt the \valuedis to the block MDP setting as follows. Let
  $\piunif$ be the policy that selects actions uniformly from
  $\cA$. For each latent state $s\in\cS_h$, we define
  \begin{align}
    \btheta^{\val}_{s}(\cF_h,\veps_0) =
    \sup_{\fstar\in\cF_h}\sup_{\veps\geq{}\veps_0}\frac{1}{\veps^{2}}\En_{x\sim\emi(s),a\sim\piunif}\sup\crl[\Big]{
     \abs*{f(x,a)-\fstar(x,a)}^{2}\mid f\in\cF_h,
    \nrm*{f-\fstar}_{s}\leq{}\veps},\label{eq:rl_disagreement}
  \end{align}
  where
  $\nrm*{f}^{2}_s\ldef{}\En_{x\sim\emi(s),a\sim\piunif}\brk{f^{2}(x,a)}$.
  This notion is closely related to the \valuedis
  \pref{eq:value_disagreement_full} for the contextual bandit setting
  via Markov's inequality,
  with the context distribution $\cD$ replaced by the latent state's
  emmission distribution $\emi(s)$. We
additionally define the total disagreement for layer $h$ by
\[
\btheta^{\val}_h(\cF_h,\veps) = \sum_{s\in\cS_h}\btheta^{\val}_s(\cF_h,\veps),
\]
and define $\btheta^{\val}_{\max}(\cF,\veps)=\max_{h}\max_{s\in\cS_h}\btheta^{\val}_s(\cF_h,\veps)$.

Our main theorem bounding the error of \blockalg is as
follows. As with our contextual bandit results, we focus on finite
classes $\cF$ for simplicity, but the result trivially extends to
general function classes.
\begin{theorem}
  \label{thm:block_mdp}
  \pref{alg:blockalg} guarantees that
    \begin{align*}
\Vstar-\En\brk{\Vf^{\pi}}   &=
                                            \bigoht\prn*{
                                            \frac{\btheta^{\val}_{\max}(\cF,\beta_HK^{-1/2})^3
                              \cdot{}H^{5}A^{3}S^{2}\log\abs*{\cF}}{\gapmin{}K}},
    \end{align*}
                                  and does so using at most $HK$ trajectories.
More generally, the algorithm guarantees that
  \begin{align*}
\Vstar-\En\brk{\Vf^{\pi}}   &=
                                            \bigoht\prn*{
                              C_{\cM}\cdot\frac{H^{2}A^{3}\max_{h}\btheta^{\val}_h(\cF,\beta_hK^{-1/2})^2\log\abs*{\cF}
                                            + H^{3}SA}{K}},
  \end{align*}
  where $C_{\cM}\ldef\sum_{h=1}^{H}\sum_{s\in\cS_h}\frac{\btheta_s^{\val}(\cF_h,\beta_hK^{-1/2})}{\gap(s)}$.

\end{theorem}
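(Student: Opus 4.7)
The plan follows the optimistic LSVI template with three ingredients tailored to the block MDP: validity of the \starucb confidence sets, generalization control tied to $\btheta^{\val}$ via the star-hull convexification, and a per-latent-state clipping argument that yields the $1/\gap(s)$ scaling. By \pref{ass:completeness}, the Bellman backup of any bounded $V : \cX_{h+1} \to \brk{0,H}$ lies in $\cF_h$. Martingale concentration for square loss---applied via a covering of the data-dependent targets $\Vbar_{h+1}\ind{k}$, which drives the recursive definition of $\beta_h$---then shows that with probability at least $1 - \delta$, the true Bellman backup of $\Vbar_{h+1}\ind{k}$ lies in $\{f \in \cF_h : \|f - \fhat_h\ind{k}\|_{\cZ_h\ind{k}} \leq \beta_h\}$ for every $(h,k)$. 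Since this set is contained in $\starhull(\cF_h, \fhat_h\ind{k})$, the \starucb is uniformly optimistic: $\Qbar_h\ind{k} \geq \Qstar_h$. A standard value-difference decomposition then yields
\[
  \Vstar - \Vf^{\pi\ind{k}} \leq \En_{\pi\ind{k}}\brk*{\sum_{h=1}^{H} b_h\ind{k}(x_h, a_h)},
\]
where $b_h\ind{k}(x,a) \ldef \Qbar_h\ind{k}(x,a) - \fbayes(x,a) - [\Pstar_h \Vbar_{h+1}\ind{k}](x,a) \geq 0$ is the overestimation bonus, and one checks $|b_h\ind{k}(x,a)| \leq w_h\ind{k}(x,a) \ldef \sup\{|f(x,a) - g(x,a)| : f,g \in \starhull(\cF_h, \fhat_h\ind{k}),\ \|f-g\|_{\cZ_h\ind{k}} \leq 2\beta_h\}$.

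The main technical step is to convert the empirical-norm constraint defining $w_h\ind{k}$ into a population bound governed by $\btheta^{\val}$. Because layer-$h$ data is collected under uniform actions, I would partition $\cZ_h\ind{k}$ by latent state $s \in \cS_h$ and invoke empirical-to-population concentration to show $\|f - g\|_s^2 \lesssim \beta_h^2 / N_s\ind{k}$ for any $f, g$ in the confidence set, where $\|\cdot\|_s$ is the $L_2$ norm under $\emi(s) \times \piunif$ and $N_s\ind{k}$ counts visits to $s$. The star-hull convexification is essential here: it makes the confidence set star-shaped around $\fhat_h\ind{k}$, enabling the sharp $L_2$-ball localization used in aggregation theory for least squares. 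Plugging into the second-moment form of \pref{eq:rl_disagreement} and importance-weighting from $\piunif$ to $\pi\ind{k}$ (which costs a factor of $A$) gives
\[
  \En_{\pi\ind{k}}\brk*{w_h\ind{k}(x_h, a_h)^2 \mid s_h = s} \lesssim \frac{A \cdot \btheta^{\val}_s(\cF_h, \beta_h K^{-1/2}) \cdot \beta_h^2}{N_s\ind{k}}.
\]

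The last step is per-state clipping in the style of \cite{simchowitz2019non}: at each visit to $s$ the bonus contributes either $H$ or $b_h\ind{k}(x,a)^2/\gap(s)$ to the suboptimality, whichever is smaller. Telescoping the previous display over visits to $s$ bounds the total bonus at that latent state by $\bigoht(A H \btheta^{\val}_s \beta_h^2 / \gap(s))$. Summing over $s$ and $h$ and averaging over $k \sim \unif([K])$ produces the general bound in the theorem, with the $H^3 SA/K$ term absorbing lower-order concentration errors. The first display then follows by using $\gap(s) \geq \gapmin$, $\sum_s \btheta^{\val}_s \leq S \btheta^{\val}_{\max}$, and unrolling the recursion $\beta_h^2 \propto \beta_{h+1}^2 + (HA\btheta^{\val})^2$, which introduces additional powers of $\btheta^{\val}_{\max}$ and $H$ through the layer coupling.

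The main obstacle is the middle step: translating an empirical $L_2$-ball constraint on $\cZ_h\ind{k}$ into a pointwise upper bound on $w_h\ind{k}(x,a)$ controlled by $\btheta^{\val}$, with polynomial rather than exponential dependence on the disagreement coefficient. The star-hull trick is what enables this; without it, the confidence set need not be star-shaped and the standard localization argument breaks. A secondary difficulty is that the regression target $\Vbar_{h+1}\ind{k}$ is itself a function of all prior data, which forces a union bound over a cover of possible targets; this bookkeeping is what produces the recursive $\beta_h$ schedule and ultimately the $H^5$ and $(\btheta^{\val})^3$ factors in the first bound.
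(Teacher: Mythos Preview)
Your outline matches the paper's architecture---optimism from the \starucb sets, the clipped decomposition of \cite{simchowitz2019non}, per-latent-state surplus bounds tied to $\btheta^{\val}$, and a recursive confidence-radius argument---but two of your explanations are off in ways that matter for actually carrying the proof through.

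First, you have misplaced the role of the star hull. In the surplus step, passing from the empirical constraint $\nrm{f-\fhat_h\ind{k}}_{\cZ_h\ind{k}}\leq\beta_h$ to a per-state bound $\En_{\emi(s),\piunif}[(f-\fhat_h\ind{k})^2]\lesssim\beta_h^2/q_h\ind{k}(s)$ is just one uniform martingale inequality plus the definition of $\btheta^{\val}_s$; the star hull is not the enabling device here. Its essential role is in the confidence-radius proof: because $\starhull(\cF_h,\fhat)$ is closed under contraction toward $\fhat$, the map $\beta\mapsto\sup\{f(x,a):\nrm{f-\fhat}\leq\beta\}$ is Lipschitz in $\beta$. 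This is what lets the paper approximate $\Qbar_{h+1}\ind{k}$ (defined via the data-dependent norm $\nrm{\cdot}_{\cZ_{h+1}\ind{k}}$) by a surrogate $\Qtil_{h+1}\ind{k}$ defined via a \emph{latent-state norm} $\nrm{\cdot}_{\cL_{h+1}\ind{k}}$, with the approximation error $\xi_{h+1}\ind{k}$ controlled by $\btheta^{\val}_{h+1}$ itself.

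Second, that surrogate is the missing idea behind your ``cover of possible targets.'' The point is that $\Qtil_{h+1}\ind{k}$ depends only on the center $\fhat_{h+1}\ind{k}\in\cF_{h+1}$ and the \emph{multiset} of latent states visited in $\cL_{h+1}\ind{k}$, so by a stars-and-bars count it ranges over at most $\abs{\cF_{h+1}}\cdot(2eK)^{S}$ functions---this is the source of the $S\log K$ contribution to $\beta_h$. A direct cover of $\Vbar_{h+1}\ind{k}$, which is a pointwise supremum over an empirical-norm ball indexed by the full observed sequence $\cZ_{h+1}\ind{k}$, has no such polynomial bound. The appearance of $\btheta^{\val}_{h+1}$ in the approximation-error control (not merely ``bookkeeping'') is precisely why $\btheta^{\val}$ enters the $\beta_h$ recursion and compounds to $(\btheta^{\val}_{\max})^{3}$ in the final bound.
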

\newcommand{\thetamax}{\btheta^{\val}_{\mathrm{max}}}
\newcommand{\Deltamin}{\Delta_{\min}}
Let us describe a few key features of this theorem and interpret the result.
\begin{itemize}
\item First, if $\thetamax(\cF,\veps)\propto\polylog(1/\veps)$
  (e.g., for a linear function class), then---ignoring other parameters---we can attain an
  $\veps$-optimal policy using $\frac{1}{\Deltamin\veps}$
  trajectories. This \emph{fast rate} improves over the minimax
  optimal $\veps^{-2}$ rate, and is optimal even for bandits. This is
  the first fast rate result we are aware of for reinforcement learning in
  block MDPs.
\item In light of the results in \pref{sec:cb} this implies that one
  can attain the fast $\frac{1}{\Deltamin\veps}$ rate whenever the
  \valuestar for $\cF$ is bounded.
\item More generally, if
  $\thetamax(\cF,\veps)\propto\veps^{-\rho}$ for $\rho<2/3$,
  then $(\Delta\veps)^{-\frac{2}{2-3\rho}}$ trajectories suffice for
  an $\veps$-optimal policy. However, the guarantee becomes vacuous
  once $\rho\geq{}2/3$. In other words, bounded disagreement
  coefficient is essentially for the algorithm to have low error, and
  it does not necessarily attain the minimax rate if this fails to
  hold. Achieving a best-of-both-worlds guarantee similar to our
  results for contextual bandits is an interesting direction for
  future work.
\item To the best of our knowledge this is the first oracle-efficient
  algorithm that attains near-optimal statistical performance in terms of
  $\veps$ for block MDPs. Of course, this is only achieved in the
  low-noise regime where $\gapmin>0$, and when $\thetamax(\cF,\veps)\propto\polylog(1/\veps)$.
\end{itemize}
We emphasize that while the dependence on all of the
parameters in \pref{thm:block_mdp} can almost certainly be improved,
we hope this result will open the door for further disagreement-based
algorithms and analysis techniques in reinforcement learning.

\subsection{Discussion}
\subsubsection{Proof Techniques}
The proof of \pref{thm:block_mdp} has two main components. The first
part of the proof shows that with high probability, for all iterations
$k$ and layers $h$, the set $\cF_h\ind{k}\ldef\crl[\big]{f\in\cF_h\mid{}\nrm{f-\fhat_h\ind{k}}_{\cZ_h\ind{k}}\leq{}\beta_h}$
contains the Bellman backup
$\brk*{\Pstar_h\Vbar_{h+1}\ind{k}}(x,a)+\fstar(x,a)$ of the value
function from the next layer, which ensures that $\Qbar_h\ind{k}$ is
optimistic in the sense of \pref{eq:optimism} and leads to exploration. Then, in the second
part, we prove a regret decomposition which shows that whenever the
optimistic property holds, the suboptimality of $\pi\ind{k}$ is controlled by
the gap $\Delta$ and the \valuedis $\ValueDisA$.

The first part of the proof (\pref{app:confidence_radius}) boils down to showing that the empirical
risk minimizer in $\fhat_h\ind{k}$ in \pref{eq:erm-rl} has favorable
concentration properties. This is highly non-trivial because the
targets $\Vbar_{h+1}\ind{k}$ in \pref{eq:erm-rl} depend on the entire dataset, which
breaks the independence assumptions
required to apply standard generalization bounds for least
squares. Instead, following \cite{jin2020provably,wang2020provably},
we opt for a \emph{uniform} generalization bound which holds
uniformly over all possible choices of $\Vbar_{h+1}\ind{k}$. To do so,
we must show that $\Vbar_{h+1}\ind{k}$ is approximated by a relatively
low complexity function class, which we accomplish as follows. First, we
show that---thanks to a certain Lipschitz property granted by the star
hull---$\Qbar_{h+1}\ind{k}$ is well approximated by a function
\[
  \Qtil_{h+1}\ind{k}(x,a)
  \ldef{}\sup\crl*{f(x,a)\mid{}f\in\starhull(\cF_{h+1},\fhat_{h+1}\ind{k}),
    \nrm[\big]{f-\fhat_{h+1}\ind{k}}_{\cL_{h+1}\ind{k}}\leq{}\wt{\beta}_{h+1}},
\]
where $\wt{\beta}_{h+1}\approx\beta_{h+1}$, and where
\[
\nrm*{f}^{2}_{\cL_{h+1}\ind{k}}\ldef{}\sum_{j<k}\En_{x_{h+1}\sim\psi(s_{h+1}\ind{j,h+1}),a\sim\piunif}\brk{f^{2}(x,a)}
\]
is the \emph{latent state norm}, which measures the
expected squared error conditioned on the sequence of latent states
$\cL_{h+1}\ind{k}\ldef(s\ind{1,h+1},\ldots,s\ind{k-1,h+1})$
encountered in the trajectories gathered
for layer $h+1$. This
approximation argument is rather non-trivial, and involves a recursion
across all layers that we manage using the disagreement
coefficient. With this taken care of, the next step is to use the
block MDP structure to argue that $\Qtil_{h+1}\ind{k}$ has low
complexity. To see this, observe that $\Qtil_{h+1}\ind{k}$ is completely determined by the center $\fhat\ind{k}_{h+1}$ and the latent state sequence
above. Since the latent state constraint does not
depend on the ordering of the latent states, we can use a counting argument to show that there are
at most $\abs*{\cF}K^{\bigoh(S)}$ possible choices for
$\Qtil_{h+1}\ind{k}$ overall. This suffices to prove the desired
concentration guarantee.

The second part of the proof (\pref{app:rl_main}) proceeds as
follows. Define the Bellman surplus as 
\[
\Ebar\ind{k}_h(x,a)=\Qbar\ind{k}_h(x,a) -
\prn*{\fbayes(x,a) + \brk{\Pstar_h\Vbar\ind{k}_{h+1}}(x,a)},
\]
which measures the width for our upper confidence bound. We use a ``clipped'' regret decomposition from
\cite{simchowitz2019non} to show that whenever the concentration event
from the first part of the proof holds, the suboptimality of $\pi\ind{k}$ is
controlled by the confidence widths:
\begin{align*}
\Vstar-\Vf^{\pi\ind{k}} \approxleq{}
  \sum_{h=1}^{H}\sum_{s\in\cS_h}\bbP_{\pi\ind{k}}(s_h=s)  \cdot \frac{\En_{x_h\sim\emi(s_h),a_h\sim\piunif}\brk*{\Ebar_h\ind{k}(x_h,a_h)^{2}}}{\Delta(s)}.
\end{align*}
In particular, let $n\ind{k,h}(s)$ denote the number of times the
latent state $s$ was encountered in the layer $h$ trajectories prior
to iteration $k$. Our key observation is that bounded disagreement coefficient implies that for each state $s$, 
\[
\En_{x_h\sim\emi(s),a_h\sim\piunif}\brk*{\Ebar_h\ind{k}(x_h,a_h)^{2}}
\approxleq \ValueDisA_{s}\cdot\frac{\beta_{h}^{2}}{n\ind{k,h}(s)}.
\]
In other words, the disagreement coefficient controls the rate
  at which the confidence width shrinks. Moreover, since the width
  for latent state $s$ is proportional to the number of times we have
  visited the state (even though the algorithm cannot observe this
  quantity), we can bound the overall suboptimality across all iterations
  using similar arguments to those employed in the tabular setting \citep{azar2017minimax,simchowitz2019non}.

\subsubsection{Related Work}
\label{sec:rl_related_work}

Our result is closely related to that of \cite{wang2020provably}, who
gave regret bounds for a variant of optimistic LSVI based on the
eluder dimension of $\cF$. Compared to this result, we require the
additional block MDP assumption and finite actions, but our bounds scale with the \valuedis, which can be arbitrarily small compared to
the eluder dimension (\pref{prop:eluder_star_separation}). On the technical side, their algorithm stabilizes the upper confidence
bounds using a sensitivity sampling procedure, whereas we address this
issue using the star hull. \cite{ayoub2020model} give similar eluder
dimension-based guarantees for a model-based algorithm, though the
notion of eluder dimension is somewhat stronger, and it is not clear
whether this algorithm can be made oracle-efficient.

Reinforcement learning with function approximation in block MDPs has
been the subject of extensive recent investigation
\citep{krishnamurthy2016pac,jiang2017contextual,dann2018oracle,du2019provably,du2019latent,misra2019kinematic,feng2020provably,agarwal2020flambe}. In
terms of assumptions, we require the rather strong optimistic
completeness condition, but do not require any reachability conditions
or any clusterability-type assumptions that facilitate the use of
unsupervised learning. The main advantages of our results are 1) we
require only a basic regression oracle for the value function class,
and 2) we attain the optimal $\veps^{-1}$ fast rate in the presence of
the gap and bounded disagreement coefficient.

We should also mention that the gap for $\Qstar$ has been used in a
number of recent results on reinforcement learning with function
approximation \citep{du2019provably,du2019good,du2020agnostic}, albeit
for a somewhat different purpose. These results use the gap to prove
that certain ``non-optimistic'' algorithms succeed, whereas we use it
to beat the minimax rate.

Lastly, we note that the \valuedis is similar to the ``low variance'' parameter used in
\cite{du2019provably} to give guarantees for reinforcement learning
with linear function approximation, but can be considerably smaller
when applied to block MDPs. For example, in the trivial case in
which each emission distribution $\psi(s)$ is a singleton, the
\valuedis is automatically bounded by $1$, while the low variance
assumption may not be satisfied unless the latent MDP is
near-deterministic.

\section{Implementing the Algorithms with Regression Oracles}
\label{sec:oracle}

In \pref{sec:alg} and \pref{sec:blockalg}, we mentioned that both \mainalg (\pref{alg:main}) and \blockalg (\pref{alg:blockalg}) can be efficiently implemented with an offline regression oracle. In this section, we provide more details on this implementation, and on the overall computational complexity of our algorithms. %
Throughout this section, we deal with general (possibly infinite) function classes.

To start with, we introduce the regression oracle that we assume. Let us first consider the contextual bandit setup where $\cF$ is the value function class. Given  $\cF$, we assume a \textit{weighted least squares regression oracle}, which is an {offline} optimization oracle capable of solving  problems of the form
\begin{equation}\label{eq:oracle}\tag{RO}
\oracle(\cH) = \argmin_{f\in\cF}\sum_{(w,x,a,y)\in\cH}w\prn*{f(x,a)-y}^{2},
\end{equation}
where $\cH$  (the input to the oracle) is a set of  examples $(w,x,a,y)$, where $w\in\bbR_{+}$ specifies a weight, $x\in\cX$ specifies a context, $a\in\cA$ specifies an action, and $y$ specifies a \textit{target}. The above weighted least squares problem is very well-studied in optimization and supervised regression literature. In particular, it can be solved in closed form for many simple (e.g., linear) classes, and is amenable to gradient-based methods.

When one solves the regression problem \pref{eq:oracle} using gradient-based methods like stochastic gradient descent (SGD), the convergence rate typically depends on the Lipschitz constants of gradients, which further depend on the range of weights and targets. Motivated by this fact, we further define a range parameter $b$ which describes the range of weights and targets that our oracle accepts. Formally, for all $b>0$, we define
\begin{equation*}
    \oracle_b(\cH) = \argmin_{f\in\cF}\sum_{(w,x,a,y)\in\cH}w\prn*{f(x,a)-y}^{2},\quad\forall \cH\text{ consisting of }\quad(w,x,a,y)\in[0,b]\times\cX\times\cA\times[-b,b].
\end{equation*}
Clearly $\oracle_b$ becomes a stronger as $b$ increases. 
While access to $\oracle_b$ is generally a very mild assumption even when $b$ is large, in order to achieve better computational efficiency, in this section we will be precise about $b$ and aim to invoke $\oracle_b$ with $b$ as small as possible. %

In the block MDP setup, there are multiple value function classes $\cF_1,\cdots,\cF_H$. For this setting, we assume access to the regression oracle \pref{eq:oracle} for each of the classes $\cF_1,\ldots,\cF_H$.\footnote{For notational convenience, in this section, when we use the notation $\cF$ in the block MDP setting, it can stand for any one of $\cF_1,\dots,\cF_H$, rather than the full function class $\cF_1\times\cdots\times\cF_H$ defined in \pref{sec:rl}.}
Again, we use $\oracle_b$ to denote our oracle, where $b$ denotes that the oracle accepts $[-b,b]$-valued weights and targets. %

\subsection{Implementation Details}
\paragraph{Implementing \mainalg}
We first show how to implement \mainalg with the regression oracle. There are three computational tasks in the algorithm which require us to invoke the oracle.
\begin{enumerate}
    \item \pref{line:erm}  requires computing the empirical risk minimizer $\widehat{f}_m$.
    \item \pref{line:disagreement} and \optionone (\pref{line:factor}) require computing the \setname $\cA(x;\cF_m)$ for any given $x$.
    \item \optiontwo (\pref{line:factor}) requires computing the confidence width $w(x;\cF_m)$ for any given $x$.
\end{enumerate}

The first task is exactly a least squares problem, and we directly solve it using $\oracle_1$. The second and third tasks are more complicated, and we need to design additional subroutines to reduce them to weighted least squares regression. Lying at the heart of the reductions are two basic computational subroutines: \textsf{ConfBound} and \dcoracle, which we present in \pref{alg:range} and \pref{alg:range1} respectively. Specifically,
\begin{itemize}
\item \textsf{ConfBound} is designed to efficiently compute the \emph{upper confidence bound}
\begin{equation}\tag{UCB}\label{eq:ucb-compute}
    \sup_{f\in\cF}\crl[\Bigg]{f(x,a) \mid \sum_{(x',a',y')\in \cH} (f(x',a')-y')^2\le\inf_{f'\in\cF}\sum_{(x',a',y')\in \cH}(f'(x',a')-y')^2+\beta}
\end{equation}
and the \emph{lower confidence bound} 
\begin{equation}\tag{LCB}
    \inf_{f\in\cF}\crl[\Bigg]{f(x,a) \mid \sum_{(x',a',y')\in \cH} (f(x',a')-y')^2\le\inf_{f'\in\cF}\sum_{(x',a',y')\in \cH}(f'(x',a')-y')^2+\beta}
\end{equation}
for any given context-action pair $(x,a)\in\cX\times\cA$, based on a sample history $\cH$ and confidence radius $\beta$. It can be efficiently implemented with $\oracle_{\beta/\alpha}$ given precision $\alpha$, as pointed out in \cite{krishnamurthy2017active} and \cite{foster2018practical}.
\item \dcoracle is designed to efficiently compute the \emph{action difference lower confidence bound}
\begin{equation*}
    \inf_{f\in\cF}\crl[\Bigg]{f(x,a_1)-f(x,a_2) \mid \sum_{(x',a',y')\in H} (f(x',a')-y')^2\le\inf_{f'\in\cF}\sum_{(x',a',y')\in H}(f'(x',a')-y')^2+\beta}
\end{equation*}
for any given context $x\in\cX$ and actions $a_1,a_2\in\cA$, based on a sample history $\cH$ and confidence radius $\beta$. It can be used to identify whether an action $a_1$ is guaranteed to dominate another action $a_2$ on a context $x$, and can be efficiently implemented with $\oracle_{\beta/\alpha}$ given precision $\alpha$.
\end{itemize}
Building on \cboracle and \dcoracle, we design a subroutine \csoracle (see \pref{alg:set}) that accomplishes the the second task above, i.e., (approximately) computing $\cA(x;\cF_m)$ for any $x$. Then, building further on \csoracle and \cboracle, we design a subroutine \cworacle (see \pref{alg:width}) that accomplishes the third task above, i.e., computing $w(x;\cF_m)$ for any $x$. Therefore, by applying \cboracle, \dcoracle, \csoracle and \cworacle as subroutines, we can efficiently implement \mainalg  using $\oracle_{\beta/\alpha}$.

\begin{algorithm}
\caption{$\cboracle({\rm type},x,a,\cH,\beta,\alpha)$}\label{alg:range}
\textbf{input:}  type $\in\{{\rm High, Low}\}$, context $x$, action $a$, sample history $\cH$, radius $\beta>1$, precision $\alpha>0$.\\
\textbf{algorithm:} See Algorithm 3 of \cite{foster2018practical} for convex function classes; see Algorithm 2 and 3 of \cite{krishnamurthy2017active} for non-convex function classes.
\end{algorithm}

\begin{algorithm}
\caption{$\dcoracle(x,a_1,a_2,\cH,\beta,\alpha)$}\label{alg:range1}
\textbf{input:}  context $x$, actions $a_1,a_2$, sample history $\cH$, radius $\beta>1$, precision $\alpha>0$.\\
\textbf{algorithm:}
\begin{algorithmic}[0]
\State If $a_1=a_2$ then return 0. Otherwise, use the binary search procedure in \cite{foster2018practical} (if $\cF$ is convex) or \cite{krishnamurthy2017active} (if $\cF$ is non-convex)  to solve
\begin{equation*}
\begin{aligned}
&\underset{f\in\cF}{\mathrm{minimize}}%
&\quad &\frac{\alpha}{2}\prn[\bigg]{f(x,a_1)+\frac{1}{\alpha}}^2+\frac{\alpha}{2}\prn[\bigg]{f(x,a_2)-\frac{1}{\alpha}}^2\\
&\text{such that}& &\sum_{(x',a',y')\in \cH} (f(x',a')-y')^2\le\inf_{f'\in\cF}\sum_{(x',a',y')\in \cH}(f'(x',a')-y')^2+\beta,
\end{aligned}
\end{equation*}
and return the optimal objective value.
\end{algorithmic}
\end{algorithm}

\paragraph{Implementing \blockalg}
We now show how to implement \blockalg with the regression oracle. There are two computational tasks in \blockalg which require us to invoke the oracle.
\begin{enumerate}
\item \pref{line:erm-rl}  requires computing the empirical risk minimizer $\widehat{f}_m$.
\item \pref{line:star-ucb}  requires computing the star hull upper confidence bound $\Qbar_h^{(k)}(x,a)$ for any given $(x,a)$ pair.
\end{enumerate}
The first task is exactly a least squares problem, and we directly solve it using $\oracle_1$. The second task can be accomplished by the subroutine \cboracle (\pref{alg:range}), as long as we can solve weighted least squares regression over the star hull of each $\cF_h$
\begin{equation}\label{eq:star-ro}\tag{Star-RO}
\argmin_{f\in\starhull(\cF_h,\widehat{f})}\sum_{(w,x,a,y)\in\cH}w\prn*{f(x,a)-y}^{2}
\end{equation}
for any $\widehat{f}\in\cF_h$ (recall that the star hull upper confidence bound is just a modification of  \pref{eq:ucb-compute} where we replace $\cF$ with $\starhull(\cF,\widehat{f})$). Luckily, as \pref{lem:star_erm} shows, we can always reduce \pref{eq:star-ro} by \pref{eq:oracle}.
\begin{lemma}
  \label{lem:star_erm}
  Suppose weights in \pref{eq:star-ro} are bounded by $W$ and targets are bounded by $B$. Then for any $\alpha>0$, we can find an $\alpha$-approximate solution to \pref{eq:star-ro} using $\bigoh(BW/\alpha)$ calls to $\oracle_{\bigoh(BW/\alpha)}$ for \pref{eq:oracle}.
\end{lemma}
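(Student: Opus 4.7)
The plan is to parameterize each $f \in \starhull(\cF_h, \hat{f})$ as $f_{t, f'} \ldef (1-t)\hat{f} + t f'$ with $(t, f') \in [0,1] \times \cF_h$, and run a one-dimensional grid search over $t$, reducing the inner problem $g(t) \ldef \min_{f' \in \cF_h} \sum_{(w,x,a,y) \in \cH} w(f_{t, f'}(x,a) - y)^2$ to a single call to \pref{eq:oracle} at each grid point. The key algebraic observation is the identity
\[
\sum_{(w,x,a,y) \in \cH} w\prn*{f_{t,f'}(x,a) - y}^{2} = \sum_{(w,x,a,y) \in \cH} (t^{2}w)\prn*{f'(x,a) - \frac{y - (1-t)\hat{f}(x,a)}{t}}^{2},
\]
valid for $t > 0$, which shows that minimizing over $f'$ at fixed $t$ is exactly $\oracle(\cH_t)$ for the re-weighted, re-targeted sample set $\cH_t$ defined by $(w', x, a, y') = (t^{2} w, x, a, (y - (1-t)\hat{f}(x,a))/t)$. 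This reduction produces a minimizer $\hat{f}'_t \in \cF_h$, from which we form the star hull candidate $f_{t, \hat{f}'_t}$.

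Second, I would quantify the Lipschitz constant of the outer problem in $t$. Writing $L(t, f') \ldef \sum w(f_{t,f'}(x,a) - y)^{2}$, we have $\partial_t L(t, f') = 2\sum w(f_{t, f'}(x,a) - y)(f'(x,a) - \hat{f}(x,a))$, and since each of $f', \hat{f}, y$ is $O(B)$-bounded while the total weight is at most $W$, we obtain $|\partial_t L(t, f')| = O(BW)$ uniformly in $(t, f')$. Because the pointwise infimum preserves Lipschitz constants, $g$ is $O(BW)$-Lipschitz on $[0,1]$, so a uniform grid $\{0, \delta, 2\delta, \ldots, 1\}$ with spacing $\delta = \Theta(\alpha/(BW))$---that is, $O(BW/\alpha)$ total grid nodes---will contain some $t_j$ with $g(t_j) \leq \min_{t \in [0,1]} g(t) + \alpha$.

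Finally, I would verify that each oracle call actually fits into $\oracle_b$ with $b = O(BW/\alpha)$. The rescaled weights satisfy $t^{2}w \leq W \leq b$, but the rescaled targets $(y - (1-t)\hat{f})/t$ have magnitude $O(B/t)$, which blows up as $t \to 0$. To handle this, I would threshold: for grid nodes with $t_j \geq \Omega(\alpha/W)$ the rescaled targets are bounded by $O(BW/\alpha)$ and the oracle call is legitimate, while for $t_j$ below this threshold I would fall back to the trivial candidate $f = \hat{f}$ (corresponding to $t = 0$), whose loss equals $L(0,\cdot)$ and differs from $g(t_j)$ by at most $O(\alpha)$ by the Lipschitz bound. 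Returning the candidate with the smallest recorded loss yields the claimed $\alpha$-approximate minimizer. The main technical obstacle is precisely this joint coordination of scales: the grid spacing $\delta$, the small-$t$ cutoff, and the oracle range $b$ must all be chosen together so that the advertised $O(BW/\alpha)$ bound holds simultaneously for the number of oracle queries and for the range $b$, despite the fact that shrinking $t$ eases the weight constraint $t^{2}w \leq b$ but strains the target constraint $|y_t| \leq b$.
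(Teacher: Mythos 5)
Your proposal takes essentially the same route as the paper's: parameterize the star hull by $(t,f')\in[0,1]\times\cF_h$, discretize $t$ on a grid of spacing $\Theta(\alpha/(BW))$, reduce each fixed-$t$ subproblem to a single call of the weighted regression oracle via the rescaling identity (your weight $t^2 w$ is the correct one---the paper's $wt^{1/2}$ is a typo), and conclude via Lipschitzness of the outer problem in $t$. You are in fact somewhat more careful than the paper: you explicitly flag the blow-up of the rescaled targets $(y-(1-t)\hat f)/t$ as $t\to 0$ and resolve it with a thresholded fallback to $\hat f$, whereas the paper only implicitly handles this by including $t=0$ in its grid and does not spell out the case analysis (both arguments, incidentally, are a bit loose about the exact power of $B$ in the Lipschitz constant and oracle range, which depends on the convention for how $B$ bounds functions versus targets).
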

Therefore, by applying \cboracle with the reduction above as a subroutine, \blockalg can be efficiently implemented with $\oracle_{\beta/\alpha^2}$.

\begin{algorithm}
\caption{$\csoracle(x,\cH,\beta,\alpha)$}\label{alg:set}
\textbf{input:}  context $x$, sample history $\cH$, radius $\beta>1$, precision $\alpha>0$.\\
\textbf{algorithm:}
\begin{algorithmic}[0]
\If{$\cF$ is a product function class}
    \State Compute $$\cA_{\rm cs}=\left\{a\in\cA:\textsf{ConfBound}({\rm High},x,a,\cH,\beta,\alpha)\ge\max_{a'\in\cA}\textsf{ConfBound}({\rm Low},x,a',\cH,\beta,\alpha)\right\}.$$
\Else
    \State Compute $\widetilde{a}=\arg\max_{a\in\cA}\textsf{ConfBound}({\rm High},x,a,\cH,\beta,\alpha)$.
    \State Compute $$\cA_{\rm cs}=\left\{a\in\cA: \textsf{ConfBoundDif}(x,\widetilde{a},a,\cH,\beta,\alpha)\le0\right\}.$$
\EndIf
\Return{$\cA_{\rm cs}$}.
\end{algorithmic}
\end{algorithm}

\begin{algorithm}
\caption{$\cworacle(x,\cH,\beta,\alpha)$}\label{alg:width}
\textbf{input:}  context $x$, sample history $\cH$, radius $\beta>1$, precision $\alpha>0$.\\
\textbf{algorithm:}
\begin{algorithmic}[0]
\Statex{} Compute $$w=\1\left\{|\csoracle(x,\cH,\beta,\alpha)|>1\right\}\max_{a\in\cA}\left|\textsf{ConfBound}({\rm High},x,a,\cH,\beta,\alpha)-\textsf{ConfBound}({\rm Low},x,a,\cH,\beta,\alpha)\right|.$$
\Return{$w$}.
\end{algorithmic}
\end{algorithm}

\subsection{Computational Complexity}

Theoretical guarantees for the subroutines \cboracle, \dcoracle, \csoracle and \cworacle are deferred to \pref{app:cg}. Here we summarize the total computational complexity for our algorithms based on these reductions.

\paragraph{Computational complexity of \mainalg}
For \mainalg, we set the precision $\alpha=\bigoh(1/T)$ for \cboracle, \dcoracle, \csoracle and \cworacle so that the error does not degrade the algorithm's instance-dependent performance beyond additive constants. In total, when $\cF$ is convex, \mainalg calls $\oracle_{\bigoht(T)}$ for $\bigoht(AT)$ times over $T$ rounds, and when $\cF$ is non-convex, \mainalg calls $\oracle_{\bigoht(T)}$ for $\bigoht(AT^3)$ times over $T$ rounds. We remark that the total time spent by \mainalg outside of these regression oracle calls is $O(\K)$ in each round.

\paragraph{Computational complexity of \blockalg}
For \blockalg, when the target accuracy is $\veps$, we should set $\alpha=c\veps$ for a problem-dependent constant $c$. This requires that we solve $\oracle_{\bigoh(\beta_h/\veps)}$ for the star hull, which reduces to $\oracle_{\bigoh(\beta_h/\veps^{2})}$ for the original class $\cF_h$. Altogether, we conclude that $\mathrm{poly}(1/\veps)$ oracle calls suffice.

%

%
%
%
%
 
\section{Experiments}
\label{sec:experiments}
\newcommand{\regcb}{\textsf{RegCB}\xspace}
\newcommand{\regcbopt}{\textsf{RegCB-Opt}\xspace}
\newcommand{\regcbelim}{\textsf{RegCB-Elim}\xspace}
\newcommand{\linucb}{\textsf{LinUCB}\xspace}
\newcommand{\ucb}{\textsf{UCB}\xspace}
\newcommand{\thompson}{\textsf{TS}\xspace}
\newcommand{\bistro}{\textsf{BISTRO}\xspace}
\newcommand{\iltcb}{\textsf{ILTCB}\xspace}
\newcommand{\expfour}{\textsf{Exp4}\xspace}
\newcommand{\modcb}{\textsf{ModCB}\xspace}
\newcommand{\iltcblong}{\textsf{ILOVETOCONBANDITS}\xspace}
\newcommand{\egreedy}{$\veps$-\textsf{Greedy}\xspace}
\newcommand{\greedy}{\textsf{Greedy}\xspace}
\newcommand{\bagging}{\textsf{Bagging}\xspace}
\newcommand{\cover}{\textsf{Online Cover}\xspace}

To evaluate the empirical performance of \mainalg, we replicated a simplified version
of the large-scale contextual bandit evaluation setup of
\cite{bietti2018contextual}, which compares a number of contextual
bandit algorithms based on either cost-sensitive classification
oracles or regression oracles on over 500 multiclass, multi-label, and
cost-sensitive classification datasets. We
found that \mainalg typically enjoys superior performance, especially
on challenging datasets with many actions.

\paragraph{Datasets}
Following \cite{bietti2018contextual}, we use a collection of 516
multiclass classification datasets from the \url{openml.org}
platform.\footnote{We omit datasets $\{8, 189, 197, 209, 223, 227,
  287, 294, 298\}$ from the collection used in
  \cite{bietti2018contextual}, as these are regression
  datasets that were rounded to integer targets. This brings the count
  from 525 to 516.} This collection includes many standard datasets,
including the UCI datasets used in \cite{foster2018practical}; see \cite{bietti2018contextual} for
details. Beyond this collection, \cite{bietti2018contextual} also used 5 multilabel datasets and 3
cost-sensitive datasets; we omit these for simplicity.

For each dataset, we simulate bandit feedback by withholding
the true label. We work with losses rather than rewards, and provide a
loss of $0$ if the learner predicts the
correct label, and $1$ otherwise. We randomly shuffle the examples in
each dataset, but use the same fixed shuffle across all algorithms.

\paragraph{Baseline algorithms and oracle}
We compare to a collection of baseline algorithms implemented in the
Vowpal Wabbit (VW) online learning library.\footnote{\url{https://vowpalwabbit.org/}}
For the oracle, we use the default online learner in VW, which fits a
linear model (that is,
$\cF=\crl*{(x,a)\mapsto\tri*{w,\phi(x,a)}\mid{}w\in\bbR^{d}}$) with regression using an online gradient descent-type
algorithm which incorporates adaptive \citep{duchi2011adaptive},
normalized \citep{ross2013normalized}, and importance-weight aware
\citep{karampatziakis2011online} updates, and has a single tunable
step size which is treated as a hyperparameter. This
algorithm is an \emph{online regression oracle} in the sense of
\cite{foster2020beyond}.

All of our baseline algorithms are based on classification or regression
oracles. We use their implementations in VW, which incorporate
modifications to allow them to run in an online fashion with the  
oracle above. For the classification-based algorithms, VW uses an
additional reduction layer which reduces
classification to regression. Following \cite{bietti2018contextual},
we use the following algorithms.
\begin{itemize}
\item The standard \egreedy exploration strategy
  \citep{langford2008epoch}, as well as a purely greedy variant,
  \greedy.
\item \bagging, also known as bootstrap Thompson sampling
  \citep{agarwal2014taming,eckles2014thompson,osband2016deep}, which
  attempts to approximate the Thompson sampling algorithm.
\item \cover, a heuristic version of the \iltcb strategy of
  \cite{agarwal2014taming}. \iltcb itself provides an optimal and
  efficient reduction from stochastic contextual bandits to cost-sensitive classification, and represents the state of the art from
  that line of research.
\item \regcb \citep{russo2013eluder,foster2018practical}, an
  approximate version of the general function class UCB algorithm
  based on regression oracles. This algorithm was found to have the
  best overall performance in \cite{bietti2018contextual}, though it
  does not achieve the minimax rate for contextual bandits. We only
  evaluate the optimistic variant (\regcbopt), not the
  elimination-based variant (\regcbelim), as the former typically
  performs much better.
\end{itemize}
We refer to \cite{bietti2018contextual} for more details on the
algorithm configurations and hyperparameters.

Beyond these algorithms, we also evaluate against \squarecb
\citep{foster2020beyond}, which is the first optimal online regression
oracle-based contextual bandit algorithm.
\squarecb applies the inverse gap weighting strategy in \pref{eq:igw},
but uses all actions rather than adaptively narrowing to a smaller
candidate set as in \mainalg. Our implementation applies \igw{} at each
step $t$ with a learning rate $\gamma_t$. We set
$\gamma_t=\gamma_0t^{\rho}$, where $\gamma_0\in\crl{10, 50, 100, 400,
  700, 10^{3}}$ and $\rho\in\crl*{.25, .5}$ are hyperparameters.

\paragraph{\mainalg implementation}
We implemented a variant of \mainalg in VW, with a few practical simplifications.%
\footnote{The precise version of VW used to run the experiments may be found at   \url{https://github.com/canondetortugas/vowpal_wabbit/tree/}.}
First, rather than using an offline ERM oracle (as in \pref{eq:oracle}), we modify the algorithm to work
with the online regression oracle provided in VW by following the strategy of
\cite{foster2020beyond}. The protocol for the oracle is as follows: At each
round, we provide the context $x_t$, the oracle provides a predicted reward
$\hat{y}_t(a)$ for each action, then we select an action $a_t$ and
update the oracle with $(x_t, a_t, r_t(a_t))$. Instead than applying $\igw$ to
the empirical risk minimizer as in \pref{alg:main}, we simply apply it to $\hat{y}_t$. This
strategy is natural because the protocol above is exactly what is
implemented by the base learner in VW, and thus allows us to take advantage of VW's fast
online regression implementation.

For our second simplification, rather than computing $\cA_t(x;\cF_t)$
using the reductions from \pref{sec:oracle} (which also require an
offline oracle), we use a
sensitivity-based heuristic that takes advantage of the online
oracle. This is described in Section 7.1 of
\cite{krishnamurthy2017active}, and is also used by the VW implementation
of \regcb.

Finally, rather than using a data-dependent learning rate, we set
$\gamma_t=\gamma_0t^{\rho}$, where $\gamma_0\in\crl{10, 50, 100, 400,
  700, 10^{3}}$ and $\rho\in\crl*{.25, .5}$ are hyperparameters (the
same as for \squarecb). We
set the confidence radius as $\beta^2_t=c_0\log(Kt)$, where
$c_0\in\crl{10^{-1}, 10^{-2}, 10^{-3}}$ is another hyperparameter.

\begin{table}[ht]
  \centering
    \begin{tabular}{ | l | c | c | c | c | c | c | c | c | }
    \hline
    $\downarrow$ vs $\rightarrow$ & G & R & C & B & $\epsilon$G  & S & A \\ \hline
    Greedy & - & -48 & -51 & -19 & -6  & -55 & -64 \\ \hline
    RegCB & 48 & - & 6 & 31 & 40  & 5 & -21 \\ \hline
    Cover & 51 & -6 & - & 25 & 33 &  -8 & -27 \\ \hline
    Bagging & 19 & -31 & -25 & - & 9 &  -33 & -47 \\ \hline
    $\epsilon$-Greedy & 6 & -40 & -33 & -9 & -  & -45 & -58 \\ \hline
    SquareCB & 55 & -5 & 8 & 33 & 45 &  - & -23 \\ \hline
    \textbf{AdaCB} & \textbf{64} & \textbf{21} & \textbf{27} & \textbf{47} & \textbf{58} &  \textbf{23} & - \\ \hline
    \end{tabular}
      \caption{Head-to-head win/loss rates for datasets with $\K\geq{}3$ actions. Each (row, column) entry indicates the statistically
    significant win-loss
    difference between the row algorithm and the column algorithm.}
  \label{tab:exp1}
\end{table}

\begin{table}[ht]
  \centering
  \begin{tabular}{ | l | c | c | c | c | c | c | c | c | }
    \hline
    $\downarrow$ vs $\rightarrow$ & G & R & C & B & $\epsilon$G &  S & A \\ \hline
    Greedy & - & -122 & -127 & -50 & -4 &  -86 & -114 \\ \hline
    \textbf{RegCB} & \textbf{122} & - & \textbf{1} & \textbf{81} & \textbf{122} & \textbf{71} & \textbf{20} \\ \hline
    Cover & 127 & -1 & - & 63 & 122 &  52 & 3 \\ \hline
    Bagging & 50 & -81 & -63 & - & 63 &  -19 & -63 \\ \hline
    $\epsilon$G & 4 & -122 & -122 & -63 & - &  -69 & -111 \\ \hline
    SquareCB & 86 & -71 & -52 & 19 & 69 &  - & -44 \\ \hline
    AdaCB & 114 & -20 & -3 & 63 & 111 &  44 & - \\ \hline
  \end{tabular}
  \caption{Head-to-head win/loss rates across all datasets.}
  \label{tab:exp0}
\end{table}

\paragraph{Evaluation}
We evaluate the performance of each algorithm on a given dataset using
the \emph{progressive validation} (PV) loss
\citep{blum1999beating}. For each algorithm above, we choose the best
hyperparameter configuration for a given dataset based on the PV loss.

To compare each pair of algorithms on a given dataset, we use the notion
of a \emph{statistically significant win or loss} defined in
\cite{bietti2018contextual}, which is based on an approximate
Z-test. For each
algorithm pair, we count the total number of significant wins /
losses, using the best configuration for each dataset.

\paragraph{Results: Hard datasets}
To evaluate performance on hard exploration problems, we restricted
only to datasets with $\K\geq{}3$ actions. Head-to-head results are
displayed in \pref{tab:exp1}. For this subset of datasets, we find
that \mainalg has the best overall performance, and has a (large) positive win-loss
difference against all of the baselines. This suggests that \mainalg may
be a promising approach for solving challenging exploration
problems. 

\paragraph{Results: All datasets}
Head-to-head results across all datasets are displayed in \pref{tab:exp0}. We find that
for the full collection of datasets, \regcb
\citep{foster2018practical} has the best overall performance, with a
positive win-loss difference against every algorithm. \mainalg has
strong performance overall, but narrowly loses to \regcb and \cover,
with a positive win-loss difference against all other algorithms. The
strong performance of \regcb mirrors the findings of
\cite{bietti2018contextual}. They observed that many of the datasets in
the OpenML collection are ``easy'' for exploration in the sense that
1) they have few actions (in
fact, around 400 datasets have only 2 actions), and 2) even the simple
greedy strategy performs well; \regcb seems to be good at
exploiting this. It would be interesting to understand whether
we can make \mainalg eliminate actions even more aggressively for the
easy problems on which \regcb excels, and whether we can develop
theory to support this.

\section{Discussion}
\label{sec:discussion}

We have developed efficient, instance-dependent algorithms for
contextual bandits and reinforcement learning with function
approximation. We showed that disagreement coefficients and
related combinatorial parameters play a fundamental role in
determining the optimal instance-dependent rates, and that algorithms that adapt to these parameters can be simple and practically effective. Our results suggest many fruitful directions for future research.

For contextual bandits, there are a number of very interesting
and practically relevant questions:
\begin{itemize}
\item For adversarial contexts, can we develop instance-dependent
  algorithms that are efficient in terms of \emph{online} regression
  oracles, as in \cite{foster2020beyond}?
\item Can we reduce the total number of regression oracle calls used by
  \mainalg to $\bigoht(\polylog(T))$, thereby matching the runtime of
  \cite{simchi2020bypassing} for non-instance-dependent regret?
\item Can we extend our algorithms and complexity measures to
  optimally handle
  infinite actions? 
\end{itemize}
Beyond these questions, we hope to see the various gaps between our upper and
lower bounds closed.

For reinforcement learning, we are excited to see whether our analysis
techniques can be applied more broadly, and to develop more refined
lower bounds that better reflect the role of disagreement in
determining the difficulty of exploration. On the technical side,
there are many possible improvements to \pref{thm:block_mdp}. For example, is
it possible to develop best-of-both-worlds guarantees similar to those
attained by \mainalg, or even to efficiently attain the minimax rate
in the absence of bounded gap and disagreement coefficient? Can we
improve the dependence on $S$, $A$, and so forth to match the optimal
rates for the tabular setting?

\subsection*{Acknowledgements}
We thank Alekh Agarwal, Haipeng Luo, Akshay Krishnamurthy, Max
Simchowitz, and Yunbei Xu for helpful discussions. We thank Alberto Bietti for help
with replicating the setup from \cite{bietti2018contextual}. DF
acknowledges the support of NSF Tripods grant \#1740751. AR
acknowledges the support of ONR awards \#N00014-20-1-2336 and \#N00014-20-1-2394. DSL and YX
acknowledge the support of the MIT-IBM Watson AI Lab.

\bibliography{refs}

\begin{thebibliography}{73}
\providecommand{\natexlab}[1]{#1}
\providecommand{\url}[1]{\texttt{#1}}
\expandafter\ifx\csname urlstyle\endcsname\relax
  \providecommand{\doi}[1]{doi: #1}\else
  \providecommand{\doi}{doi: \begingroup \urlstyle{rm}\Url}\fi

\bibitem[Abbasi-Yadkori et~al.(2011)Abbasi-Yadkori, P{\'a}l, and
  Szepesv{\'a}ri]{abbasi2011improved}
Yasin Abbasi-Yadkori, D{\'a}vid P{\'a}l, and Csaba Szepesv{\'a}ri.
\newblock Improved algorithms for linear stochastic bandits.
\newblock In \emph{Advances in Neural Information Processing Systems}, 2011.

\bibitem[Abe and Long(1999)]{abe1999associative}
Naoki Abe and Philip~M Long.
\newblock Associative reinforcement learning using linear probabilistic
  concepts.
\newblock In \emph{Proceedings of the Sixteenth International Conference on
  Machine Learning}, pages 3--11. Morgan Kaufmann Publishers Inc., 1999.

\bibitem[Abe et~al.(2003)Abe, Biermann, and Long]{abe2003reinforcement}
Naoki Abe, Alan~W Biermann, and Philip~M Long.
\newblock Reinforcement learning with immediate rewards and linear hypotheses.
\newblock \emph{Algorithmica}, 37\penalty0 (4):\penalty0 263--293, 2003.

\bibitem[Agarwal et~al.(2012)Agarwal, Dud{\'\i}k, Kale, Langford, and
  Schapire]{agarwal2012contextual}
Alekh Agarwal, Miroslav Dud{\'\i}k, Satyen Kale, John Langford, and Robert~E.
  Schapire.
\newblock Contextual bandit learning with predictable rewards.
\newblock In \emph{Artificial Intelligence and Statistics}, 2012.

\bibitem[Agarwal et~al.(2014)Agarwal, Hsu, Kale, Langford, Li, and
  Schapire]{agarwal2014taming}
Alekh Agarwal, Daniel Hsu, Satyen Kale, John Langford, Lihong Li, and Robert
  Schapire.
\newblock Taming the monster: A fast and simple algorithm for contextual
  bandits.
\newblock In \emph{International Conference on Machine Learning}, pages
  1638--1646, 2014.

\bibitem[Agarwal et~al.(2016)Agarwal, Bird, Cozowicz, Hoang, Langford, Lee, Li,
  Melamed, Oshri, Ribas, Sen, and Slivkins]{agarwal2016making}
Alekh Agarwal, Sarah Bird, Markus Cozowicz, Luong Hoang, John Langford, Stephen
  Lee, Jiaji Li, Dan Melamed, Gal Oshri, Oswaldo Ribas, Siddhartha Sen, and
  Aleksandrs Slivkins.
\newblock Making contextual decisions with low technical debt.
\newblock \emph{arXiv:1606.03966}, 2016.

\bibitem[Agarwal et~al.(2020)Agarwal, Kakade, Krishnamurthy, and
  Sun]{agarwal2020flambe}
Alekh Agarwal, Sham Kakade, Akshay Krishnamurthy, and Wen Sun.
\newblock {FLAMBE}: Structural complexity and representation learning of low
  rank {MDP}s.
\newblock \emph{arXiv preprint arXiv:2006.10814}, 2020.

\bibitem[Alexander(1987)]{alexander1987central}
Kenneth~S Alexander.
\newblock The central limit theorem for weighted empirical processes indexed by
  sets.
\newblock \emph{Journal of multivariate analysis}, 22\penalty0 (2):\penalty0
  313--339, 1987.

\bibitem[Audibert(2008)]{audibert2008progressive}
Jean-Yves Audibert.
\newblock Progressive mixture rules are deviation suboptimal.
\newblock In \emph{Advances in Neural Information Processing Systems}, pages
  41--48, 2008.

\bibitem[Audibert et~al.(2007)Audibert, Tsybakov, et~al.]{audibert2007fast}
Jean-Yves Audibert, Alexandre~B Tsybakov, et~al.
\newblock Fast learning rates for plug-in classifiers.
\newblock \emph{The Annals of statistics}, 35\penalty0 (2):\penalty0 608--633,
  2007.

\bibitem[Ayoub et~al.(2020)Ayoub, Jia, Szepesvari, Wang, and
  Yang]{ayoub2020model}
Alex Ayoub, Zeyu Jia, Csaba Szepesvari, Mengdi Wang, and Lin~F Yang.
\newblock Model-based reinforcement learning with value-targeted regression.
\newblock \emph{arXiv preprint arXiv:2006.01107}, 2020.

\bibitem[Azar et~al.(2017)Azar, Osband, and Munos]{azar2017minimax}
Mohammad~Gheshlaghi Azar, Ian Osband, and R{\'e}mi Munos.
\newblock Minimax regret bounds for reinforcement learning.
\newblock In \emph{International Conference on Machine Learning}, pages
  263--272, 2017.

\bibitem[Balcan and Long(2013)]{balcan2013active}
Maria-Florina Balcan and Phil Long.
\newblock Active and passive learning of linear separators under log-concave
  distributions.
\newblock In \emph{Conference on Learning Theory}, pages 288--316, 2013.

\bibitem[Balcan et~al.(2010)Balcan, Hanneke, and Vaughan]{balcan2010true}
Maria-Florina Balcan, Steve Hanneke, and Jennifer~Wortman Vaughan.
\newblock The true sample complexity of active learning.
\newblock \emph{Machine learning}, 80\penalty0 (2-3):\penalty0 111--139, 2010.

\bibitem[Bastani and Bayati(2020)]{bastani2020online}
Hamsa Bastani and Mohsen Bayati.
\newblock Online decision making with high-dimensional covariates.
\newblock \emph{Operations Research}, 68\penalty0 (1):\penalty0 276--294, 2020.

\bibitem[Bietti et~al.(2018)Bietti, Agarwal, and
  Langford]{bietti2018contextual}
Alberto Bietti, Alekh Agarwal, and John Langford.
\newblock A contextual bandit bake-off.
\newblock \emph{arXiv preprint arXiv:1802.04064}, 2018.

\bibitem[Blum et~al.(1999)Blum, Kalai, and Langford]{blum1999beating}
Avrim Blum, Adam Kalai, and John Langford.
\newblock Beating the hold-out: Bounds for k-fold and progressive
  cross-validation.
\newblock In \emph{Proceedings of the Twelfth Annual Conference on
  Computational Learning Theory}, pages 203--208, 1999.

\bibitem[Burnetas and Katehakis(1996)]{burnetas1996optimal}
Apostolos~N Burnetas and Michael~N Katehakis.
\newblock Optimal adaptive policies for sequential allocation problems.
\newblock \emph{Advances in Applied Mathematics}, 17\penalty0 (2):\penalty0
  122--142, 1996.

\bibitem[Chu et~al.(2011)Chu, Li, Reyzin, and Schapire]{chu2011contextual}
Wei Chu, Lihong Li, Lev Reyzin, and Robert~E. Schapire.
\newblock Contextual bandits with linear payoff functions.
\newblock In \emph{International Conference on Artificial Intelligence and
  Statistics}, 2011.

\bibitem[Dani et~al.(2008)Dani, Hayes, and Kakade]{dani2008stochastic}
Varsha Dani, Thomas~P Hayes, and Sham~M Kakade.
\newblock Stochastic linear optimization under bandit feedback.
\newblock In \emph{Conference on Learning Theory (COLT)}, 2008.

\bibitem[Dann et~al.(2018)Dann, Jiang, Krishnamurthy, Agarwal, Langford, and
  Schapire]{dann2018oracle}
Christoph Dann, Nan Jiang, Akshay Krishnamurthy, Alekh Agarwal, John Langford,
  and Robert~E Schapire.
\newblock On oracle-efficient {PAC} {RL} with rich observations.
\newblock In \emph{Advances in neural information processing systems}, pages
  1422--1432, 2018.

\bibitem[Du et~al.(2019{\natexlab{a}})Du, Krishnamurthy, Jiang, Agarwal,
  Dud{\'\i}k, and Langford]{du2019latent}
Simon~S Du, Akshay Krishnamurthy, Nan Jiang, Alekh Agarwal, Miroslav
  Dud{\'\i}k, and John Langford.
\newblock Provably efficient {RL} with rich observations via latent state
  decoding.
\newblock \emph{arXiv preprint arXiv:1901.09018}, 2019{\natexlab{a}}.

\bibitem[Du et~al.(2019{\natexlab{b}})Du, Luo, Wang, and Zhang]{du2019provably}
Simon~S Du, Yuping Luo, Ruosong Wang, and Hanrui Zhang.
\newblock Provably efficient {Q}-learning with function approximation via
  distribution shift error checking oracle.
\newblock In \emph{Advances in Neural Information Processing Systems}, pages
  8060--8070, 2019{\natexlab{b}}.

\bibitem[Du et~al.(2020{\natexlab{a}})Du, Kakade, Wang, and Yang]{du2019good}
Simon~S Du, Sham~M Kakade, Ruosong Wang, and Lin~F Yang.
\newblock Is a good representation sufficient for sample efficient
  reinforcement learning?
\newblock In \emph{International Conference on Learning Representations},
  2020{\natexlab{a}}.

\bibitem[Du et~al.(2020{\natexlab{b}})Du, Lee, Mahajan, and
  Wang]{du2020agnostic}
Simon~S Du, Jason~D Lee, Gaurav Mahajan, and Ruosong Wang.
\newblock Agnostic {Q}-learning with function approximation in deterministic
  systems: Tight bounds on approximation error and sample complexity.
\newblock \emph{arXiv preprint arXiv:2002.07125}, 2020{\natexlab{b}}.

\bibitem[Duchi et~al.(2011)Duchi, Hazan, and Singer]{duchi2011adaptive}
John Duchi, Elad Hazan, and Yoram Singer.
\newblock Adaptive subgradient methods for online learning and stochastic
  optimization.
\newblock \emph{Journal of Machine Learning Research}, 12\penalty0
  (Jul):\penalty0 2121--2159, 2011.

\bibitem[Eckles and Kaptein(2014)]{eckles2014thompson}
Dean Eckles and Maurits Kaptein.
\newblock Thompson sampling with the online bootstrap.
\newblock \emph{arXiv preprint arXiv:1410.4009}, 2014.

\bibitem[Feng et~al.(2020)Feng, Wang, Yin, Du, and Yang]{feng2020provably}
Fei Feng, Ruosong Wang, Wotao Yin, Simon~S Du, and Lin~F Yang.
\newblock Provably efficient exploration for {RL} with unsupervised learning.
\newblock \emph{arXiv preprint arXiv:2003.06898}, 2020.

\bibitem[Foster and Rakhlin(2020)]{foster2020beyond}
Dylan~J Foster and Alexander Rakhlin.
\newblock Beyond {UCB}: Optimal and efficient contextual bandits with
  regression oracles.
\newblock \emph{International Conference on Machine Learning (ICML)}, 2020.

\bibitem[Foster et~al.(2018)Foster, Agarwal, Dud{\'\i}k, Luo, and
  Schapire]{foster2018practical}
Dylan~J Foster, Alekh Agarwal, Miroslav Dud{\'\i}k, Haipeng Luo, and Robert~E.
  Schapire.
\newblock Practical contextual bandits with regression oracles.
\newblock \emph{International Conference on Machine Learning}, 2018.

\bibitem[Friedman(2009)]{friedman2009active}
Eric Friedman.
\newblock Active learning for smooth problems.
\newblock In \emph{Conference on Learning Theory (COLT)}, 2009.

\bibitem[Garivier et~al.(2016)Garivier, Lattimore, and
  Kaufmann]{garivier2016explore}
Aur{\'e}lien Garivier, Tor Lattimore, and Emilie Kaufmann.
\newblock On explore-then-commit strategies.
\newblock In \emph{Advances in Neural Information Processing Systems}, pages
  784--792, 2016.

\bibitem[Garivier et~al.(2019)Garivier, M{\'e}nard, and
  Stoltz]{garivier2019explore}
Aur{\'e}lien Garivier, Pierre M{\'e}nard, and Gilles Stoltz.
\newblock Explore first, exploit next: The true shape of regret in bandit
  problems.
\newblock \emph{Mathematics of Operations Research}, 44\penalty0 (2):\penalty0
  377--399, 2019.

\bibitem[Gin{\'e} and Koltchinskii(2006)]{gine2006concentration}
Evarist Gin{\'e} and Vladimir Koltchinskii.
\newblock Concentration inequalities and asymptotic results for ratio type
  empirical processes.
\newblock \emph{The Annals of Probability}, 34\penalty0 (3):\penalty0
  1143--1216, 2006.

\bibitem[Hanneke(2007)]{hanneke2007bound}
Steve Hanneke.
\newblock A bound on the label complexity of agnostic active learning.
\newblock In \emph{Proceedings of the 24th International Conference on Machine
  Learning}, pages 353--360, 2007.

\bibitem[Hanneke(2011)]{hanneke2011rates}
Steve Hanneke.
\newblock Rates of convergence in active learning.
\newblock \emph{The Annals of Statistics}, 39\penalty0 (1):\penalty0 333--361,
  2011.

\bibitem[Hanneke(2014)]{hanneke2014theory}
Steve Hanneke.
\newblock Theory of disagreement-based active learning.
\newblock \emph{Foundations and Trends{\textregistered} in Machine Learning},
  7\penalty0 (2-3):\penalty0 131--309, 2014.

\bibitem[Hanneke and Yang(2015)]{hanneke2015minimax}
Steve Hanneke and Liu Yang.
\newblock Minimax analysis of active learning.
\newblock \emph{The Journal of Machine Learning Research}, 16\penalty0
  (1):\penalty0 3487--3602, 2015.

\bibitem[Hao et~al.(2019)Hao, Lattimore, and Szepesvari]{hao2019adaptive}
Botao Hao, Tor Lattimore, and Csaba Szepesvari.
\newblock Adaptive exploration in linear contextual bandit.
\newblock \emph{arXiv preprint arXiv:1910.06996}, 2019.

\bibitem[Hu et~al.(2020)Hu, Kallus, and Mao]{hu2020smooth}
Yichun Hu, Nathan Kallus, and Xiaojie Mao.
\newblock Smooth contextual bandits: Bridging the parametric and
  non-differentiable regret regimes.
\newblock In \emph{Conference on Learning Theory}, pages 2007--2010, 2020.

\bibitem[Jiang et~al.(2017)Jiang, Krishnamurthy, Agarwal, Langford, and
  Schapire]{jiang2017contextual}
Nan Jiang, Akshay Krishnamurthy, Alekh Agarwal, John Langford, and Robert~E
  Schapire.
\newblock Contextual decision processes with low {Bellman} rank are
  {PAC}-learnable.
\newblock In \emph{International Conference on Machine Learning}, pages
  1704--1713, 2017.

\bibitem[Jin et~al.(2020)Jin, Yang, Wang, and Jordan]{jin2020provably}
Chi Jin, Zhuoran Yang, Zhaoran Wang, and Michael~I Jordan.
\newblock Provably efficient reinforcement learning with linear function
  approximation.
\newblock In \emph{Conference on Learning Theory}, pages 2137--2143, 2020.

\bibitem[Karampatziakis and Langford(2011)]{karampatziakis2011online}
Nikos Karampatziakis and John Langford.
\newblock Online importance weight aware updates.
\newblock In \emph{Proceedings of the Twenty-Seventh Conference on Uncertainty
  in Artificial Intelligence}, pages 392--399, 2011.

\bibitem[Kaufmann et~al.(2016)Kaufmann, Capp{\'e}, and
  Garivier]{kaufmann2016complexity}
Emilie Kaufmann, Olivier Capp{\'e}, and Aur{\'e}lien Garivier.
\newblock On the complexity of best-arm identification in multi-armed bandit
  models.
\newblock \emph{The Journal of Machine Learning Research}, 17\penalty0
  (1):\penalty0 1--42, 2016.

\bibitem[Krishnamurthy et~al.(2016)Krishnamurthy, Agarwal, and
  Langford]{krishnamurthy2016pac}
Akshay Krishnamurthy, Alekh Agarwal, and John Langford.
\newblock {PAC} reinforcement learning with rich observations.
\newblock In \emph{Advances in Neural Information Processing Systems}, pages
  1840--1848, 2016.

\bibitem[Krishnamurthy et~al.(2017)Krishnamurthy, Agarwal, Huang,
  Daum{\'e}~III, and Langford]{krishnamurthy2017active}
Akshay Krishnamurthy, Alekh Agarwal, Tzu-Kuo Huang, Hal Daum{\'e}~III, and John
  Langford.
\newblock Active learning for cost-sensitive classification.
\newblock In \emph{International Conference on Machine Learning}, pages
  1915--1924, 2017.

\bibitem[Lai and Robbins(1985)]{lai1985asymptotically}
Tze~Leung Lai and Herbert Robbins.
\newblock Asymptotically efficient adaptive allocation rules.
\newblock \emph{Advances in Applied Mathematics}, 6\penalty0 (1):\penalty0
  4--22, 1985.

\bibitem[Langford and Zhang(2008)]{langford2008epoch}
John Langford and Tong Zhang.
\newblock The epoch-greedy algorithm for multi-armed bandits with side
  information.
\newblock In \emph{Advances in neural information processing systems}, pages
  817--824, 2008.

\bibitem[Lattimore(2018)]{lattimore2018refining}
Tor Lattimore.
\newblock Refining the confidence level for optimistic bandit strategies.
\newblock \emph{The Journal of Machine Learning Research}, 19\penalty0
  (1):\penalty0 765--796, 2018.

\bibitem[Li et~al.(2010)Li, Chu, Langford, and Schapire]{li2010contextual}
Lihong Li, Wei Chu, John Langford, and Robert~E Schapire.
\newblock A contextual-bandit approach to personalized news article
  recommendation.
\newblock In \emph{Proceedings of the 19th International Conference on World
  Wide Web}, pages 661--670. ACM, 2010.

\bibitem[Liang et~al.(2015)Liang, Rakhlin, and Sridharan]{liang2015learning}
Tengyuan Liang, Alexander Rakhlin, and Karthik Sridharan.
\newblock Learning with square loss: Localization through offset rademacher
  complexity.
\newblock In \emph{Proceedings of The 28th Conference on Learning Theory},
  pages 1260--1285, 2015.

\bibitem[Mammen and Tsybakov(1999)]{mammen1999smooth}
Enno Mammen and Alexandre~B Tsybakov.
\newblock Smooth discrimination analysis.
\newblock \emph{The Annals of Statistics}, 27\penalty0 (6):\penalty0
  1808--1829, 1999.

\bibitem[Misra et~al.(2019)Misra, Henaff, Krishnamurthy, and
  Langford]{misra2019kinematic}
Dipendra Misra, Mikael Henaff, Akshay Krishnamurthy, and John Langford.
\newblock Kinematic state abstraction and provably efficient rich-observation
  reinforcement learning.
\newblock \emph{arXiv preprint arXiv:1911.05815}, 2019.

\bibitem[Natarajan(1989)]{natarajan1989learning}
Balas~K Natarajan.
\newblock On learning sets and functions.
\newblock \emph{Machine Learning}, 4\penalty0 (1):\penalty0 67--97, 1989.

\bibitem[Ok et~al.(2018)Ok, Proutiere, and Tranos]{ok2018exploration}
Jungseul Ok, Alexandre Proutiere, and Damianos Tranos.
\newblock Exploration in structured reinforcement learning.
\newblock In \emph{Advances in Neural Information Processing Systems}, pages
  8874--8882, 2018.

\bibitem[Osband and Van~Roy(2014)]{osband2014model}
Ian Osband and Benjamin Van~Roy.
\newblock Model-based reinforcement learning and the eluder dimension.
\newblock In \emph{Advances in Neural Information Processing Systems}, pages
  1466--1474, 2014.

\bibitem[Osband et~al.(2016)Osband, Blundell, Pritzel, and
  Van~Roy]{osband2016deep}
Ian Osband, Charles Blundell, Alexander Pritzel, and Benjamin Van~Roy.
\newblock Deep exploration via bootstrapped dqn.
\newblock In \emph{Advances in neural information processing systems}, pages
  4026--4034, 2016.

\bibitem[Perchet and Rigollet(2013)]{perchet2013multi}
Vianney Perchet and Philippe Rigollet.
\newblock The multi-armed bandit problem with covariates.
\newblock \emph{The Annals of Statistics}, 41\penalty0 (2):\penalty0 693--721,
  2013.

\bibitem[Raginsky and Rakhlin(2011)]{raginsky2011lower}
Maxim Raginsky and Alexander Rakhlin.
\newblock Lower bounds for passive and active learning.
\newblock In \emph{Advances in Neural Information Processing Systems}, pages
  1026--1034, 2011.

\bibitem[Rigollet and Zeevi(2010)]{rigollet2010nonparametric}
Philippe Rigollet and Assaf Zeevi.
\newblock Nonparametric bandits with covariates.
\newblock \emph{Conference on Learning Theory (COLT) 2010}, page~54, 2010.

\bibitem[Ross et~al.(2013)Ross, Mineiro, and Langford]{ross2013normalized}
St{\'e}phane Ross, Paul Mineiro, and John Langford.
\newblock Normalized online learning.
\newblock In \emph{Uncertainty in Artificial Intelligence}, page 537, 2013.

\bibitem[Russo and Van~Roy(2013)]{russo2013eluder}
Daniel Russo and Benjamin Van~Roy.
\newblock Eluder dimension and the sample complexity of optimistic exploration.
\newblock In \emph{Advances in Neural Information Processing Systems}, pages
  2256--2264, 2013.

\bibitem[Simchi-Levi and Xu(2020)]{simchi2020bypassing}
David Simchi-Levi and Yunzong Xu.
\newblock Bypassing the monster: A faster and simpler optimal algorithm for
  contextual bandits under realizability.
\newblock \emph{arXiv preprint arXiv:2003.12699}, 2020.

\bibitem[Simchowitz and Jamieson(2019)]{simchowitz2019non}
Max Simchowitz and Kevin~G Jamieson.
\newblock Non-asymptotic gap-dependent regret bounds for tabular {MDP}s.
\newblock In \emph{Advances in Neural Information Processing Systems}, pages
  1153--1162, 2019.

\bibitem[Tewari and Bartlett(2008)]{tewari2008optimistic}
Ambuj Tewari and Peter~L Bartlett.
\newblock Optimistic linear programming gives logarithmic regret for
  irreducible {MDP}s.
\newblock In \emph{Advances in Neural Information Processing Systems}, pages
  1505--1512, 2008.

\bibitem[Tewari and Murphy(2017)]{tewari2017ads}
Ambuj Tewari and Susan~A. Murphy.
\newblock From ads to interventions: Contextual bandits in mobile health.
\newblock In \emph{Mobile Health}, 2017.

\bibitem[Tsybakov(2004)]{tsybakov2004optimal}
Alexander~B Tsybakov.
\newblock Optimal aggregation of classifiers in statistical learning.
\newblock \emph{The Annals of Statistics}, 32\penalty0 (1):\penalty0 135--166,
  2004.

\bibitem[Tsybakov(2008)]{tsybakov2008introduction}
Alexandre~B Tsybakov.
\newblock \emph{Introduction to Nonparametric Estimation}.
\newblock Springer Publishing Company, Incorporated, 2008.

\bibitem[Wang et~al.(2020)Wang, Salakhutdinov, and Yang]{wang2020provably}
Ruosong Wang, Ruslan Salakhutdinov, and Lin~F Yang.
\newblock Provably efficient reinforcement learning with general value function
  approximation.
\newblock \emph{arXiv preprint arXiv:2005.10804}, 2020.

\bibitem[Wang et~al.(2019)Wang, Wang, Du, and Krishnamurthy]{wang2019optimism}
Yining Wang, Ruosong Wang, Simon~S Du, and Akshay Krishnamurthy.
\newblock Optimism in reinforcement learning with generalized linear function
  approximation.
\newblock \emph{arXiv preprint arXiv:1912.04136}, 2019.

\bibitem[Wen and Van~Roy(2017)]{wen2017efficient}
Zheng Wen and Benjamin Van~Roy.
\newblock Efficient reinforcement learning in deterministic systems with value
  function generalization.
\newblock \emph{Mathematics of Operations Research}, 42\penalty0 (3):\penalty0
  762--782, 2017.

\bibitem[Xu and Zeevi(2020)]{xu2020upper}
Yunbei Xu and Assaf Zeevi.
\newblock Upper counterfactual confidence bounds: a new optimism principle for
  contextual bandits.
\newblock \emph{arXiv preprint arXiv:2007.07876}, 2020.

\bibitem[Zhivotovskiy and Hanneke(2016)]{zhivotovskiy2016localization}
Nikita Zhivotovskiy and Steve Hanneke.
\newblock Localization of {VC} classes: Beyond local {Rademacher} complexities.
\newblock In \emph{International Conference on Algorithmic Learning Theory},
  pages 18--33. Springer, 2016.

\end{thebibliography}
\vfill
\newpage
\appendix

\section*{Organization of Appendix}
This appendix is organized as follows. \pref{app:cg} contains details
for the computational results presented in
\pref{sec:oracle}. \pref{part:cb} contains proofs for our contextual
bandit results, and  \pref{part:rl} contain proofs for our
reinforcement learning results.
\section{Computational Guarantees}\label{app:cg}

In what follows, we establish computational guarantees for the subroutines \cboracle, \dcoracle, \csoracle and \cworacle. %

\paragraph{Computation of confidence bounds}

We first provide guarantees for \textsf{ConfBound} and \dcoracle, in \pref{lm:conf-bound} and \pref{lm:conf-bound-dis} respectively. \pref{lm:conf-bound} is adapted from known results in the literature. The proof of \pref{lm:conf-bound-dis} can be found in \pref{app:oracle}.

\begin{lemma}[Theorem 1 in \citealt{foster2018practical} \& Theorem 1 in \citealt{krishnamurthy2017active}]\label{lm:conf-bound}
Consider the \mainalg setting. 
Let $\cH_m=\{(x_t,a_t,r_t(a_t))\}_{t=1}^{t_{m-1}}$. If the function class $\cF$ is convex and closed under pointwise convergence, then for any $x\in\cX$, the computation procedures
$$\normalfont
\textsf{ConfBound}({\rm High}, x, a, \cH_m, \beta_m,\alpha)
\mathand
\normalfont
\textsf{ConfBound}({\rm Low}, x, a, \cH_m, \beta_m,\alpha)
$$
terminate after $\bigoh(\log(1/\alpha))$ calls to $\oracle_1$, and the returned values satisfy
$$\normalfont
\left|\sup_{f\in\cF_m}f(x,a)-\textsf{ConfBound}({\rm High}, x, a, \cH_m, \beta_m,\alpha)\right|\le \alpha,
$$
$$\normalfont
\left|\inf_{f\in\cF_m}f(x,a)-\textsf{ConfBound}({\rm Low}, x, a, \cH_m, \beta_m,\alpha)\right|\le \alpha.
$$
If the function class $\cF$ is non-convex, then the required number of oracle calls is $\bigoh(1/\alpha^2\log(1/\alpha))$.
\end{lemma}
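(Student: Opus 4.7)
The plan is to prove the lemma by invoking (and verifying the applicability of) the confidence-bound computation results from \cite{foster2018practical} and \cite{krishnamurthy2017active}. Both cited works address essentially this problem, so the main effort will be to check that their reductions use only the weighted regression oracle $\oracle_1$ and produce the stated oracle-call complexities.

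My starting point is the observation that computing $\sup_{f \in \cF_m} f(x,a)$ (and symmetrically the infimum) is a one-dimensional optimization over the sub-level set $\cF_m$ of the empirical square loss. The key reduction I would use is to the penalized regression
\[
  \min_{f \in \cF} \sum_{(x',a',y') \in \cH_m} (f(x',a')-y')^2 + \lambda\,(f(x,a)-v)^2,
\]
which for scalar $\lambda \in [0,1]$ and $v \in [-1,1]$ can be solved with a single call to $\oracle_1$ by augmenting $\cH_m$ with the artificial example $(\lambda, x, a, v)$. Varying $\lambda$ and $v$ traces out curves of penalized minimizers whose values at $(x,a)$ sweep out the achievable range of the confidence set.

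For the convex case, I would use the fact that $\cF_m$ is a closed convex subset of $\cF$ (closedness coming from the pointwise convergence hypothesis), so the UCB program has zero duality gap. The approach is then to fix $v$ at one of the endpoints $\pm 1$ and perform a univariate binary search over $\lambda$, checking at each step whether the penalized minimizer $f_{\lambda,v}$ satisfies the square-loss constraint up to slack $\alpha$. Since the map $\lambda \mapsto f_{\lambda,v}(x,a)$ is monotone, $\bigoh(\log(1/\alpha))$ iterations---hence oracle calls---suffice to localize the optimum to accuracy $\alpha$. The error analysis is a routine stability argument using strong convexity of the penalty term, which shows that an $\eta$-approximate minimizer of the penalized objective is polynomially close in $f(x,a)$ to the exact minimizer; this lets us calibrate the oracle's internal precision against the target $\alpha$.

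For the non-convex case, strong duality can fail and the map $\lambda \mapsto f_{\lambda,v}(x,a)$ need not be continuous, so a single binary search is no longer sufficient. Following \cite{krishnamurthy2017active}, the plan is to discretize the candidate target value $v$ at resolution $\alpha$ over $[0,1]$ (rewards are bounded) and, for each $v$ in the discretization, invoke a feasibility subroutine that performs an inner binary search on $\lambda$. The outer discretization contributes $\bigoh(1/\alpha)$ steps; because discontinuities in the non-convex case force us to perform the inner feasibility check at higher precision, an additional $\bigoh(1/\alpha)$ factor appears, and the inner binary search contributes a further $\bigoh(\log(1/\alpha))$, yielding the claimed $\bigoh((1/\alpha^2)\log(1/\alpha))$ bound. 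The hardest step here will be the error-propagation argument: I need to show that despite possible discontinuities in the penalized minimizer, the $v$-discretization combined with the inner feasibility check certifies the correct confidence bound up to additive error $\alpha$, and that the intermediate quantities produced by the oracle never exceed the range $[-1,1]$ required by $\oracle_1$.
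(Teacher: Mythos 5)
The paper itself gives no proof of this lemma: it is stated as a direct citation to Theorem 1 of \cite{foster2018practical} (convex case) and Theorem 1 of \cite{krishnamurthy2017active} (non-convex case), and the body of \cboracle (\pref{alg:range}) likewise just points the reader to the algorithms in those references. Your reconstruction matches the high-level strategy of both cited results---solve a penalized regression with an artificial example at $(x,a)$, binary-search over the penalty weight in the convex case, and run a two-dimensional search in the non-convex case---so the proposal is correct in spirit and consistent with the results being cited.

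Two loose ends are worth flagging. First, you restrict the artificial example's weight $\lambda$ to $[0,1]$ and then assert that varying $\lambda$ and $v$ ``sweeps out the achievable range of the confidence set.'' That is not true with the weight capped at $1$: if the ERM value of $f(x,a)$ is far from the boundary of the $\beta_m$-level set, driving $f_\lambda(x,a)$ out to that boundary requires $\lambda$ to grow without bound. To stay inside $\oracle_1$ you must renormalize---give the artificial example weight $1$ and down-weight the $t_{m-1}$ real examples by $\min(1, 1/\lambda)$, which leaves the argmin unchanged and keeps every weight in $[0,1]$---and the binary search is then over that inverse scaling rather than over $\lambda$ directly. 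Without this step your claim that a single call to $\oracle_1$ solves the penalized subproblem for all relevant $\lambda$ does not hold. Second, the non-convex bookkeeping (``an additional $\bigoh(1/\alpha)$ factor appears'' because of discontinuities) is asserted rather than argued; the $\bigoh(\alpha^{-2}\log(1/\alpha))$ count in \cite{krishnamurthy2017active} comes from an explicit two-dimensional grid over the weight and the target together with a feasibility certificate, and a self-contained proof would need to exhibit that grid, prove the certification step, and verify that the intermediate oracle inputs stay in range.
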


\begin{lemma}\label{lm:conf-bound-dis}
Consider the \mainalg setting. 
Let $\cH_m=\{(x_t,a_t,r_t(a_t))\}_{t=1}^{t_{m-1}}$. If the function class $\cF$ is convex and closed under pointwise convergence, then for any $x\in\cX$ and $a_1,a_2\in\cA$, the computation procedure
$$\normalfont
\dcoracle(x, a_1,a_2, \cH_m, \beta_m,\alpha)
$$
terminates after $\bigoh(\log (1/\alpha))$ calls to $\oracle_{1/\alpha}$, and the returned values satisfy
$$\normalfont
\left|\inf_{f\in\cF_m}\left(f(x,a_1)-f(x,a_2)\right)-\dcoracle(x, a_1,a_2, \cH_m, \beta_m,\alpha)\right|\le 2\alpha.
$$
If the function class $\cF$ is non-convex, then the required number of oracle calls is $\bigoh(1/\alpha^2\log(1/\alpha))$.
\end{lemma}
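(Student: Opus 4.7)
The plan is to reduce the computation performed by \dcoracle to weighted least-squares calls on an augmented sample and then combine the resulting approximation errors. First, I would unpack the quadratic objective: expanding gives
\[
\tfrac{\alpha}{2}\bigl(f(x,a_1) + \tfrac{1}{\alpha}\bigr)^2 + \tfrac{\alpha}{2}\bigl(f(x,a_2) - \tfrac{1}{\alpha}\bigr)^2 = \bigl(f(x,a_1) - f(x,a_2)\bigr) + \tfrac{\alpha}{2}\bigl(f(x,a_1)^2 + f(x,a_2)^2\bigr) + \tfrac{1}{\alpha},
\]
so up to the additive constant $1/\alpha$ the program is an $\tfrac{\alpha}{2}$-regularized version of $\inf_{f\in\cF_m}(f(x,a_1) - f(x,a_2))$. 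Since values are in $[0,1]$, the regularizer contributes at most $\alpha$ at any admissible $f$, so the exact minimizer $f^\star$ of the regularized program already has $f^\star(x,a_1) - f^\star(x,a_2)$ within $\alpha$ of the true infimum; the subroutine recovers this scalar by subtracting $1/\alpha$ from the returned quadratic objective value.

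Second, I would approximately solve the regularized program using the oracle, following the Lagrangian-plus-binary-search template used by \cite{foster2018practical} for \cboracle. Introducing a multiplier $\lambda$ for the squared-loss budget constraint collapses the constrained program into an unconstrained weighted least-squares problem: the quadratic objective contributes two pseudo-examples $(x,a_1,-1/\alpha)$ and $(x,a_2,1/\alpha)$ with weight $\tfrac{\alpha}{2}$, and the budget constraint contributes the $\cH_m$-examples with weight $\lambda$. Because the artificial targets have magnitude $1/\alpha$, the oracle must tolerate this range, justifying $\oracle_{1/\alpha}$. In the convex case, binary search on $\lambda$ locates a near-saddle point with $O(\log(1/\alpha))$ oracle calls, one per iteration; for non-convex $\cF$, where strong duality fails, I would substitute the $\alpha$-grid enumeration of \cite{krishnamurthy2017active} over candidate values of $f(x,a_1)-f(x,a_2)$, incurring the claimed $O(\alpha^{-2}\log(1/\alpha))$ calls.

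Third, I would aggregate the error bounds: the binary search (or enumeration) returns an $f$ whose regularized objective is within $\alpha$ of the constrained optimum, which combined with the $\alpha$ error from the regularization itself yields the stated $2\alpha$ accuracy, with the case $a_1 = a_2$ handled by the trivial early return. The main obstacle I anticipate is controlling the primal-dual gap of the binary search for the two-action variant: whereas \cboracle tracks a single scalar $f(x,a)$, here one must track the signed difference $f(x,a_1) - f(x,a_2)$ through the Lagrangian. Because the objective is essentially linear in this difference with only $O(\alpha)$ curvature, the Lipschitz bookkeeping from \cboracle should adapt, but the termination criterion must be expressed jointly in $(f(x,a_1),f(x,a_2))$ rather than a single coordinate in order to convert near-feasibility and near-optimality into an accuracy guarantee for the returned scalar difference.
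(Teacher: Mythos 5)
Your proposal is correct and follows essentially the same route as the paper: expand the quadratic as
\[
\tfrac{\alpha}{2}\bigl(f(x,a_1)+\tfrac{1}{\alpha}\bigr)^2 + \tfrac{\alpha}{2}\bigl(f(x,a_2)-\tfrac{1}{\alpha}\bigr)^2 = \bigl(f(x,a_1)-f(x,a_2)\bigr) + \tfrac{1}{\alpha} + \tfrac{\alpha}{2}\bigl(f(x,a_1)^2+f(x,a_2)^2\bigr),
\]
note that the last term lies in $[0,\alpha]$, conclude that an $\alpha$-accurate solution of the constrained quadratic program (shifted by $-1/\alpha$) gives a $2\alpha$-accurate approximation to $\inf_{f\in\cF_m}(f(x,a_1)-f(x,a_2))$, and invoke the binary-search reduction from \cite{foster2018practical,krishnamurthy2017active} to solve that quadratic program with the stated oracle-call budgets. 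One small note: the final-paragraph worry about ``tracking the signed difference $f(x,a_1)-f(x,a_2)$ through the Lagrangian'' is unnecessary --- the quantity the binary search needs to track is the quadratic objective itself (a weighted sum of squared residuals on two pseudo-examples plus the budget constraint), which is already in the form \cboracle handles, so the machinery from \cite{foster2018practical} applies without modification; the difference is only recovered at the end via the algebraic identity and subtraction of $1/\alpha$.
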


\pref{lm:conf-bound} and \pref{lm:conf-bound-dis} show that \cboracle and \dcoracle compute the desired
confidence bounds up to a precision of $\alpha$ in $\bigoh(\log(1/\alpha))$ or $\bigoh(1/\alpha^2\log(1/\alpha))$ iterations. Note that both \pref{lm:conf-bound} and \pref{lm:conf-bound-dis}  are stated in the \mainalg setting. For the sake of brevity, we omit the guarantee of \cboracle for the \blockalg setting here, as it is essentially the same as \pref{lm:conf-bound}, with only notations changing.

\paragraph{Computation of candidate action sets and confidence width}

We now provide guarantees for \dcoracle and \cworacle. All the results in this part are stated in the \mainalg setting, as \csoracle and \cworacle are only required for \mainalg.

We discuss two cases: when $\cF$ is a product function class, it is possible for \csoracle to precisely compute $\cA(x;\cF_m)$; when $\cF$ is not a product function class, precisely computing $\cA(x;\cF_m)$ becomes difficult, yet it is possible for \csoracle to precisely determine whether $|\cA(x;\cF_m)|>1$, which is already sufficient for \cworacle to precisely compute $w(x;\cF_m)$ and for \mainalg to achieve all the statistical guarantees stated in \pref{sec:cb}.

\paragraph{Case 1: Product function class}
When $\cF$ is a product function class, i.e. $\cF=\cG^{\cA}$ for some function class $\cG$, it is possible for \mainalg to maintain each version space $\cF_m$ as a product function class, which makes it especially simple to compute $\cA(x;\cF_m)$. 
To ensure the product structure of $\cF_m$, we need to make  slight modifications to the definition of $\cF_m$ in \pref{line:version_space} of \mainalg.

\begin{definition}[Alternative definition of $\cF_m$ for \mainalg]\label{def:version} If $\cF$ is a product function class, i.e., $\cF=\cG^\cA$ for some $\cG$, then at \pref{line:version_space} of \pref{alg:main}, we define $\cF_m=\prod_{a\in\cA}\cG_{m,a}$, where
\begin{equation*}
  \cG_{m,a}=\left\{g\in\cG \;\Big{|}\; \sum_{t=1}^{t_{m-1}}(g(x_t)-r_t(a_t))^2\1\{a_t=a\}\le\inf_{g'\in\cG}\sum_{t=1}^{t_{m-1}}(g'(x_t)-r_t(a_t))^2\1\{a_t=a\}+\beta_m\right\}.
\end{equation*}
\end{definition}

It is not difficult to see that the above modifications do not  affect \mainalg's statistical guarantees in \pref{sec:cb},\footnote{In order to achieve the same regret bound, we need to adjust the configuration of $\beta_m$ from $16(M-m+1)\log(2|\cF|T^2/\delta)$ to $16(M-m+1)\log(2|\cG|AT^2/\delta)$.} or \cboracle's computational guarantee in \pref{lm:conf-bound}. The following lemma demonstrates the value of the product structure of $\cF_m$. %

\begin{lemma}\label{lm:dis-set-product}If $\cF'\subset\cF$ is a product function class, then for any $x\in\cX$,
$$\cA(x;\cF')=\left\{a\in\cA:\sup_{f\in\cF'}f(x,a)\ge\max_{a'\in\cA}\inf_{f\in\cF'}f(x,a')\right\},$$
$$
w(x;\cF')=\1\left\{|\cA(x;\cF')|>1\right\}\max_{a\in\cA}\left|\sup_{f\in\cF'}f(x,a)-\inf_{f\in\cF'}f(x,a)\right|.
$$
\end{lemma}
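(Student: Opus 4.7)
The plan is to exploit the independence of function values across actions afforded by the product structure $\cF' = \prod_{a \in \cA} \cG_a$. The main content is the first claim, a characterization of the candidate set $\cA(x;\cF')$; once this is in hand, the formula for $w(x;\cF')$ will follow by combining it with the routine identity $\sup_{f,f' \in \cF'}|f(x,a)-f'(x,a)| = \sup_{f \in \cF'} f(x,a) - \inf_{f \in \cF'} f(x,a)$, which holds for any subclass $\cF'$ (not just product ones).

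For the forward inclusion, I would take $a = \pi_f(x)$ for some $f \in \cF'$; then for every $a' \in \cA$ the chain $\sup_{f' \in \cF'} f'(x,a) \geq f(x,a) \geq f(x,a') \geq \inf_{f' \in \cF'} f'(x,a')$ is immediate, and maximizing over $a'$ on the right yields the desired inequality $\sup_f f(x,a) \geq \max_{a'} \inf_f f(x,a')$. This direction uses only the existence of a maximizing $f$ and does not exploit the product structure.

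For the reverse inclusion, given $a$ satisfying $\sup_f f(x,a) \geq \max_{a'} \inf_f f(x,a')$, I would choose, for each $\eta>0$, an element $g_a^{\star} \in \cG_a$ with $g_a^{\star}(x)$ within $\eta$ of $\sup_{g \in \cG_a} g(x)$ and, for each $a' \neq a$, some $g_{a'}^{\star} \in \cG_{a'}$ with $g_{a'}^{\star}(x)$ within $\eta$ of $\inf_{g \in \cG_{a'}} g(x)$. The Cartesian-product structure is what lets us assemble these coordinate-wise choices into a single element $f^{\star} \in \cF'$ with $f^{\star}(\cdot,a) = g_a^{\star}$ and $f^{\star}(\cdot,a') = g_{a'}^{\star}$. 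By construction $f^{\star}(x,a) \geq \sup_f f(x,a) - \eta \geq \max_{a'} \inf_f f(x,a') - \eta \geq f^{\star}(x,a') - 2\eta$ for every $a' \neq a$. Taking $\eta$ small enough (or, if needed, explicitly perturbing the off-diagonal functions downward within their respective $\cG_{a'}$) produces an $f^{\star}$ for which $a$ is a strict argmax, so $\pi_{f^{\star}}(x) = a$ regardless of the tie-breaking convention, establishing $a \in \cA(x;\cF')$.

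With the first claim in place, the formula for $w(x;\cF')$ is a direct computation: the identity $\sup_{f,f'}|f(x,a)-f'(x,a)| = \sup_f f(x,a) - \inf_f f(x,a)$ rewrites the original definition in the stated form, and the characterization of $\cA(x;\cF')$ lets us justify taking the max over all of $\cA$ in the regime $|\cA(x;\cF')|>1$ (since any $a \notin \cA(x;\cF')$ has $\sup_f f(x,a) < \max_{a'} \inf_f f(x,a')$, which can be combined with the presence of at least two members of $\cA(x;\cF')$ to bound its contribution). The main technical obstacle will be the limiting argument in the reverse inclusion and the need to defeat any tie-breaking rule; this should be handled routinely by working with a sufficiently small perturbation parameter $\eta$ and using that $\sup_{g \in \cG_a} g(x)$ and $\inf_{g \in \cG_{a'}} g(x)$ are genuine (if not necessarily attained) bounds.
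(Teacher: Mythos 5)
Your treatment of the first identity is essentially the right argument: the forward inclusion is immediate, and the reverse one genuinely uses the product structure to assemble a near-maximizer of $\cG_a$ at $a$ against near-minimizers of $\cG_{a'}$ everywhere else. The loose end you gesture at with ``perturbing'' is real, though: if $\sup_{g\in\cG_a}g(x)=\max_{a'}\inf_{g\in\cG_{a'}}g(x)$ exactly and both extrema are attained (so there is no slack to perturb in either direction), your construction only achieves a tie, and whether $\pi_{f^\star}(x)=a$ then depends on the tie-breaking convention. This is arguably a defect of the lemma as stated rather than of your argument; it holds cleanly if one reads ``$\pi_f(x)=a$'' as ``$a$ is a maximizer of $f(x,\cdot)$.''

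The second identity contains a genuine gap. You assert that the contribution to $\max_{a\in\cA}\prn*{\sup_f f(x,a)-\inf_f f(x,a)}$ from any $a\notin\cA(x;\cF')$ can be bounded using $\sup_f f(x,a)<\max_{a'}\inf_f f(x,a')$ together with $|\cA(x;\cF')|\ge 2$, but no such bound exists: that hypothesis controls only the \emph{height} of the confidence band at $a$, not its \emph{width}, and the widths at actions inside $\cA(x;\cF')$ can be arbitrarily smaller than a width outside it. Concretely, take $\cA=\{1,2,3\}$ and a product class whose value sets at $x$ are $[0.9,1]$, $[0.95,1]$, and $[0,0.8]$ for actions $1,2,3$. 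Then $\cA(x;\cF')=\{1,2\}$ and $w(x;\cF')=0.1$, while $\max_{a\in\cA}\prn*{\sup_f f(x,a)-\inf_f f(x,a)}=0.8$. So the displayed equality fails; only the one-sided bound $w(x;\cF')\le\1\crl{|\cA(x;\cF')|>1}\max_{a\in\cA}\abs{\sup_f f(x,a)-\inf_f f(x,a)}$ holds, and equality holds if the maximum is restricted to $a\in\cA(x;\cF')$. Under that (almost certainly intended) reading the second claim is simply the pointwise identity $\sup_{f,f'\in\cF'}\abs{f(x,a)-f'(x,a)}=\sup_{f\in\cF'}f(x,a)-\inf_{f\in\cF'}f(x,a)$ applied at each $a\in\cA(x;\cF')$, and does not use the first part of the lemma at all.
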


\pref{lm:dis-set-product} implies that if $\cF$ is a product function class (and thus $\cF_m$ is a product function class under \pref{def:version}), then both $\cA(x;\cF_m)$ and $w(x;\cF_m)$ can be explicitly expressed in terms of the upper and lower confidence bounds $\sup_{f\in\cF_m}f(x,a)$ and $\inf_{f\in\cF_m}f(x,a)$. Since \cboracle enables us to compute $\sup_{f\in\cF_m}f(x,a)$ and $\inf_{f\in\cF_m}f(x,a)$ for all $a\in\cA$ with high accuracy (see \pref{lm:conf-bound}), we can precisely compute $\cA(x;\cF_m)$ and $w(x;\cF_m)$ by calling $\csoracle(x,\cH_m,\beta_m,\alpha)$ and $\cworacle(x,\cH_m,\beta_m,\alpha)$ with sufficiently small $\alpha$, which can be efficiently implemented with $\oracle_{\beta/\alpha}$ (as \cboracle can be  efficiently implemented with $\oracle_{\beta/\alpha}$). %

%

%

\paragraph{Case 2: General function class}

If $\cF$ is not a product function class, then it is impossible to maintain $\cF_m$ as a product function class. In this case, our implementation of \csoracle uses both \textsf{ConfBound} and \dcoracle to approximately compute $\cA(x;\cF_m)$.%
We use the following observation.
\begin{lemma}\label{lm:dis-set-general}
For any $\cF'\subset\cF$, for any $x\in\cX$, define $\widetilde{a}=\arg\max_{a\in\cA}\sup_{f\in\cF'}f(x,a)$, and define
\[
\widehat{\cA}(x;\cF')\ldef \left\{a\in\cA: \inf_{f\in\cF'} (f(x,\widetilde{a})-f(x,a))\le0\right\}.
\]
It holds that
\begin{enumerate}
    \item $
\cA(x;\cF')\subset\widehat{\cA}(x;\cF').
$ 
\item If $|\cA(x;\cF')|=1$, then, $
\cA(\cF',x)=\widehat{\cA}(x;\cF')=\left\{\widetilde{a}\right\}.
$
\item $\1\{\abs{\widehat{\cA}(x;\cF')}>1\}=\1\{\abs{{\cA}(x;\cF')}>1\}$.
\item $w(x,\cF')=\1\crl[\big]{\abs{\widehat{\cA}(x;\cF')}>1}\max_{a\in\cA}\left|\sup_{f\in\cF'}f(x,a)-\inf_{f\in\cF'}f(x,a)\right|$.
\end{enumerate}
\end{lemma}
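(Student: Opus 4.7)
The four parts of the lemma will be established in order, the first three using purely elementary set-theoretic manipulations of the greedy policy, and the fourth using a dominance argument to compare maxima over $\cA(x;\cF')$ and $\cA$. All parts rest on two simple observations: (i) $\widetilde{a}$ belongs to $\cA(x;\cF')$, because taking any $f^{\circ}\in\cF'$ that attains (or approximately attains, passing to a limit as necessary) $\sup_{f\in\cF'}f(x,\widetilde{a})$ forces $\pi_{f^{\circ}}(x)=\widetilde{a}$; and (ii) whenever $a\in\cA(x;\cF')$ is witnessed by $f$ with $\pi_f(x)=a$, the quantity $f(x,\widetilde{a})-f(x,a)$ is nonpositive.

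For Part 1, observation (ii) directly shows $\inf_{f'\in\cF'}\prn{f'(x,\widetilde{a})-f'(x,a)}\leq f(x,\widetilde{a})-f(x,a)\leq 0$ for any $a\in\cA(x;\cF')$. For Part 2, assume $\abs*{\cA(x;\cF')}=1$ and write $\cA(x;\cF')=\crl{a^{\star}}$. Observation (i) gives $\widetilde{a}=a^{\star}$, so $a^{\star}\in\widehat{\cA}(x;\cF')$ trivially. If some $a\neq a^{\star}$ also satisfied $\inf_{f'\in\cF'}\prn{f'(x,a^{\star})-f'(x,a)}\leq 0$, then (using the consistent tie-breaking convention built into the definition of $\pi_f$) one could exhibit a function in $\cF'$ whose induced greedy policy selects an action different from $a^{\star}$, contradicting $\cA(x;\cF')=\crl{a^{\star}}$. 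This tie-breaking step is the one subtlety and simply inherits the convention already used elsewhere in the paper. Part 3 is then immediate: Part 1 yields $\abs*{\cA(x;\cF')}>1\Rightarrow\abs*{\widehat{\cA}(x;\cF')}>1$, and Part 2 supplies the contrapositive $\abs*{\widehat{\cA}(x;\cF')}=1\Rightarrow\abs*{\cA(x;\cF')}=1$.

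For Part 4, Part 3 already lines up the two indicator functions, so it remains to compare the two maxima in the case where both indicators are one. The plan is to use the dominance structure: for any $a\notin\cA(x;\cF')$, the defining maximality of $\widetilde{a}$ gives $\sup_{f}f(x,a)\leq\sup_{f}f(x,\widetilde{a})$, and every $f\in\cF'$ satisfies $f(x,a)\leq f(x,\pi_f(x))$ with $\pi_f(x)\in\cA(x;\cF')$. Pairing these two inequalities shows that the spread $\sup_f f(x,a)-\inf_f f(x,a)$ of a dominated action is controlled by the corresponding spread at some action inside $\cA(x;\cF')$, so the outer maximum over all of $\cA$ is attained at an element of $\cA(x;\cF')$. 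This is the main conceptual step of the proof; the hard part is tracking which action inside $\cA(x;\cF')$ witnesses both the upper and the lower confidence bound of the dominated $a$, which is why one needs to invoke $\widetilde{a}$ for the upper side and $\pi_f(x)$ for the witness $f$ realizing the infimum on the lower side. A single-action counterexample where these witnesses are distinct would be ruled out precisely by the existence of $\widetilde{a}\in\cA(x;\cF')$ from observation (i) together with the second element of $\cA(x;\cF')$ guaranteed by $\abs*{\cA(x;\cF')}>1$. Once this comparison is in place, the identity in Part 4 follows by rewriting $\sup_{f,f'\in\cF'}\abs*{f(x,a)-f'(x,a)}=\sup_f f(x,a)-\inf_f f(x,a)$.
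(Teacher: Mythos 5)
Parts 1 through 3 of your outline are broadly sound, though your Part 2 glosses over the very subtlety you flag: if $\inf_{f\in\cF'}\prn{f(x,\widetilde{a})-f(x,a)}$ equals zero without being strictly negative at any $f$ (either because the infimum is not attained, or because it is attained by an $f$ that ties $a$ with $\widetilde{a}$ and the consistent tie-break still selects $\widetilde{a}$), then no $f\in\cF'$ has $\pi_f(x)=a$, and $\widehat{\cA}(x;\cF')$ can be a strict superset of $\crl{\widetilde{a}}$; your one-sentence appeal to the tie-breaking convention does not actually produce a witness in $\cF'$ whose greedy action differs from $\widetilde{a}$.

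The real problem is Part 4, where the dominance argument does not compose. You pair $\sup_f f(x,a)\leq\sup_f f(x,\widetilde{a})$ with $f(x,a)\leq f(x,\pi_f(x))$, but the second inequality, applied to the $f^-$ attaining $\inf_f f(x,a)$, only gives $\inf_f f(x,a)\leq f^-\prn{x,\pi_{f^-}(x)}\leq\sup_f f\prn{x,\pi_{f^-}(x)}$ — an upper bound on the infimum in terms of a \emph{supremum} inside $\cA(x;\cF')$, whereas you need $\inf_f f(x,a^*)\leq\inf_f f(x,a)$ for some $a^*\in\cA(x;\cF')$. No such comparison holds in general: a dominated action can have a larger spread than every action inside $\cA(x;\cF')$. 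Concretely, take $\cA=\crl{1,2,3}$ and $\cF'=\crl{f,g}$ with $f(x,\cdot)=(0.5,\,1,\,0.9)$ and $g(x,\cdot)=(1,\,0.5,\,0.1)$. Then $\pi_f(x)=2$, $\pi_g(x)=1$, so $\cA(x;\cF')=\crl{1,2}$ and $w(x;\cF')=\max\crl{0.5,0.5}=0.5$. Taking $\widetilde{a}=1$, action $3$ satisfies $f(x,1)-f(x,3)=-0.4\leq 0$, so $3\in\widehat{\cA}(x;\cF')$, and its spread is $\sup_h h(x,3)-\inf_h h(x,3)=0.9-0.1=0.8$. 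The right-hand side of Part 4 is therefore $0.8\neq 0.5=w(x;\cF')$. So Part 4, as stated with the outer maximum over $\cA$, is actually false (and the same counterexample defeats a maximum over $\widehat{\cA}(x;\cF')$ as well, since $3\in\widehat{\cA}(x;\cF')$ here). Your proof cannot be fixed without either changing the claim to an inequality or restricting the outer maximum to $\cA(x;\cF')$, and in particular the ``pairing'' step you label the main conceptual step simply does not go through.
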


Parts 1 and 2 of \pref{lm:dis-set-general} imply that, for a general function class $\cF$, we are able to compute $\wh{\cA}(x;\cF_m)$  by calling $\dcoracle(x,\cH_m,\beta_m,\alpha)$ with sufficiently small $\alpha$, which serves as an approximation to the true \setname $\cA(x;\cF_m)$, with the following two properties:
\begin{itemize}
    \item The computed set $\wh{\cA}(x;\cF_m)$ always contain the true set ${\cA}(x;\cF_m)$.
    \item The computed set $\wh{\cA}(x;\cF_m)$ coincides with the true set $\cA(x;\cF_m)$ when $|\cA(x;\cF_m)|=1$.
\end{itemize}
Parts 3 and 4 state two consequences of the above properties:
\begin{itemize}
    \item Though we may not be able to directly compute the \setname $\cA(x;\cF_m)$, we can always precisely compute the disagreement indicator $\1\{\abs{{\cA}(x;\cF')}>1\}$ by computing $\1\{\abs{\wh{\cA}(x;\cF')}>1\}$.
    \item As a result, we can always precisely compute the confidence width $w(x;\cF_m)$ (which depends on $\1\{\abs{{\cA}(x;\cF')}>1\}$), irrespective of whether $\cA(x;\cF_m)$ is precisely computed or note.
\end{itemize}
Note that it is sufficient for \mainalg to use $\wh{\cA}(x;\cF_m)$ (rather than the true \setname ${\cA}(x;\cF_m)$) and $w(x;\cF_m)$ to obtain the near-optimal instance-dependent regret stated in \pref{thm:disagreement_ub} and \pref{thm:value_disagreement_ub}. In particular, in the proofs of \pref{thm:disagreement_ub} and \pref{thm:value_disagreement_ub}, if we replace ${\cA}(x;\cF_m)$ by its approximation $\wh{\cA}(x;\cF_m)$, then all the arguments hold as long as the following three conditions hold:
\begin{enumerate}
    \item $\pi^*(x)\in\wh{\cA}(x;\cF_m)$ with high probability;
    \item $\1\{\abs{\wh{\cA}(x;\cF')}>1\}=\1\{\abs{{\cA}(x;\cF')}>1\}$;
    \item $\wh{\cA}(x;\cF_m)={\cA}(x;\cF_m)$ when $|\cA(x;\cF_m)|=1$.
\end{enumerate}
The first condition holds because $\cA(x;\cF_m)\subset\widehat{\cA}(x;\cF_m)$ and \mainalg guarantees that $\pi^*(x)\in\cA(x;\cF_m)$ with high probability. The second and third conditions are exactly $\wh{\cA}(x;\cF_m)$'s properties. As a result, \csoracle and \cworacle work.

\subsection{Deferred Proofs}\label{app:oracle}

\begin{proof}[\pfref{lem:star_erm}]
  Recall that any element of $\starhull(\cF,\fhat)$ can be written as $t\cdot{}f + (1-t)\fhat$, where $f\in\cF$ and $t\in\brk*{0,1}$. Hence, for any $\veps>0$, if we set $N=\ceil{1/\veps}$ and take $\cI=\crl*{0, \veps, \ldots, (N-1)\veps}$ as an $\veps$-net for $\brk*{0,1}$ and define $\wt{\cF}=\crl{tf' + (1-t)\fhat\mid{}f\in\cF, t\in\cI}$, we have that $\wt{\cF}$ $\veps$-approximates $\starhull(\cF,\fhat)$ pointwise. We can implement ERM over $\wt{\cF}$ using $1/\veps$ oracle calls. In particular, for each $t\in\cI$, we simply solve
  \begin{align}
    &\argmin_{f\in\cF}\sum_{(w,x,a,y)\in\cH}w\prn*{tf(x,a) + (1-t)\fhat(x,a) - y}^{2}\label{eq:star_explicit}\\
    &=\argmin_{f\in\cF}\sum_{(w,x,a,y)\in\cH}wt^{1/2}\prn*{f(x,a) - (y - (1-t)\fhat(x,a))/t}^{2},\notag
  \end{align}
  which can be seen as an instance $\oracle_{b/\veps}$. Letting $f_t$ denote the solution above, we return $f=tf_t + (1-t)\fhat$ for the $(t,f_t)$ pair that minimizes \pref{eq:star_explicit}. Since all the arguments to the square loss in \pref{eq:star_explicit} are bounded by $B$ and $w\leq{}W$, it is clear that this leads to an $\bigoh(BW\veps)$-approximate minimizer.

\end{proof}

\begin{proof}[\pfref{lm:conf-bound}]
Since
$$
f(x,a_1)-f(x,a_2)=\frac{\alpha}{2}\left(f(x,1)+\frac{1}{\alpha}\right)^2+\frac{\alpha}{2}\left(f(x,2)-\frac{1}{\alpha}\right)^2-\frac{1}{\alpha}-\frac{\alpha}{2}\left({f(x,1)^2+f(x,2)^2}\right),
$$
we have
\begin{align*}
    \inf_{f\in\cF_m} (f(x,a_1)-f(x,a_2))\le&
{\inf_{f\in\cF_m}\left\{ \frac{\alpha}{2}\left(f(x,a_1)+\frac{1}{\alpha}\right)^2+\frac{\alpha}{2}\left(f(x,a_2)-\frac{1}{\alpha}\right)^2\right\}}-\frac{1}{\alpha},\\
    \inf_{f\in\cF_m} (f(x,a_1)-f(x,a_2))\ge&  {\inf_{f\in\cF_m}\left\{ \frac{\alpha}{2}\left(f(x,a_1)+\frac{1}{\alpha}\right)^2+\frac{\alpha}{2}\left(f(x,a_2)-\frac{1}{\alpha}\right)^2\right\}}-\frac{1}{\alpha}-\alpha.
\end{align*}
Therefore, to compute $\inf_{f\in\cF_m} (f(x,a_1)-f(x,a_2))$ with to an error up to $2\alpha$, we only need to compute 
$$
{\inf_{f\in\cF_m}\left\{ \frac{\alpha}{2}\left(f(x,a_1)+\frac{1}{\alpha}\right)^2+\frac{\alpha}{2}\left(f(x,a_2)-\frac{1}{\alpha}\right)^2\right\}}
$$
with an error up to $\alpha$. This can be efficiently accomplished through a binary search procedure  similar to the procedures  in \cite{foster2018practical} and \cite{krishnamurthy2017active}. This procedure requires access to $\oracle_{\beta/\alpha}$.
\end{proof}

\part{Proofs for Contextual Bandit Results}
\label{part:cb}
\section{Technical Tools}
\label{app:technical}
\subsection{Concentration}

\begin{lemma}[Freedman's inequality (e.g., \citet{agarwal2014taming})]
  \label{lem:freedman}
  Let $(Z_t)_{t\leq{T}}$ be a real-valued martingale difference
  sequence adapted to a filtration $\gfilt_{t}$, and let
  $\En_{t}\brk*{\cdot}\ldef\En\brk*{\cdot\mid\gfilt_{t}}$. If
  $\abs*{Z_t}\leq{}R$ almost surely, then for any $\eta\in(0,1/R)$
    it holds that with probability at least $1-\delta$,
    \[
      \sum_{t=1}^{T}Z_t \leq{} \eta\sum_{t=1}^{T}\En_{t-1}\brk*{Z_t^{2}} + \frac{\log(\delta^{-1})}{\eta}.
    \]
  \end{lemma}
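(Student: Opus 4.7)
The plan is to use the standard exponential supermartingale (MGF) method. Concretely, for the fixed $\eta \in (0, 1/R)$, I would define
\[
M_t = \exp\Bigl(\eta \sum_{s=1}^{t} Z_s - \eta^{2} \sum_{s=1}^{t} \En_{s-1}[Z_s^{2}]\Bigr),
\]
with $M_0 = 1$, and show that $(M_t)_{t \leq T}$ is a nonnegative supermartingale with respect to $\gfilt_t$. The conclusion would then follow from Markov's inequality applied to $M_T$: the event $\{M_T \geq 1/\delta\}$ has probability at most $\delta \En[M_T] \leq \delta$, and on its complement the claimed inequality is obtained by taking logarithms and dividing through by $\eta$.

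The main step is the one-step supermartingale bound $\En_{t-1}[\exp(\eta Z_t - \eta^{2} \En_{t-1}[Z_t^{2}])] \leq 1$. For this I would invoke the elementary inequality $e^{x} \leq 1 + x + x^{2}$ valid for all $x \leq 1$; since $|Z_t| \leq R$ and $\eta R < 1$, the argument $\eta Z_t$ lies in $(-1, 1)$, so the inequality applies pointwise. Taking conditional expectations and using the martingale property $\En_{t-1}[Z_t] = 0$ gives $\En_{t-1}[e^{\eta Z_t}] \leq 1 + \eta^{2} \En_{t-1}[Z_t^{2}] \leq \exp(\eta^{2} \En_{t-1}[Z_t^{2}])$, which rearranges to the desired one-step bound once we pull the $\gfilt_{t-1}$-measurable factor $\exp(-\eta^{2} \En_{t-1}[Z_t^{2}])$ out of the conditional expectation.

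I do not expect any serious obstacle: the only subtlety is keeping track of where the assumption $\eta R < 1$ is used (exactly in justifying the quadratic MGF bound), and making sure the supermartingale argument is iterated correctly via the tower property to conclude $\En[M_T] \leq 1$. After that, Markov yields $\Pr[\eta \sum_{t=1}^{T} Z_t - \eta^{2} \sum_{t=1}^{T} \En_{t-1}[Z_t^{2}] \geq \log(1/\delta)] \leq \delta$, and dividing by $\eta$ gives the stated bound.
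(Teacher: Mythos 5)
Your proof is correct, and it is the standard exponential-supermartingale argument for Freedman's inequality. The paper does not supply its own proof of this lemma—it simply cites \citet{agarwal2014taming}—so there is no internal derivation to compare against; your argument is a valid self-contained substitute, and it is essentially the same one used in the cited reference and throughout the literature. The two points you flag as needing care are indeed the right ones: (i) the elementary bound $e^{x}\leq 1+x+x^{2}$, which you correctly observe holds for all $x\leq 1$ (not just $|x|\leq 1$; one can verify $g(x)=1+x+x^{2}-e^{x}$ has a global minimum of $0$ at $x=0$ on $(-\infty,1]$), is licensed here because $|\eta Z_t|\leq \eta R<1$; and (ii) the $\gfilt_{t-1}$-measurable factor $\exp(-\eta^{2}\En_{t-1}[Z_t^{2}])$ pulls out of the conditional expectation so the one-step bound $\En_{t-1}[M_t/M_{t-1}]\leq 1$ reduces to $\En_{t-1}[e^{\eta Z_t}]\leq 1+\eta^{2}\En_{t-1}[Z_t^{2}]\leq\exp(\eta^{2}\En_{t-1}[Z_t^{2}])$, using $\En_{t-1}[Z_t]=0$. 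Iterating via the tower property gives $\En[M_T]\leq 1$, and Markov then yields the stated tail bound after dividing through by $\eta$. No gaps.
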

      \begin{lemma}
      \label{lem:regret_freedman}
            Let $(X_t)_{t\leq{T}}$ be a real-valued sequence of random
      variables adapted to a filtration $\gfilt_{t}$. If
  $\abs*{X_t}\leq{}R$ almost surely, then with probability at least
  $1-\delta$,
  \begin{align*}
    &\sum_{t=1}^{T}X_t \leq{}
                        \frac{3}{2}\sum_{t=1}^{T}\En_{t-1}\brk*{X_t} +
                        4R\log(2\delta^{-1}),
    \intertext{and}
      &\sum_{t=1}^{T}\En_{t-1}\brk*{X_t} \leq{} 2\sum_{t=1}^{T}X_t + 8R\log(2\delta^{-1}).
  \end{align*}
    \end{lemma}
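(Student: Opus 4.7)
The plan is to apply Freedman's inequality (Lemma~\ref{lem:freedman}) twice, once to the martingale difference sequence $Z_t \ldef X_t - \En_{t-1}[X_t]$ and once to its negation $Z_t' \ldef \En_{t-1}[X_t] - X_t$, and then take a union bound. (The usual interpretation here is that $X_t \in [0, R]$, so that the conditional second moment can be controlled by the conditional first moment; the statement as written is implicitly of this form, and I will assume so.)

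First I would observe that $|Z_t|, |Z_t'| \leq 2R$ almost surely, and crucially,
\[
\En_{t-1}[Z_t^2] = \En_{t-1}[(Z_t')^2] = \mathrm{Var}_{t-1}(X_t) \leq \En_{t-1}[X_t^2] \leq R \cdot \En_{t-1}[X_t],
\]
where the last step uses $0 \leq X_t \leq R$. This variance-to-mean bound is the key ingredient that allows the Freedman-type concentration to yield a multiplicative guarantee.

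For the first inequality, I would apply Lemma~\ref{lem:freedman} to $(Z_t)$ with $\eta = 1/(2R) \in (0, 1/(2R))$. This yields, with probability at least $1 - \delta/2$,
\[
\sum_{t=1}^T (X_t - \En_{t-1}[X_t]) \leq \frac{1}{2R} \cdot R \sum_{t=1}^T \En_{t-1}[X_t] + 2R \log(2\delta^{-1}) = \tfrac{1}{2} \sum_{t=1}^T \En_{t-1}[X_t] + 2R \log(2\delta^{-1}),
\]
which rearranges to the first claim (with room to spare in the $4R$ constant). For the second inequality, applying Lemma~\ref{lem:freedman} to $(Z_t')$ with the same $\eta$ gives, with probability at least $1 - \delta/2$,
\[
\sum_{t=1}^T \En_{t-1}[X_t] - \sum_{t=1}^T X_t \leq \tfrac{1}{2} \sum_{t=1}^T \En_{t-1}[X_t] + 2R \log(2\delta^{-1}),
\]
and rearranging yields $\sum_t \En_{t-1}[X_t] \leq 2 \sum_t X_t + 4R \log(2\delta^{-1})$, which is stronger than the stated bound. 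A union bound over the two events completes the proof.

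There is no real obstacle here—the argument is a standard use of Bernstein/Freedman-type concentration with the variance bounded by the mean. The only subtle point is the variance-to-mean reduction, which implicitly relies on $X_t$ being nonnegative (or on replacing $\En_{t-1}[X_t]$ by $\En_{t-1}[|X_t|]$ throughout); the stated constants $4R$ and $8R$ are generous and are presumably chosen for convenience in later applications.
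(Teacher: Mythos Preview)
Your proposal is correct and matches the paper's approach—the paper's proof is simply ``Immediate consequence of \pref{lem:freedman},'' and your argument spells out exactly those details. One small correction: under the (correctly flagged) implicit assumption $X_t\in[0,R]$ you actually have $|Z_t|\leq R$ rather than $2R$, so your choice $\eta=1/(2R)$ lies in $(0,1/R)$ and is admissible in Lemma~\ref{lem:freedman}; otherwise the computation goes through as you wrote, and indeed the paper only ever applies this lemma with nonnegative $X_t$.
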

    \begin{proof}
      Immediate consequence of \pref{lem:freedman}.
    \end{proof}

\begin{lemma}[Bernstein's inequality for {$[0,1]$-valued} random variables]\label{lem:bernstein}
  Let $X_1,\dots,X_n$ be i.i.d. $[0,1]$-valued random variables with mean $\mu$ and variance $\sigma^2$. For any $\delta>0$, with probability at least $1-2\delta$,
  \[
  \left|\frac{1}{n}\sum_{i=1}^n X_i-\mu\right|\le\frac{1}{3n}\log(2\delta^{-1})+\sqrt{\frac{2\sigma^2\log(2\delta^{-1})}{n}}\le\frac{1}{3n}\log(2\delta^{-1})+\sqrt{\frac{2\mu\log(2\delta^{-1})}{n}}.
  \]
\end{lemma}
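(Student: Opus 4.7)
The plan is to invoke the standard two-sided Bernstein inequality for bounded random variables and then specialize using the elementary variance bound available for $[0,1]$-valued variables.

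First, I would apply Bernstein's inequality (in its standard one-sided form) to both the centered sequence $Y_i \ldef X_i - \mu$ and its negation $-Y_i$. Each $Y_i$ is mean zero, satisfies $|Y_i| \leq 1$ almost surely (since $X_i, \mu \in [0,1]$), and has variance $\sigma^2$. The classical Bernstein bound gives, for any $t > 0$,
\[
\Pr\!\left(\frac{1}{n}\sum_{i=1}^{n} Y_i > t\right) \;\leq\; \exp\!\left(-\frac{n t^2}{2\sigma^2 + \tfrac{2}{3}t}\right),
\]
and the same bound holds for the lower tail by applying the result to $-Y_i$. A union bound over the two tails at level $\delta$ each yields probability at least $1 - 2\delta$ for the two-sided event.

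Next, I would invert the tail bound. Setting the right-hand side equal to $\delta$ and solving the quadratic $n t^2 = 2\sigma^2 \log(\delta^{-1}) + \tfrac{2}{3} t \log(\delta^{-1})$ gives, using the elementary inequality $\sqrt{a+b} \leq \sqrt{a} + \sqrt{b}$ (or equivalently, weakening the quadratic solution),
\[
t \;\leq\; \sqrt{\frac{2\sigma^2 \log(\delta^{-1})}{n}} + \frac{2\log(\delta^{-1})}{3n}.
\]
Applying this at confidence level $\delta$ on each side and combining via the union bound produces
\[
\left|\frac{1}{n}\sum_{i=1}^{n} X_i - \mu\right| \;\leq\; \frac{1}{3n}\log(2\delta^{-1}) + \sqrt{\frac{2\sigma^2 \log(2\delta^{-1})}{n}}
\]
with probability at least $1 - 2\delta$, which is the first claimed inequality (after slightly relabeling $\delta \leftarrow \delta$ inside the log to match the stated $\log(2\delta^{-1})$ — this is where I would be careful about the factor of $2$).

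Finally, for the second inequality in the display, I would use the well-known variance bound for $[0,1]$-valued random variables: since $X_i \in [0,1]$, we have $X_i^2 \leq X_i$ almost surely, so
\[
\sigma^2 \;=\; \En[X_i^2] - \mu^2 \;\leq\; \En[X_i] - \mu^2 \;=\; \mu(1-\mu) \;\leq\; \mu.
\]
Substituting $\sigma^2 \leq \mu$ into the first inequality produces the second inequality. There is really no serious obstacle here — this is a textbook derivation — the only care required is in handling the union bound constants so that the stated factor $\log(2\delta^{-1})$ inside the square root comes out correctly.
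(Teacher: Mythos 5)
Your overall plan---two-sided Bernstein plus the elementary variance bound $\sigma^2 \le \mu(1-\mu)\le\mu$---is exactly what the paper does, and the second inequality in the display is handled correctly. The gap, which you flag yourself, is in the inversion step. Starting from the exponential tail form $\Pr\left(\frac{1}{n}\sum_i Y_i > t\right)\le\exp\left(-\frac{nt^2}{2\sigma^2+\frac{2}{3}t}\right)$, solving the quadratic for $t$ and then applying $\sqrt{a+b}\le\sqrt{a}+\sqrt{b}$ produces a linear term of $\frac{2}{3n}\log(\delta^{-1})$. The lemma claims $\frac{1}{3n}\log(2\delta^{-1})=\frac{1}{3n}\left(\log 2+\log\delta^{-1}\right)$, which is strictly smaller for every $\delta<1/2$. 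No relabeling of $\delta$ recovers the missing half of the linear coefficient, so as written your argument establishes a weaker inequality than the one stated.

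The fix is to invoke the \emph{deviation} form of Bernstein directly rather than inverting the exponential tail form. For independent mean-zero $Y_1,\dots,Y_n$ with $|Y_i|\le b$ a.s.\ and $\sum_i\En[Y_i^2]\le v$, one has $\Pr\left(\sum_i Y_i\ge\sqrt{2vu}+\tfrac{b}{3}u\right)\le e^{-u}$ for all $u>0$; see Boucheron, Lugosi, and Massart, \emph{Concentration Inequalities} (2013), Theorem 2.10, noting that the Bernstein moment condition $\sum_i\En[(Y_i)_+^q]\le\frac{q!}{2}v\,c^{q-2}$ for all $q\ge3$ holds with $c=b/3$ whenever $|Y_i|\le b$, since $3^{q-2}\le q!/2$ for $q\ge3$. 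That deviation form is obtained by inverting the Legendre transform of the log-MGF bound $\log\En\exp\left(\lambda\sum_i Y_i\right)\le\frac{v\lambda^2}{2(1-c\lambda)}$ directly; passing first through the (already lossy) exponential tail bound and then applying $\sqrt{a+b}\le\sqrt{a}+\sqrt{b}$ is what doubles the linear coefficient. Taking $b=1$, $v=n\sigma^2$, $u=\log\delta^{-1}\le\log(2\delta^{-1})$, dividing by $n$, and union bounding over both tails at level $\delta$ each gives the stated inequality with probability at least $1-2\delta$.
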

\begin{proof}
The first inequality follows from $|X|\le1$ and Bernstein's inequality for bounded random variables. The second inequality follows from $X\in[0,1]$ and
$
\sigma^2=\E[X^2]-(\E[X])^2\le \E[X]-(\E[X])^2\le\mu
$.
\end{proof}

    \subsection{Information Theory}
   For a pair of distributions $P\ll{}Q$ with densities $p$ and $q$, we
define
\[
\kl{P}{Q} = \int{}p(x)\log(p(x)/q(x))dx.
\]

\begin{lemma}[High-probability Pinsker (Lemma 2.6,
  \cite{tsybakov2008introduction})]
  \label{lem:pinsker}
  Let $P$ and $Q$ be probability measures over a measurable space
  $(\Omega,\cF)$. Then for any measurable subset $A\in\cF$,
  \begin{equation}
    \label{eq:pinsker}
    P(A) + Q(A^{c}) \geq{} \frac{1}{2}\exp\prn*{-\kl{P}{Q}}.
  \end{equation}
\end{lemma}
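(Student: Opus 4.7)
The plan is to prove the inequality through the classical Bretagnolle--Huber route, which passes through the Hellinger affinity and Jensen's inequality. Let $p$ and $q$ denote densities of $P$ and $Q$ with respect to a common dominating measure (e.g., $(P+Q)/2$).

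First I would reduce the claim to a statement about the overlap between $p$ and $q$. For any measurable $A$, the bound
\[
P(A) + Q(A^c) = \int_A p + \int_{A^c} q \geq \int_A \min(p,q) + \int_{A^c}\min(p,q) = \int \min(p,q)
\]
holds trivially. So it suffices to prove $\int \min(p,q) \geq \tfrac{1}{2}\exp(-\kl{P}{Q})$, uniformly over $A$.

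Next, I would relate $\int \min(p,q)$ to the Hellinger affinity $\int \sqrt{pq}$ via Cauchy--Schwarz. Writing $\sqrt{pq} = \sqrt{\min(p,q)}\cdot\sqrt{\max(p,q)}$ and applying Cauchy--Schwarz yields
\[
\prn*{\int \sqrt{pq}}^{2} \leq \int\min(p,q)\cdot\int\max(p,q).
\]
Since $\min(p,q)+\max(p,q) = p+q$, we have $\int\max(p,q) = 2 - \int \min(p,q) \leq 2$, which gives
\[
\int \min(p,q) \geq \frac{1}{2}\prn*{\int \sqrt{pq}}^{2}.
\]

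Finally, I would lower bound the Hellinger affinity by the KL divergence. Write $\int \sqrt{pq} = \En_{P}\brk*{\sqrt{q/p}}$ on the support of $p$. By Jensen's inequality applied to the concave function $\log$,
\[
\log \int \sqrt{pq} = \log\En_P\brk*{\sqrt{q/p}} \geq \En_P\brk*{\log\sqrt{q/p}} = -\tfrac{1}{2}\kl{P}{Q},
\]
so $\int\sqrt{pq} \geq \exp(-\tfrac{1}{2}\kl{P}{Q})$. Squaring and combining with the previous display yields $\int\min(p,q) \geq \tfrac{1}{2}\exp(-\kl{P}{Q})$, which together with the reduction step gives the result. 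The only mild subtleties are measure-theoretic: one must handle the case $P\not\ll Q$ (the right side is $1/2\cdot e^{-\infty}=0$, so the bound is trivial) and ensure the Jensen step is applied on the support of $P$; neither poses a real obstacle.
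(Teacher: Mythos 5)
Your proof is correct and is precisely the classical Bretagnolle--Huber argument: reduce to $\int\min(p,q)$, bound below by one-half the squared Hellinger affinity via Cauchy--Schwarz, then bound the affinity by $\exp(-\tfrac12\kl{P}{Q})$ via Jensen. The paper does not prove this lemma but cites Tsybakov's Lemma 2.6, whose proof is essentially the same chain of steps, so your argument matches the intended source.
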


For $p\in\brk*{0,1}$, we let $\Ber(p)$ denote the Bernoulli
distribution over $\crl{0,1}$ with bias $\Pr(X=1)=p$.

\begin{lemma}
  \label{lem:kl_bernoulli}
  For any $p,q\in(0,1)$, $\kl{\Ber(p)}{\Ber(q)}\leq{}\frac{1}{\min\crl{p,q,1-p,1-q}}\abs*{p-q}^{2}$.
\end{lemma}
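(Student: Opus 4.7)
The plan is to prove this by differentiating the KL divergence with respect to its second argument, expressing it as an integral, and then bounding the integrand using the observation that $t(1-t)$ stays safely away from zero on the interval between $p$ and $q$.

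First I would write $f(q) \ldef \kl{\Ber(p)}{\Ber(q)} = p\log(p/q) + (1-p)\log((1-p)/(1-q))$ and compute
\[
\frac{d}{dq} f(q) = -\frac{p}{q} + \frac{1-p}{1-q} = \frac{q-p}{q(1-q)},
\]
which after simplification yields a clean rational form. Since $f(p)=0$, the fundamental theorem of calculus gives the integral representation
\[
\kl{\Ber(p)}{\Ber(q)} = \int_{p}^{q}\frac{t-p}{t(1-t)}\,dt
\]
(and the analogous expression $\int_q^p (p-t)/(t(1-t))\,dt$ when $q < p$), where in both cases the integrand is nonnegative.

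Next I would bound the denominator $t(1-t)$ from below for $t$ in the interval with endpoints $p,q$. Setting $m \ldef \min\{p,q,1-p,1-q\}$, for any such $t$ we have $t \geq m$ and $1-t \geq m$. To convert this into a useful bound on the product, I would argue case-by-case: if $t \leq 1/2$ then $t(1-t) \geq t/2 \geq m/2$, and if $t > 1/2$ then $t(1-t) \geq (1-t)/2 \geq m/2$. Either way $t(1-t) \geq m/2$ on the entire interval of integration.

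Substituting this bound into the integral gives
\[
\kl{\Ber(p)}{\Ber(q)} \leq \frac{2}{m}\int_{\min(p,q)}^{\max(p,q)} \abs*{t-p}\, dt = \frac{2}{m}\cdot\frac{(p-q)^2}{2} = \frac{\abs{p-q}^2}{m},
\]
which is the claim. There is no real obstacle here: the only subtlety is noticing the factor-of-two improvement from the case split on whether $t \leq 1/2$, since the naive bound $t(1-t) \geq m^2$ would only give $\kl \leq \abs{p-q}^2/m^2$. A slicker alternative would be a second-order Taylor expansion of $f$ around $q=p$ using $f'(p)=0$ and $f''(\xi)=p/\xi^2+(1-p)/(1-\xi)^2$, but bounding $f''$ uniformly on the interval is fiddlier than the integral argument above.
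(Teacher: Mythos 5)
Your proof is correct. The integral representation $\kl{\Ber(p)}{\Ber(q)} = \int_p^q \frac{t-p}{t(1-t)}\,dt$ is right (the derivative computation checks out, and $f(p)=0$), and the case split giving $t(1-t)\geq m/2$ on the interval of integration is sound: $t\geq m$ and $1-t\geq m$ there, and whichever of $t$, $1-t$ exceeds $1/2$ supplies the factor $1/2$. This yields exactly $\abs{p-q}^2/m$.

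The paper's own (commented-out) proof takes the other route you gesture at in your last paragraph: it views $\kl{\Ber(p)}{\Ber(q)}$ as the Bregman divergence $D_{\cR}(u\,\|\,v)$ for the negative entropy $\cR(v)=\sum_i v_i\log v_i$ with $u=(p,1-p)$, $v=(q,1-q)$, applies second-order Taylor to get $D_\cR(u\|v)=\frac12 (u-v)^\trn\grad^2\cR(\bar v)(u-v)$ for some $\bar v$ on the segment, and uses $\grad^2\cR = \diag(1/v_i)$. Because the Hessian is diagonal, the two coordinates $\bar v_1\geq m$ and $\bar v_2 \geq m$ are bounded \emph{separately}, so there is no need for your product bound $t(1-t)\geq m/2$; the factor of $2$ you squeeze out of the case split appears automatically as the two additive terms $(p-q)^2/\bar v_1 + (p-q)^2/\bar v_2$. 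Your characterization of that route as ``fiddlier'' is, if anything, backwards: the Taylor/Bregman form sidesteps the one subtlety your argument needs to notice. Both proofs are, of course, the same calculation viewed through different lenses — yours is the univariate FTC shadow of the bivariate Hessian bound.
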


%
%
%
%
 
\section{Proofs for Upper Bounds}
\label{app:cb_upper}
This section is dedicated to the proof of \pref{thm:disagreement_ub}
and \pref{thm:value_disagreement_ub}. For a brief overview of the proof
ideas, we refer the reader to \pref{sec:cb_prooftech}.

For simplicity, in this section, we assume that \pref{alg:main} exactly compute 
$\cA(x;\cF_m)$ and $w(x;\cF_m)$
for all $x\in\cX$ and all $m$ (rather than approximately,
using the oracle machinery of \pref{sec:oracle}). %
While  we specify $\delta=1/T$  in the pseudocode of \pref{alg:main},
throughout this section we deal with a general value
$\delta\in(0,1]$. Likewise, we analyze a slightly more general version
of the update in \pref{line:adacb_learning_rate}, which sets the
learning rate as
    \[\gamma_m=\lambda_m\cdot c\sqrt{\frac{\K
          n_{m-1}}{\log(2|\cF|T^2/\delta)}},\]
    where $c>0$ is an additional hyperparameter. We assume throughout the section that $\{\beta_m\}_{m=1}^M\in\mathbb{R}_{+}^M$, $\{\mu_m\}_{m=1}^M\in\mathbb{R}_{+}^M$.

    \subsection{Preliminaries}
    \newcommand{\cEdp}{\cE_{\mathrm{dp}}}
    \newcommand{\cEw}{\cE_{\mathrm{w}}}

For all $t\in[T]$, we let
$\gfilt_t=\sigma((x_1,a_1,\ls_1(a_1)),\ldots,(x_t,a_t,\ls_t(a_t)))$
denote the sigma-algebra generated by the history up to round $t$
(inclusive), and let $m(t)\ldef{}\min\{m\in\N:t\le \tau_m\}$ denote
the epoch that round $t$ belongs to; recall that epoch $m$ consists of
rounds $\tau_{m-1}+1,\ldots,\tau_m$. We define
\[
\Regbar\ldef{}\sum_{t=1}^{T}\En\brk*{r_t(\pistar(x_t))-r_t(a_t)\mid\gfilt_{t-1}}
\]
to
be the sum of conditional expectations of the instantaneous regret.

\paragraph{Algorithm-related definitions}

For all epoch $m\in[M]$, for all round $t$ in epoch $m$, we define the
following quantities, all of which are $\gfilt_{\tau_{m-1}}$-measurable.
First, we define the greedy policy for epoch $m$ by
\begin{flalign*}
  \forall x\in\cX, ~~\widehat{\pi}_m(x)\ldef{}\arg\max_{a\in
    \cA(x;\cF_m)}\widehat{f}_m(x,a).
\end{flalign*}
Next, we denote the algorithm's probability distribution for epoch $m$ by
\begin{flalign*}
  \forall x\in\cX, ~~p_m(a \mid x)\ldef{}\begin{cases}
    \frac{1}{|\cA(x;\cF_m)|+\gamma_m\left(\widehat{f}_m(x,\widehat{\pi}_m(x))-\widehat{f}_m(x,a)\right)},&\text{for all }a\in \cA(x;\cF_m)/\{\widehat{\pi}_m(x)\},\\
    1-\sum_{a\in \cA(x;\cF_m)/\{\widehat{\pi}_m(x)\}}p_m(a \mid x),&\text{for }a=\widehat{\pi}_m(x),\\
    0,&\text{for }a\notin \cA(x;\cF_m).
  \end{cases}
\end{flalign*}
Finally, we define the following disagreement-related quantities:
\begin{align*}
q_m&\ldef{}\E_{x\sim\cD}[\1\{|\cA(x,\cF_m)|>1\}]=\Prob_{\cD}(|\cA(x;\cF_m)|>1).\\
\widehat{q}_m&\ldef{}\E_{x\sim\cD_m}[\1\{|\cA(x,\cF_m)|>1\}]=\Prob_{\cD_m}(|\cA(x;\cF_m)|>1).\\
\extq{m}&\ldef{}q_m+\mu_m.\\
\exthq{m}&\ldef{}\widehat{q}_m+\mu_m.\\
w_m&\ldef{}\E_{x\sim\cD}[w(x;\cF_m)].\\
\widehat{w}_m&\ldef{}\E_{x\sim\cD_m}[w(x;\cF_m)].
\end{align*}

\paragraph{Policy-related definitions}
We define the \emph{universal policy space} \citep{simchi2020bypassing}
as $\Psi\ldef{}\cA^{\cX}$, and for all $\pi\in\Psi$ we define
\begin{align*}
  \cR(\pi)\ldef{}\E_{x\sim\cD}\left[{f}^*(x,\pi(x))\right]\mathand
\pReg(\pi)\ldef{}\cR({\pi}_{f^*})-\cR(\pi).
\end{align*}
For all rounds $t$ in epoch $m$, we define the following quantities
for all $\pi\in\Psi$:
\begin{align*}
\wcR_t(\pi)&\ldef{}\E_{x\sim\cD}\left[\widehat{f}_{m(t)}(x,\pi(x))\right].\\
\wReg_t(\pi)&\ldef{}\wcR_{t}(\widehat{\pi}_{m(t)})-\wcR_t(\pi).\\
\cR^{\rm Dis}_t(\pi)&\ldef{}\E_{x\sim\cD}[\1\{|\cA(x;\cF_m)|>1\}f^*(x,\pi(x))].\\
\wcR_t^{\rm Dis}(\pi)&\ldef{}\E_{x\sim\cD}\left[\1\{|\cA(x;\cF_m)|>1\}\widehat{f}_{m}(x,\pi(x))\right].
\end{align*}

\paragraph{High-probability events} The proofs in this section involve
three high-probability events $\cE$, $\cEdp$, and $\cEw$, which are defined in
\pref{lm:high-prob-event}, \pref{lm:sandwich} and
\pref{lm:width-approx} below. We also present \pref{lm:beta-value},
which is a consequence of the event $\cE$ stated in \pref{lm:high-prob-event}.

\begin{lemma}[Lemma 9, \cite{foster2018practical}]\label{lm:high-prob-event}Let $C_\delta\ldef{}16\log\left(\frac{2|\cF|T^2}{\delta}\right)$. 
Define $$
M_t(f)\ldef{}(f(x_t,a_t)-r_t(a_t))^2-(f^*(x_t,a_t)-r_t(a_t))^2.
$$
With probability at least $1-\delta/2$, it holds that
\begin{align*}
\sum_{t=\tau}^{\tau'}\E_{x_t,a_t}\left[(f(x_t,a_t)-f^*(x_t,a_t))^2\mid\gfilt_{t-1}\right]=\sum_{t=\tau}^{\tau'}\E_{x_t,a_t}\left[M_t(f)\mid\gfilt_{t-1}\right]\le 2\sum_{t=\tau}^{\tau'}M_t(f)+C_\delta
\end{align*}
for all $f\in\cF$ and $\tau,\tau'\in[T]$. 
\end{lemma}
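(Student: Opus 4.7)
The plan is to establish the equality first (which is a direct consequence of realizability) and then apply Freedman's inequality to a carefully chosen martingale difference sequence, followed by a union bound.

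For the equality, I would expand $M_t(f) = (f(x_t,a_t) - f^*(x_t,a_t))(f(x_t,a_t) + f^*(x_t,a_t) - 2r_t(a_t))$ and take the conditional expectation. Since Assumption \ref{ass:realizability} gives $\E[r_t(a_t) \mid x_t, a_t] = f^*(x_t, a_t)$, the cross term $\E[(f(x_t,a_t) + f^*(x_t,a_t) - 2r_t(a_t)) \mid x_t, a_t] = f(x_t,a_t) - f^*(x_t,a_t)$, which immediately yields $\E[M_t(f) \mid \gfilt_{t-1}] = \E[(f(x_t,a_t) - f^*(x_t,a_t))^2 \mid \gfilt_{t-1}]$.

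For the inequality, I would fix $f \in \cF$ and $\tau, \tau' \in [T]$ and apply Freedman's inequality (Lemma \ref{lem:freedman}) to the martingale difference sequence $Z_t = \E[M_t(f) \mid \gfilt_{t-1}] - M_t(f)$. Since $f, f^*, r_t \in [0,1]$, we have $|M_t(f)| \le 1$, so $|Z_t| \le 2$. For the variance, using the factorization above together with $|f + f^* - 2r_t| \le 2$, we get $M_t(f)^2 \le 4(f(x_t,a_t) - f^*(x_t,a_t))^2$, so $\E_{t-1}[Z_t^2] \le \E_{t-1}[M_t(f)^2] \le 4\E_{t-1}[M_t(f)]$ via the equality already established. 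Applying Freedman with $\eta = 1/8$ yields, with probability at least $1 - \delta'$,
\[
\sum_{t=\tau}^{\tau'} \E_{t-1}[M_t(f)] - \sum_{t=\tau}^{\tau'} M_t(f) \;\le\; \tfrac{1}{2}\sum_{t=\tau}^{\tau'} \E_{t-1}[M_t(f)] + 8\log(\delta'^{-1}),
\]
which rearranges to $\sum_{t=\tau}^{\tau'} \E_{t-1}[M_t(f)] \le 2\sum_{t=\tau}^{\tau'} M_t(f) + 16\log(\delta'^{-1})$.

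Finally, I would take a union bound over all $f \in \cF$ and all pairs $(\tau, \tau') \in [T]^2$, a total of at most $|\cF| T^2$ events. Setting $\delta' = \delta/(2|\cF| T^2)$ gives $16\log(\delta'^{-1}) = 16\log(2|\cF|T^2/\delta) = C_\delta$, yielding the claim with probability at least $1 - \delta/2$. The main (mild) subtlety is tracking the $[0,1]$-boundedness carefully so that both the almost-sure bound on $|Z_t|$ and the variance-to-mean bound $\E_{t-1}[Z_t^2] \le 4\E_{t-1}[M_t(f)]$ are valid — this is what makes the Bernstein-style application of Freedman give the desired multiplicative factor of $2$ rather than a worse constant.
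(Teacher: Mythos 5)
Your proof is correct and follows essentially the same Freedman-plus-union-bound argument as Lemma~9 of \cite{foster2018practical}, which the paper cites rather than reproving: the equality via realizability, the variance-to-mean bound $\E_{t-1}[M_t(f)^2]\le 4\,\E_{t-1}[M_t(f)]$ obtained from the factorization $M_t(f)=(f-f^*)(f+f^*-2r_t)$, the application of Freedman with $\eta=1/8$ and $R=2$, and the union bound over $|\cF|\cdot T^2$ pairs $(f,\tau,\tau')$ that produces exactly the constant $C_\delta=16\log(2|\cF|T^2/\delta)$ and failure probability $\delta/2$.
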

We let $\cE$ denote the high-probability event from
\pref{lm:high-prob-event}. We have the following consequence.
\begin{lemma}\label{lm:beta-value}
Assume that $\cE$ holds. Then
\begin{enumerate}
\item For all $m\in[M]$, for all $\beta_m\ge0$,
\begin{align}\label{eq:estimation-g}
\sum_{t=\tau_{m-2}+1}^{\tau_{m-1}}\E_{x_t,a_t}\left[(\widehat{f}_m(x_t,a_t)-f^*(x_t,a_t))^2\mid\gfilt_{t-1}\right]\le C_\delta,
\end{align}
and
\begin{align*}
\forall f\in\cF_m, ~~\sum_{t=\tau_{m-2}+1}^{t_{m-1}}\E_{x_t,a_t}\left[(f(x_t,a_t)-f^*(x_t,a_t))^2\mid\gfilt_{t-1}\right]\le 2\beta_m+C_\delta.
\end{align*}
\item If $\beta_m\ge C_\delta/2$ for all $m\in[M]$, then $f^*\in\cF_m$ for all $m\in[M]$.
\item If $\beta_m={(M-m+1)C_\delta}$ for all $m\in[M]$, then $f^*\in\cF_M\subset \cF_{M-1}\subset\cdots\subset \cF_1$.
\end{enumerate}
\end{lemma}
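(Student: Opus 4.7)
The plan is to derive all three parts from two consequences of the event $\cE$ from \pref{lm:high-prob-event} applied at various ranges $[\tau,\tau']$: (a) the stated bound $\sum_{t=\tau}^{\tau'}\E_{t-1}[(f-f^*)^2]\le 2\sum_{t=\tau}^{\tau'} M_t(f)+C_\delta$ for every $f\in\cF$, and (b) its immediate rearrangement $\sum_{t=\tau}^{\tau'} M_t(f)\ge -C_\delta/2$, which follows because the expected-squared-error sum on the left of (a) is nonnegative. The algebraic observation driving everything is that $M_t(f)$ is exactly the per-round contribution to $\sum_t(f(x_t,a_t)-r_t(a_t))^2-\sum_t(f^*(x_t,a_t)-r_t(a_t))^2$, so that ERM-type inequalities and membership in $\cF_m$ translate directly into bounds on $\sum_t M_t(f)$.

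For Part 1, the first claim follows because $\widehat f_m$ is the ERM over $t=1,\dots,\tau_{m-1}$, hence $\sum_{t=1}^{\tau_{m-1}} M_t(\widehat f_m)\le 0$ (using that $f^*\in\cF$); applying (a) over $[1,\tau_{m-1}]$ yields $\sum_{t=1}^{\tau_{m-1}}\E_{t-1}[(\widehat f_m-f^*)^2]\le C_\delta$, and nonnegativity of each conditional expectation allows restriction to the subrange $[\tau_{m-2}+1,\tau_{m-1}]$. The second claim is analogous: the definition of $\cF_m$ directly yields $\sum_{t=1}^{t_{m-1}} M_t(f)\le \beta_m$ for any $f\in\cF_m$, and (a) over $[1,t_{m-1}]$ gives $2\beta_m+C_\delta$ before restricting.

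For Part 2, let $\check f_m$ denote the ERM over $[1,t_{m-1}]$; the inclusion $f^*\in\cF_m$ amounts to $-\sum_{t=1}^{t_{m-1}} M_t(\check f_m)\le \beta_m$. But consequence (b) applied to $\check f_m$ over $[1,t_{m-1}]$ gives exactly $-\sum_{t=1}^{t_{m-1}} M_t(\check f_m)\le C_\delta/2$, so any $\beta_m\ge C_\delta/2$ suffices. For Part 3, I fix $f\in\cF_m$ and aim to show $f\in\cF_{m-1}$ under $\beta_m=(M-m+1)C_\delta$. The definition of $\cF_m$ bounds $\sum_{t=1}^{t_{m-1}} M_t(f)\le \beta_m$. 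Splitting the range at $t_{m-2}$ and applying (b) to the tail sum $[t_{m-2}+1,t_{m-1}]$ transfers a loss of at most $C_\delta/2$, giving $\sum_{t=1}^{t_{m-2}} M_t(f)\le \beta_m+C_\delta/2$. Combining with (b) applied to $\check f_{m-1}$ over $[1,t_{m-2}]$ (another $C_\delta/2$ slack) yields $\sum_{t=1}^{t_{m-2}}[(f(x_t,a_t)-r_t(a_t))^2-(\check f_{m-1}(x_t,a_t)-r_t(a_t))^2]\le \beta_m+C_\delta=\beta_{m-1}$, establishing $f\in\cF_{m-1}$. No step is especially delicate; the only thing to watch is that Part 3 uses (b) twice, once for $f$ on the tail and once for $\check f_{m-1}$ on the main range, and the two $C_\delta/2$ contributions exactly account for the arithmetic gap $\beta_{m-1}-\beta_m=C_\delta$ chosen in the schedule.
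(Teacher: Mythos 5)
Your proof is correct, and it is essentially the same argument the paper uses: Part 1 by combining the high-probability event with the ERM basic inequality, and Parts 2 and 3 via the two consequences you call (a) and (b) of $\cE$, which is exactly the content of Lemma 10 in \cite{foster2018practical} that the paper defers to. Your self-contained unpacking of Part 3 — splitting at $t_{m-2}$ and spending one $C_\delta/2$ on the tail of $f$ and one on $\check f_{m-1}$ over the prefix — is precisely how the additive $C_\delta$ gap in the $\beta_m$ schedule is consumed.
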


\begin{proof}
Part 1 follows from \pref{lm:high-prob-event} and the fact that $\widehat{f}_m$ minimizes the empirical square loss.
Parts 2 and 3 are adapted from Lemma 10 of \cite{foster2018practical}.
\end{proof}

\begin{lemma}\label{lm:sandwich}
For any $\delta\in(0,1]$, if we set $\mu_m={64\log(4M/\delta)}/{n_{m-1}}$ for all $m\in[M]$, then with probability at least $1-\delta/2$, the following event holds:
\begin{equation}\label{eq:event-dp}
\cE_{\rm dp}\ldef{}\left\{\forall m\in[M],~~ \frac{2}{3}\extq{m}\le\exthq{m}\le\frac{4}{3}\extq{m}\right\}.
\end{equation}
\end{lemma}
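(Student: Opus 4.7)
The plan is to fix an epoch $m\in[M]$, apply Bernstein's inequality (\pref{lem:bernstein}) to the empirical disagreement probability $\widehat{q}_m$, and then union-bound over the $M$ epochs with per-epoch failure probability $\delta/(2M)$. The crucial structural fact is that the sample-splitting schedule $\{t_m\}$ ensures the version space $\cF_m$ is $\gfilt_{t_{m-1}}$-measurable (since \pref{line:version_space} only uses data through round $t_{m-1}$), while the $n_{m-1}/2$ contexts $x_{t_{m-1}+1},\ldots,x_{\tau_{m-1}}$ forming $\cD_m$ are i.i.d.\ from $\cD$ and independent of $\gfilt_{t_{m-1}}$. Hence, conditional on $\gfilt_{t_{m-1}}$, $\widehat{q}_m$ is the empirical mean of $n_{m-1}/2$ i.i.d.\ $\Ber(q_m)$ variables, and Bernstein applies directly.

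The technical heart of the proof is converting Bernstein's additive deviation bound into the multiplicative sandwich $\tfrac{2}{3}\extq{m}\le\exthq{m}\le\tfrac{4}{3}\extq{m}$, equivalently $|\widehat{q}_m - q_m|\le\tfrac{1}{3}(q_m+\mu_m)$. Bernstein gives a deviation of order $\log(M/\delta)/n_{m-1}+\sqrt{q_m\log(M/\delta)/n_{m-1}}$. The first, additive, term is easily absorbed into $\mu_m/6$ once $\mu_m$ dominates a small constant times $\log(M/\delta)/n_{m-1}$. For the variance-type term, the AM-GM move is to rewrite it as $\sqrt{q_m\cdot(c_0\log(M/\delta)/n_{m-1})}$ and use that once $\mu_m$ dominates the inner factor by a constant $c$, this is bounded by $\sqrt{q_m\mu_m/c}\le \tfrac{1}{2\sqrt{c}}(q_m+\mu_m)\le\tfrac{1}{6}(q_m+\mu_m)$ for $c$ large enough. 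Tracking constants (and using $\log(8M/\delta)\le 2\log(4M/\delta)$ to handle the precise $\delta'=\delta/(4M)$ arising from the per-epoch budget), the hypothesis $\mu_m=64\log(4M/\delta)/n_{m-1}$ is comfortably sufficient to make both contributions sum to at most $\tfrac{1}{3}(q_m+\mu_m)$.

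The main obstacle is therefore not deep but requires care: the multiplicative sandwich cannot hold uniformly in $q_m$ unless $\mu_m$ regularizes both Bernstein terms in the correct regime-dependent manner, which is precisely why the smoothing parameter enters the definitions of both $\extq{m}$ and $\exthq{m}$. Without the additive $\mu_m$, the ratio $\widehat{q}_m/q_m$ could blow up when $q_m$ is very small. A minor separate point is the degenerate $m=1$ case: $\tau_0=t_0=0$ makes $\cD_1$ empty, and the algorithm's convention gives $\widehat{q}_1=1$ with $\mu_1=128\log(4M/\delta)$, so the sandwich holds deterministically since $q_1\in[0,1]$ and both $\tfrac{2}{3}(q_1+\mu_1)$ and $\tfrac{4}{3}(q_1+\mu_1)$ are well-separated from $1+\mu_1$ whenever $\mu_1\ge 3$. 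Combining the Bernstein argument for $m\ge 2$ with this trivial case and union-bounding across epochs yields the event $\cEdp$ with probability at least $1-\delta/2$.
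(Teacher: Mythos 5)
Your proposal is correct and matches the paper's proof: both use the sample-splitting schedule to make $\widehat{q}_m$ a conditional i.i.d.\ empirical mean over $n_{m-1}/2$ contexts, apply Bernstein with per-epoch budget $\delta/(2M)$, use the smoothing parameter $\mu_m$ to absorb the additive Bernstein deviation into the multiplicative $\tfrac{1}{3}\extq{m}$ sandwich, union-bound over $m$, and handle $m=1$ trivially via the convention $\widehat q_1=1$. The only cosmetic difference is that you control the variance term via AM-GM ($\sqrt{q_m\mu_m/c}\le\tfrac{1}{2\sqrt c}\extq{m}$), while the paper substitutes $q_m\le\extq{m}$ and $\log(4M/\delta)/n_{m-1}\le\extq{m}/64$ directly to get $2\sqrt{\extq{m}^2/64}=\extq{m}/4$, both of which work with the given constant $64$.
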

\begin{proof}
For $m=1$, $\extq{m}=q_1+128\log(4M/\delta)$, $\exthq{m}=1+128\log(4M/\delta)$, so $\frac{2}{3}\extq{1}\le\exthq{1}\le\frac{4}{3}\extq{1}$ trivially holds.

Fix any $m\in[M]\setminus\{1\}$. Since $x_{t_{m-1}+1},\dots,x_{\tau_{m-1}}$ are independent of $\cF_m$, we have
$$
\E_{x_t\sim\cD}[\1\{|\cA(x_t;\cF_m)|>1\}]=\Prob_{x\sim\cD}(|\cA(x;\cF_m)|>1)=q_m
$$
for $t=t_{m-1}+1,\dots,\tau_{m-1}$. Thus given $\cF_m$, $\1\{|\cA(x_{t_{m-1}+1};\cF_m)|>1\},\dots,\1\{|\cA(x_{\tau_{m-1}};\cF_m)|>1\}$ are $n_{m-1}/2$ i.i.d. $[0,1]$-valued random variable with mean $q_m$. Since $\widehat{q}_m=\frac{1}{n_{m-1}/2}\sum_{t=t_{m-1}+1}^{\tau_{m-1}}\1\{|\cA(x_{t};\cF_m)|>1\}$, by \pref{lem:bernstein}, we know that
\begin{equation}\label{eq:bern}
\left|\widehat{q}_m-q_m\right|\le\frac{2\log (4M/\delta)}{3n_{m-1}}+2\sqrt{\frac{{q_m\log (4M/\delta)}}{n_{m-1}}}
\end{equation}
with probability at least $1-\delta/(2M)$. Note that to apply
\pref{lem:bernstein}, we have used that our sample splitting schedule
guarantees that the contexts $x_{t_{m-1}+1},\ldots,\tau_{m-1}$ used to
form $\qhat_m$ are independent of $\cF_m$. Continuing, we have
\begin{align*}
\left|\exthq{m}-\extq{m}\right|
&=\left|\widehat{q}_m-q_m\right|\\
&\le\frac{2\log (4M/\delta)}{3n_{m-1}}+2\sqrt{\frac{{q_m\log (4M/\delta)}}{n_{m-1}}}\\
&\le\frac{2\log (4M/\delta)}{3n_{m-1}}+2\sqrt{\frac{{\extq{m}\log (4M/\delta)}}{n_{m-1}}}\\
&\le\frac{2}{3}\frac{\extq{m}}{64}+2\frac{\extq{m}}{8}\\
&\le\frac{1}{3}\extq{m},
\end{align*}
where the first inequality follows from \pref{eq:bern}, the second inequality follows from $\extq{m}=q_m+\mu_m\ge q_m$, and the third inequality follows from $\log(4M/\delta)/n_{m-1}\le\mu_m/64\le\extq{m}/64$.

By a union bound over $m\in[M]$, this implies that with probability at least $1-\delta/2$,
\begin{align*}
\forall m\in[M], ~~\frac{2}{3}\extq{m}\le\exthq{m}\le\frac{4}{3}\extq{m}.
\end{align*}
\end{proof}

\begin{lemma}\label{lm:width-approx}
  Let $\delta\in(0,1]$ be fixed. 
For each $m\in\brk*{M}$, let $\cE_{\mathrm{w}}\ind{m}$ denote the
event that
\begin{equation}
\begin{cases}\widehat{w}_m\in[\frac{2}{3}w_m,\frac{4}{3}w_m],&\text{if }w_m\ge64\log(4M/\delta)/n_{m-1},\\
\widehat{w}_m<65{\log(4M/\delta)}/{n_{m-1}},&\text{if
}w_m<64\log(4M/\delta)/n_{m-1}.\end{cases}\label{eq:event-w}
\end{equation}
Let $\cE_{\mathrm{w}}=\bigcap_{m=1}^{M}\cE_{\mathrm{w}}\ind{m}$. Then
with probability at least $1-\delta/2$, $\cE_{\mathrm{w}}$ holds.
\end{lemma}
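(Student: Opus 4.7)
\textbf{Proof proposal for Lemma \ref{lm:width-approx}.} The plan is to imitate the proof of Lemma \ref{lm:sandwich}, observing that all the ingredients go through essentially unchanged. Fix any $m \in [M]$. The key observation is that the version space $\cF_m$ is determined by $\gfilt_{t_{m-1}}$, while the contexts $x_{t_{m-1}+1}, \ldots, x_{\tau_{m-1}}$ used to form $\cD_m$ are drawn i.i.d. from $\cD$ \emph{after} time $t_{m-1}$ and hence are independent of $\cF_m$. Conditional on $\gfilt_{t_{m-1}}$, the random variables $w(x_t; \cF_m)$ for $t = t_{m-1}+1, \ldots, \tau_{m-1}$ are therefore i.i.d. $[0,1]$-valued (they lie in $[0,1]$ because every $f \in \cF$ maps into $[0,1]$), with common mean $w_m$. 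Consequently $\hat{w}_m$ is an empirical average of $n_{m-1}/2$ independent bounded random variables with mean $w_m$, and Bernstein's inequality (\pref{lem:bernstein}) applies conditionally on $\gfilt_{t_{m-1}}$. The case $m=1$ is trivial by the initialization conventions.

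Applying Bernstein with failure probability $\delta/(2M)$ and the crude variance bound $\sigma^2 \leq w_m$ valid for $[0,1]$-valued variables, one obtains
\begin{equation*}
|\hat{w}_m - w_m| \;\leq\; \frac{2\log(4M/\delta)}{3 n_{m-1}} + 2\sqrt{\frac{w_m \log(4M/\delta)}{n_{m-1}}}
\end{equation*}
on an event $\cE_{\mathrm w}\ind m$ of probability at least $1 - \delta/(2M)$. This is the same inequality used in the proof of \pref{lm:sandwich}, except that now the quantity being concentrated is $w_m$ rather than $q_m + \mu_m$, and there is no additive smoothing.

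From here the two cases in \pref{eq:event-w} follow by elementary algebra, paralleling the sandwich step in \pref{lm:sandwich}. Writing $L \ldef \log(4M/\delta)/n_{m-1}$: in the first case $w_m \geq 64 L$ gives $\sqrt{w_m L} \leq w_m/8$ and $L \leq w_m/64$, so the right-hand side above is at most $w_m/3$, yielding $\hat{w}_m \in [\tfrac{2}{3} w_m, \tfrac{4}{3} w_m]$; in the second case $w_m < 64L$ one substitutes the upper bound on $w_m$ directly into the Bernstein inequality and simplifies, obtaining a bound of the form $\hat{w}_m < C L$ for a universal constant $C$ (the claim uses $C = 65$). Finally, a union bound over $m \in [M]$ yields $\Pr(\cE_{\mathrm w}) \geq 1 - \delta/2$.

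The only mild obstacle is choosing Bernstein constants that make the constant $65$ in the statement come out exactly; this is purely arithmetic and can be handled either by applying Bernstein with a slightly sharper form (keeping the $\sqrt{2}$ factor rather than the weakened $2$) or by inflating the threshold $64 L$ to $C' L$ for a suitable $C'$, since only the qualitative structure of the two-case statement enters the downstream analysis. No other delicate issues arise: independence is secured by the sample-splitting schedule $\{t_m\}$ in \mainalg, and boundedness of $w(x;\cF_m)$ is immediate from the definition together with $\cF \subseteq (\cX \times \cA \to [0,1])$.
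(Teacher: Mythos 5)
Your proof is correct and follows the same route as the paper's: conditional independence of the split-out contexts $x_{t_{m-1}+1},\dots,x_{\tau_{m-1}}$ from $\cF_m$, Bernstein's inequality with the crude variance bound $\sigma^2\le w_m$ for $[0,1]$-valued variables, the same two-case algebra, and a union bound over $m$—this is essentially the proof of \pref{lm:sandwich} with $q_m+\mu_m$ replaced by $w_m$ and the smoothing dropped. One small correction to your final remark: the ``$2$'' in the Bernstein term $2\sqrt{w_m L}$ is not a weakening of the $\sqrt{2}$ constant; it arises because the sample-splitting schedule leaves only $n_{m-1}/2$ contexts for $\cD_m$, so $\sqrt{2/(n_{m-1}/2)}=2/\sqrt{n_{m-1}}$, and hence the ``sharper Bernstein'' fix cannot help—adjusting the threshold constant in the case split (your second suggested fix) is the right move, and as you correctly observe only the qualitative two-case structure is consumed downstream.
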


\begin{proof}
For $m=1$, $w_m\le1<64\log(4M/\delta)/n_{m-1}$, and we likewise do have $\widehat{w}_m\le1\le64\log(4M/\delta)/n_{m-1}$.

Fix any $m\in[M]\setminus\{1\}$. Since $x_{t_{m-1}+1},\dots,x_{\tau_{m-1}}$ are independent of $\cF_m$, we have
$$
\E_{x_t\sim\cD}[w(x_t;\cF_m)\}]=\E_{x\sim\cD}[w(x;\cF_m)]=w_m
$$
for $t=t_{m-1}+1,\dots,\tau_{m-1}$. Thus given $\cF_m$, $w(x_{t_{m-1}+1};\cF_m),\dots,w(x_{\tau_{m-1}};\cF_m)$ are i.i.d. $[0,1]$-valued random variables with mean $q_m$. Since $\widehat{q}_m=\frac{1}{n_{m-1}/2}\sum_{t=t_{m-1}+1}^{\tau_{m-1}}w(x_{t};\cF_m)$, by \pref{lem:bernstein}, we know that
\begin{equation}\label{eq:bern-w}
\left|\widehat{w}_m-w_m\right|\le\frac{2\log (4M/\delta)}{3n_{m-1}}+2\sqrt{\frac{{w_m\log (4M/\delta)}}{n_{m-1}}}
\end{equation}
with probability at least $1-\delta/(2M)$. If $w_m\ge64\log(4M/\delta)/n_{m-1}$, then
\begin{align*}
\left|\widehat{w}_{m}-w_{m}\right|
\le\frac{2\log (4M/\delta)}{3n_{m-1}}+2\sqrt{\frac{{w_m\log (4M/\delta)}}{n_{m-1}}}
\le\frac{2}{3}\frac{w_{m}}{64}+2\frac{w_{m}}{8}
\le\frac{1}{3}w_{m},
\end{align*}
where the first inequality follows from \pref{eq:bern-w} and the second inequality follows from $\log(4M/\delta)/n_{m-1}\le w_m/64$. In this case, $2/3w_m\le\widehat{w}_m\le4/3w_m$. 
If $w_m<64\log(4M/\delta)/n_{m-1}$, then
\begin{align*}
\left|\widehat{w}_{m}-w_{m}\right|
&\le\frac{2\log (4M/\delta)}{3n_{m-1}}+2\sqrt{\frac{{w_m\log (4M/\delta)}}{n_{m-1}}}\\
&\le\frac{2\log (4M/\delta)}{3n_{m-1}}+\frac{2\log (4M/\delta)}{8n_{m-1}}\\
&\le\frac{11}{12}\frac{\log (4M/\delta)}{n_{m-1}},
\end{align*}
where the first inequality follows from \pref{eq:bern-w} and the second inequality follows from $w_m\le\log(4M/\delta)/n_{m-1}$. In this case, $\widehat{w}_m\le w_m+\log(4M/\delta)/n_{m-1}< 65\log(4M/\delta)/n_{m-1}$.

By a union bound over $m\in[M]$, we know that with probability at least $1-\delta/2$,
\begin{align*}
\forall m\in[M],~~\begin{cases}\widehat{w}_m\in[\frac{2}{3}w_m,\frac{4}{3}w_m],&\text{if }w_m\ge64\log(4M/\delta)/n_{m-1},\\
\widehat{w}_m<65{\log(4M/\delta)}/{n_{m-1}},&\text{if }w_m<64\log(4M/\delta)/n_{m-1}.\end{cases}
\end{align*}
\end{proof}

\subsection{Analysis in Policy Space}\label{appsub:policy}
As mentioned in \pref{sec:cb_prooftech}, our regret analysis builds on
a framework established in \cite{simchi2020bypassing}, which analyzes
contextual bandit algorithms in the universal policy space $\Psi$. In
this section, we prove a number of structural properties for the
action distribution $p_m$ selected in \pref{alg:main} by focusing on a data-dependent subspace of $\Psi$ at each epoch (this is closely related to the elimination procedure of our algorithm and is essential to our instance-dependent analysis).

The analysis in this subsection deals with an arbitrary choice for the
scale factor schedule $\crl*{\lambda_m}_{m=1}^{M}$ rather than the explicit specification in
\pref{alg:main}, and serves as a foundation of the proofs of the main
theorems in \pref{appsub:cbub-p} and \pref{appsub:cbub-v} (where we
instantiate the learning rate using \optionone and \optiontwo).

For each epoch $m\in[M]$ and any round $t$ in epoch $m$, for any possible realization of $\gamma_m$, $\widehat{f}_m$ and $\cF_m$, we define a (data-dependent) subspace of $\Psi$:
$$
\Psi_m\ldef{}\prod_{x}\cA(x;\cF_m).
$$
Let $Q_m(\cdot)$ be the \emph{equivalent policy distribution} for $p_m(\cdot \mid \cdot)$, i.e.,
\[
Q_m(\pi)\ldef{}\prod_{x} p_m(\pi(x) \mid x),~~\forall \pi\in\Psi.
\]
Note that both $\Psi_m$ and $Q_m(\cdot)$ are
$\gfilt_{\tau_{m-1}}$-measurable. We refer to Section 3.2 of
\cite{simchi2020bypassing} for more detailed intuition for
$Q_m(\cdot)$ and proof of existence. By Lemma 4 of \cite{simchi2020bypassing}, we know that for all epoch $m\in[M]$ and all rounds $t$ in epoch $m$, \[\E\left[r_t(\pi_{f^*})-r_t(a_t)\mid\gfilt_{t-1}\right]=\E\left[r_t(\pi_{f^*})-r_t(a_t)\mid\gfilt_{\tau_{m-1}}\right]=\sum_{\pi\in\Psi}Q_m(\pi)\pReg(\pi).\]
Moreover, since the specification of $p_m(\cdot \mid \cdot)$ ensures
that $Q_m(\pi)=0$ for all $\pi\notin\Psi_m$, we know that
$\sum_{\pi\in\Psi}Q_m(\pi)\pReg(\pi)=\sum_{\pi\in\Psi_m}Q_m(\pi)\pReg(\pi)$,
thus it is sufficient for us to only focus on policies in $\Psi_m$ at
epoch $m$. The following lemma is an extension of Lemma 5 and 6 of
\cite{simchi2020bypassing}, which refines the policy space from $\Psi$ to $\Psi_m$.

\begin{lemma}[Implicit Optimization Problem]\label{lm:iop}
For all epoch $m\in[M]$ and all rounds $t$ in epoch $m$,  $Q_m(\cdot)$ is a feasible solution to the following \emph{Implicit Optimization Problem}:
\begin{align}
    \sum_{\pi\in\Psi_m}Q_m(\pi)\wReg_{t}(\pi)&\le (\E_{x\sim\cD}\left[|\cA(x;\cF_m)|\right]-1)/{\gamma_m},\label{eq:op11}\\
    \forall \pi\in\Psi_m,~~~\E_{x\sim\cD}\left[\frac{1}{p_m(\pi(x) \mid x)}\right]&\le \E_{x\sim\cD}\left[|\cA(x;\cF_m)|\right]+\gamma_m\wReg_{t}(\pi).\label{eq:op22}
\end{align}
\end{lemma}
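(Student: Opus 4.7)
The plan is to verify the two inequalities by direct unpacking of the definitions of $p_m(\cdot \mid x)$, $Q_m$, $\wh{\pi}_m$, and $\Psi_m$, following the template of Lemmas 5 and 6 of \cite{simchi2020bypassing}. The only substantive difference from their argument is that the distribution $p_m(\cdot \mid x)$ is supported on the \setname $\cA(x;\cF_m)$ rather than on the full action set $\cA$, which is precisely why the induced policy distribution $Q_m$ is supported on $\Psi_m = \prod_x \cA(x;\cF_m)$; almost all of their algebra then goes through with $\cA$ replaced by $\cA(x;\cF_m)$ and $|\cA|$ replaced by $|\cA(x;\cF_m)|$. Since $Q_m$ factorizes across contexts, the key identity $\sum_{\pi\in\Psi_m} Q_m(\pi)\wh{f}_m(x,\pi(x)) = \sum_{a\in\cA(x;\cF_m)} p_m(a\mid x)\wh{f}_m(x,a)$ lets me translate everything between policy space and action space at a single context.

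I would first establish \pref{eq:op22} pointwise. Fix $x$ and any $a\in\cA(x;\cF_m)$. If $a \ne \wh{\pi}_m(x)$, then by the definition of $p_m$, $1/p_m(a\mid x) = |\cA(x;\cF_m)| + \gamma_m(\wh{f}_m(x,\wh{\pi}_m(x)) - \wh{f}_m(x,a))$ holds with equality. If $a = \wh{\pi}_m(x)$, the right-hand side is $|\cA(x;\cF_m)|$, so I need $p_m(\wh{\pi}_m(x)\mid x) \ge 1/|\cA(x;\cF_m)|$; this follows because $\wh{\pi}_m(x) = \argmax_{a'\in\cA(x;\cF_m)}\wh{f}_m(x,a')$ forces $p_m(a'\mid x) \le 1/|\cA(x;\cF_m)|$ for every $a'\ne\wh{\pi}_m(x)$, so those terms sum to at most $(|\cA(x;\cF_m)|-1)/|\cA(x;\cF_m)|$. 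Taking expectation over $x\sim\cD$ (using $\pi\in\Psi_m$ to guarantee $\pi(x)\in\cA(x;\cF_m)$) yields \pref{eq:op22}.

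For \pref{eq:op11}, I define $\delta_a = \wh{f}_m(x,\wh{\pi}_m(x)) - \wh{f}_m(x,a) \ge 0$ and analyze the per-context quantity
\[
S(x) \ldef \sum_{a\in\cA(x;\cF_m)} p_m(a\mid x)\,\delta_a.
\]
The $a=\wh{\pi}_m(x)$ term is zero, and for $a\ne\wh{\pi}_m(x)$ I use the identity $\frac{\delta_a}{|\cA(x;\cF_m)|+\gamma_m \delta_a} = \frac{1}{\gamma_m}\bigl(1 - |\cA(x;\cF_m)| p_m(a\mid x)\bigr)$ and sum over $a\ne\wh{\pi}_m(x)$ in $\cA(x;\cF_m)$ to obtain
\[
S(x) = -\tfrac{1}{\gamma_m} + \tfrac{|\cA(x;\cF_m)|}{\gamma_m}\,p_m(\wh{\pi}_m(x)\mid x) \le \tfrac{|\cA(x;\cF_m)| - 1}{\gamma_m},
\]
using $p_m(\wh{\pi}_m(x)\mid x) \le 1$ for the final bound. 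Finally, the factorization of $Q_m$ gives $\sum_{\pi\in\Psi_m} Q_m(\pi)\wReg_t(\pi) = \E_{x\sim\cD}[S(x)]$, and taking expectation over $x$ delivers \pref{eq:op11}.

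There isn't really a single hard step here; the proof is mechanical once the right identities are in place. The only conceptual subtlety is in handling $a=\wh{\pi}_m(x)$ in the pointwise bound for \pref{eq:op22}, which requires the observation that $\wh{\pi}_m$ is the argmax over the restricted set $\cA(x;\cF_m)$ rather than over all of $\cA$; this is the essential place where restricting from $\Psi$ to $\Psi_m$ meshes with the elimination rule for $\cF_m$.
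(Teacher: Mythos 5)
Your proof is correct and follows essentially the same route as the paper: factorize $Q_m$ across contexts to pass to the per-context action distribution $p_m(\cdot\mid x)$, then verify both inequalities pointwise. The only cosmetic difference is in \pref{eq:op11}, where you first derive the exact identity $S(x) = -1/\gamma_m + |\cA(x;\cF_m)|\,p_m(\wh{\pi}_m(x)\mid x)/\gamma_m$ and then bound by $p_m(\wh{\pi}_m(x)\mid x)\le 1$, whereas the paper bounds each term $\delta_a/(|\cA(x;\cF_m)|+\gamma_m\delta_a)\le 1/\gamma_m$ directly and sums; the two give the same bound.
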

\begin{proof}
Let $m$ and $t$ in epoch $m$ be fixed. We have
\begin{align*}
\sum_{\pi\in\Psi_m}Q_{m}(\pi)\wReg_{t}(\pi)&=\sum_{\pi\in\Psi_m}Q_m(\pi)\E_{x\sim\cD}\left[\widehat{f}_{m}(x,\widehat{\pi}_m(x))-\widehat{f}_m(x,\pi(x))\right]\\
    &=\E_{x\sim\cD}\left[\sum_{\pi\in\Psi_m}Q_m(\pi)\left(\widehat{f}_{m}(x,\widehat{\pi}_m(x))-\widehat{f}_m(x,\pi(x))\right)\right]\\
    &=\E_{x\sim\cD}\left[\sum_{a\in\cA(x;\cF_m)}\sum_{\pi\in\Psi_m}\1\{\pi(x)=a\}Q_m(\pi)\left(\widehat{f}_{m}(x,\widehat{\pi}_m(x))-\widehat{f}_m(x,a)\right)\right]\\
    &=\E_{x\sim\cD}\left[\sum_{a\in\cA(x;\cF_m)}p_m(a \mid x)\left(\widehat{f}_{m}(x,\widehat{\pi}_m(x))-\widehat{f}_m(x,a)\right)\right].
\end{align*}
Now, given any context $x\in\cX$, we have
\begin{align*}
\sum_{a\in\cA(x;\cF_m)}p_m(a \mid x)\left(\widehat{f}_{m}(x,\widehat{\pi}_m(x))-\widehat{f}_m(x,a)\right)&=\sum_{a\in\cA(x;\cF_m)\setminus\widehat{\pi}_m(x)}\frac{\widehat{f}_{m}(x,\widehat{\pi}_m(x))-\widehat{f}_m(x,a)}{|\cA(x,\cF_m)|+\gamma_m\left(\widehat{f}_{m}(x,\widehat{\pi}_m(x))-\widehat{f}_m(x,a)\right)}\\
&\le\frac{|\cA(x;\cF_m)|-1}{\gamma_m}.
\end{align*}
The result in \pref{eq:op11} follows immediately by taking an expectation over $x\sim\cD$.

For \pref{eq:op22}, we first observe that for any policy $\pi\in\Psi_m$, given any context $x\in\cX$,
\begin{align*}
\frac{1}{p_m(\pi(x) \mid x)}=|\cA(x;\cF_m)|+\gamma_m\left(\widehat{f}_{m}(x,\widehat{\pi}_m(x))-\widehat{f}_m(x,\pi(x))\right),\quad\text{if }\pi(x)\ne\widehat{\pi}_m(x),
\end{align*}
and
\begin{align*}
\frac{1}{p_m(\pi(x) \mid x)}  \le \frac{1}{1/|\cA(x;\cF_m)|}=\cA(x;\cF_m)+\gamma_m\left(\widehat{f}_{m}(x,\widehat{\pi}_m(x))-\widehat{f}_m(x,\pi(x))\right),\quad\text{if }\pi(x)=\widehat{\pi}_m(x).
\end{align*}
Thus
\begin{align*}
\E_{x\sim\cD}\left[\frac{1}{p_m(\pi(x) \mid x)}\right]&\le\E_{x\sim\cD}\left[|\cA(x;\cF_m)|\right]+\gamma_m\E_{x\sim\cD}\left[\widehat{f}_{m}(x,\widehat{\pi}_m(x))-\widehat{f}_m(x,\pi(x))\right]\\&=\E_{x\sim\cD}\left[|\cA(x;\cF_m)|\right]+\gamma_m\wReg_t(\pi).
\end{align*}
\end{proof}

We now formulate a more refined disagreement-based version of the implicit optimization problem. 
\begin{lemma}[Disagreement-based Implicit Optimization Problem]\label{lm:iop-i}
For all epoch $m\in[M]$, all rounds $t$ in epoch $m$, $Q_m(\cdot)$ is
a feasible solution to the following constraints:
\begin{align}
    \sum_{\pi\in\Psi_m}Q_m(\pi)\wReg_{t}(\pi)&\le q_m{\K}/{\gamma_m},\label{eq:op111}\\
    \forall \pi\in\Psi_m:~~~\E_{x\sim\cD}\left[\frac{\1\{|\cA(x;\cF_m)|>1\}}{p_m(\pi(x) \mid x)}\right]&\le q_m{\K}+\gamma_m\wReg_{t}(\pi).\label{eq:op222}
\end{align}
\end{lemma}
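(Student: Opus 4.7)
The plan is to mimic the proof of \pref{lm:iop} while exploiting the observation that on contexts $x$ with $|\cA(x;\cF_m)|=1$, no meaningful exploration occurs. Specifically, on such contexts the \setname collapses to the singleton $\{\widehat{\pi}_m(x)\}$, so every $\pi\in\Psi_m$ must satisfy $\pi(x)=\widehat{\pi}_m(x)$ and $p_m(\pi(x)\mid x)=1$. This will let us replace $\E_{x\sim\cD}[|\cA(x;\cF_m)|]$ in \pref{lm:iop} by $q_m A$, since every contribution is supported on the event $\{|\cA(x;\cF_m)|>1\}$ and on that event we trivially have $|\cA(x;\cF_m)|\le A$.

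First I would re-derive the identity established inside the proof of \pref{lm:iop},
\[
\sum_{\pi\in\Psi_m}Q_{m}(\pi)\wReg_{t}(\pi)=\E_{x\sim\cD}\!\left[\sum_{a\in\cA(x;\cF_m)}p_m(a\mid x)\bigl(\wh{f}_{m}(x,\wh{\pi}_m(x))-\wh{f}_m(x,a)\bigr)\right].
\]
The inner sum is $0$ when $|\cA(x;\cF_m)|=1$, and otherwise is bounded by $(|\cA(x;\cF_m)|-1)/\gamma_m\le A/\gamma_m$ via the same telescoping used in \pref{lm:iop} (plugging in the explicit \igw{} probabilities). Inserting the indicator $\1\{|\cA(x;\cF_m)|>1\}$ and taking expectation yields \pref{eq:op111}.

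Second, to obtain \pref{eq:op222}, I would fix $\pi\in\Psi_m$ and analyze $\1\{|\cA(x;\cF_m)|>1\}/p_m(\pi(x)\mid x)$ contextwise. On contexts where $|\cA(x;\cF_m)|=1$ the indicator kills the term. On contexts with $|\cA(x;\cF_m)|>1$, the same bound from \pref{lm:iop} gives
\[
\tfrac{1}{p_m(\pi(x)\mid x)}\le |\cA(x;\cF_m)|+\gamma_m\bigl(\wh{f}_m(x,\wh{\pi}_m(x))-\wh{f}_m(x,\pi(x))\bigr)\le A+\gamma_m\bigl(\wh{f}_m(x,\wh{\pi}_m(x))-\wh{f}_m(x,\pi(x))\bigr).
\]
Here the case $\pi(x)=\wh{\pi}_m(x)$ uses $p_m(\wh{\pi}_m(x)\mid x)\ge 1/|\cA(x;\cF_m)|$, exactly as in \pref{lm:iop}. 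Since $\wh{\pi}_m(x)$ is the argmax over $\cA(x;\cF_m)$ and $\pi(x)\in\cA(x;\cF_m)$, the bracketed term is nonnegative, so multiplying by $\1\{|\cA(x;\cF_m)|>1\}\le 1$ and taking expectation over $\cD$ gives
\[
\E_{x\sim\cD}\!\left[\tfrac{\1\{|\cA(x;\cF_m)|>1\}}{p_m(\pi(x)\mid x)}\right]\le q_m A+\gamma_m\,\E_{x\sim\cD}\!\bigl[\wh{f}_m(x,\wh{\pi}_m(x))-\wh{f}_m(x,\pi(x))\bigr]=q_m A+\gamma_m\wReg_t(\pi),
\]
which is \pref{eq:op222}.

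There is no real obstacle beyond verifying that $\wh{\pi}_m(x)\in\cA(x;\cF_m)$ (immediate from its definition) so that the greedy-minus-$\pi$ gap is nonnegative on both branches, which is what allows us to discard the indicator and recover the clean $\wReg_t(\pi)$ term. Thus the proof is a short, essentially mechanical refinement of \pref{lm:iop}; the only conceptual content is the observation that restricting to the disagreement region $\{|\cA(x;\cF_m)|>1\}$ is lossless for bounding both the regret-weighted mass and the inverse-probability integrand.
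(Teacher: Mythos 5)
Your proof is correct and takes essentially the same approach as the paper: both proofs build on \pref{lm:iop}, use the observation that $|\cA(x;\cF_m)|=1$ forces $p_m(\pi(x)\mid x)=1$ (and $\pi(x)=\wh{\pi}_m(x)$) for $\pi\in\Psi_m$, and bound $|\cA(x;\cF_m)|\le A$ on the disagreement event. The only cosmetic difference is that you reason pointwise and then integrate, whereas the paper manipulates the already-integrated quantities (splitting $\1\{|\cA|>1\}=1-\1\{|\cA|=1\}$ and using $\E[|\cA(x;\cF_m)|]\le (1-q_m)+q_m A$); both routes hinge on the same three facts.
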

The constraints \pref{eq:op111} and \pref{eq:op222} involve several data-dependent quantities including the disagreement indicator $\1\{|\cA(x;\cF_m)|>1\}$ and the disagreement probability $q_m$, and will play important roles in our analysis.
\begin{proof}
Fix epoch $m$ and round $t$. \pref{eq:op111} directly follows from
\pref{eq:op11}. We now show that \pref{eq:op222} holds. For any
$\pi\in\Psi_m$, we have
\begin{align*}
\E_{x}\left[\frac{\1\{|\cA(x;\cF_m)|>1\}}{p_{m}(\pi(x) \mid x)}\right]&= \E_{x}\left[\frac{1-\1\{|\cA(x;\cF_m)|=1\}}{p_{m}(\pi(x) \mid x)}\right].
\intertext{Note that when the indicator above is $1$, we have $p_m(\pi(x)\mid{}x)=1$ for all $\pi\in\Psi_m$, and hence this is equal to}
& = \E_{x}\left[\frac{1}{p_{m}(\pi(x) \mid x)}\right]- \Prob\{|\cA(x;\cF_m)|=1\}\\
& \le \E_{x}[|\cA(x;\cF_{m})|]+\gamma_m\wReg_{t}(\pi)- (1-q_m)\\
&\le ((1-q_m)+q_m\K)+\gamma_m\wReg_{t}(\pi)-(1-q_m)\\
&= q_m \K+\gamma_m\wReg_{t}(\pi),
\end{align*}
where the first inequality follows from \pref{eq:op22}.
\end{proof}

\begin{lemma}\label{lm:reward-dis}
Assume that $\cE$ holds and $f^*\in\cF_m$ for all $m\in[M]$. For all
epoch $m\in[M]$, all rounds $t$ in epoch $m$, and all policies
$\pi\in\Psi_m$, we have
\begin{align*}
\pReg(\pi)=\cR^{\rm Dis}_t(\pi_{f^*})-\cR^{\rm Dis}_t(\pi),\mathand
\wReg_t(\pi)=\wcR^{\rm Dis}_t(\widehat{\pi}_m)-\wcR_t^{\rm Dis}(\pi).
\end{align*}
\end{lemma}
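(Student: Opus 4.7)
The plan is to observe that both identities follow from a single structural fact: on the event $\{x : |\cA(x;\cF_m)|=1\}$, every policy $\pi \in \Psi_m$ agrees with $\pi_{f^*}$ (and with $\widehat{\pi}_m$) pointwise, so the ``non-disagreement'' contribution to the regret vanishes.

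First I would rewrite the first claim as
\[
\cR(\pi_{f^*}) - \cR(\pi) \;=\; \cR^{\rm Dis}_t(\pi_{f^*}) - \cR^{\rm Dis}_t(\pi),
\]
which, by splitting the expectation according to the indicator $\1\{|\cA(x;\cF_m)|=1\} = 1 - \1\{|\cA(x;\cF_m)|>1\}$, is equivalent to showing
\[
\E_{x\sim\cD}\!\left[\1\{|\cA(x;\cF_m)|=1\}\bigl(f^*(x,\pi_{f^*}(x)) - f^*(x,\pi(x))\bigr)\right] \;=\; 0.
\]
Next, fix any $x$ with $|\cA(x;\cF_m)|=1$, and let $a_x$ be the unique element of $\cA(x;\cF_m)$. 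Since we have assumed $\cE$ holds and $f^*\in\cF_m$, the definition of the candidate action set gives $\pi_{f^*}(x)\in\cA(x;\cF_m)$, hence $\pi_{f^*}(x) = a_x$. On the other hand, any $\pi\in\Psi_m = \prod_x \cA(x;\cF_m)$ satisfies $\pi(x)\in\cA(x;\cF_m)$, and thus also $\pi(x) = a_x$. Therefore $f^*(x,\pi_{f^*}(x)) = f^*(x,\pi(x))$ pointwise on $\{|\cA(x;\cF_m)|=1\}$, and the displayed expectation is zero, proving the first identity.

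For the second identity, I would apply the same argument verbatim with $\widehat{f}_m$ in place of $f^*$ and $\widehat{\pi}_m$ in place of $\pi_{f^*}$: by its definition, $\widehat{\pi}_m(x) = \argmax_{a\in\cA(x;\cF_m)} \widehat{f}_m(x,a) \in \cA(x;\cF_m)$, so when $|\cA(x;\cF_m)|=1$ we get $\widehat{\pi}_m(x) = a_x = \pi(x)$ for every $\pi\in\Psi_m$; splitting $\wcR_t - \wcR_t^{\rm Dis}$ through the complementary indicator yields the claim.

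There is no real obstacle here; the only subtlety is making sure to invoke both of the two necessary inclusions, namely $f^*\in\cF_m$ (so that $\pi_{f^*}(x)$ lies in $\cA(x;\cF_m)$) and $\pi\in\Psi_m$ (so that $\pi(x)$ does too), and noting that $\widehat{\pi}_m(x)\in\cA(x;\cF_m)$ holds by construction without any extra assumption. The lemma is essentially a bookkeeping identity that justifies replacing the ordinary regret functionals with their disagreement-restricted analogues whenever we work inside the version space, which is what enables the sharper disagreement-based bounds in the subsequent analysis.
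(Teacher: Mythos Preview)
Your proposal is correct and follows exactly the same approach as the paper's proof: observe that whenever $|\cA(x;\cF_m)|=1$, all policies in $\Psi_m$ (including $\pi_{f^*}$ and $\widehat{\pi}_m$) agree on $x$, so the non-disagreement part of the regret cancels. Your version is in fact slightly more careful than the paper's, since you justify $\widehat{\pi}_m(x)\in\cA(x;\cF_m)$ directly from its definition rather than via $\widehat{f}_m\in\cF_m$.
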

\begin{proof}
Since $\widehat{f}_m\in\cF_m$ and $f^*\in\cF_m$, for all
$\pi\in\Psi_m$, if $|\cA(x;\cF_m)|=1$, then
$\pi(x)=\pi_{f^*}(x)=\widehat{\pi}_m(x)$. The result follows
immediately from this observation.
\end{proof}

\begin{lemma}\label{lm:reward-estimates}
Assume that $\cE$ holds and $f^*\in\cF_M\subset\cdots\subset\cF_1$. For all epochs $m>1$, all rounds $t$ in epoch $m$, and all policies $\pi\in\Psi_m$, if $\gamma_m>0$, then
\begin{align*}
\left|\wcR_t^{\rm Dis}(\pi)-\cR_t^{\rm Dis}(\pi)\right|&\le 4{\crate}\sqrt{\E_{x}\left[\frac{\1\{|\cA(x;\cF_{m-1})|>1\}}{p_{m-1}(\pi(x) \mid x)}\right]}\frac{\lambda_m\sqrt{\K}}{\gamma_{m}}.
\end{align*}
\end{lemma}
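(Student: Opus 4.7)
The plan is to bound $|\wcR_t^{\rm Dis}(\pi) - \cR_t^{\rm Dis}(\pi)|$ by Cauchy--Schwarz after a change of measure that introduces $p_{m-1}$, and then use the square-loss concentration bound from \pref{lm:beta-value} together with the explicit form of $\gamma_m$. First I would rewrite the left-hand side as
\[
\wcR_t^{\rm Dis}(\pi)-\cR_t^{\rm Dis}(\pi) = \E_{x\sim\cD}\brk*{\1\{|\cA(x;\cF_m)|>1\}\prn{\widehat{f}_m(x,\pi(x))-f^*(x,\pi(x))}},
\]
and use the monotonicity of the version space ($\cF_m\subseteq\cF_{m-1}$ implies $\cA(x;\cF_m)\subseteq\cA(x;\cF_{m-1})$, hence $\1\{|\cA(x;\cF_m)|>1\}\leq\1\{|\cA(x;\cF_{m-1})|>1\}$) to pass to the larger indicator. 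This step also makes sense of the quantity $p_{m-1}(\pi(x)\mid{}x)$ appearing on the right-hand side, since $\Psi_m\subseteq\Psi_{m-1}$ guarantees $p_{m-1}(\pi(x)\mid{}x)>0$ whenever $\1\{|\cA(x;\cF_{m-1})|>1\}=1$.

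Next I would introduce $p_{m-1}(\pi(x)\mid x)^{1/2}/p_{m-1}(\pi(x)\mid x)^{1/2}$ in the integrand and apply Cauchy--Schwarz to obtain
\[
\abs{\wcR_t^{\rm Dis}(\pi)-\cR_t^{\rm Dis}(\pi)}^{2}
\leq \E_x\brk*{\frac{\1\{|\cA(x;\cF_{m-1})|>1\}}{p_{m-1}(\pi(x)\mid x)}}\cdot \E_x\brk*{p_{m-1}(\pi(x)\mid x)(\widehat{f}_m(x,\pi(x))-f^*(x,\pi(x)))^2},
\]
where the indicator can be absorbed into the second factor trivially (it vanishes on the complement). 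The second factor is bounded by the full IGW expectation $\E_{x\sim\cD,a\sim p_{m-1}}[(\widehat{f}_m(x,a)-f^*(x,a))^2]$, because selecting only $a=\pi(x)$ keeps just a single summand from the inner expectation over actions.

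At this point I would invoke the high-probability event $\cE$: by \pref{eq:estimation-g} of \pref{lm:beta-value} (applied to $\widehat{f}_m$, which is an ERM over data collected with $p_{m-1}$ during epoch $m-1$) and the fact that $\E_{x_t,a_t}[\,\cdot\,\mid\gfilt_{t-1}]=\E_{x\sim\cD,a\sim p_{m-1}}[\cdot]$ for $t$ in epoch $m-1$ (since $p_{m-1}$ is $\gfilt_{\tau_{m-2}}$-measurable), the total squared prediction error over the $n_{m-1}$ rounds of epoch $m-1$ is at most $C_\delta$, which gives
\[
\E_{x\sim\cD,a\sim p_{m-1}}\brk*{(\widehat{f}_m(x,a)-f^*(x,a))^2}\leq \frac{C_\delta}{n_{m-1}}.
\]

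Finally, I would substitute in $\gamma_m=\lambda_m\cdot c\sqrt{An_{m-1}/\log(2|\cF|T^2/\delta)}$ and check the arithmetic: $\sqrt{C_\delta/n_{m-1}}=4\sqrt{\log(2|\cF|T^2/\delta)/n_{m-1}}=4c\lambda_m\sqrt{A}/\gamma_m$, which supplies the constant $4c$ in the stated bound. The only mildly delicate point is handling the indicator/support issues consistently (ensuring $p_{m-1}(\pi(x)\mid x)$ never appears in a denominator when it is zero, and that $\pi\in\Psi_{m-1}$ so that $p_{m-1}(\pi(x)\mid x)$ is given by the IGW formula); both are resolved by the monotonicity $\cF_m\subseteq\cF_{m-1}$, so no substantial obstacle remains and the rest is bookkeeping.
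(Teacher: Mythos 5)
Your proposal is correct and takes essentially the same route as the paper: pass to the $\cF_{m-1}$ disagreement indicator via the nesting $\cF_m\subseteq\cF_{m-1}$, apply Cauchy--Schwarz with $p_{m-1}$ as the change-of-measure weight, bound the resulting mean-squared-error factor by $C_\delta/n_{m-1}$ via \pref{eq:estimation-g}, and substitute the formula for $\gamma_m$. The paper writes the chain of inequalities in the reverse direction and keeps the $\cF_m$ indicator inside the Cauchy--Schwarz step a bit longer before invoking $\cF_m\subseteq\cF_{m-1}$, but these are cosmetic differences.
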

\begin{proof}
Fix any epoch $m>1$, any round $t$ in epoch $m$, and any policy $\pi\in\Psi_m$. By the definitions of $\wcR_t^{\rm Dis}(\pi)$ and $\cR_t^{\rm Dis}(\pi)$, we have
$$\wcR_t^{\rm Dis}(\pi)-\cR_t^{\rm Dis}(\pi)=\E_{x\sim\cD}\left[\1\{|\cA(x;\cF_{m})|>1\}\left(\widehat{f}_{m}(x,\pi(x))-f^*(x,\pi(x))\right)\right].$$
Given a context $x$, define
$$
\Delta_x=\widehat{f}_{m}(x,\pi(x))-f^*(x,\pi(x))
$$
so that
$$
\wcR_t^{\rm Dis}(\pi)-\cR_t^{\rm Dis}(\pi)=\E_{x}[\1\{|\cA(x;\cF_{m})|>1\}\Delta_x].
$$
For all $s=t_{m-2}+1,\dots,t_{m-1}$, we have
\begin{align}
\E_{a_s \mid x_s}\left[\left(\widehat{f}_{m}(x_s,a_s)-f^*(x_s,a_s)\right)^2\mid\gfilt_{s-1}\right]&=\sum_{a\in\cA_{m-1}}p_{m-1}(a \mid x_s)\left(\widehat{f}_{m}(x_s,a)-f^*(x_s,a)\right)^2\notag\\&\ge p_{m-1}(\pi(x_s)|x_s)\left(\widehat{f}_{m}(x_s,\pi(x_s))-f^*(x_s,\pi(x_s))\right)^2\notag\\&=p_{m-1}(\pi(x_s)|x_s)\left(\Delta_{x_s}\right)^2.\label{eq:reward-estimation}
\end{align}
Thus, we have
\begin{align*}
E_{x}\left[\frac{\1\{|\cA(x;\cF_{m-1})|>1\}}{p_{m-1}(\pi(x) \mid x)}\right]\cdot{}C_\delta
&\ge\E_{x}\left[\frac{\1\{|\cA(x;\cF_{m})|>1\}}{p_{m-1}(\pi(x) \mid x)}\right]\cdot{}C_\delta\\
&\ge\E_{x}\left[\frac{\1\{|\cA(x;\cF_{m})|>1\}}{p_{m-1}(\pi(x) \mid x)}\right]\sum_{s=\tau_{m-2}+1}^{\tau_{m-1}}\E_{x_s,a_s}\left[(\widehat{f}_{m}(x_s,a_s)-f^*(x_s,a_s))^2\mid\gfilt_{s-1}\right]\\
    &=\E_{x}\left[\frac{\1\{|\cA(x;\cF_{m})|>1\}}{p_{m-1}(\pi(x) \mid x)}\right]\sum_{s=\tau_{m-2}+1}^{\tau_{m-1}}\E_{x_s}\E_{a_s \mid x_s}\left[\left(\widehat{f}_{m}(x_s,a_s)-f^*(x_s,a_s)\right)^2\mid\gfilt_{s-1}\right]\\
    &\ge \E_{x}\left[\frac{\1\{|\cA(x;\cF_{m})|>1\}}{p_{m-1}(\pi(x) \mid x)}\right]\sum_{s=\tau_{m-2}+1}^{\tau_{m-1}}\E_{x_s}\left[p_{m-1}(\pi(x_s)|x_s)\left(\Delta_{x_s}\right)^2\right]\\
    &=n_{m-1}\E_{x}\left[\frac{\1\{|\cA(x;\cF_{m})|>1\}}{p_{m-1}(\pi(x)
      \mid x)}\right]\E_{x}\left[p_{m-1}(\pi(x) \mid
      x)\left(\Delta_{x}\right)^2\right],
\end{align*}
where the first inequality follows from $\cF_m\subset \cF_{m-1}$, the
second inequality follows from \pref{eq:estimation-g}, and the third
inequality follows from \pref{eq:reward-estimation}. To proceed, note
that we have
\begin{align*}
\E_{x}\left[\frac{\1\{|\cA(x;\cF_{m})|>1\}}{p_{m-1}(\pi(x)
      \mid x)}\right]\E_{x}\left[p_{m-1}(\pi(x) \mid
      x)\left(\Delta_{x}\right)^2\right]    &\ge\left(\E_{x}\left[\sqrt{\frac{\1\{|\cA(x;\cF_{m})|>1\}}{p_{m-1}(\pi(x) \mid x)}p_{m-1}(\pi(x) \mid x)\left(\Delta_{x}\right)^2}~\right]\right)^2\\
    &=\left(\E_{x}\left[|\1\{|\cA(x;\cF_{m})|>1\}\Delta_{x}|\right]\right)^2\\&\ge n_{m-1}\left|\wcR_t^{\rm Dis}(\pi)-\cR_t^{\rm Dis}(\pi)\right|^2.
\end{align*}
where the first inequality follows from Cauchy-Schwarz and the second inequality follows uses convexity of the $L_1$ norm. Now, from the definition of $\gamma_{m}$, we have
$$\frac{{\crate}^2\lambda_m^2
  \K}{\gamma_{m}^2}=\frac{\log(2|\cF|T^2/\delta)}{n_{m-1}} = \frac{C_{\delta}}{n_{m-1}}.$$
Therefore,
\begin{align*}
\left|\wcR_t^{\rm Dis}(\pi)-\cR_t^{\rm Dis}(\pi)\right|^2&\le\E_{x}\left[\frac{\1\{|\cA(x;\cF_{m-1})|>1\}}{p_{m-1}(\pi(x) \mid x)}\right]\frac{C_\delta}{n_{m-1}}\\&=16{\crate}^2\E_{x}\left[\frac{\1\{|\cA(x;\cF_{m-1})|>1\}}{p_{m-1}(\pi(x) \mid x)}\right]\frac{\lambda_m^2 \K}{\gamma_{m}^2},
\end{align*}
which concludes the proof.
\end{proof}

\subsection{Proof of \cref*{thm:disagreement_ub}}\label{appsub:cbub-p}
We now prove \pref{thm:disagreement_ub}, which concerns \mainalg with
\optionone, where 
\[
\lambda_m=\frac{\exthq{m}}{\sqrt{\exthq{m-1}}}%
\]
for all $m\in[M]$ (for $m=1$, we have defined $\lambda_1=1$). Note that since
$\lambda_m>0$ for all $m\in[M]$, we have $\gamma_m>0$ for all
$m\in[M]$. We consider general values for $\crl*{\beta_m}_{m=1}^{M}$
unless explicitly specified. %

\begin{lemma}\label{lm:monotone}
Assume that $\cE_{\rm dp}$ holds and $\cF_M\subset\cdots\subset \cF_1$. Then $\gamma_m/\exthq{m}$ is monotonically non-decreasing in $m$, i.e., 
${\gamma_1}/{\exthq{1}}\le\cdots\le{\gamma_M}/{\exthq{M}}.$
\end{lemma}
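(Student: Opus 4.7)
The plan is to unpack the explicit formulas for $\gamma_m$ and $\exthq{m}$ under \optionone, and reduce the monotonicity claim to showing $\exthq{m} \leq 2\,\exthq{m-1}$ for every $m \geq 2$, plus a small base case for $m=1$. Under \optionone, for $m > 1$ we have $\lambda_m = \exthq{m}/\sqrt{\exthq{m-1}}$, so the $\exthq{m}$ factor cancels inside the ratio and
\[
\frac{\gamma_m}{\exthq{m}} \;=\; c\sqrt{\frac{A\, n_{m-1}}{\log(2|\cF|T^2/\delta)\;\exthq{m-1}}}.
\]
Using the doubling schedule $n_m = 2 n_{m-1}$ (valid for $m \geq 1$, since $n_0 = 1/2$ and $n_1 = 1$), the ratio between consecutive epochs therefore simplifies to $\sqrt{(n_m/n_{m-1})\cdot(\exthq{m-1}/\exthq{m})} = \sqrt{2\,\exthq{m-1}/\exthq{m}}$, and the claim for $m \geq 2$ reduces to showing $\exthq{m} \leq 2\,\exthq{m-1}$.

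To establish $\exthq{m} \leq 2\,\exthq{m-1}$, I plan to combine three ingredients. First, the nested version-space assumption $\cF_m \subset \cF_{m-1}$ forces $\cA(x;\cF_m) \subseteq \cA(x;\cF_{m-1})$ pointwise, so the disagreement event monotonizes and $q_m \leq q_{m-1}$. Second, the smoothing parameter $\mu_m = 64\log(4M/\delta)/n_{m-1}$ is non-increasing in $m$ because $n_{m-1}$ is non-decreasing (in particular $\mu_1 = 128\log(4M/\delta)$ and $\mu_m = 64\log(4M/\delta)$ for $m \geq 2$, and $\mu_m = \mu_{m-1}/2$ for $m \geq 3$). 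Together these give $\extq{m} = q_m + \mu_m \leq \extq{m-1}$. Third, the event $\cE_{\mathrm{dp}}$ sandwiches the empirical and population versions as $\tfrac{2}{3}\extq{m} \leq \exthq{m} \leq \tfrac{4}{3}\extq{m}$. Chaining:
\[
\exthq{m} \;\leq\; \tfrac{4}{3}\,\extq{m} \;\leq\; \tfrac{4}{3}\,\extq{m-1} \;\leq\; \tfrac{4}{3}\cdot\tfrac{3}{2}\,\exthq{m-1} \;=\; 2\,\exthq{m-1}.
\]

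For the base case $m=1 \to m=2$, since $\lambda_1 = 1$ is specified directly (rather than by the \optionone ratio formula), I will verify the inequality by hand. The two sides reduce to $\gamma_2/\exthq{2} = c\sqrt{A/(L\,\exthq{1})}$ versus $\gamma_1/\exthq{1} = c\sqrt{A/(2L)}/\exthq{1}$, where $L = \log(2|\cF|T^2/\delta)$, and the inequality becomes $\exthq{1} \geq 1/2$. This is immediate from the algorithm's convention $\E_{x\sim\cD_1}[\,\cdot\,] \ldef 1$, which yields $\exthq{1} = 1 + 128\log(4M/\delta) > 1/2$.

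The main (modest) obstacle is simply bookkeeping around the two mild asymmetries at $m=1$: the hand-set $\lambda_1$ and the non-geometric step $n_0 = 1/2$. After these are handled separately, the proof is essentially the three-line chain above plus cancellation, and does not require any further use of $\cE$ beyond what is implied by $\cF_M \subset \cdots \subset \cF_1$.
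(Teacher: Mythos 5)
Your proof is correct and essentially identical to the paper's: after cancelling the $\exthq{m}$ factor in $\gamma_m/\exthq{m}$ for $m>1$, you reduce the consecutive-ratio comparison to $\exthq{m}\le 2\exthq{m-1}$ and establish it by the same three-step chain (nesting $\cF_m\subset\cF_{m-1}$ gives $q_m\le q_{m-1}$, monotone $\mu_m$ gives $\extq{m}\le\extq{m-1}$, and $\cEdp$ converts to the empirical quantities with the same $4/3$ and $2/3$ constants), handling the $m=1\to 2$ transition separately via $\exthq{1}\ge 1$, exactly as the paper does.
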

\begin{proof}
We have
$\frac{\gamma_1}{\exthq{1}}={\crate}\frac{1}{\exthq{1}}\sqrt{\frac{A/2}{\log(2|\cF|T^2/\delta)}}$
(since $\lambda_1=1$)
and 
\[
\frac{\gamma_m}{\exthq{m}}={\crate}\frac{1}{\sqrt{\exthq{m-1}}}\sqrt{\frac{An_{m-1}}{\log(2|\cF|T^2/\delta)}}
\]
for all $m\in[M]\setminus\{1\}$. Since
$\exthq{1}=\widehat{q}_1+\mu_1\ge\widehat{q}_1=1$, we have
\[\frac{\gamma_1}{\exthq{1}}={\crate}\frac{1}{\exthq{1}}\sqrt{\frac{A/2}{\log(2|\cF|T^2/\delta)}}\le{\crate}\frac{1}{\sqrt{\exthq{1}}}\sqrt{\frac{A}{\log(2|\cF|T^2/\delta)}}=\frac{\gamma_2}{\exthq{2}}.\] For all $m\in[M]\setminus\{1,2\}$, we have
\[
\frac{\gamma_{m-1}}{\exthq{m-1}}/\frac{\gamma_m}{\exthq{m}}=\sqrt{\frac{\exthq{m-1}}{\exthq{m-2}}\frac{n_{m-2}}{n_{m-1}}}=\sqrt{\frac{1}{2}\frac{\exthq{m-1}}{\exthq{m-2}}}\le\sqrt{\frac{1}{2}\frac{\frac{4}{3}\extq{m-1}}{\frac{2}{3}\extq{m-2}}}=\sqrt{\frac{\extq{m-1}}{\extq{m-2}}}\le1,
\]
where the first inequality follows from \pref{eq:event-dp} and the
second inequality follows from $\extq{m-1}=q_{m-1}+\mu_{m-1}\le
q_{m-2}+\mu_{m-2}=\extq{m-2}$ (since $\cF_M\subset\cdots\subset\cF_1$).
\end{proof}

\begin{lemma}\label{lm:regret-estimates}
Assume that both $\cE$ and $\cE_{\rm dp}$ hold, and $f^*\in\cF_M\subset\cdots\subset\cF_1$. Let $c_1\ldef{}200{\crate}^2+3$. For all epochs $m\in[M]$, all rounds $t$ in epoch $m$, and all policies $\pi\in\Psi_m$,
$$
\pReg(\pi)\le 2\wReg_t(\pi)+c_1\exthq{m}\K/\gamma_{m},
$$
$$
\wReg_t(\pi)\le 2\pReg(\pi)+c_1\exthq{m}\K/\gamma_{m}.
$$
\end{lemma}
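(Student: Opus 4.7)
I would argue by induction on $m$. The base case $m=1$ is immediate because $r_t\in[0,1]$ means $\wReg_t(\pi),\pReg(\pi)\in[0,1]$, while $\lambda_1=1$, $\exthq{1}\ge 1$, and the definition of $\gamma_1$ force $c_1\exthq{1}\K/\gamma_1$ to dominate a universal constant. For the inductive step at epoch $m>1$, note that $\cF_m\subseteq\cF_{m-1}$ gives $\Psi_m\subseteq\Psi_{m-1}$ and, since $f^*\in\cF_m$, both $\pi_{f^*}$ and $\widehat{\pi}_m$ lie in $\Psi_m$. The natural decompositions are
\begin{align*}
\wReg_t(\pi)-\pReg(\pi)
&\le \bigl[\wcR_t^{\rm Dis}(\widehat{\pi}_m)-\cR_t^{\rm Dis}(\widehat{\pi}_m)\bigr] +\bigl[\cR_t^{\rm Dis}(\pi)-\wcR_t^{\rm Dis}(\pi)\bigr],\\
\pReg(\pi)-\wReg_t(\pi)
&\le \bigl[\cR_t^{\rm Dis}(\pi_{f^*})-\wcR_t^{\rm Dis}(\pi_{f^*})\bigr] +\bigl[\wcR_t^{\rm Dis}(\pi)-\cR_t^{\rm Dis}(\pi)\bigr],
\end{align*}
where for the first I slipped in $\cR_t^{\rm Dis}(\widehat{\pi}_m)\le \cR_t^{\rm Dis}(\pi_{f^*})$ (optimality of $\pi_{f^*}$) and for the second I used $\wcR_t^{\rm Dis}(\pi_{f^*})\le \wcR_t^{\rm Dis}(\widehat{\pi}_m)$ (definition of $\widehat{\pi}_m$). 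Applying \pref{lm:reward-dis} identifies these deviations as exactly what \pref{lm:reward-estimates} bounds.

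For each $\pi'\in\{\pi,\pi_{f^*},\widehat{\pi}_m\}\subseteq\Psi_m\subseteq\Psi_{m-1}$, \pref{lm:reward-estimates} gives $|\wcR_t^{\rm Dis}(\pi')-\cR_t^{\rm Dis}(\pi')|\le 4c\sqrt{U_{m-1}(\pi')}\,\lambda_m\sqrt{\K}/\gamma_m$, where $U_{m-1}(\pi')\ldef\E_x[\1\{|\cA(x;\cF_{m-1})|>1\}/p_{m-1}(\pi'(x)|x)]$. I would then plug in \pref{lm:iop-i} at epoch $m-1$ to get $U_{m-1}(\pi')\le q_{m-1}\K+\gamma_{m-1}\wReg_{t''}(\pi')$ for any $t''$ in epoch $m-1$, then apply the induction hypothesis $\wReg_{t''}(\pi')\le 2\pReg(\pi')+c_1\exthq{m-1}\K/\gamma_{m-1}$ and the event $\cE_{\rm dp}$ ($q_{m-1}\le\extq{m-1}\le\tfrac{3}{2}\exthq{m-1}$) to obtain
\[
U_{m-1}(\pi')\;\le\;(c_1+\tfrac{3}{2})\exthq{m-1}\K+2\gamma_{m-1}\pReg(\pi').
\]
Using $\sqrt{a+b}\le\sqrt{a}+\sqrt{b}$ this splits the deviation bound into $A+B\sqrt{\pReg(\pi')}$, where $A=4c\lambda_m\K\sqrt{(c_1+3/2)\exthq{m-1}}/\gamma_m$ and $B=4c\lambda_m\sqrt{2\K\gamma_{m-1}}/\gamma_m$. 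The choice $\lambda_m=\exthq{m}/\sqrt{\exthq{m-1}}$ together with the monotonicity $\gamma_{m-1}/\gamma_m\le \exthq{m-1}/\exthq{m}$ from \pref{lm:monotone} collapse these to $A\le 4c\sqrt{c_1+3/2}\cdot\exthq{m}\K/\gamma_m$ and $B^2\le 32c^2\exthq{m}\K/\gamma_m$. Since $\pReg(\pi_{f^*})=0$, the $\pi_{f^*}$ deviation is just $A$; for $\pi$ itself I would use AM-GM in the form $B\sqrt{\pReg(\pi)}\le 2B^2+\pReg(\pi)/8$.

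The remaining and main difficulty is bounding $|\wcR_t^{\rm Dis}(\widehat{\pi}_m)-\cR_t^{\rm Dis}(\widehat{\pi}_m)|$, because here I do not have $\pReg(\widehat{\pi}_m)$ available to plug in. I would break this self-reference as follows: by the same optimality inequalities used above,
\[
\pReg(\widehat{\pi}_m)\;\le\;|\wcR_t^{\rm Dis}(\pi_{f^*})-\cR_t^{\rm Dis}(\pi_{f^*})|+|\wcR_t^{\rm Dis}(\widehat{\pi}_m)-\cR_t^{\rm Dis}(\widehat{\pi}_m)|\;\le\;A+Y,
\]
where $Y$ denotes the quantity I am trying to bound. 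Combining with $Y\le A+B\sqrt{\pReg(\widehat{\pi}_m)}\le A+B\sqrt{A+Y}$ gives a quadratic in $Y$ whose solution satisfies $Y\le 2A+2B^2$. Assembling all pieces and plugging into the two decompositions yields
\[
\wReg_t(\pi)\le \tfrac{5}{4}\pReg(\pi)+\tfrac{8}{3}(A+B^2)\cdot(\text{const})\quad\text{and}\quad\pReg(\pi)\le\tfrac{4}{3}\wReg_t(\pi)+\tfrac{8}{3}(A+B^2)\cdot(\text{const}),
\]
each with leading coefficient at most $2$, and the additive terms of the form $\kappa(c)\cdot\exthq{m}\K/\gamma_m$ for some polynomial $\kappa$ in $c$. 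The final step is to choose the constant $c_1$ so that the inequality $c_1\ge \kappa(c)$ closes the induction; with $c_1=200c^2+3$, this comes down to checking the numerical inequality $200c^2+3\ge \tfrac{8}{3}\bigl(4c\sqrt{c_1+3/2}+32c^2\bigr)$, which can be verified directly (and tuned by tightening the AM-GM step if necessary). The main obstacle throughout is the self-referential bound for $\widehat{\pi}_m$; everything else is bookkeeping.
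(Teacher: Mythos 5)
Your proposal is correct and follows the same overall plan as the paper --- induction on $m$, the two optimality decompositions you wrote down, \pref{lm:reward-estimates} for the deviations, \pref{lm:iop-i} together with the induction hypothesis and $\cE_{\rm dp}$ to bound the quantities $U_{m-1}(\pi')$, and \pref{lm:monotone} to propagate the learning-rate ratio --- but it diverges on the one point you flag as "the main difficulty": controlling the $\widehat{\pi}_m$ deviation. The paper proves the first inequality, $\pReg(\pi)\le 2\wReg_t(\pi)+c_1\exthq{m}\K/\gamma_m$, for \emph{all} $\pi\in\Psi_m$ first (which only requires bounding deviations for $\pi$ and $\pi_{f^*}$, neither of which is self-referential), and then in the second inequality simply applies this result to $\pi=\widehat{\pi}_m$: since $\wReg_t(\widehat{\pi}_m)=0$, one gets $\pReg(\widehat{\pi}_m)\le c_1\exthq{m}\K/\gamma_m$ for free, and the self-reference evaporates. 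Your quadratic argument, $Y\le A+B\sqrt{A+Y}$, is a valid alternative and does close, but it is more work than necessary --- in fact your own proof structure already hands you the paper's shortcut, since you establish the first inequality before tackling the second, so $\pReg(\widehat{\pi}_m)\le c_1\exthq{m}\K/\gamma_m$ is available and you can avoid the fixed-point entirely. The other difference --- applying $\sqrt{a+b}\le\sqrt{a}+\sqrt{b}$ and then AM-GM on $B\sqrt{\pReg(\pi)}$, rather than (as the paper does) applying AM-GM directly to $4c\lambda_m\sqrt{\K U_{m-1}(\pi')}/\gamma_m$ before plugging in the bound on $U_{m-1}$ --- is cosmetic and only changes the bookkeeping and the resulting numerical constants. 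If you tighten the AM-GM coefficient (as you note you may need to), the same choice $c_1=200c^2+3$ works.
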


\begin{proof}
We prove \pref{lm:regret-estimates} via induction on $m$. We first consider the base case where $m=1$ and $1\le t\le \tau_1$. In this  case, since $\widehat{q}_1=1$ and $\gamma_1={\crate}\sqrt{\frac{A/2}{\log(2|\cF|T^2/\delta)}}$, we know that $\forall\pi\in\Psi_1$,
$$
\pReg(\pi)\le1= \widehat{q}_{1}\le\exthq{1}\le c_1 \exthq{1}\K/\gamma_1,
$$
$$
\wReg_t(\pi)\le1= \widehat{q}_1\le\exthq{1}\le c_1\exthq{1}\K/\gamma_1.
$$
Thus, the claim holds in the base case.

For the inductive step, fix some epoch $m>1$. Assume that for epoch $m-1$, all rounds $t'$ in epoch $m-1$, and all $\pi\in\Psi_{m-1}$,
\begin{equation}\label{eq:ind1-inf}
\pReg(\pi)\le 2\wReg_{t'}(\pi)+c_1\exthq{m-1}\K/\gamma_{m-1},
\end{equation}
\begin{equation}\label{eq:ind2-inf}
\wReg_{t'}(\pi)\le 2\pReg(\pi)+c_1\exthq{m-1}\K/\gamma_{m-1}.
\end{equation}

We first show that for all rounds $t$ in epoch $m$ and all $\pi\in\Psi_m$,
$$
\pReg(\pi)\le 2\wReg_{t}(\pi)+c_1\exthq{m}\K/\gamma_{m}.
$$
For any $t$ in epoch $m$, we have
\begin{align}\label{eq:ind-dif-inf}
&~~~~\pReg(\pi)-\wReg_t(\pi)\notag\\&\overset{\rm(i)}{=}\left(\cR_t^{\rm Dis}(\pi_{f^*})-\cR_t^{\rm Dis}(\pi)\right)-\left(\wcR_{t}^{\rm Dis}(\widehat{\pi}_{m(t)})-\wcR_t^{\rm Dis}(\pi)\right)\notag\\
&\overset{\rm(ii)}{\le}\left(\cR_t^{\rm Dis}(\pi_{f^*})-\cR_t^{\rm Dis}(\pi)\right)-\left(\wcR_t^{\rm Dis}(\pi_{f^*})-\wcR_t^{\rm Dis}(\pi)\right)\notag\\
&\overset{\rm(iii)}{\le}\left|\wcR_t^{\rm Dis}(\pi)-\cR_t^{\rm Dis}(\pi)\right|+\left|\wcR_t^{\rm Dis}(\pi_{f^*})-\cR^{\rm Dis}(\pi_{f^*})\right|\notag\\
&\overset{\rm(iv)}{\le}4{\crate}\sqrt{\E_{x}\left[\frac{\1\{|\cA(x;\cF_{m-1})|>1\}}{p_{m-1}(\pi(x) \mid x)}\right]}\frac{\lambda_m\sqrt{\K}}{\gamma_{m}}+4{\crate}\sqrt{\E_{x}\left[\frac{\1\{|\cA(x;\cF_{m-1})|>1\}}{p_{m-1}(\pi_{f^*}(x) \mid x)}\right]}\frac{\lambda_m\sqrt{\K}}{\gamma_{m}}\notag\\
&\overset{\rm(v)}{\le}\left(\frac{\E_{x}\left[\frac{\1\{|\cA(x;\cF_{m-1})|>1\}}{p_{m-1}(\pi(x) \mid x)}\right]}{5\exthq{m-1}\gamma_{m}/\exthq{m}}+\frac{20{\crate}^2\K}{\gamma_{m}/\exthq{m}}\right)+\left(\frac{\E_{x}\left[\frac{\1\{|\cA(x;\cF_{m-1})|>1\}}{p_{m-1}(\pi_{f^*}(x) \mid x)}\right]}{5\exthq{m-1}\gamma_{m}/\exthq{m}}+\frac{20{\crate}^2\K}{\gamma_{m}/\exthq{m}}\right)\notag\\
&=\frac{\E_{x}\left[\frac{\1\{|\cA(x;\cF_{m-1})|>1\}}{p_{m-1}(\pi(x) \mid x)}\right]}{5\exthq{m-1}\gamma_{m}/\exthq{m}}+\frac{\E_{x}\left[\frac{\1\{|\cA(x;\cF_{m-1})|>1\}}{p_{m-1}(\pi_{f^*}(x) \mid x)}\right]}{5\exthq{m-1}\gamma_{m}/\exthq{m}}+\frac{40{\crate}^2\K}{\gamma_{m}/\exthq{m}},
\end{align}
where (i) is by \pref{lm:reward-dis}, (ii) is by $\pi_{f^*}\in\Psi_m$ and the optimality of $\widehat{\pi}_m(\cdot)$ for $\wcR_t(\cdot)$ over $\Psi_m$, (iii) is by the triangle inequality, (iv) is by \pref{lm:reward-estimates}, and (v)  is by the AM-GM inequality.
By  \pref{eq:op222} and $\pi_{f^*}\in\Psi_{m-1}$,
$$
\E_{x\sim\cD}\left[\frac{\1\{|\cA(x;\cF_{m-1})|>1\}}{p_{m-1}(\pi(x) \mid x)}\right]\le q_{m-1}\K+\gamma_{m-1}\wReg_{t_{m-1}}(\pi),
$$
and
$$
\E_{x\sim\cD}\left[\frac{\1\{|\cA(x;\cF_{m-1})|>1\}}{p_{m-1}(\pi_{f^*}(x) \mid x)}\right]\le q_{m-1}\K+\gamma_{m-1}\wReg_{t_{m-1}}(\pi_{f^*}).
$$
Combining the above two inequalities with $q_{m-1}\le\extq{m-1}$ and \pref{eq:ind2-inf}, we have
\begin{align}\label{eq:ind3-inf}
\frac{\E_{x}\left[\frac{\1\{|\cA(x;\cF_{m-1})|>1\}}{p_{m-1}(\pi(x) \mid x)}\right]}{5\exthq{m-1}\gamma_{m}/\exthq{m}}&\le\frac{q_{m-1}\K+\gamma_{m-1}\wReg_{t_{m-1}}(\pi)}{5\exthq{m-1}\gamma_{m}/\exthq{m}}\notag\\
&\le \frac{\extq{m-1}\K+\gamma_{m-1}(2\pReg(\pi)+c_1\exthq{m-1}\K/\gamma_{m-1})}{5\exthq{m-1}\gamma_{m}/\exthq{m}}\notag\\
&=\frac{(\extq{m-1}+c_1\exthq{m-1})A}{5\exthq{m-1}\gamma_{m}/\exthq{m}}+\frac{2\gamma_{m-1}\pReg(\pi)}{5\exthq{m-1}\gamma_{m}/\exthq{m}}\notag\\
&=\left(\frac{\extq{m-1}}{\exthq{m-1}}+c_1\right)\frac{\exthq{m}A}{5\gamma_m}+\frac{2}{5}\pReg(\pi)\frac{\gamma_{m-1}/\exthq{m-1}}{\gamma_m/\exthq{m}}\notag\\
&\overset{\rm(i)}{\le}\frac{(1.5+c_1)\exthq{m}\K}{5\gamma_{m}}+\frac{2}{5}\pReg(\pi),
\end{align}
and
\begin{align}\label{eq:ind4-inf}
\frac{\E_{x}\left[\frac{\1\{|\cA(x;\cF_{m-1})|>1\}}{p_{m-1}(\pi_{f^*}(x) \mid x)}\right]}{5\exthq{m-1}\gamma_{m}/\exthq{m}}&{\le} \frac{q_{m-1}\K+\gamma_{m-1}\wReg_{t_{m-1}}(\pi_{f^*})}{5\exthq{m-1}\gamma_{m}/\exthq{m}}\notag\\
&\le
\frac{\extq{m-1}\K+\gamma_{m-1}(2\pReg(\pi_{f^*})+c_1\extq{m-1}\K/\gamma_{m-1})}{5\exthq{m-1}\gamma_{m}/\exthq{m}}\notag\\
&\overset{\rm(ii)}{=}\frac{(\extq{m-1}+c_1\exthq{m-1})A}{t\exthq{m-1}\gamma_{m}/\exthq{m}}\notag\\
&=\left(\frac{\extq{m-1}}{\exthq{m-1}}+c_1\right)\frac{\exthq{m}A}{5\gamma_m}\notag\\
&\overset{\rm(iii)}{\le}\frac{(1.5+c_1)\exthq{m}\K}{5\gamma_{m}},
\end{align}
where (i) in \pref{eq:ind3-inf} follows from
\pref{eq:event-dp} and \pref{lm:monotone}, (ii) in
\pref{eq:ind4-inf} follows from $\pReg(\pi_{f^*})=0$, and (iii) in \pref{eq:ind4-inf} follows from
\pref{eq:event-dp}. Combining \pref{eq:ind-dif-inf}, \pref{eq:ind3-inf} and \pref{eq:ind4-inf}, we have
\begin{equation}\label{eq:ind5-inf}
\pReg(\pi)\le\frac{7}{5}\wReg_t(\pi)+\left(\frac{200{\crate}^2+2c_1+3}{3}\right)\frac{q_m\K}{\gamma_{m}}\le 2\wReg_t(\pi)+\frac{c_1q_m\K}{\gamma_{m}}.
\end{equation}
We now show that for all rounds $t$ in epoch $m$ and all $\pi\in\Psi_m$,
$$
\wReg_t(\pi)\le 2\pReg_{t}(\pi)+c_1\exthq{m}\K/\gamma_{m}.
$$
Similar to \pref{eq:ind-dif-inf}, for any round $t$ in epoch $m$ we have
\begin{align}\label{eq:ind-dif'-inf}
&~~~~\wReg_t(\pi)-\pReg(\pi)\notag\\
&=\left(\wcR^{\rm Dis}_t(\widehat{\pi}_m)-\wcR^{\rm Dis}_t(\pi)\right)-\left(\cR^{\rm Dis}_t(\pi_{f^*})-\cR^{\rm Dis}_t(\pi)\right)\notag\\
&\le\left(\wcR^{\rm Dis}_t(\widehat{\pi}_m)-\wcR^{\rm Dis}_t(\pi)\right)-\left(\cR^{\rm Dis}_t({\widehat{\pi}_m})-\cR^{\rm Dis}_t(\pi)\right)\notag\\
&\le \left|\wcR^{\rm Dis}_t(\pi)-\cR^{\rm Dis}_t(\pi)\right|+\left|\wcR^{\rm Dis}_t(\widehat{\pi}_m)-\cR^{\rm Dis}_t(\widehat{\pi}_m)\right|\notag\\
&\le4{\crate}\sqrt{\E_{x}\left[\frac{\1\{|\cA(x;\cF_{m-1})|>1\}}{p_{m-1}(\pi(x) \mid x)}\right]}\frac{\lambda_m\sqrt{\K}}{\gamma_{m}}+4{\crate}\sqrt{\E_{x}\left[\frac{\1\{|\cA(x;\cF_{m-1})|>1\}}{p_{m-1}(\widehat{\pi}_m(x) \mid x)}\right]}\frac{\lambda_m\sqrt{\K}}{\gamma_{m}}\notag\\
&\le\left(\frac{\E_{x}\left[\frac{\1\{|\cA(x;\cF_{m-1})|>1\}}{p_{m-1}(\pi(x) \mid x)}\right]}{5\exthq{m-1}\gamma_m/\exthq{m-1}}+\frac{20{\crate}^2\K}{\gamma_{m}/\exthq{m}}\right)+\left(\frac{\E_{x}\left[\frac{\1\{|\cA(x;\cF_{m-1})|>1\}}{p_{m-1}(\widehat{\pi}_m(x) \mid x)}\right]}{5\exthq{m-1}\gamma_m/\exthq{m-1}}+\frac{20{\crate}^2\K}{\gamma_{m}/\exthq{m}}\right)\notag\\
&=\frac{\E_{x}\left[\frac{\1\{|\cA(x;\cF_{m-1})|>1\}}{p_{m-1}(\pi(x) \mid x)}\right]}{5\exthq{m-1}\gamma_m/\exthq{m-1}}+\frac{\E_{x}\left[\frac{\1\{|\cA(x;\cF_{m-1})|>1\}}{p_{m-1}(\widehat{\pi}_m(x) \mid x)}\right]}{5\exthq{m-1}\gamma_m/\exthq{m-1}}+\frac{40{\crate}^2\K}{\gamma_{m}/\exthq{m}}.
\end{align}
By \pref{eq:op222}, we have
$$
\E_{x\sim\cD}\left[\frac{\1\{|\cA(x;\cF_{m-1})|>1\}}{p_{m-1}({\widehat{\pi}_m(x)}|x)}\right]\le q_{m-1}\K+\gamma_{m-1}\wReg_{t_{m-1}}({\widehat{\pi}_m}).
$$
Hence, using \pref{eq:ind2-inf}, \pref{lm:monotone}, \pref{eq:ind5-inf}, and $\wReg_{t}(\widehat{\pi}_m)=0$, we have
\begin{align}\label{eq:ind6-inf}
\frac{\E_{x}\left[\frac{\1\{|\cA(x;\cF_{m-1})|>1\}}{p_{m-1}(\widehat{\pi}_m(x) \mid x)}\right]}{5\exthq{m-1}\gamma_m/\exthq{m-1}m}&\le \frac{q_{m-1}\K+\gamma_{m-1}\wReg_{t_{m-1}}(\widehat{\pi}_m)}{5\exthq{m-1}\gamma_m/\exthq{m-1}}\notag\\
&\le \frac{\extq{m-1}\K+\gamma_{m-1}(2\pReg(\widehat{\pi}_m)+c_1\exthq{m-1}\K/\gamma_{m-1})}{5\exthq{m-1}\gamma_m/\exthq{m-1}}\notag\\
&\le\frac{(1.5+c_1)\exthq{m}\K}{5\gamma_{m}}+\frac{2}{5}\pReg(\widehat{\pi}_m)\notag\\
&\le\frac{(1.5+c_1)\exthq{m}\K}{5\gamma_{m}}+\frac{2}{5}\left(2\wReg_t(\widehat{\pi}_m)+\frac{c_1\exthq{m}\K}{\gamma_{m}}\right)\notag\\
&=\frac{(1.5+3c_1)\extq{m}\K}{5\gamma_{m}}.
\end{align}
Combining \pref{eq:ind3-inf}, \pref{eq:ind-dif'-inf} and \pref{eq:ind6-inf}, we have
\begin{align*}
    \wReg_t(\pi)\le\frac{7}{5}\pReg(\pi)+\left(40{\crate}^2+\frac{3+4c_1}{5}\right)\frac{q_m\K}{\gamma_{m}}\le2\pReg(\pi)+\frac{c_1q_m\K}{\gamma_{m}}.
\end{align*}
This completes the inductive step, and we conclude that the claim is true for all $m\in\N$.
\end{proof}

\begin{lemma}\label{lm:policy-convergence}
Assume that both $\cE$ and $\cE_{\rm dp}$ hold, and $f^*\in\cF_M\subset\cdots\subset\cF_1$. For all epochs $m\in\N$, all rounds $t$ in epoch $m$, and all predictors $f\in\cF_m$,
$$
\pReg(\pi_f)\le (2\beta_m/C_\delta+1)c_1\exthq{m}\K/\gamma_{m}.
$$
\end{lemma}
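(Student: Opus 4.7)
The plan is to mirror the inductive Cauchy--Schwarz/AM--GM structure of the proofs of \pref{lm:reward-estimates} and \pref{lm:regret-estimates}, tracing the new factor $\beta_m/C_\delta$ back to the enlarged squared loss admitted by the version-space constraint defining $\cF_m$.

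First I would note that $\pi_f\in\Psi_m$ (immediate from $f\in\cF_m$ and the definition of $\cA(x;\cF_m)$), so \pref{lm:reward-dis} gives $\pReg(\pi_f)=\cR^{\rm Dis}_t(\pi_{f^*})-\cR^{\rm Dis}_t(\pi_f)$. Introducing the $f$-analog $R_f^{\rm Dis}(\pi)\ldef\E_x\brk*{\1\{|\cA(x;\cF_m)|>1\}f(x,\pi(x))}$ and using the pointwise greedy property $f(x,\pi_f(x))\ge f(x,\pi_{f^*}(x))$ (which yields $R_f^{\rm Dis}(\pi_{f^*})-R_f^{\rm Dis}(\pi_f)\le 0$), I decompose
\[
\pReg(\pi_f)\le |\cR^{\rm Dis}_t(\pi_{f^*})-R_f^{\rm Dis}(\pi_{f^*})|+|R_f^{\rm Dis}(\pi_f)-\cR^{\rm Dis}_t(\pi_f)|.
\]

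Next I would bound each of the two terms by repeating the Cauchy--Schwarz argument from \pref{lm:reward-estimates} verbatim, with $f$ in place of $\wh{f}_m$. The only substantive change is in the input squared-loss bound: Part 1 of \pref{lm:beta-value} gives $\E_{x,a\sim p_{m-1}}\brk*{(f(x,a)-f^*(x,a))^2}\le (4\beta_m+2C_\delta)/n_{m-1}$ for every $f\in\cF_m$, which is larger than the $\wh{f}_m$ bound used in \pref{lm:reward-estimates} by a factor $\lesssim\beta_m/C_\delta$ under the square root. This yields, for each $\pi\in\{\pi_{f^*},\pi_f\}$,
\[
|R_f^{\rm Dis}(\pi)-\cR^{\rm Dis}_t(\pi)|\le 4\crate\sqrt{\frac{4\beta_m+2C_\delta}{C_\delta}}\cdot\sqrt{\E_x\brk*{\frac{\1\{|\cA(x;\cF_{m-1})|>1\}}{p_{m-1}(\pi(x)\mid x)}}}\cdot\frac{\lambda_m\sqrt{\K}}{\gamma_m}.
\]

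Finally, I apply the same AM--GM splitting used in \pref{lm:regret-estimates}, which transforms each term into a piece of order $\frac{\E_x[\1\{|\cA(x;\cF_{m-1})|>1\}/p_{m-1}(\pi(x)\mid x)]}{5\exthq{m-1}\gamma_m/\exthq{m}}$ plus a piece of order $(\beta_m/C_\delta)\cdot\crate^2 \K\exthq{m}/\gamma_m$, where the ratio $\beta_m/C_\delta$ now appears linearly. The first piece is controlled by \pref{eq:op222} combined with \pref{lm:regret-estimates} (to rewrite $\wReg_{t_{m-1}}$ as $\pReg$) and with \pref{lm:monotone} (to convert epoch-$(m-1)$ ratios into epoch-$m$ ratios), exactly as in the proof of \pref{lm:regret-estimates}. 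For $\pi=\pi_{f^*}$ the identity $\pReg(\pi_{f^*})=0$ closes the recursion; for $\pi=\pi_f$, $\pReg(\pi_f)$ reappears on the right-hand side, producing a self-referential inequality of the schematic form $\pReg(\pi_f)\le\tfrac{2}{5}\pReg(\pi_f)+(2\beta_m/C_\delta+1)c_1\exthq{m}\K/\gamma_m$, which I solve to conclude. The main obstacle is purely bookkeeping: the AM--GM tolerance must be chosen (again as $1/5$, as in \pref{lm:regret-estimates}) so that the coefficient of $\pReg(\pi_f)$ in the self-reference stays strictly below one, and the ``$+1$'' inside $(2\beta_m/C_\delta+1)c_1$ must be tracked as the residual $c_1\exthq{m}\K/\gamma_m$ term generated when converting $\wReg_{t_{m-1}}(\pi_f)$ into $\pReg(\pi_f)$ via \pref{lm:regret-estimates}.
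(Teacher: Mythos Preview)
Your approach is correct and uses the same ingredients as the paper (Cauchy--Schwarz with the inverse-propensity weight, Part~1 of \pref{lm:beta-value} for the $(2\beta_m+C_\delta)$ factor, then \pref{eq:op222}, \pref{lm:regret-estimates}, and \pref{lm:monotone} to close the loop), but the paper organizes the self-reference differently. Rather than introducing $R_f^{\rm Dis}$ and using AM--GM to obtain a \emph{linear} recursion $\pReg(\pi_f)\le\tfrac{2}{5}\pReg(\pi_f)+\ldots$, the paper squares $\pReg(\pi_f)$ directly and applies Cauchy--Schwarz once:
\[
(\pReg(\pi_f))^2\le\E_x\!\left[\tfrac{\1\{|\cA(x;\cF_{m-1})|>1\}}{p_{m-1}(\pi_f(x)\mid x)}+\tfrac{\1\{|\cA(x;\cF_{m-1})|>1\}}{p_{m-1}(\pi_{f^*}(x)\mid x)}\right]\cdot\frac{2\beta_m+C_\delta}{n_m/2},
\]
then bounds the bracket via \pref{eq:op222} and \pref{lm:regret-estimates} to get a \emph{quadratic} inequality $\bigl(\tfrac{\gamma_m}{\exthq{m}}\pReg(\pi_f)\bigr)^2\le aD\cdot\tfrac{\gamma_{m-1}}{\exthq{m-1}}\pReg(\pi_f)+bD$ with $D=2\beta_m/C_\delta+1$, which it solves directly. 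The quadratic route delivers the stated constant $(2\beta_m/C_\delta+1)c_1$ for every $\beta_m\ge 0$; your AM--GM route yields a coefficient $\tfrac{1}{3}\bigl((4c_1-3)+4(c_1-3)\beta_m/C_\delta\bigr)$, which is $\le(2\beta_m/C_\delta+1)c_1$ only once $\beta_m/C_\delta\ge(c_1-3)/(2c_1+12)$---harmless since all applications have $\beta_m\ge C_\delta/2$, but worth flagging if you want the exact constant of the lemma as written. Otherwise your argument is a perfectly valid (and arguably more uniform) alternative that mirrors the structure of \pref{lm:regret-estimates}.
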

\begin{proof}
We rewrite $\pReg(\pi_f)$ as $\E_{x}[\1\{\pi_f(x)\neq\pi_{f^*}(x)\}(f^*(x,\pi_{f^*}(x))-f^*(x,\pi_f(x))$], and we have
\begin{align}\label{eq:policy-convergence}
  (\pReg(\pi_f))^2 
      &\le \left(\E_{x}\left[\1\{\pi_f(x)\neq\pi_{f^*}(x)\}(|f(x,\pi_f(x))-f^*(x,\pi_{f}(x))|+|f^*(x,\pi_{f^*(x)})-f(x,\pi_{f^*}(x))|)\right]\right)^2\notag\\
    &\le\E_{x}\left[\frac{\1\{\pi_f(x)\neq\pi_{f^*}(x)\}}{p_{m-1}(\pi_f(x) \mid x)}+\frac{\1\{\pi_f(x)\neq\pi_{f^*}(x)\}}{p_{m-1}(\pi_{f^*}(x) \mid x)}\right]\E_{x\sim\cD,a \sim p_{m-1}(\cdot \mid x)}\left[(f^*(x,a)-f(x,a)^2\right]\notag\\
    &\le\E_{x}\left[\frac{\1\{|\cA(x;\cF_{m-1})|>1\}}{p_{m-1}(\pi_f(x) \mid x)}+\frac{\1\{|\cA(x;\cF_{m-1})|>1\}}{p_{m-1}(\pi_{f^*}(x) \mid x)}\right]\frac{(2\beta_m+C_{\delta})}{n_m/2}\notag\\
&\le\E_{x}\left[\frac{\1\{|\cA(x;\cF_{m-1})|>1\}}{p_{m-1}(\pi_f(x) \mid x)}+\frac{\1\{|\cA(x;\cF_{m-1})|>1\}}{p_{m-1}(\pi_{f^*}(x) \mid x)}\right]\frac{16(4\beta_m/C_\delta+2){\crate}^2\exthq{m}^2 \K}{\exthq{m-1}\gamma_{m}^2},
\end{align}
where in the third inequality we use the fact that {both $\pi_f$ and
  $\pi_{f^*}$ belongs to $\Psi_{m-1}$}, as well as \pref{lm:beta-value}.
By \pref{eq:event-dp}, \pref{lm:regret-estimates}, 
\pref{eq:op222}, and the fact that round $t_{m-1}$ belongs to epoch $m-1$, we have
\begin{align*}
    \E_{x}\left[\frac{\1\{|\cA(x;\cF_{m-1})|>1\}}{p_{m-1}(\pi_f(x) \mid x)}\right]&\le q_{m-1}{\K}+\gamma_{m-1}\wReg_{t_{m-1}}(\pi_f)\\
    &\le \extq{m-1}{\K}+2\gamma_{m-1}\pReg(\pi_f)+{c_1}\exthq{m-1}\K\\
    &\le 3/2\exthq{m-1}{\K}+2\gamma_{m-1}\pReg(\pi_f)+{c_1}\exthq{m-1}\K
\end{align*}
and
\begin{align*}
    \E_{x}\left[\frac{\1\{|\cA(x;\cF_{m-1})|>1\}}{p_{m-1}(\pi_{f^*}(x) \mid x)}\right]&\le q_{m-1}{\K}+\gamma_{m-1}\wReg_{t_{m-1}}(\pi_{f^*})\\
    &\le \extq{m-1}{\K}+2\gamma_{m-1}\pReg(\pi_{f^*})+c_1\exthq{m-1}A\\
    &= \extq{m-1}{\K}+{c_1}\exthq{m-1}\K\\
    &\le 3/2\exthq{m-1}{\K}+{c_1}\exthq{m-1}\K.
\end{align*}
Plugging the above two inequalities into \pref{eq:policy-convergence}, 
we have
\begin{align*}
    (\pReg(\pi_f))^2\le(2\gamma_{m-1}\pReg(\pi_f)+(2c_1+3)\exthq{m-1}\K)\frac{16(4\beta_m/C_\delta+2){\crate}^2\exthq{m}^2 \K}{\exthq{m-1}\gamma_{m}^2},
\end{align*}
which means that
\begin{align}\label{eq:quadratic}
\left(\frac{\gamma_m}{\exthq{m}}\pReg(\pi_f)\right)^2&\le64{\crate}^2(2\beta_m/C_\delta+1)\K\left(\frac{\gamma_{m-1}}{\exthq{m-1}}\pReg(\pi_f)\right)+64(c_1+1.5){\crate}^2(2\beta_m/C_\delta+1)\K^2.%
\end{align}
Solving \pref{eq:quadratic} for $\pReg(\pi_f)$, we have
$$
\pReg(\pi_f)\le (2\beta_m/C_\delta+1)\left(32{\crate}^2+8\sqrt{16{\crate}^4+(c_1+1.5){\crate}^2}\right)\frac{\extq{m}\K}{\gamma_m}\le\frac{(2\beta_m/C_\delta+1)c_1\extq{m}\K}{\gamma_m}.
$$
\end{proof}

\begin{corollary}\label{cor:policy-convergence}
Set $\beta_m=(M-m+1)C_\delta$ for all $m\in[M]$. Assume that both $\cE$ and $\cE_{\rm dp}$ hold. Define $c_2\ldef{} (2M+1)c_1$. For all epochs $m\in\N$, all rounds $t$ in epoch $m$, and all predictors $f\in\cF_m$,
$$
\pReg(\pi_f)\le c_2\exthq{m}\K/\gamma_{m}.
$$
\end{corollary}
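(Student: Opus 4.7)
The plan is to obtain this corollary as a direct specialization of Lemma \ref{lm:policy-convergence}. First, I would verify that all hypotheses of Lemma \ref{lm:policy-convergence} are met under the stated assumptions. Lemma \ref{lm:policy-convergence} requires three things: the event $\cE$ holds, the event $\cE_{\rm dp}$ holds, and the nesting property $f^* \in \cF_M \subset \cdots \subset \cF_1$. The first two are granted by assumption, and the third follows from part 3 of Lemma \ref{lm:beta-value}, which yields exactly this nesting whenever $\beta_m = (M-m+1)C_\delta$, matching our choice.

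Having secured the hypotheses, the second step is purely a substitution: Lemma \ref{lm:policy-convergence} gives, for every $f \in \cF_m$,
\[
\pReg(\pi_f) \leq (2\beta_m/C_\delta + 1)\, c_1\, \exthq{m}\, A / \gamma_m.
\]
Plugging in $\beta_m = (M-m+1)C_\delta$ yields $2\beta_m/C_\delta = 2(M-m+1)$, which for $m \in [M]$ is bounded by $2M$. Thus $(2\beta_m/C_\delta + 1) \leq 2M+1$, and the multiplicative constant becomes $(2M+1)c_1 = c_2$ by definition. This gives the claimed bound $\pReg(\pi_f) \le c_2 \exthq{m} A / \gamma_m$.

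There is no real obstacle here: the proof is a one-line consequence of the two lemmas, and the definition of $c_2$ has been chosen precisely so that the substitution is clean. The only thing worth double-checking is that $\beta_m = (M-m+1)C_\delta \geq C_\delta/2$, which is needed to invoke part 2 of Lemma \ref{lm:beta-value} (ensuring $f^* \in \cF_m$) and is obviously satisfied since $M - m + 1 \geq 1$ for $m \in [M]$.
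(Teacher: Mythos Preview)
Your proposal is correct and matches the paper's proof exactly: the paper likewise derives the corollary directly from Lemma~\ref{lm:policy-convergence} together with part~3 of Lemma~\ref{lm:beta-value}, substituting $\beta_m=(M-m+1)C_\delta$ to bound $(2\beta_m/C_\delta+1)\le 2M+1=c_2/c_1$. The only superfluous step is your final check via part~2 of Lemma~\ref{lm:beta-value}; part~3 already gives the full nesting $f^*\in\cF_M\subset\cdots\subset\cF_1$, which subsumes it.
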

\begin{proof}
This follows from \pref{lm:policy-convergence} and part 3 of \pref{lm:beta-value}.
\end{proof}

\textbf{Remark.} {\pref{lm:reward-estimates}, \pref{lm:regret-estimates}, \pref{lm:policy-convergence} require $\cF_M\subset\cdots\subset\cF_1$. This can be relaxed to the following condition: $\1\{|\cA(x;\cF_m)|>1\}$ is non-increasing in $m$ for all $x\in\cX$.}

\subsubsection{Incorporating the Policy Disagreement
  Coefficient}\label{appsub:relate}
At this point, can bound the regret within each epoch using
\pref{cor:policy-convergence}, which gives a bound in terms of the
empirical disagreement probability $\exthq{m}$. To proceed, we relate this
quantity to the \policydis.

Define $$
\eta_m\ldef{}\sup_{f\in\cF_m}\pReg(\pi_f),
$$
and let $\Dis(\Pi')=\crl*{x\mid{}\exists\pi\in\Pi' :
  \pi(x)\neq{}\pistar(x)}$ for $\Pi'\subseteq\Pi$. Of particular interest is
$$
{\rm Dis}(\Pi_{\eta_m}^{\csc})\ldef{}\{x \mid \exists \pi\in\Pi_{\eta_m}^{\csc}: \pi(x)\ne\pi^*(x)\}).
$$
We observe that for all $m\in[M]$, since $\pi_f\in\Pi_{\eta_m}^{\csc}$ for all $f\in\cF_m$, we have
$$
q_m=\Prob_{\cD}(|\cA(x;\cF_m)|>1)\le\Prob_{\cD}(x\in\text{Dis}(\Pi_{\eta_m}^{\csc})).
$$
The following lemma uses this result to upper bound regret in terms of
the disagreement coefficient.

\begin{lemma}\label{lm:dis1}
Set $\beta_m=(M-m+1)C_\delta$ and $\mu_m=64\log(4M/\delta)/n_{m-1}$ for all $m\in[M]$.
Assume that both $\cE$ and $\cE_{\rm dp}$ hold. Then for all
$m\in[M]\setminus\{1\}$, for all $f\in\cF_{m}$,
$$
\pReg(\pi_f)\le
\frac{\eta_{m-1}}{\eta_m}\CostDis{\eta_{m-1}}\frac{8c_2^2\K\log(2|\cF|T^2/\delta)}{3{\crate}^2n_{m-1}}
+\frac{128\log(4M/\delta)}{n_{m-1}}.
$$
\end{lemma}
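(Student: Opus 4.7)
The plan is to derive the claim from \pref{cor:policy-convergence}, which holds under the current hypotheses (both events plus $\beta_m=(M-m+1)C_\delta$) and yields $\pReg(\pi_f)\le c_2\exthq{m}A/\gamma_m$ for every $f\in\cF_m$. First I would substitute the \optionone rule $\lambda_m = \exthq{m}/\sqrt{\exthq{m-1}}$ into the definition of $\gamma_m = \lambda_m\cdot c\sqrt{An_{m-1}/\log(2|\cF|T^2/\delta)}$. The factor $\exthq{m}$ cancels, giving the clean identity
\[
\pReg(\pi_f)^2 \;\le\; \frac{c_2^{2}\,A\,\log(2|\cF|T^2/\delta)}{c^2 n_{m-1}}\cdot\exthq{m-1}.
\]
This is the essential algebraic input: the bound on the current-epoch policy regret is controlled by the \emph{previous-epoch} empirical disagreement probability, which is exactly what the \optionone learning-rate choice is engineered to enable.

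Next I would control $\exthq{m-1}$ using the two high-probability events. Under $\cE_{\rm dp}$ we have $\exthq{m-1}\le (4/3)(q_{m-1}+\mu_{m-1})$. The central geometric step is to relate $q_{m-1}$ to $\CostDisA$: since $\cF_m\subseteq\cF_{m-1}$ and every $f\in\cF_{m-1}$ satisfies $\pReg(\pi_f)\le\eta_{m-1}$, any $x$ with $|\cA(x;\cF_{m-1})|>1$ admits two functions in $\cF_{m-1}$ whose greedy policies disagree at $x$, so $x\in\Dis(\Pi^{\csc}_{\eta_{m-1}})$. Directly from the definition of the \cscdis this gives $q_{m-1}\le \CostDis{\eta_{m-1}}\cdot \eta_{m-1}$. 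Combining, for every $f\in\cF_m$,
\[
\pReg(\pi_f)^{2} \;\le\; K\bigl(\CostDis{\eta_{m-1}}\eta_{m-1} + \mu_{m-1}\bigr),\qquad K\ldef\frac{4c_2^{2}A\log(2|\cF|T^{2}/\delta)}{3c^{2}n_{m-1}}.
\]

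Taking the supremum over $f\in\cF_m$ in this inequality yields $\eta_m^{2}\le K(\CostDis{\eta_{m-1}}\eta_{m-1} + \mu_{m-1})$. Now comes the key self-bounding trick: for any $f\in\cF_m$ we trivially have $\pReg(\pi_f)\le\eta_m$, hence $\pReg(\pi_f)\cdot\eta_m\le \eta_m^{2}$, and so
\[
\pReg(\pi_f)\;\le\;\frac{\eta_m^{2}}{\eta_m}\;\le\;\frac{K\CostDis{\eta_{m-1}}\eta_{m-1}}{\eta_m}\;+\;\frac{K\mu_{m-1}}{\eta_m}.
\]
This converts the crude $\sqrt{1/n_{m-1}}$ scaling of the naive square-root bound into a fast $1/n_{m-1}$ scaling on the main term, which is what produces the logarithmic-regret-friendly behavior. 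The first summand already matches the leading term of the target, with an extra factor of $2$ available from bounding $q_{m-1}+\mu_{m-1}\le 2\,\CostDis{\eta_{m-1}}\eta_{m-1}$ in the regime that governs that case.

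The hard part will be processing the residual $K\mu_{m-1}/\eta_m$ into the clean additive noise $128\log(4M/\delta)/n_{m-1}$ which carries no $\eta_m$ in its denominator. For this I would split into two cases based on which of $\CostDis{\eta_{m-1}}\eta_{m-1}$ and $\mu_{m-1}$ is larger. In the former regime the $\mu_{m-1}$ contribution is absorbed into (twice) the main term. In the latter regime one instead returns to $\pReg(\pi_f)\le\eta_m$ and uses $\eta_m^{2}\le 2K\mu_{m-1}$; the quantity $\sqrt{2K\mu_{m-1}}$ must then be bounded by $\mu_{m-1}$ up to constants, which works out because $\mu_{m-1}=\Theta(\log(M/\delta)/n_{m-1})$ and because the definition of $\mu$ is precisely calibrated so that $K\le O(\mu_{m-1})$ up to the polylogarithmic constants being tracked. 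Closing the exact constant $128$ will require careful accounting using $n_{m-2}=n_{m-1}/2$ for $m\ge 3$ and the definition $\mu_{m-1}=64\log(4M/\delta)/n_{m-2}$.
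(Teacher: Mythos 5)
Your main chain of inequalities is correct and mirrors the paper through the leading term: you correctly derive $\pReg(\pi_f)^2 \le \frac{c_2^2 A\log(2|\cF|T^2/\delta)}{c^2 n_{m-1}}\,\exthq{m-1}$ from \pref{cor:policy-convergence} after substituting the \optionone rule, the passage $\exthq{m-1}\le\frac{4}{3}(q_{m-1}+\mu_{m-1})$ via $\cE_{\rm dp}$, the observation $q_{m-1}\le\CostDis{\eta_{m-1}}\eta_{m-1}$, and the self-bounding step $\pReg(\pi_f)\cdot\eta_m\le\eta_m^2$. Your handling of the regime where $\CostDis{\eta_{m-1}}\eta_{m-1}$ dominates is also fine, and it recovers the factor $8/3$. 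The paper does exactly this, except it phrases the self-bounding step as $\pReg(\pi_f)\cdot\eta_m\le(c_2\exthq{m}A/\gamma_m)^2$ directly and case-splits on $q_{m-1}\gtrless\mu_{m-1}$ rather than on $\CostDis{\eta_{m-1}}\eta_{m-1}\gtrless\mu_{m-1}$.

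However, there is a genuine gap in your treatment of the second case. You propose to bound $\pReg(\pi_f)\le\eta_m\le\sqrt{2K\mu_{m-1}}$ where $K=\frac{4c_2^2A\log(2|\cF|T^2/\delta)}{3c^2 n_{m-1}}$, and you assert that $K=O(\mu_{m-1})$ so that $\sqrt{2K\mu_{m-1}}=O(\mu_{m-1})$. This is false: with $\mu_{m-1}=64\log(4M/\delta)/n_{m-2}$ one has
\[
\frac{K}{\mu_{m-1}} = \Theta\!\left(\frac{A\log(|\cF|T^2/\delta)}{\log(M/\delta)}\right),
\]
which is not bounded by a constant and grows with $A$ and $\log|\cF|$. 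Consequently $\sqrt{2K\mu_{m-1}}$ carries an extraneous $\sqrt{A\log|\cF|}$ prefactor and is \emph{not} $\le 128\log(4M/\delta)/n_{m-1}$; this is not a matter of ``closing the exact constant $128$'' but a genuine polynomial discrepancy. The obstruction is structural: any route through the squared bound $\eta_m^2\le 2K\mu_{m-1}$ is forced to take a square root, which converts a term of size $\mu_{m-1}$ into a term of size $\sqrt{K\mu_{m-1}}\gg\mu_{m-1}$.

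The missing idea is a \emph{different} bound on $\pReg(\pi_f)$ that does not go through \pref{cor:policy-convergence} at all. When the disagreement probability is small (paper's case $q_{m-1}<\mu_{m-1}$), one instead uses \pref{lm:reward-dis}: since $\cF_M\subset\cdots\subset\cF_1$ and $\fstar\in\cF_m$, for any $f\in\cF_m$ both $\pi_f$ and $\pistar$ agree outside the disagreement region, so $\pReg(\pi_f)=\cR^{\rm Dis}_t(\pistar)-\cR^{\rm Dis}_t(\pi_f)\le q_m\le q_{m-1}<\mu_{m-1}$, using only $0\le f^*\le 1$. This direct inequality has no $K$-dependence and closes the second case immediately, giving the additive $128\log(4M/\delta)/n_{m-1}$ term. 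You should replace your $\sqrt{2K\mu_{m-1}}$ step with this argument (and case-split on $q_{m-1}\gtrless\mu_{m-1}$, since that is the dichotomy under which the direct bound fires).
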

\begin{proof}
Suppose  $q_{m-1}\ge\mu_{m-1}$. Then \pref{cor:policy-convergence},
Eq. \pref{eq:event-dp}, and the fact that
$q_{m-1}\le\Prob_{\cD}(x\in{\rm{Dis}}(\Pi_{\eta_{m-1}}^{\csc}))$, we have
\begin{align*}
\pReg(\pi_f)\cdot\eta_m&\le\left(\frac{c_2\exthq{m}\K}{\gamma_m}\right)\left(\frac{c_2\exthq{m}\K}{\gamma_m}\right)\\
&=\exthq{m-1}\frac{c_2^2\K\log(2|\cF|T^2/\delta)}{{\crate}^2n_{m-1}}\\
&\le\frac{4}{3}\extq{m-1}\frac{c_2^2\K\log(2|\cF|T^2/\delta)}{{\crate}^2n_{m-1}}\\
&={(q_{m-1}+\mu_{m-1})}\frac{4c_2^2A\log(2|\cF|T^2/\delta)}{3{\crate}^2n_{m-1}}\\
&\le\Prob_{\cD}(x\in{\rm{Dis}}(\Pi_{\eta_{m-1}}^{\csc}))\frac{8c_2^2A\log(2|\cF|T^2/\delta)}{3{\crate}^2n_{m-1}},
\end{align*}
thus
$$
\pReg(\pi_f)\le\frac{\Prob_{\cD}(x\in{\rm{Dis}}(\Pi_{\eta_{m-1}}^{\csc}))}{\eta_m}\frac{8c_2^2\K\log(2|\cF|T^2/\delta)}{3{\crate}^2n_{m-1}}\le\frac{\eta_{m-1}}{\eta_m}\CostDis{\eta_{m-1}}\frac{8c_2^2\K\log(2|\cF|T^2/\delta)}{3{\crate}^2n_{m-1}},
$$
where the second inequality invokes the definition of the disagreement coefficient.

On the other hand, suppose $q_{m-1}<\mu_{m-1}$. Since
$f^*\in\cF_m\subset\cF_{m-1}$ (by part 3 of \pref{lm:beta-value}) and
since $\abs*{f(x,a)}\leq{}1$ for all $f\in\cF$,  for all $f\in\cF_m$ we have
\[
\pReg(\pi_f)=\cR_{\tau_{m-1}+1}^{\rm Dis}(\pi_{f^*})-\cR_{\tau_{m-1}+1}^{\rm Dis}(\pi_f)\le q_m\le q_{m-1}<\mu_{m-1}=\frac{64\log(4M/\delta)}{n_{m-2}}=\frac{128\log(4M/\delta)}{n_{m-1}}.
\]
Summing both cases, we have
$$
\pReg(\pi_f)\le
\frac{\eta_{m-1}}{\eta_m}\CostDis{\eta_{m-1}}\frac{8c_2^2\K\log(2|\cF|T^2/\delta)}{3{\crate}^2n_{m-1}}
+\frac{128\log(4M/\delta)}{n_{m-1}}.
$$
\end{proof}

We now solve the recurrence in \pref{lm:dis1} to obtain an absolute
upper bound on $\eta_m$ for each round.

\begin{lemma}\label{lm:dis2} Set $\beta_m=(M-m+1)C_\delta$ and $\mu_m=64\log(4M/\delta)/n_{m-1}$ for all $m\in[M]$.
Assume that both $\cE$ and $\cE_{\rm dp}$ hold. Fix any $\veps>0$. For every epoch $m\in[M]$, if $\eta_m>\veps$, then
$$
\eta_m\le\CostDis{\veps}\frac{16c_2^2\K\log(2|\cF|T^2/\delta)}{3{\crate}^2n_{m-1}}
+\frac{128\log(4M/\delta)}{n_{m-1}}.
$$

\end{lemma}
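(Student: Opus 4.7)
}
My plan is to prove the bound by induction on the epoch index $m$, iterating the recurrence from \pref{lm:dis1}. First I would set up the shorthand $C'_m \ldef \frac{8c_2^2\K\log(2|\cF|T^2/\delta)}{3c^2 n_{m-1}}$ and $D_m\ldef\frac{128\log(4M/\delta)}{n_{m-1}}$ so that the target bound reads $\eta_m\leq 2\CostDis{\veps}C'_m + D_m$. Since $\pref{lm:dis1}$ applies to every $f\in\cF_m$, taking the supremum over $f\in\cF_m$ and multiplying through by $\eta_m$ yields the quadratic recurrence
\begin{equation*}
\eta_m^{2} \;\leq\; \eta_{m-1}\,\CostDis{\eta_{m-1}}\,C'_m \;+\; \eta_m D_m.
\end{equation*}
Two structural facts feed into the induction: (i) by \pref{lm:beta-value} part 3 we have $\cF_m\subseteq\cF_{m-1}$, which gives the monotonicity $\eta_m\leq\eta_{m-1}$; and (ii) the doubling epoch schedule $n_{m-1}=2^{m-2}$ gives the identities $C'_{m-1}=2C'_m$ and $D_{m-1}=2D_m$, which are exactly what is needed to propagate the bound across epochs.

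Assume $\eta_m>\veps$. By monotonicity $\eta_{m-1}\geq\eta_m>\veps$, so the inductive hypothesis applies at index $m-1$ and gives $\eta_{m-1}\leq 2\CostDis{\veps}C'_{m-1}+D_{m-1}=4\CostDis{\veps}C'_m+2D_m$. Moreover, since the map $\veps\mapsto\CostDis{\veps}$ is non-increasing, $\CostDis{\eta_{m-1}}\leq\CostDis{\veps}$. Plugging these two facts into the recurrence and letting $B\ldef 2\CostDis{\veps}C'_m$, I get
\begin{equation*}
\eta_m(\eta_m-D_m) \;\leq\; (4\CostDis{\veps}C'_m+2D_m)\,\CostDis{\veps}C'_m \;=\; B(B+D_m).
\end{equation*}
The conclusion $\eta_m\leq B+D_m$ then follows by a one-line contradiction argument: if $\eta_m>B+D_m$ then simultaneously $\eta_m-D_m>B$ and $\eta_m>B+D_m$, whose product strictly exceeds $B(B+D_m)$, contradicting the displayed inequality. (If $\eta_m\leq D_m$, the conclusion is immediate.)

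For the base case $m=1$, the convention $n_0=1/2$ makes $C'_1$ a large constant ($\Theta(A\log|\cF|)$), so $2\CostDis{\veps}C'_1+D_1\geq 1\geq\eta_1$ holds trivially because rewards lie in $[0,1]$. The only mildly delicate point in the argument is making sure the induction carries in the correct regime, namely verifying that $\eta_{m-1}>\veps$ before invoking the inductive hypothesis and that the hypothesis is stated in the conditional ``if $\eta_{m-1}>\veps$'' form. I expect the main obstacle to be merely bookkeeping: tracking the factors of two introduced by the doubling schedule when passing from $(C'_{m-1},D_{m-1})$ to $(C'_m,D_m)$ and checking that after substitution the constants on the right-hand side compress into exactly the factor of $2$ that appears in the lemma statement (i.e., $16/3$ versus $8/3$). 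No additional concentration or algorithm-specific argument beyond what \pref{lm:dis1} already provides is needed.
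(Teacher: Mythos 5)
Your proof is correct, and it takes a genuinely different route from the paper's. The paper iterates the recurrence from \pref{lm:dis1} via a case split on whether $\eta_m \geq \tfrac{1}{2}\eta_{m-1}$: in the first case it bounds $\eta_{m-1}/\eta_m\leq 2$ and applies \pref{lm:dis1} directly (without the inductive hypothesis); in the second case it uses only the inductive hypothesis and the halving $n_{m-1}=2n_{m-2}$ to propagate the bound (without applying \pref{lm:dis1} at epoch $m$). You instead multiply the \pref{lm:dis1} inequality by $\eta_m$, feed the inductive hypothesis and the monotonicity of $\CostDisA$ into the resulting quadratic $\eta_m(\eta_m - D_m)\le B(B+D_m)$, and conclude $\eta_m\le B+D_m$ by a short contradiction. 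Both arguments produce the same constant (the $16/3$ rather than $8/3$ comes from a lost factor of $2$ either way); the paper's case split is arguably a bit more mechanical to verify, while yours compresses the two cases into a single algebraic step and makes it clearer that the factor of $2$ is exactly the price of closing the induction across one doubling of the epoch length. Two small remarks: the contradiction step implicitly needs $\eta_m > D_m$, which you do flag; and the base case ultimately rests on $D_1 = 256\log(4M/\delta)\geq 256 > 1\geq\eta_1$ rather than on $C'_1$ being large (this matters because $\CostDis{\veps}$ could in principle be $0$), though your inclusion of the $D_1$ term means the claim as written is still valid.
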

\begin{proof}
We prove this result by induction. The hypothesis trivially holds for
$m=1$. Now assume that the hypothesis holds for $m-1$ where $m>1$. If $\eta_m\le\varepsilon$ then we are done. If $\eta_m>\varepsilon$, then by part 3 of \pref{lm:beta-value}, we have $\eta_{m-1}\ge\eta_{m}>\varepsilon$. We consider two cases.

\textbf{Case 1}: $\eta_m\ge \frac{1}{2}\eta_{m-1}$. In this case, by \pref{lm:dis1} we have
\begin{align*}\eta_m&\le
\frac{\eta_{m-1}}{\eta_m}\CostDis{\eta_{m-1}}\frac{8c_2^2\K\log(2|\cF|T^2/\delta)}{3{\crate}^2n_{m-1}}
+\frac{128\log(4M/\delta)}{n_{m-1}}\\
&\le\CostDis{\eta_{m-1}}\frac{16c_2^2\K\log(2|\cF|T^2/\delta)}{3{\crate}^2n_{m-1}}
+\frac{128\log(4M/\delta)}{n_{m-1}},
\end{align*}
and by  $\eta_{m-1}\ge\eta_{m}>\varepsilon$, we have
$$
\eta_m\le\CostDis{\veps}\frac{16c_2^2\K\log(2|\cF|T^2/\delta)}{3{\crate}^2n_{m-1}}
+\frac{128\log(4M/\delta)}{n_{m-1}}.
$$

\textbf{Case 2}: $\eta_m<\frac{1}{2}\eta_{m-1}$. Since $\eta_{m-1}\ge\eta_{m}>\varepsilon$, by the induction assumption we have
$$
\eta_{m-1}\le\CostDis{\veps}\frac{16c_2^2\K\log(2|\cF|T^2/\delta)}{3{\crate}^2n_{m-2}}
+\frac{128\log(4M/\delta)}{n_{m-2}},
$$
and by $\eta_m<\frac{1}{2}\eta_{m-1}$ we know that
\begin{align*}
\eta_m&<\frac{1}{2}\left(\CostDis{\veps}\frac{16c_2^2\K\log(2|\cF|T^2/\delta)}{3{\crate}^2n_{m-2}}
+\frac{128\log(4M/\delta)}{n_{m-2}}\right)\\
&=\CostDis{\veps}\frac{16c_2^2\K\log(2|\cF|T^2/\delta)}{3{\crate}^2n_{m-1}}
+\frac{128\log(4M/\delta)}{n_{m-1}},
\end{align*}
where we use that fact that $n_{m-1}=2n_{m-2}$.

Combining Case 1 and Case 2, we have that the hypothesis holds for
$m$, concluding the inductive proof.
\end{proof}

\begin{lemma}\label{lm:true-reg}
Assume that both $\cE$ and $\cE_{\rm dp}$ hold, and $f^*\in\cF_M\subset\cdots\subset\cF_1$. For every epoch $m\in[M]$,
$$
\sum_{\pi\in\Psi}Q_m(\pi)\pReg(\pi)\le(3+c_1)\exthq{m}\K/\gamma_m.
$$
\end{lemma}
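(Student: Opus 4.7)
The plan is to combine three ingredients already available: (i) the fact that $Q_m$ is supported on the data-dependent policy subspace $\Psi_m$, (ii) the first constraint of the disagreement-based implicit optimization problem (\pref{lm:iop-i}), which bounds $\sum_\pi Q_m(\pi)\wReg_t(\pi)$ in terms of $q_m$, and (iii) the inductive comparison between true regret and estimated regret (\pref{lm:regret-estimates}), which allows us to replace $\wReg_t(\pi)$ by $\pReg(\pi)$ up to the additive slack $c_1\exthq{m}\K/\gamma_m$.

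First I would fix any round $t$ in epoch $m$ (say $t=\tau_{m-1}+1$) and use the fact that $Q_m$ is supported on $\Psi_m$ to write
\[
\sum_{\pi\in\Psi}Q_m(\pi)\pReg(\pi)=\sum_{\pi\in\Psi_m}Q_m(\pi)\pReg(\pi).
\]
Applying \pref{lm:regret-estimates} pointwise on $\Psi_m$ gives
\[
\sum_{\pi\in\Psi_m}Q_m(\pi)\pReg(\pi)\le 2\sum_{\pi\in\Psi_m}Q_m(\pi)\wReg_t(\pi)+c_1\exthq{m}\K/\gamma_m,
\]
since $Q_m$ is a probability distribution. Then invoking \pref{eq:op111} bounds the first sum by $2q_m\K/\gamma_m$.

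The final step is to replace $q_m$ by $\exthq{m}$ using the event $\cE_{\rm dp}$. Indeed $q_m\le\extq{m}=q_m+\mu_m$ by definition, and from \pref{lm:sandwich} we have $\extq{m}\le\tfrac{3}{2}\exthq{m}$, so $q_m\le\tfrac{3}{2}\exthq{m}$. Therefore
\[
\sum_{\pi\in\Psi}Q_m(\pi)\pReg(\pi)\le 2\cdot\tfrac{3}{2}\exthq{m}\K/\gamma_m+c_1\exthq{m}\K/\gamma_m=(3+c_1)\exthq{m}\K/\gamma_m,
\]
which is the claim. There is no serious obstacle here—all three inputs have already been established earlier in the section, so this lemma is essentially a packaging of the IOP constraint together with the regret/estimated-regret conversion, with a minor conversion from $q_m$ to $\exthq{m}$ using the high-probability disagreement-probability event.
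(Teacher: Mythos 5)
Your proposal is correct and is essentially identical to the paper's proof: both restrict to $\Psi_m$, apply \pref{lm:regret-estimates} at round $t=\tau_{m-1}+1$ to pass from $\pReg$ to $\wReg$, invoke \pref{eq:op111} to bound the weighted $\wReg$ sum by $q_m\K/\gamma_m$, and finally use $q_m\le\extq{m}\le\tfrac{3}{2}\exthq{m}$ from $\cE_{\rm dp}$ to conclude.
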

\begin{proof}
Fix any epoch $m\in\N$. Since $\tau_{m-1}+1$ belongs to epoch $m$, we have
\begin{align*}
    \sum_{\pi\in\Psi}Q_m(\pi)\pReg(\pi)&\le\sum_{\pi\in\Psi_m}Q_m(\pi)\left(2{\wReg}_{\tau_{m-1}+1}(\pi)+\frac{c_1\exthq{m}\K}{\gamma_{m}}\right)\\
    &=2\sum_{\pi\in\Psi_m}Q_m(\pi){\wReg}_{\tau_{m-1}+1}(\pi)+\frac{c_1\exthq{m}0\K}{\gamma_m}\\
    &\le\frac{(2q_m+c_1\exthq{m})\K}{\gamma_m}\\
    &\le\frac{(3+c_1)\exthq{m}\K}{\gamma_m}
\end{align*}
where the first inequality follows from \pref{lm:regret-estimates},
the second inequality follows from \pref{eq:op111}, and the third
inequality follows from $q_m\le\extq{m}\le\frac{3}{2}\exthq{m}$ by \pref{lm:sandwich}.
\end{proof}

\begin{lemma}\label{lm:true-reg-dis}
Set $\beta_m=(M-m+1)C_\delta$ and $\mu_m=64\log(4M/\delta)/n_{m-1}$ for all $m\in[M]$.
Assume that both $\cE$ and $\cE_{\rm dp}$ hold. Fix any $\veps>0$. For every epoch $m\in[M]$,
$$
\sum_{\pi\in\Psi}Q_m(\pi)\pReg(\pi)\le\max\left\{\varepsilon, \CostDis{\veps}\frac{32c_2^2\K\log(2|\cF|T^2/\delta)}{3{\crate}^2n_{m-1}}
\right\}+\frac{256\log(4M/\delta)}{n_{m-1}}.
$$
\end{lemma}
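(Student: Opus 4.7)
The plan is to combine Lemma~\ref*{lm:true-reg} (which gives $\sum_\pi Q_m(\pi)\pReg(\pi)\leq(3+c_1)\exthq{m}\K/\gamma_m$) with the self-referential bound on $\eta_{m-1}$ from Lemma~\ref*{lm:dis2}, bridging them via the cost-sensitive disagreement coefficient. Substituting the adaptive learning rate $\lambda_m=\exthq{m}/\sqrt{\exthq{m-1}}$ into $\gamma_m$, the factor $\exthq{m}$ cancels and Lemma~\ref*{lm:true-reg} becomes
\[
\sum_\pi Q_m(\pi)\pReg(\pi)\le(3+c_1)\sqrt{\exthq{m-1}\cdot\tfrac{\K\log(2|\cF|T^2/\delta)}{\crate^2 n_{m-1}}}.
\]
Writing $R=\K\log(2|\cF|T^2/\delta)/(\crate^2 n_{m-1})$, the task reduces to bounding $\sqrt{R\,\exthq{m-1}}$.

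Next, I would estimate $\exthq{m-1}$ via the disagreement coefficient. Event $\cE_{\rm dp}$ gives $\exthq{m-1}\leq \tfrac{4}{3}(q_{m-1}+\mu_{m-1})$. Since every $\pi_f$ with $f\in\cF_{m-1}$ lies in $\Pi^{\csc}_{\eta_{m-1}}$, we have $q_{m-1}\leq\bbP_\cD(x\in\Dis(\Pi^{\csc}_{\eta_{m-1}}))$, and the monotonicity of $\Pi^{\csc}_{(\cdot)}$ combined with the definition of $\CostDis{\veps}$ yields $q_{m-1}\leq\CostDis{\veps}\max(\eta_{m-1},\veps)$. I would then split into two cases.

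In Case A ($\eta_{m-1}\leq\veps$), $q_{m-1}\leq\CostDis{\veps}\veps$, so $\sqrt{R\,\exthq{m-1}}\leq\sqrt{\tfrac{4}{3}R\CostDis{\veps}\veps}+\sqrt{\tfrac{4}{3}R\mu_{m-1}}$. The first term is handled by the elementary inequality $\sqrt{ab}\leq\max(a,b)$ with $a=\veps$ and $b=\CostDis{\veps}R$, which yields $\sqrt{R\CostDis{\veps}\veps}\leq\max(\veps,\CostDis{\veps}R)$; the second term is absorbed into the $O(\log(4M/\delta)/n_{m-1})$ additive term via AM--GM. In Case B ($\eta_{m-1}>\veps$), Lemma~\ref*{lm:dis2} gives $\eta_{m-1}\leq \CostDis{\veps}\tfrac{32c_2^2 R}{3}+\tfrac{256\log(4M/\delta)}{n_{m-1}}$ (the extra factor of $2$ comes from $n_{m-2}=n_{m-1}/2$), so $q_{m-1}\leq\CostDis{\veps}\eta_{m-1}$ contains a $\CostDis{\veps}^2R$ contribution and a $\CostDis{\veps}\log/n_{m-1}$ contribution. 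Taking the square root of $R$ times the first gives exactly the linear $\CostDis{\veps}R$ term; the cross term $\sqrt{R\cdot\CostDis{\veps}\log/n_{m-1}}$ is handled by the weighted AM--GM $\sqrt{ab}\leq \tfrac{1}{2}(ta+b/t)$ with $t=\CostDis{\veps}$, which separates it cleanly into a $\CostDis{\veps}R$ piece and a $\log/n_{m-1}$ piece.

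The main obstacle is the square root inherited from Lemma~\ref*{lm:true-reg}: we must retain linear (not square-root) dependence on $\CostDis{\veps}$ in the final bound. The key algebraic observation enabling this is that $\CostDis{\veps}$ enters twice---once through $q_{m-1}\leq\CostDis{\veps}\eta_{m-1}$, and once inside Lemma~\ref*{lm:dis2}'s bound on $\eta_{m-1}$---so that $R\,\exthq{m-1}$ contains a $\CostDis{\veps}^2 R^2$ term whose square root is precisely the desired $\CostDis{\veps}R$. The inequality $\sqrt{R\CostDis{\veps}\veps}\leq\max(\veps,\CostDis{\veps}R)$ is what produces the ``$\max$'' structure on the right-hand side. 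After collecting constants from the two cases, the stated bound follows.
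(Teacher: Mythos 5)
Your reduction to bounding $(3+c_1)\sqrt{R\,\exthq{m-1}}$ (where $R=\K\log(2|\cF|T^2/\delta)/(\crate^2 n_{m-1})$) by substituting the \optionone{} learning rate is a correct first step, and the way you factor $\CostDis{\veps}^2 R^2$ through the square root in Case~B to recover linear dependence on $\CostDis{\veps}$ is a legitimate observation. However, there is a genuine gap: in both of your cases you invoke $\cE_{\rm dp}$ to write $\exthq{m-1}\le\tfrac{4}{3}(q_{m-1}+\mu_{m-1})$, and the resulting cross term $\sqrt{R\,\mu_{m-1}}$ cannot be ``absorbed into the $O(\log(4M/\delta)/n_{m-1})$ additive term via AM--GM'' as you claim. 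Concretely, $R\,\mu_{m-1}\propto \K\log(2|\cF|T^2/\delta)\log(4M/\delta)/n_{m-1}^2$, so $\sqrt{R\,\mu_{m-1}}\propto\sqrt{\K\log(2|\cF|T^2/\delta)\log(4M/\delta)}/n_{m-1}$, which exceeds $256\log(4M/\delta)/n_{m-1}$ by a factor of $\sqrt{\K\log|\cF|/\log(4M/\delta)}$. AM--GM merely trades this off into a bare $\lambda R$ term (for some $\lambda>0$) plus a $\mu_{m-1}/\lambda$ term; the former carries no factor of $\CostDis{\veps}$ and hence has no place to go in the $\max\{\veps,\CostDis{\veps}\cdot\mathrm{const}\cdot R\}$ structure of the target bound, since $\CostDis{\veps}$ and $\veps$ can both be arbitrarily small.

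The paper resolves this by adding one more case split that you are missing: on whether $q_{m-1}\ge\mu_{m-1}$. When $q_{m-1}\ge\mu_{m-1}$, one has $q_{m-1}+\mu_{m-1}\le 2q_{m-1}$, so the $\mu_{m-1}$ contribution is subsumed by the $q_{m-1}$ argument and never produces a free-standing $\sqrt{R\,\mu_{m-1}}$ term. When $q_{m-1}<\mu_{m-1}$, the paper \emph{abandons} the $(3+c_1)\sqrt{R\,\exthq{m-1}}$ route entirely and instead uses the structural identity from \cref{lm:reward-dis}: since $Q_m$ is supported on $\Psi_m$ and policies in $\Psi_m$ agree with $\pistar$ whenever $|\cA(x;\cF_m)|=1$, one gets the direct bound $\sum_{\pi\in\Psi}Q_m(\pi)\pReg(\pi)=\sum_{\pi\in\Psi_m}Q_m(\pi)\pReg(\pi)\le q_m\le q_{m-1}<\mu_{m-1}=128\log(4M/\delta)/n_{m-1}$, which falls within the additive term. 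Without this second case your bound degrades by a polynomial factor in $\K$ and $\log|\cF|$. A secondary (stylistic) difference worth noting: in the regime where the square-root bound is used, the paper avoids going through \cref{lm:dis2} in one sub-case by dividing both sides of the squared inequality by $r_m\ldef\sum_\pi Q_m(\pi)\pReg(\pi)$ and applying the disagreement coefficient at radius $r_m$ itself; your route through $q_{m-1}\le\CostDis{\veps}\eta_{m-1}$ and \cref{lm:dis2} works for that sub-case too, but the constant bookkeeping is heavier and will not recover the stated $256$ in the $\log$ term.
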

\begin{proof} The result trivially holds for $m=1$.

For $m>1$, we consider several cases.

\textbf{Case 1: $q_{m-1}\ge\mu_{m-1}$.}
By \pref{lm:true-reg}, \pref{eq:event-dp}, and
$q_{m-1}\le\Prob_{\cD}(x\in{\rm{Dis}}(\Pi_{\eta_{m-1}}^{\csc}))$,
using the definition of $\gamma_m$ we have
\begin{align*}
\left(\sum_{\pi\in\Psi}Q_m(\pi)\pReg(\pi)\right)^2&\le\left(\frac{(c_1+3)\exthq{m}\K}{\gamma_m}\right)\left(\frac{(c_1+3)\exthq{m}\K}{\gamma_m}\right)\\
&=\exthq{m-1}\frac{(c_1+3)^2\K\log(2|\cF|T^2/\delta)}{{\crate}^2n_{m-1}}\\
&\le\frac{4}{3}\extq{m-1}\frac{(c_1+3)^2\K\log(2|\cF|T^2/\delta)}{{\crate}^2n_{m-1}}\\
&={(q_{m-1}+\mu_{m-1})}\frac{4(c_1+3)^2A\log(2|\cF|T^2/\delta)}{3{\crate}^2n_{m-1}}\\
&\le\Prob_{\cD}(x\in{\rm{Dis}}(\Pi_{\eta_{m-1}}^{\csc}))\frac{8(c_1+3)^2A\log(2|\cF|T^2/\delta)}{3{\crate}^2n_{m-1}},
\end{align*}
thus
\begin{align}\label{eq:true-reg-dis}
\sum_{\pi\in\Psi}Q_m(\pi)\pReg(\pi)\le\frac{\Prob_{\cD}({x\in\rm{Dis}}(\Pi_{\eta_{m-1}}^{\csc}))}{\sum_{\pi\in\Psi}Q_m(\pi)\pReg(\pi)}\frac{8(c_1+3)^2\K\log(2|\cF|T^2/\delta)}{3{\crate}^2n_{m-1}}.%
\end{align}

\textbf{Case 1.1: ${\sum_{\pi\in\Psi}Q_m(\pi)\pReg(\pi)}>\eta_{m-1}$.}
In this case, $\Pi^{\csc}_{\eta_{m-1}}\subset\Pi^{\csc}_{r_m}$, where
$r_m\ldef\sum_{\pi\in\Psi}Q_m(\pi)\pReg(\pi)$, and \pref{eq:true-reg-dis} implies that
$$
\sum_{\pi\in\Psi}Q_m(\pi)\pReg(\pi)\le\CostDisA\left(\Pi,r_m\right)\frac{8(c_1+3)^2\K\log(2|\cF|T^2/\delta)}{3{\crate}^2n_{m-1}}.
$$
If $\sum_{\pi\in\Psi}Q_m(\pi)\pReg(\pi)>\veps$, then
$$
\sum_{\pi\in\Psi}Q_m(\pi)\pReg(\pi)<\CostDis{\veps}\frac{8(c_1+3)^2\K\log(2|\cF|T^2/\delta)}{3{\crate}^2n_{m-1}}.
$$
Otherwise, $\sum_{\pi\in\Psi}Q_m(\pi)\pReg(\pi)\le\veps$.

\textbf{Case 1.2:
  ${\sum_{\pi\in\Psi}Q_m(\pi)\pReg(\pi)}\le\eta_{m-1}$.} In this case,
we apply \pref{lm:dis2}, which gives
$$
\sum_{\pi\in\Psi}Q_m(\pi)\pReg(\pi)\le \eta_{m-1}\le \max\left\{\varepsilon, \CostDis{\veps}\frac{16c_2^2\K\log(2|\cF|T^2/\delta)}{3{\crate}^2n_{m-2}}
+\frac{128\log(4M/\delta)}{n_{m-2}}\right\}.
$$

\textbf{Case 2: $q_{m-1}<\mu_{m-1}$.} By Part 3 of
\pref{lm:beta-value} and \pref{lm:reward-dis}, and using that
$\abs*{\fstar(x,a)}\leq{}1$, we have
\[
\sum_{\pi\in\Psi}Q_m(\pi)\pReg(\pi)=\sum_{\pi\in\Psi_m}Q_m(\pi)\pReg(\pi)\le q_m\le q_{m-1}<\mu_{m-1}=\frac{64\log(4M/\delta)}{n_{m-2}}.
\]

Combining all the cases above, we have
\begin{align*}
    \sum_{\pi\in\Psi}Q_m(\pi)\pReg(\pi)&\le \max\left\{\varepsilon, \CostDis{\veps}\frac{16c_2^2\K\log(2|\cF|T^2/\delta)}{3{\crate}^2n_{m-2}}
\right\}+\frac{128\log(4M/\delta)}{n_{m-2}}\\
&=\max\left\{\varepsilon, \CostDis{\veps}\frac{32c_2^2\K\log(2|\cF|T^2/\delta)}{3{\crate}^2n_{m-1}}
\right\}+\frac{256\log(4M/\delta)}{n_{m-1}}.
\end{align*}

\end{proof}

\begin{lemma}\label{lm:final}
For any $\delta\in(0,1]$, $\crate>0$, by setting $\beta_m=(M-m+1)C_{\delta}$ and $\mu_m=64\log(4M/\delta)/n_{m-1}$ for all $m\in[M]$, \pref{alg:main} with \optionone ensures that for every instance,
\[
\Regbar= \bigoht{({\crate}^2+{\crate}^{-2})}\cdot\min_{\veps>0}\max\left\{\varepsilon T, \CostDis{\veps}{\K\log(|\cF|/\delta)}\right\}+\bigoht(\log(1/\delta))
\]
with probability at least $1-\delta$.
\end{lemma}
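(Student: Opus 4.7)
The plan is to assemble the already-established per-epoch bound (\pref{lm:true-reg-dis}) into a horizon-level statement using only a union bound and the doubling epoch schedule. I would proceed as follows.

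First, I would set up the high-probability event. By \pref{lm:high-prob-event} and \pref{lm:sandwich}, the events $\cE$ and $\cEdp$ each hold with probability at least $1-\delta/2$, so by a union bound both hold simultaneously with probability at least $1-\delta$. On $\cE$, the choice $\beta_m=(M-m+1)C_\delta$ together with part 3 of \pref{lm:beta-value} guarantees $\fstar\in\cF_M\subset\cdots\subset\cF_1$, which is exactly the running hypothesis needed to invoke the analysis in \pref{appsub:policy}--\pref{appsub:relate}.

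Next, I would rewrite $\Regbar$ as a sum over epochs. For any round $t$ in epoch $m$, Lemma 4 of \cite{simchi2020bypassing} (used earlier in \pref{appsub:policy}) gives $\En[r_t(\pistar(x_t))-r_t(a_t)\mid\gfilt_{t-1}]=\sum_{\pi\in\Psi}Q_m(\pi)\pReg(\pi)$, so
\[
\Regbar=\sum_{m=1}^{M}n_m\sum_{\pi\in\Psi}Q_m(\pi)\pReg(\pi).
\]
Epoch $m=1$ contributes at most $n_1=2$, which is absorbed as a constant. For $m\geq 2$, \pref{lm:true-reg-dis} (applied with the chosen $\mu_m$) yields, for every fixed $\veps>0$,
\[
\sum_{\pi\in\Psi}Q_m(\pi)\pReg(\pi)\le\max\crl*{\veps,\;\CostDis{\veps}\tfrac{32 c_2^{2}A\log(2|\cF|T^2/\delta)}{3\crate^{2}n_{m-1}}}+\tfrac{256\log(4M/\delta)}{n_{m-1}}.
\]

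The third step is to sum over $m$, exploiting $n_m=2n_{m-1}$. Using $\max\{a,b\}\leq a+b$ termwise and then $a+b\leq 2\max\{a,b\}$ at the end, I would bound
\[
\sum_{m=1}^{M}n_m\veps\leq 2\veps T,\qquad \sum_{m=2}^{M}\tfrac{n_m}{n_{m-1}}=2(M-1),\qquad \sum_{m=2}^{M}\tfrac{n_m}{n_{m-1}}\cdot 256\log(4M/\delta)=\bigoh(M\log(M/\delta)).
\]
Combining these gives
\[
\Regbar\le 2\max\crl*{2\veps T,\;\tfrac{64Mc_2^{2}}{3\crate^{2}}\CostDis{\veps}A\log(2|\cF|T^2/\delta)}+\bigoh(M\log(M/\delta)).
\]

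Finally, I would collect the $\crate$-dependence. Recalling $c_1=200\crate^{2}+3$ and $c_2=(2M+1)c_1$, one obtains $c_2^{2}/\crate^{2}=\bigoh(M^{2}(\crate^{2}+\crate^{-2}))$. Absorbing $M=\lceil\log_2 T\rceil$ and the $\log(2|\cF|T^2/\delta)$ factor into $\bigoht(\cdot)$ yields
\[
\Regbar\le \bigoht(\crate^{2}+\crate^{-2})\cdot\max\crl*{\veps T,\;\CostDis{\veps}A\log(|\cF|/\delta)}+\bigoht(\log(1/\delta)),
\]
and since $\veps$ is arbitrary, taking the minimum over $\veps>0$ gives the claim. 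The proof is therefore almost entirely bookkeeping; the main (minor) obstacle is being careful about (i) the boundary epoch $m=1$ where $n_0=1/2$ and the definition of $\lambda_1$ differ from the recursive case, and (ii) tracking the polynomial in $\crate$ through the chain $c_1\mapsto c_2\mapsto c_2^{2}/\crate^{2}$ to land exactly on the stated $\crate^{2}+\crate^{-2}$ scaling rather than a looser bound.
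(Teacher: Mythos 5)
Your proposal is correct and follows essentially the same path as the paper: condition on $\cE\cap\cEdp$ via \pref{lm:high-prob-event}, \pref{lm:beta-value}, \pref{lm:sandwich}; decompose $\Regbar$ epoch-by-epoch using the identity $\En[r_t(\pistar(x_t))-r_t(a_t)\mid\gfilt_{t-1}]=\sum_{\pi}Q_m(\pi)\pReg(\pi)$; invoke \pref{lm:true-reg-dis}; sum using $n_m=2n_{m-1}$; and track $c_2^2/\crate^2=\bigoht(\crate^2+\crate^{-2})$. The only cosmetic differences are that the paper keeps the $\max$ inside the sum rather than passing through $a+b$, and it lets \pref{lm:true-reg-dis} absorb $m=1$ (where it holds trivially since $512\log(4M/\delta)\ge 1$) rather than peeling it off.
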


\begin{proof}
By \pref{lm:high-prob-event}, \pref{lm:beta-value} and
\pref{lm:sandwich}, the choice of $\{\beta_m\}_{m=1}^M$ and
$\{\mu_m\}_{m=1}^{M}$ ensures that $\cE$ and $\cE_{\rm dp}$
simultaneously hold with probability at least $1-\delta$, and $f^*\in\cF_M\subset\cdots\subset\cF_1$. By
\pref{lm:true-reg-dis}, conditional on the occurrence of $\cE$ and
$\cE_{\rm dp}$, for any $\veps>0$, we have 
\begin{align*}
\Regbar&=\sum_{t=1}^{T}\En\brk*{r_t(\pistar(x_t))-r_t(a_t)\mid\gfilt_{t-1}}\\
&\le\sum_{m=1}^M\sum_{t=\tau_{m-1}+1}^{\tau_m}\En\brk*{r_t(\pistar(x_t))-r_t(a_t)\mid\gfilt_{\tau_{m-1}}}\\
&=\sum_{m=1}^M\sum_{t=\tau_{m-1}+1}^{\tau_m}\sum_{\pi\in\Psi}Q_m(\pi)\pReg(\pi)\\
&\le\sum_{m=1}^M n_m\cdot \left(\max\left\{\varepsilon, \CostDis{\veps}\frac{32c_2^2\K\log(2|\cF|T^2/\delta)}{3{\crate}^2n_{m-1}}
\right\}+\frac{256\log(4M/\delta)}{n_{m-1}}\right)\\
&=\sum_{m=1}^M\left(\max\left\{\varepsilon n_m, \CostDis{\veps}\frac{64c_2^2\K\log(2|\cF|T^2/\delta)}{3{\crate}^2}
\right\}+{512\log(4M/\delta)}\right)\\
&\le M\cdot\max\left\{2\varepsilon T, \CostDis{\veps}\frac{64c_2^2\K\log(2|\cF|T^2/\delta)}{3{\crate}^2}
\right\}+512M\log(4M/\delta)\\
&=\bigoht{(c_2^2/{\crate}^2)}\cdot\max\left\{\varepsilon T, \CostDis{\veps}{\K\log(|\cF|/\delta)}\right\}+\bigoht(\log(1/\delta))\\
&=\bigoht{({\crate}^2+{\crate}^{-2})}\cdot\max\left\{\varepsilon T, \CostDis{\veps}{\K\log(|\cF|/\delta)}\right\}+\bigoht(\log(1/\delta)).
\end{align*}
Thus
\[
\Regbar= \bigoht{({\crate}^2+{\crate}^{-2})}\cdot\min_{\veps>0}\max\left\{\varepsilon T, \CostDis{\veps}{\K\log(|\cF|/\delta)}\right\}+\bigoht(\log(1/\delta))
\]
with probability at least $1-\delta$.
\end{proof}

\begin{proof}[\pfref{thm:disagreement_ub}]
  The $\CostDisA$-based upper bound in \pref{thm:disagreement_ub} can
  be directly obtained from \pref{lm:final}: By taking $\delta=1/T$
  and ${\crate}=1$, we have
  \[
    \E\left[\Reg\right]=\E\left[\Regbar\right]=\bigoht{(1)}\cdot\min_{\veps>0}\max\left\{\varepsilon
      T, \CostDis{\veps}{\K\log|\cF|}\right\}+\bigoht(1).
  \]
  The $\PolicyDisA$-based upper bound (under the uniform gap
  assumption) in \pref{thm:disagreement_ub} is an immediate corollary of
  the $\CostDisA$-based upper bound.
\end{proof}

\subsection{Proof of \cref*{thm:value_disagreement_ub}}\label{appsub:cbub-v}
We now prove that 
\mainalg with \optiontwo, i.e. with
\[
\lambda_m=\1\left\{\widehat{w}_m\ge\frac{\sqrt{AT\log(|\cF|/\delta)}}{n_{m-1}}\right\}%
\]
for all $m\in[M]$, attains the regret bound in \pref{thm:value_disagreement_ub}.

\begin{lemma}\label{lm:delta-bounds-true}
Assume that $\cE$ holds and $f^*\in\cF_m$ for all $m\in[M]$. Then for
all epoch $m\in[M]$ and  all rounds $t$ in epoch $m$,
\[
\En\brk*{r_t(\pistar(x_t))-r_t(a_t)\mid\gfilt_{t-1}}\le2w_m.
\]
\end{lemma}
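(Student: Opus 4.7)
The quantity $w(x;\cF_m)$ measures a two-sided confidence width over the candidate action set, and the claim should follow from a routine ``optimism-pessimism'' argument that exploits two facts: (i) both $\pistar(x_t)$ and the algorithm's played action $a_t$ live inside $\cA(x_t;\cF_m)$, and (ii) for every action in the candidate set there is a witness regressor in $\cF_m$ achieving the greedy choice of that action. Concretely, since $\fstar\in\cF_m$ by hypothesis, $\pistar(x_t)=\pi_{\fstar}(x_t)\in\cA(x_t;\cF_m)$; and by construction of $p_t=\igw_{\cA_t,\gamma_m}(x_t;\fhat_m)$ in \pref{line:igw}, $a_t$ is supported on $\cA_t=\cA(x_t;\cF_m)$. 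Therefore, conditionally on $\gfilt_{t-1}$ (which makes $\cF_m$ measurable since $\cF_m$ is $\gfilt_{\tau_{m-1}}$-measurable and $\tau_{m-1}\le{}t-1$), the instantaneous regret equals
\[
\En\brk*{\fstar(x_t,\pistar(x_t))-\fstar(x_t,a_t)\mid\gfilt_{t-1}}
=\En_{x_t\sim\cD}\En_{a\sim p_t}\brk*{\fstar(x_t,\pistar(x_t))-\fstar(x_t,a)\mid\gfilt_{t-1}},
\]
with the inner expectation supported on $\cA(x_t;\cF_m)$.

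Next I would do the pointwise bound. If $|\cA(x_t;\cF_m)|=1$, then the unique candidate must be $\pistar(x_t)$, so $a_t=\pistar(x_t)$ almost surely and the integrand vanishes; this matches the indicator factor in the definition of $w(x_t;\cF_m)$. If $|\cA(x_t;\cF_m)|>1$, pick any $a\in\cA(x_t;\cF_m)$ and, by definition of the candidate set, choose $f\in\cF_m$ with $\pi_f(x_t)=a$, so that $f(x_t,\pistar(x_t))\le f(x_t,a)$. Then by adding and subtracting $f$,
\[
\fstar(x_t,\pistar(x_t))-\fstar(x_t,a)\le \bigl|\fstar(x_t,\pistar(x_t))-f(x_t,\pistar(x_t))\bigr|+\bigl|f(x_t,a)-\fstar(x_t,a)\bigr|.
\]
Since $\fstar,f\in\cF_m$ and both $\pistar(x_t),a\in\cA(x_t;\cF_m)$, each absolute-value term is at most $\sup_{f,f'\in\cF_m}|f(x_t,\cdot)-f'(x_t,\cdot)|$ evaluated inside the $\max_{a\in\cA(x_t;\cF_m)}$ defining $w(x_t;\cF_m)$, so their sum is bounded by $2w(x_t;\cF_m)$. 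Uniformity over the choice of $a$ gives the same bound after taking $\En_{a\sim p_t}[\cdot]$.

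Combining the two cases yields pointwise
\[
\En_{a\sim p_t}\brk*{\fstar(x_t,\pistar(x_t))-\fstar(x_t,a)\mid\gfilt_{t-1}}\le 2\,w(x_t;\cF_m),
\]
and taking expectation over $x_t\sim\cD$ gives $2\En_{x\sim\cD}\brk*{w(x;\cF_m)}=2w_m$, which is the claim. The only potentially subtle step is step (iii), i.e.\ the ``there exists $f\in\cF_m$ realising each candidate action'' fact, but this is immediate from the definition of $\cA(x;\cF_m)$ and makes the proof almost entirely mechanical; no use of the event $\cE$ (beyond what gives $\fstar\in\cF_m$) or of concentration is needed.
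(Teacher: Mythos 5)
Your proof is correct and follows essentially the same route as the paper: split on whether $|\cA(x_t;\cF_m)|=1$ or $>1$, use $\fstar\in\cF_m$ to place $\pistar(x_t)$ in the candidate set, and in the nontrivial case pick a witness $f\in\cF_m$ with $\pi_f(x_t)=a$ and apply the triangle inequality to bound $\fstar(x_t,\pistar(x_t))-\fstar(x_t,a)$ by $2w(x_t;\cF_m)$. The intermediate algebra differs cosmetically (the paper adds the nonnegative quantity $f^{\circ}(x,\pi_{f^\circ}(x))-f^\circ(x,\pi_{f^*}(x))$ rather than adding and subtracting $f$ termwise), but this is the same move; your parenthetical that $\cE$ is not used beyond supplying $\fstar\in\cF_m$ also matches the paper, where the event appears in the hypothesis only for uniformity with surrounding lemmas.
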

\begin{proof}
Consider any epoch $m\in[M]$, any round $t$ in epoch $m$. 
We have
\begin{align*}
\En\brk*{r_t(\pistar(x_t))-r_t(a_t)\mid\gfilt_{t-1}}=\En_{x\sim\cD,a\sim p_m(\cdot \mid x)}[f^*(x,\pi^*(x))-f^*(x,a)]
\end{align*}
For all $x$ such that $|\cA(x;\cF_m)|=1$, since $f^*\in\cF_m$, we have $p_m(\pi^*(x)|x)=1$ and $w(x;\cF_m)=0$, thus
$$
\En_{a\sim p_m(\cdot \mid x)}[f^*(x,\pi^*(x))-f^*(x,a)|x]=0=2w(x;\cF_m).
$$
For all $x$ such that $|\cA(x;\cF_m)|>1$, for all $a\in\cA(x;\cF_m)$, there exists $f^{\circ}\in\cF_m$ such that $\pi_{f^\circ}(x)=a$, thus
\begin{align*}
    f^*(x,\pi^*(x))-f^*(x,a)&= f^*(x,\pi_{f^*}(x))-f^*(x,\pi_{f^\circ}(x))\\
    &\le  f^*(x,\pi_{f^*}(x))-f^*(x,\pi_{f^\circ}(x)) + f^{\circ}(x,\pi_{f^\circ}(x))-f^\circ(x,\pi_{f^*}(x))\\
    &= |f^{\circ}(x,\pi_{f^\circ}(x))-f^*(x,\pi_{f^\circ}(x))|+|f^\circ(x,\pi_{f^*}(x))-f^*(x,\pi_{f^*}(x))|\\
    &\le 2\sup_{a\in \cA(x;\cF_m)}\sup_{f,f'\in\cF_m} \left|f(x,a)-f'(x,a)\right|\\
    &=2w(x,\cF_m).
\end{align*}
Hence for all $x$ such that $|\cA(x;\cF_m)|>1$, 
$$
\En_{a\sim p_m(\cdot \mid x)}[f^*(x,\pi^*(x))-f^*(x,a)\mid x]\le \sup_{a\in\cA(x;\cF_m)}[f^*(x,\pi^*(x))-f^*(x,a)]\le2w(x;\cF_m).
$$
Putting both cases together, we have
\[
\En_{a\sim p_m(\cdot \mid x)}[f^*(x,\pi^*(x))-f^*(x,a)\mid x]\le2w(x.\cF_m),~~\forall x\in\cX
\]
and
\begin{align*}
    \En\brk*{r_t(\pistar(x_t))-r_t(a_t)\mid\gfilt_{t-1}}= \En_{x\sim\cD,a\sim p_m(\cdot \mid x)}[f^*(x,\pi^*(x))-f^*(x,a)]\le2\E_{x\sim\cD}[w(x;\cF_m)]=2w_m.
\end{align*}
\end{proof}

\subsubsection{Minimax Regret}

Before we derive sharp instance-dependent guarantees for \mainalg with \optiontwo, we first show that \optiontwo will never degrade the algorithm's worst-case performance, i.e., \mainalg with \optiontwo always guarantees the minimax rate $\bigoht(\sqrt{AT\log|\cF|})$.

\begin{corollary}\label{cor:iop-s}
For all epoch $m\in[M]$, all round $t$ in epoch $m$,  $Q_m(\cdot)$ is a feasible solution to:
\begin{align}
    \sum_{\pi\in\Psi_m}Q_m(\pi)\wReg_{t}(\pi)&\le {\K}/{\gamma_m},\notag\\
    \forall \pi\in\Psi_m,~~~\E_{x\sim\cD}\left[\frac{1}{p_m(\pi(x) \mid x)}\right]&\le A+\gamma_m\wReg_{t}(\pi).\label{eq:op-s2}
\end{align}
\end{corollary}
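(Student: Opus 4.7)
The corollary is a direct relaxation of Lemma \ref{lm:iop}, so the plan is essentially to invoke that lemma and then apply the trivial pointwise bound $|\cA(x;\cF_m)| \le |\cA| = A$. First I would recall from Lemma \ref{lm:iop} that $Q_m$ satisfies
\[
\sum_{\pi\in\Psi_m}Q_m(\pi)\wReg_{t}(\pi)\le \frac{\E_{x\sim\cD}\left[|\cA(x;\cF_m)|\right]-1}{\gamma_m}
\]
and
\[
\E_{x\sim\cD}\left[\frac{1}{p_m(\pi(x) \mid x)}\right]\le \E_{x\sim\cD}\left[|\cA(x;\cF_m)|\right]+\gamma_m\wReg_{t}(\pi)
\]
for every $\pi\in\Psi_m$. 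Since $\cA(x;\cF_m)\subseteq\cA$ for every $x\in\cX$, we have $|\cA(x;\cF_m)|\le A$ almost surely, hence $\E_{x\sim\cD}[|\cA(x;\cF_m)|]\le A$.

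Plugging this bound into the first inequality gives $\sum_{\pi\in\Psi_m}Q_m(\pi)\wReg_{t}(\pi)\le (A-1)/\gamma_m \le A/\gamma_m$, which is the first claim of the corollary. Plugging it into the second inequality gives $\E_{x\sim\cD}[1/p_m(\pi(x)\mid x)]\le A+\gamma_m\wReg_t(\pi)$, which is the second claim. No further work is required; in particular, there is no real obstacle here since the result is nothing more than recording the worst-case bound on the size of the candidate action set. The main reason to state this as a separate corollary is that the subsequent analysis of Option II (which does not adaptively shrink $\gamma_m$ via $\lambda_m$ proportional to a disagreement probability) needs the $A$-based constants rather than the data-dependent $\E[|\cA(x;\cF_m)|]$-based ones, so it is convenient to have the relaxed form on hand.
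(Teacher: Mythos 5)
Your proof is correct and matches the paper's, which simply states that the corollary is a direct consequence of Lemma~\ref{lm:iop}; you have just spelled out the one-line observation that $|\cA(x;\cF_m)|\le A$ pointwise, hence $\E_{x\sim\cD}[|\cA(x;\cF_m)|]\le A$, and substituted into the two inequalities of that lemma.
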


\begin{proof}
This is a direct corollary of \pref{lm:iop}.
\end{proof}

\begin{lemma}\label{lm:largest}
For any epoch $m\in[M]$, if $\gamma_m>0$, then $\gamma_m\ge\max\{\gamma_1,\dots,\gamma_{m-1}\}$.
\end{lemma}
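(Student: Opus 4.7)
The plan is a short, direct calculation based on the explicit formula for $\gamma_m$ under \optiontwo. Recall from the algorithm that $\gamma_m = \lambda_m \cdot c\sqrt{A n_{m-1}/\log(2|\cF|T^2/\delta)}$ with $\lambda_m \in \{0,1\}$ under \optiontwo, and that the epoch schedule gives $n_{m-1}$ non-decreasing in $m$ (indeed $n_0 = 1/2$ and $n_{k-1} = 2^{k-2}$ for $k \ge 2$, so $n_{k-1} \le n_{m-1}$ whenever $k \le m$).

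First I would observe that the hypothesis $\gamma_m > 0$ forces $\lambda_m = 1$, so that $\gamma_m = c\sqrt{A n_{m-1}/\log(2|\cF|T^2/\delta)}$. Next, for any $k < m$ I would split into two cases. If $\lambda_k = 0$, then $\gamma_k = 0 \le \gamma_m$ and there is nothing to show. If $\lambda_k = 1$, then $\gamma_k = c\sqrt{A n_{k-1}/\log(2|\cF|T^2/\delta)}$, and since $n_{k-1} \le n_{m-1}$ by monotonicity of the epoch schedule, the square root is monotone and $\gamma_k \le \gamma_m$. Taking the maximum over $k \in [m-1]$ finishes the argument.

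I do not expect any obstacle here: the proof is a one-line consequence of the explicit form of $\gamma_m$ and the monotonicity of $n_{m-1}$. The only thing to be careful about is the boundary case $m = 1$, where $\max\{\gamma_1,\ldots,\gamma_{m-1}\}$ is an empty maximum and the claim is vacuous, and the case $k = 1$ where $n_0 = 1/2$ is handled separately in the schedule but still satisfies $n_0 \le n_{m-1}$ for all $m \ge 1$.
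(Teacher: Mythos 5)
Your proposal is correct and takes essentially the same approach as the paper: the paper's proof also notes that $\gamma_m>0$ forces $\lambda_m=1$, then bounds $\gamma_{m'} = \lambda_{m'}\cdot c\sqrt{An_{m'-1}/\log(2|\cF|T^2/\delta)} \le c\sqrt{An_{m'-1}/\log(2|\cF|T^2/\delta)} \le \gamma_m$ using $\lambda_{m'}\le 1$ together with the monotonicity of $n_{m-1}$. Your explicit case split on $\lambda_k\in\{0,1\}$ is a minor cosmetic difference from the paper's one-line use of $\lambda_{m'}\le 1$, and your remarks on the boundary cases ($m=1$ vacuous, $n_0=1/2\le n_{m-1}$) are correct and in the spirit of the paper's argument.
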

\begin{proof}
If $\gamma_m>0$, then $\lambda_m=1$, thus $\gamma_m=c\sqrt{(A n_{m-1})/\log(2|\cF|T^2)/\delta}$. For any $m'\in\{1,\dots,m-1\}$,
\[
\gamma_{m'}=\lambda_{m'}\cdot c\sqrt{\frac{An_{m'-1}}{\log(2|\cF|T^2/\delta)}}\le c\sqrt{\frac{An_{m'-1}}{\log(2|\cF|T^2/\delta)}}\le\gamma_m.
\]
\end{proof}

\begin{lemma}\label{lm:minimax-vf}
Assume that $\cE$ holds, and $f^*\in\cF_m$ for all $m\in[M]$. Let $c_1\ldef{}200\crate^2+3$. For all epochs $m\in[M]$ such that $\gamma_m>0$,
\[
\sum_{\pi\in\Psi}Q_m(\pi)\pReg(\pi)\le\frac{(2+c_1)\K}{\gamma_m}.
\]
\end{lemma}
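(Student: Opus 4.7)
\textbf{Proof plan for \pref{lm:minimax-vf}.} The strategy mirrors the inductive ``sandwich'' argument of \pref{lm:regret-estimates} from the Option~I analysis, but replaces every disagreement-weighted quantity ($\exthq{m}$, $\1\{|\cA(x;\cF_m)|>1\}$, $\wcR^{\mathrm{Dis}}_t$) by its unweighted counterpart. The plan is to first prove, by induction on $m$, the auxiliary statement
\begin{equation}
\pReg(\pi)\le 2\wReg_t(\pi)+\frac{c_1 A}{\gamma_m},\qquad \wReg_t(\pi)\le 2\pReg(\pi)+\frac{c_1 A}{\gamma_m}\label{eq:sandwich-II}
\end{equation}
for every epoch $m$ with $\gamma_m>0$, every round $t$ in epoch $m$, and every $\pi\in\Psi_m$. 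Once \pref{eq:sandwich-II} is in hand, apply \pref{cor:iop-s} with $t=\tau_{m-1}+1$ to conclude
\[
\sum_{\pi\in\Psi}Q_m(\pi)\pReg(\pi)
\le 2\sum_{\pi\in\Psi_m}Q_m(\pi)\wReg_{\tau_{m-1}+1}(\pi) + \frac{c_1 A}{\gamma_m}
\le \frac{2A}{\gamma_m}+\frac{c_1A}{\gamma_m}=\frac{(2+c_1)A}{\gamma_m}.
\]

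For the induction, the base case handles any epoch $m$ where the trivial bound $1$ already dominates $c_1 A/\gamma_m$. For the inductive step, fix $m$ with $\gamma_m>0$ (so $\lambda_m=1$). The key deterministic inequality is the minimax analog of \pref{lm:reward-estimates}: combining the conditional second-moment bound \pref{eq:estimation-g} with Cauchy--Schwarz as in the derivation of \pref{lm:reward-estimates} but \emph{without} the disagreement indicator gives, for every $\pi\in\Psi_m$,
\[
|\wcR_t(\pi)-\cR_t(\pi)|\le 4\crate\sqrt{\E_{x}\!\left[\tfrac{1}{p_{m-1}(\pi(x)\mid x)}\right]}\cdot\frac{\sqrt{A}}{\gamma_m},
\]
using that $\lambda_m=1$ so that $\sqrt{C_\delta/n_{m-1}}=4\crate\sqrt{A}/\gamma_m$. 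Then bound $\E_x[1/p_{m-1}(\pi(x)\mid x)]$ using the minimax IOP \pref{eq:op-s2}: $\E_x[1/p_{m-1}(\pi(x)\mid x)]\le A+\gamma_{m-1}\wReg_{t_{m-1}}(\pi)$.

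The main obstacle is that the inductive hypothesis \pref{eq:sandwich-II} for epoch $m-1$ is only available when $\gamma_{m-1}>0$, whereas Option~II may set $\gamma_{m-1}=0$. Fortunately this case is actually the easier one: when $\gamma_{m-1}=0$, the term $\gamma_{m-1}\wReg_{t_{m-1}}(\pi)$ vanishes outright, yielding $\E_x[1/p_{m-1}(\pi(x)\mid x)]\le A$ directly. When $\gamma_{m-1}>0$, the inductive hypothesis yields $\gamma_{m-1}\wReg_{t_{m-1}}(\pi)\le 2\gamma_{m-1}\pReg(\pi)+c_1A$, and crucially \pref{lm:largest} ensures $\gamma_{m-1}\le\gamma_m$, so the AM--GM step
\[
4\crate\sqrt{A\gamma_{m-1}\pReg(\pi)}/\gamma_m\le \tfrac{1}{5}\pReg(\pi)+20\crate^2 A\gamma_{m-1}/\gamma_m^2\le \tfrac{1}{5}\pReg(\pi)+20\crate^2 A/\gamma_m
\]
collapses everything to $1/\gamma_m$ scale. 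Summing the bound for $\pi$ with the analogous bound for $\pi_{f^*}$ (using $\pReg(\pi_{f^*})=0$) through $\pReg(\pi)-\wReg_t(\pi)\le|\wcR_t(\pi)-\cR_t(\pi)|+|\wcR_t(\pi_{f^*})-\cR_t(\pi_{f^*})|$ and rearranging with the choice $c_1=200\crate^2+3$ yields the first half of \pref{eq:sandwich-II}. The reverse inequality $\wReg_t(\pi)\le 2\pReg(\pi)+c_1A/\gamma_m$ follows by swapping $\pi_{f^*}$ for $\hat\pi_m$ and using the just-proved bound to handle $\pReg(\hat\pi_m)$, exactly as in the second half of the proof of \pref{lm:regret-estimates}. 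This closes the induction and completes the proof.
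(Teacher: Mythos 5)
Your proposal reconstructs the inductive sandwich argument that the paper delegates to \citep{simchi2020bypassing}, and it matches the paper's intended route: strip the disagreement indicators out of \pref{lm:reward-estimates}/\pref{lm:regret-estimates}, use \pref{cor:iop-s} for the minimax IOP, use \pref{lm:largest} in place of the (stronger) monotonicity condition in the cited lemmas, and close with \pref{eq:op111}. Your handling of the ${\gamma_{m-1}=0}$ case as the ``easy'' branch of the inductive step is exactly the observation that lets \pref{lm:largest} (which only holds at epochs with positive rate) substitute for full monotonicity, and the constant bookkeeping with $c_1 = 200\crate^2+3$ works out the same way as in \pref{lm:regret-estimates}.

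One subtlety worth flagging, which is present in your proposal and in the paper's one-paragraph citation proof alike: the step that bounds $\E_x\brk*{1/p_{m-1}(\pi(x)\mid x)}$ for $\pi\in\Psi_m$ --- whether via \pref{eq:op-s2} or via the trivial bound $\le A$ when $\gamma_{m-1}=0$ --- silently requires $\pi\in\Psi_{m-1}$, i.e.\ $\pi(x)\in\cA(x;\cF_{m-1})$ for all $x$; otherwise $p_{m-1}(\pi(x)\mid x)=0$ and the Cauchy--Schwarz estimate is vacuous. The stated hypothesis of \pref{lm:minimax-vf} is only $f^*\in\cF_m$ for all $m$, which guarantees $\pi_{f^*}\in\Psi_{m-1}$ but not $\Psi_m\subseteq\Psi_{m-1}$. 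This containment does hold in the context where the lemma is applied (in \pref{lm:final-v}, $\beta_m=(M-m+1)C_\delta$ gives the nesting $\cF_M\subset\cdots\subset\cF_1$), so the downstream results are unaffected; but strictly as stated, both your argument and the paper's need the additional nesting (or at least the weaker condition noted after \pref{cor:policy-convergence}) that the disagreement-based analog \pref{lm:regret-estimates} makes explicit.
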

\begin{proof}
The proof is based on slight modifications of Lemma 7, Lemma 8 and Lemma 9 of \cite{simchi2020bypassing}, which (essentially) proves the same result based on the following two conditions
\begin{enumerate}
    \item $Q_m(\cdot)$ is a feasible solution to
    \begin{align}
    \sum_{\pi\in\Psi}Q_m(\pi)\wReg_{t}(\pi)&\le {\K}/{\gamma_m},\notag\\
    \forall \pi\in\Psi,~~~\E_{x\sim\cD}\left[\frac{1}{p_m(\pi(x) \mid x)}\right]&\le A+\gamma_m\wReg_{t}(\pi).\label{eq:op-o2}
\end{align}
\item For any epoch $m\in[M]$, $\gamma_m\ge\max\{\gamma_1,\dots,\gamma_{m-1}\}$.
\end{enumerate}
While our \pref{cor:iop-s} is weaker than their first condition
(specifically, \pref{eq:op-s2} holds for $\forall \pi\in\Psi_m$ while
\pref{eq:op-o2} requires $\forall \pi\in\Psi$), their proof still
works under our condition, as we have assumed that $f^*\in\cF_m$ for
all $m\in[M]$. While our \pref{lm:largest} is weaker than  their
second condition, this difference does not affect
\pref{lm:minimax-vf}, as \pref{lm:minimax-vf} only considers epochs
$m$ such that $\gamma_m>0$. As a result, \pref{lm:minimax-vf} is
indeed implied by their result.
\end{proof}

\begin{lemma}\label{lm:minimax-v}
For any $\delta\in(T^{-2},1]$, $\crate>0$, by setting $\beta_m\ge C_{\delta}/2$ for all $m\in[M]$, \pref{alg:main} with \optiontwo ensures that for every instance,
\[
\Regbar= \bigoht{({\crate}+{\crate}^{-1})}\cdot\sqrt{AT\log(|\cF|/\delta)}
\]
with probability at least $1-\delta$.
\end{lemma}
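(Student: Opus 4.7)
The plan is to split the $M = \lceil \log_2 T\rceil$ epochs into two regimes based on whether Option II sets $\lambda_m = 1$ (so $\gamma_m > 0$) or $\lambda_m = 0$ (so $\gamma_m = 0$), and bound the per-epoch contribution to $\Regbar$ in each regime separately. I will condition throughout on the joint event $\cE \cap \cE_w$ from \pref{lm:high-prob-event} and \pref{lm:width-approx}, which by a union bound holds with probability at least $1-\delta$. Since $\beta_m \geq C_\delta/2$, Part 2 of \pref{lm:beta-value} guarantees $\fstar \in \cF_m$ for every epoch $m$, so all structural lemmas proved in \pref{appsub:policy} are available.

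\textbf{Epochs with $\gamma_m > 0$.} In this regime Option II forces $\lambda_m = 1$, so $\gamma_m = \crate\sqrt{\K n_{m-1}/\log(2|\cF|T^2/\delta)}$. \pref{lm:minimax-vf} then applies and gives $\sum_{\pi\in\Psi} Q_m(\pi)\pReg(\pi) \leq (2+c_1)\K/\gamma_m$ per round, where $c_1 = 200\crate^2 + 3$. Substituting $\gamma_m$ and summing over the $n_m = 2 n_{m-1}$ rounds of the epoch yields a contribution of order $(\crate + \crate^{-1})\sqrt{\K n_{m-1}\log(|\cF|/\delta)}$ (up to logarithmic factors). Summing the geometric series $\sum_m \sqrt{n_{m-1}} = O(\sqrt{T})$ across all such epochs gives $\bigoht((\crate+\crate^{-1})\sqrt{\K T\log(|\cF|/\delta)})$, as desired.

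\textbf{Epochs with $\gamma_m = 0$.} The definition of $\lambda_m$ in Option II forces $\widehat{w}_m < \sqrt{\K T\log(|\cF|/\delta)}/n_{m-1}$. On the event $\cE_w$, either $w_m \leq 64\log(4M/\delta)/n_{m-1}$ directly, or $w_m \leq \tfrac{3}{2}\widehat{w}_m \leq \tfrac{3}{2}\sqrt{\K T\log(|\cF|/\delta)}/n_{m-1}$. In either case $w_m = \bigoht(\sqrt{\K T\log(|\cF|/\delta)}/n_{m-1})$. Since $\fstar \in \cF_m$, \pref{lm:delta-bounds-true} gives per-round conditional expected regret at most $2w_m$, so the total contribution of epoch $m$ is at most $n_m \cdot 2 w_m = 4 n_{m-1} \cdot w_m = \bigoht(\sqrt{\K T\log(|\cF|/\delta)})$. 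Summing across the at most $M = O(\log T)$ such epochs again gives $\bigoht(\sqrt{\K T\log(|\cF|/\delta)})$.

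\textbf{Main obstacle.} The one step that is not a direct corollary of prior analysis is the second regime: when Option II cuts $\gamma_m$ to zero, the implicit-optimization-problem argument behind \pref{lm:minimax-vf} no longer applies, because the algorithm reverts to uniform exploration over $\cA(x;\cF_m)$. The trick is to use the algorithm's own threshold test, combined with the concentration of $\widehat{w}_m$ around $w_m$ guaranteed by $\cE_w$, to convert the data-driven condition $\widehat{w}_m < \sqrt{\K T\log(|\cF|/\delta)}/n_{m-1}$ into a deterministic per-round regret bound via \pref{lm:delta-bounds-true}. Adding the two regimes, absorbing polylogarithmic factors into $\bigoht(\cdot)$, and collecting the $\crate$-dependence yields the stated $\bigoht((\crate+\crate^{-1})\sqrt{\K T\log(|\cF|/\delta)})$ bound.
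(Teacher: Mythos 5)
Your proof is correct and uses essentially the same approach as the paper: both condition on $\cE \cap \cE_w$, invoke \pref{lm:minimax-vf} when the inverse-gap-weighting exploration is active, and invoke \pref{lm:delta-bounds-true} to bound the per-round regret by the confidence width when it is not. The only difference is cosmetic: the paper splits epochs on whether the population quantity $w_m$ is below or above a threshold and then uses $\cE_w$ to deduce $\lambda_m = 1$ in the latter case, whereas you split on the algorithm's observable decision ($\gamma_m = 0$ vs. $\gamma_m > 0$) and use $\cE_w$ in the opposite direction to deduce that $w_m$ is small when the threshold test fails. These are mirror-image decompositions related by the same concentration event, and the resulting per-epoch bounds are identical up to constants.
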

\begin{proof}
Without loss of generality, we assume that $T$ is sufficiently large
such that $3/2\sqrt{AT\log(|\cF|/\delta)}\ge64\log(4M/\delta)$ for all $\delta\in(T^{-2},1]$ (otherwise, $T$ must be upper bounded by an absolute constant, and we know that $\Regbar$ is upper bounded by the same constant).

By \pref{lm:high-prob-event} and \pref{lm:width-approx}, $\cE$ and
$\cE_{\rm w}$ simultaneously hold with probability at least
$1-\delta$. In the rest of the proof, we assume that both $\cE$ and
$\cE_{\rm w}$ hold. By \pref{lm:beta-value}, the specification of
$\{\beta_m\}_{m=1}^T$ in \pref{alg:main} ensures that $f^*\in\cF_m$ for $m\in[M]$.

Consider any epoch $m\in[M]$, any round $t$ in epoch $m$. When $w_m<\frac{3\sqrt{AT\log(|\cF|/\delta)}}{2n_{m-1}}$, by \pref{lm:delta-bounds-true}, \begin{equation}\label{eq:minimax-v-1}
\En\brk*{r_t(\pistar(x_t))-r_t(a_t)\mid\gfilt_{t-1}}\le 2w_m=\frac{3\sqrt{AT\log(|\cF|/\delta)}}{n_{m-1}}.
\end{equation}
When $w_m\ge\frac{3\sqrt{AT\log(|\cF|/\delta)}}{2n_{m-1}}$, we have
$w_m\ge\frac{3\sqrt{AT\log(|\cF|/\delta)}}{2n_{m-1}}\ge\frac{64\log(4M/\delta)}{n_{m-1}}$,
and thus by \pref{eq:event-w},
\begin{equation*}
    \widehat{w}_m\ge\frac{2}{3}w_m\ge\frac{\sqrt{AT\log(|\cF|/\delta)}}{n_{m-1}},
\end{equation*}
and we have
\[
\lambda_m=\1\left\{\widehat{w}_m\ge\frac{\sqrt{AT\log(|\cF|/\delta)}}{n_{m-1}}\right\}=1,
\]
which implies $\gamma_m>0$. By \pref{lm:minimax-vf}, we have
\begin{align}
    \En\brk*{r_t(\pistar(x_t))-r_t(a_t)\mid\gfilt_{t-1}}&=\sum_{\pi\in\Psi}Q_m(\pi)\pReg(\pi)\notag\\
    &\le\frac{(2+c_1)\K}{\gamma_m}\notag\\
    &=(200\crate+3\crate^{-1})\sqrt{\frac{A\log(2|\cF|T^2/\delta)}{n_{m-1}}}.\label{eq:minimax-v-2}
\end{align}
Combining \pref{eq:minimax-v-1} and \pref{eq:minimax-v-2}, we have
\[
\En\brk*{r_t(\pistar(x_t))-r_t(a_t)\mid\gfilt_{t-1}}\le\frac{3\sqrt{AT\log(|\cF|/\delta)}}{n_{m-1}}+(200\crate+3\crate^{-1})\sqrt{\frac{A\log(2|\cF|T^2/\delta)}{n_{m-1}}}.
\]
Therefore,
\begin{align*}
\Regbar&=\sum_{t=1}^{T}\En\brk*{r_t(\pistar(x_t))-r_t(a_t)\mid\gfilt_{t-1}}\\
&\le\sum_{m=1}^M\sum_{t=\tau_{m-1}+1}^{\tau_m}\En\brk*{r_t(\pistar(x_t))-r_t(a_t)\mid\gfilt_{\tau_{m-1}}}\\
&\le\sum_{m=1}^Mn_m\cdot\left(\frac{3\sqrt{AT\log(|\cF|/\delta)}}{n_{m-1}}+(200\crate+3\crate^{-1})\sqrt{\frac{A\log(2|\cF|T^2/\delta)}{n_{m-1}}}\right)\\
&=6M\sqrt{AT\log(|\cF|/\delta)}+(400\crate+6\crate^{-1})\sqrt{A\log(2|\cF|T^2/\delta)}\sum_{m=1}^M\sqrt{n_{m-1}}\\
&\le6M\sqrt{AT\log(|\cF|/\delta)}+(400\crate+6\crate^{-1})M\sqrt{AT\log(2|\cF|T^2/\delta)}\\
&=\bigoht(\crate+\crate^{-1})\cdot\sqrt{AT\log(|\cF|/\delta)}.
\end{align*}
\end{proof}

\subsubsection{Instance-Dependent Regret}\label{appsub:id-s}
We now show that whenever the uniform gap condition
\begin{equation*}
  \fstar(x,\pistar(x)) -\fstar(x,a)\geq\Delta\quad\forall{}a\neq\pistar(x),~ \forall x\in\cX
\end{equation*}
holds, \mainalg with \optiontwo enjoys the
$\frac{\ValueDisA\log|\cF|}{\Delta}$-type instance-dependent rate in \pref{thm:value_disagreement_ub}.
\begin{lemma}\label{lm:expected-delta}
Assume $\cE$ holds and that $f^*\in\cF_M\subset\cdots\subset\cF_1$. For all
$m>1$, for all choices for $\beta_m\ge0$, it holds that
\[
w_m\le4(A+\gamma_{m-1})\cdot\frac{\ValueDis{\Delta/2}{\sqrt{(2\beta_m+C_\delta)/n_{m-1}}}}{\Delta}\cdot\frac{2\beta_m+C_\delta}{n_{m-1}}.
\]
\end{lemma}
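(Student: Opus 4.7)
The plan is to relate the confidence width $w(x;\cF_m)$ to a scale-sensitive disagreement event controlled by $\btheta^{\val}$, using the uniform gap $\Delta$ as the scale. The argument has four steps: (i) bound the width by a maximal pointwise $L_\infty$ deviation from $\fstar$, (ii) use the gap to force this deviation above $\Delta/2$ on the disagreement event, (iii) change measure from $x\sim\cD$ to $(x,a)\sim\cD\times p_{m-1}(\cdot\mid x)$, and (iv) apply the disagreement coefficient via a layer-cake integration.

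First I would use $\fstar\in\cF_m$ and the triangle inequality to get $w(x;\cF_m)\le 2\1\{|\cA(x;\cF_m)|>1\}\cdot M(x)$, where $M(x)\ldef\max_{a\in\cA(x;\cF_m)}\phi_m(x,a)$ and $\phi_m(x,a)\ldef\sup_{f\in\cF_m}|f(x,a)-\fstar(x,a)|$. If $|\cA(x;\cF_m)|>1$, pick any $a\in\cA(x;\cF_m)$ with $a\ne\pistar(x)$, witnessed by some $f\in\cF_m$ with $\pi_f(x)=a$; combining $f(x,a)\ge f(x,\pistar(x))$ with $\fstar(x,\pistar(x))-\fstar(x,a)\ge\Delta$ forces at least one of $|f(x,a)-\fstar(x,a)|$ and $|f(x,\pistar(x))-\fstar(x,\pistar(x))|$ to be $\ge\Delta/2$. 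In either case, some action in $\cA(x;\cF_m)$ witnesses $\phi_m\ge\Delta/2$, which delivers the pointwise inequality $M(x)\cdot\1\{|\cA(x;\cF_m)|>1\}\le\sum_{a}\phi_m(x,a)\1\{a\in\cA(x;\cF_m),\,\phi_m(x,a)\ge\Delta/2\}$.

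Next I would transfer to the sampling distribution $p_{m-1}$. Direct inspection of the \igw{} formula gives $p_{m-1}(a\mid x)\ge(A+\gamma_{m-1})^{-1}$ for every $a\in\cA(x;\cF_{m-1})$, and the standing assumption $\cF_m\subseteq\cF_{m-1}$ forces $\cA(x;\cF_m)\subseteq\cA(x;\cF_{m-1})$; hence $\1\{a\in\cA(x;\cF_m)\}\le(A+\gamma_{m-1})\,p_{m-1}(a\mid x)$. Taking expectations yields $w_m\le 2(A+\gamma_{m-1})\En_{x\sim\cD,\,a\sim p_{m-1}(\cdot\mid x)}\brk*{\phi_m(x,a)\1\{\phi_m(x,a)\ge\Delta/2\}}$. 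Summing the bound from part~1 of \pref{lm:beta-value} over the $n_{m-1}/2$ rounds $\{\tau_{m-2}+1,\ldots,t_{m-1}\}$ of epoch $m-1$ gives $\nrm*{f-\fstar}^{2}_{\cD,p_{m-1}}\le 2(2\beta_m+C_\delta)/n_{m-1}$ for every $f\in\cF_m$, which activates the $\btheta^{\val}$-bound: with $\veps_m^{2}\ldef(2\beta_m+C_\delta)/n_{m-1}$, for each $\Delta'>\Delta/2$,
\[
\Pr_{x,a\sim p_{m-1}}\prn*{\phi_m(x,a)\ge\Delta'}\le\frac{2\veps_m^{2}\cdot\ValueDis{\Delta/2}{\veps_m}}{(\Delta')^{2}}.
\]

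The main obstacle is choosing the right power of $\phi_m$ to avoid an unwanted $\log(1/\Delta)$ factor. The naive route $M(x)\le 2M(x)^{2}/\Delta$ followed by $M(x)^{2}\le\sum_a\phi_m(x,a)^{2}\1\{\phi_m\ge\Delta/2\}$ produces, after layer cake on $\phi_m^{2}$, an integral of the form $\int_{(\Delta/2)^{2}}^{1}u^{-1}\,du\propto\log(1/\Delta)$. Working with $\phi_m$ itself (no squaring) sidesteps this: the identity
\[
\En[\phi_m\1\{\phi_m\ge\Delta/2\}]=\tfrac{\Delta}{2}\Pr(\phi_m\ge\Delta/2)+\int_{\Delta/2}^{\infty}\Pr(\phi_m\ge u)\,du
\]
combined with the tail bound above yields $\En[\phi_m\1\{\phi_m\ge\Delta/2\}]\lesssim\veps_m^{2}\cdot\ValueDis{\Delta/2}{\veps_m}/\Delta$, since $\int_{\Delta/2}^{\infty}u^{-2}\,du\propto 1/\Delta$. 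Multiplying by the $2(A+\gamma_{m-1})$ prefactor from step three then delivers the claimed bound up to absolute constants.
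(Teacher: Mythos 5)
Your proposal is correct and follows essentially the same route as the paper's proof: the gap argument forcing a deviation of at least $\Delta/2$ from $\fstar$ on the disagreement event, the change of measure to $\cD\otimes p_{m-1}$ via the inverse-gap-weighting lower bound $p_{m-1}(a\mid x)\ge(A+\gamma_{m-1})^{-1}$ on $\cA(x;\cF_{m-1})$, the $L_2(\cD\otimes p_{m-1})$ control on $\cF_m$ from part~1 of \pref{lm:beta-value}, and a layer-cake integration of the $\ValueDisA$-tail bound. The only structural difference is that you change measure before integrating while the paper integrates first, and your explicit handling of the $\tfrac{\Delta}{2}\Pr(\phi_m\ge\Delta/2)$ boundary term and of $\veps_m^2=(2\beta_m+C_\delta)/n_{m-1}$ (with the $n_{m-1}$ normalization) is if anything more careful than the paper's proof as written, at the cost of a slightly worse absolute constant which you correctly flag.
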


\begin{proof}
Consider any epoch $m>1$. Define
$\veps_m=\sqrt{2\beta_m+C_\delta}$. We first observe that for all $x$ such that $|\cA(x;\cF_m)|>1$, there exists $f^{\circ}\in\cF_m$ such that $\pi_{f^\circ}(x)\ne\pi_{f^*}(x)$ and
\begin{align*}
f^*(x,\pi_{f^*}(x))-f^*(x,\pi_{f^\circ}(x))\ge\Delta.
\end{align*}
Thus
\begin{align*}
    \Delta&\le f^*(x,\pi_{f^*}(x))-f^*(x,\pi_{f^\circ}(x))\\
    &\le f^*(x,\pi_{f^*}(x))-f^*(x,\pi_{f^\circ}(x)) + f^{\circ}(x,\pi_{f^\circ}(x))-f^\circ(x,\pi_{f^*}(x))\\
    &= |f^{\circ}(x,\pi_{f^\circ}(x))-f^*(x,\pi_{f^\circ}(x))|+|f^\circ(x,\pi_{f^*}(x))-f^*(x,\pi_{f^*}(x))|\\
    &\le 2\sup_{a\in \cA(x;\cF_m)}\sup_{f\in\cF_m} \left|f(x,a)-f^*(x,a)\right|,
\end{align*}
where the last inequality utilizes $f^*\in\cF_m$. 

Therefore, $\sup_{a\in \cA(x;\cF_m)}\sup_{f'\in\cF_m} \left|f(x,a)-f^*(x,a)\right|\ge\Delta/2$ whenever $|\cA(x;\cF_m)|>1$. We then have
\begin{align}\label{eq:vf1}
w_m&=\E_{\cD}[w(x;\cF_m)]\notag\\
    &=\E_\cD\left[\mathbb{I}\{|\cA(x;\cF_m)|>1\}\sup_{a\in \cA(x;\cF_m)}\sup_{f,f'\in\cF_m} \left|f(x,a)-f'(x,a)\right|\right]\notag\\
    &\le\E_\cD\left[\mathbb{I}\{|A(x;\cF_m)|>1\}\sup_{a\in \cA(x;\cF_m)}\left(\sup_{f\in\cF_m} \left|f(x,a)-f^*(x,a)\right|+\sup_{f'\in\cF_m}\left|f'(x,a)-f^*(x,a)\right|\right)\right]\notag\\
    &=2\E_\cD\left[\mathbb{I}\{|\cA(x;\cF_m)|>1\}\sup_{a\in \cA(x;\cF_m)}\sup_{f\in\cF_m} \left|f(x,a)-f^*(x,a)\right|\right]\notag\\
    &\le2\int_{\Delta/2}^1\Prob_\cD\left(\sup_{a\in \cA(x;\cF_m)}\sup_{f\in\cF_m} \left|f(x,a)-f^*(x,a)\right|>\omega\right)d\omega\notag\\
    &\le2\int_{\Delta/2}^1\Prob_\cD\left(\sup_{a\in \cA(x;\cF_{m-1})}\sup_{f\in\cF_m} \left|f(x,a)-f^*(x,a)\right|>\omega\right)d\omega,%
\end{align}
where the last inequality utilizes $\cF_{m-1}\subset\cF_m$.

Let $\omega>0$ be fixed. For all $x\in\cX$, the definition of $p_{m-1}$ implies that
\begin{align*}
&~~~~{\sup_{a\in \cA(x;\cF_{m-1})}\sup_{f\in\cF_m}\1\left\{ \left|f(x,a)-f^*(x,a)\right|>\omega\right\}}\\
&\le \left(\sup_{a\in\cA(x;\cF_{m-1}))}\frac{1}{p_{m-1}(a \mid x)}\right){\E_{a\sim p_{m-1}(\cdot \mid x)}\left[\sup_{f\in\cF_m}\1\left\{ \left|f(x,a)-f^*(x,a)\right|>\omega\right\}\right]}\\
&\le(A+\gamma_{m-1}){\E_{a\sim p_{m-1}(\cdot \mid x)}\left[\sup_{f\in\cF_m}\1\left\{ \left|f(x,a)-f^*(x,a)\right|>\omega\right\}\right]}.
\end{align*}
It follows that
\begin{align}\label{eq:vf2}
&~~~~\Prob_\cD\left(\sup_{a\in \cA(x;\cF_{m-1})}\sup_{f\in\cF_m} \left|f(x,a)-f^*(x,a)\right|>\omega\right)\notag\\
&=\E_{x\sim\cD}\left[\sup_{a\in \cA(x;\cF_{m-1})}\sup_{f\in\cF_m}\1\left\{ \left|f(x,a)-f^*(x,a)\right|>\omega\right\}\right]\notag\\
&\le(A+\gamma_{m-1})\E_{x\sim\cD,a\sim p_{m-1}(\cdot \mid x)}\left[\sup_{f\in\cF_m}\1\left\{ \left|f(x,a)-f^*(x,a)\right|>\omega\right\}\right]\notag\\
&=(A+\gamma_{m-1})\Prob_{x\sim\cD,a\sim p_{m-1}(\cdot \mid x)}\left(\sup_{f\in\cF_m} \left|f(x,a)-f^*(x,a)\right|>\omega\right).
\end{align}

By part 1 of \pref{lm:beta-value}, we have
\[
\forall f\in\cF_m,~~\E_{x\sim\cD,a\sim p_{m-1}(\cdot \mid x)}(f(x,a)-f^*(x,a))^2\le \varepsilon_m^2.
\]
Hence, for all $\omega\in(0,1]$, by the definition of $\ValueDisA$, we have
\begin{align}\label{eq:vf3}
    &~~~~\Prob_{x\sim\cD,a\sim p_{m-1}(\cdot \mid x)}\left(\sup_{f\in\cF_m} \left|f(x,a)-f^*(x,a)\right|>\omega\right)\notag\\
    &\le\Prob_{x\sim\cD,a\sim p_{m-1}(\cdot \mid x)}\left(\exists f\in\cF: \left|f(x,a)-f^*(x,a)\right|>\omega, \E_{x\sim\cD,a\sim p_{m-1}(\cdot \mid x)}(f(x,a)-f^*(x,a))^2\le \varepsilon_m^2\right)\notag\\
    &\le\ValueDis{\omega}{\veps_m}\frac{\varepsilon_m^2}{\omega^2}.
\end{align}

Combining \pref{eq:vf1}, \pref{eq:vf2}, \pref{eq:vf3}, we have
\begin{align}\label{eq:width-bound}
   w_m&\le2\int_{\Delta/2}^1\Prob_\cD\left(\sup_{a\in \cA(x;\cF_{m-1})}\sup_{f\in\cF_m} \left|f(x,a)-f^*(x,a)\right|>\omega\right)d\omega\notag\\
   &\le2\left(A+\gamma_{m-1}\right)\int_{\Delta/2}^1\Prob_{x\sim\cD,a\sim p_{m-1}(\cdot \mid x)}\left(\sup_{f\in\cF_m} \left|f(x,a)-f^*(x,a)\right|>\omega\right)d\omega\notag\\
    &\le2\left(A+\gamma_{m-1}\right)\int_{\Delta/2}^1\ValueDis{\omega}{\veps_m}\frac{\varepsilon_m^2}{\omega^2}d\omega\notag\\
    &\le 2\left(A+\gamma_{m-1}\right)\ValueDis{\Delta/2}{\veps_m}\varepsilon_m^2\int_{\Delta/2}^1\frac{1}{\omega^2}d\omega\notag\\
    &=4\left(A+\gamma_{m-1}\right)\frac{\ValueDis{\Delta/2}{\veps_m}\varepsilon_m^2}{\Delta}.\notag
\end{align}
\end{proof}

We are now ready to state our instance-dependent regret bound, which is a best-of-both-worlds guarantee.

\begin{lemma}\label{lm:final-v}For any $\delta\in(T^{-2},1]$, $c>0$,  by setting $\beta_m=(M-m+1)C_\delta$ for all $m\in[M]$, \pref{alg:main} with \optiontwo
 ensures that for any instance with uniform gap $\Delta>0$,
\[
\Regbar\leq{}\bigoht(\crate+\crate^{-1})\cdot\max\left\{\sqrt{AT\log(|\cF|/\delta)},\frac{\ValueDis{\Delta/2}{\sqrt{3C_{\delta}/n_{M-1}}}A\log\abs{\cF}}{\Delta}\right\}
\]
with probability at least $1-\delta$.
\end{lemma}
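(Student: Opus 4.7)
The plan is to derive \pref{lm:final-v} by augmenting the minimax analysis of \pref{lm:minimax-v} with a new instance-dependent bound driven by \pref{lm:expected-delta}. By \pref{lm:high-prob-event} and \pref{lm:width-approx}, both $\cE$ and $\cEw$ simultaneously hold with probability at least $1-\delta$, and with $\beta_m = (M-m+1)C_\delta$, part~3 of \pref{lm:beta-value} yields $\fstar\in\cF_M\subset\cdots\subset\cF_1$. Since \pref{lm:minimax-v} already supplies $\Regbar \le \bigoht(\crate+\crate^{-1})\sqrt{AT\log(|\cF|/\delta)}$, this alone establishes the lemma's max bound whenever the minimax term dominates; it therefore suffices to prove the complementary instance-dependent bound $\Regbar \le \bigoht(\crate+\crate^{-1})\cdot A\theta L/\Delta$, where $L = \log(|\cF|/\delta)$ and $\theta = \ValueDis{\Delta/2}{\sqrt{3C_\delta/n_{M-1}}}$.

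For this, I would start from \pref{lm:delta-bounds-true}, which bounds the per-step expected regret in epoch $m$ by $2w_m$, and then invoke \pref{lm:expected-delta} to obtain, for all $m \ge 2$,
\[
w_m \le \bigoh(1)\cdot(A+\gamma_{m-1})\cdot\frac{\theta L}{\Delta\, n_{m-1}}.
\]
Here I would use $2\beta_m + C_\delta \le (2M+1)C_\delta = \bigoht(L)$, and note that the monotonicity of $\ValueDis{\cdot}{\cdot}$ in its second argument together with the elementary check $(2\beta_m+C_\delta)/n_{m-1} \ge 3C_\delta/n_{M-1}$ for all $m\le M$ gives $\theta_m \le \theta$. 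Combined with the minimax per-step bound $(2+c_1)A/\gamma_m$ from \pref{lm:minimax-vf} (valid whenever $\gamma_m > 0$), this yields two simultaneous per-step bounds that I would choose between in each epoch.

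The remainder is a case analysis on the binary scale factor $\lambda_m\in\{0,1\}$ of \optiontwo. Epochs with $\lambda_m = \lambda_{m-1} = 0$ have $\gamma_{m-1} = 0$, so $w_m \le \bigoh(A\theta L/(\Delta n_{m-1}))$, contributing $\bigoh(A\theta L/\Delta)$ per epoch and $\bigoht(A\theta L/\Delta)$ in total. Epochs with $\lambda_m = 1$ use the minimax per-step bound, contributing $\bigoh(\crate^{-1}\sqrt{An_m L})$ each and $\bigoht(\crate^{-1}\sqrt{ATL})$ in total. For the transition $\lambda_{m-1} = 1, \lambda_m = 0$, the activation condition $\widehat w_m < \sqrt{ATL}/n_{m-1}$ combined with \pref{lm:width-approx} gives $w_m \le \bigoh(\sqrt{ATL}/n_{m-1})$, contributing $\bigoh(\sqrt{ATL})$. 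Adding these yields $\Regbar \le \bigoht(\crate+\crate^{-1})(\sqrt{ATL} + A\theta L/\Delta)$, which is at most twice the desired max.

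The main obstacle is controlling epochs where $\lambda_m$ oscillates between $0$ and $1$: a late switch from $\lambda_m = 0$ back to $\lambda_{m+1} = 1$ could spoil the argument by reintroducing a large $\gamma_m$ into the $w_{m+1}$ bound. The induction mentioned in \pref{sec:cb_prooftech} resolves this by maintaining, at each epoch, an implicit lower bound on $\theta/\Delta$ derived from comparing \pref{lm:expected-delta} against the threshold $\sqrt{ATL}/n_{m-1}$: whenever $\lambda_m = 1$, the activation condition $\widehat w_m \ge \sqrt{ATL}/n_{m-1}$ combined with the upper bound from \pref{lm:expected-delta} forces $(A+\gamma_{m-1})\theta/\Delta \gtrsim \sqrt{AT/L}$, quantitatively bounding the exploration cost that such an epoch can incur and preventing blow-up under oscillation. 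Formalizing this adaptive bookkeeping under the uniform-gap assumption is the technical heart of the argument.
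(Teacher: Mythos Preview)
Your per-epoch case analysis in paragraphs two and three already proves the lemma, and the oscillation worry in the final paragraph is misplaced. Whenever $\lambda_m=0$ the defining threshold gives $\widehat w_m<\sqrt{AT\log(|\cF|/\delta)}/n_{m-1}$, so under $\cEw$ (and the harmless assumption $\sqrt{AT\log(|\cF|/\delta)}\ge 65\log(4M/\delta)$) one has $w_m\le\bigoh(\sqrt{ATL}/n_{m-1})$ \emph{regardless} of $\lambda_{m-1}$; combining this with the sharper $w_m\le\bigoht(A\theta L/(\Delta n_{m-1}))$ from \pref{lm:expected-delta} when $\lambda_{m-1}=0$, and with \pref{lm:minimax-vf} when $\lambda_m=1$, yields $\Regbar\le\bigoht(\crate+\crate^{-1})(\sqrt{ATL}+A\theta L/\Delta)$ after summing over the $M=\bigoht(1)$ epochs. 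No further induction is needed: a switch $\lambda_m=0\to\lambda_{m+1}=1$ is harmless because epoch $m{+}1$ is then handled by \pref{lm:minimax-vf}, not by any $w$-bound, and if $\lambda_m=0$ then $\gamma_m=0$, so there is no ``large $\gamma_m$'' to reintroduce. (A minor slip: you never prove the standalone bound $\Regbar\le\bigoht(\crate+\crate^{-1})A\theta L/\Delta$ promised in your first paragraph, only the sum---but the sum is what you need for the $\max$.)

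The paper's own argument is different and shorter. It performs a single global case split on whether $\theta A\log|\cF|/\Delta<\sqrt{AT\log(|\cF|/\delta)}/(16MC_\delta)$. If so, a brief induction shows $\gamma_m=0$ for \emph{every} $m$: assuming $\gamma_{m-1}=0$, \pref{lm:expected-delta} (with the Case~1 inequality) forces $w_m<\tfrac34\sqrt{ATL}/n_{m-1}$, whence \pref{lm:width-approx} gives $\widehat w_m<\sqrt{ATL}/n_{m-1}$ and thus $\lambda_m=0$. With all learning rates zero, $\Regbar\le 2\sum_m n_m w_m$ from \pref{lm:delta-bounds-true} plus \pref{lm:expected-delta} gives the instance-dependent term directly. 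In the complementary case, \pref{lm:minimax-v} alone gives $\Regbar\le\bigoht(\crate+\crate^{-1})\sqrt{ATL}$, which by hypothesis is already dominated by $\bigoht(\crate+\crate^{-1})\theta A\log|\cF|/\Delta$. This global split sidesteps per-epoch casework entirely; the induction referenced in \pref{sec:cb_prooftech} establishes $\lambda_m\equiv 0$ in the interesting regime, rather than tracking an evolving lower bound on $\theta/\Delta$ as you speculate.
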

\begin{proof}

Without loss of generality, we assume $T$ is large enough such that $\sqrt{AT\log(|\cF|/\delta)}\ge65\log(4M/\delta)$ for all $\delta\in(T^{-2},1]$ (otherwise, $T$ must be upper bounded by an absolute constant, and we know that $\Regbar$ is upper bounded by the same constant).

By \pref{lm:high-prob-event} and \pref{lm:width-approx}, $\cE$ and
$\cE_{\rm w}$ simultaneously hold with probability at least
$1-\delta$. In the rest of the proof, we assume that both $\cE$ and
$\cE_{\rm w}$ hold. By \pref{lm:beta-value}, the specification of
$\{\beta_m\}_{m=1}^T$ in \pref{alg:main} ensures that
$f^*\in\cF_M\subset\cdots\subset\cF_1$ whenever $\cE$ holds.

We consider two cases.

\textbf{Case 1.} First, consider the case where
\[
\frac{\ValueDis{\Delta/2}{\sqrt{3C_{\delta}/n_{M-1}}}A\log\abs{\cF}}{\Delta}< \frac{\sqrt{AT\log(|\cF|/\delta)}}{16MC_{\delta}}.
\]
Since $\beta_m=(M-m+1)C_{\delta}$ for all $m\in[M]$, we have
\[\frac{3MC_{\delta}}{n_{m-1}}\ge\frac{2\beta_m+C_{\delta}}{n_{m-1}}\ge\frac{3C_{\delta}}{n_{M-1}}\]
for all $m\in[M]$. In what follows, we prove that $\gamma_1=\cdots=\gamma_M=0$ via induction.
\begin{itemize}
    \item Base case: Since $\widehat{w}_1=1<\sqrt{AT\log(|\cF|/\delta)}/n_0$, we have $\lambda_1=0$, thus  $\gamma_1=0$.
    \item Assume that $\gamma_{m-1}=0$. Then by \pref{lm:expected-delta},
\begin{align*}
w_m&\le4(A+\gamma_{m-1})\cdot\frac{\ValueDis{\Delta/2}{\sqrt{(2\beta_m+C_{\delta})/n_{m-1}}}}{\Delta}\cdot\frac{2\beta_m+C_{\delta}}{n_{m-1}}\\
&\le 4A\cdot\frac{\ValueDis{\Delta/2}{\sqrt{3C_{\delta}/n_{M-1}}}}{\Delta}\cdot\frac{3MC_{\delta}}{n_{m-1}}\\
&<\frac{3\sqrt{AT\log(|\cF|/\delta)}}{4n_{m-1}}.
\end{align*}
If $w_m\ge 64\log(4M/\delta)/n_{m-1}$, then by \pref{eq:event-w}, 
\[
\widehat{w}_m\le\frac{4}{3}w_m<\frac{\sqrt{AT\log(|\cF|/\delta)}}{n_{m-1}},
\]
thus $\lambda_m=\1\{\widehat{w}_m\ge\sqrt{AT\log(|\cF|/\delta)}/n_{m-1}\}=0$, and $\gamma_m=0$. If $w_m<64\log(4M/\delta)/n_{m-1}$, then by \pref{eq:event-w},
\[
\widehat{w}_m<65\frac{\log(4M/\delta)}{n_{m-1}}\le\frac{\sqrt{AT\log(|\cF|/\delta)}}{n_{m-1}},
\]
thus $\lambda_m=\1\{\widehat{w}_m\ge\sqrt{AT\log(|\cF|/\delta)}/n_{m-1}\}=0$, and $\gamma_m=0$.
\item Therefore, $\gamma_1=\cdots=\gamma_M=0$.
\end{itemize}

By \pref{lm:expected-delta}, since $\gamma_1=\cdots=\gamma_M=0$, we
now have that for all $m\in\brk*{M}$,
\begin{equation*}
    w_m\le4A\cdot\frac{\ValueDis{\Delta/2}{\sqrt{3C_{\delta}/n_{M-1}}}}{\Delta}\cdot\frac{3MC_{\delta}}{n_{m-1}}.
\end{equation*}
By the definition of $w(x;\cF_m)$, we have
\begin{align*}
\Regbar&=\sum_{t=1}^{T}\En\brk*{r_t(\pistar(x_t))-r_t(a_t)\mid\gfilt_{t-1}}\\
&\le\sum_{m=1}^M\sum_{t=\tau_{m-1}+1}^{\tau_m}\En\brk*{r_t(\pistar(x_t))-r_t(a_t)\mid\gfilt_{\tau_{m-1}}}\\
&\le2\sum_{m=1}^M n_{m}w_m\\
&\le48MC_{\delta}\sum_{m=1}^M\frac{\ValueDis{\Delta/2}{\sqrt{3C_{1/T}/n_{M-1}}}A\log\abs{\cF}}{\Delta}\\
&=48M^2C_{\delta}\cdot\frac{\ValueDis{\Delta/2}{\sqrt{3C_{1/T}/n_{M-1}}}A\log\abs{\cF}}{\Delta}.
\end{align*}

\textbf{Case 2.} We now consider the case where
\[
\frac{\ValueDis{\Delta/2}{\sqrt{3C_{\delta}/n_{M-1}}}A\log\abs{\cF}}{\Delta}\ge \frac{\sqrt{AT\log(|\cF|/\delta)}}{16MC_{\delta}}.
\]
In this case, by \pref{lm:minimax-v}, we have
\begin{align*}
\Regbar&=\bigoht(\crate+\crate^{-1})\cdot\sqrt{AT\log(|\cF|/\delta)}\\
&\le\bigoht(\crate+\crate^{-1})\cdot\left(16MC_{\delta}\cdot\frac{\ValueDis{\Delta/2}{\sqrt{3C_{\delta}/n_{M-1}}}A\log\abs{\cF}}{\Delta}\right)\\
&=\bigoht(\crate+\crate^{-1})\cdot\frac{\ValueDis{\Delta/2}{\sqrt{3C_{\delta}/n_{M-1}}}A\log\abs{\cF}}{\Delta}.
\end{align*}

Combining the above two cases with \pref{lm:minimax-v}, we know that
\[
\Regbar\leq{}\bigoht(\crate+\crate^{-1})\cdot\max\left\{\sqrt{AT\log(|\cF|/\delta)},\frac{\ValueDis{\Delta/2}{\sqrt{3C_{\delta}/n_{M-1}}}A\log\abs{\cF}}{\Delta}\right\}.
\]

\end{proof}

\pref{thm:value_disagreement_ub} can be directly obtained from \pref{lm:final-v}: by taking $\delta=1/T$ and $c=1$, we have
\[
\E\left[\Reg\right]=\E\left[\Regbar\right]=\bigoht{(1)}\cdot\frac{\ValueDis{\Delta/2}{\sqrt{3C_{\delta}/n_{M-1}}}A\log\abs{\cF}}{\Delta}+\bigoh(1),
\]
where the additional $\bigoh(1)$ term comes from the fact $\Regbar$ may violate the high-probability bound in \pref{lm:final-v}
with probability at most $1/T$.

\section{Proofs for Lower Bounds}
\label{app:cb_lower}

\newcommand{\mstar}{m^{\star}}
\newcommand{\mhat}{\wh{m}}
\newcommand{\simiid}{\overset{\textrm{i.i.d.}}{\sim}}
\newcommand{\pbar}{\bar{p}}

For the proofs in this section, we let
$\gfilt_t=\sigma((x_1,a_1,\ls_1(a_1)),\ldots,(x_t,a_t,\ls_t(a_t)))$ be
the natural filtration, and define
\[
\Regbar\ldef{}\sum_{t=1}^{T}\En\brk*{r_t(\pistar(x_t))-r_t(a_t)\mid\gfilt_{t-1}}
\]
to
be the sum of conditional expectations of the instantaneous regret. We
define $p_t(x,a)$ to be the algorithm's action distribution at time
$t$ when $x_t=x$, i.e.
\begin{equation}
  \label{eq:action_dist}
  p_t(x,a) = \bbP(a_t=a\mid{}\gfilt_{t-1},x_t=x).
\end{equation}
We also define $\bar{p}=\frac{1}{T}\sum_{t=1}^{T}p_t$ to be the
average action distribution (or, the result of applying
online-to-batch-conversion to the algorithm).

For functions $p:\cX\times\cA\to\bbR$, we define $\nrm*{p}_{L_1(\cD)}=\En_{x\sim\cD}\brk*{\sum_{a\in\cA}\abs*{p(x,a)}}$. We overload this notation for policies $\pi:\cX\to\cA$ in the natural way by writing $\pi(x,a)=\indic\crl*{\pi(x)=a}$.

\subsection{Basic Technical Results}
The following lemmas are used in multiple proofs in this section.
\begin{lemma}
  \label{lem:regret_metric}
  Consider a fixed context distribution $\cD$ and Bayes reward
  function $\fstar$. Suppose that $\fstar$ has gap $\Delta$ almost
  surely. Let $p_t(x,a)$ denote the contextual bandit
  algorithm's action probability for action $a$ given context $x$ at
  time $t$ (i.e., $p_t(x,a)=\bbP(a_t=a\mid{}\gfilt_{t-1},x_t=x)$), and let
  $\bar{p}(x,a)=\frac{1}{T}\sum_{t=1}^{T}p_t(x,a)$. Then
  \[
    \Regbar \geq{}
    \frac{\Delta}{2}T\cdot\nrm*{\bar{p}-\pi_{\fstar}}_{L_1(\cD)}
    \mathand     \En\brk*{\Reg} \geq{}
    \frac{\Delta}{2}T\cdot\En\nrm*{\bar{p}-\pi_{\fstar}}_{L_1(\cD)}.
  \]
\end{lemma}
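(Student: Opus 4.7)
The plan is to directly unpack both sides in terms of the action probabilities and exploit the gap assumption pointwise.

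First I would rewrite the instantaneous conditional regret at round $t$. Conditioning on $\gfilt_{t-1}$ and on $x_t=x$, the algorithm plays according to $p_t(x,\cdot)$, so
\[
\En\brk*{r_t(\pistar(x_t))-r_t(a_t)\mid\gfilt_{t-1}}
=\En_{x\sim\cD}\sum_{a\in\cA}p_t(x,a)\prn*{\fstar(x,\pistar(x))-\fstar(x,a)}.
\]
By the uniform gap assumption, every summand with $a\neq\pistar(x)$ is at least $\Delta$, while the $a=\pistar(x)$ term contributes zero. Hence each instantaneous regret is at least $\Delta\cdot\En_{x\sim\cD}\brk{1-p_t(x,\pistar(x))}$. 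Summing over $t=1,\dots,T$ and pulling the sum inside the expectation yields
\[
\Regbar \;\geq\; \Delta\,T\cdot\En_{x\sim\cD}\brk*{1-\pbar(x,\pistar(x))}.
\]

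Next I would show that the right-hand side above equals $\tfrac{\Delta T}{2}\nrm{\pbar-\pi_{\fstar}}_{L_1(\cD)}$. Since $\pi_{\fstar}(x,\cdot)$ is a point mass on $\pistar(x)$ and $\pbar(x,\cdot)$ is a probability distribution on $\cA$, for each fixed $x$
\[
\sum_{a\in\cA}\abs{\pbar(x,a)-\pi_{\fstar}(x,a)}
= \prn*{1-\pbar(x,\pistar(x))} + \sum_{a\neq\pistar(x)}\pbar(x,a)
= 2\prn*{1-\pbar(x,\pistar(x))}.
\]
Taking the expectation over $x\sim\cD$ gives $\nrm{\pbar-\pi_{\fstar}}_{L_1(\cD)}=2\En_{x\sim\cD}[1-\pbar(x,\pistar(x))]$, which combined with the previous display yields the first claimed inequality. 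The second claim follows by taking an outer expectation and using $\En\brk*{\Reg}=\En\brk*{\Regbar}$ together with Jensen's inequality (or simply linearity, since the bound is already linear in $\pbar$).

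There is no real obstacle here: the argument is a direct application of the gap condition and the fact that total variation distance between a distribution and a point mass on $\pistar(x)$ is exactly $2(1-\pbar(x,\pistar(x)))$. The only thing to be slightly careful about is that $p_t(x,a)$ is defined via the conditional in \pref{eq:action_dist}, so $\pbar$ is random (it depends on the history), which is why we state both a pathwise bound for $\Regbar$ and an in-expectation bound for $\En\brk*{\Reg}$.
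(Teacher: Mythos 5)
Your proof is correct, and it is essentially the paper's argument: both hinge on the gap assumption yielding $\Delta\cdot\bbP(a_t\neq\pistar(x_t))$ per round, followed by relating the misclassification probability to the $L_1(\cD)$ distance from the point-mass policy $\pi_{\fstar}$. The only difference is presentational: the paper goes through a chain of Jensen inequalities (via the one-hot encoding $e_{a_t}$), while you observe that because $\pi_{\fstar}(x,\cdot)$ is a point mass and $\pbar(x,\cdot)$ is a probability distribution, $\nrm{\pbar-\pi_{\fstar}}_{L_1(\cD)} = 2\,\En_{\cD}[1-\pbar(x,\pistar(x))]$ holds with equality, which makes the Jensen steps unnecessary. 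One minor cleanup on the last claim: Jensen's inequality is not the right tool (it would give the reverse of what you want on $\En\nrm{\cdot}$); as you note parenthetically, the second statement follows from $\En[\Reg]=\En[\Regbar]$ (tower property) together with monotonicity of expectation applied to the pathwise bound — nothing more.
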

\begin{proof}[\pfref{lem:regret_metric}]
This result follows by using the uniform gap property, then repeatedly
invoking Jensen's inequality.
  \begin{align*}
    \Regbar &\geq{}
              \sum_{t=1}^{T}\Delta\En_{x\sim\cD}\En_{a\sim{}p_t(x)}\indic\crl*{a_t\neq\pi_{\fstar}(x)}\\
            &\geq{}
              \sum_{t=1}^{T}\frac{\Delta}{2}\En_{x\sim\cD}\En_{a\sim{}p_t(x)}\sum_{a\in\cA}\abs*{e_{a_t}-\pi_{\fstar}(x,a)}\\
            &\geq{}
              \sum_{t=1}^{T}\frac{\Delta}{2}\En_{x\sim\cD}\sum_{a\in\cA}\abs*{p_t(x,a)-\pi_{\fstar}(x,a)}\\
            &\geq{}
              \frac{\Delta}{2}T\En_{x\sim\cD}\sum_{a\in\cA}\abs*{\bar{p}(x,a)-\pi_{\fstar}(x,a)}\\
            &= \frac{\Delta}{2}T\nrm*{\bar{p}-\pi_{\fstar}}_{L_1(\cD)}.
  \end{align*}
\end{proof}

\begin{lemma}[Fano method with reverse KL-divergence \citep{raginsky2011lower}]
  \label{lem:fano}
Let \[\cH=(x_1,a_1,\ls_1(a_1)),\ldots,(x_T,a_T,\ls_T(a_T)),\] and let
$\crl*{\bbP\ind{i}}_{i\in\brk{M}}$ be a collection of measures over
$\cH$, where $M\geq{}2$. Let $\bbQ$ be any fixed reference measure over $\cH$, and let
$\bbP$ be the law of $(\mstar,\cH)$ under the following process:
\begin{itemize}
\item Sample $\mstar\sim\brk*{M}$ uniformly.
\item Sample $\cH\sim\bbP\ind{\mstar}$.
\end{itemize}
Then for any function $\mhat(\cH)$, if
$\bbP(\mhat=\mstar)\geq{}1-\delta$, then
\[
\frac{1}{2}\log(1/\delta) - \log{}2 \leq{} \frac{1}{M}\sum_{i=1}^{M}\kl{\bbQ}{\bbP\ind{i}}.
\]
  
\end{lemma}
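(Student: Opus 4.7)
The plan is to run a reverse-KL version of the Fano method built on the high-probability Pinsker inequality in Lemma \ref{lem:pinsker}. Write $D_i \ldef \kl{\bbQ}{\bbP^{(i)}}$, $\bar{D} \ldef \frac{1}{M}\sum_i D_i$, and $A_i \ldef \{\mhat = i\}$. The first step is to apply Lemma \ref{lem:pinsker} per index with $P = \bbQ$, $Q = \bbP^{(i)}$, and $A = A_i$, which produces
\[
\bbQ(A_i) + \bbP^{(i)}(A_i^c) \;\geq\; \tfrac{1}{2}\exp(-D_i), \qquad i \in [M].
\]
Two structural facts will be combined with this bound: the events $\{A_i\}$ are disjoint, so $\sum_i \bbQ(A_i) \leq 1$, and the success hypothesis on $\mhat$ rewrites as $\frac{1}{M}\sum_i \bbP^{(i)}(A_i^c) = \bbP(\mhat \neq \mstar) \leq \delta$.

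The heart of the argument, which is what yields the $\tfrac{1}{2}\log(1/\delta)$ scaling rather than only a weaker $\log(1/\delta)$ bound, is to square the per-$i$ Pinsker inequality before averaging. Summing $(\bbQ(A_i) + \bbP^{(i)}(A_i^c))^2 \geq \tfrac{1}{4}\exp(-2 D_i)$ over $i$ and expanding the square, I would control the three resulting sums separately: the diagonal terms $\sum_i \bbQ(A_i)^2$ and $\sum_i \bbP^{(i)}(A_i^c)^2$ collapse to $\sum_i \bbQ(A_i) \leq 1$ and $\sum_i \bbP^{(i)}(A_i^c) \leq M\delta$ respectively (using that each probability lies in $[0,1]$), while the cross term $\sum_i \bbQ(A_i)\,\bbP^{(i)}(A_i^c) \leq \sqrt{M\delta}$ follows from Cauchy--Schwarz. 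Then Jensen's inequality applied to the convex function $x \mapsto e^{-2x}$ replaces $\frac{1}{M}\sum_i e^{-2 D_i}$ by $\exp(-2 \bar{D})$, producing a bound of the shape
\[
\tfrac{M}{4}\,\exp(-2\bar{D}) \;\leq\; \prn*{1 + \sqrt{M\delta}}^2.
\]
Taking logarithms and simplifying in the relevant regime then delivers $\bar{D} \geq \tfrac{1}{2}\log(1/\delta) - \log 2$.

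The main obstacle I would expect is tracking the asymmetry of Pinsker and getting the constants right. Lemma \ref{lem:pinsker} is not symmetric in its two arguments, and the roles of $\bbQ$ and $\bbP^{(i)}$ must be arranged so that the summand $\bbQ(A_i)$ (bounded by disjointness, on average $1/M$) is paired with the complement $\bbP^{(i)}(A_i^c)$ (bounded by the success hypothesis, on average $\delta$); swapping those roles breaks the averaging step. The second subtle point is the squaring itself: a naive averaging of Pinsker without squaring only produces $\bar{D} \geq -\log(2/M + 2\delta)$, which has the wrong asymptotic constant and is vacuous when $M$ is small compared to $1/\delta$; it is precisely the squaring-plus-Cauchy--Schwarz refinement, together with the Jensen step that converts $\frac{1}{M}\sum e^{-2 D_i}$ back to $e^{-2\bar{D}}$, that both restores the correct $\tfrac{1}{2}\log(1/\delta)$ rate and cleans up the constant in front of $\log 2$.
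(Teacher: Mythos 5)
The paper proves this lemma by a different route. It applies the data-processing inequality for $f$-divergences with $\phi(u) = -\log u$ to the binary reduction $Z = \indic\crl{\mhat = \mstar}$, identifies $D_\phi(\bbP \,\|\, \bbP_0) = \frac{1}{M}\sum_i \kl{\bbQ}{\bbP\ind{i}}$ for $\bbP_0 = \unif([M]) \otimes \bbQ$, and computes the resulting \emph{binary} reverse-KL $d_\phi(\bbP(Z=1), 1/M)$ directly; the factor $\frac{1}{2}$ comes from $1 - 1/M \geq \frac{1}{2}$ for $M \geq 2$, and the $\log 2$ from bounding the binary entropy at $1/M$.

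Your per-index-Pinsker-plus-squaring chain has a genuine gap, and the phrase ``simplifying in the relevant regime'' is where it lives. The inequality you derive, $\frac{M}{4}\exp(-2\bar{D}) \leq (1 + \sqrt{M\delta})^2$, rearranges to $\bar{D} \geq -\log\prn*{2/\sqrt{M} + 2\sqrt{\delta}}$, which does \emph{not} imply the stated bound: when $M$ is small relative to $1/\delta$ the $1/\sqrt{M}$ term dominates and the bound goes vacuous. At $M=2$, for instance, $-\log(2/\sqrt{2} + 2\sqrt{\delta}) \leq -\log\sqrt{2} < 0$ for every $\delta$, whereas the lemma demands a lower bound of $\frac{1}{2}\log(1/\delta) - \log 2$, which is positive and unbounded as $\delta\to0$. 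The lemma is invoked downstream (e.g.\ in \pref{lem:generic_hypothesis_test} and \pref{thm:ss_star_lb}) with $\delta = 1/16$ and $M$ as small as the packing size allows, so the missing regime is exactly the one that matters. The squaring also does not repair the deficiency you identify in the naive average: since $2/M + 2\delta < 2/\sqrt{M} + 2\sqrt{\delta}$ for all $M \geq 2$, $\delta < 1$, your squared bound $-\log(2/\sqrt{M} + 2\sqrt{\delta})$ is \emph{uniformly weaker} than the naive $-\log(2/M + 2\delta)$, not stronger. The structural information your route throws away is that disjointness of $\crl{A_i}$ gives the exact equality $\bbP_0(\mhat = \mstar) = 1/M$ --- so the right move is a single binary comparison between $\bbP$ and $\bbP_0$ on the event $\{Z=1\}$, as the paper does, rather than applying Pinsker coordinate-by-coordinate, which only sees the one-sided bound $\sum_i \bbQ(A_i) \leq 1$ and cannot decouple the bound from $M$.
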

\begin{proof}
This is established in \cite{raginsky2011lower}, but we re-prove the lemma in detail for completeness.

We begin with an intermediate result for general $f$-divergences. Let $\phi:[0,\infty)\to\bbR$ be any convex function for which
$\phi(1)=0$. For measures $\bbP$ and $\bbQ$
over any measurable space $\Omega$ which are dominated by a $\sigma$-finite
measure $\mu$, define
\[
  \Dphi{\bbP}{\bbQ} = \int\frac{d\bbQ}{d\mu}\phi\prn*{\frac{d\bbP/d\mu}{d\bbQ/d\mu}}d\mu,
\]
and let
\[
\dphi{p}{q} = q\phi(p/q) + (1-q)\phi((1-p)/(1-q)).
\]
Let $\bbP$ be as in the theorem statement, and let
$\bbP_0=\unif(\brk{M})\otimes{}\bbQ$. Then, with
$Z=\indic\crl{\mhat=\mstar}$,
\[
  \Dphi{\bbP}{\bbP_0} \overgeq{(i)
  } \Dphi{\bbP|_Z}{\bbP_0|_Z} \overeq{(ii)} \dphi{\bbP(Z=1)}{\bbP_0(Z=1)},
\]
where $\bbP|_Z$ is the law of $Z$ under $\bbP$. Here $(i)$ is the
standard data processing inequality for $f$-divergences (e.g.,
\cite{raginsky2011lower}) and $(ii)$ follows because $Z$ is binary.

We choose $\phi(u)=-\log(u)$. Then we have
\begin{align*}
  \dphi{\bbP(Z=1)}{\bbP_0(Z=1)} &= \dphi{\bbP(Z=1)}{1/M} \\
  &= \frac{1}{M}\phi(\bbP(Z=1)/(1/M)) +
    (1-1/M)\phi(\bbP(Z=0)/(1-1/M))\\
  &= (1-1/M)\log(1/\bbP(Z=0)) - \frac{1}{M}\log(\bbP(Z=1)\cdot{}(1/M))
    - (1-1/M)\log(1/(1-1/M))\\
                                &\overgeq{(i)}\frac{1}{2}\log(1/\bbP(Z=0)) -
                                  \frac{1}{M}\log((1/M)) -
                                  (1-1/M)\log(1/(1-1/M))\\
                                &\overgeq{(ii)}
                                  \frac{1}{2}\log(1/\bbP(Z=0)) -
                                  \log2\\
                                  &\geq{} \frac{1}{2}\log(1/\delta) - \log{}2,
\end{align*}
where $(i)$ uses that $M\geq{}2$ and that $\bbP(Z=1)\leq{}1$, and
  $(ii)$ uses that the entropy of a binary random variable is at most
  $\log{}2$.

Finally, we observe that since $\bbP(m,\cH) =
\frac{1}{M}\bbP\ind{m}(\cH)$ and $\bbP_0(m,\cH)=\frac{1}{M}\bbQ(\cH)$
\begin{align*}
  \Dphi{\bbP}{\bbP_0} = \En_{m\sim\unif(M)}\brk*{\Dphi{\bbP\ind{m}}{\bbQ}}=\frac{1}{M}\sum_{i=1}^{M}\kl{\bbQ}{\bbP\ind{i}}.
\end{align*}  
  
\end{proof}

\begin{lemma}
  \label{lem:generic_hypothesis_test}
  Let $\crl*{f\ind{i},\ldots,f\ind{m}}$ be a collection of
  regression functions, all with uniform gap at least $\Delta$, and let $\cD\in\Delta(\cX)$. Let $\bbP\ind{i}$ be the law of $\gfilt_{T}$ for an arbitrary
  contextual bandit instance in which $\cD$ is the context
  distribution and $f\ind{i}$ is the Bayes reward function, and let $\En_{i}\brk*{\cdot}$ denote the expectation under $\bbP\ind{i}$. Let $\bbQ$ be a fixed reference measure. Suppose the regression
  functions form a $2\veps$-packing with respect to $\cD$ in the sense
  that
  \begin{align}
    \nrm*{\pi_{f\ind{i}}-\pi_{f\ind{j}}}_{L_1(\cD)}\geq{}2\veps\;\;\forall{}i\neq{}j.\label{eq:fano_packing}
  \end{align}
Then any algorithm that ensures that
$\En_i\brk*{\Reg}\leq{}\frac{\veps{}\Delta{}T}{32}$ for all $i$ must
have
\begin{align}
  \log{}2 \leq{} \frac{1}{m}\sum_{i=1}^{m}\kl{\bbQ}{\bbP\ind{i}}.\label{eq:fano_kl_lb}
\end{align}

\end{lemma}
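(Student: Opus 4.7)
The plan is a standard reduction from regret to multi-way hypothesis testing, combined with the reverse-KL Fano method given in \pref{lem:fano}. The core idea is to convert the algorithm's average action distribution into an estimator of the true index $i^\star$, and then use the packing assumption \pref{eq:fano_packing} together with \pref{lem:regret_metric} to show that this estimator succeeds with high probability whenever regret is small.

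More concretely, I would first define $\bar{p}(x,a)=\frac{1}{T}\sum_{t=1}^{T}p_t(x,a)$ (where $p_t$ is the algorithm's conditional action distribution as in \pref{eq:action_dist}) and set the hypothesis test estimator
\[
\hat{\imath}(\cH) = \argmin_{j\in[m]} \nrm*{\bar{p}-\pi_{f\ind{j}}}_{L_1(\cD)}.
\]
Note that $\bar{p}$, and hence $\hat{\imath}$, is a measurable function of $\cH$, so it is a valid test. The packing condition \pref{eq:fano_packing} together with the triangle inequality implies that on the event $\nrm{\bar{p}-\pi_{f\ind{i}}}_{L_1(\cD)}<\veps$ we have $\nrm{\bar{p}-\pi_{f\ind{j}}}_{L_1(\cD)}>\veps$ for all $j\neq i$, and therefore $\hat{\imath}=i$ on this event.

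Next, I would translate the regret assumption into a lower bound on $\bbP\ind{i}(\hat{\imath}=i)$. Since each $f\ind{i}$ has uniform gap $\geq \Delta$, \pref{lem:regret_metric} gives
\[
\En_i\brk*{\Reg} \;\geq\; \frac{\Delta T}{2}\cdot \En_i\nrm*{\bar{p}-\pi_{f\ind{i}}}_{L_1(\cD)}.
\]
Combining with the hypothesis $\En_i[\Reg]\leq \veps\Delta T/32$ yields $\En_i\nrm{\bar{p}-\pi_{f\ind{i}}}_{L_1(\cD)} \leq \veps/16$. Markov's inequality then gives $\bbP\ind{i}\bigl(\nrm{\bar{p}-\pi_{f\ind{i}}}_{L_1(\cD)} \geq \veps\bigr) \leq 1/16$, and therefore $\bbP\ind{i}(\hat{\imath}=i)\geq 15/16$, uniformly for each $i\in[m]$.

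Finally, I invoke \pref{lem:fano} with the estimator $\hat{\imath}$ and failure probability $\delta=1/16$. This yields
\[
\tfrac{1}{2}\log(16) - \log 2 \;\leq\; \frac{1}{m}\sum_{i=1}^{m} \kl{\bbQ}{\bbP\ind{i}},
\]
and since $\tfrac{1}{2}\log 16 - \log 2 = 2\log 2 - \log 2 = \log 2$, this is exactly the claimed bound \pref{eq:fano_kl_lb}. There is no real obstacle in the proof; the only thing to get right is the numerical choice of the regret threshold ($\veps\Delta T/32$) so that the Markov step produces $\delta=1/16$, which in turn produces exactly $\log 2$ after the Fano calculation—any tighter constant would force a different choice of $\delta$ and change the additive constant in the conclusion.
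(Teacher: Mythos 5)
Your argument matches the paper's proof essentially step for step: the same estimator $\hat{\imath}=\argmin_j\nrm{\bar{p}-\pi_{f\ind j}}_{L_1(\cD)}$, the same application of \pref{lem:regret_metric} followed by Markov to get the test failure probability down to $1/16$, and the same invocation of \pref{lem:fano} with $\delta=1/16$ to land exactly on $\log 2$. The only (cosmetic) difference is that you carry the sharp constant $1/16$ through directly, whereas the paper first states the looser bound $1/2$ before using $1/16$ in the Fano step; your handling of strict versus non-strict inequalities in the triangle/packing step is also a touch more careful, but these are not substantive differences.
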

\begin{proof}
  Using \pref{lem:regret_metric}, for all choices $\fstar=f\ind{i}$,
  we have
  \begin{align*}
    \En_i\nrm*{\bar{p}-\pi_{\fstar}}_{L_1(\cD)} \leq{} \frac{2}{\Delta{}T}\En_{i}\brk*{\Reg}.
  \end{align*}
  In particular, by Markov's inequality, this implies that
  \[
\bbP_i\prn*{\nrm*{\bar{p}-\pi_{\fstar}}_{L_1(\cD)}>\veps} \leq{} \frac{2}{\Delta\veps{}T}\En_{i}\brk*{\Reg}
  \]
Hence, if $\En_{i}\brk*{\Reg}\leq{}\frac{\Delta\veps{}T}{32}$ for all
$i$, we have
$\bbP_i\prn*{\nrm*{\bar{p}-\pi_{\fstar}}_{L_1(\cD)}>\veps}\leq{}\frac{1}{2}$. Since
$\nrm[\big]{\pi_{f\ind{i}}-\pi_{f\ind{i'}}}_{L_1(\cD)}>2\veps$ for all $i\neq{}i'$,
  this implies that for any choice $\fstar=f\ind{i}$, with probability
  at least $1-1/16$ over the data generating process, 
  \[
    \hat{i}\ldef\argmin_{j}\nrm*{\bar{p}-\pi_{f\ind{j}}}
  \]
  has $\hat{i}=i$. Indeed, conditioned on the event above, we have
  $\nrm*{\bar{p}-\pi_{f\ind{i}}}\leq\veps$, and the packing property
  implies that
  \[
    \nrm[\big]{\bar{p}-\pi_{f\ind{i'}}}_{L_1(\cD)}\geq{}
    \nrm[\big]{\pi_{f\ind{i}}-\pi_{f\ind{i'}}}_{L_1(\cD)}
    -     \nrm*{\bar{p}-\pi_{f\ind{i}}}_{L_1(\cD)}
    >2\veps-    \nrm*{\bar{p}-\pi_{f\ind{i}}}_{L_1(\cD)}\geq{}\veps
  \]
  for all $i'\neq{}i$. Hence,
applying \pref{lem:fano} with $\delta=1/16$, we have that for any reference measure $\bbQ$,
\[
  \log{}2 = \frac{1}{2}\log(1/\delta)-\log{}2
  \leq{} \frac{1}{m}\sum_{i=1}^{m}\kl{\bbQ}{\bbP\ind{i}}.
\]

\end{proof}

\subsection{Proof of \cref*{thm:disagreement_lb} and
  \cref*{thm:value_disagreement_lb}}
The roadmap for this proof is as follows. First, we construct a family
of hard instances as a function of the parameters in
\pref{thm:disagreement_lb} and show that it leads to the lower bound
in terms of the \policydis. Then, at the end of the theorem, we show
how the same construction immediately implies
\pref{thm:value_disagreement_lb} using a different choice of problem parameters.
\paragraph{Construction}
Let us take $\cA=\crl*{\aone,\atwo,\ldots,a\ind{\K{}}}$ to be an
arbitrary set of discrete actions. Let $k\ldef{}\floor{\theta}\leq{}1/\veps$ and (recall that
$\theta\geq{}1$, so that $k\geq{}1$ as well) let $d$ be a parameter
of the construction to be chosen momentarily. We define $\cX\subseteq\bbN$
based on $d$ disjoint partitions
$\cX\ind{1},\ldots,\cX\ind{d}$. We set $\cX\ind{1}=\crl*{x\ind{1,0},x\ind{1,1},\ldots,x\ind{1,k}}$,
$\cX\ind{2}=\crl*{x\ind{2,0},x\ind{2,1},\ldots,x\ind{2,k}}$, and so forth,
where $\crl*{x\ind{i,j}}$ is an arbitrary collection of distinct
contexts of size $d\cdot{}(k+1)$. We set $\cX=\bigcup\cX\ind{i}$.

For each partition $\cX\ind{i}$, we take
$\Pi\ind{i}\subseteq(\cX\ind{i}\to\cA$) to be a collection of
policies $\crl*{\pi\ind{i,l,b}}$ where, for each
$l\in\crl*{1,\ldots,k}$ and
$b\in\cA_0\ldef{}\cA\setminus\crl*{\aone}$, we have $\pi\ind{i,l,b}(x\ind{i,0})=\aone$ and
\[
  \pi\ind{i,l,b}(x\ind{i,j}) = \left\{
    \begin{array}{ll}
      \aone,&\quad{}j\neq{}l,\\
      b,&\quad{}j=l,\\
    \end{array}
    \right.
\]
We also include a policy $\pi\ind{i,0}$ that always selects
$\aone$. We define $\Pi$ obtained by stitching together $\Pi\ind{1},\ldots,\Pi\ind{d}$
over their respective subsets of the domain. The resulting policy
class consists of all policies which deviate from $\aone$ on a subset
of contexts of size at most $d$, and for which this subset
intersects with each $\cX\ind{i}$ at most once.

We now choose a regression function class $\cF$ that induces $\Pi$. For each subset $\cX\ind{i}$ we define a class of regression functions
$\cF\ind{i}:\cX\ind{i}\to\brk*{0,1}$
as follows. First, we let
$f\ind{i,0}(x\ind{i,j},\cdot)=\mu_0\ldef{}(\nicefrac{1}{2}+\Delta,\nicefrac{1}{2},\ldots,\nicefrac{1}{2})$ for
all $j$. Next, for each $b\in\cA_0$ let
\[
  \mu_b = (\nicefrac{1}{2}+\Delta,\ldots,\underbrace{\nicefrac{1}{2}+2\Delta}_{\text{coordinate $b$}},
\ldots,\nicefrac{1}{2}),
\] with the $\nicefrac{1}{2}+2\Delta$ entry on
the $b$th coordinate. Next, for each $l\in\crl{1,\ldots,k}$ and
$b\in\cA_0$ we let
\[
  f\ind{i,l,b}(x\ind{i,j},\cdot) = \left\{
    \begin{array}{ll}
      (\nicefrac{1}{2}+\Delta,\nicefrac{1}{2},\ldots,\nicefrac{1}{2}),\quad{}&j\neq{}l,\\
     \mu_b,\quad{}&j=l.\\
    \end{array}
    \right.
  \]
As with $\Pi$, we obtain $\cF$ by stitching together $\cF\ind{1},\ldots,\cF\ind{d}$
over their respective subsets of the domain. It is easily verified
that $\Pi$ is precisely the set of argmax policies for $\cF$.

We define the context distribution $\cD$ as follows:
\begin{itemize}
\item Let $\cD\ind{i}$ be the distribution over $\cX\ind{i}$ which
  takes each of $x\ind{i,1},\ldots,x\ind{i,k}$ with probability
  $\veps$ and takes $x\ind{i,0}$ with probability $1-k\veps\geq{}0$.
\item Let $\cD=\frac{1}{d}\sum_{i=1}^{d}\cD\ind{i}$.
\end{itemize}

\paragraph{Verifying the problem parameters}
We now choose the parameter $d$ and verify that $\abs*{\cF}$,
$\PolicyDisA$, and $\ValueDisA$ are bounded appropriately. We first
observe that since each value function $f\in\cF$ deviates from the
vector $\mu_0$ on at
most $d$ contexts, and since it can switch to one of the $\K-1$
vectors $\crl{\mu_b}_{b\in\cA_0}$
each such context,
\[
  \abs*{\cF} \leq{} \sum_{i=0}^{d}{d(k+1)\choose i}\cdot{}(\K-1)^{i}
  \leq{}\K^{d}\cdot{}\prn*{\frac{ed(k+1)}{d}}^{d} = (e^{2}Ak)^{d}.
\]
We choose $d$ to be the largest possible value such that
$(e^{2}Ak)^{d}\leq{}F$; this is possible by the assumption that
$\theta\leq{}e^{-2}F/A$. Since $(e^{2}Ak)^{d+1}\geq{}F$, we have
\[
  d\geq{}\frac{\log{}F}{2\log(e^{2}A/\veps)}\geq{}\bigom(1)\cdot\frac{\log{}F}{\log(A/\veps)},
\]
where the last expression uses that $\veps\leq{}1$ and
$A\geq{}2$. Hence, going forward, we focus our attention to lower bounding the
regret in terms of $d$, which is equivalent to $\log{}F$ up to logarithmic factors.

To verify that the conditions of \pref{thm:disagreement_lb} hold, we show that the \policydis is
bounded by $\theta$ for our construction. Observe that for any fixed policy $\pistar\in\Pi$,
$\bbP_{\cD}(\exists{}\pi\in\Pi:\pi(x)\neq{}\pistar(x))\leq{}k\veps$,
since for any block $\cX\ind{i}$, all policies agree on
$x\ind{i,0}$. Hence, if we define $\Pi_{\veps}(\pistar)=\crl*{\pi\in\Pi:
  \bbP_{\cD}(\pi(x)\neq{}\pistar(x))\leq{}\veps}$, we have that for all $\veps'\geq{}\veps$,
\begin{align*}
  \bbP_{\cD}(\exists{}\pi\in\Pi_{\veps'}(\pistar):\pi(x)\neq{}\pistar(x))
  \leq{}\bbP_{\cD}(\exists{}\pi\in\Pi:\pi(x)\neq{}\pistar(x))
  \leq{}k\veps
  \leq{}k\veps'.
\end{align*}
It follows that $\PolicyDis{\veps}\leq{}k\leq\theta$ for any choice of
$\pistar$. Since $\theta\geq{}1$, we also have $k\geq{}\theta/2$.

\paragraph{The lower bound}

For each $i,l,b$, let $\rdist\ind{i,l,b}$ denote the reward distribution given by $\ls(a)\sim{}\Ber(f\ind{i,l,b}(x,a))$ conditioned on $x$ for
each $x\in\cX\ind{i}$. Note that $b$ is ignored if $l=0$, so in this
case we may abbreviate to $\rdist\ind{i,0}$.

For any sequence $\alpha=\alpha_1,\ldots,\alpha_d$, where
$\alpha_i=(v_i,b_i)$ for $v_i\in\crl*{0,1,\ldots,k}$ and
$b_i\in\crl*{2,\ldots,\K{}}$, we let $\bbP_{\alpha}$
denote the law of
$\cH\ldef{}(x_1,a_1,\ls_1(a_1)),\ldots,(x_T,a_T,\ls_T(a_T))$
when
the loss distribution for $\cX\ind{i}$ is given by $\rdist
\ind{i,v_i,b_i}$. We sample the problem instance $\alpha$ from a distribution $\nu$ defined as follows:
\begin{itemize}
\item For each $i$, set $v_i=0$ with probability $1/2$. Otherwise,
  select $v_i$ uniformly from $\crl*{1,\ldots,k}$. Select $b_i$ uniformly
  from $\crl*{2,\ldots,\K{}}$. 
\end{itemize}
Note that when $v_i=0$ we disregard the value of $b_i$.
Let $\pi_{\alpha}$ denote the
  optimal policy under $\alpha$, and let $\pi_{\alpha}\ind{i}$ denote its
  restriction to $\cX\ind{i}$.

Let $p_t(x,a)$ be the algorithm's action distribution at time $t$, as
in \pref{eq:action_dist}. Let $\En_{\alpha}\brk*{\cdot}$ denote the expectation under
$\bbP_{\alpha}$. Since the Bayes reward function has uniform gap
$\Delta$ for each choice of $\alpha$, \pref{lem:regret_metric} implies
that
\begin{align*}
  \En_{\alpha}\brk{\Reg} \geq{} \frac{\Delta}{2}T\cdot\En_{\alpha}\nrm*{\pbar-\pi_{\alpha}}_{L_1(\cD)},
\end{align*}
where $\bar{p}=\frac{1}{T}\sum_{t=1}^{T}p_t$. Moreover, we have
\[
  \nrm*{\pbar-\pi_{\alpha}}_{L_1(\cD)} =
  \frac{1}{d}\sum_{i=1}^{d}\nrm*{\pbar-\pi_{\alpha}\ind{i}}_{L_1(\cD\ind{i})},
\]
where $\pi_{\alpha}\ind{i}$ is the restriction of $\pi_{\alpha}$ to $\cX\ind{i}$.

Now, for each $(v,b)$ for $v\in\crl*{1,\ldots,k}$ and $b\in\cA$, let
$\bbP\ind{i,v,b}=\En_{\alpha\sim\nu}\brk*{\bbP_{\alpha}\indic\crl{v_i=v,b_i=b}}$,
and let
$\bbP\ind{i,0}\En_{\alpha\sim\nu}\brk*{\bbP_{\alpha}\indic\crl{v_i=0}}$. Then
for each $i$, the inequalities above imply that
\begin{align}
  \En_{\nu\sim\alpha}\En_{\alpha}\brk{\Reg}
&\geq{}\frac{\Delta}{2}\frac{T}{d}\cdot\sum_{i=1}^{d}\En_{\alpha\sim\nu}\En_{\alpha}\nrm*{\pbar-\pi_{\alpha}\ind{i}}_{L_1(\cD\ind{i})}\notag\\
  &=\frac{\Delta}{2}\frac{T}{d}\cdot\sum_{i=1}^{d}\prn*{\frac{1}{2}\En_{\bbP\ind{i,0}}\nrm*{\pbar-\pi\ind{i,0}}_{L_1(\cD\ind{i})}
+ \frac{1}{2k\K_0}\sum_{l=1}^{k}\sum_{b\in\cA_0}\En_{\bbP\ind{i,l,b}}\nrm*{\pbar-\pi\ind{i,l,b}}_{L_1(\cD\ind{i})}
    },\notag
\end{align}
where $\K_0\ldef{}\K-1$. In particular, we conclude that
\begin{align}
    &\En_{\nu\sim\alpha}\En_{\alpha}\brk{\Reg} \geq{}
\frac{\Delta}{4}\frac{T}{d}\cdot\sum_{i=1}^{d}\frac{1}{k\K_0}\sum_{l=1}^{k}\sum_{b\in\cA_0}\En_{\bbP\ind{i,l,b}}\nrm*{\pbar-\pi\ind{i,l,b}}_{L_1(\cD\ind{i})}.\label{eq:dis_lb_2}
\end{align}
Let $\cI\subseteq\brk*{d}$ denote the set of indices $i$ for which
\begin{equation}
  \frac{1}{k\K_0}\sum_{l=1}^{k}\sum_{b\in\cA_0}\En_{\bbP\ind{i,l,b}}\nrm*{\pbar-\pi\ind{i,l,b}}_{L_1(\cD\ind{i})}
  \leq{} \frac{\veps}{32}.\label{eq:hyp_test_condition}
\end{equation}
We consider two cases. First, if $\abs*{\cI}\leq{}d/2$, then
\pref{eq:dis_lb_2} implies that
\[
\En_{\nu\sim\alpha}\En_{\alpha}\brk{\Reg}\geq{}2^{-8}\cdot\veps{}\Delta{}T,
\]
so we are done. For the other case, we have $\abs*{\cI}\geq{}d/2$, and
we argue that the algorithm must solve a
hypothesis test for each index in this set. Let $i\in\cI$ be fixed. First, observe that for any
$(l,b)\neq(l',b')$, we have
\[
\nrm*{\pi\ind{i,l,b}-\pi\ind{i,l',b'}}_{L_1(\cD\ind{i})} \geq{} \veps
\]
Consider the Markov chain in which we draw
$(l,b)\sim\unif(\brk*{k}\times\brk*{\K_0})$, then take
$\cH\sim\bbP\ind{i,l,b}$. Letting $\bbP_i$ be the law under this
process, \pref{eq:hyp_test_condition} implies that
\[
  \bbP_i\prn*{
\nrm*{\pbar-\pi\ind{i,l,b}}_{L_1(\cD\ind{i})}>\veps/2
    } \leq{} \frac{1}{16}
  \]
  by Markov's inequality. Hence, if we define
  $(\hat{l},\hat{b})=\argmin_{l',b'}\nrm*{\pbar-\pi\ind{i,l',b'}}_{L_1(\cD\ind{i})}$,
  we have that \[\bbP_i\prn*{(\hat{i},\hat{b})=(i,b)}\geq{}1-1/16.\]
  Applying \pref{lem:fano}, this implies that that for any reference
  measure $\bbQ$ over $\cH$,
  \[
    \log{}2 \leq{} \frac{1}{k\K_0}\sum_{l=1}^{k}\sum_{b\in\K_0}\kl{\bbQ}{\bbP\ind{i,l,b}}.
  \]
  We choose $\bbQ=\bbP\ind{i,0}$. Now note that under the distributions
$\bbP\ind{i,l,b}$ and $\bbP\ind{i,0}$, the feedback the algorithm receives in a
given round is identical unless both a) $x_t=x\ind{i,l}$ and b) the
algorithm selects action $b$. Hence, using the usual likelihood
ratio argument, we have
\begin{align*}
  \kl{\bbP\ind{i,0}}{\bbP\ind{i,l,b}}
  &=
  \kl{\Ber(\nicefrac{1}{2})}{\Ber(\nicefrac{1}{2}+2\Delta)}\cdot{}\En_{\bbP\ind{i,0}}\brk*{
    \abs*{\crl*{t\mid{}x_i=l,a_t=b}}
    } \\  &\leq{}
            4\Delta^{2}\En_{\bbP\ind{i,0}}\brk*{
    \abs*{\crl*{t\mid{}x_i=l,a_t=b}}
    },
\end{align*}
where we have used \pref{lem:kl_bernoulli} and that
$\Delta\leq{}1/4$. Thus, taking the average, we have
\begin{align*}
  \log{}2 \leq{} 4\Delta^{2}\frac{1}{k\K{}_0}\sum_{l=1}^{k}\sum_{b\in\cA_0}
  \En_{\bbP\ind{i,0}}\brk*{
    \abs*{\crl*{t\mid{}x_t=x\ind{i,l},a_t=b}}
    } \\= 4\Delta^{2}\frac{1}{k\K{}_0}
  \En_{\bbP\ind{i,0}}\brk*{
    \abs*{\crl*{t\mid{}x_t\in\cX\ind{i}\setminus{}x\ind{i,0},a_t\neq\aone}}
    },
\end{align*}
or in other words,
\begin{equation}
  \label{eq:per_block_lb}
\En_{\bbP\ind{i,0}}\brk*{
    \abs*{\crl*{t\mid{}x_t\in\cX\ind{i}\setminus{}x\ind{i,0},a_t\neq\aone}}
    } \geq{} \frac{\log{}2}{4}\frac{k\K_0}{\Delta^{2}}.
  \end{equation}
  Clearly, we have
  \begin{align*}
    \En_{\alpha\sim\nu}\En_{\alpha}\brk*{\Reg}
    &\geq{} \Delta\En_{\alpha\sim\nu}\En_{\alpha}\brk*{\sum_{t=1}^{T}\sum_{i=1}^{d}\indic\crl{
    x_t\in\cX\ind{i}\setminus{}x\ind{i,0}, a_t\neq{}\aone, v_i=0
    }}\\
    &\geq{}\frac{\Delta}{2}\sum_{i=1}^{d}\En_{\bbP\ind{i,0}}\brk*{
    \abs*{\crl*{t\mid{}x_t\in\cX\ind{i}\setminus{}x\ind{i,0},a_t\neq\aone}}
    }.
  \end{align*}
  Since we have assumed that $\abs*{\cI}\geq{}\frac{d}{2}$, this
  expression combined with \pref{eq:per_block_lb} implies that
  \[
    \En_{\alpha\sim\nu}\En_{\alpha}\brk*{\Reg} \geq{}
    \frac{\log{}2}{8}\cdot{}\frac{\K_0kd}{\Delta} = \bigomt(1)\cdot{}\frac{\theta{}A\log{}F}{\Delta}.
  \]
  This proves \pref{thm:disagreement_lb}.

\paragraph{Deducing \pref{thm:value_disagreement_lb}}
To prove \pref{thm:value_disagreement_lb} with parameters $A$, $F$,
$\Delta$, $\veps$, and $\theta$,
we apply the construction
above with parameter $\veps_0\ldef{}\frac{\veps^{2}}{\Delta^{2}}$,
which is admissible for any choice of
$\theta\leq{}1/\veps_0\wedge{}e^{-2}A/F$. Since we have already shown
that this construction ensures that any algorithm has
\[
\En\brk*{\Reg}\geq{}\bigomt(1)\cdot\min\crl*{\veps_0\Delta{}T, \frac{\theta\K\log{}F}{\Delta}}
\]
for some instance, all that remains is to verify that
$\ValueDis{\Delta/2}{\veps}\leq\theta$.

For any fixed $\fstar\in\cF$, we have
\[
\bbP_{\cD,p}\prn*{
  \exists{}f\in\cF: \abs*{f(x,a)-\fstar(x,a)}>\Delta/2}\leq{}k\veps_0,
\]
since all of the value functions in $\cF$ agree on $x\ind{i,0}$ for
all $i$. Furthermore, since $\abs*{f(x,a)-\fstar(x,a)}\leq{}\Delta$
for all $f\in\cF$, we also have
\[
\bbP_{\cD,p}\prn*{
  \exists{}f\in\cF: \abs*{f(x,a)-\fstar(x,a)}>\Delta'}\leq{}0
\]
for all $\Delta'\geq{}\Delta$. It follows that
\begin{align*}
  &\sup_{\Delta'\geq{}\Delta/2,\veps'\geq{}\veps}\frac{\Delta'^{2}}{\veps'^{2}}\bbP_{\cD,p}\prn*{
  \exists{}f\in\cF:
  \abs*{f(x,a)-\fstar(x,a)}>\Delta',\;\nrm*{f-\fstar}_{\cD,p}\leq{}\veps'}\\
  &\leq
    \sup_{\Delta'\geq{}\Delta/2,\veps'\geq{}\veps}\frac{\Delta'^{2}}{\veps'^{2}}k\veps_0\indic\crl{\Delta'<\Delta}\\
  &\leq
      \frac{\Delta^{2}}{\veps^{2}}k\veps_0
  \leq{}k,
\end{align*}
so that $\ValueDis{\Delta/2}{\veps}\leq{}k\leq\theta$.

\qed
\subsection{Proof of \cref*{thm:star_lb_simple}}
\label{sec:star_lb}

Rather than proving \pref{thm:star_lb_simple} directly, in this
section we first state a more general theorem which implies it, then
prove this theorem. To state the stronger theorem, we recall the
definition of the \emph{graph dimension}, which is a multiclass
analogue of the VC dimension.
\begin{definition}[Graph dimension \citep{natarajan1989learning}]
  The graph dimension $\dgraph(\Pi,\pistar)$ is the largest number $d$
  such that there exists $S\subseteq\cX$ with $\abs*{S}=d$ such that
  for all $T\subseteq{}S$, there exists $\pi\in\Pi$ such that
  \begin{equation}
    \label{eq:graph_dim}
    \pi(x)=\pistar(x)\;\;\forall{}x\in{}T,\mathand\pi(x)\neq{}\pistar(x)\;\;\forall{}x\in{}S\setminus{}T.
  \end{equation}
\end{definition}

\begin{theorem}[Full version of \pref{thm:star_lb_simple}]
  \label{thm:star_lb}
  Let a policy class $\Pi$, $\pistar\in\Pi$, and $\Delta\in(0,1/8)$ be given. Then
there exist $\cF$, $\cD$, and $\fstar\in\cF$ with
$\pi_{\fstar}=\pistar$ such that the following properties hold:
\begin{itemize}
\item $\crl*{\pi_f\mid{}f\in\cF}\subseteq\Pi$.
  \item Any algorithm with $\En\brk*{\Reg}\leq{}
    \frac{\Delta{}T}{32\PolicyStarWL}$ for all instances realizable by $\cF$
    must have
  \begin{equation}
    \En\brk*{\Reg}\geq\frac{\PolicyStarL}{8\Delta}
  \end{equation}
  for some instance in which $\fstar$ is the Bayes reward function.
\item For any algorithm with $\En\brk*{\Reg}\leq{}
  \frac{\Delta{}T}{16}$ for all instances realizable by $\cF$, there
  exists a realizable instance for which
  \begin{equation}
    \En\brk*{\Reg}\geq\frac{\dgraph(\Pi,\pistar)}{64\Delta}.
  \end{equation}
\end{itemize}
\end{theorem}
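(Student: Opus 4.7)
My plan is to construct $\cF$, $\cD$, and $\fstar$ by combining two separate subfamilies---one keyed to the strong policy star witnesses, the other to the graph dimension shattered set---and to apply \pref{lem:generic_hypothesis_test} to each in turn, always taking the reference measure $\bbQ$ to be the law $\bbP_{\fstar}$ of the instance with Bernoulli rewards of mean $\fstar$. Combining the two subfamilies preserves the uniform gap and the $\{\pi_f : f \in \cF\} \subseteq \Pi$ property, so both can be bundled into a single $(\cF, \cD, \fstar)$.

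For the strong star number bound, let $m = \PolicyStarL$ and fix witnesses $\{(x\ind{i}, a\ind{i})\}_{i=1}^m$ with policies $\pi\ind{i} \in \Pi$ from \pref{def:policy_star_strong}; let $z\ind{1},\ldots,z\ind{k}$ enumerate the distinct values among $\{x\ind{i}\}$, so $k \leq \PolicyStarWL$. I would take $\cD = \unif(\{z\ind{j}\})$, set $\fstar(x,\pistar(x)) = \tfrac{1}{2}+\Delta$ with $\fstar(x,a) = \tfrac{1}{2}$ otherwise, and define $f\ind{i}$ by $f\ind{i}(x\ind{i},a\ind{i}) = \tfrac{1}{2}+2\Delta$, $f\ind{i}(x\ind{i},\pistar(x\ind{i})) = \tfrac{1}{2}+\Delta$, $f\ind{i}(x\ind{i},a) = \tfrac{1}{2}$ for other $a$, and $f\ind{i}(x,\cdot)$ chosen globally to induce $\pi\ind{i}$ with gap $\Delta$ so that $\pi_{f\ind{i}} = \pi\ind{i} \in \Pi$. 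By the strong star property $\pi\ind{i}$ agrees with $\pistar$ on every distinct witness different from $x\ind{i}$, so $f\ind{i}$ coincides with $\fstar$ on $\supp(\cD)$ away from $x\ind{i}$, and the $\pi\ind{i}$'s form a $(1/k)$-packing in $L_1(\cD)$. Applying \pref{lem:generic_hypothesis_test} with $\veps = 1/k$ (compatible with the theorem's precondition $\En[\Reg] \leq \Delta T/(32\PolicyStarWL) \leq \Delta T/(32k)$), and using that $\bbP_{\fstar}$ and $\bbP\ind{i}$ differ only at plays of $a\ind{i}$ at $x\ind{i}$, each per-hypothesis KL is at most $c\Delta^2 \En_{\fstar}[N\ind{i}_{a\ind{i}}]$. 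Distinctness of the pairs $(x\ind{i}, a\ind{i})$ gives $\sum_i \En_{\fstar}[N\ind{i}_{a\ind{i}}] \leq \En_{\fstar}[\Reg]/\Delta$, and combining with the Fano bound $m\log 2 \leq \sum_i \kl{\bbP_{\fstar}}{\bbP\ind{i}}$ yields $\En_{\fstar}[\Reg] = \bigom(\PolicyStarL/\Delta)$.

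For the graph dimension bound I use the same three-level construction with $d = \dgraph(\Pi,\pistar)$ single-bit alternatives: for the shattered set $S = \{x\ind{1},\ldots,x\ind{d}\}$ and associated witness policies $\pi_T$, let $a\ind{i} = \pi_{S\setminus\{x\ind{i}\}}(x\ind{i})$ and let $f\ind{i}$ be the three-level value function inducing $\pi_{S\setminus\{x\ind{i}\}} \in \Pi$ with gap $\Delta$. This exhibits $d$ strong star witnesses at distinct contexts, so the Part~1 Fano argument applied to this subfamily yields $\En_{\fstar}[\Reg] \geq cd/\Delta$, but under the \emph{stronger} precondition $\En[\Reg] \leq \Delta T/(32d)$. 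To close the gap between this and the theorem's weaker precondition $\Delta T/16$, I argue by cases on $T$: if $T \geq C d^2/\Delta^2$ for a sufficiently large constant $C$, then $\Delta T/(32d) \geq d/(64\Delta)$, so any algorithm violating the stronger precondition automatically satisfies the conclusion on one of the $f\ind{i}$-instances; if instead $T < Cd^2/\Delta^2$, I would apply a bit-by-bit Le~Cam analysis at each $x\ind{i}$ separately, where distinguishing $\fstar$ from $f\ind{i}$ requires $\bigom(1/\Delta^2)$ samples at $x\ind{i}$ and the Pinsker/change-of-measure accounting summed over the $d$ independent bits again delivers regret $\bigom(d/\Delta)$ on some instance.

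The main technical obstacle across both parts is preventing ``free information'' leakage in the KL analysis: had I used the simpler two-level construction $f\ind{i}(x\ind{i},a\ind{i}) = \tfrac{1}{2}+\Delta$, the measures $\bbP_{\fstar}$ and $\bbP\ind{i}$ would differ at the $\pistar$-action too, and plays of the \emph{optimal} action $\pistar(x\ind{i})$ under $\fstar$ (which incur zero regret) would distinguish the hypotheses, destroying the regret-to-KL conversion. The three-level construction---in which $\bbP_{\fstar}$ and each alternative share the reward distribution at $\pistar(x\ind{i})$ and differ only at $a\ind{i}$---is precisely what forces every informative play to cost $\Delta$ regret under $\fstar$. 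Unifying the two subfamilies into a single $\cF$ (taking the union, with $\cD$ supported on the union of the two context supports) preserves both Fano arguments at the cost of only mild constant-factor loosening, which is absorbed by the conservative constants $1/32,1/8,1/16,1/64$ stated in the theorem.
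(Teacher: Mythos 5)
Your Part 1 argument follows the paper's essentially verbatim: three-level Bernoulli value functions keyed to the strong star witnesses, Fano with reverse KL (\pref{lem:generic_hypothesis_test}) against the reference measure $\bbP_{\fstar}$, and the observation that distinctness of the pairs $(x\ind{i},a\ind{i})$ lets you route the KL accounting through $\En_0\brk{\Reg}$. This is correct, including your explanation for the third reward level: it pins the reward distribution at $\pistar(x\ind{i})$ across all hypotheses so that only plays of $a\ind{i}$, each costing $\Delta$ under $\fstar$, are informative.

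Part 2 has a genuine gap. The paper does not reuse Fano there; it runs an Assouad argument over the \emph{full hypercube} of $2^d$ hypotheses $\crl*{f_v}_{v\in\pmo^{d}}$ supplied by the graph-dimension shattering --- one $f_v$ for each subset $T\subseteq S$ --- and the averaging over all $v$ is what makes the $d$ per-bit contributions add up on a single (averaged) instance. Your construction builds only the $d+1$ single-flip functions $\fstar,f\ind{1},\ldots,f\ind{d}$, which is not enough structure to run Assouad: a two-point comparison of $\fstar$ with a fixed $f\ind{i}$ yields a regret lower bound of order $\Delta T/d$ on the instance $f\ind{i}$ alone, and such bounds across distinct $i$ cannot be summed since they concern different instances. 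Your phrase ``Pinsker/change-of-measure accounting summed over the $d$ independent bits'' has no object to sum over unless $\cF$ is first enlarged to contain all $2^d$ functions $f_v$; the crucial step in the paper is precisely the Cauchy--Schwarz/Pinsker manipulation of $\frac{1}{d}\sum_i\nrm*{\bbP_{+i}-\bbP_{-i}}_1$, where $\bbP_{\pm i}$ are mixtures over the full cube with the $i$th bit frozen. Your large-$T$ branch of the case analysis is valid, but it only disposes of $T\geq{}Cd^{2}/\Delta^{2}$; the live regime $d/\Delta^{2}\lesssim T\lesssim d^{2}/\Delta^{2}$, where the precondition $\En\brk{\Reg}\leq\Delta T/16$ can actually be satisfied, is exactly where the hypercube construction and the bit-averaging are indispensable.
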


\begin{proof}[\pfref{thm:star_lb}]
  Let $(x\ind{1},a\ind{1}),\ldots,(x\ind{N},a\ind{N})$ witness the strong star
  number, and let us abbreviate $z\ind{i}=(x\ind{i},a\ind{i})$. Let
  $x\ind{i_1},\ldots,x\ind{i_m}$ denote the unique contexts above, and
  note that $m\leq{}\PolicyStarWL$. Let
  $y\ind{1},\ldots,y\ind{d}\in\cX$ witness the graph number
  $\dgraph(\Pi,\pistar)$. 
                       We choose $\cD$ to select $x$ from
                       $\unif(x\ind{i_1},\ldots,x\ind{i_m})$ with
                       probability $1/2$ and
                       $\unif(y\ind{1},\ldots,y\ind{d})$ with
                       probability $1/2$.

                       Define
  \[
    f\ind{0}(x,a) = \left\{\begin{array}{ll}
                             \nicefrac{1}{2}+\Delta,\quad{}&a=\pistar(x),\\
                             \nicefrac{1}{2},\quad{}&a\neq\pistar(x).\\
                           \end{array}
                         \right.
                       \]
                       We choose $\fstar=f\ind{0}$ as the reference
                       regression function in the theorem statement.
                       \paragraph{Part 1: Star number}
                       Let $\pi\ind{1},\ldots,\pi\ind{N}$ be the
                       policies accompanying
                       $z\ind{1},\ldots,z\ind{N}$ that witness the strong star
                       number. For each $i$, define a regression
                       function $f\ind{i}$ to have
                       $f\ind{i}(x\ind{j},a)=f\ind{0}(x\ind{j},a)$ for all
                       $x\neq{}x\ind{i}$, and let
                       \[
                         f\ind{i}(x\ind{i},a) =
                         \left\{\begin{array}{ll}
                                  \nicefrac{1}{2}+\Delta,\quad{}&a=\pistar(x),\\
                                  \nicefrac{1}{2}+2\Delta,\quad{}&a=\pi\ind{i}(x),\\
                                  \nicefrac{1}{2},\quad{}&\text{otherwise.}\\
                                \end{array}
                              \right.
                            \]
                            For
                            $x\notin\crl*{x\ind{1},\ldots,x\ind{N}}$,
                              we simply define
                              \[
                                f\ind{i}(x,a) =
                                \left\{\begin{array}{ll}
                                         \nicefrac{1}{2}+\Delta,\quad{}&a=\pi\ind{i}(x),\\
                                         \nicefrac{1}{2},\quad{}&\text{otherwise.}\\
                                       \end{array}
                                     \right.
                            \]
                            \begin{lemma}
                              For all $i$, we have $\pi\ind{i}=\pi_{f\ind{i}}$.
                            \end{lemma}
                            \begin{proof}
                              Let $i$ be fixed. For
                              $x\notin\crl*{x\ind{1},\ldots,x\ind{N}}$
                              the result is immediate. For
                              $x\ind{j}\neq{}x\ind{i}$,
                              \pref{def:policy_star_strong} requires
                              that
                              $\pi\ind{i}(x\ind{j})=\pistar(x\ind{j})$,
                              and we have
                              $\pi_{f\ind{i}}(x\ind{j})=\pi_{\fstar}(x\ind{j})=\pistar(x\ind{j})$. Finally,
                              we have
                              $\pi_{f\ind{i}}(x\ind{i})=\pi\ind{i}(x\ind{i})$
                              by construction.
                            \end{proof}

                            Observe that all of the regression
                            functions have uniform gap $\Delta$.
                            Moreover, for all $i\neq{}j$ for
                            $i,j\geq{}1$, we have
                            \[
                              \nrm*{\pi\ind{i}-\pi\ind{j}}_{L_1(\cD)}
                              =\frac{2}{m}.
                            \]

                            For each $0\leq{}i\leq{}m$, we define an instance with
                            law $\bbP\ind{i}$ by taking $\cD$ as the
                            context distribution and
                            $\ls(a)\sim\Ber(f\ind{i}(x,a))\mid{}x$. Let
                            $\En_{i}\brk*{\cdot}$ denote the
                            expectation under this instance.
                            Then by \pref{lem:generic_hypothesis_test}, any
                            algorithm that has
                            $\En_i\brk*{\Reg}\leq{}\frac{\Delta{}T}{32m}$
                            for all $1\leq{}i\leq{}N$ must have
                            \[
                              \log{}2\leq{}\frac{1}{N}\sum_{i=1}^{N}\kl{\bbP\ind{0}}{\bbP\ind{i}}.
                            \]
                            We have
                            \begin{align*}
                              \kl{\bbP\ind{0}}{\bbP\ind{i}}
                              &\leq{} \En_{0}\brk*{\abs*{\crl{t :
                                (x_t,a_t)=(x\ind{i},a\ind{i})}}}\cdot{}\kl{\Ber(\nicefrac{1}{2}+\Delta)}{\Ber(\nicefrac{1}{2}+2\Delta)}\\
                              &\leq{}\En_{0}\brk*{\abs*{\crl{t :
                                (x_t,a_t)=(x\ind{i},a\ind{i})}}}\cdot{}4\Delta^{2},
                            \end{align*}
                            as long as $\Delta\leq{}1/8$ (by \pref{lem:kl_bernoulli}). Since the
                            tuples $(x\ind{i},a\ind{i})$ are distinct, this implies that
                            \[
                              \En_0\abs*{\crl{t:a_t\neq\pistar(x_t)}}
                              \geq{} N\frac{\log{}2}{4\Delta^{2}}.
                            \]
                            Finally, we use that since $f\ind{0}$ has
                            uniform gap $\Delta$, we have
                            \[
                              \En_0\brk*{\Reg}\geq{}\Delta
                              \En_0\abs*{\crl{t:a_t\neq\pistar(x_t)}}
                              \geq{}
                              N\frac{\log{}2}{4\Delta}\geq{}\frac{N}{8\Delta}.
                            \]

\paragraph{Part 2: Graph number}
For each $v\in\pmo^{d}$, let $\pi_v\in\Pi$ be such that
$\pi_v(y\ind{i})=\pistar(y\ind{i})$ if $v_i=1$ and
$\pi_v(y\ind{i})\neq{}\pistar(y\ind{i})$ if $v_i=-1$; these policies
are guaranteed to exist by the definition of the graph number. Let
$f\ind{0}$ be defined as before, and for each $v$ define
\[
  f_v(y\ind{i},a) = \left\{\begin{array}{ll}
                             \nicefrac{1}{2}+\Delta,\quad{}&a=\pistar(y\ind{i}),\\
                             \nicefrac{1}{2}+2\Delta,\quad{}&a=\pi_v(y\ind{i}),\quad{}v_i=-1,\\
                             \nicefrac{1}{2},\quad{}&\text{otherwise}.\\
                           \end{array}
                         \right.
                       \]
for each $i$. For
                            $x\notin\crl*{y\ind{1},\ldots,y\ind{d}}$,
                              define
                              \[
                                f_v(x,a) =
                                \left\{\begin{array}{ll}
                                         \nicefrac{1}{2}+\Delta,\quad{}&a=\pi_v(x),\\
                                         \nicefrac{1}{2},\quad{}&\text{otherwise.}\\
                                       \end{array}
                                     \right.
                            \]
For each $v$, let $\bbP_{v}$ denote the law of
                       instance with $\cD$ as the context distribution
                       and $r(a)\sim\Ber(f_v(x,a))\mid{}x$, and let
                       $\En_{v}\brk*{\cdot}$ denote the expectation under
                         this distribution.

                       Our
                       starting point is the following lemma.
                       \begin{lemma}
                         \label{lem:bandit_assouad}
                         With our choice of $\cD$, for any Bayes reward function $\fstar$ with gap
                         $\Delta$, we have
                         \[
                           \Regbar
                           \geq{}\frac{\Delta{}T}{8d}\sum_{i=1}^{d}\indic\crl*{\bar{p}(y\ind{i},\pi_{\fstar}(y\ind{i}))<1/2}.
                         \]
                       \end{lemma}
                       \begin{proof}
                         We have
                         \begin{align*}
                           \Regbar &\geq{}
\sum_{t=1}^{T}\sum_{i=1}^{d}\Pr_{\cD}(x_t=y\ind{i})\sum_{a}p_t(y\ind{i},a)(\fstar(y\ind{i},\pi_{\fstar}(y\ind{i}))-\fstar(y\ind{i},a))\\
                                   &\geq{}
                                     \frac{1}{2d}\sum_{t=1}^{T}\sum_{i=1}^{d}\sum_{a}p_t(y\ind{i},a)(\fstar(y\ind{i},\pi_{\fstar}(y\ind{i}))-\fstar(y\ind{i},a))\\
                                   &\geq{}
                                     \frac{\Delta}{4d}\sum_{t=1}^{T}\sum_{i=1}^{d}\nrm*{p_t(y\ind{i},\cdot)-\pi_{\fstar}(y\ind{i},\cdot)}_1\\
                                   &\geq{}\frac{\Delta{}T}{4d}\sum_{i=1}^{d}\nrm*{\bar{p}(y\ind{i},\cdot)-\pi_{\fstar}(y\ind{i},\cdot)}_1\\
                                   &\geq{}\frac{\Delta{}T}{8d}\sum_{i=1}^{d}\indic\crl*{\bar{p}(y\ind{i},\pi_{\fstar}(y\ind{i}))<1/2}.
                         \end{align*}
                       \end{proof}
                       Using \pref{lem:bandit_assouad}, for any $v$ we
                       have
                       \begin{align*}
                         \En_{v}\brk*{\Reg}&\geq{}
                                             \frac{\Delta{}T}{8d}\sum_{i=1}^{d}\bbP_{v}\prn*{\bar{p}_t(y\ind{i},\pi_{v}(y\ind{i}))<1/2}\\
                                           &\geq{}
                                             \frac{\Delta{}T}{8d}\sum_{i=1}^{d}\bbP_{v}\prn*{\bar{p}_t(y\ind{i},\pistar(y\ind{i}))<1/2\wedge{}v_i=+1}
                                             +\bbP_{v}\prn*{\bar{p}_t(y\ind{i},\pistar(y\ind{i}))>1/2\wedge{}v_i=-1}.
                       \end{align*}
                       In particular, suppose we sample
                       $v\sim\pmo^{d}$ uniformly at random. Then, if
                       we let
                       $\bbP_{+i} =
                       \frac{1}{2^{d}}\sum_{v\in\cV:v_i=1}\bbP_{v}$
                       and
                       $\bbP_{-i} =
                       \frac{1}{2^{d}}\sum_{v\in\cV:v_i=-1}\bbP_{v}$,
                       we have
                       \begin{align*}
                         \En\brk*{\Reg}
                         &\geq{}
                           \frac{\Delta{}T}{8d}\sum_{i=1}^{d}\bbP_{+i}\prn*{\bar{p}_t(y\ind{i},\pistar(y\ind{i}))<1/2}
                           +\bbP_{-i}\prn*{\bar{p}_t(y\ind{i},\pistar(y\ind{i}))>1/2}.
                       \end{align*}
                       By the definition of the total variation
                       distance, we can lower bound this by
                       \[
                         \frac{\Delta{}T}{8}\prn*{1-
                           \frac{1}{d}\sum_{i=1}^{d}\nrm*{\bbP_{+i}-\bbP_{-i}}_1}.
                       \]
                       In particular, suppose that
                       $\En\brk*{\Reg}\leq{}\frac{\Delta{}T}{16}$. Then,
                       rearranging, we have
                       \[
                         \frac{1}{d}\sum_{i=1}^{d}\nrm*{\bbP_{+i}-\bbP_{-i}}_1
                         \geq{}\frac{1}{2}.
                       \]
                       For each $v$, let $\bbP_{v,+i}$ and
                       $\bbP_{v,-i}$ denote the law when the
                       $i$th coordinate of $v$ is forced to $+1$ and
                       $-1$, respectively. Then we have
                       \begin{align*}
                         \frac{1}{d}\sum_{i=1}^{d}\nrm*{\bbP_{+i}-\bbP_{-i}}
                         \leq{}\prn*{\frac{1}{d}\sum_{i=1}^{d}\nrm*{\bbP_{+i}-\bbP_{-i}}^{2}_1}^{1/2}
                         &\leq{}\prn*{\frac{1}{d}\sum_{i=1}^{d}\frac{1}{2^{d}}\sum_{v\in\cV}\nrm*{\bbP_{v,+i}-\bbP_{v,-i}}^{2}_1}^{1/2}\\
                         &\leq{}2^{1/2}\prn*{\frac{1}{d}\sum_{i=1}^{d}\frac{1}{2^{d}}\sum_{v\in\cV}\kl{\bbP_{v,+i}}{\bbP_{v,-i}}}^{1/2},
                       \end{align*}
                       where the last inequality is Pinsker. Rearranging, we have
                       \[
                         \sum_{i=1}^{d}\frac{1}{2^{d}}\sum_{v\in\cV}\kl{\bbP_{v,+i}}{\bbP_{v,-i}}
                         \geq\frac{d}{8}.
                       \]
                       For any fixed $v$, we have
                       \begin{align*}
                         \kl{\bbP_{v,+i}}{\bbP_{v,-i}}
                         &\leq{} \En_{v,+i}\brk*{\abs*{\crl{t :
                           x_t = y\ind{i}, a_t\neq\pistar(y\ind{i})
                           }}}\cdot{}\kl{\Ber(\nicefrac{1}{2}+\Delta)}{\Ber(\nicefrac{1}{2}+2\Delta)}\\
                         &\leq{}\En_{v,+i}\brk*{\abs*{\crl{t :
                           x_t = y\ind{i}, a_t\neq\pistar(y\ind{i})
                           }}}\cdot{}4\Delta^{2},
                       \end{align*}
                       since $\Delta\leq1/8$ (by \pref{lem:kl_bernoulli}). As a result, we have
                       \begin{align*}
                         \sum_{i=1}^{d}\frac{1}{2^{d}}\sum_{v\in\cV}\kl{\bbP_{v,+i}}{\bbP_{v,-i}}
                         &  \leq{} \sum_{i=1}^{d}\frac{1}{2^{d}}\sum_{v\in\cV}\En_{v,+i}\brk*{\abs*{\crl{t :
                           x_t = y\ind{i}, a_t\neq\pistar(y\ind{i})
                           }}}\cdot{}4\Delta^{2}\\
                         &    \leq{} \sum_{i=1}^{d}\frac{1}{2^{d}}\sum_{v\in\cV,v_i=+1}\En_{v,+i}\brk*{\abs*{\crl{t :
                           x_t = y\ind{i}, a_t\neq\pistar(y\ind{i})
                           }}}\cdot{}8\Delta^{2}.
                       \end{align*}
                       Rearranging, this gives
                       \[
                         \sum_{i=1}^{d}\frac{1}{2^{d}}\sum_{v\in\cV,v_i=+1}\En_{v,+i}\brk*{\abs*{\crl{t
                               : x_t = y\ind{i},
                               a_t\neq\pistar(y\ind{i}) }}} \geq{}
                         \frac{d}{64\Delta^{2}}.
                       \]

                       Finally, observe that for any $v$, we have
                       \begin{align*}
                         \En_v\brk*{\Reg}\geq{}\Delta\En_v\brk*{\sum_{t=1}^{T}\indic\crl*{a_t\neq{}\pi_v(x_t)}}
                         &\geq{}\Delta\En_v\brk*{\sum_{i=1}^{d}\abs*{\crl*{t:x_t=y\ind{i},
                           a_t\neq{}\pi_v(y\ind{i})}}}\\
                         &\geq{}\Delta\En_v\brk*{\sum_{i=1}^{d}\abs*{\crl*{t:x_t=y\ind{i},
                           a_t\neq{}\pistar(y\ind{i}), v_i=+1}}}.
                       \end{align*}
                       Hence, under $v$ drawn from the uniform
                       distribution, we have
                       \[
                         \En\brk*{\Reg}= \En_{v\sim\unif}
                         \En_v\brk*{\Reg} \geq{}
                         \Delta\cdot\sum_{i=1}^{d}\frac{1}{2^{d}}\sum_{v:v_i=+1}\En_v\brk*{\sum_{i=1}^{d}\abs*{\crl*{t:
                               x_t=y\ind{i},
                               a_t\neq{}\pistar(y\ind{i})}}} \geq{}
                         \frac{d}{64\Delta}.
                       \]
                     \end{proof}

\subsection{Proof of \cref*{thm:ss_star_lb}}
  \XDeclarePairedDelimiter{\nrmld}{
  left=\lVert,
  right=\rVert,
  subscript=L_1(\cD)
}
Let $\Delta\in(0,1)$ and $\fstar\in\cF$ be given. Let $T$ be fixed and
recall that $\veps_T$ is
chosen as the largest value such that
\begin{equation}
\veps_T^{2}T\leq{}\ValueStarWL{\Delta/2}{\veps_T}.  
\end{equation}
If $\ValueStarWL{\Delta/2}{\veps_T}=0$ the theorem is trivial, so let
us consider the case where $m\equiv
\ValueStarWL{\Delta/2}{\veps_T}\geq{}1$. Let $x\ind{1},\ldots,x\ind{m}$
  and $f\ind{1},\ldots,f\ind{m}$ witness the star number, and let
  $f\ind{0}=\fstar$. Define a family of contextual bandit instances with law $\bbP\ind{i}$ for $0\leq{}i\leq{}m$
via:
\begin{itemize}
\item Take the context distribution $\cD$ to be uniform over
  $\crl*{x_1,\ldots,x_m}$.
\item Choose $\ls(a)\sim{}\cN(f\ind{i}(x,a),1)\mid{}x$.
\end{itemize}
We let $\En_i\brk*{\cdot}$ denote the expectation under
$\bbP\ind{i}$. We draw the true instance (which we denote by $0\leq{}v\leq{}m$) from a
distribution $\nu$ defined as follows: Choose $v=0$ with probability $1/2$ and $v$ uniform from $\crl*{1,\ldots,m}$
otherwise.

Let $\pi\ind{i}$ denote the
optimal policy for instance $i$. Since $\fstar$ has gap $\Delta$ for
every context, the conditions characterizing the star number ensure
the following.
\begin{lemma}
  \label{lem:weak_star_lb_conditions}
  For all $1\leq{}i\leq{}m$,
  \begin{enumerate}
  \item $f\ind{i}$ has gap $\frac{\Delta}{2}$ over $\crl*{x_1,\ldots,x_m}$.
  \item $\pi\ind{i}(x\ind{j})=\pi\ind{0}(x\ind{j})$ for all
    $i\neq{}j$.
  \item $\pi\ind{i}(x\ind{i})\neq\pi\ind{0}(x\ind{i})$.
  \end{enumerate}
\end{lemma}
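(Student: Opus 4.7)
The plan is to verify each of the three claims by directly unpacking the definition of the weak value function star number (Definition \ref{def:value_star_weak}) with parameters $\Delta/2$ and $\veps_T$, and combining it with the uniform-gap property of $\fstar$. The key quantitative input is that $\veps_T \in (0, \Delta/4)$, so the sum constraint $\sum_{j\ne i} \max_a |f^{(i)}(x^{(j)},a)-\fstar(x^{(j)},a)|^2 < \veps_T^2 < \Delta^2/16$ in particular bounds each individual term: $\max_a |f^{(i)}(x^{(j)},a)-\fstar(x^{(j)},a)| < \Delta/4$ for every $j\ne i$.

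For Part 3, the claim $\pi^{(i)}(x^{(i)})\ne \pi^{(0)}(x^{(i)})=\pistar(x^{(i)})$ is just condition \ref{item:weak_star1} of Definition \ref{def:value_star_weak}.

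For Part 2, fix $i\ne j$. Since $\fstar$ has uniform gap $\Delta$ at $x^{(j)}$ and $f^{(i)}(x^{(j)},\cdot)$ is uniformly within $\Delta/4$ of $\fstar(x^{(j)},\cdot)$, a two-sided perturbation argument shows the argmax is preserved: for any $a\ne \pistar(x^{(j)})$,
\[
f^{(i)}(x^{(j)},\pistar(x^{(j)})) - f^{(i)}(x^{(j)},a) \ge \fstar(x^{(j)},\pistar(x^{(j)}))-\fstar(x^{(j)},a) - 2\cdot\tfrac{\Delta}{4} \ge \Delta - \tfrac{\Delta}{2} = \tfrac{\Delta}{2} > 0,
\]
so $\pi^{(i)}(x^{(j)}) = \pistar(x^{(j)}) = \pi^{(0)}(x^{(j)})$. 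The same inequality also proves the portion of Part 1 for $j\ne i$: namely, $f^{(i)}$ has gap at least $\Delta/2$ at $x^{(j)}$.

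For Part 1 at $j=i$, condition \ref{item:weak_star1} of Definition \ref{def:value_star_weak}, invoked with scale parameter $\Delta/2$, directly guarantees a gap of $\Delta/2$ between $\pi_{f^{(i)}}(x^{(i)})$ and every other action, which is exactly what is claimed. Combining the two cases ($j=i$ and $j\ne i$) gives Part 1 in full.

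No step here poses a real obstacle; the whole lemma is bookkeeping that translates the abstract conditions defining $\ValueStarWL{\Delta/2}{\veps_T}$ into the structural properties of the family $\{f^{(i)}\}$ needed downstream to run the Fano/KL-divergence argument. The only subtlety worth highlighting is that the quadratic sum constraint in condition \ref{item:weak_star3} is used in the crude pointwise form $\max_a|f^{(i)}(x^{(j)},a)-\fstar(x^{(j)},a)| < \veps_T < \Delta/4$; tighter use of the sum (rather than per-point) bound is what later controls the KL divergences in the Fano computation, but is not needed for this lemma.
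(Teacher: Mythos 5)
Your proof is correct and follows essentially the same route as the paper's: both use the sum constraint in condition 3 of Definition~\ref{def:value_star_weak} to get the pointwise bound $\max_a|f\ind{i}(x\ind{j},a)-\fstar(x\ind{j},a)|<\veps_T\leq\Delta/4$ for $j\neq i$, then a two-sided perturbation of the $\Delta$-gap of $\fstar$ to deduce gap $\Delta/2$ and argmax preservation, with the $j=i$ case and Part 3 read directly off condition 1.
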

\begin{proof}
The third item is immediate. For the first item, we have two
cases. First, for $x\ind{i}$, that $f\ind{i}$ has gap $\Delta/2$ is
immediate from \pref{item:weak_star1} of
\pref{def:value_star_weak}. For $j\neq{}i$, \pref{item:weak_star3}
ensures that
\begin{align*}
  f\ind{i}(x\ind{j},\pistar(x\ind{j})
  &\geq \fstar(x\ind{j},\pistar(x\ind{j}) - \veps\\
  &\geq{} \max_{a\neq{}\pistar(x\ind{j})}\fstar(x\ind{j},a) + \Delta -
    \veps
  \geq{} \max_{a\neq{}\pistar(x\ind{j})}f(x\ind{j},a) + \Delta - 2\veps.
\end{align*}
In particular, since $\veps\leq{}\Delta/4$, we have
\[
  f\ind{i}(x\ind{j},\pistar(x\ind{j})\geq{}\max_{a\neq{}\pistar(x\ind{j})}f(x\ind{j},a) + \frac{\Delta}{2}.
\]
This also implies the second item.
  
\end{proof}
Now to begin, observe that we have
\begin{equation}
\En_{v\sim\nu}\En_{v}\brk*{\Reg} \geq{}
\max\crl*{\frac{1}{2}\En_{0}\brk*{\Reg},
  \frac{1}{2m}\sum_{i=1}^{m}\En_{i}\brk*{\Reg}}.\label{eq:star_risk_lb}
\end{equation}
Let $c$ be a constant, and assume that
$\En_{v\sim\nu}\En_{v}\brk*{\Reg}\leq{}c\frac{\Delta{}T}{m}$. Then by \pref{lem:regret_metric} and
\pref{eq:star_risk_lb}, we have
\begin{align*}
\frac{\Delta}{8}T\cdot\frac{1}{m}\sum_{i=1}^{m}\En_{i}\nrmld*{\bar{p}_T-\pi\ind{i}}
  \leq{} c\frac{\Delta{}T}{m},
\end{align*}
or
\begin{align*}
\frac{1}{m}\sum_{i=1}^{m}\En_{i}\nrmld*{\bar{p}_T-\pi\ind{i}}
  \leq{} 8c\frac{1}{m}.
\end{align*}
In particular, choosing $c=\frac{1}{64}$, this implies that
\begin{align*}
\frac{1}{m}\sum_{i=1}^{m}\bbP_{i}\prn*{\nrmld*{\bar{p}_T-\pi\ind{i}}>2/m}
  \leq{} \frac{1}{16}.
\end{align*}
Since the policies form a $\frac{4}{m}$-packing in the sense that
$\nrmld*{\pi\ind{i}-\pi\ind{j}}\geq{}\frac{4}{m}$ for $i\neq{}j$, this
implies that we can identify the true policies with probability at
least $1-1/16$ over the random draw of $v$ (conditioned on $v\neq{}0$). Hence,
applying \pref{lem:fano}, we have that for any reference measure
$\bbQ$,
\[
  \log{}2 = \frac{1}{2}\log(16)-\log{}2
  \leq{} \frac{1}{m}\sum_{i=1}^{m}\kl{\bbQ}{\bbP\ind{i}}.
\]
We choose $\bbQ=\bbP\ind{0}$. Defining $N_i=\abs*{\crl*{t:
    x_t=x\ind{i},\l;a_t\neq{}\pi\ind{0}(x\ind{i})}}$ and $N=\abs*{\crl*{t:a_t\neq{}\pistar(x_t)}}$, we use
\pref{item:weak_star2} and \pref{item:weak_star3} of
\pref{def:value_star_weak} to compute
\begin{align*}
  \frac{1}{m}\sum_{i=1}^{m}\kl{\bbQ}{\bbP\ind{i}}
  \leq{}\frac{1}{m}\sum_{i=1}^{m}\frac{\Delta^{2}}{2}\En_0\brk*{N_i} +
  \frac{\veps^{2}}{2}\frac{T}{m} = \frac{\Delta^{2}}{2m}\En_0\brk*{N} + \frac{\veps^{2}}{2}\frac{T}{m}.
\end{align*}
In particular, the choice for $\veps_T$ in \pref{eq:star_fixed_point}
ensures that $\veps_T^{2}T/m\leq{}1$. Hence, rearranging, we have
\[
\En_0\brk*{N}\geq{} \frac{2m}{\Delta^2}(\log{}2-1/2)\geq\frac{m}{4\Delta^2}.
\]
Since $\En_0\brk*{\Reg}\geq{}\Delta\En_0\brk*{N_0}$, this
implies that $\En_0\brk*{\Reg}\geq{}\frac{m}{4\Delta}$.
\qed

\subsection{Proof of \cref*{thm:eluder_lb}}
Let $\Delta\in(0,1)$ and $\fstar\in\cF$ be given. We consider instances defined by a value function $f\in\cF$ and
sequence $x_1,\ldots,x_T$, in which the contexts in the sequence are presented
one-by-one non-adaptively and rewards are drawn as
$r_t(a)\sim{}\cN(f(x_t,a),1)$. For each such $(f,x_{1:T})$ pair, we let
\[
  \widebar{R}(f,x_{1:T}) = \En\brk*{\sum_{t=1}^{T}r_t(\pi_f(x_t)) -
  \sum_{t=1}^{T}r_t(a_t)\mid{}f,x_{1:T}}
\]
and
\[
  R(f,x_{1:T}) = \sum_{t=1}^{T}f(x_t,\pi_f(x_t)) -
  \sum_{t=1}^{T}f(x_t,a_t)
\]
be the sum of conditional-expected instantaneous regrets under this
process, which is a random variable.

For each $T$, we let $d_T=\ValueEluderWL{\Delta/2}{\veps_T}$, where we recall that $\veps_T$ is
chosen such that
\begin{equation}
  \label{eq:eluder_fixed_point2}
\veps_T^{2}T\leq{}\ValueEluderWL{\Delta/2}{\veps_T}.  
\end{equation}
In particular, it will be useful to note that $d_T$ is non-increasing
with $T$, so that $d_T \leq{}T$ for $T$ sufficiently large.

Fix $T$ sufficiently large such that $d\leq{}T$. Let $x\ind{1},\ldots,x\ind{d}$ and $f\ind{1},\ldots,f\ind{d}$ realize
the eluder dimension, and let $f\ind{0}=\fstar$. Let
$\pi\ind{0},\ldots,\pi\ind{d}$ be the induced policies. We have the following
result.
\begin{lemma}
  \label{lem:weak_eluder_lb_conditions}
  For all $1\leq{}i\leq{}d$,
  \begin{enumerate}
  \item $f\ind{i}$ has gap $\frac{\Delta}{2}$ over $\crl*{x_1,\ldots,x_i}$.
  \item $\pi\ind{i}(x\ind{j})=\pi\ind{0}(x\ind{j})$ for all
    $j<i$.
  \item $\pi\ind{i}(x\ind{i})\neq\pi\ind{0}(x\ind{i})$.
  \end{enumerate}
\end{lemma}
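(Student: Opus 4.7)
The plan is to prove the three items by a direct translation of the proof of \pref{lem:weak_star_lb_conditions}, exploiting the fact that the only structural difference between \pref{def:value_star_weak} and \pref{def:value_eluder_weak} is that the bound on the cumulative squared deviations ranges over $j \ne i$ in the former but $j < i$ in the latter. Since the conclusions we need to establish here only involve $j < i$, the weaker eluder-style sum constraint is exactly strong enough.

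Item 3 is immediate from \pref{item:weak_eluder1} of \pref{def:value_eluder_weak}, which by definition asserts $\pi_{f\ind{i}}(x\ind{i}) \ne \pi^\star(x\ind{i})$. For item 1 at the point $x\ind{i}$, \pref{item:weak_eluder1} directly gives a gap of $\Delta \ge \Delta/2$. For item 1 at a point $x\ind{j}$ with $j<i$, I will use \pref{item:weak_eluder3}: since every summand in $\sum_{j<i}\max_a |f\ind{i}(x\ind{j},a)-\fstar(x\ind{j},a)|^2$ is nonnegative and the total is less than $\veps^2$, each individual term satisfies $\max_a|f\ind{i}(x\ind{j},a)-\fstar(x\ind{j},a)| < \veps$. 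Combined with the uniform gap $\Delta$ of $\fstar$ and the hypothesis $\veps < \Delta/4$ from \pref{def:value_eluder_weak}, a two-sided triangle inequality argument gives
\[
f\ind{i}(x\ind{j},\pistar(x\ind{j})) \ge \max_{a \ne \pistar(x\ind{j})} f\ind{i}(x\ind{j},a) + \Delta - 2\veps \ge \max_{a \ne \pistar(x\ind{j})} f\ind{i}(x\ind{j},a) + \frac{\Delta}{2},
\]
which establishes the gap at $x\ind{j}$. This same chain of inequalities shows that $\pistar(x\ind{j})$ is the unique maximizer of $f\ind{i}(x\ind{j},\cdot)$, yielding item 2.

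There is essentially no obstacle here; the argument is a straightforward adaptation of the star-number case, and the key observation is simply that the proof of \pref{lem:weak_star_lb_conditions} only ever invoked the sum bound for indices $j$ that satisfy $j < i$ when establishing items 1 and 2 at the earlier contexts. Hence the eluder variant's weaker ``$j<i$'' summation suffices verbatim.
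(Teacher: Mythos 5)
Your proof is correct and matches the paper's, which simply cites the proof of \pref{lem:weak_star_lb_conditions}: item 3 is immediate from \pref{item:weak_eluder1}, the gap at $x\ind{i}$ is immediate from \pref{item:weak_eluder1}, and the gap at $x\ind{j}$ for $j<i$ (and hence item 2) follows from the triangle inequality applied to the per-point bound $\max_a\abs{f\ind{i}(x\ind{j},a)-\fstar(x\ind{j},a)}<\veps$ extracted from \pref{item:weak_eluder3}, using $\veps<\Delta/4$ to conclude $\Delta-2\veps\geq\Delta/2$. Your closing observation is exactly the right one: the eluder lemma's conclusions are restricted to $j\leq{}i$, so the "$\sum_{j<i}$" constraint supplies precisely the per-point deviations needed, and the argument from the star case transfers verbatim.
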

\begin{proof}
  See proof of \pref{lem:weak_star_lb_conditions}.
\end{proof}

Let $X_T=\crl*{x_1,\ldots,x_T}$ denote the sequence that plays
  $x\ind{1}$ for the first $\floor{T/d}$ rounds, $x\ind{2}$ for the second
  $\floor{T/d}$ rounds, and so forth, and choose an arbitrary fixed
  context to fill out the remaining rounds. Let $X_T\ind{i}$ denote the subsequence
  consisting of the first $i$ blocks of contexts. Let $\cI_i\subset\brk*{T}$
  denote the rounds within the $i$th block. Set $M=\floor{T/d}$.

  Let the index $i$ be fixed, and let
  $N_T\ind{i}=\abs*{\crl*{t\in\cI_i\mid{}a_t\neq{}\pistar(x\ind{i})}}$ be the
  number of times the algorithm deviates from $\pistar$ in block
  $i$ when the sequence is $X_T$. Then we have
  \[
    R(\fstar,X_T\ind{i}) \geq{} \Delta\frac{M}{2}\indic\crl*{N_T\ind{i}\geq{}M/2},
  \]
  which follows from the fact that $\fstar$ has gap $\Delta$, and
    \[
      R(f\ind{i},X_T\ind{i}) \geq{} \Delta\frac{M}{4}\indic\crl*{N_T\ind{i}<M/2},
  \]
which follows from the first part of \pref{lem:weak_eluder_lb_conditions} (i.e., $\pistar$ is
$\Delta/2$-suboptimal under $f\ind{i}$ on context $x\ind{i}$).

Let $\tau_i$ denote the last round in block $i$, and let $\bbP_{f}\ind{i}$ denote the law of
$(a_1,\ldots,a_{\tau_i}),(\ls_1(a_1),\ldots,\ls_{\tau_i}(a_{\tau_i}))$
when $f$ is the mean reward function and $X_t\ind{i}$ is the context
sequence. With $A=\crl*{N_T\ind{i}\geq{}M/2}$, the development above
implies that
\begin{align}
  \notag
  \wb{R}(\fstar,X_T\ind{i}) + \wb{R}(f\ind{i},X_T\ind{i})
  &  \geq{} \Delta\frac{M}{4}\prn*{\bbP_{\fstar}\ind{i}(A) +
    \bbP_{f\ind{i}}\ind{i}(A^{\comp})} \\
    &  \geq{} \Delta\frac{M}{8}\exp\prn*{-\kl{\bbP_{\fstar}\ind{i}}{\bbP_{f\ind{i}}\ind{i}}},\label{eq:eluder_pinsker}
\end{align}
where the last inequality is by \pref{lem:pinsker}. In particular, if
the algorithm has $\En\brk*{\Reg}\leq{}\frac{T}{64d}\Delta\leq{}\frac{M}{32}\Delta$ for all sequences
of length $T$ that are realizable by $\cF$, then this implies that
\[
\log{}2 \leq{} \kl{\bbP_{\fstar}\ind{i}}{\bbP_{f\ind{i}}\ind{i}}.
\]

Using \pref{item:weak_eluder1} and \pref{item:weak_eluder2} of \pref{def:value_eluder_weak}, as well as the fact
that rewards are Gaussian, we have
\begin{align*}
  \kl{\bbP_{\fstar}\ind{i}}{\bbP_{f\ind{i}}\ind{i}}
  &\leq{} \frac{\Delta^{2}}{2}\cdot\En_{\bbP_{\fstar}\ind{i}}\brk*{N_T\ind{i}} +
    \sum_{j<i}\frac{T}{d}\cdot\frac{1}{2}\max_{a}(f\ind{i}(x\ind{j},a)-\fstar(x\ind{j},a))^{2}\\
  &\leq{} \frac{\Delta^{2}}{2}\cdot\En_{\bbP_{\fstar}\ind{i}}\brk*{N_T\ind{i}} + \frac{T}{2d}\veps_T^{2}
\end{align*}
Our choice of $\veps_T$ ensures that $\veps_T^2T/d\leq{}1$.
It follows that
\begin{equation}
  \label{eq:eluder_kl_bound}
  \kl{\bbP_{\fstar}\ind{i}}{\bbP_{f\ind{i}}\ind{i}}
  \leq{} \frac{1}{2}\Delta^{2}\cdot\En_{\bbP_{\fstar}\ind{i}}\brk*{N_T\ind{i}} + \frac{1}{2}.
\end{equation}
Combining this inequality with \pref{eq:eluder_pinsker} and
rearranging, we get
\[
  \En_{\bbP_{\fstar}\ind{i}}\brk*{N_T\ind{i}}
  \geq{} \frac{2}{\Delta^{2}}\prn*{\log{}2-\frac{1}{2}}.
\]
Importantly, if we let $\bbP_{\fstar}$ denote the full law of
$(a_1,\ldots,a_{T}),(\ls_1(a_1),\ldots,\ls_{T}(a_{T}))$ under
$\fstar$ and $X_{T}$, this is equivalent to
\[
    \En_{\bbP_{\fstar}}\brk*{N_T\ind{i}}
  \geq{} \frac{2}{\Delta^{2}}\prn*{\log{}2-\frac{1}{2}} \geq{} \frac{1}{4\Delta^{2}}.
\]
since $N_T\ind{i}$ is a measurable function of the data up to and
including the $i$th block.
Finally, under $\fstar$, we have
\[
\wb{R}(\fstar,X_T) \geq{}
\sum_{i=1}^{d}\Delta\En_{\bbP_{\fstar}}\brk*{N_T\ind{i}} \geq{} \frac{d}{4\Delta^{2}}.
\]
\qed

\subsection{Proof of \cref*{thm:policy_eluder_lb}}
Fix $T\geq\PolicyEluderL$ and let $(x\ind{1},a\ind{1}),\ldots,(x\ind{m},a\ind{m})$ and
$\pi\ind{1},\ldots,\pi\ind{N}$ witness the \policyeluder. Define
  \[
    f\ind{0}(x,a) = \left\{\begin{array}{ll}
                             \nicefrac{1}{2}+\Delta,\quad{}&a=\pistar(x),\\
                             \nicefrac{1}{2},\quad{}&a\neq\pistar(x).\\
                           \end{array}
                         \right.
                       \]
                       We choose $\fstar=f\ind{0}$ as the reference
                       regression function in the theorem statement.
                       For each $1\leq{}i\leq{}N$, define a regression
                       function $f\ind{i}$ to have
                       $f\ind{i}(x\ind{j},a)=f\ind{0}(x\ind{j},a)$ for all
                       $j<i$ with $x\ind{j}\neq{}x\ind{i}$, and set
                       \[
                         f\ind{i}(x\ind{i},a) =
                         \left\{\begin{array}{ll}
                                  \nicefrac{1}{2}+\Delta,\quad{}&a=\pistar(x\ind{i}),\\
                                  \nicefrac{1}{2}+2\Delta,\quad{}&a=\pi\ind{i}(x\ind{i}),\\
                                  \nicefrac{1}{2},\quad{}&\text{otherwise}\\
                                \end{array}
                              \right.
                            \]
                            Finally, for all $x\notin\crl*{x\ind{1},\ldots,x\ind{i}}$, set
                              \[
                                f\ind{i}(x,a) =
                                \left\{\begin{array}{ll}
                                         \nicefrac{1}{2}+\Delta,\quad{}&a=\pi\ind{i}(x),\\
                                         \nicefrac{1}{2},\quad{}&\text{otherwise.}\\
                                       \end{array}
                                     \right.
                            \]
                            Clearly we have
                            $\pi\ind{i}=\pi_{f\ind{i}}$. We consider instances defined by a value function $f\in\cF$ and
sequence $x_1,\ldots,x_T$, in which the contexts in the sequence are presented
one-by-one non-adaptively and rewards are drawn as
$r_t(a)\sim{}\Ber(f(x_t,a))$. For each such $(f,x_{1:T})$ pair, we let
\[
  \widebar{R}(f,x_{1:T}) = \En\brk*{\sum_{t=1}^{T}r_t(\pi_f(x_t)) -
  \sum_{t=1}^{T}r_t(a_t)\mid{}f,x_{1:T}}
\]
and
\[
  R(f,x_{1:T}) = \sum_{t=1}^{T}f(x_t,\pi_f(x_t)) -
  \sum_{t=1}^{T}f(x_t,a_t),
\]
with the actions in the latter quantity tacitly understood to depend
on rewards drawn under $r_t(a)\sim{}\Ber(f(x_t,a))$.

Let $X_T=\crl*{x_1,\ldots,x_T}$ denote the sequence that plays
  $x\ind{1}$ for the first $\floor{T/m}$ rounds, $x\ind{2}$ for the second
  $\floor{T/m}$ rounds, and so forth, and choose an arbitrary fixed
  context to fill out the remaining rounds. Let $X_T\ind{i}$ denote the subsequence
  consisting of the first $i$ blocks of contexts. Let $\cI_i\subset\brk*{T}$
  denote the rounds within the $i$th block. Set $M=\floor{T/m}$.                            

                              Let the index $i$ be fixed, and let
  $N_T\ind{i}=\abs*{\crl*{t\in\cI_i\mid{}a_t\neq{}\pistar(x\ind{i})}}$ be the
  number of times the algorithm deviates from $\pistar$ in block
  $i$ when the sequence is $X_T$. Then we have
  \[
    R(\fstar,X_T\ind{i}) \geq{} \Delta\frac{M}{2}\indic\crl*{N_T\ind{i}\geq{}M/2},
  \]
and
    \[
      R(f\ind{i},X_T\ind{i}) \geq{} \Delta\frac{M}{2}\indic\crl*{N_T\ind{i}<M/2},
  \]
since both instances have uniform gap $\Delta$ over block
$i$.\footnote{Note that if $x\ind{j}=x\ind{i}$ for some $j\leq{}i$ the
  latter lower bound may be pessimistic, since the algorithm will incur regret by following $\pistar$ in block $j$ as well.}

Let $\tau_i$ denote the last round in block $i$, and let $\bbP_{f}\ind{i}$ denote the law of
$(a_1,r_1(a_1)),\ldots,(a_{\tau_i},\ls_{\tau_i}(a_{\tau_i}))$
when $f$ is the mean reward function and $X_t\ind{i}$ is the context
sequence. With $A=\crl*{N_T\ind{i}\geq{}M/2}$, the inequalities above
imply that
\begin{align}
  \notag
  \wb{R}(\fstar,X_T\ind{i}) + \wb{R}(f\ind{i},X_T\ind{i})
  &  \geq{} \Delta\frac{M}{2}\prn*{\bbP_{\fstar}\ind{i}(A) +
    \bbP_{f\ind{i}}\ind{i}(A^{\comp})} \\
    &  \geq{} \Delta\frac{M}{4}\exp\prn*{-\kl{\bbP_{\fstar}\ind{i}}{\bbP_{f\ind{i}}\ind{i}}},\notag
\end{align}
where the last inequality is by \pref{lem:pinsker}. In particular, if
the algorithm has $\En\brk*{\Reg}\leq{}\frac{\Delta{}T}{32m}\leq\frac{\Delta{}M}{16}$ for all sequences
of length $T$ that are realizable by $\cF$, then this implies that
\[
\log{}2 \leq{} \kl{\bbP_{\fstar}\ind{i}}{\bbP_{f\ind{i}}\ind{i}}.
\]
Now, since $\fstar$ and $f\ind{i}$ agree on $x\ind{j}$ for all $j<i$
with $x\ind{j}\neq{}x\ind{i}$, 
we have
\begin{align*}
  \kl{\bbP_{\fstar}\ind{i}}{\bbP_{f\ind{i}}\ind{i}}
   &\leq{}
     \kl{\Ber(\nicefrac{1}{2}+\Delta)}{\Ber(\nicefrac{1}{2}+2\Delta)}\cdot\En_{\bbP_{\fstar}\ind{i}}\brk*{\abs*{\crl*{
     t\leq{}\tau_i : x_t=x\ind{i}, a_t=a\ind{i}
     }}}\\
   &\leq{} 4\Delta^{2}\cdot
     \En_{\bbP_{\fstar}\ind{i}}\brk*{\abs*{\crl*{
     t\leq{}\tau_i : x_t=x\ind{i}, a_t=a\ind{i}
     }}}
\end{align*}
since $\Delta\leq{}1/8$.
Rearranging, we have
\[
\En_{\bbP_{\fstar}\ind{i}}\brk*{\abs*{\crl*{
     t\leq{}\tau_i : x_t=x\ind{i}, a_t=a\ind{i}
     }}}
  \geq{} \frac{1}{8\Delta^{2}}.
\]
If we let $\bbP_{\fstar}$ denote the full law of
$(a_1,\ldots,a_{T}),(\ls_1(a_1),\ldots,\ls_{T}(a_{T}))$ under
$\fstar$ and $X_{T}$, this is equivalent to
\[
  \En_{\bbP_{\fstar}}\brk*{\abs*{\crl*{
     t\leq{}\tau_i : x_t=x\ind{i}, a_t=a\ind{i}
     }}}
  \geq{} \frac{1}{8\Delta^{2}}
\]
since $\abs*{\crl*{
     t\leq{}\tau_i : x_t=x\ind{i}, a_t=a\ind{i}
     }}$ is a measurable function of the data up to and
including the $i$th block.
Finally, since this argument holds for all $1\leq{}i\leq{}m$, under
$\fstar$ we have
\[
\wb{R}(\fstar,X_T) \geq{}
\sum_{i=1}^{m}\Delta\En_{\bbP_{\fstar}}\brk*{\abs*{\crl*{
     t : x_t=x\ind{i}, a_t=a\ind{i}
     }} } \geq{} \frac{m}{8\Delta}.
\]
\qed

\section{Additional Proofs from \cref*{sec:cb}}
\subsection{Proofs for Examples}
\begin{proof}[\pfref{prop:value_disagreement_linear}]
  We first prove the bound for linear function classes
  Let $p:\cX\to\Delta(\cA)$ be fixed and define
  $\Sigma=\En_{\cD,p}\brk*{\phi(x,a)\phi(x,a)^{\trn}}$. Let
  $\cW^{\star}\ldef{}\cW-w^{\star}$, where
  $\fstar(x,a)=\tri*{w^{\star},\phi(x,a)}$. Then we have
  \begin{align*}
    &\bbP_{\cD,p}\prn*{
    \exists{}f\in\cF: \abs*{f(x,a)-\fstar(x,a)}>\Delta,\;\nrm*{f-\fstar}^{2}_{\cD,p}\leq\veps^{2}
    }\\
    &=\bbP_{\cD,p}\prn*{
    \exists{}w\in\cW^{\star}: \abs*{\tri*{w,\phi(x,a)}}>\Delta,\;\tri*{w,\Sigma{}w}\leq\veps^{2}
    }\\
    &\leq{}\bbP_{\cD,p}\prn[\bigg]{
    \sup_{w: \tri*{w,\Sigma{}w}\leq\veps^{2}}\abs*{\tri*{w,\phi(x,a)}}>\Delta
      }\\
    &=\bbP_{\cD,p}\prn[\bigg]{
      \tri*{\phi(x,a),\Sigma^{\dagger}\phi(x,a)}\geq{}\frac{\Delta^{2}}{\veps^{2}}
    },
  \end{align*}
  where we have used that $\phi(x,a)\in\mathrm{Im}(\Sigma)$ almost surely.
  By Markov's inequality, we can upper bound
  \begin{align*}
    \bbP_{\cD,p}\prn[\bigg]{
    \tri*{\phi(x,a),\Sigma^{\dagger}\phi(x,a)}\geq{}\frac{\Delta^{2}}{\veps^{2}}
    }\leq{}
     \frac{\veps^{2}}{\Delta^{2}}\cdot{}\En_{\cD,p}\brk*{\tri*{\phi(x,a),\Sigma^{\dagger}\phi(x,a)}}.
  \end{align*}
  Finally, we use that
  $\En_{\cD,p}\brk*{\tri*{\phi(x,a),\Sigma^{\dagger}\phi(x,a)}}=\mathrm{tr}(\Sigma\Sigma^{\dagger})\leq{}d$.

  To bound the \valuedis for the general case, we use that
  \begin{align*}
    &\bbP_{\cD,p}\prn*{
    \exists{}f\in\cF: \abs*{f(x,a)-\fstar(x,a)}>\Delta,\;\nrm*{f-\fstar}^{2}_{\cD,p}\leq\veps^{2}
    }\\
    &=\bbP_{\cD,p}\prn*{
    \exists{}w\in\cW: \abs*{\sigma(\tri{w,\phi(x,a)})-\sigma(\tri{w^{\star},\phi(x,a)})}>\Delta,\;\nrm*{\sigma(\tri{w,\phi(\cdot)})-\sigma(\tri{w^{\star},\phi(\cdot)})}^{2}_{\cD,p}\leq\veps^{2}
      }\\
    &\leq{}\bbP_{\cD,p}\prn*{
    \exists{}w\in\cW: c_u\abs*{\tri{w-w^{\star},\phi(x,a)}}>\Delta,\;c_{l}^{2}\nrm*{\tri*{w-w^{\star},\phi(\cdot)}}^{2}_{\cD,p}\leq\veps^{2}
      }\\
    &=\bbP_{\cD,p}\prn*{
    \exists{}w\in\cW^{\star}: \abs*{\tri*{w,\phi(x,a)}}>\Delta/c_u,\;\tri*{w,\Sigma{}w}\leq\veps^{2}/c_l^2
    }.
  \end{align*}
  From here, we proceed exactly as in the linear case to get the result.
\end{proof}

\begin{proof}[\pfref{prop:sparse_linear}]
Let $\cW^{\star}=\crl*{w\in\bbR^{d}\mid{}\nrm*{w}_0\leq{}2s}$. Then for
  any $\Delta,\veps>0$, we have
  \begin{align*}
&\bbP_{\cD,p}\prn*{
    \exists{}f\in\cF: \abs*{f(x,a)-\fstar(x,a)}>\Delta,\;\nrm*{f-\fstar}_{\cD,p}\leq\veps
                   }    \\
    &\leq{}\bbP_{\cD,p}\prn*{
    \exists{}w\in\cW^{\star}: \abs*{\tri*{w,\phi(x,a)}}>\Delta,\;\alpha\tri*{w,\Sigma^{\star}w}\leq\veps^{2}
      }    \\
    &\leq{}\bbP_{\cD,p}\prn*{
    \exists{}w\in\cW^{\star}: \abs*{\tri*{w,\phi(x,a)}}>\Delta,\;\alpha\lambda_{\mathrm{re}}\nrm*{w}_2^2\leq\veps^{2}
      }    \\
    &\leq{}\indic\crl*{
    \exists{}w\in\cW^{\star}: \nrm*{w}_1>\Delta,\;\alpha\lambda_{\mathrm{re}}\nrm*{w}_2^2\leq\veps^{2}
      }    \\
        &\leq{}\indic\crl*{
          \exists{}w\in\cW^{\star}: \sqrt{2s}\nrm*{w}_2>\Delta,\;\alpha\lambda_{\mathrm{re}}\nrm*{w}_2^2\leq\veps^{2}
          }    \\
    &\leq\indic\crl*{
      \frac{2s\veps^{2}}{\alpha\lambda_{\mathrm{re}}}>\Delta^{2}
      }\\
    &\leq{}\frac{2s}{\alpha\lambda_{\mathrm{re}}}\cdot\frac{\veps^{2}}{\Delta^{2}}.
  \end{align*}
  Since this holds for all choices of $\Delta$ and $\veps$, the result
  is established.
\end{proof}

\subsection{Proofs for Star Number Results}
\begin{proof}[\pfref{prop:value_policy_star_separation}]
Let $\Delta\in(0,2/3)$ be fixed. Let $\cX=\brk*{d}$ and $\cA=\crl*{0,1}$. Set
$\fstar(x,0)=\frac{1}{2}\Delta$ and $\fstar(x,1)=\Delta$ for
all $x$. For each $i$, define a function $f_i$ as follows.
\begin{itemize}
\item $f_i(x,1)=\Delta$ for all $x$.
\item $f_i(i,0)=\frac{3}{2}\Delta$.
\item $f_i(j,0)=0$ for all $j\neq{}i$.
\end{itemize}
Let $\cF=\crl*{\fstar,f_1,\ldots,f_d}$. Clearly we have
$\PolicyStarL=d$, since for each $i$, $\pi_{f_i}(i)\neq{}\pistar(i)$,
and $\pi_{f_i}(j)=\pistar(j)$ for all $j\neq{}i$.

Now, consider the \valuestar. Observe that for any $i$, and for any
set of points $\cI\subseteq\brk*{d}$, we have
\[
  \sum_{j\in\cI\setminus\crl*{i}}(f_i(j,0)-\fstar(j,0))^{2} \geq{} \frac{\Delta^{2}}{4}(\abs*{\cI}-1)
\]
Since any $f_i$ has $\abs*{f_i(x,a)-\fstar(x,a)}\geq\Delta$ only if $x=i$
and $a=0$, we conclude the following:
\begin{enumerate}
\item $\ValueStarCL{\Delta'}=0$ for all $\Delta'\geq{}\Delta$.
\item $\ValueStarCL{\Delta'}\leq{}5$ for all $\Delta'<\Delta$, since
  we must have $\frac{\Delta^{2}}{4}(\abs*{\cI}-1)\leq(\Delta')^{2}$
  for any set $\cI$ that witnesses the star number.
\end{enumerate}
It follows that $\ValueStarL{\Delta'}\leq{}5$ for all $\Delta'$.

\end{proof}
\subsubsection{Proof of \cref*{thm:disagreement_to_star}}
  \newcommand{\ValueStarG}[1]{\mathfrak{s}^{\val}(\cG,#1)}
  \newcommand{\ValueStarCG}[1]{\check{\mathfrak{s}}^{\val}(\cG,#1)}
  \newcommand{\ValueStarCGL}[1]{\check{\mathfrak{s}}^{\val}(\cG_{S}(\veps),#1)}
  \newcommand{\ValueDisG}[2]{\mb{\theta}_{\cP}^{\val}(\cG,#1,#2)}
    \newcommand{\dependent}{\mathsf{Dependent}}
  \newcommand{\true}{\textsc{True}}
    \newcommand{\false}{\textsc{False}}

We prove a slightly more general version of \pref{thm:disagreement_to_star}. Consider a setting in which we have a function class $\cG:\cZ\to\brk*{0,1}$ and distribution $\cP\in\Delta(\cZ)$. We introduce the following generalizations of the \valuedis and \valuestar. Define
\begin{equation}
      \ValueDisG{\Delta_0}{\veps_0} = \sup_{\Delta>\Delta_0,\veps>\veps_0}\frac{\Delta^{2}}{\veps^{2}}\bbP_{\cP}\prn*{
    \exists{}g\in\cG: g(z)>\Delta,\;\nrm*{g}^{2}_{\cP}\leq\veps^{2}
    },\label{eq:value_dis_general}
\end{equation}
where $\nrm*{g}^{2}_{\cP}=\En_{\cP}\brk{g^{2}}$.   Let $\ValueStarCG{\Delta}
  $ be the length of the longest sequence
  of points $z\ind{1},\ldots,z\ind{m}$ such that for all $i$, there exists $g\ind{i}\in\cG$
  such that
  \[
g\ind{i}(z\ind{i})>\Delta,\quad\text{and}\quad\sum_{j\neq{}i}(g\ind{i})^{2}(z\ind{j})\leq{}\Delta^{2}.
  \]
  The \valuestar is defined as $\ValueStarG{\Delta_0} =
  \sup_{\Delta>\Delta_0}\ValueStarCG{\Delta}$.
  
Our goal will be to prove the following result.
\begin{theorem}
  \label{thm:disagreement_to_star_general}
  For any \uniformgc class $\cG\subseteq(\cZ\to\brk*{0,1})$
\begin{equation}
  \label{eq:disagreement_to_star_general}
  \sup_{\cP}\sup_{\veps>0}\ValueDisG{\Delta}{\veps} \leq{} 4(\ValueStarG{\Delta})^{2},\quad\forall{}\Delta>0.
\end{equation}
\end{theorem}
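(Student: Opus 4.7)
The plan is to prove the statement first in the \emph{empirical} case---when $\cP$ is uniform over a finite multiset---and then reduce the general case to that via the \uniformgc hypothesis. Concretely, I would establish the following empirical claim: if $\cP = \unif\{z_1,\ldots,z_n\}$ and $m$ of the points $z_i$ admit a witness $g_i \in \cG$ with $g_i(z_i) > \Delta$ and $\sum_j g_i^2(z_j) \le n\veps^2$, then one can extract a $\Delta$-star (in the sense of $\ValueStarCG{\Delta}$) of size on the order of $\sqrt{m\Delta^2/(n\veps^2)}$; since this is at most $\ValueStarG{\Delta}$, rearranging yields $m/n \le 4\,\ValueStarG{\Delta}^2 \veps^2/\Delta^2$.

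For the empirical step, let $D \subseteq [n]$ index the disagreement points and fix witnesses $\{g_i\}_{i\in D}$. For a threshold parameter $s$ to be chosen, define the ``bad set'' $B_i = \{j \in D : g_i^2(z_j) > \Delta^2/s\}$; Markov's inequality applied to $\sum_j g_i^2(z_j)\le n\veps^2$ yields $|B_i| \le s n\veps^2/\Delta^2$. Consider the undirected graph on $D$ in which $i \sim j$ whenever $i \in B_j$ or $j \in B_i$; its average degree is at most $2 s n\veps^2/\Delta^2$. The critical observation is that any independent set $\{i_1,\ldots,i_k\}$ in this graph with $k \le s$ automatically forms a valid $\Delta$-star: for each index $l$, $g_{i_l}^2(z_{i_j}) \le \Delta^2/s$ for every $j \ne l$, so $\sum_{j \ne l} g_{i_l}^2(z_{i_j}) \le (k-1)\Delta^2/s \le \Delta^2$. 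By Tur\'an's theorem, the graph has an independent set of size at least $m/(1 + 2s n\veps^2/\Delta^2)$, and balancing this expression against the constraint $k \le s$ (i.e., choosing $s$ so that $s = m/(1+2sn\veps^2/\Delta^2)$) produces a star of size on the order of $\sqrt{m\Delta^2/(n\veps^2)}$.

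For the reduction from a general distribution, fix $\Delta' \ge \Delta$ and $\veps' \ge \veps$, let $p = \bbP_\cP(\exists g \in \cG : g(z) > \Delta', \nrm{g}_{\cP} \le \veps')$, and draw an i.i.d.\ sample of size $n$ from $\cP$. By Hoeffding's inequality, a $(p/2)$-fraction of the sampled points admit a distribution-level witness once $n$ is large enough. The \uniformgc property of $\cG$---which transfers to the squared class $\{g^2 : g \in \cG\}$ via a standard contraction argument, since $g \in [0,1]$ implies $g^2$ is $2$-Lipschitz in $g$---ensures that for $n$ sufficiently large, each such witness satisfies $\sum_j g_i^2(z_j) \le 2 n (\veps')^2$ uniformly. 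Applying the empirical claim at these slightly inflated parameters, and using that $\ValueStarCG{\Delta'} \le \ValueStarG{\Delta}$ for any $\Delta' > \Delta$, yields $p \cdot (\Delta')^2/(\veps')^2 \le 4\, \ValueStarG{\Delta}^2$ up to vanishing slack. Taking the supremum over admissible $\Delta'$ and $\veps'$ and passing $n \to \infty$ proves the theorem.

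The main obstacle is the combinatorial step: engineering the threshold $s$ so that Tur\'an's lower bound on the independent-set size matches the upper constraint $k \le s$ needed to pass from independent sets to stars. This balancing is precisely what produces the \emph{quadratic} dependence on $\ValueStarG{\Delta}$, which distinguishes the result from the analogous linear bound for the eluder dimension (Proposition~3 of \citealt{russo2013eluder}): there, the sequential ``$\sum_{j<i}$'' structure permits an inductive argument, whereas here the symmetric ``$\sum_{j\ne i}$'' definition forces a global graph argument that loses a square-root factor. Closing this gap would require a fundamentally different combinatorial approach and is left open.
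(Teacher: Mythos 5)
Your argument is correct, and it reaches the same quadratic bound as the paper by a genuinely different combinatorial route. The paper also reduces to the empirical (uniform-on-a-finite-multiset) case via the \uniformgc hypothesis and the contraction principle for the squared class, and its Lemma~\ref{lem:finite_support} packages exactly the sampling-plus-uniform-convergence step you sketch, with the additive slack $\gamma$ tracked explicitly rather than the rough multiplicative factor $2$. Where you diverge is the combinatorial core. The paper's Lemma~\ref{lem:star} is driven by a bespoke combinatorial result (Lemma~\ref{lem:comb}): it introduces an abstract $\dependent$ relation, shows that if every $(d+1)$-element subset contains a point dependent on the rest, then some point is dependent on $\Omega(\tau/d^2)$ \emph{disjoint} $d$-element subsets, and proves this either via a probabilistic-method averaging argument or an explicit count over ``$(1,N\times d,c)$-partitions.'' Your approach instead builds a threshold graph on the disagreement set with vertex $i$ adjacent to $j$ when $g_i^2(z_j)$ or $g_j^2(z_i)$ exceeds $\Delta^2/s$, bounds the average degree by Markov, and invokes Tur\'an (Caro--Wei) to extract an independent set, observing that any independent set of size $\le s+1$ is automatically a $\Delta$-star. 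Setting $s = d$ and noting that a star of size $d+1$ cannot exist closes the loop with $m \le d + 2d^2\,n\veps^2/\Delta^2$, which after the UGC inflation lands inside the stated constant of~$4$. The Tur\'an route is arguably cleaner and more modular (it reuses a textbook theorem rather than proving a new partition-counting lemma), at the cost of slightly worse constants in the leading term; both approaches share the same structural reason for the quadratic loss relative to the eluder-dimension analogue, namely that the symmetric ``$\sum_{j\ne i}$'' constraint precludes the greedy sequential construction used for ``$\sum_{j<i}$.'' One small polish worth making: you state ``$k \le s$'' where ``$k \le s+1$'' is what the arithmetic supports, and the truncation step---if the independent set returned by Tur\'an exceeds $s+1$, take any $(s+1)$-element subset, which is still independent and hence a star, forcing a contradiction with $\ValueStarCG{\Delta} = d$---should be made explicit rather than absorbed into ``balancing.''
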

This immediately implies \pref{thm:disagreement_to_star} by taking
$\cZ=\cX\times\cA$,
$\cG=\crl*{(x,a)\mapsto\abs*{f(x,a)-\fstar(x,a)}\mid{}f\in\cF}$, and
$\cP=\cD\otimes{}p$ for an arbitrary mapping
$p:\cX\to\Delta(\cA)$. Note that $\cG$ inherits the uniform
Glivenko-Cantelli property from $\cF$ by the contraction principle.

The key step toward proving \pref{thm:disagreement_to_star_general} is
to prove an analogue of the result that holds whenever $\cP$ is the
uniform distribution over a finite set of elements. For any sequence $S=\prn{z_1,\dots,z_n}$, define
\[
\cG_{S}(\varepsilon)\ldef{}\crl[\bigg]{g\in\cG : \sum_{j=1}^n g^{2}(z_j)\leq{}\veps^{2}}.
\]
Define $w_{\cG_{S}(\varepsilon)}(x) = \sup_{g\in\cG_{S}(\varepsilon)}
g(x)$. The finite-support analogue of
\pref{thm:disagreement_to_star_general} is as follows.
\begin{lemma}\label{lem:star}
  For any sequence $S=\prn{z_1,\dots,z_n}$, for any $\zeta>0$, $\veps>0$,
  \begin{align*}
    \sum_{j=1}^{n}\indic\left\{w_{\cG_{S}(\varepsilon)}(x_j)>\zeta\right\} &\leq{} \frac{\veps^2}{\zeta^2}\ValueStarCG{\zeta}^2+\frac{\veps^2}{\zeta^2}\ValueStarCG{\zeta}+\ValueStarCG{\zeta}+1\\
    &\leq{} 4\prn*{\frac{\veps^{2}}{\zeta^{2}}\vee{}1}\cdot\ValueStarG{\zeta}^{2}
  \end{align*}
\end{lemma}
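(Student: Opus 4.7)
My plan is as follows. Set $J \ldef \{j \in [n] : w_{\cG_S(\veps)}(z_j) > \zeta\}$, $N \ldef |J|$, and $s \ldef \ValueStarCG{\zeta}$. For every $j \in J$, I fix a witness $g_j \in \cG_S(\veps)$ satisfying $g_j(z_j) > \zeta$; membership in $\cG_S(\veps)$ immediately yields $\sum_{k=1}^{n} g_j^2(z_k) \le \veps^2$. The overall strategy is to extract from $J$ a subset of size $s+1$ whose witnesses form a $\zeta$-star, which would contradict the definition of $s$; the impossibility of any such extraction then translates into the desired upper bound on $N$.

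To make the extraction precise, I will build a directed graph $G$ on vertex set $J$ by placing an edge $i \to j$ whenever $i \neq j$ and $g_i^2(z_j) > \zeta^2/s$. The row-sum bound $\sum_k g_i^2(z_k) \le \veps^2$ immediately forces the maximum out-degree to be at most $d \ldef s\veps^2/\zeta^2$. The key observation is that any $T \subseteq J$ of size $s+1$ carrying no directed edges of $G$ between its elements is already a star of length $s+1$: for each $j \in T$ we have $g_j(z_j) > \zeta$ by construction, and $\sum_{k \in T \setminus \{j\}} g_j^2(z_k) \le s \cdot (\zeta^2/s) = \zeta^2$ by edge-freeness. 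Since $s = \ValueStarCG{\zeta}$ rules out a star of length $s+1$, no such edge-free $T$ exists in $G$.

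The remaining step is purely combinatorial: to show that a finite directed graph on $N$ vertices with maximum out-degree at most $d$ and no edge-free subset of size $s+1$ must satisfy $N \le (s+1)(1+d) = \tfrac{\veps^{2}}{\zeta^{2}}(s^2 + s) + (s+1)$. I expect this combinatorial statement, packaged as the standalone Lemma~\ref{lem:comb}, to be the main obstacle. A naive greedy procedure that iteratively removes a minimum-in-degree vertex together with its two-sided neighborhood yields only $N \le (2d+1)s$, which is loose by roughly a factor of two. The sharper bound appears to require a two-phase argument: first greedily extract a maximal edge-free set $T_0 \subseteq J$ (necessarily of size at most $s$), then charge each remaining vertex $j \notin T_0$ to some $i \in T_0$ that obstructs $T_0 \cup \{j\}$ from being edge-free, using the out-degree bound on both $i$ and $j$ to cap the charge per element of $T_0$ by $O(d)$.

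Finally, the second inequality in the lemma follows from the first by direct algebra: writing $A \ldef \veps^{2}/\zeta^{2} \vee 1$, one has $\tfrac{\veps^{2}}{\zeta^{2}}(s^2+s) + (s+1) \le A(s+1)^{2} \le 4A\, s^{2} \le 4A\, \ValueStarG{\zeta}^{2}$, using $(s+1)^{2} \le 4s^{2}$ for $s \ge 1$ and $\ValueStarCG{\zeta} \le \ValueStarG{\zeta}$, with the case $s = 0$ handled by the trivial observation that $J = \emptyset$ forces $N = 0$.
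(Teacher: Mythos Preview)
Your reduction to a directed graph is clean, but the combinatorial lemma you want is false. Take $s$ vertex-disjoint regular tournaments, each on $2d+1$ vertices (so every vertex has out-degree exactly $d$). This digraph has $N = s(2d+1)$ vertices, maximum out-degree $d$, and maximum edge-free set of size exactly $s$ (one vertex per component). For $s,d \geq 2$ we have $s(2d+1) > (s+1)(d+1)$; e.g.\ $s=d=2$ gives $N=10>9$. The Tur\'an-type bound $N \leq s(2d+1)$ that you call ``loose by roughly a factor of two'' is in fact tight for this graph class, so no refinement of the charging argument can reach $(s+1)(d+1)$. (Your two-phase sketch breaks because the in-degree into the maximal independent set $T_0$ is not controlled by $d$.)

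The underlying reason is that your pairwise graph discards structure. You only use that an \emph{edge-free} set of size $s+1$ would be a star. But the star-number hypothesis says something stronger: for \emph{every} set $T$ of size $s+1$, some element $z_j$ is $\zeta$-star-dependent on $T\setminus\{z_j\}$, meaning \emph{no} $g\in\cG_S(\veps)$ with $g(z_j)>\zeta$ satisfies $\sum_{k\in T\setminus\{j\}} g^2(z_k)\leq\zeta^2$ --- not just the particular witness $g_j$ you fixed. The paper exploits this via a subset-dependence function rather than a graph: it shows (i) each $z_i$ with $w_{\cG_S(\veps)}(z_i)>\zeta$ can be $\zeta$-star-dependent on at most $\veps^2/\zeta^2$ disjoint subsets, and (ii) a combinatorial lemma (the paper's Lemma~\ref{lem:comb}) stating that if every $(d{+}1)$-subset contains a dependent element, then some element is dependent on at least $\lfloor(\tau-1)/d\rfloor/(d+1)$ disjoint subsets. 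Combining these gives the sharp first inequality. Your graph approach would still recover the second inequality (with constant $4$) via the Tur\'an bound, but not the first.
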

Before proving this result, we show how it implies
\pref{thm:disagreement_to_star_general}.
\begin{proof}[\pfref{thm:disagreement_to_star_general}]
  Let $1>\Delta>\veps$ be fixed; the result is trivial for all other
  parameter values. We first appeal to the following lemma.
  \begin{lemma}
    \label{lem:finite_support}
    For any $\gamma\leq{}1/4$, there exists a finitely supported
    distribution $\wh{\cP}=\unif(\prn*{z_1,\ldots,z_n})$ such that
    \begin{align*}
      \bbP_{\cP}\prn*{\exists{}g\in\cG:
  g(z)>\Delta,\;\nrm*{g}^{2}_{\cP}\leq\veps^{2}}
  \leq{}       \bbP_{\wh{\cP}}\prn*{\exists{}g\in\cG:
  g(z)>\Delta,\;\nrm*{g}^{2}_{\wh{\cP}}\leq\veps^{2}+\gamma} + \gamma.
    \end{align*}
  \end{lemma}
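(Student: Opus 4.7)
\textbf{Proof plan for \pref{lem:finite_support}.} The plan is to approximate $\cP$ by an empirical distribution obtained by drawing $z_1,\ldots,z_n$ i.i.d.\ from $\cP$ with $n$ large, and to combine the uniform Glivenko--Cantelli property of $\cG$ with a single-function law of large numbers to obtain the claimed one-sided estimate. Let me write $A_Q\ldef\{z:\exists g\in\cG,\,g(z)>\Delta,\,\nrm{g}^{2}_{Q}\leq\veps^{2}\}$ and, for a slack parameter $\gamma>0$, $A_{Q,\gamma}\ldef\{z:\exists g\in\cG,\,g(z)>\Delta,\,\nrm{g}^{2}_{Q}\leq\veps^{2}+\gamma\}$. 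The target inequality is $\cP(A_\cP)\leq\wh\cP(A_{\wh\cP,\gamma})+\gamma$.

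The two ingredients are as follows. First, since $\cG\subseteq(\cZ\to[0,1])$ is uniform Glivenko--Cantelli and $x\mapsto x^{2}$ is $1$-Lipschitz on $[0,1]$, the contraction principle implies that $\cG^{2}=\{g^{2}:g\in\cG\}$ is also uGC, so for $n$ large enough, with probability at least $1-\gamma/2$ over $z_{1},\ldots,z_{n}\simiid\cP$,
\begin{equation*}
\sup_{g\in\cG}\abs[\big]{\nrm{g}^{2}_{\wh\cP}-\nrm{g}^{2}_{\cP}}\leq\gamma.
\end{equation*}
Second, the indicator $h(z)\ldef\indic\{z\in A_\cP\}$ is a single deterministic $[0,1]$-valued function of $z$ (it does \emph{not} depend on $\wh\cP$), so by a Hoeffding bound, for $n$ large enough with probability at least $1-\gamma/2$, $\abs{\bbP_{\wh\cP}(h)-\bbP_{\cP}(h)}\leq\gamma$.

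Taking a union bound, both events occur simultaneously with probability at least $1-\gamma$, which is positive since $\gamma\leq 1/4$, so a realization $z_{1},\ldots,z_{n}$ exists for which both hold. Fix such a realization and take $\wh\cP=\unif(z_{1},\ldots,z_{n})$. From the first event, any $g\in\cG$ with $\nrm{g}^{2}_{\cP}\leq\veps^{2}$ also satisfies $\nrm{g}^{2}_{\wh\cP}\leq\veps^{2}+\gamma$, hence $A_\cP\subseteq A_{\wh\cP,\gamma}$ pointwise, giving $\wh\cP(A_\cP)\leq\wh\cP(A_{\wh\cP,\gamma})$. Combined with the second event, which yields $\bbP_{\cP}(h)=\cP(A_\cP)\leq\wh\cP(A_\cP)+\gamma$, we conclude $\cP(A_\cP)\leq\wh\cP(A_{\wh\cP,\gamma})+\gamma$, as required.

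The only mildly subtle point, and where care is needed, is recognizing that although $A_{\wh\cP,\gamma}$ depends on the sample (so one cannot directly apply uGC to the class of indicators $\{\indic_{A_{Q,\gamma}}:Q\}$), the \emph{fixed} event $A_\cP$ serves as a sample-independent intermediary: LLN handles $\cP(A_\cP)$ vs.\ $\wh\cP(A_\cP)$, and the uGC control of $L^{2}$-norms provides the one-sided inclusion $A_\cP\subseteq A_{\wh\cP,\gamma}$ needed to pass from $\wh\cP(A_\cP)$ to $\wh\cP(A_{\wh\cP,\gamma})$. No quantitative bound on $n$ is needed, only that such an $n$ exists, so the proof reduces to invoking the two concentration facts above and chaining the two inclusions.
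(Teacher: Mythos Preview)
Your proposal is correct and follows essentially the same route as the paper: apply Hoeffding's inequality to the single fixed indicator $\indic\{z\in A_{\cP}\}$ to pass from $\cP(A_{\cP})$ to $\wh\cP(A_{\cP})$, and use the uniform Glivenko--Cantelli property of $\cG$ (plus contraction for $g^{2}$) to replace the constraint $\nrm{g}_{\cP}^{2}\leq\veps^{2}$ by $\nrm{g}_{\wh\cP}^{2}\leq\veps^{2}+\gamma$, then take a union bound and pick a realization. The only cosmetic difference is that you split the failure probability as $\gamma/2+\gamma/2$ whereas the paper uses $\gamma+\gamma$; your extra remark about why the fixed event $A_{\cP}$ is the right intermediary is a nice clarification not spelled out in the paper.
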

    \begin{proof}%
    Let $\gamma>0$ be fixed. Let $\cP_n$ denote the empirical distribution formed from $n$
    independent samples from $\cP$. By Hoeffding's inequality, we are guaranteed that for $n$ 
    sufficiently large, with probability at least $1-\gamma$
    \[
      \bbP_{\cP}\prn*{\exists{}g\in\cG:
        g(z)>\Delta,\;\nrm*{g}^{2}_{\cP}\leq\veps^{2}}
      \leq{}       \bbP_{\cP_n}\prn*{\exists{}g\in\cG:
        g(z)>\Delta,\;\nrm*{g}^{2}_{\cP}\leq\veps^{2}} + \gamma.
    \]
Next, we observe that since $\cG$ has the uniform Glivenko-Cantelli
property, the class $\crl*{z\mapsto{}g^{2}(z)\mid{}g\in\cG}$ does as
well (by the contraction principle, since $\abs*{g}\leq{}1$). This
implies that for $n$ sufficiently large,
\[
\bbP\prn*{\sup_{g\in\cG}\abs*{\nrm*{g}^{2}_{\cP_n}-\nrm*{g}^{2}_{\cP}}>\gamma}\leq{}\gamma.
\]
If we take $n$ large enough so that both claims hold and take a union
bound, we are guaranteed that with probability at least $1-2\gamma$,
\begin{align*}
  \bbP_{\cP}\prn*{\exists{}g\in\cG:
  g(z)>\Delta,\;\nrm*{g}^{2}_{\cP}\leq\veps^{2}}
  \leq{}       \bbP_{\cP_n}\prn*{\exists{}g\in\cG:
  g(z)>\Delta,\;\nrm*{g}^{2}_{\cP_n}\leq\veps^{2}+\gamma} + \gamma.
\end{align*}
Since $\gamma\leq{}1/4$, this event occurs with probability at least
$1/2$. This establishes the existence of the distribution claimed in
the lemma statement
\end{proof}
Write $S=\prn*{z_1,\ldots,z_n}$, so that $\cPhat=\unif(S)$. Then we
have
\begin{align*}
\bbP_{\wh{\cP}}\prn*{\exists{}g\in\cG:
  g(z)>\Delta,\;\nrm*{g}^{2}_{\wh{\cP}}\leq\veps^{2}+\gamma}
  = \frac{1}{n}\sum_{i=1}^{n}\indic\crl*{w_{\cG_S(\veps')}(z_i)>\Delta},
\end{align*}
where $\veps'^{2}\ldef{}n(\veps^{2}+\gamma)$. Applying
\pref{lem:star}, we have
\begin{align*}
  \frac{1}{n}\sum_{i=1}^{n}\indic\crl*{w_{\cG_S(\veps')}>\Delta}
  &\leq{}
  \frac{4}{n}\prn*{\frac{\veps'^{2}}{\Delta^{2}}\vee{}1}\cdot(\ValueStarG{\Delta})^{2}\\
  &=
    \frac{4}{n}\frac{\veps'^{2}}{\Delta^{2}}\cdot(\ValueStarG{\Delta})^{2}\\
  &= 4\frac{\veps^{2}+\gamma}{\Delta^{2}}\cdot(\ValueStarG{\Delta})^{2},
\end{align*}
where we have used that $\veps'\geq{}\Delta$ by
assumption. Altogether, this implies that
\[
      \bbP_{\cP}\prn*{\exists{}g\in\cG:
  g(z)>\Delta,\;\nrm*{g}^{2}_{\cP}\leq\veps^{2}} \leq{}
4\frac{\veps^{2}+\gamma}{\Delta^{2}}\cdot(\ValueStarG{\Delta})^{2} + \gamma.
\]
Since both sides are continuous functions of $\gamma$ (in fact, the
left-hand side does not depend on $\gamma$ at all), we may take
$\gamma\to{}0$ to conclude that
\[
      \bbP_{\cP}\prn*{\exists{}g\in\cG:
  g(z)>\Delta,\;\nrm*{g}^{2}_{\cP}\leq\veps^{2}} \leq{}
4\frac{\veps^{2}}{\Delta^{2}}\cdot(\ValueStarG{\Delta})^{2}.
\]
Since this holds for all $1>\Delta>\veps$, the result is established.
\end{proof}

\begin{proof}[\pfref{lem:star}]

  We begin with a definition.
\begin{definition}
Consider a point $z$ and a sequence $A$ such that $z\notin
A$. We say $z$  is \emph{$\zeta$-star-dependent on $A$} with respect to $
\cG$ if for all $g\in\cG$ such that $\sum_{z'\in A}
g^{2}(z')\leq{}\zeta^{2}$, we have $g(z)\le \zeta$. We say that $z$ is \emph{$\zeta$-star-independent} of $A$ w.r.t. $\cG$ if $z$ is not $\zeta$-star-dependent on $A$.
\end{definition}
  
   We first claim that for any $i\in[n]$, if
  $w_{\cG_S}(z_i)>\zeta$, then $z_i$ is $\zeta$-star-dependent on
  at most $\veps^2/\zeta^2$ disjoint subsequences of
  $S$ (with respect to $\cG_S(\veps)$). Indeed, let $g$ be a function in $\cG_S(\veps)$ such that
  $g(z_i)>\zeta$. If $z_i$ is
  $\zeta$-star-dependent on a particular subsequence
  $\prn{z_{i_1},\ldots,z_{i_k}}\subset S$ but $g(z_i)>\zeta$, we must have
  \[
    \sum_{j=1}^{k}g^{2}(z_{i_j})>{}\zeta^{2}.
  \]
  If there are $N$ such disjoint sequences, we have
  \[
    N\zeta^{2}<{}\sum_{j=1}^n g^{2}(z_j)\leq{}\veps^{2},
  \]
  so $N< {\veps^{2}}/{\zeta^{2}}$.

Now we claim that for any sequence $\prn{z_1,\ldots,z_{\tau}}$, there is
some $j\in[\tau]$ such that $z_j$ is $\zeta$-star-dependent on at least
$\floor{(\tau-1)/d}/(d+1)$ disjoint subsequences of
$\prn{z_1,\ldots,z_{\tau}}$ (with respect to $\cG_S(\veps)$), where
$d\equiv\ValueStarCGL{\zeta}$. This is a straightforward
corollary of the following lemma, which is purely combinatorial.
\begin{lemma}
  \label{lem:comb}
Let $A$ be a finite set with $\tau$ elements, and let $d<\tau$ be a
positive integer. Consider any function
$\dependent:2^{A}\times{}A\to\crl*{\true,\false}$, and let us say that
$x$ is \emph{dependent} on $A'\subseteq{}A$ if $\dependent(A',x)=\true$. Suppose
$\dependent$ has the property that for every subset $A'\subseteq A$
with $|A'|>d$, there exists $x\in A'$ such that $x$ is
dependent on $A'\setminus\{x\}$ (i.e.,
$\dependent(A'\setminus\crl*{x},x)=\true$). Then there must exist an
element of $A$ that is dependent on at least $\floor{(\tau-1)/d}/(d+1)$ disjoint subsets of $A$.
\end{lemma}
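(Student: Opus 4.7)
The plan is to produce $M := \lfloor(\tau-1)/d\rfloor$ element--witness pairs $(y_1, W_1), \ldots, (y_M, W_M)$ with pairwise disjoint witnesses $W_i \subseteq A$ of size $d$ such that $y_i$ is dependent on $W_i$, and then apply pigeonhole on the sequence of chosen elements to find some $x^{\star} \in A$ that appears as $y_i$ for at least $M/(d+1)$ values of $i$; the corresponding sub-collection of $W_i$'s then witnesses the conclusion.

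For the construction I would maintain a rolling pool $P^{(i)} \subseteq A$ of exactly $d+1$ elements, initialized arbitrarily. At round $i$, applying the dependency hypothesis to $P^{(i)}$ (which has size $> d$) yields some $y_i \in P^{(i)}$ dependent on $W_i := P^{(i)} \setminus \{y_i\}$, a set of size exactly $d$. I then form $P^{(i+1)} = \{y_i\} \cup F_i$ by injecting $d$ fresh elements $F_i$ drawn from the ``reservoir'' $A \setminus (P^{(i)} \cup \bigcup_{j<i} W_j)$. Since the reservoir starts at size $\tau - d - 1$ and shrinks by $d$ per round, the process runs for $M$ rounds; pairwise disjointness of the $W_i$'s follows because every element entering the pool after round $i$ is either $y_i$ itself or a fresh reservoir element, neither of which can lie in the already-discarded $W_i$.

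The technical heart of the proof, and what I expect to be the main obstacle, is the pigeonhole extraction. Here $|P^{(i)}| = d+1$ at every round and $P^{(i+1)}$ shares exactly one element with $P^{(i)}$ (namely $y_i$), so the sequence $y_1, \ldots, y_M$ has a very constrained structure: each $y_i$ is drawn from a size-$(d+1)$ set, and the only way $y_{i+1}$ can coincide with any previous $y_j$ is via $y_{i+1} = y_i$ (since reservoir elements are fresh). I would exploit the freedom in selecting $y_i$ (the hypothesis only guarantees existence of \emph{some} valid choice in $P^{(i)}$) by always taking $y_{i+1} = y_i$ when it remains a valid dependent element of $P^{(i+1)}$, and then bound the number of forced ``jumps'' by a charging argument that pays one unit per distinct pool-member ever introduced.

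If that direct argument proves too delicate, a cleaner fallback would be a double-counting argument: iterate over all $(d+1)$-subsets $A' \subseteq A$, use the hypothesis to pick a pair (dependent element, $d$-witness) for each, yielding $\geq \binom{\tau}{d+1}$ such pairs in total, so by averaging some $x^{\star} \in A$ is the chosen dependent element in $\geq \binom{\tau-1}{d}/(d+1)$ of them; a greedy disjointification on that sub-family of $d$-subsets of $A \setminus \{x^{\star}\}$ then extracts the required disjoint witnesses. Either route reduces the problem to a purely combinatorial pigeonhole, which is exactly where the authors' general lemma (potentially of independent interest, as they note) does the real work.
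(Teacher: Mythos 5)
Your primary (rolling-pool) construction does not work as stated, and your fallback has a genuine gap in the disjointification step; the paper's proof evades exactly this gap.

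\paragraph{Rolling pool} The pigeonhole you invoke never applies. Although each $P^{(i)}$ has size $d+1$, these pools slide through $A$, so $y_1,\ldots,y_M$ can all be distinct: a ``jump'' occurs at step $i$ whenever $y_i$ is not dependent on $F_i$, and nothing in the hypothesis forbids this from happening at every step. The hypothesis guarantees \emph{some} element of $P^{(i+1)}=\{y_i\}\cup F_i$ is dependent on the remainder, but it need not be $y_i$, and the ``freedom in selecting $y_i$'' you want to exploit is precisely what is unavailable when $y_i$ fails to depend on $F_i$. Your proposed charging argument (``one unit per distinct pool-member introduced'') does not bound the number of jumps by anything like $d$: you introduce $\Theta(dM)$ fresh elements, so the charge budget is far larger than the $d$ jumps you can afford. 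More fundamentally, the rolling pool is a local/greedy search, and there is no reason the element it keeps trying to re-use is the globally good one; one needs a global argument.

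\paragraph{Fallback via double counting} The averaging over $(d+1)$-subsets is sound and is indeed close in spirit to the paper's probabilistic proof: some $x^\star$ is the chosen dependent element for a family $\mathcal{F}$ of at least $\binom{\tau-1}{d}/(d+1)$ distinct $d$-subsets of $A\setminus\{x^\star\}$. The gap is the final ``greedy disjointification.'' A fixed $d$-subset of a $(\tau-1)$-element universe meets $\Theta\!\left(\frac{d^2}{\tau}\binom{\tau-1}{d}\right)$ of the $\binom{\tau-1}{d}$ others, so greedy extraction from $\mathcal{F}$ yields only on the order of $\frac{\tau}{d^2(d+1)}$ pairwise disjoint witnesses --- a factor of $d$ short of the target $\lfloor(\tau-1)/d\rfloor/(d+1)\approx\frac{\tau}{d(d+1)}$. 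The correct extremal statement that would close this gap (a $d$-uniform family of density $\geq 1/(d+1)$ has matching number $\gtrsim\frac{\tau}{d(d+1)}$) is the Erd\H{o}s matching bound, a substantially heavier tool than anything one should need here. The paper's proof avoids the extraction problem entirely by never forming the family $\mathcal{F}$: instead of averaging over arbitrary $(d+1)$-subsets and then trying to disjointify the witnesses, it averages over partitions of $A$ into a singleton $X_0$ plus $d$-blocks $X_1,\ldots,X_N$ (equivalently, in the probabilistic version, it first fixes a random \emph{disjoint} family $X_1,\ldots,X_N$ and counts how many $X_n$ witness $X_0\cup X_n$). Since the blocks are disjoint by construction, the expected number of successes, $N/(d+1)$, directly produces $x^\star$ together with the disjoint witnesses, with no lossy post-processing. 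That reorganization of the double count is the real content of the lemma, and it is what your proposal is missing.
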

Let $\prn{z_{t_1},\ldots,z_{t_{\tau}}}$ consist of all elements
of $\prn*{z_1,\ldots,z_n}$ for
which $w_{\cG_S(\veps)}(z)>\zeta$. Each element of
$\prn{z_{t_1},\ldots,z_{t_{\tau}}}$ is $\zeta$-star-dependent on at most $\veps^{2}/\zeta^{2}$ disjoint subsets of
$(z_{t_1},\ldots,z_{t_{\tau}})$, and we claim that by \pref{lem:comb}, one
element is dependent on at least $\floor{(\tau-1)/d}/(d+1)$ disjoint
subsets. This implies that $\floor{(\tau-1)/d}/(d+1)\le{}
\veps^{2}/\zeta^{2}$, so that $\tau\leq{} (\veps^{2}/\zeta^{2})(d^2+d)+d+1$.

Let us carefully verify that we can indeed apply \pref{lem:comb} here. Take
$A=\crl*{1,\ldots,\tau}$ to be the index set of
$\prn*{z_{t_1},\ldots,z_{t_{\tau}}}$, and define
$\dependent(A',i)=\true$ if $z_{t_i}$ is $\zeta$-star-dependent on $\prn*{z_{i_k}}_{k\in{}A'}$.
With $d=\ValueStarCGL{\zeta}$, any sequence of more than $d$ (potentially
non-unique) elements of $\cZ$ cannot witness the \valuestar, so for
any $A'\subseteq{}A$ with $\abs*{A'}\geq{}d$, there must at least one $i\in{}A'$ such that for all
$g\in\cG(\veps)$,
$\sum_{k\in{}A'\setminus{}\crl*{i}}g^{2}(z_{t_k})\leq{}\zeta^{2}$ implies
that $g(z_{t_i})\leq{}\zeta$. Such a $z_{t_i}$ is
$\zeta$-star-dependent on
$\prn*{z_{t_k}}_{k\in{}A'\setminus\crl*{i}}$, so $\dependent$
satisfies the condition of the lemma. Since subsequences of
$(z_{t_1},\ldots,z_{t_\tau})$ are in one-to-one correspondence with
subsets of $A$, \pref{lem:comb} grants the desired result.

\end{proof}

\begin{proof}[\pfref{lem:comb}]
We provide two different proofs: One based on the probabilistic
method, and one based on a direct counting argument. The first proof
leads to slightly worse constants.
  \paragraph{Probabilistic proof}
  Consider $A=\crl*{1,\ldots,\tau}$. Suppose we sample $A'\subseteq{}A$ with $\abs*{A'}=d+1$ uniformly at random. Then we have
\begin{align*}
  1 & \leq{}
      \En\brk*{\max_{x\in{}A'}\indic\crl*{\dependent(A'\setminus\crl{x},x)=\true}}\\
    & \leq{}
      \En\brk*{\sum_{x}\indic\crl*{x\in{}A',\dependent(A'\setminus\crl{x},x)=\true}}\\
      & =
        \sum_{x}\En\brk*{\indic\crl*{x\in{}A',\dependent(A'\setminus\crl{x},x)=\true}}\\
        & =
          \sum_{x}\bbP(x\in{}A')\bbP(\dependent(A'\setminus\crl*{x},x)=\true\mid{}x\in{}A').
\intertext{Observe that $\bbP(x\in{}A')\leq\frac{d+1}{\tau-d}$. Now,
          consider a set $A''$ with $\abs*{A''}=d$ chosen from $A\setminus\crl*{x}$ uniformly
          at random. Then we have
          $\bbP(\dependent(A'\setminus\crl*{x},x)=\true\mid{}x\in{}A')=\bbP(\dependent(A'',x)=\true)$. This
          allows us to upper bound by}
    & \leq{}
      \sum_{x}\frac{d+1}{\tau-d}\bbP(\dependent(A'',x)=\true)\\
      & \leq{}
        (d+1)\frac{\tau}{\tau-d}\cdot\max_{x}\bbP(\dependent(A'',x)=\true).
\end{align*}
It follows that there exists $x$ such that
$\bbP(\dependent(A'',x)=\true)\geq{}(1-d/\tau)\frac{1}{d+1}$. Now, let
$N=\floor{\frac{\tau-1}{d}}$, and let $X_1,\ldots,X_N$ be a collection
of disjoint subsets of $A\setminus\crl*{x}$ with $\abs*{X_i}=d$, sampled uniformly at
random. Then by symmetry, we have
\begin{align*}
  \En\brk*{\sum_{i=1}^{N}\indic\crl*{\dependent(X_i,x)=\true}} &= N\cdot{}\bbP(\dependent(A'',x)=\true)\\&\geq{}\prn*{1-\frac{d}{\tau}}\frac{N}{d+1}=\prn*{1-\frac{d}{\tau}}\frac{1}{d+1}\floor*{\frac{\tau-1}{d}}.
\end{align*}
We conclude that there exists some $x\in{}A$ and a collection
$\crl*{X_i}$ of at
least $\prn*{1-\frac{d}{\tau}}\frac{1}{d+1}\floor*{\frac{\tau-1}{d}}$ disjoint subsets of $A\setminus\crl*{x}$ such that $\dependent(X_i,x)=\true$ for
all $i$.

\paragraph{Combinatorial proof}  
This is a constructive proof  based on a counting argument, which enables us to directly finds an element in $A$ that is dependent on at least $\floor{(\tau-1)/d}/(d+1)$ disjoint subsets of $A$.

  To simplify notation, let us assign the
  elements of $A$ an arbitrary order and represent $A$ as
  $\{1,\dots,\tau\}$. 
  For any $(d+1)$-size subset $A'$ of $A$, by the property of $\dependent$, we know that $\exists x\in
A'$ such that $x$ is dependent on $A'\backslash\{x\}$, and we
define \[
  \mu(A')\ldef\min\{x\in A': \dependent(A'\setminus\{x\},x)=\true\}.
\]
Note that $\mu(A')$ is always well-defined as long as $|A'|\ge d+1$.
  
Let $N>0$ and $c\in\{0,\dots,d-1\}$ be integers
  such that $\tau-1=Nd+c$. 
  We define a \underline{$(1,N\times d,c)$-partition} of $A$ as a set-valued sequence
\[
(X_0,X_1,\dots,X_N,X_{N+1})
\]
with the property that:
\begin{enumerate}
\item $X_0,\dots,X_{N+1}\text{ are disjoint}$.
\item $\bigcup_{n=0}^{N+1}X_n=A$.
\item $|X_0|=1$.
\item $|X_1|=\cdots=|X_N|=d$.
\item $|X_{N+1}|=c$.
\end{enumerate}
Let $\textsc{Par}(A;d)$ denote the set of all possible $(1,N\times
d,c)$-partitions of $A$. Note that in particular that different
permutations of $(X_1,\dots,X_N)$ may yield different $(1,N\times d,c)$-partitions.

Now let us consider an \emph{optimal} $(1,N\times
d,c)$-partition  of $A$ such that $\sum_{n=1}^N\indic\{\mu(X_0\cup X_n)\in X_0\}$ is maximized. 
Let $(X_0^*,X_1^*,\dots,X_N^*,X_{N+1}^*)$ denote this partition (if there are multiple optimal partitions, then we just pick one of them). By the optimality of $(X_0^*,X_1^*,\dots,X_N^*,X_{N+1}^*)$, we have
\begin{equation}\label{eq:optimal-partition}
\sum_{n=1}^N\indic\{\mu(X_0^*\cup X_n^*)\in X_0^*\}\ge
\frac{1}{|\textsc{Par}(A;d)|}\sum_{(X_0,\dots,X_{N+1})\in\textsc{Par}(A;d)}\sum_{n=1}^N\indic\{\mu(X_0\cup X_n)\in X_0\},
\end{equation}
and we use $\Lambda$ to denote the right-hand side of \pref{eq:optimal-partition}. Consider the single element in $X^*_0$, denoted as $x^*$. Since $X_1^*,\dots,X_N^*$ are disjoint and
\[
\sum_{n=1}^N\1\left\{\dependent(X_n,x^*)=\true\right\}\ge\sum_{n=1}^N\indic\{\mu(X_0^*\cup X_n^*)\in X_0^*\}\ge\Lambda,
\]
we know that $x^*$ is dependent on at least $\lceil \Lambda\rceil$ disjoint subsets of $A$.

In what follows, we calculate the value of $\Lambda$.

\textbf{Step 1.} We calculate the value of $|\textsc{Par}(A;d)|$ using the following identity:
\begin{align}
  \label{eq:par_identity}
|\textsc{Par}(A;d)|={\tau\choose{1}}\cdot\prod_{n=0}^{N-1}{\tau-1-nd\choose{d}}=\tau\prod_{n=0}^{N-1}{(N-n)d+c\choose{d}}.
\end{align}
This holds by a direct counting argument: there are
${\tau\choose1}=\tau$ choices for $X_0$, ${\tau-1-d\choose d}$ choices
for $X_1$ given each such choice, ${\tau-1-2d\choose d}$ choices for
$X_2$ given the preceding two choices, all the way on to
${\tau-1-(N-1)d\choose d}$ choices for $X_{N}$; the remaining elements
must be assigned to $X_{N+1}$.

\textbf{Step 2.} We have
\begin{align}
&\sum_{(X_0,\dots,X_{N+1})\in\textsc{Par}(A;d)}\sum_{n=1}^N\indic\{\mu(X_0\cup X_n)\in X_0\}\notag\\
=&\sum_{n=1}^N\sum_{(X_0,\dots,X_{N+1})\in\textsc{Par}(A;d)}\indic\{\mu(X_0\cup X_n)\in X_0\}\notag\\
\overeq{(i)}&\sum_{n=1}^N\sum_{x\in A}\sum_{X_n\subset A:|X_n|=d,x\notin X_n}\sum_{\substack{X_1,\dots,X_{n-1},X_{n+1},\dots, X_{N+1}\\\text{s.t.} (\{x\},X_1,\dots,X_{n-1},X_n,X_{n+1},\dots,X_{N+1})\in\textsc{Par}(A;d)}}\indic\{\mu(\{x\}\cup X_n)= x\}\notag\\
\overeq{(ii)}&\sum_{n=1}^N\sum_{A'\subset A: |A'|=d+1}\sum_{x\in A'}\sum_{\substack{X_1,\dots,X_{n-1},X_{n+1},\dots, X_{N+1}\\\text{s.t.} (\{x\},X_1,\dots,X_{n-1},A'\backslash\{x\},X_{n+1},\dots,X_{N+1})\in\textsc{Par}(A;d)}}\indic\{\mu(A')=x\}\notag\\
\overeq{(iii)}&\sum_{n=1}^N\sum_{A'\subset A: |A'|=d+1}\sum_{\substack{X_1,\dots,X_{n-1},X_{n+1},\dots, X_{N+1}\\\text{s.t.} (\{\mu(A')\},X_1,\dots,X_{n-1},A'\backslash\{\mu(A')\},X_{n+1},\dots,X_{N+1})\in\textsc{Par}(A;d)}}1\notag\\
\overeq{(iv)}&N\cdot{{\tau}\choose{d+1}}\cdot\prod_{n=0}^{N-2}{\tau-d-1-nd\choose d}\notag\\
=&N\cdot{{Nd+c+1}\choose{d+1}}\cdot\prod_{n=0}^{N-2}{(N-n-1)d+c\choose d}.\label{eq:par_lower_bound}
\end{align}
Here $(i)$ rewrites the sum over partitions to make
the choices for $X_0$ and $X_n$ explicit,  $(ii)$ rewrites this
once more by considering the choice of the $X_n$ and $X_0$ as
equivalent to the choice of a set $A'$ with $\abs*{A'}=d+1$ and an
element $x\in{}A'$ (so that $X_0=\crl*{x}$ and
$X_n=A'\setminus\crl*{x}$), and $(iii)$ restricts only to elements for
which $\mu(A')=x$. They key step above is $(iv)$, which can be seen to
hold as
follows. For each choice of $n$ in the outermost sum in line $(iv)$:
\begin{itemize}
  \item There are ${\tau\choose d+1}$ choices for $A'$ in the middle sum.
  \item For each choice of $n$ and $A'$, the only constraint on
    $X_1,\ldots,X_{n-1},X_{n+1},\ldots,X_{N+1}$ is that they form a
    partition of $A\setminus{}A'$. There are ${\tau-(d+1)\choose d}$
    choices for $X_1$, ${\tau-(d+1)-d\choose d}$ choices for $X_2$
    given each such choice, and eventually ${\tau-(d+1)-(N-2)d\choose d}$
      choices for $X_{N}$ given all the preceding choices; all
      elements left over from $X_0,\ldots,X_{N}$ are assigned to $X_{N+1}$.
\end{itemize}
The final equality above simply substitutes in $\tau=Nd+c+1$.

\textbf{Step 3.} Combining \pref{eq:par_identity} and \pref{eq:par_lower_bound}, we conclude that
\begin{align*}
    \Lambda=&\frac{N\cdot{{Nd+c+1}\choose{d+1}}\cdot\prod_{n=0}^{N-2}{(N-n-1)d+c\choose d}}{|\textsc{Par}(A;d)|}\\
    =&\frac{N\cdot{{Nd+c+1}\choose{d+1}}\cdot\prod_{n=0}^{N-2}{(N-n-1)d+c\choose
       d}}{\tau\prod_{n=0}^{N-1}{(N-n)d+c\choose{d}}}\\
  =&\frac{N\cdot{{Nd+c+1}\choose{d+1}}\cdot\prod_{n=1}^{N-1}{(N-n)d+c\choose
       d}}{\tau\prod_{n=0}^{N-1}{(N-n)d+c\choose{d}}}\\
    =&\frac{N\cdot{{Nd+c+1}\choose{d+1}}}{\tau\cdot{Nd+c\choose{d}}}=\frac{N(Nd+c+1)}{\tau(d+1)}=\frac{N}{d+1}=\frac{\floor{\frac{\tau-1}{d}}}{d+1}.
\end{align*}
This implies that $x^*\in A$ is dependent on at least $\floor{(\tau-1)/d}/(d+1)$ disjoint subsets of $A$.
\end{proof}

\subsection{Proofs for Eluder Dimension Results}
\begin{proof}[\pfref{prop:eluder_star_separation}]
  Let $d\in\bbN$ and $\Delta\in(0,1)$ be fixed. Let $\cX=\brk*{d}$ and $\cA=\crl*{0,1}$. Set
$\fstar(x,0)=0$ and $\fstar(x,1)=\Delta$ for
all $x$. For each $i$, define a function $f_i$ as follows.
\begin{itemize}
\item $f_i(x,1)=\Delta$ for all $x$.
\item $f_i(j,0)=0$ for all $j<i$
\item $f_i(j,0)=\Delta$ for all $j\geq{}i$.
\end{itemize}
Let $\cF=\crl*{\fstar,f_1,\ldots,f_d}$. We have $\ValueEluderL{\Delta/2}\geq{}d$ by taking $(1,0),\ldots,(d,0)$ and $f_1,\ldots,f_d$ as witnesses, since for each $i$, $\abs*{f_i(i,0)-\fstar(i,0)}=\Delta>\Delta/2$ and
\[
\sum_{j<i}\prn*{f_i(j,0) - \fstar(j,0)}^{2} = 0.
\]
We now upper bound the \valuestar. Clearly $\ValueStarCL{\Delta'}=0$ for any $\Delta'\geq{}\Delta$, so consider a fixed scale parameter $\Delta'<\Delta$. Suppose we have a set of points $(i_1,0),\ldots,(i_m,0)$ and functions $f_{j_1},\ldots,f_{j_m}$ that witness $\ValueStarCL{\Delta'}$, with $i_1<i_2<\ldots,i_m$ (we must have $0$ as the action for each witness, since all functions agree on the value for action $1$). Since $\abs*{f_{j_1}(i_1,0)-\fstar(i_1,0)}>\Delta'$, we must have $j_m\leq{}i_m$. But on the other hand, we have
\[
\sum_{l>1}^{m}\prn*{f_{j_1}(i_{l},0)-\fstar(i_l,0)}^{2}=\Delta^{2}(m-1),
\]
since $f_{j_1}(i_l,0)=\Delta$ for all $l>2$. Since we need $\Delta^{2}(m-1)\leq{}(\Delta')^2$, we must have $m\leq{}2$, so we conclude that $\ValueStarCL{\Delta'}\leq{}2$ for all $\Delta'<\Delta$.

\end{proof}

\subsubsection{Proof of \pref{prop:eluder_refined}}
  First, we recall the definition of the general function class UCB algorithm. Let $z_t=(x_t,a_t)$ and $\cZ_t=\crl*{z_1,\ldots,z_t}$. Define $\nrm*{f}^{2}_{\cZ}=\sum_{z\in\cZ}f^{2}(z)$. Then the algorithm is defined as follows. At round $t$:
  \begin{itemize}
  \item Set $\fhat_t=\argmin_{f\in\cF}\sum_{i<t}\prn*{f(x_i,a_i)-r_i(a_i)}^{2}$.
  \item Define $\cF_t=\crl[\big]{f\in\cF:\nrm{f-\fhat_t}_{\cZ_{t-1}}\leq{}\beta_t}$.
  \item Choose $a_t=\argmax_{a\in\cA}\max_{f\in\cF_t}f(x_t,a)$.
  \end{itemize}
From \cite{russo2013eluder}, Proposition 2, we are guaranteed that if $\beta_t=\beta\ldef{}\sqrt{C_1\cdot{}\log(\abs{\cF}/\delta)}$ for all $t$ for some absolute constant $C_1$, then $\fstar\in\cF_t$ for all $t$ with probability at least $1-\delta$. Let $\brk*{a}_{\Delta}=a\indic\crl*{a>\Delta}$. Conditioned on this event, and using that $\fstar$ has gap $\Delta$, we have
\begin{align*}
  \sum_{t=1}^{T}\fstar(x_t,\pistar(x_t)) - \fstar(x_t,a_t)
  & =   \sum_{t=1}^{T}\brk*{\fstar(x_t,\pistar(x_t)) - \fstar(x_t,a_t)}_{\Delta/2}\\
  & \leq{}   \sum_{t=1}^{T}\sup_{f\in\cF_t}\brk*{f(x_t,\pistar(x_t)) - \fstar(x_t,a_t)}_{\Delta/2}\\
  & \leq{}   \sum_{t=1}^{T}\sup_{f\in\cF_t}\brk*{f(x_t,a_t) - \fstar(x_t,a_t)}_{\Delta/2}.
\end{align*}
In particular, let us define $\cFbar_t=\crl*{f\in\cF: \nrm*{f-\fstar}_{\cZ_{t-1}}\leq{}2\beta_t}$. Then by triangle inequality, $\cF_t\subseteq\cFbar_t$, so if we define $w_t(z)=\sup_{f\in\cFbar_t}\brk{f(z)-\fstar(z)}$, then
\begin{align*}
  \sum_{t=1}^{T}\fstar(x_t,\pistar(x_t)) - \fstar(x_t,a_t)
  \leq{}   \sum_{t=1}^{T}w_t(z_t)\indic\crl*{w_t(z_t)>\Delta/2}.
\end{align*}
We now appeal to the following lemma.
\begin{lemma}[Variant of \cite{russo2013eluder}, Lemma 3]
  \label{lem:eluder_indicator_bound}
  For any $\zeta>0$,
  \[
    \sum_{t=1}^{T}\indic\crl{w_{t}(z_t)>\zeta} \leq{} \prn*{\frac{4\beta^2}{\zeta^2}+1}\ValueEluderCL{\zeta}.
  \]
\end{lemma}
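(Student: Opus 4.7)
The plan is to adapt the classical potential-style argument of \cite{russo2013eluder} (their Lemma 3) to our modified version space $\cFbar_t$, which is centered at $\fstar$ with radius $2\beta$ rather than at an empirical risk minimizer. The proof proceeds in two steps: (i) a local bound showing that on any ``wide'' round, the current point can be $\zeta$-eluder-dependent on only a bounded number of disjoint past subsequences, and (ii) a bucketing argument that converts the local bound into a cumulative count, using \pref{def:value_eluder} to cap the size of each bucket.

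For step (i), suppose $w_t(z_t)>\zeta$ and let $f^\circ\in\cFbar_t$ witness the supremum, so that $f^\circ(z_t)-\fstar(z_t)>\zeta$. If $B\subseteq\cZ_{t-1}$ is such that $z_t$ is $\zeta$-eluder-dependent on $B$, the definition of dependence forces any $f\in\cF$ with $\sum_{z\in B}(f(z)-\fstar(z))^2\leq\zeta^2$ to satisfy $|f(z_t)-\fstar(z_t)|\leq\zeta$; since $f^\circ$ violates this conclusion, we must have $\sum_{z\in B}(f^\circ(z)-\fstar(z))^2>\zeta^2$. Hence, if $z_t$ were simultaneously $\zeta$-dependent on $K$ pairwise disjoint subsequences $B_1,\ldots,B_K\subseteq\cZ_{t-1}$, summing would yield
\[
\nrm*{f^\circ-\fstar}_{\cZ_{t-1}}^2 \;\geq\; \sum_{i=1}^K\sum_{z\in B_i}(f^\circ(z)-\fstar(z))^2 \;>\; K\zeta^2.
\]
On the other hand, $f^\circ\in\cFbar_t$ gives $\nrm*{f^\circ-\fstar}_{\cZ_{t-1}}^2\leq 4\beta^2$, so $K<4\beta^2/\zeta^2$.

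For step (ii), enumerate the wide rounds as $t_1<\cdots<t_N$ and process the points $z_{t_1},\ldots,z_{t_N}$ sequentially, greedily assigning each one to a ``bucket'' in such a way that the elements of each bucket, in the order they were added, form a $\zeta$-eluder-independent sequence (equivalently, each new addition is $\zeta$-independent of the bucket's current contents). For $z_{t_k}$, if some existing bucket admits it while preserving independence, place it there; otherwise $z_{t_k}$ is $\zeta$-dependent on the contents of every existing bucket, in which case step (i) forces the number of buckets at that moment to be strictly less than $4\beta^2/\zeta^2$, leaving room to open a new bucket. Consequently, the total number of buckets never exceeds $\lceil 4\beta^2/\zeta^2\rceil\leq 4\beta^2/\zeta^2+1$, and by construction each bucket realizes an eluder-witness sequence of length at most $\ValueEluderCL{\zeta}$, yielding $N\leq(4\beta^2/\zeta^2+1)\ValueEluderCL{\zeta}$. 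The main obstacle — really the only substantive departure from \cite{russo2013eluder} — is tracking the factor-of-two inflation of the confidence radius that arises from centering $\cFbar_t$ at $\fstar$ rather than at $\fhat_t$; this is what produces the constant $4$ in the bound, but does not otherwise affect the structure of the argument.
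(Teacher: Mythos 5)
Your proposal is correct and follows essentially the same approach as the paper: step (i) is the paper's first claim verbatim, and step (ii) is an equivalent formulation of the same pigeonhole/bucketing argument, phrased by dynamically growing the number of buckets (and bounding it via step (i) each time a new one is opened) rather than, as in the paper, fixing $N=\lfloor\tau/d\rfloor$ buckets at the outset and showing they must fill to size $d$. Both yield the same count $(4\beta^2/\zeta^2+1)\cdot\ValueEluderCL{\zeta}$.
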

To apply this result, let us order the indices such that $w_{i_1}(z_{i_1})\geq{}w_{i_2}(z_{i_2})\geq\ldots\geq{}w_{i_T}(z_{i_T})$. Consider any index $t$ for which $w_{i_t}(z_{i_t})>\Delta/2$. For any particular $\zeta>\Delta/2$, if we have $w_{i_t}(z_{i_t})>\zeta$, then \pref{lem:eluder_indicator_bound} (since $\zeta\leq{}1\leq\beta$) implies that
\begin{equation}
  \label{eq:wt_eluder_bound}
t \leq{} \sum_{t=1}^{T}\indic\crl{w_{t}(z_t)>\zeta} \leq{} \frac{5\beta^2}{\zeta^2}\ValueEluderCL{\zeta}.
\end{equation}
Since we have restricted to $\zeta\geq\Delta/2$, rearranging yields
\[
  w_{i_t}(z_{i_t}) \leq{} \sqrt{\frac{5\beta^{2}\ValueEluderL{\Delta/2}}{t}}.
\]
Now, let $T_0$ be the greatest index $t$ such that $w_{i_t}(z_{i_t})>\Delta/2$. Then we have
\begin{align*}
  \sum_{t=1}^{T}w_t(z_t)\indic\crl*{w_t(z_t)>\Delta/2} &=  \sum_{t=1}^{T_0}w_t(z_t)\indic\crl*{w_t(z_t)>\Delta/2}\\
                                                       &\leq{}  \sum_{t=1}^{T_0}\sqrt{\frac{5\beta^{2}\ValueEluderL{\Delta/2}}{t}}\\
                                                       &\leq{}  \sqrt{5\beta^{2}\ValueEluderL{\Delta/2}T_0}.
\end{align*}
We know that from \pref{eq:wt_eluder_bound} that $T_0\leq{}\frac{20\beta^2}{\Delta^2}\ValueEluderCL{\Delta/2}$, so altogether we have
\[
\sum_{t=1}^{T}w_t(z_t)\indic\crl*{w_t(z_t)>\Delta/2}\leq{}100\frac{\beta^2}{\Delta}\ValueEluderL{\Delta/2}.
\]

To conclude, we set $\delta=1/T$, and the final result follows from the law of total expectation.
\qed

\begin{proof}[\pfref{lem:eluder_indicator_bound}]
Let us adopt the shorthand $d=\ValueEluderCL{\zeta}$. We begin with a definition. We say $z$ is $\zeta$-independent of $z_1,\ldots,z_t$
  if there exists $f\in\cF$ such that
  $\abs*{f(z)-\fstar(z)}>\zeta$ and
  $\sum_{i=1}^{t}\prn*{f(z_i)-\fstar(z_i)}^{2}\leq\zeta^{2}$. We say
  $z$ is $\zeta$-dependent on $z_1,\ldots,z_t$ if for all $f\in\cF$ with
  $\sum_{i=1}^{t}\prn*{f(z_i)-\fstar(z_i)}^{2}\leq\zeta^{2}$, $\abs*{f(z)-\fstar(z)}\leq{}\zeta$.

  We first claim that for any $t$, if
  $w_{t}(z_t)>\zeta$, then $z_t$ is $\zeta$-dependent on
  at most $4\beta^2/\zeta^2$ disjoint subsequences of
  $z_1,\ldots,z_{t-1}$. Indeed, let $f$ be such that
  $\abs*{f(z_t)-\fstar(z_t)}>\zeta$. If $z_t$ is
  $\zeta$-dependent on a particular subsequence
  $z_{i_1},\ldots,z_{i_k}$ but $w_{t}(z_t)>\zeta$, we must have
  \[
    \sum_{j=1}^{k}(f(z_{i_j})-\fstar(z_{i_j}))^{2}\geq{}\zeta^{2}.
  \]
  If there are $M$ such disjoint sequences, we have
  \[
    M\zeta^{2}\leq{}\nrm*{f-\fstar}_{\cZ_{t-1}}^{2}\leq{}4\beta^2,
  \]
  so $M\leq{} \frac{4\beta^{2}}{\zeta^{2}}$.

Next we claim that for $\tau$ and any sequence $(z_1,\ldots,z_{\tau})$, there is
some $j$ such that $z_j$ is $\zeta$ dependent on at least
$\floor{\tau/d}$ disjoint subsequences of $z_1,\ldots,z_{j-1}$. Let
$N=\floor{\tau/d}$, and let $B_1,\ldots,B_N$ be subsequences of
$z_1,\ldots,z_{\tau}$. We initialize with $B_i = (z_i)$. If $z_{N+1}$ is
$\zeta$-dependent on $B_i=\prn*{z_i}$ for all $1\leq{}i\leq{}N$ we are done. Otherwise,
choose $i$ such that $z_{N+1}$ is $\zeta$-independent of
$B_i$, and add it to $B_i$. Repeat this process until we reach $j$
such that either $z_j$ is $\zeta$-dependent on all $B_i$ or 
$j=\tau$. In the first case we are done, while in the second case, we
have $\sum_{i=1}^{N}\abs*{B_i}\geq{}\tau\geq{}dN$. Moreover,
$\abs*{B_i}\leq{}d$, since each $z_j\in{}B_i$ is
$\zeta$-independent of its prefix. We conclude that
$\abs*{B_i}=d$ for all $i$, so in this case $z_{\tau}$ is $\zeta$-dependent
on all $B_i$.

Finally, let $(z_{t_1},\ldots,z_{t_{\tau}})$ be the subsequence $z_1,\ldots,z_T$ consisting of all elements for
which $w_{i_i}(z_{t_i})>\zeta$. Each element of the sequence is
dependent on at most $4\beta^{2}/\zeta^{2}$ disjoint subsequences of
$(z_{t_1},\ldots,z_{t_{\tau}})$, and by the argument above, one
element is dependent on at least $\floor{\tau/d}$ disjoint
subsequences, so we must have $\floor{\tau/d}\leq{}
4\beta^{2}/\zeta^{2}$, and in particular $\tau\leq{} (4\beta^{2}/\zeta^{2}+1)d$.
\end{proof}

\subsubsection{Proof of \pref{thm:disagreement_to_eluder}}
  \newcommand{\ValueEluderG}[1]{\mathfrak{e}^{\val}(\cG,#1)}
  \newcommand{\ValueEluderCG}[1]{\check{\mathfrak{e}}^{\val}(\cG,#1)}
This proof closely follows that of \pref{thm:disagreement_to_star}. As
with that theorem, we prove a slightly more general result. Let
$\cG\subseteq(\cZ\to\brk*{0,1})$ be a function class, and let
$\ValueDisG{\Delta}{\veps}$ be defined as in
\pref{eq:value_dis_general}. Let $\ValueEluderCG{\Delta}
  $ be the length of the longest sequence
  of points $z\ind{1},\ldots,z\ind{m}$ such that for all $i$, there exists $g\ind{i}\in\cG$
  such that
  \[
g\ind{i}(z\ind{i})>\Delta,\quad\text{and}\quad\sum_{j<i}(g\ind{i})^{2}(z\ind{j})\leq{}\Delta^{2}.
  \]
  The \valueeluder for $\cG$ is defined as $\ValueEluderG{\Delta_0} =
  \sup_{\Delta>\Delta_0}\ValueEluderCG{\Delta}$.
  
We will prove the following result.
\begin{theorem}
  \label{thm:disagreement_to_value_general}
  For any \uniformgc class $\cG\subseteq(\cZ\to\brk*{0,1})$
\begin{equation}
  \label{eq:disagreement_to_value_general}
  \sup_{\cP}\sup_{\veps>0}\ValueDisG{\Delta}{\veps} \leq{} 4\ValueEluderG{\Delta},\quad\forall{}\Delta>0.
\end{equation}
\end{theorem}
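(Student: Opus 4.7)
\textbf{Proof plan for \pref{thm:disagreement_to_value_general}.} The plan is to follow the template of \pref{thm:disagreement_to_star_general} verbatim, replacing its quadratic-in-star-number counting lemma with a linear-in-eluder-dimension analogue whose proof is essentially already carried out in \pref{lem:eluder_indicator_bound}. Concretely, the two-step strategy is: (i) apply the finite-support reduction (\pref{lem:finite_support}) to reduce the distribution $\cP$ to the uniform distribution on a finite multiset $S=(z_1,\ldots,z_n)$; (ii) prove the pointwise counting bound
\begin{equation}\label{eq:eluder_counting}
\sum_{j=1}^{n}\indic\crl*{w_{\cG_{S}(\veps)}(z_j)>\zeta}
\;\leq\;\prn*{\tfrac{\veps^{2}}{\zeta^{2}}+1}\cdot\ValueEluderCG{\zeta},
\end{equation}
where $w_{\cG_S(\veps)}(z)=\sup_{g\in\cG_S(\veps)}g(z)$. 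Granting \pref{eq:eluder_counting}, the rest of the argument is then a line-by-line copy of the deduction in \pref{thm:disagreement_to_star_general}: expressing the probability on the left-hand side of the bound we want as a normalized indicator sum over the finite support, plugging in \pref{eq:eluder_counting} with $\Delta$ in the role of $\zeta$, letting the slack $\gamma$ from \pref{lem:finite_support} vanish by continuity, and absorbing the constant $\tfrac{\veps^{2}}{\Delta^{2}}+1\leq 2\cdot\tfrac{\veps^{2}}{\Delta^{2}}\vee 1$ into the prefactor $4$ claimed in the theorem.

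The substantive step is therefore \pref{eq:eluder_counting}, which should be proved by a two-part argument that parallels the structure of \pref{lem:star} but uses the sequential ordering afforded by the eluder definition. First, I would show that if $w_{\cG_S(\veps)}(z_j)>\zeta$ for some $j$, then $z_j$ is $\zeta$-eluder-dependent (with respect to $\cG_S(\veps)$) on at most $\veps^2/\zeta^2$ disjoint subsequences of $S$: any witness $g\in\cG_S(\veps)$ with $g(z_j)>\zeta$ must accumulate at least $\zeta^2$ squared-error on each such dependent subsequence, and these contributions are bounded in total by $\nrm{g}^2_S\leq\veps^2$. Second, for any subsequence $T=(z_{t_1},\ldots,z_{t_\tau})$ one can extract an element that is dependent on at least $\lfloor \tau/d\rfloor$ disjoint subsequences by a greedy filling argument: initialize buckets $B_1,\ldots,B_{\lfloor\tau/d\rfloor}$ each containing one point, and for each subsequent $z_{t_k}$ either place it in the first bucket it is independent of, or declare victory if it is dependent on all of them; once no bucket can accept a new element, each bucket has size exactly $d=\ValueEluderCG{\zeta}$ (since any eluder-independent sequence has length at most $d$), and every later point is dependent on every bucket. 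Applying this to the subsequence of indices with $w_{\cG_S(\veps)}(z_j)>\zeta$ and combining with the first claim yields $\lfloor\tau/d\rfloor\leq \veps^2/\zeta^2$, i.e.\ \pref{eq:eluder_counting}.

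The main technical obstacle, and the reason the eluder bound is linear while the star bound is quadratic, is the greedy bucket-filling step: it exploits the sequential ``$\sum_{j<i}$'' structure in \pref{def:value_eluder}, which gives a canonical ordering along which eluder-independence chains can be built. The star analogue loses a factor equal to the eluder dimension precisely because the symmetric ``$\sum_{j\neq i}$'' constraint in \pref{def:value_star} forces the more delicate counting argument of \pref{lem:comb}. I do not anticipate novel difficulties beyond this: the greedy argument is standard (it appears essentially verbatim in the proof of \pref{lem:eluder_indicator_bound}, where the role of $\beta$ is played by $\veps/2$), and once \pref{eq:eluder_counting} is in hand the reduction from distributional to empirical disagreement is already packaged in \pref{lem:finite_support}.
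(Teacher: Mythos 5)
Your proposal matches the paper's proof essentially verbatim: the paper likewise applies the finite-support reduction of \pref{lem:finite_support}, then invokes the greedy bucket-filling argument from \pref{lem:eluder_indicator_bound} to obtain the counting bound on $\sum_j\indic\{w_{\cG_S(\veps')}(z_j)>\Delta\}$, and concludes by sending $\gamma\to 0$. You correctly identify that the sequential ``$\sum_{j<i}$'' structure is what permits the linear (rather than quadratic) dependence, and your observation that the right radius to feed into the existing lemma is $\beta=\veps/2$ even recovers a sharper constant than the paper's literal $4(\veps')^2/\Delta^2$.
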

Let $\Delta,\veps>0$ be given. Let $\gamma\leq{}1/4$ be fixed. Then by
\pref{lem:finite_support}. There exists a distribution
$\cPhat=\unif(z_1,\ldots,z_n)$ such that
\[
      \bbP_{\cP}\prn*{\exists{}g\in\cG:
  g(z)>\Delta,\;\nrm*{g}^{2}_{\cP}\leq\veps^{2}}
  \leq{}       \bbP_{\wh{\cP}}\prn*{\exists{}g\in\cG:
  g(z)>\Delta,\;\nrm*{g}^{2}_{\wh{\cP}}\leq\veps^{2}+\gamma} + \gamma.
\]
Write $S=\prn*{z_1,\ldots,z_n}$, so that $\cPhat=\unif(S)$. Define
$\cG_{S}(\varepsilon)\ldef{}\crl*{g\in\cG : \sum_{j=1}^n
  g^{2}(z_j)\leq{}\veps^{2}}$. Then we
have
\begin{align*}
\bbP_{\wh{\cP}}\prn*{\exists{}g\in\cG:
  g(z)>\Delta,\;\nrm*{g}^{2}_{\wh{\cP}}\leq\veps^{2}+\gamma}
  = \frac{1}{n}\sum_{i=1}^{n}\indic\crl*{w_{\cG_S(\veps')}(z_i)>\Delta},
\end{align*}
where $\veps'^{2}\ldef{}n(\veps^{2}+\gamma)$. By
\pref{lem:eluder_indicator_bound}, we can bound
\begin{align*}
  \frac{1}{n}\sum_{i=1}^{n}\indic\crl*{w_{\cG_S(\veps')}(z_i)>\Delta}
  \leq{}
  \frac{1}{n}\prn*{\frac{4(\veps')^2}{\Delta^2}+1}\ValueEluderCG{\Delta}.
\end{align*}
To conclude the result, we take $\gamma\to{}0$ (and consequently
$n\to\infty$), so that the bound above yields
\[
      \bbP_{\cP}\prn*{\exists{}g\in\cG:
  g(z)>\Delta,\;\nrm*{g}^{2}_{\cP}\leq\veps^{2}} \leq{} \frac{4\veps^{2}}{\Delta^{2}}\cdot\ValueEluderCG{\Delta},
\]
as desired.

\qed

\part{Proofs for Reinforcement Learning Results}
\label{part:rl}
\newcommand{\thetaval}{\btheta^{\val}}
\newcommand{\thetacheck}{\check{\btheta}}

\section{Proof of  \cref*{thm:block_mdp}}
\label{app:rl_main}
We let $\tau\ind{k,h}$ denote the $h$th trajectory gathered by the
algorithm during iteration $k$ (i.e., the trajectory obtained by
rolling in to layer $h$ with $\pi\ind{k}$, then switching to uniform
exploration. Throughout the proof, we let $s_1\ind{k,h},\ldots,s_H\ind{k,h}$ denote
the latent states encountered during $\tau\ind{k,h}$, which emphasize are not
observed.

We define
$\cL_h\ind{k}=\crl*{s_h\ind{1,h},\ldots,s_h\ind{k-1,h}}$. For any
collection $\cL\subseteq\cS$, we define
\[
\nrm*{f-f'}_{\cL}^{2} = \sum_{s\in\cL}\En_{x\sim\emi(s),a\sim\piunif}\prn*{f(x,a)-f'(x,a)}^{2}.
\]

We also let $\cH\ind{k}=\crl*{(s_1\ind{k,h},x_1\ind{k,h},
  a_1\ind{k,h}, r_1\ind{k,h}), \ldots, (s_H\ind{k,h},x_H\ind{k,h},
  a_H\ind{k,h}, r_H\ind{k,h})}_{h=1}^{H}$ denote the entire history
for iteration $k$.

  Let us define an intermediate quantity which is closely related to
  the \valuedis, which we will work with throughout the proof. For
  each $s\in\cS_h$, define
  \begin{align*}
    &\thetacheck_s(\cF_h,\veps_0) \\&=
    1\vee\sup_{\fstar\in\cF_h}\sup_{\veps\geq{}\veps_0}\frac{\En_{x\sim\emi(s)}\En_{a\sim\piunif}\sup\crl*{\abs{f(x,a)-\fstar(x,a)}^{2}
    : f\in\starhull(\cF_h,\fstar), \nrm*{f-\fstar}_s\leq\veps}}{\veps^{2}},
  \end{align*}
  We
  define
  $\thetacheck_{h}(\cF_h\veps)=\sum_{s\in\cS_h}\thetacheck_s(\Delta,\veps)$
  analogously. Lastly, we abbreviate
  $\thetacheck_s(\veps)\equiv\thetacheck_s(\cF_h,\veps)$ and
  $\thetacheck_h(\veps)\equiv\thetacheck_h(\cF_h,\veps)$. We will pass from this quantity to $\thetaval$ at the
  end of the proof.

\subsection{Confidence Sets}
Let \[\cFhat_h\ind{k}\ldef{}\crl[\bigg]{f\in\starhull(\cF_h,\fhat_h\ind{k}),
  \nrm[\big]{f-\fhat_h\ind{k}}_{\cZ_h\ind{k}}\leq\beta_h}\] be the set used
to compute the upper confidence function $\Qbar_h\ind{k}$ in iteration
$k$. Let the Bayes predictor
for this round be defined as
\[
  \fbar_h\ind{k}(x,a) =\fbayes(x,a) + \brk*{\Pstar_h\Vbar_{h+1}\ind{k}}(x,a).
\]
Recall that the optimistic completeness assumption implies that $\fbar_h\ind{k}\in\cF_h$.
\begin{theorem}
  \label{thm:confidence_radius}
  For any $\delta\in(0,1)$, if we choose $\beta_1,\ldots,\beta_H$ in
  \pref{alg:blockalg} such that
  \begin{align*}
  &\beta^2_{H}=400H^{2}\log\prn{\Fmax{}KH\delta^{-1}}\\
  &\beta^{2}_{h} = \frac{1}{2}\beta_{h+1}^{2}+
  150^2H^{2}A^{2}\thetacheck_{h+1}(\beta_{h+1}K^{-1/2})^2\log(2\Fmax{}KH\delta^{-1}) +
  700H^{2}S\log(2eK),
  \end{align*}
then with probability at least $1-3\delta$,
$\fbar_h\ind{k}\in\cFhat_h\ind{k}$
for all $k$, $h$.
\end{theorem}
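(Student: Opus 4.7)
The plan is to prove this by downward induction on $h$, going from $H$ down to $1$, which is natural because the regression target $\Vbar_{h+1}^{(k)}$ at layer $h$ depends on confidence sets constructed at deeper layers. The base case $h=H$ is straightforward: since $\Vbar_{H+1}^{(k)} \equiv 0$, the target $\fbar_H^{(k)}(x,a) = \fbayes(x,a)$ is deterministic and independent of the data. A standard uniform least-squares concentration bound over $\cF_H$, combined with Freedman's inequality (\pref{lem:freedman}) applied to the martingale of squared errors, yields $\fbar_H^{(k)} \in \cFhat_H^{(k)}$ with probability $1-\delta/H$ for all $k$, provided $\beta_H^2 \asymp H^2\log(\Fmax KH/\delta)$.

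For the inductive step at layer $h$, the core difficulty is that $\Vbar_{h+1}^{(k)}$ is not a fixed target but depends on the entire history, ruling out vanilla least-squares concentration. The plan is to mimic \cite{wang2020provably,jin2020provably} by establishing a \emph{uniform} generalization bound over a carefully constructed surrogate class containing $\Vbar_{h+1}^{(k)}$. Specifically, I will define the idealized upper confidence function
\[
\Qtil_{h+1}^{(k)}(x,a) \ldef \sup\crl[\big]{f(x,a) \mid f \in \starhull(\cF_{h+1},\fhat_{h+1}^{(k)}),\; \nrm{f-\fhat_{h+1}^{(k)}}_{\cL_{h+1}^{(k)}} \leq \wt\beta_{h+1}},
\]
where the empirical norm $\nrm{\cdot}_{\cZ_{h+1}^{(k)}}$ is replaced by the latent state norm $\nrm{\cdot}_{\cL_{h+1}^{(k)}}$. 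The star hull convexification is crucial here: it guarantees that $\Qbar_{h+1}^{(k)}(x,a)$ is Lipschitz with respect to the confidence radius (since along any ray from $\fhat_{h+1}^{(k)}$ the norm constraint is monotone), so small perturbations between the empirical and latent-state norms produce small perturbations in the predictions. A Bernstein-type bound (via \pref{lem:freedman}) applied to the sequence of residuals $(f-\fhat_{h+1}^{(k)})^2$ for $f\in\cFhat_{h+1}^{(k)}$ will translate the empirical constraint to a latent-state constraint with radius $\wt\beta_{h+1}\approx\beta_{h+1}$, up to lower-order terms involving the disagreement coefficient. The disagreement coefficient enters precisely here: it controls the ratio $\En_{x\sim\emi(s),a\sim\piunif}[\sup_f(f-\fhat)^2] / \nrm{f-\fhat}_s^2$, which governs the variance term in the Bernstein inequality.

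Next I will bound the cardinality of the class of possible surrogates $\Qtil_{h+1}^{(k)}$. Since $\Qtil_{h+1}^{(k)}$ is determined by the center $\fhat_{h+1}^{(k)} \in \cF_{h+1}$ and the \emph{multiset} of latent states visited in the $h+1$st trajectories, and since the latent-state norm is invariant under permutations of that multiset, the number of distinct surrogates is at most $\abs{\cF}\cdot\binom{K+S-1}{S-1} \leq \abs{\cF}\cdot K^{\bigoh(S)}$. Applying a uniform martingale concentration bound over this finite class yields least-squares generalization at rate $\beta_h^2 \approx \frac{1}{2}\beta_{h+1}^2 + H^2 S\log(eK) + H^2A^2\thetacheck_{h+1}^2\log(\Fmax KH/\delta)$, which matches the stated recursion for $\beta_h^2$. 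Optimistic completeness (\pref{ass:completeness}) then ensures $\fbar_h^{(k)} \in \cF_h$ so that the comparison is well-posed.

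The main obstacle I expect is executing the recursion cleanly. The approximation error from replacing $\Qbar_{h+1}^{(k)}$ by $\Qtil_{h+1}^{(k)}$ propagates into the least-squares target at layer $h$, which in turn affects the confidence set at layer $h$ and thus the approximation at layer $h-1$. Controlling this propagation requires quantitative Lipschitz estimates for the star hull UCB that I expect to derive via the explicit parameterization $f = \fhat + t(f'-\fhat)$, $t\in[0,1]$, and then aggregating with a geometric-sum over layers that halves $\beta_h^2$ at each step (hence the $\tfrac{1}{2}\beta_{h+1}^2$ recursion). The factor $\thetacheck_{h+1}(\beta_{h+1}K^{-1/2})$ in the recursion appears naturally because the scale $\veps \approx \beta_{h+1}/\sqrt{K}$ is the typical rate at which the empirical/latent-state norm localizes around $\fbar_{h+1}^{(k)}$. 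Finally, three failure modes (base case, inductive concentration, and the Bernstein variance translation) each contribute $\delta$ to the overall failure probability after a union bound over $k$ and $h$, yielding the stated $1-3\delta$ guarantee.
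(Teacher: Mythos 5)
Your proposal tracks the paper's actual proof very closely: downward induction on $h$, a base case at $h=H$ with a deterministic target, the idealized surrogate $\Qtil_{h+1}^{(k)}$ defined via the latent-state norm $\nrm{\cdot}_{\cL_{h+1}^{(k)}}$, the star-hull Lipschitz property to control $\Qbar_{h+1}^{(k)}-\Qtil_{h+1}^{(k)}$, the stars-and-bars multiset count $\abs{\cF}\cdot K^{\bigoh(S)}$ for the surrogate class, and the $\tfrac{1}{2}\beta_{h+1}^2$ geometric recursion. One small imprecision: in the paper the disagreement coefficient does not appear inside the Bernstein variance term of the empirical-to-latent norm translation (that is a clean Freedman argument, \pref{lem:uniform_conc}); rather, it enters in bounding the accumulated approximation error $\sum_{j<k}\xi_{h+1}^{(k)}(x_{h+1}^{(j,h)})^2$, where one needs $\En_{x\sim\emi(s),a\sim\piunif}\sup_{f\in\cFbar_{h+1}^{(k)}}(f-\fhat_{h+1}^{(k)})^2\lesssim\thetacheck_s\cdot\betatil_{h+1}^2/n_{h+1}^{(k)}(s)$ per latent state $s$ — which is precisely what the $\thetacheck_s$ definition delivers — and the scale $\beta_{h+1}K^{-1/2}$ is a pessimistic lower bound on $\betatil_{h+1}/\sqrt{n_{h+1}^{(k)}(s)}$ combined with monotonicity of $\thetacheck_s(\cdot)$, rather than a ``typical localization rate.'' This is a matter of exposition, not a gap: the argument you describe is the one the paper executes.
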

Let $\Econf$ denote the event from \pref{thm:confidence_radius}. This
event has the following consequence.
\begin{lemma}
  \label{lem:optimistic}
  Whenever $\Econf$ holds, we have
  \begin{align}
    \label{eq:optimistic}
    \Qbar_h\ind{k}(x,a)\geq{}\Qstar_h(x,a)
  \end{align}
  for all $h, k$.
\end{lemma}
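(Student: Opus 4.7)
The plan is a standard backward induction on the layer index $h$, running from $H+1$ down to $1$, proving the two coupled inequalities
\[
\Qbar_h\ind{k}(x,a)\geq \Qstar_h(x,a)\quad\text{and}\quad \Vbar_h\ind{k}(x)\geq \Vstar_h(x)
\]
simultaneously for every fixed iteration $k$. The base case $h=H+1$ is trivial since by initialization $\Vbar_{H+1}\ind{k}(x)=0=\Vstar_{H+1}(x)$.

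For the inductive step, suppose the claim holds at layer $h+1$. The key identity is the Bellman equation
\[
\Qstar_h(x,a)=\fbayes(x,a)+[\Pstar_h \Vstar_{h+1}](x,a),
\]
together with the definition of the Bayes predictor
\[
\fbar_h\ind{k}(x,a)=\fbayes(x,a)+[\Pstar_h \Vbar_{h+1}\ind{k}](x,a).
\]
Since $[\Pstar_h V](x,a)$ is monotone in $V$ (it is just a conditional expectation), the inductive hypothesis $\Vbar_{h+1}\ind{k}\geq \Vstar_{h+1}$ pointwise gives $\fbar_h\ind{k}(x,a)\geq \Qstar_h(x,a)$ for every $(x,a)$.

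Now invoke the event $\Econf$: by \pref{thm:confidence_radius}, $\fbar_h\ind{k}\in\cFhat_h\ind{k}$, so by the very definition of the star hull upper confidence bound
\[
\Qbar_h\ind{k}(x,a)=\sup\bigl\{f(x,a):f\in\cFhat_h\ind{k}\bigr\}\geq \fbar_h\ind{k}(x,a)\geq \Qstar_h(x,a),
\]
which is the desired $Q$-function inequality. Taking the maximum over $a\in\cA$ on both sides immediately yields $\Vbar_h\ind{k}(x)\geq \Vstar_h(x)$, closing the induction.

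There is essentially no technical obstacle: the only ingredient beyond the Bellman equation is that $\fbar_h\ind{k}$ lies in $\cFhat_h\ind{k}$, which is exactly the content of $\Econf$, and that the star-hull UCB is defined as a pointwise supremum over $\cFhat_h\ind{k}$. The nontrivial work has already been done in establishing $\Econf$ (\pref{thm:confidence_radius}); conditional on that event, this lemma is a one-line consequence per layer.
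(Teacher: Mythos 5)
Your proof is correct and follows essentially the same backward induction as the paper: fix $k$, induct on $h$ from $H+1$ down, and in the inductive step use that $\Econf$ places $\fbar_h\ind{k}$ in $\cFhat_h\ind{k}$, then chain $\Qbar_h\ind{k}\geq\fbar_h\ind{k}\geq\Qstar_h$ via monotonicity of $\Pstar_h$ and the inductive hypothesis at layer $h+1$. The only cosmetic difference is that you carry both the $Q$- and $V$-inequalities explicitly through the induction, whereas the paper states the $Q$-inequality and observes inline that the $V$-inequality follows by taking a max over actions; these are the same argument.
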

\begin{proof}[\pfref{lem:optimistic}]
  Let $k$ be fixed. We prove the result by induction on $h$. First,
  the property holds trivially for round $H+1$, since $\Vbar_{H+1}\ind{K}=\Vstar_{H+1}=0$.

Now, consider a fixed timestep $h$, and suppose inductively that the
property holds for round $h+1$. Then we have
\begin{align*}
\Qbar_h\ind{k}(x,a) =
\sup_{f\in\cFhat_h\ind{k}}f(x,a)\geq{}\fbar_h\ind{k}(x,a)
&= \fbayes(x,a) +\brk*{\Pstar{}\Vbar_{h+1}\ind{k}}(x,a)\\
  &\geq{} \fbayes(x,a) +\brk*{\Pstar{}\Vstar_{h+1}}(x,a)
    = \Qstar_{h}(x,a),
\end{align*}
where we have used that $\Vbar_{h+1}\ind{k}(x)\geq{}\Vstar_{h+1}(x)$
whenever $\Qbar_{h+1}\ind{k}(x,a)\geq{}\Qstar_{h+1}(x,a)$ for all $x,a$.
\end{proof}
\subsection{Bounding Regret}
We now use the concentration guarantees established above to bound the
regret of the policies $\pi\ind{1},\ldots,\pi\ind{K}$. That is, we
wish to bound
\[
\sum_{k=1}^{K}\Vstar-\Vf^{\pi\ind{k}}
\]
Note that since
our algorithm executes policies besides these ones, this is not a
bound on the true regret of the algorithm, but rather for an
intermediate quantity which is only used for the analysis.

We first state a regret decomposition for $Q$-functions and induced policies that are
optimistic in the following sense.
\begin{definition}
  A $Q$-function $\Qbar$ and policy $\pi$ are said to be optimistic if
  for all $h\in\brk*{H}$,
  \[
    \Qbar_h(x,a)\geq{}\Qstar_h(x,a)\;\;\forall{}x,a\mathand\pi(x)\in\argmax_{a\in\cA}\Qbar_h(x,a)\;\;\forall{}x\in\cX_h.
  \]
  We define $\Vbar_h(x)=\max_a\Qbar_h(x,a)$ as the induced value function.
\end{definition}
Define
$\clip\brk*{x\mid{}\veps}=x\indic\crl*{x\geq{}\veps}$. The following
lemma bounds the regret of any optimistic pair $(\Qbar,\pi)$ in terms of
clipped surpluses for $\Qbar$.
\begin{lemma}[\cite{simchowitz2019non}, Theorem B.3]
  \label{lem:gap_decomp_simple}
Let $(\Qbar,\pi)$ be optimistic, and let $\Ebar_h(x,a)=\Qbar_h(x,a) -
\prn*{\fbayes(x,a) + \brk{\Pstar_h\Vbar_{h+1}}(x,a)}$ be the optimistic
surplus. Then
\begin{align}
  \Vstar - \Vf^{\pi}\leq{}2e\sum_{h=1}^{H}\En_{\pi}\brk*{\clip\brk*{
  \Ebar_h(x_h,a_h)\mid{}\gapcheck(x_h,a_h)}
  },
\end{align}
where $\gapcheck(x,a)\ldef{}\gap(x,a)/4H$.
\end{lemma}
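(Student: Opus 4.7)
The plan is to prove the bound via backward induction on $h$, establishing the pointwise claim
\[
  \delta_h(x) \;\le\; \alpha_h\cdot B_h(x),\qquad B_h(x)\ldef \En_\pi\Big[\sum_{h'=h}^H \clip\big[\Ebar_{h'}(x_{h'},a_{h'})\,\big|\,\gapcheck(x_{h'},a_{h'})\big]\,\Big|\,x_h=x\Big],
\]
where $\delta_h(x)\ldef \Vbar_h(x)-\Vf^\pi_h(x)$ and the prefactor $\alpha_h=(1+1/H)^{H-h+1}$ so that $\alpha_1\le e$. The final bound then follows from $\Vstar_h(x)\le \Vbar_h(x)$ (by optimism), which gives $\Vstar-\Vf^\pi\le \delta_1\le e\cdot B_1\le 2e\cdot B_1$, the factor of $2$ being harmless slack.

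The two ingredients I would set up first are both one-step identities at layer $h$. First, because $\pi$ is greedy with respect to $\Qbar$, the usual value-difference identity gives
\[
  \delta_h(x) \;=\; \Ebar_h(x,\pi(x)) + \Pstar_h\delta_{h+1}(x,\pi(x)),
\]
which will drive the telescoping half of the induction. Second, optimism $\Qstar_h\le\Qbar_h$ combined with the definition of the surplus yields the key \emph{gap inequality}
\[
  \gap(x,\pi(x)) \;=\; \Vstar_h(x)-\Qstar_h(x,\pi(x)) \;\le\; \Qbar_h(x,\pi(x))-\Qstar_h(x,\pi(x)) \;\le\; \Ebar_h(x,\pi(x)) + \Pstar_h\delta_{h+1}(x,\pi(x)),
\]
using that $\Vbar_{h+1}-\Vstar_{h+1}\le \Vbar_{h+1}-\Vf^\pi_{h+1}=\delta_{h+1}$. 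This is the mechanism by which a large per-state gap forces a large downstream residual, allowing the clip to pay for unclipped small surpluses.

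The inductive step splits on the clipping threshold. When $\Ebar_h(x,\pi(x))\ge\gapcheck(x,\pi(x))$, the clip is trivial, and the telescoping identity combined with the inductive hypothesis gives $\delta_h\le \Ebar_h+\alpha_{h+1}\Pstar B_{h+1} \le \alpha_h(\Ebar_h+\Pstar B_{h+1})=\alpha_h B_h$. When $\Ebar_h<\gapcheck=\gap/4H$, the clip is zero and the induction does not close naively; here the gap inequality gives $\Pstar\delta_{h+1}\ge (1-1/4H)\gap\ge (3/4)\gap$, and by the inductive hypothesis $\Pstar B_{h+1}\ge \Pstar\delta_{h+1}/\alpha_{h+1}$. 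Combining these yields
\[
  \Ebar_h \;<\; \frac{\gap}{4H} \;\le\; \frac{1}{H}\cdot\frac{3\gap}{4}\cdot\frac{1}{\alpha_{h+1}}\cdot\alpha_{h+1} \;\le\; \frac{\alpha_{h+1}}{H}\Pstar B_{h+1} \;=\; (\alpha_h-\alpha_{h+1})\Pstar B_{h+1},
\]
so $\delta_h\le \Ebar_h+\alpha_{h+1}\Pstar B_{h+1}\le \alpha_h\Pstar B_{h+1}=\alpha_h B_h$, closing the induction.

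The main obstacle is the second case: the simple induction fails precisely when the clip zeroes out a positive surplus, and one must use the gap inequality together with the inductive hypothesis to transfer the missing mass to the downstream bound. The calibration $\gapcheck=\gap/4H$ together with $\alpha_h=(1+1/H)^{H-h+1}$ is what makes these two budgets balance exactly to an $\mathrm{O}(1/H)$ multiplicative increment per layer, accumulating to $e$ over $H$ layers. Once this case analysis is in place, setting $h=1$ and taking expectation over $x_1$ delivers the stated bound.
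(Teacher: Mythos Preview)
The paper does not give its own proof of this lemma; it is stated as a direct citation of Theorem~B.3 in \cite{simchowitz2019non} and is then simply invoked. Your proposal correctly reconstructs the argument: the backward induction on $\delta_h\le\alpha_h B_h$ with $\alpha_h=(1+1/H)^{H-h+1}$, the one-step identity $\delta_h=\Ebar_h+\Pstar_h\delta_{h+1}$, the gap inequality $\gap(x,\pi(x))\le\Ebar_h(x,\pi(x))+[\Pstar_h\delta_{h+1}](x,\pi(x))$, and the two-case split on the clipping threshold are exactly the mechanism behind this kind of clipped decomposition. Your computation even yields the constant $e$ rather than $2e$; the extra factor of two in the statement is slack, as you noted. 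One small remark: in Case~2 your displayed chain tacitly handles the subcase $\gap(x,\pi(x))=0$ (where Case~2 means $\Ebar_h<0$), since then it reads $\Ebar_h<0\le(\alpha_h-\alpha_{h+1})\Pstar_h B_{h+1}$, but you may want to mention this explicitly.
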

Define
\[
\Ebar_h\ind{k}(x,a)=\Qbar_h\ind{k}(x,a) - \prn*{\fbayes(x,a) + \brk*{\Pstar_h\Vbar_{h+1}\ind{k}}(x,a)}.
\]
Then \pref{lem:optimistic} and \pref{lem:gap_decomp_simple} imply that
conditioned on $\Econf$, we have
\[
\Vstar-\Vf^{\pi\ind{k}}\leq{}2e\sum_{h=1}^{H}\En_{\pi\ind{k}}\brk*{\clip\brk*{
  \Ebar_h\ind{k}(x_h,a_h)\mid{}\gapcheck(x_h,a_h)}
  }
\]
for all $k$.

Since $\clip\brk*{x\mid\veps}\leq{}\frac{x^{2}}{\veps}$, we can
further upper bound as
\begin{align*}
\Vstar-\Vf^{\pi\ind{k}}\leq{}2e\sum_{h=1}^{H}\En_{\pi\ind{k}}\brk*{\frac{\prn*{\Ebar_h\ind{k}(x_h,a_h)}^{2}}{\gapcheck(x_h,a_h)}}
  =8eH\sum_{h=1}^{H}\En_{\pi\ind{k}}\brk*{\frac{\prn*{\Ebar_h\ind{k}(x_h,a_h)}^{2}}{\gap(x_h,a_h)}}.
\end{align*}
Let $p_h\ind{k}(s)=\bbP_{\pi\ind{k}}(s_h=s)$. Then we can write
\begin{align*}
  \En_{\pi\ind{k}}\brk*{\frac{\prn*{\Ebar_h\ind{k}(x_h,a_h)}^{2}}{\gap(x_h,a_h)}}
  &=
    \sum_{s\in\cS_h}p_h\ind{k}(s)\En_{x_h\sim\emi(s)}\brk*{\frac{\prn*{\Ebar_h\ind{k}(x_h,\pi\ind{k}(x_h))}^{2}}{\gap(x_h,\pi\ind{k}(x_h))}}\\
  &\leq{}
    A\sum_{s\in\cS_h}p_h\ind{k}(s)\En_{x_h\sim\emi(s)}\En_{a_h\sim\piunif}\brk*{\frac{\prn*{\Ebar_h\ind{k}(x_h,a_h)}^{2}}{\gap(x_h,a_h)}}\\
  &\leq{} A\sum_{s\in\cS_h}\frac{p_h\ind{k}(s)}{\gap(s)}\En_{x_h\sim\emi(s)}\En_{a_h\sim\piunif}\brk*{\prn*{\Ebar_h\ind{k}(x_h,a_h)}^2}.
\end{align*}

Altogether, we have
\begin{align}
\Vstar-\Vf^{\pi\ind{k}} \leq{} 8eHA\sum_{h=1}^{H}\sum_{s\in\cS_h}\frac{1}{\gap(s)}p_h\ind{k}(s) \En_{x_h\sim\emi(s)}\En_{a_h\sim\piunif}\brk*{\prn*{\Ebar_h\ind{k}(x_h,a_h)}^2}.\label{eq:per_episode_bound}
\end{align}

\subsection{Bounding the Surplus}
We now focus on bounding the surplus terms
\[
\En_{x_h\sim\emi(s)}\En_{a_h\sim\piunif}\brk*{\prn*{\Ebar_h\ind{k}(x_h,a_h)}^2}.
\]
Let $k$, $h$, and $s\in\cS_h$ be fixed. Then we have
\begin{align}
  &\En_{x_h\sim\emi(s)}\En_{a_h\sim\piunif}\brk*{\prn*{\Ebar_h\ind{k}(x_h,a_h)}^2}\notag\\
  &\leq
\En_{x_h\sim\emi(s)}\En_{a_h\sim\piunif}\sup\crl*{\prn*{f(x_h,a_h)-\fbar_h\ind{k}(x_h,a_h)}^2\mid{}f\in\cFhat_h\ind{k}}\notag\\
    &\leq
2\En_{x_h\sim\emi(s)}\En_{a_h\sim\piunif}\sup\crl*{\prn*{f(x_h,a_h)-\fhat_h\ind{k}(x_h,a_h)}^2\mid{}f\in\cFhat_h\ind{k}}\label{eq:ebound_1}\\
  &~~~~+2\En_{x_h\sim\emi(s)}\En_{a_h\sim\piunif}\brk*{\prn*{\fhat\ind{k}(x_h,a_h)-\fbar_h\ind{k}(x_h,a_h)}^2}.\label{eq:ebound_2}
\end{align}
Now, for each $k$, define
\[
\nrm*{f-f'}_{k,h}^{2} = \sum_{j<k}\En_{x_h\sim\pi\ind{j}}\En_{a_h\sim\piunif}\prn*{f(x_h,a_h)-f'(x_h,a_h)}^{2}.
\]
We appeal to the following uniform concentration guarantee.
\begin{lemma}
  \label{lem:uniform_conc2}
  With probability at least $1-\delta$, for all $k,h$, we have
  \[
    \nrm*{f-f'}^2_{k,h} \leq{}
2\nrm*{f-f'}_{\cZ_h\ind{k}}^{2}+8H^{2}\log(2KH\abs{\cF_h}\delta^{-1}),\quad\text{for
      all } f'\in\cF_h, f\in\starhull(\cF_h,f').
  \]
\end{lemma}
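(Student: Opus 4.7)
The plan is a standard uniform martingale concentration argument made essentially free of covering considerations by exploiting the one-dimensional structure of the star hull.

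First, I would reduce the statement to the original class $\cF_h$. Any $f \in \starhull(\cF_h,f')$ can be written as $f = f' + t(g - f')$ for some $g \in \cF_h$ and $t \in [0,1]$, so $f - f' = t(g-f')$, and both $\nrm{f-f'}_{k,h}^2$ and $\nrm{f-f'}_{\cZ_h\ind{k}}^2$ scale by exactly the factor $t^2 \in [0,1]$. Consequently, if the inequality
\[
\nrm{g-f'}^2_{k,h} \leq 2\nrm{g-f'}^2_{\cZ_h\ind{k}} + 8H^2 \log(2KH\abs{\cF_h}/\delta)
\]
holds for every $g,f' \in \cF_h$, then multiplying through by $t^2 \leq 1$ only shrinks the additive constant on the right, which establishes the lemma uniformly over the star hull.

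Next, I would prove the bound for a single fixed pair $g,f'\in\cF_h$ and fixed $(k,h)$ via Freedman's inequality (\pref{lem:freedman}). Let $\gfilt_j$ be the history through iteration $j$, and set $Z_j = (g-f')^2(x_h\ind{j,h},a_h\ind{j,h})$ for $j < k$. Because the trajectory $\tau\ind{j,h}$ rolls in with $\pi\ind{j}$ through layer $h{-}1$ and then samples $a_h \sim \piunif$, we have
\[
\En[Z_j \mid \gfilt_{j-1}] = \En_{x_h\sim \pi\ind{j}}\En_{a_h\sim\piunif}\brk{(g-f')^2(x_h,a_h)},
\]
so $\sum_{j<k}\En[Z_j\mid\gfilt_{j-1}] = \nrm{g-f'}^2_{k,h}$ and $\sum_{j<k}Z_j = \nrm{g-f'}^2_{\cZ_h\ind{k}}$. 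Since functions in $\cF_h$ are $[0,H]$-valued, $\abs{Z_j}\le H^2$ and $\En[Z_j^2\mid\gfilt_{j-1}] \leq H^2\En[Z_j\mid\gfilt_{j-1}]$. Applying Freedman to the martingale difference sequence $\En[Z_j\mid\gfilt_{j-1}] - Z_j$ with $\eta = 1/(2H^2)$ yields
\[
\nrm{g-f'}^2_{k,h} - \nrm{g-f'}^2_{\cZ_h\ind{k}} \leq \tfrac{1}{2}\nrm{g-f'}^2_{k,h} + 2H^2\log(\delta^{-1}),
\]
and rearranging gives $\nrm{g-f'}^2_{k,h} \leq 2\nrm{g-f'}^2_{\cZ_h\ind{k}} + 4H^2\log(\delta^{-1})$, with probability at least $1-\delta$.

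Finally, I would take a union bound over $(k,h)\in [K]\times[H]$ and over the pairs $(g,f') \in \cF_h\times\cF_h$, of which there are at most $KH\abs{\cF_h}^2$ in total (taking $\abs{\cF_h}$ as an upper bound on each layer's class). Replacing $\delta$ by $\delta/(KH\abs{\cF_h}^2)$ in the single-pair bound and noting $\log(KH\abs{\cF_h}^2/\delta) \leq 2\log(2KH\abs{\cF_h}/\delta)$ converts the constant from $4H^2$ to $8H^2$, yielding exactly the bound in the statement. The only mildly nonstandard ingredient is the star-hull reduction in the first step; once that is in hand, the rest is a routine Freedman-plus-union-bound computation, with the main care being the verification that the conditional expectation over a roll-in distribution matches the population norm $\nrm{\cdot}_{k,h}$ as defined.
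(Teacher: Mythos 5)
Your proof is correct and takes essentially the same route as the paper's: a Freedman-type martingale concentration bound for each fixed pair in $\cF_h \times \cF_h$, a union bound over pairs and over $(k,h)$, and the homogeneity reduction $f-f'=t(g-f')$ to extend from $\cF_h$ to $\starhull(\cF_h,f')$ for free. The only difference is cosmetic (you perform the star-hull reduction first and apply raw Freedman rather than the helper \pref{lem:regret_freedman}), and your constant tracking is in fact slightly more careful than the paper's.
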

\newcommand{\Econc}{\cE_{\mathrm{conc}}}
Let $\Econc$ denote the event in \pref{lem:uniform_conc2}. Since
$\beta^{2}_h\geq{}8H^{2}\log(2KH\abs{\cF_h}\delta^{-1})$ for all $h$, we
have
\[
\sup_{f\in\cFhat_h\ind{k}}\nrm[\big]{f-\fhat_h\ind{k}}^{2}_{k,h}\leq{}3\beta_h^{2}
\]
conditioned on $\Econc$.

Now, we can write
\begin{align*}
  \nrm[\big]{f-\fhat_h\ind{k}}^{2}_{k,h}
  &=
  \sum_{j<k}\En_{x_h\sim\pi\ind{j}}\En_{a_h\sim\piunif}\prn*{f(x_h,a_h)-\fhat_h\ind{k}(x_h,a_h)}^{2}\\
  &= \sum_{s\in\cS_h}\prn[\Bigg]{\sum_{j<k}p\ind{j}_h(s)}\En_{x_h\sim\emi(s)}\En_{a_h\sim\piunif}\prn*{f(x_h,a_h)-\fhat_h\ind{k}(x_h,a_h)}^{2}.
\end{align*}
In particular, if we define $q\ind{k}_h(s) = \sum_{j<k}p_h\ind{j}(s)$,
we are guaranteed that for all $f\in\cFhat_h\ind{k}$,
\[
\En_{x_h\sim\emi(s)}\En_{a_h\sim\piunif}\prn*{f(x_h,a_h)-\fhat_h\ind{k}(x_h,a_h)}^{2} \leq\frac{3\beta_h^{2}}{q_h\ind{k}(s)}\quad\forall{}s\in\cS_h.
\]
Since $\fbar_h\ind{k}\in\cFhat_h\ind{k}$, this immediately allows us
to bound \pref{eq:ebound_2} by
$\frac{6\beta_h^{2}}{q_h\ind{k}(s)}$. To handle \pref{eq:ebound_1} we
use the definition of the disagreement coefficient, which gives us
that
\begin{align*}
  \En_{x_h\sim\emi(s)}\En_{a_h\sim\piunif}\sup\crl*{\prn*{f(x_h,a_h)-\fhat_h\ind{k}(x_h,a_h)}^2\mid{}f\in\cFhat_h\ind{k}}
  &\leq{}\thetacheck_s\prn*{3\beta_h^2/(q_h\ind{k}(s)}^{1/2})\cdot\frac{3\beta_h^{2}}{q_h\ind{k}(s)}.
  \\
  &\leq{}\thetacheck_s(\beta_hK^{-1/2})\cdot\frac{3\beta_h^{2}}{q_h\ind{k}(s)},
\end{align*}
where we have used that $\thetacheck_s(\cdot)$ is non-increasing and $q_h\ind{k}\leq{}K$. Since
$\theta\geq{}1$, we conclude that for all $s$,
\begin{align}
  \En_{x_h\sim\emi(s)}\En_{a_h\sim\piunif}\brk*{\prn*{\Ebar_h\ind{k}(x_h,a_h)}^2}
  \leq{} 12 \thetacheck_s(\beta_hK^{-1/2})\cdot\frac{\beta_h^{2}}{q_h\ind{k}(s)}.\label{eq:surplus_bound}
\end{align}
\subsection{Final Regret Bound}
Condition on $\Econf$ and $\Econc$, which occur together with
probability at least $1-4\delta$. To bound the total regret across all
episodes, we first apply \pref{eq:per_episode_bound} to give
\begin{align*}
\sum_{k=1}^{K}\Vstar-\Vf^{\pi\ind{k}} \leq{}
  8eHA\sum_{h=1}^{H}\sum_{s\in\cS_h}\frac{1}{\gap(s)}\sum_{k=1}^{K} p_h\ind{k}(s)
  \En_{x_h\sim\emi(s)}\En_{a_h\sim\piunif}\brk*{\prn*{\Ebar_h\ind{k}(x_h,a_h)}^2}.
\end{align*}
Now, consider a fixed state $s\in\cS_h$ and let
$k_s=\min\crl{k:q_h\ind{k}(s)\geq{}1}$. Then we have
$q_h\ind{k}(s)\leq{}2$, so we can bound
\begin{align*}
  \sum_{k=1}^{K} p_h\ind{k}(s)
  \En_{x_h\sim\emi(s)}\En_{a_h\sim\piunif}\brk*{\prn*{\Ebar_h\ind{k}(x_h,a_h)}^2}\leq
   \sum_{k=k_s}^{K} p_h\ind{k}(s)
  \En_{x_h\sim\emi(s)}\En_{a_h\sim\piunif}\brk*{\prn*{\Ebar_h\ind{k}(x_h,a_h)}^2}
  + 2H^{2}.
\end{align*}
We apply \pref{eq:surplus_bound} to each term in the sum to bound by
\begin{align*}
   \sum_{k=k_s}^{K} p_h\ind{k}(s)
  \En_{x_h\sim\emi(s)}\En_{a_h\sim\piunif}\brk*{\prn*{\Ebar_h\ind{k}(x_h,a_h)}^2}
  \leq{}    12\beta_h^{2}\thetacheck_s(\beta_hK^{-1/2})\sum_{k=k_s}^{K} \frac{p_h\ind{k}(s)}{q_h\ind{k}(s)}
\end{align*}
Observe that $\sum_{k=k_s}^{K}
\frac{p_h\ind{k}(s)}{q_h\ind{k}(s)}=\sum_{k=k_s}^{K}
\frac{q_h\ind{k+1}(s)-q_h\ind{k}(s)}{q_h\ind{k}(s)}$ and
$q_h\ind{k_s}(s)\geq{}1$. We appeal to the following lemma.
\begin{lemma}
  \label{lem:integral_bound}
  For any sequence $1\leq{}x_1,\leq{},\ldots,\leq{}x_{N+1}$ with
  $\abs*{x_i-x_{i+1}}\leq{}1$,  $\sum_{i=1}^{N}\frac{x_{i+1}-x_i}{x_i}\leq{}2\log(x_{N+1}/x_1)$.
\end{lemma}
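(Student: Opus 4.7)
The plan is to reduce this to a term-by-term comparison followed by a telescoping sum. First I would rewrite each summand in a form amenable to the logarithm: set $u_i = x_{i+1}/x_i$, so that $\frac{x_{i+1} - x_i}{x_i} = u_i - 1$ and $\log(x_{N+1}/x_1) = \sum_{i=1}^{N} \log u_i$. Thus the claim reduces to showing
\[
u_i - 1 \le 2 \log u_i \quad \text{for each } i,
\]
after which summing over $i$ yields the bound.

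The key observation that makes the pointwise inequality work is that the hypotheses force $u_i \in [1, 2]$. Indeed, monotonicity gives $x_{i+1} \ge x_i$, while $x_{i+1} - x_i \le 1$ and $x_i \ge 1$ together give $u_i = 1 + (x_{i+1}-x_i)/x_i \le 1 + 1/x_i \le 2$. On $[1, 2]$, the function $\varphi(u) \ldef 2\log u - (u-1)$ satisfies $\varphi(1) = 0$ and $\varphi'(u) = 2/u - 1 \ge 0$, hence $\varphi(u) \ge 0$ throughout $[1,2]$, establishing the pointwise bound. Summing,
\[
\sum_{i=1}^{N} \frac{x_{i+1} - x_i}{x_i} = \sum_{i=1}^{N} (u_i - 1) \le 2 \sum_{i=1}^{N} \log u_i = 2\log(x_{N+1}/x_1),
\]
as desired.

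The only subtlety is the range restriction $u_i \le 2$: without the step-size bound $|x_{i+1} - x_i| \le 1$ combined with $x_i \ge 1$, the pointwise inequality $u-1 \le 2\log u$ fails for large $u$ (indeed it becomes arbitrarily loose in the wrong direction, since $u - 1$ grows linearly while $\log u$ grows sublinearly). So the main ``obstacle'' is really just noticing that the hypotheses conspire to confine $u_i$ to the regime where the logarithmic comparison holds with constant $2$; once that is observed, the proof is a two-line calculus exercise plus a telescoping sum. No further complexity measures from the paper are needed.
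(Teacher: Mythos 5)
Your proof is correct and follows essentially the same route as the paper's: both reduce the claim to the pointwise bound $(x_{i+1}-x_i)/x_i \le 2\log(x_{i+1}/x_i)$ using the fact that the hypotheses confine $x_{i+1}/x_i$ to $[1,2]$, and then telescope. The only cosmetic difference is in proving that pointwise bound: the paper invokes $\log(1+y)\ge y/(y+1)$ together with $x_i/x_{i+1}\ge 1/2$, while you run a direct derivative check on $\varphi(u) = 2\log u - (u-1)$ over $[1,2]$.
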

\pref{lem:integral_bound} grants that $\sum_{k=k_s}^{K}
\frac{p_h\ind{k}(s)}{q_h\ind{k}(s)}\leq{}2\log(K)$. Altogether, since
$\beta_h\geq{}H^{2}$ and $\theta\geq{}1$, we have
\[
  \sum_{k=1}^{K} p_h\ind{k}(s)
  \En_{x_h\sim\emi(s)}\En_{a_h\sim\piunif}\brk*{\prn*{\Ebar_h\ind{k}(x_h,a_h)}^2}
  \leq{} 14\beta_h^{2}\thetacheck_s(\beta_hK^{-1/2})\log{}(K).
\]
Summing across all states, we have
\begin{align*}
  \sum_{k=1}^{K}\Vstar-\Vf^{\pi\ind{k}} &\leq{}310HA\log{}K\sum_{h=1}^{H}\beta_h^{2}\sum_{s\in\cS_h}\frac{\thetacheck_s(\beta_hK^{-1/2})}{\gap(s)}\\
  &\leq{}
  310HA\log{}K\max_{h}\beta_h^{2}\sum_{h=1}^{H}\sum_{s\in\cS}\frac{\thetacheck_s(\beta_hK^{-1/2})}{\gap(s)}.
\end{align*}
To simplify further, we use that for all $h$,
\begin{align*}
  \beta^{2}_{h}
  &= \bigoh\prn*{H^{2}A^{2}\max_{h'}\thetacheck_{h'}(\beta_{h'}K^{-1/2})^2\log(\Fmax{}HK\delta^{-1}) +
    H^{2}S\log(K)
    }.
\end{align*}
This gives
\begin{align*}
  \sum_{k=1}^{K}\Vstar-\Vf^{\pi\ind{k}}   &\leq{}
                                            \bigoht\prn*{
                                            H^{3}A^{3}\max_{h}\thetacheck_h(\beta_hK^{-1/2})^2\log(\Fmax{})
                                            +
                                            H^{3}AS}\cdot{}\sum_{h=1}^{H}\sum_{s\in\cS_h}\frac{\thetacheck_s(\beta_hK^{-1/2})}{\gap(s)}\\
                                          &=
                                            \bigoht\prn*{
                                            H^{2}A^{3}\max_{h}\thetacheck_h(\beta_hK^{-1/2})^2\log\abs*{\cF}
                                            + H^{3}AS}\cdot{}\sum_{h=1}^{H}\sum_{s\in\cS_h}\frac{\thetacheck_s(\beta_hK^{-1/2})}{\gap(s)},
\end{align*}
where have we used that $\log\abs*{\cF}=H\log(\Fmax)$. To deduce the
in-expectation error bound, we set $\delta=1/KH$, so that
\begin{align*}
\En\brk*{  \sum_{k=1}^{K}\Vstar-\Vf^{\pi\ind{k}}}=
                                            \bigoht\prn*{
                                            H^{2}A^{3}\max_{h}\thetacheck_h(\beta_hK^{-1/2})^2\log\abs*{\cF}
                                            + H^{3}AS}\cdot{}\sum_{h=1}^{H}\sum_{s\in\cS_h}\frac{\thetacheck_s(\beta_hK^{-1/2})}{\gap(s)},
\end{align*}
where we have used that the regret in each episode is bounded by
$H$. Finally, we observe that by dividing both sides above by $K$,
this is equivalent to
\begin{align*}
\En\brk*{\Vstar-\Vf^{\pi}}=
                                            \bigoht\prn*{
  \frac{H^{2}A^{3}\max_{h}\thetacheck_h(\beta_hK^{-1/2})^2\log\abs*{\cF}
                                            + H^{3}AS}{K}}\cdot{}\sum_{h=1}^{H}\sum_{s\in\cS_h}\frac{\thetacheck_s(\beta_hK^{-1/2})}{\gap(s)}.
\end{align*}
We now move to the disagreement coefficient defined in \pref{eq:rl_disagreement}.
\begin{lemma}
  \label{lem:disagreement_star}
  Let $\cG:\cZ\to\brk*{0,R}$ be any function class and
  $\cP\in\Delta(\cZ)$. Let $\nrm*{g}_{\cP}^{2}=\En_{\cP}\brk{g^{2}}$,
  and define
  \[
    \mb{\theta}_{\cD;\gstar}(\cG,\veps) =1\vee\sup_{\veps\geq{}\veps_0}\frac{\En_{\cP}\sup\crl*{(g(z)-\gstar(z))^{2}:g\in\cG,\En_{\cP}\brk{(g(z)-\gstar(z))^{2}}\leq\veps^{2}}}{\veps^{2}}.
  \]
  Then for all $\veps\in(0,R]$, we have
  $\mb{\theta}_{\cD;\gstar}(\starhull(\cG,\gstar),\veps)\leq{}\mb{\theta}_{\cD;\gstar}(\cG,\veps)(16\log(R/\veps)
  + 8)$.
\end{lemma}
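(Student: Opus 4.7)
The star hull parametrization is the key. Any $h \in \starhull(\cG,\gstar)$ can be written as $h = \gstar + t(g-\gstar)$ with $g \in \cG$ and $t \in [0,1]$, so $(h-\gstar)^2(z) = t^2(g-\gstar)^2(z)$ and $\|h-\gstar\|_\cP^2 = t^2\|g-\gstar\|_\cP^2$. Thus for fixed $g$, the maximizer $t$ subject to the $L_2(\cP)$-ball constraint $\|h-\gstar\|_\cP \leq \veps$ is $t_g = \min\{1,\veps/\|g-\gstar\|_\cP\}$. I would first reduce to
\begin{align*}
\sup\crl*{(h-\gstar)^2(z) : h\in\starhull(\cG,\gstar), \|h-\gstar\|_\cP\leq\veps}
= \sup_{g\in\cG}\min\crl[\bigg]{1,\frac{\veps^2}{\|g-\gstar\|_\cP^2}}(g-\gstar)^2(z).
\end{align*}

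Next I would run a peeling (``slicing'') argument over the $L_2(\cP)$-norm of $g-\gstar$. Set $\veps_j = 2^j\veps$ for $j=0,1,\ldots,J$ with $J = \lceil \log_2(R/\veps)\rceil$, and note that since $\abs{g-\gstar}\leq R$ we have $\|g-\gstar\|_\cP\leq R$, so the slices $\cG_j \ldef \{g\in\cG : \|g-\gstar\|_\cP \leq \veps_j\}$ exhaust $\cG$. On $\cG_0$ the prefactor $t_g^2\leq 1$ and I use the definition of $\mb{\theta}_{\cD;\gstar}(\cG,\veps)$ directly; on the annulus $\cG_j\setminus\cG_{j-1}$ for $j\geq 1$ I use $t_g^2 \leq \veps^2/\veps_{j-1}^2 = 4\veps^2/\veps_j^2$, giving
\begin{align*}
\En_\cP\sup_{g\in\cG_j\setminus\cG_{j-1}}\frac{\veps^2}{\|g-\gstar\|_\cP^2}(g-\gstar)^2
\leq \frac{4\veps^2}{\veps_j^2}\cdot\En_\cP\sup_{g\in\cG_j}(g-\gstar)^2
\leq 4\veps^2\,\mb{\theta}_{\cD;\gstar}(\cG,\veps_j),
\end{align*}
where the second step is the defining inequality of the disagreement coefficient at scale $\veps_j$. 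Since $\mb{\theta}_{\cD;\gstar}(\cG,\cdot)$ is non-increasing (it is itself a sup over $\veps'\geq\cdot$), each $\mb{\theta}_{\cD;\gstar}(\cG,\veps_j)\leq \mb{\theta}_{\cD;\gstar}(\cG,\veps)$. Summing the $j=0$ term with the $J$ annular contributions yields
\begin{align*}
\En_\cP\sup\crl*{(h-\gstar)^2 : h\in\starhull(\cG,\gstar), \|h-\gstar\|_\cP\leq\veps}
\leq \veps^2\,\mb{\theta}_{\cD;\gstar}(\cG,\veps)\cdot(1 + 4J).
\end{align*}

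Finally, I would take the supremum over $\veps\geq\veps_0$: since $\mb{\theta}_{\cD;\gstar}(\cG,\veps)\leq \mb{\theta}_{\cD;\gstar}(\cG,\veps_0)$ and $\log(R/\veps)\leq\log(R/\veps_0)$ on this range, dividing by $\veps^2$ and taking the sup shows $\mb{\theta}_{\cD;\gstar}(\starhull(\cG,\gstar),\veps_0) \leq (1 + 4\lceil\log_2(R/\veps_0)\rceil)\,\mb{\theta}_{\cD;\gstar}(\cG,\veps_0)$, and loosening constants (using $\log_2 = \log/\log 2$ and absorbing the ceiling) recovers the claimed $16\log(R/\veps_0)+8$ factor. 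I expect the only subtlety to be the bookkeeping of the ``$\min\{1,\cdot\}$'' term at $j=0$ and the use of monotonicity of $\mb{\theta}_{\cD;\gstar}$ in its scale argument; no substantial obstacle is anticipated since the whole argument is a standard localization/peeling applied to the star hull, whose linear structure in $t$ makes each slice a rescaling of the original $L_2$ ball in $\cG$.
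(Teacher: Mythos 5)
Your proof is correct and follows essentially the same geometric peeling argument as the paper: the paper slices directly over the star-hull scaling parameter $t\in[0,1]$ with grid $e^{-i}$, while you first optimize $t$ out for each $g$ (getting $t_g=\min\{1,\veps/\nrm{g-\gstar}_{\cP}\}$) and then slice over $\nrm{g-\gstar}_{\cP}$ with a dyadic grid, which amounts to the same thing after the substitution $t\approx\veps/\nrm{g-\gstar}_{\cP}$. Both arguments rely on the range bound $\nrm{g-\gstar}_{\cP}\leq{}R$ to cap the number of slices at $O(\log(R/\veps))$ and on monotonicity of $\mb{\theta}$ in its scale argument; your constants ($1+4\lceil\log_2(R/\veps)\rceil$) are in fact somewhat sharper than the paper's and fall comfortably within the claimed $16\log(R/\veps)+8$.
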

\pref{lem:disagreement_star} implies that we have
$\thetacheck_s(\cF_h,\veps)\leq{}\thetaval_s(\cF_h,\veps)(16\log(H/\veps)+8)$. We
conclude the proof by noting that the value of $\beta_h$ in
\pref{alg:blockalg} is simply the recursion in
\pref{thm:confidence_radius} with this upper bound substituted in, using the upper bound $16\log(HK^{1/2}/\beta_h)+8\leq{}16\log(HK^{1/2}e)+8\leq{}24\log(HKe)$ recursively to simplify.

    \subsection{Deferred Proofs}

    \begin{proof}[\pfref{lem:uniform_conc2}]
  Let $k$ and $h$ be fixed. Let $\cH\ind{j}$ denote the entire history for episode $j$. Define a
  filtration
    \[
      \gfilt_{j-1}=\sigma(\cH\ind{1},\ldots,\cH\ind{j-1}),
    \]
    with the convention $\gfilt_{0}=\emptyset$. This filtration
    guarantees that $(x_j\ind{j,h},a_j\ind{j,h})$ is
    $\gfilt_j$-measurable and $\gfilt_{j-1}\subseteq\gfilt_j$.

    Let
    $f,f'\in\cF_h$ be fixed, and let
    $X_j=(f(x_j\ind{j,h},a_j\ind{j,h})-f'(x_j\ind{j,h},a_j\ind{j,h}))^{2}$. Observe
    that $\abs*{Z_j}\leq{}H^{2}$. Applying \pref{lem:regret_freedman} to the process $\prn*{X_j}$, we are guaranteed that 
    $1-\delta$,
    \begin{align*}
      \sum_{j=1}^{k-1}\En_{j-1}\brk*{X_j} &\leq{}
2\sum_{j=1}^{k-1}\En_{j-1}\brk*{X_j} +
                                            8H^2\log(2\delta^{-1}).
    \end{align*}
    We have
    \[
      \sum_{j=1}^{k-1}\En_{j-1}\brk*{X_j}
      =
      \sum_{j=1}^{k-1}\En_{x_h\sim{}\pi\ind{j},a_h\sim\piunif}\brk*{\prn*{f(x_h,a_h)-f'(x_h,a_h)}^{2}}
      = \nrm*{f-f'}_{k,h}^{2}.
    \]
    By taking a union bound, we are guaranteed that 
    \begin{align}
      \label{eq:uniform_conc3}
      \nrm*{f-f'}_{k,h}^2 \leq{}
      2\nrm*{f-f'}_{\cZ_h\ind{k}}^{2} +
      8H^2\log(2\abs*{\cF_h}\delta^{-1})
  \end{align}
for all $f,f'\in\cF_h$. We now deduce the result for the star hull by
homogeneity. Consider $f''\in\starhull(\cF_h,f)$, where $f\in\cF_h$. We can
write $f''=t(f'-f) + f$ for some $f'\in\cF_h$ and $t\in\brk*{0,1}$, so
that $f''-f=t(f'-f)$. \pref{eq:uniform_conc3} then implies that
\begin{align*}
  \nrm*{f-f''}_{k,h}^2=t^2\nrm*{f-f'}_{k,h}^2
  &\leq{} 2t^2\nrm*{f-f'}_{\cZ_h\ind{k}}^{2} +
  t^2 8H^2\log(2\abs{\cF_h}\delta^{-1}) \\&\leq{}
  2\nrm*{f-f''}_{\cZ_h\ind{k}}^{2} +
 8H^2\log(2\abs{\cF_h}\delta^{-1}),
\end{align*}
since $t\leq{}1$. This gives the the result for $k,h$ fixed. We union bound over all $(k,h)$ pairs to get the final result.
\end{proof}

\begin{proof}[\pfref{lem:integral_bound}]
  For each $i$, we have
  \[
    \log(x_{i+1}/x_i)=\log\prn*{1+\prn*{\frac{x_{i+1}}{x_i}-1}}\geq{}\prn*{\frac{x_{i+1}}{x_i}-1}\cdot\frac{x_i}{x_{i+1}}
    \geq{}\frac{1}{2}\prn*{\frac{x_{i+1}}{x_i}-1},
  \]
  where we have used the fact that $\log(1+y)\geq{}\frac{y}{y+1}$ for
  $y\geq{}0$, and that
  $\frac{x_i}{x_{i+1}}\geq{}\frac{x_i}{x_{i}+1}\geq{}\frac{1}{2}$. It
  follows that
  \[
    \sum_{i=1}^{N}\frac{x_{i+1}-x_i}{x_i}
    \leq{}2 \sum_{i=1}^{N}\log(x_{i+1}/x_i)\leq{}2\log(x_{N+1}/x_1).
  \]
\end{proof}

\begin{proof}[\pfref{lem:disagreement_star}]
  Assume $R/\veps\geq{}e$, otherwise
  $\mb{\theta}_{\cD;\gstar}(\starhull(\cG,\gstar),\veps)\leq{}e^{2}$. Observe that any $g'\in\starhull(\cG,\gstar)$ can be written as
  $t(g-\gstar)+\gstar)$ for $g\in\cG$. Consequently, we can write
  \begin{align*}
    &\En_{\cP}\sup\crl*{(g(z)-\gstar(z))^{2}:g\in\starhull(\cG,\gstar),\En_{\cP}\brk{(g(z)-\gstar(z))^{2}}\leq\veps^{2}}\\
&=\En_{\cP}\sup\crl*{t^{2}(g(z)-\gstar(z))^{2}:t\in\brk*{0,1}, g\in\cG,t^{2}\cdot{}\En_{\cP}\brk{(g(z)-\gstar(z))^{2}}\leq\veps^{2}}
  \end{align*}
  Let $N=\ceil*{\log(R/\veps)}$, and let $a_0=1$ and $a_i=e^{-i}$ for
  $i\in\brk*{N}$. Then we can bound
\begin{align*}
  &\En_{\cP}\sup\crl*{t^{2}(g(z)-\gstar(z))^{2}:t\in\brk*{0,1},
    g\in\cG,t^{2}\cdot{}\En_{\cP}\brk{(g(z)-\gstar(z))^{2}}\leq\veps^{2}}\\
  &\leq{}\En_{\cP}\sup_{i\in\brk*{N}}\sup\crl*{t^{2}(g(z)-\gstar(z))^{2}:t\in\brk*{a_i,a_{i-1}},
    g\in\cG,t^{2}\cdot{}\En_{\cP}\brk{(g(z)-\gstar(z))^{2}}\leq\veps^{2}}
    + \veps^{2},
\end{align*}
since any $t\notin\brk*{a_{N},1}$ has
$\abs*{t}\leq{}\veps/R$. We further upper bound
\begin{align*}
  &\En_{\cP}\sup_{i\in\brk*{N}}\sup\crl*{t^{2}(g(z)-\gstar(z))^{2}:t\in\brk*{a_i,a_{i-1}},
  g\in\cG,t^{2}\cdot{}\En_{\cP}\brk{(g(z)-\gstar(z))^{2}}\leq\veps^{2}}\\
  &\leq{}\sum_{i=1}^{N}\En_{\cP}\sup\crl*{t^{2}(g(z)-\gstar(z))^{2}:t\in\brk*{a_i,a_{i-1}},
    g\in\cG,t^{2}\cdot{}\En_{\cP}\brk{(g(z)-\gstar(z))^{2}}\leq\veps^{2}}\\
  &\leq{}\sum_{i=1}^{N}a_{i-1}^{2}\En_{\cP}\sup\crl*{(g(z)-\gstar(z))^{2}:g\in\cG,\cdot{}\En_{\cP}\brk{(g(z)-\gstar(z))^{2}}\leq\veps^{2}/a_{i}^{2}}\\
  &\leq{}\sum_{i=1}^{N}\frac{a_{i-1}^{2}}{a_i^{2}}\veps^{2}\mb{\theta}_{\cD;\gstar}(\cG,\veps)\\
  &\leq{}e^{2}N\cdot{}\veps^{2}\cdot{}\mb{\theta}_{\cD;\gstar}(\cG,\veps).
\end{align*}
where we have used that $a_{i-1}/a_i\leq{}e$ for all $i$. Finally,
since $R/\veps\geq{}1$, we have $N\leq{}2\log(R/\veps)$. Since
$\btheta_{\cD;\gstar}(\cG,\veps)\geq{}1$, by combining both cases we have $\mb{\theta}_{\cD;\gstar}(\starhull(\cG,\gstar),\veps)\leq{}e^2(2\log(R/\veps)+1) \btheta_{\cD;\gstar}(\cG,\veps)$.
\end{proof}

\section{Proof of \cref*{thm:confidence_radius}}
\label{app:confidence_radius}

\subsection{Preliminaries}
Recall that at round $k$, for each layer $h$, we solve
\[
\fhat_h\ind{k}=\argmin_{f\in\cF_h}\sum_{j<k}\prn*{f(x_h\ind{j,h},a_h\ind{j,h})
  - \prn*{r_h\ind{j,h}+\Vbar_{h+1}\ind{k}(x_{h+1}\ind{j,h})}}^{2},
\]
where for each round $j$, $x_h\ind{j,h},a_h\ind{j,h},x_{h+1}\ind{j,h}$ are
obtained by rolling in until time $h$ with $\pi\ind{j}$, then
sampling $a_h$ uniformly at random.

This proof is inductive. Let $1\leq{}h\leq{}H-1$ be fixed. Suppose we can guarantee that for
layer $h+1$, with probability at least $1-\delta_{h+1}$, we have
\[
\nrm[\big]{\fhat_{h+1}\ind{k}-\fbar\ind{k}_{h+1}}_{\cZ_{h+1}\ind{k}}\leq\betaconf[h+1],
\]
so that  $\fbar_h\ind{k}\in\cFhat_{h+1}\ind{k}=\crl*{f\in\starhull(\cF_{h+1},\fhat_{h+1}\ind{k}):
  \nrm[\big]{f-\fhat_{h+1}\ind{k}}_{\cZ_{h+1}\ind{k}}\leq\betaconf[h+1]}$.
Our goal is to figure out, based on this inductive hypothesis, what
are admissible values for $\beta_h$ and $\delta_h$ for layer $h$. We
handle the base case for layer $H$, which is straightforward, at the
end of the proof.
\subsection{Initial Bound on Least Squares Error}
Define
\[
L_h\ind{k}(f) = \sum_{j<k}\prn*{f(x_h\ind{j,h},a_h\ind{j,h})
  - \prn*{r_h\ind{j,h} + \Vbar_{h+1}\ind{k}(x_{h+1}\ind{j,h})}^{2}}.
\]
Then using strong convexity of the square loss (following the usual
basic inequality argument for least squares),
we have
\begin{align*}
  0\geq{}L_h\ind{k}(\fhat_h\ind{k})-L_h\ind{k}(\fbar\ind{k}_h)
  &\geq{} D_{f}L_h\ind{k}(\fbar_h\ind{k})[\fhat_h\ind{k}-\fbar\ind{k}_h] +
    \nrm[\big]{\fhat_h\ind{k}-\fbar\ind{k}_h}_{\cZ_{h}\ind{k}}^{2}.
\end{align*}
Now note that we have
\begin{align*}
  &D_{f}L_h\ind{k}(\fbar\ind{k}_h)[\fhat_h\ind{k}-\fbar\ind{k}_h]\\
  &=2\sum_{j<k}\prn*{\fbar_h\ind{k}(x_h\ind{j,h},a_h\ind{j,h})
    - \prn*{r_h\ind{j,h} + \Vbar_{h+1}\ind{k}(x_{h+1}\ind{j,h})}}\prn*{\fhat_h\ind{k}(x_h\ind{j,h},a_h\ind{j,h})-\fbar_h\ind{k}(x_h\ind{j,h},a_h\ind{j,h})}.
\end{align*}
Abbreviating $z_h\ind{j,h}=(x_h\ind{j,h},a_h\ind{j,h})$, we can use
this to rewrite the previous expression as
\begin{align*}
&  \nrm[\big]{\fhat_h\ind{k}-\fbar\ind{k}_h}_{\cZ_{h}\ind{k}}^{2}\\
  &\leq{} 4\sum_{j<k}\prn*{\fbar_h\ind{k}(z_h\ind{j,h})
    - \prn*{r_h\ind{j,h} + \Vbar_{h+1}\ind{k}(x_{h+1}\ind{j,h})}}\prn*{\fhat_h\ind{k}(z_h\ind{j,h})-\fbar_h\ind{k}(z_h\ind{j,h})}
  -   \nrm[\big]{\fhat_h\ind{k}-\fbar\ind{k}_h}_{\cZ_{h}\ind{k}}^{2}.
\end{align*}
Since
$\fhat_h\ind{k},\fbar_h\ind{k}\in\cF_h$,
if we define $\cG_h=\cF_h-\cF_h$, we can further upper bound as
\begin{align}
  &\nrm[\big]{\fhat_h\ind{k}-\fbar\ind{k}_h}_{\cZ_{h}\ind{k}}^{2}
  \\
  &\leq{} \sup_{g\in\cG_h}\crl*{4\sum_{j<k}\prn*{\fbar_h\ind{k}(z_h\ind{j,h})
    - \prn*{r_h\ind{j,h} + \Vbar_{h+1}\ind{k}(x_{h+1}\ind{j,h})}}g(z_h\ind{j,h})
  -   \nrm*{g}_{\cZ_{h}\ind{k}}^{2}}\notag\\
  &= \sup_{g\in\cG_h}\crl*{4\sum_{j<k}\prn*{\brk*{\Pstar_h\Vbar_{h+1}\ind{k}}(z_h\ind{j,h})
    - \Vbar_{h+1}\ind{k}(x_{h+1}\ind{j,h}) + \zeta_h\ind{j,h}}g(z_h\ind{j,h})
  -   \sum_{j<k}g^{2}(z_h\ind{j,h})},\label{eq:offset_rad_basic}
\end{align}
where $\zeta_h\ind{j,h}\ldef{}\fbayes(z_h\ind{j,h})-r_h\ind{j,h}$.

Now, recall that
\[
\Vbar_{h+1}\ind{k}(x) = \max_{a}\sup_{f\in\cFhat_{h+1}\ind{k}}f(x,a).
\]
Let us define another set,
\[
\cFbar_h\ind{k}(\veps)=\crl*{f\in\starhull(\cF_h,\fhat_h\ind{k}) : \nrm[\big]{f-\fhat_h\ind{k}}_{\cL_h\ind{k}}^2\leq\veps}.
\]
Let $\betatil_{h+1}\geq{}\beta_{h+1}$ be a free parameter, and define
\[
\Qtil_{h+1}\ind{k}(x,a) = \sup_{f\in\cFbar_{h+1}\ind{k}(\betatil^2_{h+1})}f(x,a)
\]
and $\Vtil_{h+1}\ind{k}(x)=\max_{a}\Qtil_{h+1}\ind{k}(x,a)$. Then we
can write
\[
\Vbar_{h+1}\ind{k}(x) = \Vtil_{h+1}\ind{k}(x) + \xi_{h+1}\ind{k}(x),
\]
where
\[
  \abs*{\xi_{h+1}\ind{k}(x)}\leq{}\max_{a}\abs*{
    \Qtil_{h+1}\ind{k}(x,a)
    - \Qbar_{h+1}\ind{k}(x,a)
    }.
\]
In particular, returning to \pref{eq:offset_rad_basic}, we have
\begin{align*}
  &\nrm[\big]{\fhat_h\ind{k}-\fbar\ind{k}_h}_{\cZ_{h}\ind{k}}^{2}
  \\
  &\leq \sup_{g\in\cG_h}\crl*{4\sum_{j<k}\prn*{\brk[\big]{\Pstar_h\Vbar_{h+1}\ind{k}}(z_h\ind{j,h})
    - \Vbar_{h+1}\ind{k}(x_{h+1}\ind{j,h})+\zeta_h\ind{j,h}}g(z_h\ind{j,h})
    -   \sum_{j<k}g^{2}(z_h\ind{j,h})}.\\
    &\leq \sup_{g\in\cG_h}\crl*{4\sum_{j<k}\prn*{\brk[\big]{\Pstar_h\Vtil_{h+1}\ind{k}}(z_h\ind{j,h})
      - \Vtil_{h+1}\ind{k}(x_{h+1}\ind{j,h})+\zeta_h\ind{j,h}
      + \brk*{\Pstar_h\xi_{h+1}\ind{k}}(z_h\ind{j,h})
      - \xi_{h+1}\ind{k}(x_{h+1}\ind{j,h})}g(z_h\ind{j,h})
  -   \sum_{j<k}g^{2}(z_h\ind{j,h})}.
\end{align*}
Using the AM-GM inequality, we have
\[
\xi_{h+1}\ind{k}(x_{h+1}\ind{j,h})\cdot{}g(z_h\ind{j,h})\leq{} \frac{3}{2}(\xi_{h+1}\ind{k}(x_{h+1}\ind{j,h}))^{2}+\frac{1}{3}g^{2}(z_h\ind{j,h}),
\]
and likewise
$\brk*{\Pstar_{h}\xi_{h+1}\ind{k}}(z_{h}\ind{j,h})\cdot{}g(z_h\ind{j,h})\leq{}
\frac{3}{2}\prn*{\brk*{\Pstar_{h}\xi_{h+1}\ind{k}}(z_{h}\ind{j,h})}^{2}+\frac{1}{3}g^{2}(z_h\ind{j,h}),$
so we can further upper bound the process above by
\begin{align*}
&\underbrace{\sup_{g\in\cG_h}\crl*{4\sum_{j<k}\prn*{\brk[\big]{\Pstar_h\Vtil_{h+1}\ind{k}}(z_h\ind{j,h})
      - \Vtil_{h+1}\ind{k}(x_{h+1}\ind{j,h})+\zeta_h\ind{j,h}}\cdot{}g(z_h\ind{j,h})
  -   \frac{1}{3}\sum_{j<k}g^{2}(z_h\ind{j,h})}}_{\rdef\OP_h}\\
  &~~~~+
\underbrace{\frac{3}{2}\sum_{j<k}(\xi_{h+1}\ind{k}(x_{h+1}\ind{j,h}))^{2}+\prn*{\brk*{\Pstar_{h}\xi_{h+1}\ind{k}}(z_h\ind{j,h})}^{2}}_{\rdef\ET_h},
\end{align*}
where $\OP_h$ is an offset process and $\ET_h$ is an error
term.
\subsection{Bounding the Offset Process}
Recall that
\[
\Vtil_{h+1}\ind{k}(x) = \max_{a}\sup\crl*{f(x,a)\mid{}f\in\starhull(\cF_{h+1},\fhat_{h+1}\ind{k}), \nrm[\big]{f-\fhat_{h+1}\ind{k}}^2_{\cL_{h+1}\ind{k}}\leq\betatil^2_{h+1}}.
\]
Hence, if we define a set
\[
  \cV_{h+1}\ind{k}=\crl*{
x\mapsto{}\max_{a}\sup_{f\in\starhull(\cF_{h+1},f'):\nrm*{f-f'}^2_{\cL}\leq\betatil^2_{h+1}}f(x,a)\mid{}f'\in\cF_{h+1},\cL\subseteq\cS_{h+1}^{k-1}
  },
\]
we have $\Vtil_{h+1}\ind{k}\in\cV_{h+1}\ind{k}$. Note that
$\cV_{h+1}\ind{k}$ is not a random variable, which is critical for
the analysis. We can bound the size of $\cV_{h+1}\ind{k}$ as
follows. First, observe that each function in $\cV_{h+1}\ind{k}$ is
uniquely defined by the choice of the center $f'$ and the set
$\cL\subseteq\cS_{h+1}^{k-1}$. Moreover, for any two sets $\cL$,
$\cL'$ that are equivalent up to permutation, we have
$\nrm*{f}_{\cL}=\nrm*{f}_{\cL'}$ for all $f$, meaning that the norm in
the constraint is determined only by the multiset of states in
$\cL$. By the usual stars-and-bars counting argument, there are only
$\multiset{k}{S-1}={k+S-2\choose
  S-1}\leq{}\prn*{\frac{e(K+S-2)}{S-1}}^{S}\leq\prn*{2eK}^{S}$ possible
such choices (assuming $S>1$, if not there is clearly at most $1$ such
choice). Hence, altogether, we have
\begin{align}
  \label{eq:cv_bound}
\abs*{\cV_{h+1}\ind{k}}\leq{}\prn*{2eK}^{S}\abs*{\cF_{h+1}}.
\end{align}
We move to the upper bound
\[
\OP_h \leq{}\sup_{g\in\cG_h}\sup_{V\in\cV_{h+1}\ind{k}}\crl*{4\sum_{j<k}\prn*{\brk*{\Pstar_hV}(x_h\ind{j,h},a_h\ind{j,h})
      - V(x_{h+1}\ind{j,h}) +\zeta_h\ind{j,h}}\cdot{}g(z_h\ind{j,h})
  -   \frac{1}{3}\sum_{j<k}g^{2}(z_h\ind{j,h})}.
\]
We now appeal to the following lemma.
\begin{lemma}
  \label{lem:op_bound}
  Let $k$ and $h$ be fixed. For any function classes
  $\cG\subseteq((\cX_h\times\cA)\to\brk*{-H,+H})$ and
  $\cV\subseteq(\cX_{h+1}\to\brk*{0,H})$ and any constant $c\in(0,1]$, with probability at least $1-\delta$,
  \begin{align}
    \sup_{g\in\cG}\sup_{V\in\cV}\crl*{\sum_{j<k}\prn*{\brk*{\Pstar_hV}(x_h\ind{j,h},a_h\ind{j,h})
      - V(x_{h+1}\ind{j,h})+\zeta_h\ind{j,h}}\cdot{}g(z_h\ind{j,h})
  -   c\sum_{j<k}g^{2}(z_h\ind{j,h})} \leq{} \frac{4H^{2}\log(\abs*{\cG}\abs*{\cV}\delta^{-1})}{c}.
  \end{align}
\end{lemma}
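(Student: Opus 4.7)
The plan is a standard union bound combined with Freedman's inequality for an offset martingale-difference process, exploiting the fact that the target functions $V \in \cV$ and weighting functions $g \in \cG$ are drawn from finite, deterministic (non-random) classes.

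First I would verify the martingale structure. Let $\gfilt_j$ denote the natural filtration containing all data through iteration $j$, augmented to include $z_h^{j,h}=(x_h^{j,h},a_h^{j,h})$. For each fixed pair $(g,V)\in\cG\times\cV$, define
\[
Y_j(V) = [\Pstar_h V](z_h^{j,h}) - V(x_{h+1}^{j,h}) + \zeta_h^{j,h},
\qquad
W_j(g,V) = Y_j(V)\cdot g(z_h^{j,h}).
\]
Because $\En[V(x_{h+1}^{j,h})\mid \gfilt_{j-1},z_h^{j,h}] = [\Pstar_h V](z_h^{j,h})$ and $\En[\zeta_h^{j,h}\mid \gfilt_{j-1},z_h^{j,h}]=0$ by definition of $\fbayes$, we have $\En[W_j(g,V)\mid\gfilt_{j-1},z_h^{j,h}]=0$, so $\{W_j(g,V)\}_{j<k}$ is a martingale-difference sequence. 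Moreover $|Y_j(V)|\leq 2H+1 = O(H)$ and $|g(z_h^{j,h})|\leq H$, so $|W_j(g,V)|\leq O(H^2)$ almost surely, and the conditional second moment satisfies $\En_{j-1}[W_j(g,V)^2]\leq O(H^2)\cdot g^2(z_h^{j,h})$.

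Next I would apply Freedman's inequality (\pref{lem:freedman}) to the sequence $\{W_j(g,V)\}_{j<k}$ for a fixed $(g,V)$. With the choice $\eta = c/(O(H^2))\in(0,1/R)$ (admissible because $c\leq 1$), we get that with probability at least $1-\delta'$,
\[
\sum_{j<k} W_j(g,V) \;\leq\; \eta\sum_{j<k}\En_{j-1}[W_j(g,V)^2] + \frac{\log(1/\delta')}{\eta}
\;\leq\; c\sum_{j<k} g^2(z_h^{j,h}) + \frac{O(H^2)\log(1/\delta')}{c}.
\]
Finally, I would take a union bound over $(g,V)\in\cG\times\cV$ with $\delta' = \delta/(|\cG||\cV|)$ and rearrange to obtain the stated inequality; tuning the constants inside the application of Freedman (and using $|\zeta|\leq 1\leq H$) recovers the $4H^2$ factor.

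I do not anticipate a real obstacle here: the crux is simply the observation that $\cV$ is \emph{deterministic} (constructed to only depend on non-random objects such as $\starhull(\cF_{h+1},f')$ and a combinatorial choice of multiset $\cL\subseteq\cS_{h+1}^{k-1}$), which is exactly what allows a naive union bound and sidesteps the usual adaptivity issue. The slight care needed is to verify the martingale property conditionally on $z_h^{j,h}$ (rather than only on $\gfilt_{j-1}$), which is immediate from the block MDP / reward-model structure; and to choose $\eta$ so that the variance term is absorbed into $c\sum g^2$, as in standard offset Rademacher / self-normalized arguments.
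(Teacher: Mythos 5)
Your proposal is correct and follows essentially the same route as the paper: the same filtration (augmented to include $z_h^{j,h}$), the same martingale-difference decomposition, the same variance bound $\En_{j-1}[W_j^2]\lesssim H^2 g^2(z_h^{j,h})$, Freedman with $\eta\propto c/H^2$, and a union bound over the finite set $\cG\times\cV$. The only cosmetic difference is that the paper tracks the constant $H+1$ explicitly (giving $(H+1)^2\leq 4H^2$) rather than writing $O(H)$, but the argument and the choice of $\eta$ are the same.
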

Using \pref{lem:op_bound} along with the bound from
\pref{eq:cv_bound}, we have that with probability at least $1-\delta$,
\begin{align}
  \OP_h &\leq{}
        192H^{2}\log(\abs*{\cG_h}\abs*{\cV_{h+1}\ind{k}}\delta^{-1})\notag\\
      &\leq{}
        192H^{2}\log(\abs*{\cF_h}^{2}\abs*{\cF_{h+1}}(2eK)^{S}\delta^{-1})\notag\\
        &\leq{}
          576H^{2}\prn*{S\log(2eK)+\log\prn*{\Fmax
          \delta^{-1}}}.
    \label{eq:op_bound_final}
\end{align}

\subsection{Bounding the Approximation Error}
By Jensen's inequality, we have
\begin{align*}
\ET_h &=
\frac{3}{2}\sum_{j<k}(\xi_{h+1}\ind{k}(x_{h+1}\ind{j,h}))^{2}+\prn*{\brk*{\Pstar_{h}\xi_{h+1}\ind{k}}(x_{h}\ind{j,h},a_h\ind{j,h})}^{2}\\
  &\leq{}
    \frac{3}{2}\sum_{j<k}(\xi_{h+1}\ind{k}(x_{h+1}\ind{j,h}))^{2}
    + \En\brk*{(\xi_{h+1}\ind{k}(x_{h+1}))^{2}\mid{}x_h=x_{h}\ind{j,h},a_h=a_{h}\ind{j,h}}.
\end{align*}
Now, recall that for any $x$, we have
\begin{align*}
  \abs*{\xi_{h+1}\ind{k}(x)}
  &\leq{}
    \max_{a}\abs*{
    \Qtil_{h+1}\ind{k}(x,a)
    - \Qbar_{h+1}\ind{k}(x,a)
    } \\
    &=
    \max_{a}\abs*{
     \sup_{f\in\cFbar_{h+1}\ind{k}(\betatil^2_{h+1})}f(x,a)
    - \sup_{f\in\cFhat_{h+1}\ind{k}}f(x,a)
    }.
\end{align*}
To relate the two terms in the absolute value, we appeal to a uniform concentration lemma. %
\begin{lemma}
  \label{lem:uniform_conc}
  Let $k$, $h$, and $\veps\in(0,1)$ be fixed.  With probability at least $1-\delta$,
  we have that for all $f\in\cF_h$ and $f'\in\starhull(\cF_h,f)$, 
  \begin{align}
    (1-\veps)\nrm*{f-f'}_{\cL_h\ind{k}}^{2} - c_{\veps,\delta;h}
    \leq{} \nrm*{f-f'}_{\cZ_h\ind{k}}^2 \leq{} (1+\veps)\nrm*{f-f'}_{\cL_h\ind{k}}^{2} +
    c_{\veps,\delta;h}
  \end{align}
  where $c_{\veps,\delta;h}\leq{}    \frac{2H^{2}\log(2\abs*{\cF_h}\delta^{-1})}{\veps}$.
\end{lemma}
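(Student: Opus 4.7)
My plan is a standard Freedman-plus-union-bound argument: first prove the concentration for a single pair $(f,f'') \in \cF_h \times \cF_h$, then union bound, then use the pointwise factorization $f' - f = t(f'' - f)$ to pass from $\cF_h$ to $\starhull(\cF_h,f)$ by homogeneity.

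First I would set up the filtration that makes the martingale structure transparent. For $j < k$, let $\gfilt_{j-1}$ collect all trajectories $\cH\ind{1},\ldots,\cH\ind{j-1}$, and let $\hfilt_j \ldef \sigma(\gfilt_{j-1},s_h\ind{j,h})$, where $s_h\ind{j,h}$ is the (unobserved) latent state reached by rolling in with $\pi\ind{j}$ to layer $h$. By the block MDP structure, conditionally on $\hfilt_j$ we have $x_h\ind{j,h} \sim \psi(s_h\ind{j,h})$ and $a_h\ind{j,h} \sim \piunif$ independently. Thus, for any pair of functions $f,f'' \in \cF_h$ and $g \ldef f - f''$, the variables $X_j \ldef g^{2}(x_h\ind{j,h},a_h\ind{j,h})$ satisfy $\En[X_j \mid \hfilt_j] = \En_{x \sim \psi(s_h\ind{j,h}),a \sim \piunif}\brk{g^2(x,a)}$, so $\sum_{j<k}\En[X_j \mid \hfilt_j] = \nrm{g}_{\cL_h\ind{k}}^2$, while $\sum_{j<k} X_j = \nrm{g}_{\cZ_h\ind{k}}^2$.

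Next I would apply Freedman's inequality (\pref{lem:freedman}) to the martingale difference sequence $Z_j \ldef X_j - \En[X_j \mid \hfilt_j]$ adapted to $\crl{\hfilt_j}$. Since $0 \leq X_j \leq H^2$, we have $\En[Z_j^2 \mid \hfilt_j] \leq \En[X_j^2 \mid \hfilt_j] \leq H^2 \En[X_j \mid \hfilt_j]$, so choosing the free Freedman parameter $\eta = \veps/H^2$ yields, with probability at least $1-\delta$,
\[
\abs[\big]{\nrm{g}_{\cZ_h\ind{k}}^2 - \nrm{g}_{\cL_h\ind{k}}^2} \leq \veps \nrm{g}_{\cL_h\ind{k}}^2 + \frac{H^2\log(2\delta^{-1})}{\veps},
\]
after a union bound over the two deviation directions. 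Taking a further union bound over all $|\cF_h|^2$ pairs $(f,f'') \in \cF_h \times \cF_h$ replaces $\log(2\delta^{-1})$ by $\log(2|\cF_h|^2\delta^{-1}) \leq 2\log(2|\cF_h|\delta^{-1})$, giving $c_{\veps,\delta;h} \leq 2H^2\log(2|\cF_h|\delta^{-1})/\veps$ as claimed.

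Finally, I would extend to the star hull by homogeneity. Any $f' \in \starhull(\cF_h,f)$ has the form $f' = f + t(f'' - f)$ for some $f'' \in \cF_h$ and $t \in [0,1]$, so $(f-f')(x,a) = t(f-f'')(x,a)$ pointwise, and both the empirical and latent norms scale as $t^2$. Substituting into the bound for the pair $(f,f'')$:
\[
\nrm{f-f'}_{\cZ_h\ind{k}}^2 = t^2 \nrm{f-f''}_{\cZ_h\ind{k}}^2 \leq t^2\brk*{(1+\veps)\nrm{f-f''}_{\cL_h\ind{k}}^2 + c_{\veps,\delta;h}} \leq (1+\veps)\nrm{f-f'}_{\cL_h\ind{k}}^2 + c_{\veps,\delta;h},
\]
using $t^2 \leq 1$; the lower bound is identical. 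The only subtlety in the whole argument is the filtration: we must condition on the latent state $s_h\ind{j,h}$ (which the algorithm does not observe but which is a well-defined random variable) in order to expose the conditional distribution $\psi(s_h\ind{j,h}) \otimes \piunif$ needed to recognize $\nrm{\cdot}_{\cL_h\ind{k}}^2$ as the conditional second moment; nothing else in the argument is delicate.
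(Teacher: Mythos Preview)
Your proposal is correct and matches the paper's proof essentially step for step: Freedman applied to $X_j=(f-f'')^2(x_h\ind{j,h},a_h\ind{j,h})$ with $\eta=\veps/H^2$, a two-sided bound, a union bound over $\abs{\cF_h}^2$ pairs, and extension to $\starhull(\cF_h,f)$ via the homogeneity $f'-f=t(f''-f)$. The only cosmetic difference is that the paper's filtration also carries the partial roll-in history $(s_1\ind{j,h},x_1\ind{j,h},a_1\ind{j,h}),\ldots,(s_{h-1}\ind{j,h},x_{h-1}\ind{j,h},a_{h-1}\ind{j,h})$ before $s_h\ind{j,h}$, but since the conditional law of $(x_h\ind{j,h},a_h\ind{j,h})$ given $s_h\ind{j,h}$ is already $\psi(s_h\ind{j,h})\otimes\piunif$, your coarser filtration works just as well.
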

Fix $\veps$, $\delta$ to be chosen later. \pref{lem:uniform_conc} implies that
with probability at least $1-\delta$,
\begin{equation}
  \label{eq:uniform_conc}
\cFbar_{h+1}\ind{k}\prn*{(1-\veps)\betaconf[h+1]^2-c_{\veps,\delta;h+1}}\subseteq{}\cFhat_{h+1}\ind{k}\subseteq{}\cFbar_{h+1}\ind{k}\prn*{(1+\veps)\betaconf[h+1]^2+c_{\veps,\delta;h+1}},
\end{equation}
where we assume for now that $\veps$ is chosen such that
$(1-\veps)\beta_{h+1}^{2}-c_{\veps,\delta;h+1}>0$.

Let us set
$\betatil_{h+1}^{2}=(1+\veps)\betaconf[h+1]^2+c_{\veps,\delta;h+1}$. Then,
conditioned on the event above, we have
\begin{align*}
    \abs*{\xi_{h+1}\ind{k}(x)}
  &\leq{}
     \max_{a}\abs[\Bigg]{
     \sup_{f\in\cFbar_{h+1}\ind{k}(\betatil^2_{h+1})}f(x,a)
    - \sup_{f\in\cFhat_{h+1}\ind{k}}f(x,a)
    }\\
   &=
     \max_{a}\crl[\Bigg]{
     \sup_{f\in\cFbar_{h+1}\ind{k}(\betatil^2_{h+1})}f(x,a)
    - \sup_{f\in\cFhat_{h+1}\ind{k}}f(x,a)
    },
\end{align*}
since the nesting property in \pref{eq:uniform_conc} implies that
$\sup_{f\in\cFbar_{h+1}\ind{k}(\betatil^2_{h+1})}f(x,a)$ is always the
larger of the two terms. Using the other direction of the nesting
property, we can further upper bound by
\begin{align*}
      \abs*{\xi_{h+1}\ind{k}(x)}
  &\leq{}
     \max_{a}\crl[\Bigg]{
    \sup_{f\in\cFbar_{h+1}\ind{k}((1+\veps)\beta^2_{h+1}+c_{\veps,\delta;h+1})}f(x,a)
    - \sup_{f\in\cFbar_{h+1}\ind{k}((1-\veps)\beta^2_{h+1}-c_{\veps,\delta;h+1})}f(x,a)
    }.
\end{align*}

Let $x$ and $a$ be fixed, and consider a function $f^{\star}$ that
achieves the value
\[
\sup_{f\in\cFbar_{h+1}\ind{k}((1+\veps)\beta^2_{h+1}+c_{\veps,\delta;h+1})}f(x,a).
\]
If the maximum is not achieved, we can simply consider a sequence of
functions approaching the supremum, but we omit the details.

Now, observe that since $\starhull(\cF_{h+1},\fhat_{h+1}\ind{k})$ contains
any function of the form $t(f-\fhat_{h+1}\ind{k}) +
\fhat_{h+1}\ind{k}$ for $f\in\cF_{h+1}$, $t\in\brk*{0,1}$ by definition, we have
\[
\tilde{f}\ldef{}\prn*{\frac{(1-\veps)\beta_{h+1}^{2}-c_{\veps,\delta;h+1}}{(1+\veps)\beta_{h+1}^{2}+c_{\veps,\delta;h+1}}}^{1/2}\cdot(\fstar-\fhat_{h+1}\ind{k})
+\fhat_{h+1}\ind{k}\in\cFbar_{h+1}\ind{k}((1-\veps)\beta_{h+1}^{2}-c_{\veps,\delta;h+1}),
\]
where we have used that $(1-\veps)\beta_{h+1}^2-\cveps\geq{}0$. This means that
\begin{align*}
  &\sup_{f\in\cFbar_{h+1}\ind{k}((1+\veps)\beta^2_{h+1}+c_{\veps,\delta;h+1})}f(x,a)
    -
    \sup_{f\in\cFbar_{h+1}\ind{k}((1-\veps)\beta^2_{h+1}-c_{\veps,\delta;h+1})}f(x,a)\\
  &\leq{}\fstar(x,a)
    - \ftil(x,a)\\
  &=
    \prn*{1-\prn*{\frac{(1-\veps)\beta_{h+1}^{2}-c_{\veps,\delta;h+1}}{(1+\veps)\beta_{h+1}^{2}+c_{\veps,\delta;h+1}}}^{1/2}}\prn*{\fstar(x,a)-\fhat_{h+1}\ind{k}(x,a)}\\
  &= \prn*{1-\prn*{\frac{(1-\veps)\beta_{h+1}^{2}-c_{\veps,\delta;h+1}}{(1+\veps)\beta_{h+1}^{2}+c_{\veps,\delta;h+1}}}^{1/2}}\sup_{f\in\cFbar_{h+1}\ind{k}((1+\veps)\beta^2_{h+1}+c_{\veps,\delta;h+1})}\crl*{f(x,a)-\fhat_{h+1}\ind{k}(x,a)}
\end{align*}
We also have
\begin{align*}
\prn*{1-\prn*{\frac{(1-\veps)\beta_{h+1}^{2}-c_{\veps,\delta;h+1}}{(1+\veps)\beta_{h+1}^{2}+c_{\veps,\delta;h+1}}}^{1/2}}^{2}
  \leq{}1-\frac{(1-\veps)\beta_{h+1}^{2}-c_{\veps,\delta;h+1}}{(1+\veps)\beta_{h+1}^{2}+c_{\veps,\delta;h+1}}
  \leq{} 2\prn*{\veps + \frac{\cveps}{\beta_{h+1}^{2}}}.
\end{align*}
Altogether, this allows us to bound
\begin{align*}
  \sum_{j<k}(\xi_{h+1}\ind{k}(x_{h+1}\ind{j,h}))^{2}
  \leq{} 2\prn*{\veps + \frac{\cveps}{\beta_{h+1}^{2}}}
\sum_{j<k}\max_{a}\sup_{f\in\cFbar_{h+1}\ind{k}(\betatil_{h+1}^{2})}\prn*{f(x_{h+1}\ind{j,h},a)-\fhat_{h+1}\ind{k}(x_{h+1}\ind{j,h},a)}^{2}
\end{align*}
Now, consider the function
\[
w_{h+1}\ind{k}(x)\ldef{}\max_{a}\sup_{f\in\cFbar_{h+1}\ind{k}(\betatil_{h+1}^{2})}\prn*{f(x,a)-\fhat_{h+1}\ind{k}(x,a)}^{2}.
\]
As in the analysis for $\OP_h$, we argue that this function belongs to a
relatively small class. In particular, we have
\begin{align*}
w_{h+1}\ind{k}\in  \cW_{h+1}\ind{k}\ldef\crl*{
x\mapsto{}\max_{a}\max_{f\in\starhull(\cF_{h+1},f'):\nrm*{f-f'}^2_{\cL}\leq\betatil^2_{h+1}}\prn*{f(x,a)-f'(x,a)}^2\mid{}f'\in\cF_{h+1},\cL\subseteq\cS_{h+1}^{k-1}
  }.
\end{align*}
Through the same counting argument as in the analysis of $\OP_h$, we have
$\abs*{\cW_{h+1}\ind{k}}\leq{}(2eK)^{S}\abs*{\cF_{h+1}}$. We use this to
relate
\[
\sum_{j<k}(\xi_{h+1}\ind{k}(x_{h+1}\ind{j,h}))^{2} = \sum_{j<k}w_{h+1}\ind{k}(x_{h+1}\ind{j,h})
\]
to a conditional-expected variant of the same quantity via a uniform
concentration bound.
\begin{lemma}
  \label{lem:w_unif_conc}
  With probability at least $1-\delta$, for all
  $w\in\cW_{h+1}\ind{k}$, we have
  \begin{align*}
    \sum_{j<k}w(x_{h+1}\ind{j,h})\leq{}\frac{3}{2}\sum_{j<k}\En_{x_{h+1}\sim\emi(s_{h+1}\ind{j,h})}\brk*{w(x_{h+1})}
    + 4H^{2}\prn*{S\log(2eK)+\log(2\abs*{\cF_{h+1}}\delta^{-1})}
  \end{align*}
  and
  \begin{align*}
\sum_{j<k}\En\brk*{w(x_{h+1})\mid{}x_h=x_{h}\ind{j,h},a_h=a_{h}\ind{j,h}}\leq{}3\sum_{j<k}\En_{x_{h+1}\sim\emi(s_{h+1}\ind{j,h})}\brk*{w(x_{h+1})}
    + 16H^{2}\prn*{S\log(2eK)+\log(2\abs*{\cF_{h+1}}\delta^{-1})}.
  \end{align*}
\end{lemma}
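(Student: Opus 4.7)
\textbf{Proof proposal for \pref{lem:w_unif_conc}.} The plan is to establish both bounds as uniform concentration inequalities over the function class $\cW_{h+1}\ind{k}$, by combining Freedman's inequality (\pref{lem:freedman}/\pref{lem:regret_freedman}) with a union bound. The key observations are that (i) the class $\cW_{h+1}\ind{k}$ is finite with $|\cW_{h+1}\ind{k}|\leq (2eK)^{S}|\cF_{h+1}|$, by the same stars-and-bars counting argument used to bound $|\cV_{h+1}\ind{k}|$ in \pref{eq:cv_bound}, and (ii) each $w\in\cW_{h+1}\ind{k}$ satisfies $0\leq w(x)\leq H^{2}$, since $\cF_{h+1}$ is $[0,H]$-valued.

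For the first bound, I would fix $w\in\cW_{h+1}\ind{k}$ and consider the martingale difference $Z_j = w(x_{h+1}\ind{j,h})-\En_{x_{h+1}\sim\emi(s_{h+1}\ind{j,h})}[w(x_{h+1})]$, adapted to the natural filtration that reveals $s_{h+1}\ind{j,h}$ before $x_{h+1}\ind{j,h}$; the block MDP decodability assumption is exactly what makes the conditional expectation take this form. Since $|Z_j|\leq H^{2}$ and $\En_{j-1}[Z_j^{2}]\leq H^{2}\En_{j-1}[w(x_{h+1}\ind{j,h})]$, \pref{lem:regret_freedman} applied to $w(x_{h+1}\ind{j,h})$ yields (up to constants matching the statement) $\sum_{j<k}w(x_{h+1}\ind{j,h})\leq \frac{3}{2}\sum_{j<k}\En_{j-1}[w(x_{h+1}\ind{j,h})] + cH^{2}\log\delta^{-1}$. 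A union bound over $\cW_{h+1}\ind{k}$ inflates the $\log\delta^{-1}$ to $\log(|\cW_{h+1}\ind{k}|\delta^{-1})\leq S\log(2eK)+\log(\abs*{\cF_{h+1}}\delta^{-1})$, recovering the claimed constant of $4H^{2}$.

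For the second bound, the approach is identical but applied to a different martingale difference sequence: $Z_j'=\En_{x_{h+1}\sim\emi(s_{h+1}\ind{j,h})}[w(x_{h+1})]-\En[w(x_{h+1})\mid x_h=x_h\ind{j,h},a_h=a_h\ind{j,h}]$, adapted to the filtration that reveals $(x_h\ind{j,h},a_h\ind{j,h})$ before $s_{h+1}\ind{j,h}$. Here the block MDP structure again ensures that $\En[w(x_{h+1})|x_h,a_h]$ equals $\En_{s_{h+1}\sim\Pstar_h(\cdot|x_h,a_h)}\En_{x_{h+1}\sim\emi(s_{h+1})}[w(x_{h+1})]$, so $Z_j'$ is genuinely mean zero conditional on $(x_h\ind{j,h},a_h\ind{j,h})$. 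Applying the reverse inequality in \pref{lem:regret_freedman} and a union bound over $\cW_{h+1}\ind{k}$ yields the second inequality with the stated constants $3$ and $16H^{2}$.

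The main obstacle here is essentially bookkeeping rather than conceptual: verifying that the filtrations are correctly set up so that both $Z_j$ and $Z_j'$ are mean-zero conditional on the appropriate $\sigma$-algebra, which requires carefully exploiting decodability to decompose the generating process $a_h\to s_{h+1}\to x_{h+1}$. Once that is established, the counting bound for $|\cW_{h+1}\ind{k}|$ is inherited from the already-established argument for $|\cV_{h+1}\ind{k}|$ (with $\cW$ defined in terms of squared differences rather than $\max_a$ of $f$), and the rest is a routine application of Freedman plus tuning constants to match the claimed $3/2$, $3$, and the $4H^{2}$, $16H^{2}$ prefactors.
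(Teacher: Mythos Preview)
Your proposal is correct and, for the first inequality, is essentially identical to the paper's proof: fix $w$, apply \pref{lem:regret_freedman} to $w(x_{h+1}\ind{j,h})$ with the filtration that reveals $s_{h+1}\ind{j,h}$ before $x_{h+1}\ind{j,h}$, then union bound over $\abs{\cW_{h+1}\ind{k}}\leq (2eK)^{S}\abs{\cF_{h+1}}$.

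For the second inequality your route differs slightly from the paper's. The paper does \emph{not} apply Freedman to $X_j=\En_{\emi(s_{h+1}\ind{j,h})}[w]$ directly; instead it applies the reverse direction of \pref{lem:regret_freedman} to the raw sequence $w(x_{h+1}\ind{j,h})$ with the coarser filtration $\hfilt_{j-1}$ that stops at $(x_h\ind{j,h},a_h\ind{j,h})$, obtaining $\sum_{j<k}\En[w\mid x_h,a_h]\leq 2\sum_{j<k}w(x_{h+1}\ind{j,h})+8H^{2}\log(2\delta^{-1})$, and then substitutes the already-proved first inequality to replace $\sum w$ by $\tfrac{3}{2}\sum\En_{\emi(s_{h+1})}[w]$. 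This chaining is exactly what produces the constants $2\cdot\tfrac{3}{2}=3$ and $2\cdot 4H^{2}+8H^{2}=16H^{2}$. Your direct application to $X_j=\En_{\emi(s_{h+1}\ind{j,h})}[w]$ (with the same filtration $\hfilt$) is perfectly valid and in fact yields the sharper constants $2$ and $8H^{2}$, so your statement that you ``recover the stated constants $3$ and $16H^{2}$'' is a slight overstatement---you do better. The paper's version trades a small constant loss for the modularity of reusing the first bound; either argument is fine.
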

Conditioned on the event in \pref{lem:w_unif_conc}, we have
\begin{align*}
  \ET_h \leq{}
  28\prn*{\veps + \frac{\cveps}{\beta_{h+1}^{2}}}\sum_{j<k}\En_{x_{h+1}\sim\emi(s_{h+1}\ind{j,h})}\brk*{w_{h+1}\ind{k}(x_{h+1})}
  + 60\prn*{\veps + \frac{\cveps}{\beta_{h+1}^{2}}}H^2\prn*{S\log(2eK)+\log(2\abs*{\cF_{h+1}}\delta^{-1})}.
\end{align*}
We can further upper bound
\begin{align*}
  \sum_{j<k}\En_{x_{h+1}\sim\emi(s_{h+1}\ind{j,h})}\brk*{w_{h+1}\ind{k}(x_{h+1})}
  &=
  \sum_{j<k}\En_{x_{h+1}\sim\emi(s_{h+1}\ind{j,h})}\max_{a}\sup_{f\in\cFbar_{h+1}\ind{k}(\betatil_{h+1}^{2})}(f(x_{h+1},a)-\fhat_{h+1}\ind{k}(x_{h+1},a_{h+1})^{2}\\
  &\leq{} A\sum_{j<k}\En_{x_{h+1}\sim\emi(s_{h+1}\ind{j,h})}\En_{a_{h+1}\sim\piunif}\sup_{f\in\cFbar_{h+1}\ind{k}(\betatil_{h+1}^{2})}(f(x_{h+1},a_{h+1})-\fhat_{h+1}\ind{k}(x_{h+1},a_{h+1}))^{2}.
\end{align*}
Now, recall that each $f\in\cFbar_{h+1}\ind{k}(\betatil_{h+1}^{2})$
has
\begin{align*}
  \nrm[\big]{f-\fhat_{h+1}\ind{k}}_{\cL_{h+1}\ind{k}}^{2}
  &=
\sum_{j<k}\En_{x_{h+1}\sim\emi(s_{h+1}\ind{j,h})}\En_{a_{h+1}\sim\piunif}(f(x_{h+1},a_{h+1})-\fhat_{h+1}\ind{k}(x_{h+1},a_{h+1}))^{2}
    \leq{} \betatil_{h+1}^{2}
\end{align*}
by definition. Letting $n_{h}\ind{k}(s) = \abs*{\crl*{j<k : s\ind{j,h}=s}}$ for each
$s\in\cS_h$, this implies
\[
\sum_{s\in\cS_{h+1}}n_{h+1}\ind{k}(s)\En_{x_{h+1}\sim\emi(s)}\En_{a_{h+1}\sim\piunif}(f(x_{h+1},a_{h+1})-\fhat_{h+1}\ind{k}(x_{h+1},a_{h+1}))^{2}
\leq{} \betatil_{h+1}^{2},
\]
so, in particular,
\[
\En_{x_{h+1}\sim\emi(s)}\En_{a_{h+1}\sim\piunif}(f(x_{h+1},a_{h+1})-\fhat_{h+1}\ind{k}(x_{h+1},a_{h+1}))^{2}
\leq{} \frac{\betatil_{h+1}^{2}}{n_{h+1}\ind{k}(s)}
\]
for all $s\in\cS_{h+1}$.

We can similarly upper bound
\begin{align*}
&\sum_{j<k}\En_{x_{h+1}\sim\emi(s_{h+1}\ind{j,h})}\En_{a_{h+1}\sim\piunif}\sup_{f\in\cFbar_{h+1}\ind{k}(\betatil_{h+1}^{2})}(f(x_{h+1},a_{h+1})-\fhat_{h+1}\ind{k}(x_{h+1},a_{h+1}))^{2}
  \\
& = \sum_{s\in\cS_{h+1}}n_{h+1}\ind{k}(s)\En_{x_{h+1}\sim\emi(s)}\En_{a_{h+1}\sim\piunif}\sup_{f\in\cFbar_{h+1}\ind{k}(\betatil_{h+1}^{2})}(f(x_{h+1},a_{h+1})-\fhat_{h+1}\ind{k}(x_{h+1},a_{h+1}))^{2}.
\end{align*}
Recall that for each latent state $s\in\cS_h$ and $\fstar\in\cF_{h}$,
we define the disagreement coefficient as
\[
\thetacheck_s(\veps_0) =
\sup_{\fstar\in\cF_h}\sup_{\veps>\veps_0}\frac{\En_{x\sim\emi(s)}\En_{a\sim\piunif}\sup\crl*{(f(x,a)-\fstar(x,a))^{2}
  : f\in\starhull(\cF_h,\fstar), \nrm*{f-\fstar}_s^{2}\leq\veps^{2}}}{\veps^{2}}
\]
It follows from the discussion above that we have
\begin{align*}
&\sum_{s\in\cS_{h+1}}n_{h+1}\ind{k}(s)\En_{x_{h+1}\sim\emi(s)}\En_{a_{h+1}\sim\piunif}\sup_{f\in\cFbar_{h+1}\ind{k}(\betatil_{h+1}^{2})}(f(x_{h+1},a_{h+1})-\fhat_{h+1}\ind{k}(x_{h+1},a_{h+1}))^{2}
  \\
  &\leq{}\sum_{s\in\cS_{h+1}}n_{h+1}\ind{k}(s)\cdot{}\thetacheck_s(\betatil_{h+1}/\prn{n_{h+1}\ind{k}(s)}^{1/2})\frac{\betatil_{h+1}^{2}}{n_{h+1}\ind{k}(s)}\\
&\leq{}\betatil_{h+1}^{2}\sum_{s\in\cS_{h+1}}\thetacheck_s(\beta_{h+1}/K^{1/2}),
\end{align*}
where we have used that $\thetacheck$ is decreasing in $\veps$. Recalling
the definition
$\thetacheck_{h+1}(\veps)=\sum_{s\in\cS_{h+1}}\thetacheck_s(\veps)$, we are
guaranteed that
\begin{align*}
    \ET_h &\leq{}
  28A\thetacheck_{h+1}(\beta_{h+1}K^{-1/2})\prn*{\veps + \frac{\cveps}{\beta_{h+1}^{2}}}\betatil_{h+1}^{2}
  + 60\prn*{\veps +
          \frac{\cveps}{\beta_{h+1}^{2}}}H^2\prn*{S\log(2eK)+\log(2\abs*{\cF_{h+1}}\delta^{-1})}\\
  &\leq{}
    28A\thetacheck_{h+1}(\beta_{h+1}K^{-1/2})\prn*{2\veps\beta_{h+1}^{2}+3c_{\veps,\delta;h+1} + \frac{c_{\veps,\delta;h+1}^{2}}{\beta_{h+1}^{2}}
    }
  + 60\prn*{\veps + \frac{\cveps}{\beta_{h+1}^{2}}}H^2\prn*{S\log(2eK)+\log(2\abs*{\cF_{h+1}}\delta^{-1})}.
\end{align*}

\subsection{Putting Everything Together}
Combining the bounds on $\OP_h$ and $\ET_h$ and taking a union bound, we
are guaranteed that with probability at least
$1-3\delta-\delta_{h+1}$,
\begin{align*}
  \nrm*{\fhat_h\ind{k}-\fbar\ind{k}_h}_{\cZ_{h}\ind{k}}^{2}
 &\leq{}28A\thetacheck_{h+1}(\betahat_{h+1})\prn*{2\veps\beta_{h+1}^{2}+3c_{\veps,\delta;h+1} + \frac{c_{\veps,\delta;h+1}^{2}}{\beta_{h+1}^{2}}
    }\\
  &~~~~+ 60\prn*{\veps +
    \frac{\cveps}{\beta_{h+1}^{2}}}H^2\prn*{S\log(2eK)+\log(2\abs*{\cF_{h+1}}\delta^{-1})}\\
  &~~~~+ 576H^{2}\prn*{S\log(2eK)+\log\prn*{\Fmax\delta^{-1}}},
\end{align*}
where we define $\betahat_{h+1}=\beta_{h+1}K^{-1/2}$.
This bound holds for any $\veps\in(0,1)$ chosen a-priori, so long as
$(1-\veps)\beta_{h+1}^{2}-c_{\veps,\delta;h+1}>0$. We now make an
appropriate choice. Recall that
\[
c_{\veps,\delta;h+1}\leq{}    \frac{2H^{2}\log(2\abs*{\cF_{h+1}}\delta^{-1})}{\veps}.
\]
We choose
\[
\veps = \sqrt{\frac{2H^{2}\log(2\Fmax\delta^{-1})}{\beta^{2}_{h+1}}}.
\]
We have $\veps\leq{}1$ as long as
$\beta_{h+1}^2\geq{}2H^{2}\log(2\Fmax\delta^{-1})$. Moreover,
\begin{align*}
(1-\veps)\beta_{h+1}^{2}-c_{\veps,\delta;h+1} \geq{}\beta_{h+1}^{2}-2\beta_{h+1}\sqrt{2H^{2}\log(2\Fmax\delta^{-1})},
\end{align*}
so this quantity is non-negative as long $\beta_{h+1}^{2}\geq{}8 H^{2}\log(2\Fmax\delta^{-1})$.

Assume for now that this constraint holds. Then we can bound
\begin{align*}
  \nrm[\big]{\fhat_h\ind{k}-\fbar\ind{k}_h}_{\cZ_{h}\ind{k}}^{2}
&\leq{}28A\thetacheck_{h+1}(\betahat_{h+1})\prn*{5\beta_{h+1}\sqrt{2H^{2}\log(2\Fmax\delta^{-1})}+2H^{2}\log(2\Fmax\delta^{-1})
    }\\
  &~~~~+ 120H^2\prn*{S\log(2eK)+\log(2\Fmax\delta^{-1})}\\
  &~~~~+ 576H^{2}\prn*{S\log(2eK)+\log\prn*{\Fmax\delta^{-1}}}.
\end{align*}
Furthermore, using the AM-GM inequality, we have
\begin{align*}
  28A\thetacheck_{h+1}(\betahat_{h+1})\prn*{5\beta_{h+1}\sqrt{2H^{2}\log(2\Fmax\delta^{-1})}}
  \leq{}\frac{1}{2}\beta_{h+1}^{2} +140^{2}A^2\thetacheck^2_{h+1}(\betahat_{h+1})H^{2}\log(2\Fmax\delta^{-1}).
\end{align*}
Using this, we can (rather coarsely) simplify the error bound to
\begin{align*}
  \nrm*{\fhat_h\ind{k}-\fbar\ind{k}_h}_{\cZ_{h}\ind{k}}^{2}
  \leq{} \frac{1}{2}\beta_{h+1}^{2}+ 
  150^{2}H^{2}A^{2}\thetacheck_{h+1}^{2}(\betahat_{h+1})\log(2\Fmax\delta^{-1}) + 700H^{2}S\log(2eK).
\end{align*}
The critical detail here is that the constant in front of
$\beta_{h+1}^{2}$ is no larger than $1$, so there is no exponential blowup as we
propagate this constraint backward.

Overall, we get that any sequences $(\delta_h)$, $(\beta_h)$ are
admissible as long for all $1\leq{}h\leq{}H-1$, 
\begin{align*}
  &\beta_h^{2}\geq{}\frac{1}{2}\beta_{h+1}^{2}+
  150^2H^{2}A^{2}\thetacheck_{h+1}^{2}(\betahat_{h+1})\log(2\Fmax\delta^{-1}) +
  700H^{2}S\log(2eK),\\
  &\beta_h^{2}\geq{}8H^2\log(2\Fmax\delta^{-1}),\\
  &\delta_{h}\geq{}3\delta+\delta_{h+1},
\end{align*}
for any $\delta\in(0,1)$ chosen a-priori.

For the base case, we observe that since
$\Vbar_{H+1}\ind{k}(x)=\Vstar_{H+1}(x)=0$ for all $k$, we have that for layer
$H$,
\[
    \nrm[\big]{\fhat_{H}\ind{k}-\fbar\ind{k}_{H}}_{\cZ_{H}\ind{k}}^{2}
\leq{}\sup_{g\in\cG_{H}}\crl*{4\sum_{j<k}\Delta_{H}\ind{j,H}\cdot{}g(z_{H}\ind{j,H})
  -   \frac{1}{3}\sum_{j<k}g^{2}(z_{H}\ind{j,{H}})}.
\]
By \pref{lem:op_bound}, this implies
\[
  \nrm[\big]{\fhat_{H}\ind{k}-\fbar\ind{k}_{H}}_{\cZ_{H}\ind{k}}^{2}\leq{}
  400H^{2}\log\prn*{\Fmax\delta^{-1}}
\]
with probability at least $1-\delta$.

We conclude that the following sequence is admissible:
\begin{align*}
  &\delta_{H-1}=\delta\\
  &\beta^2_{H}=400H^{2}\log\prn{\Fmax\delta^{-1}}\\
  &\delta_{h}=3\delta+\delta_{h+1}\\
  &\beta^{2}_{h} = \frac{1}{2}\beta_{h+1}^{2}+
  150^2H^{2}A^{2}\thetacheck_{h+1}^{2}(\beta_{h+1}K^{-1/2})\log(2\Fmax\delta^{-1}) +
  700H^{2}S\log(2eK).
\end{align*}
In particular, this guarantees that with probability at least
$1-\delta_1\geq{}1-3H\delta$, we have
\[
\nrm[\big]{\fhat_{h}\ind{k}-\fbar\ind{k}_{h}}_{\cZ_{h}\ind{k}}^{2}\leq\beta_h^{2}
\]
for all $h$ within round $k$. By union bound, the same holds for all
$k$ with probability at least $1-3HK\delta$.

\qed

\subsection{Deferred Proofs}

\begin{proof}[\pfref{lem:op_bound}]
  Define a filtration
\[
    \gfilt_{j-1}=\sigma(\cH\ind{1},\ldots,\cH\ind{j-1},
      (s_1\ind{j,h},x\ind{j,h}_1,a_1\ind{j,h}, r_1\ind{j,h}),\ldots, (s_{h-1}\ind{j,h},x\ind{j,h}_{h-1},a_{h-1}\ind{j,h},r_{h-1}\ind{j,h}),
      (s_{h}\ind{j,h},x\ind{j,h}_{h},a_{h}\ind{j,h})
     ).
   \]
   Let $g\in\cG$, $V\in\cV$ be fixed, and let $Z_j=\prn*{\brk*{\Pstar_hV}(x_h\ind{j,h},a_h\ind{j,h})
      - V(x_{h+1}\ind{j,h})+\zeta_h\ind{j,h}}\cdot{}g(z_h\ind{j,h})$. Observe that
    $Z_j$ is $\gfilt_j$-measurable and is a martingale difference
    sequence, since
    \begin{align*}
      \En\brk*{Z_j\mid{}\gfilt_{j-1}}&=\En\brk*{\prn*{\brk*{\Pstar_hV}(x_h\ind{j,h},a_h\ind{j,h})
      -
V(x_{h+1}\ind{j,h})+\fbayes(x_h\ind{j,h},a_h\ind{j,h})-r_h\ind{j,h}}\cdot{}g(z_h\ind{j,h})\mid{}x_h\ind{j,h},a_h\ind{j,h}}\\
      &=\prn*{\brk*{\Pstar_hV}(x_h\ind{j,h},a_h\ind{j,h})
      -
\brk*{\Pstar_hV}(x_h\ind{j,h},a_h\ind{j,h})+\fbayes(x_h\ind{j,h},a_h\ind{j,h})-\fbayes(x_h\ind{j,h},a_h\ind{j,h})}\cdot{}g(z_h\ind{j,h})\\
      &=0.
    \end{align*}
Since $\abs*{Z_j}\leq{}H(H+1)$, we have by \pref{lem:freedman} that for
any $\eta\leq{}1/H(H+1)$, with probability at least $1-\delta$,
\[
  \sum_{j<k}Z_j\leq{}\eta\sum_{j<k}\En_{j-1}\brk*{Z_j^2} + \frac{\log(\delta^{-1})}{\eta}.
\]
Moreover
\[
  \sum_{j<k}\En_{j-1}\brk*{Z_j^2}\leq{}(H+1)^{2}\sum_{j<k}g^{2}(z\ind{j,h}).
\]
Hence, by choosing $\eta=\frac{c}{(H+1)^{2}}$, we are guaranteed that with
probability at least $1-\delta$,
\[
\sum_{j<k}\prn*{\brk*{\Pstar_hV}(x_h\ind{j,h},a_h\ind{j,h})
      - V(x_{h+1}\ind{j,h})+\zeta_h\ind{j,h}}\cdot{}g(z_h\ind{j,h})
  -   c\sum_{j<k}g^{2}(z_h\ind{j,h}) \leq{} \frac{(H+1)^{2}\log(\delta^{-1})}{c}.
\]
The result now follows by taking a union bound over all $g\in\cG$ and $V\in\cV$.
  
\end{proof}

\begin{proof}[\pfref{lem:uniform_conc}]
  Let $k$ and $h$ be fixed. Let $\cH\ind{j}$ denote the entire history for episode $j$. Define a
  filtration
    \[
      \gfilt_{j-1}=\sigma(\cH\ind{1},\ldots,\cH\ind{j-1},
      (s_1\ind{j,h},x\ind{j,h}_1,a_1\ind{j,h}),\ldots,
      (s_{h-1}\ind{j,h},x\ind{j,h}_{h-1},a_{h-1}\ind{j,h}), s_{h}\ind{j,h}),
    \]
    with the convention $\gfilt_{0}=\sigma((s_1\ind{j,h},x\ind{j,h}_1,a_1\ind{j,h}),\ldots,
      (s_{h-1}\ind{j,h},x\ind{j,h}_{h-1},a_{h-1}\ind{j,h}), s_{h}\ind{j,h})$. This filtration
    guarantees that $(x_j\ind{j,h},a_j\ind{j,h})$ is
    $\gfilt_j$-measurable and $\gfilt_{j-1}\subseteq\gfilt_j$.

    Let
    $f,f'\in\cF_h$ be fixed, and let
    $X_j=(f(x_j\ind{j,h},a_j\ind{j,h})-f'(x_j\ind{j,h},a_j\ind{j,h}))^{2}$
    and $Z_j=X_j-\En_{j-1}\brk*{X_j}$, where $\En_{j-1}\brk*{\cdot}=\En\brk*{\cdot\mid{}\gfilt_{j-1}}$. Observe
    that $\abs*{Z_j}\leq{}H^{2}$. Applying \pref{lem:freedman} to both
    the process $\prn*{Z_j}$ and $\prn*{-Z_j}$, we are guaranteed that for
    any $\lambda\in\brk*{0,1/H^{2}}$, with probability at least
    $1-\delta$,
    \begin{align*}
      \abs*{\sum_{j=1}^{k-1}X_j-\En_{j-1}\brk*{X_j}} &\leq{}
      \lambda\sum_{j=1}^{k-1}\En_{j-1}\brk*{X_j^{2}} +
                                                       \frac{\log(2\delta^{-1})}{\lambda}\\
      &\leq{}
        \lambda{}H^{2}\sum_{j=1}^{k-1}\En_{j-1}\brk*{X_j} + \frac{\log(2\delta^{-1})}{\lambda}.
    \end{align*}
    We have
    \[
      \sum_{j=1}^{k-1}\En_{j-1}\brk*{X_j}
      =
      \sum_{j=1}^{k-1}\En_{x_h\sim{}\emi(s_h\ind{j,h}),a_h\sim\piunif}\brk*{\prn*{f(x_h,a_h)-f'(x_h,a_h)}^{2}}
      = \nrm*{f-f'}_{\cL_h\ind{k}}^{2}.
    \]
    By taking a union bound over all $f,f'\in\cF_h$, and by choosing
    $\lambda=\veps/H^{2}$, we are guaranteed that 
    \begin{align}
      \label{eq:uniform_conc2}
    (1-\veps)\nrm*{f-f'}_{\cL_h\ind{k}}^{2} - c_{\veps,\delta;h}
    \leq{} \nrm*{f-f'}_{\cZ_h\ind{k}}^2 \leq{} (1+\veps)\nrm*{f-f'}_{\cL_h\ind{k}}^{2} +
    c_{\veps,\delta;h}
  \end{align}
for all $f,f'\in\cF_h$. We now deduce the result for the star hull by homogeneity. Consider $f''\in\starhull(\cF_h,f)$. We can
write $f''=t(f'-f) + f$ for some $f'\in\cF_h$ and $t\in\brk*{0,1}$, so
that $f''-f=t(f'-f)$. \pref{eq:uniform_conc2} then implies that
\[
  \nrm*{f-f''}_{\cZ_h\ind{k}}^2=t^2\nrm*{f-f'}_{\cZ_h\ind{k}}^2
  \leq{} (1+\veps)t^2\nrm*{f-f'}_{\cL_h\ind{k}}^{2} +
  t^2c_{\veps,\delta;h} \leq{} (1+\veps)\nrm*{f-f''}_{\cL_h\ind{k}}^{2} +
  c_{\veps,\delta;h},
\]
since $t\leq{}1$. Similarly, we have
\[
  \nrm*{f-f''}_{\cZ_h\ind{k}}^2=t^2\nrm*{f-f'}_{\cZ_h\ind{k}}^2
  \geq{} (1-\veps)t^2\nrm*{f-f'}_{\cL_h\ind{k}}^{2} -
  t^2c_{\veps,\delta;h} \geq{} (1-\veps)\nrm*{f-f''}_{\cL_h\ind{k}}^{2} -
  c_{\veps,\delta;h},
\]
leading to the result.
  \end{proof}

  \begin{proof}[\pfref{lem:w_unif_conc}]
  Let $w\in\cW_{h+1}\ind{k}$ be fixed, and recall that
  $\abs*{w}\leq{}H^{2}$. Define $Z_j=w(x_{h+1}\ind{j,h})$ and
  \[
    \gfilt_{j-1}=\sigma(\cH\ind{1},\ldots,\cH\ind{j-1},
      (s_1\ind{j,h},x\ind{j,h}_1,a_1\ind{j,h}),\ldots,
      (s_{h}\ind{j,h},x\ind{j,h}_{h},a_{h}\ind{j,h}),
      s_{h+1}\ind{j,h}),
    \]
    with the convention $\gfilt_{0}=\sigma((s_1\ind{j,h},x\ind{j,h}_1,a_1\ind{j,h}),\ldots,
      (s_{h}\ind{j,h},x\ind{j,h}_{h},a_{h}\ind{j,h}), s_{h+1}\ind{j,h})$,
    where $\cH\ind{j}$ denotes the entire history for episode
    $j$. \pref{lem:regret_freedman} implies that with probability at
    least $1-\delta$,
    \begin{align*}
\sum_{j<k}w(x_{h+1}\ind{j,h}) = \sum_{j<k}Z_j &\leq{}
                        \frac{3}{2}\sum_{j<k}\En\brk*{Z_j\mid\gfilt_{j-1}} +
                        4H^{2}\log(2\delta^{-1})\\
                                             &=
                                               \frac{3}{2}\sum_{j<k}\En_{x_{h+1}\sim\emi(s_{h+1}\ind{j,h})}\brk*{w(x_{h+1})} +
                        4H^{2}\log(2\delta^{-1}).
    \end{align*}
This proves the first statement for this choice of $w$.  To prove the second statement, we define a new filtration
\[
    \hfilt_{j-1}=\sigma(\cH\ind{1},\ldots,\cH\ind{j-1},
      (s_1\ind{j,h},x\ind{j,h}_1,a_1\ind{j,h}),\ldots,
      (s_{h}\ind{j,h},x\ind{j,h}_{h},a_{h}\ind{j,h})
     ).
   \]
   \pref{lem:regret_freedman} implies that with probability at least
   $1-\delta$,
   \begin{align*}
     \sum_{j<k}\En\brk*{w(x_{h+1})\mid{}x_h=x_{h}\ind{j,h},a_h=a_{h}\ind{j,h}}
     = \sum_{j<k}\En\brk*{Z_j\mid\hfilt_{j-1}}
     &\leq{}
       2\sum_{j<k}Z_j +      8H^{2}\log(2\delta^{-1})\\
     &=
       2\sum_{j<k}w(x_{h+1}\ind{j,h}) + 8H^{2}\log(2\delta^{-1}).
   \end{align*}
The final result now follows from a union bound over all
$(2eK)^{S}\abs*{\cF_{h+1}}$ possible functions in $\cW_{h+1}^{\ind{k}}$.
\end{proof}

\end{document}